\documentclass[12pt, twosided]{ociamthesis}

\usepackage{xcolor}
\pagecolor{white}

\usepackage[
  backend=biber,
  style=authoryear,
  giveninits=true,
  maxcitenames=1,
  maxbibnames=99,
  dashed=false,
  uniquename=false,
  uniquelist=false,
  labeldateparts=true
]{biblatex}
\let\citep\parencite
\let\citet\textcite
\usepackage{amssymb}
\usepackage{amsmath}
\renewbibmacro{in:}{,}
\usepackage[bottom]{footmisc}
\usepackage{appendix}
\usepackage{quoting}
\usepackage[T2A]{fontenc}
\usepackage[utf8]{inputenc}
\usepackage[russian,english]{babel}

\usepackage{algorithm}      
\usepackage{algpseudocode}  

\quotingsetup{
  font=epigraphfont,   
  leftmargin=0.075\textwidth, 
  rightmargin=0.075\textwidth 
}

\newcommand{\bI}{{\bf I}}
\newcommand{\bII}{{\bf II}}

\newcommand{\twopartdef}[4]
{ 
        \left\{
                \begin{array}{ll}
                        #1 & \mbox{if } #2 \\
                        #3 & \mbox{if } #4
                \end{array}
        \right.
}

\newcommand{\ignore}[1]{}
\usepackage{amsthm}
\usepackage{tikz}

\usepackage{graphicx}
\usepackage{caption}
\usepackage[skip=0.5ex]{subcaption}

\usepackage{listings}
\usepackage{subcaption}
\usepackage{float}

\definecolor{mydarkblue}{rgb}{0,0.08,0.45}
 
\definecolor{Code}{rgb}{0,0,0} 
\definecolor{Decorators}{rgb}{0.5,0.5,0.5} 
\definecolor{Numbers}{rgb}{0.5,0,0} 
\definecolor{MatchingBrackets}{rgb}{0.25,0.5,0.5} 
\definecolor{Keywords}{rgb}{0,0,1} 
\definecolor{self}{rgb}{0,0,0} 
\definecolor{Strings}{rgb}{0,0.63,0} 
\definecolor{Comments}{rgb}{0,0.63,1} 
\definecolor{Backquotes}{rgb}{0,0,0} 
\definecolor{Classname}{rgb}{0,0,0} 
\definecolor{FunctionName}{rgb}{0,0,0} 
\definecolor{Operators}{rgb}{0,0,0} 
\definecolor{Background}{rgb}{0.98,0.98,0.98} 
\lstdefinelanguage{Python}{ 
numbers=left, 
numberstyle=\scriptsize, 
numbersep=1em, 
xleftmargin=1em, 
framextopmargin=2em, 
framexbottommargin=2em, 
showspaces=false, 
showtabs=false, 
showstringspaces=false, 
frame=l, 
tabsize=4, 
basicstyle=\small,
backgroundcolor=\color{Background}, 
commentstyle=\color{Comments}\slshape, 
stringstyle=\color{Strings} 
morecomment=[s][\color{Strings}]{"""}{"""}, 
morecomment=[s][\color{Strings}]{'''}{'''}, 
morecomment=[s][\color{Strings}]{`}{'}, 
morecomment=[l][\color{Comments}]{\#},
morekeywords={import,from,class,def,for,while,if,is,in,elif,else,not,and,or,print,break,continue,return,True,False,None,access,as,,del,except,exec,finally,global,import,lambda,pass,print,raise,try,assert,with}, 
keywordstyle={\color{Keywords}\bfseries}, 
morekeywords={[2]@invariant,pylab,numpy,np,scipy}, 
keywordstyle={[2]\color{Decorators}\slshape}, 
emph={self}, 
emphstyle={\color{self}\slshape}, 
}  
\usepackage{courier}
\lstnewenvironment{python}[1][]{%
  \lstset{language=Python,#1}%
  \lstset{basicstyle=\footnotesize\ttfamily}
}{}

\def\dsqcup{\small\sqcup\mathchoice{\mkern-7mu}{\mkern-7mu}{\mkern-3.2mu}{\mkern-3.8mu}\sqcup}
\usepackage{enumitem}

\makeatletter

\usepackage{caption}
\usepackage{newfloat} 

\DeclareFloatingEnvironment{code}
\newrobustcmd*{\parenttexttrack}[1]{%
  \begingroup
  \blx@blxinit
  \blx@setsfcodes
  \blx@bibopenparen#1\blx@bibcloseparen
  \endgroup}

\AtEveryCite{%
  }

\makeatother

\usepackage{booktabs}
\usepackage{graphicx}
\usepackage{comment}
\usepackage{hyperref}
\usepackage{pifont}
\usepackage{multirow}
\usepackage{makecell}

\usepackage{amsthm}
\newtheorem{theorem}{Theorem}[chapter]
\newtheorem{lemma}[theorem]{Lemma}
\newtheorem{definition}[theorem]{Definition} 
\newtheorem{remark}[theorem]{Remark}
\newtheorem{corollary}[theorem]{Corollary}

\newtheorem{assumption}[theorem]{Assumption}

\title{Advances in Neural Controlled Differential Equations}   

\author{Benjamin Walker}             
\college{Balliol College}  

\renewcommand{\submittedtext}{A thesis submitted for the degree of}
\degree{Doctor of Philosophy}     
\degreedate{November 2025}

\begin{document}

\pagenumbering{roman}
\setlength{\oddsidemargin}{0.14in}
\setlength{\evensidemargin}{-0.0in}
\maketitle
\setlength{\oddsidemargin}{0.33in}
\setlength{\evensidemargin}{-0.08in}
\thispagestyle{empty}

\begin{dedication}
    To Mum and Dad, for everything.
\end{dedication}

\begin{acknowledgements}

    Terry, it has been a privilege spending an hour (and often much longer) each week discussing mathematics, machine learning, finance, politics, philosophy, and every other topic we wandered into. Although you ``cannot discuss mathematics with someone who doesn’t understand differential geometry,'' you certainly did your best with me. I would not be the researcher I am today without you.

    Mum and Dad, I will never be able to express the gratitude I have for your continuous love, encouragement, and guidance. From Mum spending hours revising with me to Dad patiently reminding me for the hundredth time that there are two solutions to $x^2=4$, I owe every step of this journey to you both.

    Sophie, Naomi, and all of my friends, thank you. From adventures Down Under, to European hiking trips and weekend getaways in London, Selsey, Wales, and Milton Keynes, from late-night board games (and those who had to put up with them) to some truly wonderful Christmas meals. I cannot imagine having done this without your support and friendship.
    
    Thank you to my collaborators and colleagues, Lingyi Yang, Nicola Mu\c{c}a Cirone, Cris Salvi, Christian Bayer, Andrew McLeod, Felix Krones, Adam Mahdi, Tiexin Qin, Haoliang Li, Cora Cartis, Kate Zhu, Ammar Naseer, Torben Berndt, Alexandre Bloch, Sam Morley, and Elena Gal. This thesis, and I personally, owe much to the time, effort, and enthusiasm you have so generously given.

    Thank you also to my examiners, Marc Deisenroth and Stephen Roberts, for your time, care, and thoughtful feedback on this thesis.

    Finally, Nathalie, the best part of this journey has been sharing it with you. I cannot wait for our next adventure.
\end{acknowledgements}

\clearpage  
\thispagestyle{plain}
    \vspace*{4cm}
    \begin{quoting}[leftmargin=1.2cm, rightmargin=1.8cm]
        There are many human behaviours which unfold over time. It would be folly to try to understand those behaviours without taking into account their temporal nature.
    \end{quoting}
    \vspace{0.5cm}
    \noindent\large---Jeffrey Elman, \emph{Finding Structure in Time} (1990)
    \normalsize
\clearpage

\begin{abstract}
    Many real-world systems evolve continuously, yet most machine learning models interpret time series as discrete sequences. Continuous-time approaches instead treat time series as samples from an underlying input path, a formulation that naturally accommodates irregularly sampled or oversampled data. Among these, Neural Controlled Differential Equations (NCDEs) are a maximally expressive class of models that parametrise a vector field using a neural network and evolve their hidden state by solving a dynamical system driven by the input path. NCDEs typically use a non-linear vector field, so their expressive power and continuous-time flexibility come at the cost of a forward pass that is both computationally expensive and inherently sequential, limiting their scalability and practical applicability.

    This thesis advances the training and scalability of NCDEs through three complementary contributions. First, building on neural rough differential equations, Log-NCDEs apply the Log-ODE method to efficiently approximate an NCDE's solution during training, improving both computational speed and empirical performance. Second, Linear NCDEs replace the non-linear vector field with a linear one, enabling closed-form solutions and parallel-in-time computation without sacrificing theoretical expressivity. Third, Structured Linear NCDEs use structured linear vector fields to further enhance efficiency while maintaining theoretical expressiveness and empirical performance.

    Collectively, these methods reduce the time per training step for an NCDE by up to three orders of magnitude while achieving state-of-the-art performance across diverse time series benchmarks.
\end{abstract}

\newpage
\setcounter{tocdepth}{2}
\tableofcontents
\newpage
\pagenumbering{arabic}

\setcounter{page}{1}

\chapter{Introduction}

\begin{quoting}
    Paths --- simply everywhere.
\end{quoting}
\noindent\large---Terry Lyons, \emph{Rough Paths, Signatures and the Modelling of Functions on Streams} (2014)
\normalsize

\section{Paths}

\subsection{The PhysioNet Challenge 2022}

\label{sec:intro_path_intro}

In the first year of my PhD, I took part in the George B.\ Moody PhysioNet Challenge 2022, an international machine learning competition focused on developing open-source solutions to clinically relevant problems in healthcare \citep{goldberger2000physiobank, reyna2023heart}. The task was to detect heart murmurs from phonocardiogram (PCG) recordings. Each patient contributed up to six recordings, with between $20{,}000$ and $180{,}000$ samples per recording. Our team, \emph{PathToMyHeart}, placed fourth out of 78 total teams \citep{walker2022dual}.
\\ \\
Our approach converted PCGs into log-mel spectrograms and applied a fine-tuned ResNet-50, which had been pre-trained on ImageNet \citep{deng2009imagenet, He2016DeepRL, walker2022dual}. Notably, the five best-performing teams all used spectrograms, and four, including the winner, applied convolutional neural networks to those spectrograms \citep{Xu2022HMSNet, Lee2022FreqTime, Lu2022LightweightRobust, McDonald2022ParallelHSMM}. Despite heart-sound recordings being time series, the dominant approach among the best-performing teams was to change the modality: move from time series to images and apply a convolutional neural network. This is particularly interesting given the concurrent success of Transformer-based sequence models \citep{bahdanau2015neural, vaswani2017attention, brown2020language}.
\\ \\
There are many reasons why a model operating on spectrograms could outperform one operating on the raw signal.
One that stands out is the application of discrete sequence models to samples from a continuous process.
For example, assuming the signal is band-limited, then a frequency-cropped spectrogram is theoretically independent of the sampling rate $r$ once it exceeds twice the signal's highest frequency \citep{nyquist1928certain, shannon1949communication}.
Therefore, past this sampling rate, the downstream model's performance is independent of $r$.
In contrast, applying a recurrent neural network (RNN) to the raw signal becomes increasingly unstable as $r$ increases due to exploding or vanishing gradients, while applying a Transformer incurs a computational cost that scales as $\mathcal{O}(r^2)$ \citep{Hochreiter1991UntersuchungenZD, hochreiter1997long, vaswani2017attention}.
Both challenges stem from treating samples of a continuous signal as a sequence of discrete observations.
A more natural approach is to model the data as a path, where the evolution is described continuously.

\subsection{From Paths to Signatures}

\label{sec:intro_sig}

This thesis is certainly not the first work to advocate for a path-based approach to time series modelling. 
Differential equations have been used to model epidemics \citep{kermack1927contribution}, economics \citep{solow1956contribution}, and ecology \citep{lotka1925elements, volterra1926fluctuations}, to name some of the applications beginning with the letter e.
In fact, neural networks were used to parametrise the vector field of a differential equation and trained to output paths before  the foundation of modern recurrent neural networks \citep{pearlmutter1989learning, elman1990finding}.
\\ \\
The approach to path-based modelling this thesis builds upon can be traced back to \citet{Chen1954Iterated}, who introduced an infinite collection of iterated integrals of a path, now known as the path's signature. 
This object has several important mathematical properties, which are discussed in detail in Section \ref{sec:signature}. 
One important feature for time series modelling is that, modulo an equivalence relation discussed in Section \ref{sec:sig_prop}, the signature determines a path uniquely \citep{hambly2010uniqueness, BOEDIHARDJO2016720}. 
This uniqueness, together with the natural grading of the signature terms, motivates the truncated signature as a finite feature set that provides a high-level description of the path over an interval.
Early applications of the truncated signature include handwritten character recognition \citep{graham2013sparse, Weixin2016} and the extraction of information from financial data streams \citep{gyurkó2014extractinginformationsignaturefinancial}. 
Another important feature for time series modelling is that any real-valued continuous function defined on a compact set of signatures can be approximated arbitrarily well by a linear function. 
Combined with uniqueness, this property implies that continuous functions on compact subsets of path space can be approximated by linear functions of the signature.
This motivates the use of truncated signature features in linear regression, as formalised by \citet{levin2016learningpastpredictingstatistics}.
A more recent development has been the introduction of the signature kernel, which allows one to operate with the complete signature \citep{kiraly2019kernels, salvi2021signature, salvi2021rough, lemercier2021distribution, lemercier2021siggpde, manten2025signature}.
In healthcare, signature methods have been applied to distinguishing between bipolar disorder and borderline personality disorder \citep{Perez_Arribas2018-xp}, diagnosing Alzheimer's disease \citep{Moore2019-tp}, speech emotion recognition \citep{wang19e_interspeech}, early detection of sepsis \citep{Morrill2019, cohen2023subtle}, and heart failure prediction from electronic health records \citep{Vauvelle2022NeuralSignatureEHR}. 
In finance, signature methods have been applied to derivative pricing \citep{Arribas2018DerivativesPU}, path-dependent stochastic models \citep{arribas2020sigsdes}, optimal stopping \citep{horvath2023optimal}, hedging \citep{cirone2025rough}, and macroeconomic nowcasting \citep{cohen2023nowcasting}.
Additionally, signature methods have proven valuable in information theory \citep{salvi2023structure, shmelev2024sparse}, cybersecurity \citep{cochrane2021sk}, and computational neuroscience \citep{holberg2024exact}.
\\ \\
The signature has typically been used as a shallow learning tool. 
A predefined, universal transformation is applied to a path, with the feature extractor being independent of any specific task or dataset. 
Consequently, learning is confined to a final map from the signature features to the desired output.
Even when signatures are incorporated into deep learning architectures, the transformation itself is often kept static, with learnable components occurring before or after the signature computation \citep{Bonnier2019DeepST, Liao2021LogsigRNN, MorenoPino2024RoughTransformers}.
As will be discussed in Section \ref{sec:cde_def}, the truncated signature is the solution to a certain controlled differential equation (CDE).
This perspective naturally raises the question: rather than using the fixed signature as a feature extractor, can we instead learn a task-specific CDE? 

\subsection{Neural Controlled Differential Equations}

A CDE describes the relationship between the increments of a control path and the evolution of a solution path using a vector field. 
Neural CDEs (NCDEs) treat time series as observations from a control path, parametrise a CDE's vector field using a neural network, and use the solution path as a continuously evolving hidden state \citep{kidger2020neuralcde}. 
An immediate benefit of this path-based approach is detaching the sampling rate of the data from how we evolve the hidden state, which is now controlled by the choice of differential equation solver.
This makes NCDEs naturally suited to handling irregularly sampled or oversampled data \citep{kidger2020neuralcde, Walker2024LogNCDE}.
They have been applied to counterfactual prediction in healthcare \citep{seedat2022continuous}, economic nowcasting \citep{seonkyu2024bridging}, survival prediction \citep{zeng2025trajsurv}, anomaly detection for driving assistance \citep{lee2024gdflowanomalydetectionncdebased}, and dynamic graphs \citep{qin2023learningdynamicgraphembeddings, berndt2025permutation}, with particular success in traffic forecasting \citep{choi2022STGNCDE, choi2023graphneuralroughdifferential}.
\\ \\
However, despite being aware of this path-based approach, our team did not attempt to train an NCDE to detect heart murmurs from PCG recordings during the PhysioNet Challenge 2022. 
Each forward pass involves using a differential equation solver to approximate the solution of a non-linear CDE. 
This process is not only computationally expensive, but also inherently sequential, preventing parallel-in-time training.
Furthermore, although some progress has been made on time series with many samples using neural rough differential equations (NRDEs) \citep{morrill2021neuralrough}, NCDEs continue to struggle on these tasks.
\\ \\
Addressing these limitations is the central focus of the work presented in this thesis.

\section{Thesis Outline}

\subsection{Contributions}

This thesis makes three key contributions to the study and application of NCDEs:

\begin{itemize}
    \item First, building on the work of NRDEs \citep{morrill2021neuralrough}, we use the Log-ODE method to approximate the solutions of NCDEs during training. 
    Rather than working directly with the raw path, this approach uses the signature to summarise the path over intervals, yielding a more efficient approximation. 
    This reduces training times and improves empirical performance, but it does not remove the core computational bottleneck: the forward pass still requires solving a non-linear differential equation sequentially.

    \item Second, we introduce Linear NCDEs, a variant where the non-linear vector field is replaced by a linear one. 
    Linear NCDEs retain the theoretical expressivity of NCDEs while admitting explicit solutions, removing the need for a differential equation solver and enabling parallel-in-time training. 
    Linearity is the key structural innovation that makes NCDEs scalable.

    \item Third, we develop Structured Linear NCDEs (SLiCEs), which replace the dense matrices of a Linear NCDE with structured variants that preserve the model's expressivity while further improving computational efficiency.
\end{itemize}

These three advances are complementary: the Log-ODE method provides an efficient path-based approximation, Linear NCDEs provide a scalable model architecture, and SLiCEs improve the computational efficiency.
Together, they reduce the time per training step for NCDEs by up to three orders of magnitude, making continuous-time models applicable at scales that were previously infeasible. 
The core of this thesis brings together and extends three publications, with my specific contributions to each detailed below.

\begin{itemize}
    \item \fullcite{Walker2024LogNCDE}
    \begin{itemize}
        \item Developed the methodology, implementation, and empirical validation of Log-NCDEs presented in Section \ref{sec:log_ncde}.
        \item The proof of Lemma \ref{lem:normcomplip2} presented in Section \ref{sec:1_2_case}, which explicitly bounds the $\mathrm{Lip}(\gamma)$-norm for the composition of two $\mathrm{Lip}(\gamma)$ functions when $1 < \gamma \leq 2$.
        This proof was completed collaboratively with Dr.\ Andrew McLeod.
        \item The proof of Theorem \ref{thm:lipnn} presented in Section \ref{sec:lipgammaNN}, which uses Lemma \ref{lem:normcomplip2} to bound the $\mathrm{Lip}(\gamma)$-norm for a specific class of neural networks when $1 < \gamma \leq 2$.
    \end{itemize}
    \item \fullcite{cirone2024deepSSM}
    \begin{itemize}
        \item Developed the methodology in collaboration with Nicola Mu\c{c}a Cirone.
        \item Solely contributed the implementation and empirical validation of Linear NCDEs presented in Sections \ref{sec:empirical} and \ref{sec:ssm_results}.
    \end{itemize}
    \item \fullcite{walker2025structuredlinearcdesmaximally} \emph{(Spotlight)}
    \begin{itemize}
        \item Developed the concept, methodology, implementation, and empirical validation of SLiCEs presented in Section \ref{sec:slice}.
    \end{itemize}
\end{itemize}

All of my published work has been completed in close collaboration with my co-authors. 
In particular, I want to highlight Nicola Mu\c{c}a Cirone, who proved Theorems \ref{thm:max_prob_lin_ncde}, \ref{thm:linear_CDEs_closure}, and \ref{thm:diagonal_expr}, originally presented in \citep{cirone2024deepSSM}, and Theorem \ref{thm:slice_max_prob}, originally presented in \citep{walker2025structuredlinearcdesmaximally}.
These theorems are presented in this thesis to give a complete picture, with original overviews of the proof techniques provided where appropriate.
This thesis also presents the following previously unpublished contributions:

\begin{itemize}
    \item To the best of our knowledge, Section \ref{sec:lipgamma} provides the first formal treatment of the Lie bracket for two $\mathrm{Lip}(\gamma)$ functions defined on arbitrary subsets of potentially infinite-dimensional Banach spaces.
    \item Section \ref{sec:opt} provides an explicit example that lower bounds the norm of the composition of two $\mathrm{Lip}(\gamma)$ functions when $1 < \gamma \leq 2$. 
    This complements the upper bound in Lemma \ref{lem:normcomplip2} originally presented in \cite{Walker2024LogNCDE}. We also show that these two bounds converge as $\gamma \to 1$.
    \item Theorem \ref{thm:lipnn} generalises \citep[Theorem 3.1]{Walker2024LogNCDE} by extending the proof that certain neural networks are $\mathrm{Lip}(\gamma)$ from the specific case of Sigmoid Linear Unit (SiLU) activation functions and $\gamma=2$ to a broader class of activation functions and $1 < \gamma \leq 2$ \citep{ELFWING20183}.
\end{itemize}

\subsection{Organisation}

This thesis is structured into two parts. 
The first half establishes the mathematical foundations for the continuous-time models developed later.
\begin{itemize}
    \item Chapter 2 introduces the prerequisite mathematics. 
    Topics covered include the tensor algebra, the signature of a path, existence and uniqueness of solutions to CDEs, and the Log-ODE method for approximating solutions to CDEs. 
    These concepts provide the foundation for the path-based approach adopted in this thesis: paths are the fundamental objects, signatures give a principled description of a path over an interval, and the Log-ODE method shows how such descriptions can be used to efficiently approximate continuous-time dynamics.

    \item Chapter 3 introduces $\mathrm{Lip}(\gamma)$ regularity, which specifies the assumptions on the vector field of a CDE needed to establish existence and uniqueness of solutions. 
    The chapter also provides a formal treatment of the Lie bracket for $\mathrm{Lip}(\gamma)$ functions, a key component of the Log-ODE method. 
    Finally, a novel explicit bound on the $\mathrm{Lip}(\gamma)$ norm of the composition of two $\mathrm{Lip}(\gamma)$ functions when $1 < \gamma \leq 2$ is presented. 
    These results provide the regularity theory needed to justify the application of the Log-ODE method to NCDEs.
\end{itemize}
The second half of the thesis uses these mathematical foundations to develop scalable continuous-time models.
\begin{itemize}
    \item Chapter 4 begins by introducing NCDEs. 
    The results of Chapter 3 are used to ensure that the neural network parametrising the NCDE's vector field satisfies $\mathrm{Lip}(\gamma)$ regularity, allowing the application of the Log-ODE method. 
    The chapter then shows that using the Log-ODE method during training improves the empirical performance and computational efficiency of NCDEs across a range of real-world time series benchmarks. 
    The chapter concludes by highlighting a key remaining limitation of this approach: because the dynamics are governed by a non-linear differential equation, the forward pass remains inherently sequential.

    \item Chapter 5 introduces Linear NCDEs, a subclass of NCDEs with vector fields that depend linearly on the hidden-state.
    Linear NCDEs are shown to retain the full theoretical expressivity of NCDEs whilst admitting explicit solutions, removing the need for a differential equation solver and enabling parallel-in-time computation. 
    The chapter also shows that modern Structured State-Space Models (SSMs) can be viewed as a restrictive subclass of Linear NCDEs, allowing a formal characterisation of their expressive limitations.
    Finally, the chapter introduces SLiCEs, which replace the dense matrices of a Linear NCDE with structured variants that reduce the computational burden whilst retaining full theoretical expressivity and achieving equivalent empirical performance.

    \item Chapter 6 concludes the thesis by summarising the key contributions, reflecting on their implications for scalable continuous-time models, and discussing promising avenues for future research.
\end{itemize}

\chapter{Controlled Differential Equations}

\label{chap:cde}

\begin{quoting}
    The winding roads must be made straight and the rough paths made smooth.
\end{quoting}
\noindent\large{---Luke~3{:}5, \textit{Good News Translation}}
\normalsize

\section{Introduction}

\label{sec:cde_intro}
Realising the benefits of continuous-time machine learning requires a mathematical framework for understanding continuous paths.
This chapter develops that framework by introducing CDEs and the signature of a path.
These objects are closely connected.
The signature arises naturally when deriving the flow of a linear CDE, as shown in Theorem \ref{thm:linear_cde_solution}, and is itself the solution of a CDE.
Furthermore, the signature has several properties that make it central to the methods developed in this thesis.
In particular, it provides a graded representation of a path, its components form a universal feature set, and it composes associatively under concatenation of intervals, enabling parallel-in-time computation via an associative scan.
\\ \\
These ideas already appear in the simplest one-dimensional example, the signature of the time path,
\begin{equation}
    X_t = X(t) = t \in \mathbb{R},
\end{equation}
for $t \in [a,b]$.
The signature of a one-dimensional path over an interval $[a,t]$ is given by an infinite collection of real-valued numbers,
\begin{equation}
    S_{[a,t]} = [S^0_{[a,t]}, S^1_{[a,t]}, S^2_{[a,t]}, \ldots, S^i_{[a,t]}, \ldots],
\end{equation}
where the superscript $i \in \mathbb{N}_0 = \{0\}\, \cup\, \mathbb{N}$ denotes the $i^{\text{th}}$ component of the signature and the subscript $[a,t]$ records the interval over which it is computed.
For the time path, the components of the signature are defined recursively by the system of ordinary differential equations
\begin{equation}
    \label{eq:flat_exp_ode}
    \mathrm{d}S^i_{[a,t]} = S^{i-1}_{[a,t]}\,\mathrm{d}t,
\end{equation}
where $S^0_{[a,t]} = 1$ for all $t \in [a,b]$ and $S^i_{[a,a]} = 0$ for $i \geq 1$.
The solution to this system is
\begin{equation}
    S_{[a,t]} = \left[1, t-a, \frac{(t-a)^2}{2!}, \ldots, \frac{(t-a)^i}{i!}, \ldots\right]
\end{equation}
for $t \in [a,b]$.
The components of the time path's signature are scaled monomials corresponding to the terms in the exponential series. Hence, the sequence of partial sums $\sum_{i=0}^{N} S^i_{[a,t]}$ converges uniformly to the exponential,
\begin{equation}
    \sup_{t \in [a,b]} \left| \mathrm{e}^{t-a} - \sum_{i=0}^{N} S^i_{[a,t]} \right| \xrightarrow{N\to\infty} 0.
\end{equation}
More generally, the signature retains this graded, exponential-like structure for arbitrary paths, although in higher dimensions its components no longer take values in $\mathbb{R}$, but in different tensor spaces, as will be seen in Section~\ref{sec:signature}.
\\ \\
Since the components of the time path's signature are scaled monomials, a straightforward application of the Weierstrass Approximation Theorem shows that any continuous function on $[a,b]$ can be approximated to arbitrary precision by a linear combination of the terms of $S_{[a,t]}$. 

\begin{theorem}[Weierstrass Approximation Theorem \citep{Weierstrass1885Uber}]
    \label{thm:weierstrass}
    Let $f: [a, b] \to \mathbb{R}$ be a continuous function. 
    Then, for every $\epsilon > 0$, there exists a polynomial $P$ such that
    \begin{equation}
        \sup_{t \in [a, b]} |f(t) - P(t)| < \epsilon.
    \end{equation}
\end{theorem}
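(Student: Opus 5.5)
We will prove the statement with the Bernstein polynomial construction. The first step is to reduce to the unit interval. Set $g(x) = f(a + (b-a)x)$ for $x \in [0,1]$. Then $g$ is continuous on $[0,1]$. If $Q$ is a polynomial with $\sup_{x\in[0,1]}|g(x)-Q(x)|<\epsilon$, then
\[
P(t) = Q\bigl((t-a)/(b-a)\bigr)
\]
is a polynomial in $t$ with the same uniform error on $[a,b]$. It therefore suffices to treat $[0,1]$.

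For $n \in \mathbb{N}$, define the Bernstein polynomial
\[
B_n g(x) = \sum_{k=0}^{n} g(k/n)\binom{n}{k}x^k(1-x)^{n-k}.
\]
By the binomial theorem the weights $p_{n,k}(x)=\binom{n}{k}x^k(1-x)^{n-k}$ are nonnegative and sum to $1$. Direct computation gives the moment identities
\[
\sum_k \tfrac{k}{n}\,p_{n,k}(x) = x, \qquad \sum_k \bigl(\tfrac{k}{n}-x\bigr)^2 p_{n,k}(x) = \tfrac{x(1-x)}{n} \le \tfrac{1}{4n}.
\]
These are the mean and variance of $\mathrm{Bin}(n,x)/n$. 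They follow from differentiating $(x+y)^n$ once and twice and then setting $y=1-x$.

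The main step, and the only real analytic content, is the uniform error estimate. Since $[0,1]$ is compact, $g$ is bounded by some $M$ and uniformly continuous. Fix $\epsilon>0$ and choose $\delta>0$ such that $|x-y|<\delta$ implies $|g(x)-g(y)|<\epsilon/2$. Write
\[
g(x)-B_n g(x)=\sum_k \bigl(g(x)-g(k/n)\bigr)p_{n,k}(x),
\]
and split the sum into two parts.
\begin{itemize}
\item For indices with $|k/n-x|<\delta$, uniform continuity bounds the contribution by $\epsilon/2$, because the weights sum to $1$.
\item For indices with $|k/n-x|\ge\delta$, bound $|g(x)-g(k/n)|\le 2M$. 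Chebyshev's inequality, $\mathbf{1}_{|k/n-x|\ge\delta}\le (k/n-x)^2/\delta^2$, together with the variance identity, bounds this contribution by $2M/(4n\delta^2)$.
\end{itemize}
The resulting bound does not depend on $x$. Taking $n > M/(\epsilon\delta^2)$ makes the total error less than $\epsilon$ uniformly on $[0,1]$. Setting $Q = B_n g$ completes the proof.

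The only point that needs care is making the two estimates uniform in $x$. Uniform continuity, from compactness, handles the near terms. The explicit variance bound $1/(4n)$ handles the far terms. Both are independent of $x$, so the argument closes with no further difficulty.
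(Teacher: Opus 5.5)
The paper gives no proof of Theorem~\ref{thm:weierstrass}. It quotes the result from \citet{Weierstrass1885Uber} and uses it only as a black box for Corollary~\ref{cor:E_t_approximation}. Your Bernstein-polynomial argument is a complete and correct proof:
\begin{itemize}
\item The affine reduction to $[0,1]$ is fine; the degenerate case $a=b$ is handled by a constant.
\item The moment identities are right.
\item Your final bound $\epsilon/2 + M/(2n\delta^2)$ does not depend on $x$ and is strictly below $\epsilon$ once $n > M/(\epsilon\delta^2)$, so you do get the strict inequality for the supremum.
\end{itemize}

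Your route differs from the cited source. Weierstrass's own argument extends $f$ to $\mathbb{R}$, convolves it with a Gaussian kernel of shrinking width, and truncates the Taylor series of the resulting entire functions. It is a singular-integral argument that rests on uniform convergence of convolutions and of power series. Your argument is finite and constructive. It uses only positivity of the weights and the first two moments of $\mathrm{Bin}(n,x)/n$, and it produces an explicit approximant whose error is controlled by the modulus of continuity of $g$.

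Within the paper there is also a one-line route. Polynomials on $[a,b]$ form a subalgebra of $C([a,b],\mathbb{R})$ that contains the constants and separates points (the function $t\mapsto t$ already does this). Hence the statement is a special case of Stone--Weierstrass, Theorem~\ref{th:SW}. That route is shorter, but it invokes a deeper theorem whose standard proof itself needs polynomial approximation of $|x|$ on an interval. That approximation can be obtained directly, so there is no circularity.

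A small bonus of your approach is that it makes Corollary~\ref{cor:E_t_approximation} constructive. Expanding $B_n g$ in powers of $t-a$ gives the coefficients $L_i$ explicitly.
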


\begin{corollary}
    \label{cor:E_t_approximation}
    Let $S_{[a, t]} = [1, (t-a), \frac{1}{2!}(t-a)^2, \ldots, \frac{1}{i!}(t-a)^i, \ldots]$ for $t \in [a, b]$. 
    Then, for any continuous function $f: [a, b] \to \mathbb{R}$ and any $\epsilon > 0$, there exists a finite sequence of coefficients $L = [L_0, L_1, \ldots, L_n]$ such that
    \begin{equation}
        \sup_{t \in [a, b]} \left| f(t) - \sum_{i=0}^{n} L_i S^i_{[a, t]} \right| < \epsilon.
    \end{equation}
\end{corollary}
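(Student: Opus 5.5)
The plan is to reduce Corollary \ref{cor:E_t_approximation} directly to Theorem \ref{thm:weierstrass} by re-expanding an approximating polynomial in powers of $(t-a)$ instead of powers of $t$.

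Fix a continuous $f:[a,b]\to\mathbb{R}$ and $\epsilon>0$. By Theorem \ref{thm:weierstrass} there is a polynomial $P$, say of degree $n$, with $\sup_{t\in[a,b]}|f(t)-P(t)|<\epsilon$. The key step is to write $P$ as a polynomial centred at $a$. Substituting $t=(t-a)+a$ into $P(t)=\sum_{k=0}^n p_k t^k$ and expanding each $t^k$ by the binomial theorem gives
\begin{equation}
P(t)=\sum_{i=0}^{n} c_i (t-a)^i, \qquad c_i=\sum_{k=i}^{n}\binom{k}{i}p_k a^{k-i}.
\end{equation}
Equivalently, $c_i = P^{(i)}(a)/i!$ by Taylor's formula, which is exact for polynomials.

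Next I set $L_i = i!\,c_i = P^{(i)}(a)$ for $i=0,\ldots,n$. Since $S^i_{[a,t]}=(t-a)^i/i!$, this gives $\sum_{i=0}^n L_i S^i_{[a,t]} = P(t)$ identically on $[a,b]$. The required bound then follows immediately from the choice of $P$.

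No step is a real obstacle. The only point needing care is the change of basis from monomials $t^k$ to the scaled shifted monomials $(t-a)^i/i!$. This works because both families $\{t^k\}_{k\le n}$ and $\{(t-a)^i/i!\}_{i\le n}$ are bases of the same space of polynomials of degree at most $n$, so no approximation error is introduced. The factorial scaling is harmless because each $L_i$ absorbs the factor $i!$.
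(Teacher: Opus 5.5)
Your proposal is correct and follows the same route as the paper, which applies Theorem~\ref{thm:weierstrass} after noting that finite linear combinations of the scaled monomials $S^i_{[a,t]}$ are exactly the polynomials. Your explicit change of basis, with $L_i = P^{(i)}(a)$, simply fills in the detail the paper leaves implicit.
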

\begin{proof}
    The elements $S^i_{[a, t]}$ are scaled monomials, and their finite linear combinations form the polynomials. 
    Therefore, by the Weierstrass Approximation Theorem, any continuous function can be uniformly approximated by a finite sum of $S^i_{[a,t]}$ to arbitrary precision.
\end{proof}
Hence, linear maps of the time path's signature are dense in the space of continuous functions of time.
Section \ref{sec:signature} will extend this same universality from continuous functions of time to continuous functions of paths.
\\ \\
We can define a product of $S_{[a, b]}$ and $S_{[b, c]}$, which corresponds to concatenating the intervals $[a,b]$ and $[b,c]$.
\begin{lemma}[One-Dimensional Concatenation Product]
    \label{lem:concatenation}
    For intervals $[\alpha,\beta]$ and $[\gamma,\delta]$, let $*$ denote the concatenation product defined by
    \begin{equation}
        \label{eq:oned_sig_prod_statement}
        (S_{[\alpha,\beta]} * S_{[\gamma,\delta]})^k
        = \sum_{j=0}^k S^j_{[\alpha,\beta]} S^{k-j}_{[\gamma,\delta]},
    \end{equation}
    for $k\in\mathbb{N}_0$. Then, for any $a \leq b \leq c$,
    \begin{equation}
        S_{[a,c]} = S_{[a,b]} * S_{[b,c]}.
    \end{equation}
\end{lemma}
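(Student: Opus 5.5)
The most direct route is to use the explicit formula $S^i_{[a,t]} = (t-a)^i/i!$ together with the binomial theorem. An ODE-uniqueness argument based on \eqref{eq:flat_exp_ode} would give a proof that generalises better to higher dimensions, so I will also outline it.

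For the direct computation, fix $a \leq b \leq c$ and $k \in \mathbb{N}_0$. Write $c - a = (b-a) + (c-b)$. The left-hand side is
\begin{equation*}
    S^k_{[a,c]} = \frac{(c-a)^k}{k!} = \frac{1}{k!}\sum_{j=0}^k \binom{k}{j}(b-a)^j (c-b)^{k-j}.
\end{equation*}
Using $\binom{k}{j}/k! = 1/(j!\,(k-j)!)$, this becomes
\begin{equation*}
    \sum_{j=0}^k \frac{(b-a)^j}{j!}\,\frac{(c-b)^{k-j}}{(k-j)!} = \sum_{j=0}^k S^j_{[a,b]}\, S^{k-j}_{[b,c]}.
\end{equation*}
By the definition \eqref{eq:oned_sig_prod_statement} of $*$, the right-hand sum is exactly $(S_{[a,b]} * S_{[b,c]})^k$. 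Since $k$ was arbitrary, the sequences agree componentwise, which proves the lemma. The degenerate cases $a=b$ or $b=c$ need no separate treatment: there $S_{[b,b]} = [1,0,0,\ldots]$ is the identity for $*$, and this is consistent with the formula.

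For the ODE-based proof, fix $b$ and let $t$ range over $[b,c]$. Define $T^k_t = \sum_{j=0}^k S^j_{[a,b]} S^{k-j}_{[b,t]}$. Differentiating in $t$ and applying \eqref{eq:flat_exp_ode} to the $[b,t]$ factor gives $\mathrm{d}T^k_t = T^{k-1}_t\,\mathrm{d}t$, because the $j=k$ term is constant and $S^0_{[b,t]} = 1$. Also $T^0_t = 1$, and at $t=b$ we have $T^k_b = S^k_{[a,b]}$. The family $S^k_{[a,t]}$ satisfies the same recursive system with the same initial value at $t=b$. Inducting on $k$ and using uniqueness of antiderivatives then gives $T^k_t = S^k_{[a,t]}$; taking $t=c$ yields the result.

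Neither route has a real obstacle. The only points needing care are the index bookkeeping in the binomial identity and, for the second proof, checking that the boundary term $j=k$ drops out of the derivative.
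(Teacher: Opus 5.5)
Your first argument is correct and is exactly the paper's proof: substitute $S^i_{[a,t]}=(t-a)^i/i!$ and apply the binomial theorem to $(c-a)^k=\bigl((b-a)+(c-b)\bigr)^k$, with the paper running the same chain of equalities from the product side instead of from $S^k_{[a,c]}$. Your ODE-uniqueness alternative is also sound (the $j=k$ term is constant, $T^k_b=S^k_{[a,b]}$ because $S^{m}_{[b,b]}=0$ for $m\geq 1$, and induction on $k$ closes it), and it has the extra merit of not relying on the closed form, so it mirrors how Chen's identity is proved for general paths.
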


\begin{proof}
    Using \eqref{eq:oned_sig_prod_statement},
    \begin{align}
    \label{eq:oned_sig_prod}
        (S_{[a, b]} * S_{[b, c]})^k &= \sum_{j=0}^k S^j_{[a, b]} S^{k - j}_{[b, c]} \\
        &= \sum_{j=0}^k \frac{(b - a)^j (c - b)^{k - j}}{j!(k - j)!}, \\
        &= \frac{1}{k!} \sum_{j=0}^k \binom{k}{j} (b - a)^j (c - b)^{k - j}
    \end{align}
    Applying the Binomial Theorem,
    \begin{equation}
        (S_{[a, b]} * S_{[b, c]})^k = \frac{(c - a)^k}{k!} = S^k_{[a, c]}.
    \end{equation}
    Since the $k$-th components of $S_{[a, c]}$ and $S_{[a, b]} * S_{[b, c]}$ are equal for all $k \geq 0$,
    \begin{equation}
        S_{[a, c]} = S_{[a, b]} * S_{[b, c]}.
    \end{equation}
\end{proof}
This simple setting already demonstrates an important computational advantage of signatures.
To compute the signature over a long interval, one may first partition the interval into smaller subintervals, compute the signature on each subinterval independently, and then combine the results using the product.
Since this product is associative, these local computations can be aggregated in parallel via an associative scan \citep{blelloch1993prefix}.
The same idea will also be used in Chapter~\ref{chap:lin_ncde} to make Linear NCDEs parallel-in-time.
\\ \\
The one-dimensional case already illustrates the core properties of the signature that make it a powerful tool.
It provides a universal, graded representation of a path that composes naturally under concatenation of intervals.
For general paths, these components are no longer real-valued, but instead take values in tensor spaces.
Formulating this precisely requires a brief detour into the tensor algebra.

\section{The Tensor Algebra}

\subsection{Tensor Product Space}
\label{sec:tensor_prod_space}

For paths taking values in a vector space, the higher levels of the signature are defined by iterated integrals of the path increments. At the second level, these terms involve pairs of increments and so naturally define bilinear quantities. Higher levels similarly involve several increments and hence give rise to multilinear quantities. To treat such objects within a linear framework, we seek a vector space through which these bilinear interactions may be represented linearly. This motivates the following universality property.

\begin{definition}[Universal for Bilinearity \citep{roman2007advanced14}]
    Let $U$ and $V$ be vector spaces. A vector space $T$
    with a bilinear map $t : U \times V \to T$ is universal for bilinearity if for every vector space $W$ and every bilinear map
    $\kappa : U \times V \to W$, there exists a unique linear map
    $\tau : T \to W$ satisfying $\kappa = \tau \circ t$.
\end{definition}

We now construct a vector space with this property. 
Let $U$ and $V$ be vector spaces over $F$, and let $F_{U \times V}$ be the vector space with basis $\{(u,v) : u \in U, v \in V\}$. 
Let $R$ be the subspace of $F_{U\times V}$ spanned by the expressions
\begin{equation}
\begin{aligned}
(u+u', v)   &- (u, v) - (u', v), \\
(r u, v)    &- r (u, v), \\
(u, v+v')   &- (u, v) - (u, v'), \\
(u, r v)    &- r (u, v),
\end{aligned}
\end{equation}
where $r\in F$, $u,u'\in U$, and $v,v' \in V$.
These expressions measure the failure of bilinearity, so passing to the quotient identifies them with zero and thereby forces bilinearity to hold.
The quotient space
\begin{equation}
    U\otimes V = \frac{F_{U\times V}}{R},
\end{equation}
is called the tensor product of $U$ and $V$. 
The map $\iota :U \times V\rightarrow U \otimes V$ is defined by projecting $(u,v)$ onto $U\otimes V$, with the result denoted $\iota(u,v)=u\otimes v$.

\begin{theorem}
    \label{thm:tensor_universal}
    Let $U$ and $V$ be vector spaces. 
    Up to isomorphism, the tensor product space $U\otimes V$
    with the bilinear map $\iota : U \times V \to U \otimes V$ is the unique space that is universal for bilinearity \citep{roman2007advanced14}.
\end{theorem}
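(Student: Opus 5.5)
The proof has two parts: existence, meaning $(U\otimes V,\iota)$ is universal for bilinearity, and uniqueness up to isomorphism.

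For existence, first note that $\iota$ is bilinear. The four families of generators of $R$ are sent to zero in the quotient, so, for example, $(u+u')\otimes v = u\otimes v + u'\otimes v$.

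Next take a vector space $W$ and a bilinear map $\kappa:U\times V\to W$. Since $\{(u,v)\}$ is a basis of $F_{U\times V}$, there is a unique linear map $\tilde\kappa:F_{U\times V}\to W$ with $\tilde\kappa(u,v)=\kappa(u,v)$. Because $\kappa$ is bilinear, $\tilde\kappa$ vanishes on each generator of $R$. For instance,
\begin{equation}
\tilde\kappa\bigl((u+u',v)-(u,v)-(u',v)\bigr)=\kappa(u+u',v)-\kappa(u,v)-\kappa(u',v)=0 .
\end{equation}
Hence $R\subseteq\ker\tilde\kappa$, and $\tilde\kappa$ descends to a linear map $\tau:U\otimes V\to W$ with $\tau(u\otimes v)=\kappa(u,v)$, i.e.\ $\kappa=\tau\circ\iota$.

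Uniqueness of $\tau$ comes from spanning. The image $\iota(U\times V)$ spans $U\otimes V$, because the projections of the basis vectors $(u,v)$ span the quotient. Any two linear maps that agree on a spanning set are equal.

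For uniqueness of the space, I use the standard universal-property argument. Suppose $(T,t)$ is also universal for bilinearity. Applying the universality of $U\otimes V$ to $t$ gives a linear map $\tau_1:U\otimes V\to T$ with $t=\tau_1\circ\iota$. Applying the universality of $T$ to $\iota$ gives $\tau_2:T\to U\otimes V$ with $\iota=\tau_2\circ t$.

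Then $\tau_2\circ\tau_1\circ\iota=\iota$. The identity map also satisfies $\mathrm{id}\circ\iota=\iota$, so the uniqueness clause in the universality of $U\otimes V$, applied to $\kappa=\iota$ itself, forces $\tau_2\circ\tau_1=\mathrm{id}_{U\otimes V}$. By the symmetric argument, $\tau_1\circ\tau_2=\mathrm{id}_T$. So $\tau_1$ is an isomorphism, and it intertwines the bilinear maps via $t=\tau_1\circ\iota$.

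The only delicate step is the last one: seeing that uniqueness must be applied to $\iota$ itself, so that the identity map is the forced factorisation. Everything else is routine verification on generators.
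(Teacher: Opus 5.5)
Your proof is correct and complete: $\tau$ exists by descending the linear extension $\tilde\kappa$ through the quotient by $R$, it is unique because $\iota(U\times V)$ spans $U\otimes V$, and the pair is unique up to isomorphism by the two-way factorisation argument. The paper gives no proof of its own and simply defers to \citet{roman2007advanced14}, and your argument is the standard one that reference uses.
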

Theorem \ref{thm:tensor_universal} shows that the space constructed above is, up to isomorphism, the only vector space with the required universality property. Thus the tensor product is the natural space for bilinear quantities. Repeating this construction yields the higher tensor powers needed for the multilinear terms appearing in the signature.
\\ \\
Figure \ref{fig:tensor_commute} is a schematic representation of the universal bilinearity satisfied by the tensor product space. To illustrate the tensor product and universal property, take $U=\mathbb{R}^2$ and $V=\mathbb{R}^3$ with bases $(e_1,e_2)$ and $(f_1,f_2,f_3)$, respectively. 
Let $\Phi: U\otimes V \rightarrow \mathbb{R}^{2\times 3}$ be the isomorphism defined by mapping $e_i \otimes f_j$ to the matrix $E_{ij}$, which has a $1$ at $(i,j)$ and zeros elsewhere. 
Let $u=[u_1,u_2]^T\in \mathbb{R}^2$ and $v=[v_1,v_2,v_3]^T\in \mathbb{R}^3$. The image of their tensor product under $\Phi$ is the outer product of the vectors,
\begin{equation}
    \Phi(u\otimes v) = \begin{bmatrix} u_1 \\ u_2 \end{bmatrix} \begin{bmatrix} v_1 & v_2 & v_3 \end{bmatrix} = \begin{bmatrix} u_1v_1 & u_1v_2 & u_1v_3 \\ u_2v_1 & u_2v_2 & u_2v_3 \end{bmatrix}.
\end{equation}
Every bilinear expression $\kappa(u,v)$ depends on products $u_i v_j$ of components of
$u$ and $v$. 
In $U\otimes V$, these products are represented linearly by the basis elements $e_i \otimes f_j$. 
Thus, given a bilinear map $\kappa: U \times V \to W$, the universal property yields a unique linear map $\tau: U \otimes V \to W$ such that $\tau(u \otimes v) = \kappa(u, v)$.

\begin{figure}
\centering
\includegraphics[width=0.5\textwidth]{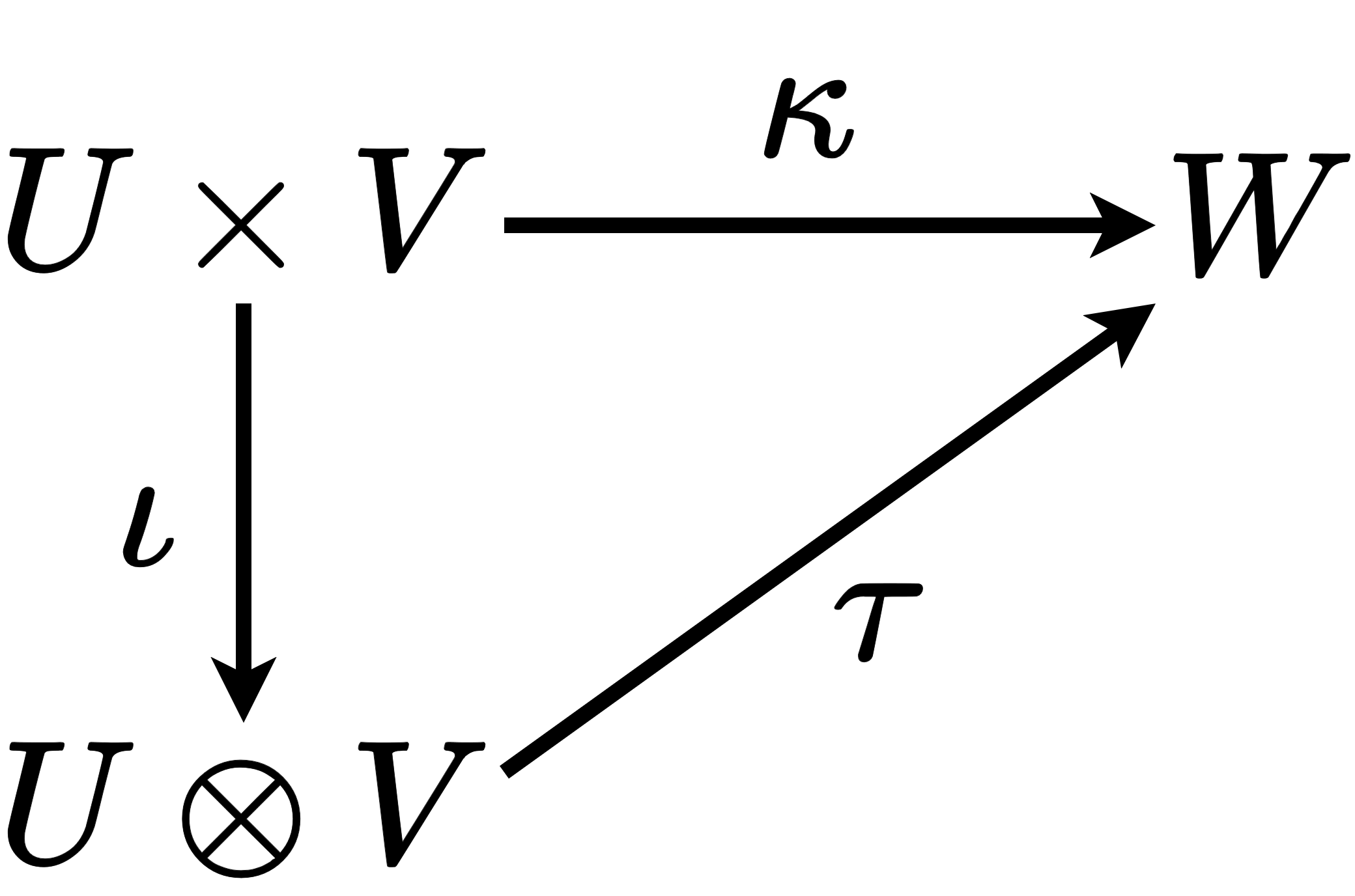}
\caption{Commutative diagram expressing the universal bilinearity of the tensor product, where each bilinear map $\kappa: U \times V \rightarrow W$ factors uniquely through a linear map $\tau: U\otimes V \rightarrow W$.}
\label{fig:tensor_commute}
\end{figure}

\subsection{Definition}

\label{sec:tens_alg}
The previous subsection introduced tensor product spaces as the natural linear spaces for the multilinear terms in the signature. 
To study the signature analytically, we now assume that the path takes values in a Banach space $V$. 
The higher levels of the signature then take values in the iterated tensor products of $V$. 
To analyse these terms, we must equip the tensor powers of $V$ with suitable norms.
\\ \\
Let $V$ be a Banach space. For each $n\geq1$, let $V^{\otimes n}$ denote the completion of the $n-$fold tensor product of $V$ with respect to a norm $\|\cdot\|_{V^{\otimes n}}$.
Throughout this thesis, we assume that the family of norms $\{\|\cdot\|_{V^{\otimes n}}\}_{n=1}^\infty$ satisfies the following conditions for all $m,n\geq 1$, $v_1,\ldots,v_n\in V$, $v\in V^{\otimes n}$, and $w\in V^{\otimes m}$.
\begin{enumerate}
\item For every $\sigma$ in the symmetric group over $\{1,\ldots,n\}$, let $\pi_{\sigma}$ denote the permutation operator defined on simple tensors by $\pi_{\sigma}(v_1\otimes \cdots \otimes v_n) = v_{\sigma(1)} \otimes \cdots \otimes v_{\sigma(n)}$ and extended to generic $v$ by linearity and completion. Then $\pi_\sigma$ is an isometry.
\item The norm is sub-multiplicative, $|| v \otimes w ||_{V^{\otimes (n+m)}} \leq ||v||_{V^{\otimes n}}\;||w||_{V^{\otimes m}}$.
\item For any bounded linear functional $f$ on $V^{\otimes n}$ and $g$ on $V^{\otimes m}$, there exists a unique bounded linear functional $f \otimes g$ on $V^{\otimes (m+n)}$ such that $(f \otimes g)(v \otimes w) = f(v)g(w)$. 
\item For a bounded linear functional $h$ on a Banach space $X$, let
\begin{equation}
    \|h\|_{\operatorname{op}} = \sup_{x\neq 0}\frac{|h(x)|}{\|x\|_X}
\end{equation}
be the operator norm. Then for bounded linear functionals $f$ on $V^{\otimes n}$ and $g$ on $V^{\otimes m}$, let $f \otimes g$ denote the linear functional defined by
\begin{equation}
    (f \otimes g)(v \otimes w) = f(v)g(w)
\end{equation}
for simple tensors and extended by linearity. Then
\begin{equation}
    \label{eq:lipgamma_norm_assum1}
    \|f \otimes g\|_{\operatorname{op}} \leq \|f\|_{\operatorname{op}}\|g\|_{\operatorname{op}}.
\end{equation}
\end{enumerate}
Families of norms which satisfy conditions $1$ and $2$ are called admissible tensor norms \citep{lyons2007differential}.
Requiring conditions $1$, $2$, and $3$ is the setting of \citet{lyons2002} and \citet{BOEDIHARDJO2016720}.
Families of norms satisfying conditions $1$, $2$, and $4$ are known as reasonable tensor algebra norms \citep{ChangLyonsNi2018}.
In fact, condition $4$ implies condition $3$ by continuity, so reasonable tensor algebra norms automatically satisfy all four properties.
Families of reasonable tensor algebra norms will be sufficient for the contents of this thesis \citep{ChangLyonsNi2018}.
Moreover, conditions $2$ and $4$ ensure that for each decomposition $n=r+s$ with $r,s\geq1$, the norm on $V^{\otimes n}$ induces a reasonable cross norm on $V^{\otimes r}\otimes V^{\otimes s}$.
Therefore, the bounds in conditions $2$ and $4$ hold with equality \citep{ryan2002introduction, lyons2025}.
Equipping each $V^{\otimes n}$ with either the projective or injective tensor norm yields a family of reasonable tensor algebra norms \citep{ChangLyonsNi2018}.

\begin{definition}[The Tensor Algebra \citep{lyons2007differential}]
    Let $\{V^{\otimes n}\}_{n=1}^{\infty}$ be equipped with a family of norms $\{\|\cdot\|_{V^{\otimes n}}\}_{n=1}^\infty$ satisfying the above conditions, and define $V^{\otimes 0}=\mathbb{R}$. The tensor algebra is the set
    \begin{equation}
        T((V)) = \{\mathbf{x} = (x^0, x^1, \ldots ) | x^n \in V^{\otimes n}\}
    \end{equation}
    with product $\mathbf{z}=\mathbf{x}\otimes\mathbf{y}$ defined by
    \begin{equation}
    \label{eq:tensor_prod}
        z^k = (\mathbf{x} \otimes \mathbf{y})^k = \sum_{j=0}^k x^j \otimes y^{k-j}.
    \end{equation}
\end{definition} 
The tensor algebra product is associative and has unit $\mathbf{1} = (1,0,0,\ldots)$. 
As $T((V))$ is an associative algebra, it has a Lie algebra structure, with Lie bracket
\begin{equation}
[\mathbf{x}, \mathbf{y}] = \mathbf{x} \otimes \mathbf{y} - \mathbf{y} \otimes \mathbf{x}
\end{equation}
for $\mathbf{x},\mathbf{y} \in T((V))$ \citep{reutenauer1993free}. 
When $V=\mathbb{R}$, each tensor power $V^{\otimes n}$ is canonically identified with $\mathbb{R}$, and an element of $T((V))$ may be viewed simply as a sequence of real numbers.
In this case, the tensor algebra product is exactly the one-dimensional concatenation product introduced in Lemma \ref{lem:concatenation}.
Four substructures of the tensor algebra which will be relevant in this thesis are:
\begin{enumerate}
    \item The space with only finitely many non-zero terms, denoted $T(V)$.
    \item The truncated tensor algebra, denoted $T^{n}(V)$, whose elements are of the form $\mathbf{x} = (x^0, x^1, \ldots, x^n)$.
    \item The space with $x^0=1$, denoted $\tilde{T}((V))$. 
    This space is a group with inverse
    \begin{equation}
        \mathbf{x}^{-1} = 1 - (\mathbf{x} - 1) + (\mathbf{x} - 1)^{\otimes 2} - (\mathbf{x} - 1)^{\otimes 3} + \cdots.
    \end{equation}
    \item The space of Lie series generated by $V$, 
    \begin{equation}
        \mathfrak{L}((V)) = \{(l_0,l_1,\ldots):l_i\in L_i\},
    \end{equation}
    where $L_0=0$, $L_1=V$, and $L_{i+1}$ is the span of $[v,l]$ for $v\in V$ and $l\in L_i$, which is denoted $[V,L_i]$.
\end{enumerate}

\subsection{The Tensor Algebra and $l^2(\mathbb{W}_d)$}
\label{sec:tens_alg_l2}
Let $V$ be a $d-$dimensional Banach space and $(e_1,\ldots,e_d)$ be a basis of $V$. 
A basis of $V^{\otimes n}$ is given by 
\begin{equation}
    \left\{e_I = e_{i_1}\otimes \cdots \otimes e_{i_n} | i_j \in \{1,\ldots, d\}\right\},
\end{equation}
where $I = (i_1, i_2, \dots, i_n)$ is a multi-index.
Let $\mathbb{W}_d$ denote the set of all finite sequences, or words, composed of the letters $\{1, 2, \dots, d\}$, with the empty word $\emptyset$ included. 
There is a natural identification between the basis of $V^{\otimes n}$ and the words of length $n$ via the map $\phi(e_{i_1} \otimes \cdots \otimes e_{i_n}) = i_1\cdots i_n$. 
Furthermore, elements of the tensor algebra can be identified with real-valued functions $A: \mathbb{W}_d \to \mathbb{R}$. 

\begin{definition}[$l^2(\mathbb{W}_d)$ Space \citep{KreyszigFunctionalAnalysis}]

The space $l^2(\mathbb{W}_d)$ consists of all real-valued functions $A: \mathbb{W}_d \to \mathbb{R}$ such that the sum of the squares of their values is finite:
\begin{equation}
\| A \|_{l^2} = \left( \sum_{I \in \mathbb{W}_d} A_I^2 \right)^{\frac{1}{2}} < \infty.
\end{equation}
This space is a Hilbert space when equipped with the inner product:
\begin{equation}
\langle A, B \rangle_{l^2} = \sum_{I \in \mathbb{W}_d} A_I B_I.
\end{equation}
\end{definition}

Using the above identification of $T((V))$ with real-valued functions on $\mathbb{W}_d$ and considering each as purely sets, we have the following inclusions: $T(V)\subset l^2(\mathbb{W}_d)\subset T((V))$.

\section{The Signature}

\label{sec:signature}

\subsection{Definition}

Armed with the tensor product, we can now extend the notion of a signature to paths taking values in a Banach space.
As in the one-dimensional example of Section \ref{sec:cde_intro}, the construction is based on iterated integrals.
Therefore, we begin by specifying a suitable notion of integration for the setting of this thesis, the Young integral.

\begin{definition}[$p$-variation \citep{Young1936AnIO, lyons2007differential}]
    Let $V$ be a Banach space and $p \ge 1$. 
    The $p$-variation of a path $X : [a,b] \to V$ is defined by
    \begin{equation}
        \|X\|_{p,[a,b]} = \left(\sup_{\mathcal{D}}\sum_{j=0}^{r-1} \|X_{t_{j+1}} - X_{t_j}\|^p\right)^{1/p},
    \end{equation}
    where $\mathcal{D}$ is the set of all finite partitions $(t_j)_{j=0}^r$ satisfying 
    $a = t_0 < t_1 < \cdots < t_r = b$.
\end{definition}
The set of paths from $[a,b]$ to $V$ with finite $p-$variation is denoted $\mathcal{V}^p([a,b], V)$.
\begin{definition}[Bounded Linear Maps \citep{KreyszigFunctionalAnalysis}]
    Let $V$ and $W$ be Banach spaces. A linear map $A:V\to W$ is bounded if there exists a constant $c>0$ such that
    \begin{equation}
        \|A(v)\|_W \leq c\|v\|_V
    \end{equation}
    for all $v\in V$.
\end{definition}
We denote the set of all bounded linear maps from $V$ to $W$ by $\mathbf{L}(V, W)$.
\begin{theorem}[Young Integration \citep{Young1936AnIO, lyons2007differential}]
    \label{thm:young_integral}
    Let $V$ and $W$ be Banach spaces, $p,q \geq 1$ satisfy $\frac{1}{p}+\frac{1}{q}>1$,  $X\in\mathcal{V}^p([a,b], V)$, and $Y\in\mathcal{V}^q([a,b], \mathbf{L}(V, W))$. Let $\{d^n=(t^n_0,\ldots,t^n_{N_n})\}_{n=0}^\infty$ be a sequence of finite partitions of $[a,b]$ with $\sup_{j}|t^n_{j+1}-t^n_j|\to 0$ as $n\rightarrow \infty$. Then the Young integral defined by the limit
    \begin{equation}
        \label{eq:young_integral}
        \int_a^b Y_s\mathrm{d}X_s = \lim_{n \to \infty} \sum_{i=0}^{N_n-1}Y_{t^n_i}[X_{t_{i+1}^n}-X_{t_i^n}]
    \end{equation}
    exists and does not depend on the choice of partitions $\{d^n\}_{n=0}^\infty$. 
\end{theorem}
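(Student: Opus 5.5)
The plan is the classical Young argument: control the effect of removing one point from a partition, then compare two partitions through their common refinement. For a partition $d=(t_0,\ldots,t_N)$ write the Riemann sum as
\begin{equation}
    I(d) = \sum_{i=0}^{N-1} Y_{t_i}[X_{t_{i+1}}-X_{t_i}].
\end{equation}
We introduce the control $\omega(s,t) = \|X\|_{p,[s,t]}^p + \|Y\|_{q,[s,t]}^q$. It is superadditive, meaning $\omega(s,u)+\omega(u,t)\le\omega(s,t)$ for $s\le u\le t$, because the $p$- and $q$-th powers of the variations are superadditive. We will also use its continuity near the diagonal; see the last paragraph.

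\emph{Step 1 (removing one point).} Removing an interior point $t_i$ changes the sum by
\begin{equation}
    (Y_{t_i}-Y_{t_{i-1}})[X_{t_{i+1}}-X_{t_i}].
\end{equation}
By the definition of the operator norm on $\mathbf{L}(V,W)$, the norm of this change is at most
\begin{equation}
    \|Y\|_{q,[t_{i-1},t_{i+1}]}\,\|X\|_{p,[t_{i-1},t_{i+1}]} \le \omega(t_{i-1},t_{i+1})^{\theta}, \qquad \theta = \tfrac1p+\tfrac1q>1.
\end{equation}
Superadditivity gives $\sum_{i=1}^{N-1}\omega(t_{i-1},t_{i+1})\le 2\omega(a,b)$. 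So for $N\ge 3$ some $i$ satisfies $\omega(t_{i-1},t_{i+1})\le \frac{2}{N-2}\omega(a,b)$.

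\emph{Step 2 (Young's maximal inequality).} Repeatedly remove such a point until only $\{a,b\}$ remains. This yields
\begin{equation}
    \|I(d) - Y_a[X_b-X_a]\| \le (2\omega(a,b))^{\theta}\sum_{k\ge1}k^{-\theta} = C_\theta\,\omega(a,b)^\theta,
\end{equation}
which is finite precisely because $\theta>1$. This is the main obstacle and the point where the hypothesis $\frac1p+\frac1q>1$ enters.

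\emph{Step 3 (Cauchy property and independence).} Let $d'$ refine $d$. Applying Step 2 on each interval $[t_i,t_{i+1}]$ of $d$ gives
\begin{equation}
    \|I(d')-I(d)\| \le C_\theta\sum_i \omega(t_i,t_{i+1})^\theta \le C_\theta\, \max_i \omega(t_i,t_{i+1})^{\theta-1}\,\omega(a,b).
\end{equation}
The right-hand side tends to $0$ as the mesh of $d$ tends to $0$, provided $\omega$ is continuous near the diagonal. For two arbitrary partitions $d^n,d^m$ we compare each with their common refinement $d^n\cup d^m$. This shows $(I(d^n))$ is Cauchy in the Banach space $W$, so it converges. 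Interleaving two partition sequences and repeating the argument shows the limit does not depend on the choice of $\{d^n\}$.

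\emph{The continuity issue.} Step 3 needs $\omega(t_i,t_{i+1})\to0$ uniformly as the mesh tends to zero. This holds when $X$ and $Y$ are continuous, as they are throughout this thesis. I would state continuity as the standing assumption, since for paths with jumps the claim fails, for example when $X$ and $Y$ jump at the same time. Under that assumption, $\omega$ is uniformly continuous on the compact set $\{(s,t): a\le s\le t\le b\}$ and vanishes on the diagonal, so the required uniform convergence follows.
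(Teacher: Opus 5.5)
Your proof is correct and is essentially the argument the paper relies on. The thesis gives no proof of its own and only cites Young and Lyons--Caruana--L\'evy, whose standard proof is exactly this successive point-removal (Young--Lo\`eve) estimate followed by comparing Riemann sums through a common refinement.

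Your continuity caveat is justified. The cited Banach-space treatment is for continuous paths, whereas the thesis's definition of $\mathcal{V}^p$ does not state continuity, and without it the statement fails: for real-valued $Y=\mathbf{1}_{[\tau,b]}$ and $X=\mathbf{1}_{(\tau,b]}$ the Riemann sums equal $1$ or $0$ according to whether $\tau$ is a partition point.

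The only slip is cosmetic. Averaging over the $N-1$ interior points gives the bound $\frac{2}{N-1}\omega(a,b)$ for every $N\geq 2$. This also covers the final removal that your $\frac{2}{N-2}$ leaves out, and yields exactly $C_\theta=2^\theta\sum_{k\geq1}k^{-\theta}$.
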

\begin{proof}
    See \citep{Young1936AnIO} for real-valued paths and \citep{lyons2007differential} for paths taking values in Banach spaces.
\end{proof}
The integral defined by the limit in \eqref{eq:young_integral} also exists when both $X$ and $Y$ are continuous and at least one of them has finite $1-$variation. 
In this case, it is known as a Riemann-Stieltjes integral \citep{Stieltjes1894, Apostol1974MathematicalAnalysis}.
\begin{definition}[The Signature \citep{lyons2007differential}]
    Let $X\in\mathcal{V}^p([a,b], V)$ with $p<2$ and define 
    \begin{equation}
        X^n_{[a,b]} = \underbrace{\int\cdots\int}_{\substack{u_1<\cdots<u_n \\ u_1,\ldots,u_n\in [a,b]}}\mathrm{d}X_{u_1}\otimes \cdots \otimes \mathrm{d}X_{u_n} \in V^{\otimes n},
    \end{equation}
    where the integral is defined in the Young sense \citep{Young1936AnIO}. 
    The signature of the path $X$ is defined as
    \begin{equation}
        S_{[a,b]}(X)=(1,X^1_{[a,b]},\ldots,X^n_{[a,b]},\ldots) \in \tilde{T}((V)).
    \end{equation}
\end{definition}
To illustrate this definition, consider the case $V=\mathbb{R}^d$ with standard basis $(e_1,\ldots,e_d)$ and $X_t=(X^1_t,\ldots,X^d_t)$. 
Then
\begin{equation}
    X^n_{[a,b]} = \sum_{i_1,\ldots,i_n=1}^d S^{(i_1,\ldots,i_n)}_{[a,b]}\, e_{i_1}\otimes\cdots\otimes e_{i_n},
\end{equation}
where
\begin{equation}
    \label{eq:signature_element}
    S^{(i_1,\ldots,i_n)}_{[a,b]} = \underbrace{\int\cdots\int}_{\substack{u_1<\cdots<u_n \\ u_1,\ldots,u_n\in [a,b]}}\mathrm{d}X^{i_1}_{u_1}\cdots \mathrm{d}X^{i_n}_{u_n}.
\end{equation}
In particular, the first-level terms $S^{(i)}_{[a,b]} = X^i_b - X^i_a$ are the coordinate increments of the path, while higher-order terms capture ordered interactions between coordinates.
\\ \\
If $X:[a,b]\to\mathbb{R}$ is one-dimensional, then for each $n\geq1$,
\begin{equation}
    \label{eq:one_d_sig_terms}
    X^n_{[a,b]} = \frac{(X_b-X_a)^n}{n!}.
\end{equation}
As for the time path, the signature of a one-dimensional path depends only on the total increment $X_b-X_a$. 
In dimension two and higher, the order of the increments matters. 
For example, the piecewise linear paths $(0,0)\to(1,0)\to(1,1)$ and $(0,0)\to(0,1)\to(1,1)$ have the same first level, since they have the same increment, but different second levels. 
In $\mathbb{R}^2$, this difference is captured by the antisymmetric part of the second level,
\begin{equation}
    \mathcal{A} = \frac{1}{2}\left(S^{(1,2)}_{[a,b]} - S^{(2,1)}_{[a,b]}\right),
\end{equation}
which is the signed area enclosed by the path and its chord. 
So, beyond one dimension, the signature records not only where a path starts and ends, but also information about the route it traverses, as illustrated in Figure~\ref{fig:signature_intuition}.
\\ \\
The condition $\frac{1}{p}+\frac{1}{q}>1$ in Theorem \ref{thm:young_integral} restricts our definition of the signature to paths with finite $p-$variation for $p$ less than $2$.
Although this is sufficient for the paths considered in this thesis, many important examples, such as Brownian motion, have infinite $p-$variation for all $p<2$.
Rough path theory provides a generalisation of the construction presented here to include paths that have finite $p$-variation for some $p\geq2$, but for no $p<2$ \citep{Lyons1998}.
An accessible introduction to rough path theory can be found in \citep{lyons2007differential}.

\begin{figure}
\centering
\includegraphics[width=\textwidth]{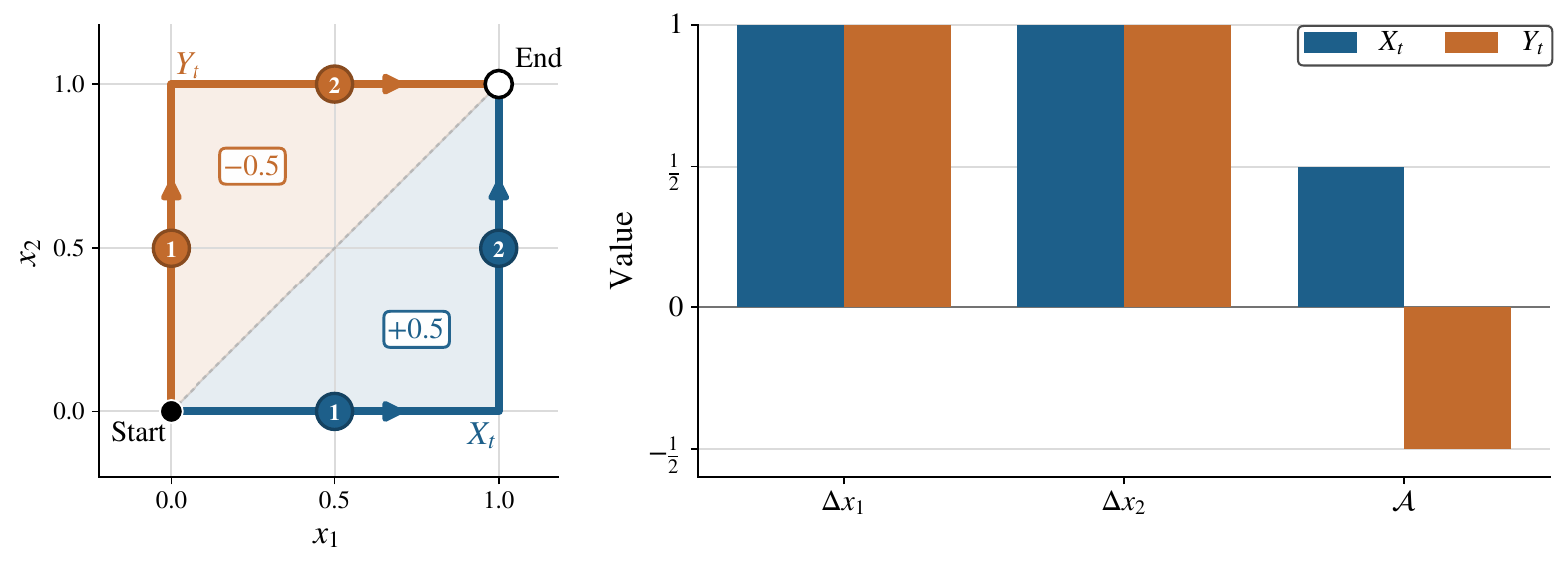}
\caption{Two piecewise linear paths from Start to End with the same net displacement but different ordering: the blue path $X_t$ moves first in the $x_1$ direction and then in $x_2$, while the orange path $Y_t$ moves first in the $x_2$ direction and then in $x_1$, as indicated by the numbered circles. The shaded triangles show the corresponding signed areas and the bar chart compares the common first-level terms $S^{(1)}$ and $S^{(2)}$ with the signed area $\mathcal{A} = \frac{1}{2}\big(S^{(1,2)} - S^{(2,1)}\big)$ for the two paths.}
\label{fig:signature_intuition}
\end{figure}

\subsection{Properties}

\label{sec:sig_prop}

In his seminal work on the signature of finite-dimensional bounded-variation paths, \citet{Chen1954Iterated} introduced a number of important properties, including invariance to reparametrisation and translation. \citet{lyons2007differential} provide an exposition of these properties for paths in $\mathcal{V}^p([a,b],V)$ with $p<2$.

\begin{lemma}[Invariance to Reparametrisation \citep{Chen1954Iterated, lyons2007differential}]
    Let $X\in\mathcal{V}^p([a,b], V)$ with $p<2$ and $\alpha:[a,b]\rightarrow[a,b]$ be a continuous, strictly increasing function satisfying $\alpha(a)=a$ and $\alpha(b)=b$. 
    Then 
    \begin{equation}
        S_{[a,b]}(X) = S_{[a,b]}(X \circ \alpha).
    \end{equation}
\end{lemma}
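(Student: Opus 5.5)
The plan is to prove, by induction on the level $n$, the stronger statement that for every $t\in[a,b]$,
\begin{equation}
    X^n_{[a,t]} = (X\circ\alpha)^n_{[a,\alpha^{-1}(t)]}.
\end{equation}
Taking $t=b$ and using $\alpha^{-1}(b)=b$ then gives equality of all levels, and hence of the signatures. Write $Y = X\circ\alpha$. Since $\alpha$ is a continuous strictly increasing bijection of $[a,b]$, it maps partitions of $[a,b]$ bijectively to partitions of $[a,b]$. Consequently $\|Y\|_{p,[a,b]}=\|X\|_{p,[a,b]}$, so $Y\in\mathcal{V}^p([a,b],V)$ and its signature is defined.

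The key tool is a change-of-variables identity for the Young integral. If $Z\in\mathcal{V}^q([a,b],\mathbf{L}(V,W))$ with $\frac1p+\frac1q>1$, then
\begin{equation}
    \int_a^{s} (Z\circ\alpha)_u\,\mathrm{d}(X\circ\alpha)_u = \int_a^{\alpha(s)} Z_u\,\mathrm{d}X_u .
\end{equation}
To prove this, take partitions $d^n$ of $[a,s]$ with mesh tending to zero. The Riemann sums of the left-hand side along $d^n$ are exactly the Riemann sums of the right-hand side along the image partitions $\alpha(d^n)$ of $[a,\alpha(s)]$. Because $\alpha$ is uniformly continuous on the compact interval, the mesh of $\alpha(d^n)$ also tends to zero. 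By Theorem \ref{thm:young_integral}, the limit of these sums is independent of the chosen sequence of partitions, so both sides are equal.

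For the induction, the iterated integrals satisfy the recursion
\begin{equation}
    X^{n}_{[a,t]}=\int_a^t X^{n-1}_{[a,u]}\otimes \mathrm{d}X_u ,
\end{equation}
with base case $X^0\equiv1$. Here the integrand $u\mapsto X^{n-1}_{[a,u]}$ is viewed as taking values in $\mathbf{L}(V,V^{\otimes n})$ via $v\mapsto X^{n-1}_{[a,u]}\otimes v$, which is a bounded map by sub-multiplicativity (condition 2). The base case $n=1$ is immediate, since both sides equal $X_t-X_a$. For the inductive step, the hypothesis gives $Y^{n-1}_{[a,u]}=X^{n-1}_{[a,\alpha(u)]}$, so the integrand for $Y$ is $Z\circ\alpha$ with $Z_u = X^{n-1}_{[a,u]}$. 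Applying the change-of-variables identity then closes the induction.

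The main obstacle is justifying that the integrand $Z$ has finite $q$-variation with $\frac1p+\frac1q>1$, so that Theorem \ref{thm:young_integral} applies. For this I would include in the induction the standard Young estimate
\begin{equation}
    \|X^{n}_{[s,t]}\|\le C_n\,\omega(s,t)^{n/p},
\end{equation}
where $\omega(s,t)=\|X\|^p_{p,[s,t]}$ is the control of $X$. I would also use Chen's identity in the form
\begin{equation}
    X^{n-1}_{[a,t]}-X^{n-1}_{[a,s]}=\sum_{j<n-1}X^{j}_{[a,s]}\otimes X^{n-1-j}_{[s,t]} .
\end{equation}
Together these show that $u\mapsto X^{n-1}_{[a,u]}$ has finite $p$-variation, so one may take $q=p$; the condition $\frac2p>1$ holds because $p<2$. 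Alternatively, I would simply cite \citet{lyons2007differential} for the fact that the iterated integrals are well defined and of finite $p$-variation, and concentrate the argument on the partition-transport step, which is the genuine content of reparametrisation invariance.
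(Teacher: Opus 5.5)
Your proof is correct. The thesis gives no proof of this lemma and simply defers to \citet{Chen1954Iterated} and \citet{lyons2007differential}. Your argument is the standard one behind those citations, made self-contained using Theorem~\ref{thm:young_integral}. It has two parts: $\alpha$ carries partitions of $[a,s]$ bijectively onto partitions of $[a,\alpha(s)]$ with vanishing mesh, so the left-point Riemann sums defining each Young integral for $X\circ\alpha$ coincide with those for $X$; and you then induct on the level through $X^n_{[a,t]}=\int_a^t X^{n-1}_{[a,u]}\otimes\mathrm{d}X_u$.

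One simplification: you need not carry the Young estimate through the induction. Theorems~\ref{thm:chen} and~\ref{thm:factorial-decay}, as already stated in the thesis, give $\|X^{n-1}_{[a,t]}-X^{n-1}_{[a,s]}\|\le K\,\omega(s,t)^{1/p}$ for a constant $K$ independent of $s\le t$. Superadditivity of $\omega$ then bounds the $p$-variation of $u\mapsto X^{n-1}_{[a,u]}$ by $K\,\omega(a,b)^{1/p}$, which is all the change-of-variables step requires.
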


\begin{lemma}[Invariance to Translation \citep{Chen1954Iterated, lyons2007differential}]
    Let $X \in \mathcal{V}^p([a,b], V)$ with $p < 2$, and $v \in V$.
    Define $Y:[a,b] \to V$ by $Y_t = X_t + v$ for all $t \in [a,b]$.
    Then
    \begin{equation}
        S_{[a,b]}(X) = S_{[a,b]}(Y).
    \end{equation}
\end{lemma}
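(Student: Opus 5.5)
The plan is to prove the statement level by level. Since the signature is built from the iterated integrals $X^n_{[a,b]}$, it suffices to show that $Y^n_{[a,b]} = X^n_{[a,b]}$ for every $n \geq 1$; the zeroth level is $1$ for both paths. The whole argument rests on one observation: every ingredient of the Young integral depends on the path only through its increments, and these are unchanged by translation. Indeed, for any $s,t \in [a,b]$,
\begin{equation}
    Y_t - Y_s = (X_t + v) - (X_s + v) = X_t - X_s .
\end{equation}
A first consequence is that $Y$ has the same $p$-variation as $X$, because the $p$-variation is a supremum over partitions of sums of norms of increments. In particular $Y \in \mathcal{V}^p([a,b],V)$, so its signature is well defined.

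I will then proceed by induction on $n$, using the recursive description of the iterated integrals. Set $X^n_{[a,t]}$ as the level-$n$ term over $[a,t]$, so that
\begin{equation}
    X^{n}_{[a,t]} = \int_a^t X^{n-1}_{[a,s]} \otimes \mathrm{d}X_s ,
\end{equation}
where the integrand $s \mapsto X^{n-1}_{[a,s]}$ is regarded as a path in $\mathbf{L}(V, V^{\otimes n})$ via $w \mapsto X^{n-1}_{[a,s]} \otimes w$. Define $Y^{n}_{[a,t]}$ in the same way.

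\textbf{Base case.} The first level is $Y^1_{[a,t]} = Y_t - Y_a = X_t - X_a = X^1_{[a,t]}$.

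\textbf{Inductive step.} Suppose $Y^{n-1}_{[a,\cdot]} = X^{n-1}_{[a,\cdot]}$ as paths on $[a,b]$. By Theorem \ref{thm:young_integral}, the level-$n$ integral is the limit of Riemann sums of the form
\begin{equation}
    \sum_i Y^{n-1}_{[a,t_i]} \otimes \bigl(Y_{t_{i+1}} - Y_{t_i}\bigr).
\end{equation}
The inductive hypothesis replaces each $Y^{n-1}_{[a,t_i]}$ by $X^{n-1}_{[a,t_i]}$, and the increment identity above replaces each $Y_{t_{i+1}} - Y_{t_i}$ by $X_{t_{i+1}} - X_{t_i}$. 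So the two sequences of Riemann sums coincide term by term for every partition, and their limits agree, giving $Y^n_{[a,t]} = X^n_{[a,t]}$ for all $t$. Taking $t = b$ at each level yields $S_{[a,b]}(X) = S_{[a,b]}(Y)$.

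\textbf{Main obstacle.} The one delicate point is justifying the recursive form of the iterated integral. This means checking that the integrand $s \mapsto X^{n-1}_{[a,s]}$ has finite $p$-variation, so that Theorem \ref{thm:young_integral} applies. It also means checking that this recursive definition agrees with the simplex definition of $X^n_{[a,b]}$. Both facts are standard in \citep{lyons2007differential}; I would cite them rather than reprove them.

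If one prefers to avoid the recursion, there is an alternative route: read the simplex integral directly as a limit of sums of products $\Delta X_{u_1} \otimes \cdots \otimes \Delta X_{u_n}$ of increments over partitions. Each such sum is visibly unchanged when $X$ is replaced by $Y$, so the limits agree.
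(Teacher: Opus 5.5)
Your proof is correct. The paper gives no proof of its own here: it states the lemma and defers to \citep{Chen1954Iterated, lyons2007differential}. Your argument is the standard one behind those references. Every level of the signature is an iterated Young integral whose approximating sums see the path only through its increments, and $Y_t-Y_s=X_t-X_s$.

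The ``main obstacle'' you flag is lighter than you suggest. The recursion $X^{n}_{[a,t]}=\int_a^t X^{n-1}_{[a,s]}\otimes\mathrm{d}X_s$ is what ``defined in the Young sense'' means here, and the paper uses it itself. It appears as the tensor CDE in Section~\ref{sec:cde_def}, and as the identity $\int_a^t X^n_{[a,s]}\otimes\mathrm{d}X_s=X^{n+1}_{[a,t]}$ in the proof of Theorem~\ref{thm:linear_cde_solution}. Moreover, your inductive hypothesis makes the Riemann sums for $Y$ literally identical to those for $X$. So existence of the limits for $Y$ is inherited from existence for $X$, which the statement already presupposes, and no separate variation estimate is needed for the integrands built from $Y$. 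If you want the finite $p$-variation of $s\mapsto X^{n-1}_{[a,s]}$ explicitly, Theorems~\ref{thm:chen} and~\ref{thm:factorial-decay} give $\|X^{n-1}_{[a,t]}-X^{n-1}_{[a,s]}\|\le K\|X\|_{p\text{-var};[s,t]}$, which suffices.

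I would not lean on your alternative route via discrete simplex sums. For $1<p<2$, identifying their limit with the Young-sense iterated integral is itself a nontrivial fact, so that route is less self-contained than the recursive one.
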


For paths that do not self-intersect, in the sense that $s \neq t$ implies $X_s \neq X_t$, translation and reparametrisation are the only transformations under which the signature is invariant. 
This makes the signature a natural representation when the object of interest is the shape of a path, since it treats the infinitely many paths related by translation or reparametrisation as equivalent.
However, when a path does self-intersect, further invariances appear. 
A basic example is that the signature is unchanged by a portion of the path that traces out an excursion and then retraces it exactly. 
In the bounded-variation setting, \citet{hambly2010uniqueness} showed that the full class of such invariances is captured by tree-like equivalence. \citet{BOEDIHARDJO2016720} subsequently extended this result to the $p<2$ setting.
For our purposes, it is enough to note that translation invariance is removed by working with paths that share a common initial point, while reparametrisation invariance and tree-like equivalence are removed by augmenting the path with time. 
In particular, the added time channel is strictly increasing, so time-augmented paths cannot self-intersect.

\begin{definition}
    Let $\mathcal{V}^p_0([a,b],V)$ denote the set of paths $X\in\mathcal{V}^p([a,b],V)$ where $X_a=0$.
\end{definition}

\begin{definition}[Time augmentation]
    For $X\in\mathcal{V}^p([a,b],V)$, define the time-augmented path $\hat X:[a,b]\to \mathbb{R}\times V$ by
    \begin{equation}
        \hat X_t = (t,X_t).
    \end{equation}
\end{definition}

Another fundamental property established by \citet{Chen1954Iterated} is that the signature is multiplicative under concatenation of intervals.

\begin{theorem}[Chen's Identity \citep{Chen1954Iterated, lyons2007differential}]
    \label{thm:chen}
    Let $X\in\mathcal{V}^p([a,b], V)$ with $p<2$. 
    Then for all $t\in[a,b]$,
    \begin{equation}
        S_{[a,b]}(X) = S_{[a,t]}(X) \otimes S_{[t,b]}(X).
    \end{equation}
\end{theorem}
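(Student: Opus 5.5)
We prove Chen's identity level by level: for every $n\ge 0$ we show
\begin{equation*}
X^n_{[a,b]} = \sum_{j=0}^{n} X^j_{[a,t]}\otimes X^{n-j}_{[t,b]},
\end{equation*}
with the conventions $X^0 = 1$ and \eqref{eq:tensor_prod} for the product. The natural route is to first prove this for paths of bounded variation via a decomposition of the simplex, and then pass to $p<2$ by approximation.

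\textbf{Bounded-variation case.} The domain of integration $\{a<u_1<\cdots<u_n<b\}$ splits, up to sets where some $u_i = t$, into the disjoint pieces
\begin{equation*}
\Delta_j = \{a<u_1<\cdots<u_j<t<u_{j+1}<\cdots<u_n<b\}, \qquad j=0,\ldots,n.
\end{equation*}
On $\Delta_j$ the first $j$ variables range over the simplex in $[a,t]$ and the remaining $n-j$ range over the simplex in $[t,b]$, independently of one another. A Fubini argument for iterated Riemann--Stieltjes integrals, together with bilinearity of $\otimes$, identifies the contribution of $\Delta_j$ with $X^j_{[a,t]}\otimes X^{n-j}_{[t,b]}$.

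To avoid multi-dimensional measure theory, we can phrase this inductively. Let $X^n_{[a,s]}$ be the level-$n$ term up to time $s$, and use the recursive definition
\begin{equation*}
X^n_{[a,s]} = \int_a^s X^{n-1}_{[a,u]}\otimes \mathrm{d}X_u .
\end{equation*}
We prove $X^n_{[a,s]} = \sum_j X^j_{[a,t]}\otimes X^{n-j}_{[t,s]}$ for $s\ge t$ by induction on $n$. Split the integral as $\int_a^t + \int_t^s$. In the second piece, substitute the induction hypothesis for $X^{n-1}_{[a,u]}$, and pull the constant factors $X^j_{[a,t]}$ out of the integral; this is legitimate by linearity of the Young integral and continuity of $x\mapsto c\otimes x$, which follows from sub-multiplicativity (condition 2). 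The pieces then reassemble as claimed. The same induction works verbatim for Young integrals, which avoids approximation entirely.

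\textbf{Main obstacle.} The care lies in justifying this recursive description and additivity over intervals in the Young setting. Specifically, three facts are needed:
\begin{itemize}
\item the Young integral is additive over $[a,t]\cup[t,s]$, which follows from the Riemann-sum definition in Theorem \ref{thm:young_integral} by choosing partitions containing $t$, allowed since the limit is partition-independent;
\item the integrand $u\mapsto X^{n-1}_{[a,u]}$ has finite $p$-variation, so the integral is well defined;
\item constant tensors can be factored out, $\int (c\otimes Y_u)\,\mathrm{d}X_u = c\otimes\int Y_u\,\mathrm{d}X_u$, which follows termwise from the Riemann sums by continuity.
\end{itemize}
I expect the second point to be the delicate one. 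I would get it from the standard Young–Loève estimate, which shows that an indefinite Young integral inherits a $p$-variation bound controlled by that of $X$. Assuming this, the induction closes and summing over all levels gives $S_{[a,b]}(X)=S_{[a,t]}(X)\otimes S_{[t,b]}(X)$.
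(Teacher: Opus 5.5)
The paper gives no proof of Chen's identity. It only cites \citet{Chen1954Iterated} and \citet{lyons2007differential}, so your argument has to stand on its own, and it does.

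Quantifying the level-$n$ statement over all right endpoints $s\ge t$ is what makes the induction close. After splitting $\int_a^s=\int_a^t+\int_t^s$, the first piece is $X^n_{[a,t]}\otimes X^0_{[t,s]}$ by definition. Inserting the level-$(n-1)$ identity into the second piece and factoring out the constants $X^j_{[a,t]}$ yields $X^j_{[a,t]}\otimes X^{n-j}_{[t,s]}$ for $0\le j\le n-1$.

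The three analytic facts you isolate are the right ones. The one you flag as delicate is finite $p$-variation of $u\mapsto X^{n-1}_{[a,u]}$, which follows from the Young--Lo\`eve estimate together with $\|x\otimes v\|\le\|x\|\,\|v\|$. That property is already needed for the paper's definition of $X^n_{[a,b]}$ as an iterated Young integral to make sense. Likewise, the recursive description is simply how that definition has to be read. So neither is an extra cost of Chen's identity.

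The classical argument you sketch first and then set aside, decomposing the simplex into the pieces $\Delta_j$ and applying Fubini, uses product Stieltjes measures and is therefore a bounded-variation argument. Reaching $p<2$ that way needs an approximation step. For example, piecewise-linear approximations converge in $p'$-variation for any $p<p'<2$, and one then needs continuity of the signature in that topology. In the rough-path framework one instead obtains the signature as the unique multiplicative extension of the increments, so Chen's identity is built into the construction and the work moves into the extension theorem.

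Your induction needs only additivity, linearity and one-dimensional Young estimates. It therefore applies verbatim at $p<2$ without approximation, and it is the most natural fit for how the paper defines the signature.
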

Theorem~\ref{thm:chen} is the general analogue of the concatenation property in Lemma~\ref{lem:concatenation} for the signature of the time path. 
It shows that the signature over a long interval may be decomposed into signatures over smaller subintervals and then recombined via the tensor product.
This makes the signature amenable to memory-efficient recurrent computation, since the current signature encodes all necessary information from previous intervals, and to parallel computation via an associative scan \citep{blelloch1993prefix}.
\\ \\
As shown in \eqref{eq:one_d_sig_terms}, the terms of the signature of a one-dimensional path decay factorially. 
The following theorem shows that there is an analogous decay estimate for general paths.

\begin{definition}
    Let $\Gamma$ denote the gamma function, defined for $x>0$ by
\begin{equation}
\Gamma(x)=\int_0^\infty t^{x-1}e^{-t}\,\mathrm{d}t.
\end{equation}
\end{definition}

The gamma function extends the factorial function to the positive real line, in the sense that $\Gamma(1+m)=m!$ for $m\in\mathbb{N}_0$.

\begin{theorem}[Factorial decay {\citep[Theorem~2.2.1]{Lyons1994DIFFERENTIALED}}]
\label{thm:factorial-decay}
Let $X \in \mathcal{V}^p([a,b],V)$ with $p < 2$. Then there exists a finite constant $C_p > 1$, depending only on $p$, such that for every $[s,t]\subset[a,b]$ and every $n \geq 1$,
\begin{equation}
\label{eq:young_factorial}
\big\|X^n_{[s,t]}\big\|
\leq
C_p \frac{\|X\|_{p\text{-var};[s,t]}^n}{\Gamma\!\left(1+\frac{n}{p}\right)}.
\end{equation}
\end{theorem}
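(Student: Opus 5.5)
We prove the bound with Lyons' neoclassical-style argument: first control the multiplicative functional $X^n_{[s,t]}$ by a control function, then induct on the level $n$ through a point-removal (Young-type) argument on partitions. Set $\omega(s,t) = \|X\|^p_{p\text{-var};[s,t]}$. This is a control: it is continuous, vanishes on the diagonal, and is superadditive, $\omega(s,u)+\omega(u,t)\le\omega(s,t)$. The aim is to show that, for a suitable constant $\beta$ depending only on $p$,
\begin{equation*}
\big\|X^n_{[s,t]}\big\| \le \frac{\omega(s,t)^{n/p}}{\beta\,(n/p)!},
\end{equation*}
where $(n/p)! = \Gamma(1+n/p)$. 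Since $\omega(s,t)^{n/p}=\|X\|^n_{p\text{-var};[s,t]}$, this gives \eqref{eq:young_factorial} with $C_p=\max(1/\beta,2)$, which exceeds $1$ as required.

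We argue by induction on $n$. For $n=1$ the claim is immediate, since $\|X_t-X_s\|\le\omega(s,t)^{1/p}$, provided $\beta\le 1/\Gamma(1+1/p)$. Now assume the bound holds for all levels up to $n-1$. Take a partition $D=(t_j)$ of $[s,t]$ and define the discrete approximation $X^{n,D}_{[s,t]}$ as the level-$n$ component of the tensor product $\bigotimes_j(1,X_{t_{j+1}}-X_{t_j},0,\ldots)$. This product has degree-$n$ part at most, and, by Young integration (Theorem \ref{thm:young_integral}) applied iteratively, $X^{n,D}_{[s,t]}$ converges to $X^n_{[s,t]}$ as the mesh tends to zero.

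The heart of the inductive step is to remove partition points one at a time. Removing an interior point $t_j$ changes the level-$n$ term by
\begin{equation*}
\sum_{k=1}^{n-1} X^{k}_{[t_{j-1},t_j]}\otimes X^{n-k}_{[t_j,t_{j+1}]}.
\end{equation*}
To justify this, use the level-$\le n-1$ terms of the true signature together with Chen's identity (Theorem \ref{thm:chen}), and work with the multiplicative functional truncated at level $n-1$ and extended "almost multiplicatively" to level $n$. Sub-multiplicativity of the tensor norms and the inductive hypothesis then bound this change by
\begin{equation*}
\frac{1}{\beta^2}\sum_{k=1}^{n-1}\frac{\omega(t_{j-1},t_j)^{k/p}\,\omega(t_j,t_{j+1})^{(n-k)/p}}{(k/p)!\,((n-k)/p)!}.
\end{equation*}

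Next comes the point-selection step. By superadditivity there is a $j$ with $\omega(t_{j-1},t_{j+1})\le \frac{2}{r-1}\,\omega(s,t)$. To bound the sum above we invoke Lyons' neo-classical inequality,
\begin{equation*}
\sum_{k=0}^{n}\frac{a^{k/p}b^{(n-k)/p}}{(k/p)!\,((n-k)/p)!}\le p\,\frac{(a+b)^{n/p}}{(n/p)!},
\end{equation*}
which yields a per-removal cost of order $\frac{p}{\beta^2}\left(\frac{2}{r-1}\right)^{n/p}\frac{\omega(s,t)^{n/p}}{(n/p)!}$. Since $n/p>1$ for $n\ge2$ (because $p<2$), the sum over $r$ of $(2/(r-1))^{n/p}$ is bounded by $2^{n/p}\zeta(n/p)$. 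This is finite but could blow up as $n/p\downarrow1$, so the case $n=2$ with $p$ close to $2$ needs separate care; there $2/p>1$ still gives a $p$-dependent constant. Once all interior points have been removed we reach the trivial partition, whose level-$n$ term is $(X_t-X_s)^{\otimes n}/n!$ and is bounded easily. Choosing $\beta$ small enough that $p\,2^{n/p}\zeta(n/p)/\beta^2+\ldots\le 1/\beta$ uniformly in $n\ge2$ closes the induction.

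The main obstacle is the neo-classical inequality itself, which is nontrivial; the paper may simply cite it, as \citet{Lyons1994DIFFERENTIALED} does. The second delicate point is making the constants uniform in $n$, in particular ensuring that $2^{n/p}$ does not grow. The fix is to select the removed point so that $\omega$ over the merged pair is at most $\frac{2}{r-1}\omega(s,t)$, and to absorb the factor $2^{n/p}$ by applying the inequality with $a+b\le\omega(t_{j-1},t_{j+1})$ directly. Because we only need some finite $C_p$ depending on $p$, any crude but $n$-uniform bookkeeping suffices.
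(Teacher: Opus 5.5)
\textbf{There is a genuine gap: the constants in your induction cannot be made consistent, and the argument proves a weaker bound than the one stated.}

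The base case forces $\beta\le 1/\Gamma(1+1/p)$, which is below $1.13$ for $p\in[1,2)$. The inductive step needs the accumulated removal cost $\frac{pS_p}{\beta^2}\frac{\omega(s,t)^{n/p}}{(n/p)!}$ to be at most $\frac{1}{\beta}\frac{\omega(s,t)^{n/p}}{(n/p)!}$. That is equivalent to $\beta\ge pS_p$, so $\beta$ must be large, not small as you wrote. Here $S_p=\sum_{r\ge 2}\min\{1,2/(r-1)\}^{2/p}\ge 2$, because removals from partitions with two or three intervals gain nothing. The two requirements on $\beta$ are therefore incompatible.

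The standard remedy, and the way Lyons' extension theorem is actually set up, is to rescale the control to $\omega=(\beta\,\Gamma(1+1/p))^p\|X\|^p_{p\text{-var}}$. The conclusion then reads $\|X^n_{[s,t]}\|\le\beta^{-1}\kappa_p^n\|X\|^n_{p\text{-var};[s,t]}/\Gamma(1+n/p)$ with $\kappa_p=\beta\,\Gamma(1+1/p)>1$. You can see the same thing without rescaling by tracking level-dependent constants $\|X^k_{[s,t]}\|\le a_k\|X\|^k_{p\text{-var};[s,t]}/\Gamma(1+k/p)$. Your removal step gives $a_n\le pS_p\max_{1\le k<n}a_ka_{n-k}$ with $a_1=\Gamma(1+1/p)$, which only yields $a_n\le (pS_p)^{-1}\bigl(pS_p\,\Gamma(1+1/p)\bigr)^n$.

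So the point-removal induction delivers a geometric factor $\kappa_p^n$, not an $n$-independent $C_p$. The method is lossy even at $p=1$, where $\|X^n_{[s,t]}\|\le\|X\|^n_{1\text{-var};[s,t]}/n!$ holds with no extra factor. The $\kappa_p^n$ version is all the thesis ever uses: the absolute convergence in Theorem~\ref{thm:linear_cde_solution} only needs $\sum_n(\kappa_p x)^n/\Gamma(1+n/p)<\infty$. The thesis itself gives no proof beyond the citation to Lyons. To reach the inequality exactly as stated, you would need a sharper ingredient than anything in your proposal.

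Two further steps also need repair.

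\emph{The discrete object.} The removal identity $\sum_{k=1}^{n-1}X^k_{[t_{j-1},t_j]}\otimes X^{n-k}_{[t_j,t_{j+1}]}$ holds for Lyons' almost-multiplicative functional $\hat X^D=\bigotimes_j(1,X^1_{[t_j,t_{j+1}]},\ldots,X^{n-1}_{[t_j,t_{j+1}]},0)$ in $T^n(V)$. It does not hold for the product of the factors $(1,X_{t_{j+1}}-X_{t_j},0,\ldots)$ that you defined: for $n\ge 3$, merging two of those factors changes level $n$ by non-local terms involving all the other factors. With $\hat X^D$, the trivial partition contributes $0$ at level $n$, not $(X_t-X_s)^{\otimes n}/n!$. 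Convergence to $X^n_{[s,t]}$ then comes from Chen's identity, which gives $X^n_{[s,t]}-\hat X^{n,D}_{[s,t]}=\sum_j X^n_{[t_j,t_{j+1}]}$, combined with any crude a priori bound $\|X^n_{[u,v]}\|\le c_n\,\omega(u,v)^{n/p}$ and $n/p>1$.

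\emph{Uniformity in $n$.} Bound the removal cost at a partition with $r$ intervals by a multiple of $\min\{1,2/(r-1)\}^{n/p}\le\min\{1,2/(r-1)\}^{2/p}$, which is summable in $r$ because $2/p>1$. The factor $2^{n/p}\zeta(n/p)$ in your proposal grows with $n$ and must be replaced in this way.
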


Theorem~\ref{thm:factorial-decay} shows that the signature decays rapidly with tensor level, and hence defines an element of the completed tensor algebra. 
Letting $S(\mathcal{V}^p([a,b],V))$ denote the set of signatures of paths in $\mathcal{V}^p([a,b],V)$ and using the identification introduced in Section~\ref{sec:tens_alg_l2}, we have
\begin{equation}
    S(\mathcal{V}^p([a,b],V)) \subset \ell^2(\mathbb{W}_d),
\end{equation}
when $V$ is $d-$dimensional.
Equation \eqref{eq:one_d_sig_terms} also shows that each term of the signature of a one-dimensional path is a monomial in the first term. 
For general paths, an analogous algebraic structure persists in a more subtle form. 
As first shown by \citet{Ree1958LieEA}, products of linear functionals applied to lower-order signature terms can be rewritten as a single linear functional applied to a higher-order term. 
This result is known as the shuffle product identity.

\begin{definition}
    Let $\mathrm{Sh}(\alpha,\beta)$ denote the set of permutations $\sigma \in S_{\alpha+\beta}$ such that $\sigma(1)<\cdots<\sigma(\alpha)$ and $\sigma(\alpha+1)<\cdots<\sigma(\alpha+\beta)$.
\end{definition}

\begin{definition}[Shuffle of functionals]
    Let $V$ be a Banach space and let $\alpha,\beta \in \mathbb{N}_0$. 
    Let $f$ be a bounded linear functional on $V^{\otimes \alpha}$ and $g$ on $V^{\otimes \beta}$. 
    Their shuffle $f \dsqcup g$ is the bounded linear functional on $V^{\otimes (\alpha+\beta)}$ defined by
    \begin{equation}
        (f \dsqcup g)(w)
        = \sum_{\sigma \in \mathrm{Sh}(\alpha,\beta)}
           (f \otimes g)\big(\pi_\sigma w\big),
        \qquad w \in V^{\otimes (\alpha+\beta)}.
    \end{equation}
\end{definition}

\begin{theorem}[Shuffle Product Identity {\citep{Ree1958LieEA, lyons2007differential}}]
\label{th:shuff}
    Let $V$ be a Banach space and $X \in \mathcal{V}^p([a,b],V)$ with $p<2$. 
    Then for all $\alpha,\beta \in \mathbb{N}_0$ and for every pair of bounded linear functionals $f$ on $V^{\otimes \alpha}$ and $g$ on $V^{\otimes \beta}$,
    \begin{equation}
        f\left(X^\alpha_{[a,b]}\right) g\left(X^\beta_{[a,b]}\right)=
        (f \dsqcup g)\left(X^{\alpha+\beta}_{[a,b]}\right).
    \end{equation}
\end{theorem}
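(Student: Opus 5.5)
We prove the shuffle identity first for piecewise linear (indeed bounded-variation) paths using Fubini, and then pass to general $X \in \mathcal{V}^p([a,b],V)$, $p<2$, by approximation. By linearity and continuity in $f$ and $g$, together with condition 3 on the tensor norms, it suffices to consider the case where $f$ and $g$ act on simple tensors. The statement is then really about the iterated integrals themselves. Write
\begin{equation}
X^\alpha_{[a,b]}\otimes X^\beta_{[a,b]} = \int_{\Delta_\alpha\times\Delta_\beta} \mathrm{d}X_{u_1}\otimes\cdots\otimes \mathrm{d}X_{u_\alpha}\otimes \mathrm{d}X_{v_1}\otimes\cdots\otimes \mathrm{d}X_{v_\beta},
\end{equation}
where $\Delta_k=\{a<t_1<\cdots<t_k<b\}$. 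The key combinatorial step is that, up to a set of measure zero (the diagonals), the product simplex $\Delta_\alpha\times\Delta_\beta$ decomposes as a disjoint union over $\sigma\in\mathrm{Sh}(\alpha,\beta)$ of copies of $\Delta_{\alpha+\beta}$. Each copy corresponds to one way of interleaving the two ordered families of times into a single increasing sequence $w_1<\cdots<w_{\alpha+\beta}$, with $u_i = w_{\sigma(i)}$ and $v_j=w_{\sigma(\alpha+j)}$.

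On the piece indexed by $\sigma$, the integrand equals the $(\alpha+\beta)$-fold integral over $\Delta_{\alpha+\beta}$ with its tensor factors permuted. Therefore
\begin{equation}
X^\alpha_{[a,b]}\otimes X^\beta_{[a,b]} = \sum_{\sigma\in \mathrm{Sh}(\alpha,\beta)} \pi_{\sigma^{\ast}}\bigl(X^{\alpha+\beta}_{[a,b]}\bigr)
\end{equation}
for the appropriate permutation $\sigma^\ast$ (either $\sigma$ or its inverse). Applying $f\otimes g$ and using $(f\otimes g)(x\otimes y)=f(x)g(y)$ then gives the claim, once the permutation convention is matched with the definition of $f\dsqcup g$. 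For piecewise linear paths the integrals are Lebesgue integrals of products of derivatives, so Fubini and the null measure of the diagonals make this rigorous. Alternatively, the whole identity can be checked by induction on $\alpha+\beta$ using the integration-by-parts formula $\mathrm{d}(AB)=\mathrm{d}A\,B + A\,\mathrm{d}B$. This works because $t\mapsto X^\alpha_{[a,t]}$ satisfies $\mathrm{d}X^{\alpha}_{[a,t]} = X^{\alpha-1}_{[a,t]}\otimes \mathrm{d}X_t$, and the shuffles split according to which of the two words contributes the last letter. The induction route is the one I would adopt, because it avoids measure-theoretic issues with Young integrals.

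To extend to general $X$ with $p<2$, approximate $X$ by its piecewise linear interpolants $X^{(n)}$ on refining partitions. These interpolants converge to $X$ in $p'$-variation for any $p<p'<2$, with uniformly bounded $p$-variation. By continuity of the Young integral, and hence of the signature levels (as in the estimates behind Theorem \ref{thm:factorial-decay}), $X^{(n),k}_{[a,b]}\to X^k_{[a,b]}$ in $V^{\otimes k}$ for each $k$. Both sides of the identity are continuous in these levels, because $f$, $g$ and $f\dsqcup g$ are bounded and the $\pi_\sigma$ are isometries by condition 1.

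The main obstacle is this approximation and continuity step: justifying that the $k$-th signature level is continuous in $p'$-variation for Banach-valued paths. If the inductive integration-by-parts argument is carried out directly for Young integrals, the step is bypassed. That direct argument needs a Young product rule, namely that for $A,B$ of finite $p$-variation with $p<2$,
\begin{equation}
A_tB_t - A_aB_a=\int_a^t \mathrm{d}A\,B+\int_a^t A\,\mathrm{d}B.
\end{equation}
This follows from the Riemann-sum definition in Theorem \ref{thm:young_integral}: the cross term $\sum \Delta A\,\Delta B$ vanishes in the limit since $\frac1p+\frac1p>1$. I would try this route first.
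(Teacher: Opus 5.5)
The paper gives no proof of this theorem. It is quoted from Ree and from Lyons--Caruana--L\'evy and used as a black box, so there is no in-paper argument to match, and your proposal stands as a correct, self-contained proof.

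With the paper's conventions, the permutation in your simplex decomposition is simply $\sigma^\ast=\sigma$. On the piece where $u_i=w_{\sigma(i)}$ and $v_j=w_{\sigma(\alpha+j)}$, the integrand is exactly $\pi_\sigma(\mathrm{d}X_{w_1}\otimes\cdots\otimes\mathrm{d}X_{w_{\alpha+\beta}})$, so
\[
X^\alpha_{[a,b]}\otimes X^\beta_{[a,b]}=\sum_{\sigma\in\mathrm{Sh}(\alpha,\beta)}\pi_\sigma\bigl(X^{\alpha+\beta}_{[a,b]}\bigr)\quad\text{in } V^{\otimes(\alpha+\beta)}.
\]
Applying the functional $f\otimes g$ supplied by condition 3 then gives the statement for all bounded $f,g$ at once. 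Your preliminary reduction to functionals `acting on simple tensors' is therefore unnecessary. In the inductive route, the same identity, proved as an identity of paths in $t$, splits according to whether $\sigma(\alpha)=\alpha+\beta$ or $\sigma(\alpha+\beta)=\alpha+\beta$. This is exactly your last-letter recursion.

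The two routes buy different things.

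\emph{Simplex decomposition.} This makes the combinatorics transparent. However, Fubini is only directly available for piecewise linear paths, so you pay with an approximation step: convergence of the interpolants in $p'$-variation, plus continuity of each signature level. Both are standard, and the paper itself takes the second for granted when it asserts that $S_{[a,b]}$ is continuous for $p<2$, just before Corollary~\ref{cor:universality-on-paths}.

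\emph{Integration-by-parts induction.} This works directly at the Young level, but besides the product rule it silently uses associativity of Young integration. Replacing $\int_a^t \mathrm{d}X^\alpha_{[a,s]}\otimes X^\beta_{[a,s]}$ by $\int_a^t (X^{\alpha-1}_{[a,s]}\otimes\mathrm{d}X_s)\otimes X^\beta_{[a,s]}$ is the rule $\int Y\,\mathrm{d}\bigl(\int W\,\mathrm{d}X\bigr)=\int YW\,\mathrm{d}X$. This follows from the Young--Lo\`eve bound $\|X^\alpha_{[a,t]}-X^\alpha_{[a,s]}-X^{\alpha-1}_{[a,s]}\otimes(X_t-X_s)\|\lesssim\omega(s,t)^{2/p}$ for a control $\omega$, and it should be stated.

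\emph{Continuity of $X$.} Both routes use continuity of $X$. It is part of the standing setting of the cited references, but it does real work. The cross term satisfies
\[
\sum_i\|\Delta A_i\|\,\|\Delta B_i\|\le\max_i\bigl(\|\Delta A_i\|\,\|\Delta B_i\|\bigr)^{1-p/2}\,\|A\|_{p\text{-var}}^{p/2}\,\|B\|_{p\text{-var}}^{p/2},
\]
and the maximum tends to zero only by uniform continuity. So $\tfrac1p+\tfrac1p>1$ alone is not enough, and for paths with jumps the shuffle identity fails in general.
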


The shuffle product identity is the key algebraic result underlying the expressivity of linear functionals of the signature, since it shows that polynomial functions of the signature can be re-expressed as linear functions of the signature. 
This is the direct analogue of linear maps on the signature of the time path generating all polynomials in $t$. 
Moreover, although these functionals are linear on the tensor algebra, when composed with the signature they define highly non-linear functions of the underlying path. 
Furthermore, the shuffle product identity shows that these linear functionals form an algebra, placing them in the setting of the Stone--Weierstrass theorem.

\begin{theorem}[Stone–Weierstrass \citep{StoneWeierstrass}]
\label{th:SW}
    Let $(X,d)$ be a compact metric space and $A$ be a subalgebra of $C(X,\mathbb{R})$ which contains the constant functions. 
    Then $A$ is dense in $C(X,\mathbb{R})$ if and only if it separates the points. 
\end{theorem}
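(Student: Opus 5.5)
The statement to prove is the Stone--Weierstrass theorem: a unital subalgebra $A\subset C(X,\mathbb{R})$ on a compact metric space $X$ is dense if and only if it separates points. I would follow the classical lattice-theoretic argument.

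The ``only if'' direction is quick. Since $X$ is a metric space, for distinct $x\neq y$ the function $f(z)=d(z,x)$ is continuous with $f(x)=0\neq f(y)$. Now suppose $A$ is dense. Pick $g\in A$ with $\|g-f\|_\infty<d(x,y)/2$. Then $|g(x)|<d(x,y)/2$ and $|g(y)|>d(x,y)/2$, so $g(x)\neq g(y)$ and $A$ separates points.

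For the ``if'' direction, let $\bar A$ be the uniform closure of $A$. This is again a unital subalgebra, since sums and products of uniform limits are uniform limits of sums and products, using boundedness of functions on a compact set. The argument then has three steps.

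\textbf{Step 1: $\bar A$ is a lattice.} First show that $f\in\bar A$ implies $|f|\in\bar A$. Let $M=\|f\|_\infty$. By the Weierstrass Approximation Theorem (Theorem~\ref{thm:weierstrass}) on $[-M,M]$, choose polynomials $p_n$ with $\sup_{|s|\le M}\bigl|p_n(s)-|s|\bigr|\to0$. Then $p_n\circ f\in\bar A$, because $\bar A$ is a unital algebra, and $p_n\circ f\to|f|$ uniformly, so $|f|\in\bar A$. The identities $\max(f,g)=\tfrac12(f+g+|f-g|)$ and $\min(f,g)=\tfrac12(f+g-|f-g|)$ then show that $\bar A$ is closed under finite maxima and minima.

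\textbf{Step 2: two-point interpolation.} For $x\neq y$ and any reals $\alpha,\beta$, I want $h\in A$ with $h(x)=\alpha$ and $h(y)=\beta$. Take $g\in A$ with $g(x)\neq g(y)$ and set
\[
h=\alpha+(\beta-\alpha)\frac{g-g(x)}{g(y)-g(x)}.
\]
This uses the constants in $A$. When $x=y$, a constant function does the job.

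\textbf{Step 3: approximating a given $f$ by compactness.} Fix $f\in C(X,\mathbb{R})$, $\epsilon>0$ and a point $x$. For each $y$, Step 2 gives $h_{y}\in A$ with $h_{y}(x)=f(x)$ and $h_{y}(y)=f(y)$. The set $U_y=\{z: h_y(z)<f(z)+\epsilon\}$ is open and contains $y$. By compactness finitely many $U_{y_1},\dots,U_{y_k}$ cover $X$. Then $g_x=\min_i h_{y_i}\in\bar A$ by Step 1, with $g_x<f+\epsilon$ everywhere and $g_x(x)=f(x)$.

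Next, $V_x=\{z: g_x(z)>f(z)-\epsilon\}$ is an open neighbourhood of $x$, and finitely many $V_{x_1},\dots,V_{x_m}$ cover $X$. Set $g=\max_j g_{x_j}\in\bar A$. This satisfies $f-\epsilon<g<f+\epsilon$. Since $\epsilon$ was arbitrary and $\bar A$ is closed, $f\in\bar A$, which proves density.

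I expect Step 1, the lattice property, to be the main obstacle, since it is the only place where an external approximation result is needed. It reduces to uniformly approximating $|s|$ by polynomials, which is exactly Theorem~\ref{thm:weierstrass}, so it can be cited directly. Everything else is bookkeeping with the two compactness covers.
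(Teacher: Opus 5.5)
Your proof is correct and complete. The paper does not prove this theorem: it states it with a citation to Stone's 1948 article and uses it only in the sufficiency direction, in Theorem~\ref{th:density-on-signature}.

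What you give is the standard lattice-theoretic proof due to Stone. The closure of a unital algebra is a lattice, via polynomial approximation of $|s|$. You then use two-point interpolation and the double compactness argument. Every step checks out, including necessity via $z\mapsto d(z,x)$.

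Two small remarks on what your route buys.

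First, sufficiency never uses the metric, only compactness, so your argument proves it for any compact Hausdorff $X$. The metric enters only in the converse, to ensure that $C(X,\mathbb{R})$ itself separates points.

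Second, appealing to Theorem~\ref{thm:weierstrass} in Step~1 is not circular here, since the paper cites that theorem independently. If you instead wanted the one-variable Weierstrass theorem as a corollary of Stone--Weierstrass, you would need to approximate $|s|=M\sqrt{s^2/M^2}$ directly. One way is the polynomials $q_0=0$, $q_{n+1}(u)=q_n(u)+\tfrac12\bigl(u-q_n(u)^2\bigr)$, which increase monotonically and converge uniformly to $\sqrt{u}$ on $[0,1]$.
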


To apply Theorem~\ref{th:SW}, we first introduce the family of linear functionals on the tensor algebra obtained by lifting bounded linear functionals from each tensor level.

\begin{definition}
\label{def:coord-on-signature}
    For $\alpha\in\mathbb{N}_0$ and a bounded linear functional $f$ on $V^{\otimes \alpha}$, define $\ell^{(\alpha,f)}:T((V))\to\mathbb{R}$ by
    \begin{equation}
    \ell^{(\alpha,f)}(\mathbf{x}) = f(x^\alpha),
    \end{equation}
    where $\mathbf{x}=(x^0,x^1,\ldots)\in T((V))$.
\end{definition}

Given an interval $[s,t]\subset[a,b]$ and a path $X\in \mathcal{V}^p([a,b],V)$ with $p<2$, we view the corresponding function on paths as the pullback of an $\ell^{(\alpha,f)}$.

\begin{definition}
\label{def:pullback}
Let $[s,t]\subset[a,b]$, $\alpha\in\mathbb{N}_0$, and $f$ be a bounded linear functional on $V^{\otimes \alpha}$. Define $\phi^{(\alpha,f)}_{[s,t]} = \ell^{(\alpha,f)}\circ S_{[s,t]} \;:\; \mathcal{V}^p([a,b],V)\to\mathbb{R}$ by
\begin{equation}
    \phi^{(\alpha,f)}_{[s,t]}(X)= f\!\big(X^\alpha_{[s,t]}\big).
\end{equation}
\end{definition}

We now verify that these linear functionals satisfy the hypotheses of the Stone--Weierstrass theorem on compact subsets of signature space. 

\begin{lemma}
\label{lem:separate-signatures}
Let $\mathcal{K}_S\subset S(\mathcal{V}^p([a,b],V))$ be compact and $\mathcal{F}_{\alpha}$ be the space of bounded linear functionals on $V^{\otimes \alpha}$. The family
$\{\ell^{(\alpha,f)}|_{\mathcal{K}_S} : \alpha\in\mathbb{N}_0,\ f\in \mathcal{F}_\alpha\}$
separates the points of $\mathcal{K}_S$.
\end{lemma}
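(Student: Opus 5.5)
We need to show that for two distinct points $\mathbf{x}\neq\mathbf{y}$ in $\mathcal{K}_S$ there is some $\alpha\in\mathbb{N}_0$ and $f\in\mathcal{F}_\alpha$ with $\ell^{(\alpha,f)}(\mathbf{x})\neq\ell^{(\alpha,f)}(\mathbf{y})$. The plan is to reduce this to the Hahn--Banach theorem on a single tensor level; no signature-specific structure is needed beyond the fact that the elements of $\mathcal{K}_S$ are elements of $T((V))$.

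\begin{enumerate}
\item Write $\mathbf{x}=(x^0,x^1,\ldots)$ and $\mathbf{y}=(y^0,y^1,\ldots)$. Since $\mathbf{x}\neq\mathbf{y}$ as elements of $T((V))$, which are sequences indexed by level, there is a level $\alpha$ with $x^\alpha\neq y^\alpha$ in $V^{\otimes\alpha}$. In fact $\alpha\geq1$, because both signatures have zeroth level equal to $1$, but this does not matter.

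\item $V^{\otimes\alpha}$ is a normed (Banach) space under the chosen admissible norm; for $\alpha=0$ it is $\mathbb{R}$. Apply the Hahn--Banach theorem to the nonzero vector $z=x^\alpha-y^\alpha$. This gives a bounded linear functional $f$ on $V^{\otimes\alpha}$ with $\|f\|_{\operatorname{op}}=1$ and $f(z)=\|z\|_{V^{\otimes\alpha}}>0$. Concretely, define $f$ on $\mathrm{span}\{z\}$ by $f(tz)=t\|z\|$ and extend it norm-preservingly.

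\item By linearity, $\ell^{(\alpha,f)}(\mathbf{x})-\ell^{(\alpha,f)}(\mathbf{y})=f(x^\alpha)-f(y^\alpha)=f(z)\neq0$. Hence $\ell^{(\alpha,f)}|_{\mathcal{K}_S}$ separates $\mathbf{x}$ from $\mathbf{y}$.
\end{enumerate}

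Compactness of $\mathcal{K}_S$ plays no role here; it is needed only later, when Theorem~\ref{th:SW} is applied.

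There is no substantial obstacle. The only point needing care is that $V^{\otimes\alpha}$ is defined as a completion, so one should note that Hahn--Banach holds in any normed space over $\mathbb{R}$, and completeness is not required. In the finite-dimensional case $V=\mathbb{R}^d$, one can avoid Hahn--Banach entirely: take $f$ to be the coordinate functional $e_I^*$ for a word $I$ of length $\alpha$ on which $x^\alpha$ and $y^\alpha$ differ. Such a word exists by the identification of Section~\ref{sec:tens_alg_l2}, and coordinate functionals are bounded because every linear functional on a finite-dimensional space is bounded.
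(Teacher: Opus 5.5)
Your proposal is correct and matches the paper's proof: choose a tensor level at which $x^\alpha\neq y^\alpha$ and separate the two points there with a bounded functional given by Hahn--Banach. The paper takes the minimal such level, but that choice plays no role in the argument. Your extra remarks (that compactness is not used, and that coordinate functionals $e_I^*$ suffice when $V=\mathbb{R}^d$) are accurate.
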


\begin{proof}
If $\mathbf{x},\mathbf{y}\in\mathcal{K}_S$ are distinct, then there exists a minimal $n\ge 0$ with $x^n\neq y^n$ in $V^{\otimes n}$. By the Hahn–Banach Theorem, there exists $f\in \mathcal{F}_n$ with $f(x^n)\neq f(y^n)$, so $\ell^{(n,f)}(\mathbf{x})\neq \ell^{(n,f)}(\mathbf{y})$ \citep[Theorem 3.3]{Rudin1991}.
\end{proof}

\begin{lemma}
\label{prop:algebra-on-signature}
Let $\mathcal{K}_S\subset S(\mathcal{V}^p([a,b],V))$ be compact. The set of finite linear combinations of products of the restrictions $\ell^{(\alpha,f)}|_{\mathcal{K}_S}$ is a subalgebra of $C(\mathcal{K}_S,\mathbb{R})$ which contains the constants.
\end{lemma}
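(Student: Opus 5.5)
The plan is to establish three facts in turn: each $\ell^{(\alpha,f)}|_{\mathcal{K}_S}$ is continuous, so the set in question lies in $C(\mathcal{K}_S,\mathbb{R})$; the set is closed under the algebra operations; and it contains the constants. Closure under finite linear combinations and products is immediate from its definition as finite linear combinations of finite products of the generators. The substantive content is therefore continuity. Along the way I would also use Theorem~\ref{th:shuff} to show that the generating family is already closed under products, which identifies the algebra with the linear span of the $\ell^{(\alpha,f)}$.

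For the constants, take $\alpha=0$. Then $V^{\otimes 0}=\mathbb{R}$, and with $f=\mathrm{id}_{\mathbb{R}}$ we get $\ell^{(0,f)}(\mathbf{x})=x^0=1$ for every signature, since $S(\mathcal{V}^p([a,b],V))\subset\tilde{T}((V))$. Scaling $f$ then gives every constant function. For the product structure, fix $\mathbf{x}=S_{[a,b]}(X)\in\mathcal{K}_S$. Theorem~\ref{th:shuff} gives
\begin{equation}
\ell^{(\alpha,f)}(\mathbf{x})\,\ell^{(\beta,g)}(\mathbf{x}) = (f\dsqcup g)(x^{\alpha+\beta}) = \ell^{(\alpha+\beta,f\dsqcup g)}(\mathbf{x}).
\end{equation}
Here $f\dsqcup g$ is a bounded linear functional on $V^{\otimes(\alpha+\beta)}$. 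This follows from condition 3, which makes $f\otimes g$ bounded, together with condition 1, which makes each $\pi_\sigma$ an isometry, and the fact that the sum over shuffles is finite. Products of generators restricted to $\mathcal{K}_S$ are therefore again generators. Linear combinations are handled by linearity of $f\mapsto\ell^{(\alpha,f)}$. For different levels one simply keeps finite sums, and these are inside the stated set by definition.

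The step I expect to need care is continuity, because it depends on which topology $\mathcal{K}_S$ carries. I would take the topology induced from the tensor algebra, in which a sequence converges when each level $x^n$ converges in $\|\cdot\|_{V^{\otimes n}}$; in finite dimensions this is also the topology of $\ell^2(\mathbb{W}_d)$. Under either choice the projection $\mathbf{x}\mapsto x^\alpha$ is continuous. In the $\ell^2$ case this holds because $\|x^\alpha-y^\alpha\|$ is controlled by $\|\mathbf{x}-\mathbf{y}\|_{\ell^2}$ up to the constant from norm equivalence on the finite-dimensional $V^{\otimes\alpha}$. Since $f$ is bounded, $|\ell^{(\alpha,f)}(\mathbf{x})-\ell^{(\alpha,f)}(\mathbf{y})|\le\|f\|_{\mathrm{op}}\|x^\alpha-y^\alpha\|$, so each generator is continuous. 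Finite sums and products of continuous functions are continuous, and this completes the argument. I would state the chosen metric explicitly so that $\mathcal{K}_S$ is a compact metric space, as Theorem~\ref{th:SW} requires.
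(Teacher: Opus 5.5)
Your proposal is correct and follows the paper's proof. Constants come from $\alpha=0$, and the shuffle identity of Theorem~\ref{th:shuff} gives $\ell^{(\alpha,f)}\ell^{(\beta,g)}=\ell^{(\alpha+\beta,f\dsqcup g)}$ on $\mathcal{K}_S$. Your explicit continuity check is a useful addition that the paper leaves implicit.

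One small correction, which does not affect your argument: the levelwise topology is not the $\ell^2(\mathbb{W}_d)$ topology on the whole space. It is strictly weaker; for example, the element with $e_1^{\otimes n}$ at level $n$ and zeros elsewhere tends to $0$ levelwise but not in $\ell^2$. The two topologies agree only on $\ell^2$-compact sets. You only need continuity of $\mathbf{x}\mapsto x^\alpha$, and that holds under both.
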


\begin{proof}
    Constants are obtained from $\alpha=0$, since $V^{\otimes 0}=\mathbb{R}$ and $X^0_{[a,b]}=1$. Closure under pointwise multiplication on $\mathcal{K}_S$ follows from the shuffle product: if $\mathbf{s}=S_{[a,b]}(X)\in \mathcal{K}_S$, then by Theorem~\ref{th:shuff}
    \begin{equation}
        \ell^{(\alpha,f)}(\mathbf{s})\,\ell^{(\beta,g)}(\mathbf{s})
        = f\!\big(X^\alpha_{[a,b]}\big)\,g\!\big(X^\beta_{[a,b]}\big)
        = (f\dsqcup g)\!\big(X^{\alpha+\beta}_{[a,b]}\big)
        = \ell^{(\alpha+\beta,\,f\dsqcup g)}(\mathbf{s}).
    \end{equation}
\end{proof}

\begin{theorem}[Stone–Weierstrass on signature space \citep{levin2016learningpastpredictingstatistics}]
\label{th:density-on-signature}
    Let $\mathcal{K}_S\subset S(\mathcal{V}^p([a,b],V))$ be compact with $p<2$. Then the algebra generated by $\{\ell^{(\alpha,f)}|_{\mathcal{K}_S}\}$ is dense in $C(\mathcal{K}_S,\mathbb{R})$.
\end{theorem}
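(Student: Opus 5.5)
The proof is a direct application of Theorem~\ref{th:SW} with $X=\mathcal{K}_S$. The lemmas just established do most of the work, so the plan is mainly to check the hypotheses carefully.

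First, I will fix the metric structure on $\mathcal{K}_S$, since Theorem~\ref{th:SW} requires a compact metric space. Compactness of $\mathcal{K}_S$ is understood with respect to a topology on $S(\mathcal{V}^p([a,b],V))$. I will take the one induced from the tensor algebra by the metric
\begin{equation}
    d(\mathbf{x},\mathbf{y})=\sum_{n\geq0}2^{-n}\min\!\left(1,\|x^n-y^n\|_{V^{\otimes n}}\right).
\end{equation}
This is the product topology of the levels. In the finite-dimensional case one could equally use the $\ell^2(\mathbb{W}_d)$ norm via the inclusion noted after Theorem~\ref{thm:factorial-decay}.

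The one genuinely non-trivial check is that each $\ell^{(\alpha,f)}|_{\mathcal{K}_S}$ is continuous, so that the algebra really lies in $C(\mathcal{K}_S,\mathbb{R})$. For any topology at least as fine as levelwise convergence, the map $\mathbf{x}\mapsto x^\alpha$ is continuous. The functional $f$ is bounded, hence continuous, and composing the two gives continuity of $\ell^{(\alpha,f)}$. Products and finite linear combinations of continuous functions are continuous, so the whole algebra consists of continuous functions. If the intended topology is instead the $\ell^2$ one, then $\|x^\alpha-y^\alpha\|\le\|\mathbf{x}-\mathbf{y}\|_{\ell^2}$ up to norm equivalence on the finite-dimensional $V^{\otimes\alpha}$, and the same argument applies.

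With continuity in place, I will invoke the two lemmas. Lemma~\ref{prop:algebra-on-signature} gives that the algebra $A$ generated by the $\ell^{(\alpha,f)}|_{\mathcal{K}_S}$ is a subalgebra containing the constants. Via the shuffle identity, it also shows that $A$ equals the linear span of the $\ell^{(\alpha,f)}$ themselves. Lemma~\ref{lem:separate-signatures} gives that the generators, and hence $A$, separate the points of $\mathcal{K}_S$. The ``if'' direction of Theorem~\ref{th:SW} then yields that $A$ is dense in $C(\mathcal{K}_S,\mathbb{R})$ in the uniform norm.

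I expect the main obstacle to be the topological bookkeeping in the first two steps rather than the algebra. The statement leaves the topology implicit, and the argument needs a single topology in which $\mathcal{K}_S$ is compact and metrisable and every $\ell^{(\alpha,f)}$ is continuous. I will first settle on the levelwise-convergence metric above, because both properties are immediate there.
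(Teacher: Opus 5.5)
Your proposal is correct and matches the paper's proof: Lemma~\ref{prop:algebra-on-signature} gives a subalgebra containing the constants, Lemma~\ref{lem:separate-signatures} gives point separation, and Theorem~\ref{th:SW} then yields density. The only addition is that you fix the levelwise product metric explicitly and check that each $\ell^{(\alpha,f)}$ is continuous, both of which the paper leaves implicit. This choice loses no generality, because any finer topology in which $\mathcal{K}_S$ is compact must coincide with the product topology on $\mathcal{K}_S$, since a continuous bijection from a compact space onto a Hausdorff space is a homeomorphism.
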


\begin{proof}
    By Lemma~\ref{prop:algebra-on-signature}, we have a subalgebra of $C(\mathcal{K}_S,\mathbb{R})$ which contains the constants. By Lemma~\ref{lem:separate-signatures}, it separates points. Therefore, we can apply Stone–Weierstrass, Theorem~\ref{th:SW}.
\end{proof}

Theorem~\ref{th:density-on-signature} shows that, on any compact subset of signature space, functions of the signature can be approximated uniformly by finite linear combinations of the functionals $\ell^{(\alpha,f)}$. In other words, linear functionals of the signature are rich enough to approximate arbitrary continuous functions on compact subsets of signature space. However, our primary interest is in functions defined on paths rather than on their signatures. Therefore, we now transfer this approximation result from signature space to path space by pulling these functionals back along the signature map.
\\ \\
Let $\mathcal{K}\subset\mathcal{V}^p([a,b],V)$ be compact in a standard path topology, such as the $p$-variation topology, and let $\mathcal{K}_S=S_{[a,b]}(\mathcal{K})$. Since $S_{[a,b]}$ is continuous for $p<2$, the set $\mathcal{K}_S$ is compact.
\begin{corollary}[Universality on path space]
    \label{cor:universality-on-paths}
    Let $\mathcal{K}\subset\mathcal{V}^p([a,b],V)$ be compact with $p<2$. Define the equivalence relation
    \begin{equation}
        X \sim Y \iff S_{[a,b]}(X) = S_{[a,b]}(Y).
    \end{equation}
    Then:
    \begin{enumerate}
    \item The algebra generated by $\{\phi^{(\alpha,f)}_{[a,b]}|_{\mathcal{K}}\}$ is dense in the space
    \begin{equation}
        \{F \in C(\mathcal{K}, \mathbb{R}) \;|\; X \sim Y \implies F(X) = F(Y)\}.
    \end{equation}
    \item If $S_{[a,b]}$ is injective on $\mathcal{K}$, then the same algebra is dense in $C(\mathcal{K},\mathbb{R})$.
    \end{enumerate}
\end{corollary}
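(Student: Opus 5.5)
The plan is to transfer Theorem~\ref{th:density-on-signature} from $\mathcal{K}_S=S_{[a,b]}(\mathcal{K})$ back to $\mathcal{K}$ by pulling back along $S_{[a,b]}$. As noted just before the statement, $S_{[a,b]}$ is continuous for $p<2$, so $\mathcal{K}_S$ is compact and the theorem applies there. The key observation is that the algebra generated by $\{\phi^{(\alpha,f)}_{[a,b]}|_{\mathcal{K}}\}$ is exactly $\{G\circ S_{[a,b]} : G \in \mathcal{A}_S\}$. Here $\mathcal{A}_S$ is the algebra generated by $\{\ell^{(\alpha,f)}|_{\mathcal{K}_S}\}$. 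This holds because $\phi^{(\alpha,f)}_{[a,b]} = \ell^{(\alpha,f)}\circ S_{[a,b]}$ by Definition~\ref{def:pullback}, and composition with a fixed map commutes with sums, products and scalar multiples.

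For part 1, take $F\in C(\mathcal{K},\mathbb{R})$ that is constant on $\sim$-classes. Define $\tilde F:\mathcal{K}_S\to\mathbb{R}$ by $\tilde F(S_{[a,b]}(X)) = F(X)$; this is well defined precisely because of the invariance hypothesis. The main obstacle is showing $\tilde F$ is continuous, which I would do via a quotient/closed-map argument. $S_{[a,b]}:\mathcal{K}\to\mathcal{K}_S$ is a continuous surjection from a compact space onto a Hausdorff (metric) space, so it is a closed map and hence a quotient map. For a closed set $C\subset\mathbb{R}$, the set $\tilde F^{-1}(C) = S_{[a,b]}(F^{-1}(C))$ is the image of a closed, hence compact, subset of $\mathcal{K}$. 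It is therefore compact and closed in $\mathcal{K}_S$, which gives $\tilde F \in C(\mathcal{K}_S,\mathbb{R})$.

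Given $\epsilon>0$, Theorem~\ref{th:density-on-signature} then gives $G\in\mathcal{A}_S$ with $\sup_{\mathcal{K}_S}|\tilde F - G|<\epsilon$. Consequently
\begin{equation}
\sup_{X\in\mathcal{K}}|F(X) - G(S_{[a,b]}(X))| = \sup_{\mathbf{s}\in\mathcal{K}_S}|\tilde F(\mathbf{s})-G(\mathbf{s})|<\epsilon,
\end{equation}
and $G\circ S_{[a,b]}$ lies in the pulled-back algebra. We also need the algebra to lie inside the target space, and it does: every element has the form $G\circ S_{[a,b]}$, which is continuous and constant on $\sim$-classes.

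For part 2, if $S_{[a,b]}$ is injective on $\mathcal{K}$ then $\sim$ is equality on $\mathcal{K}$. Every $F\in C(\mathcal{K},\mathbb{R})$ then trivially satisfies the invariance condition, so part 1 gives density in all of $C(\mathcal{K},\mathbb{R})$. Alternatively, $S_{[a,b]}$ is then a homeomorphism $\mathcal{K}\to\mathcal{K}_S$, because a continuous bijection from a compact space onto a Hausdorff space is a homeomorphism, and one can take $\tilde F = F\circ S_{[a,b]}^{-1}$ directly.
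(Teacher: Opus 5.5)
Your proof is correct and follows the same route as the paper. You pull back the density from Theorem~\ref{th:density-on-signature} on $\mathcal{K}_S$ along $S_{[a,b]}$, and part 2 reduces to $S_{[a,b]}$ being a homeomorphism onto $\mathcal{K}_S$. Your closed-map argument for the continuity of $\tilde F$ just makes explicit the paper's one-line claim that the induced map $\bar S:\mathcal{K}/\!\sim\,\to\mathcal{K}_S$ is a homeomorphism, and your check that the pulled-back algebra lies in the invariant subspace is a detail the paper leaves implicit.
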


\begin{proof}
(1) The induced map $\bar S:\mathcal{K}/\!\sim \to \mathcal{K}_S$ is a homeomorphism. By Theorem~\ref{th:density-on-signature}, polynomials in $\{\ell^{(\alpha,f)}\}$ are dense on $\mathcal{K}_S$; pulling back along $\bar S$ yields density of polynomials in $\{\phi^{(\alpha,f)}_{[a,b]}\}$ on $\mathcal{K}/\!\sim$.

(2) If $S_{[a,b]}$ is injective on $\mathcal{K}$, then $\mathcal{K}\cong \mathcal{K}_S$ via $S_{[a,b]}$, and the conclusion follows directly from Theorem~\ref{th:density-on-signature}.
\end{proof}

Corollary~\ref{cor:universality-on-paths} shows that, on a general compact set of paths, linear functionals of the signature approximate exactly those continuous functionals that depend only on the signature. 
To strengthen this to universality for all continuous functionals on path space, it suffices to work in a setting where the signature map is injective. 
By the discussion earlier in this section, this can be achieved by restricting to paths with a common initial point, which removes translation invariance, and by augmenting with time, which removes reparametrisation and tree-like equivalence. 

\begin{lemma}[Injectivity via augmentation]
\label{lem:injectivity-time}
    Let $\mathcal{K}\subset \mathcal{V}^p_0([a,b],V)$ be compact. Then $S_{[a,b]}(\hat X)$ is injective on $\mathcal{K}$, and hence the algebra generated by $\{\phi^{(\alpha,f)}_{[a,b]}(\hat X)\}$ is dense in $C(\mathcal{K},\mathbb{R})$.
\end{lemma}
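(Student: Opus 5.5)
The plan is to reduce injectivity of $X\mapsto S_{[a,b]}(\hat X)$ on $\mathcal{K}$ to the uniqueness theorem for signatures modulo tree-like equivalence \citep{hambly2010uniqueness, BOEDIHARDJO2016720}. Density then follows from part~2 of Corollary~\ref{cor:universality-on-paths}. Before that, I check that the augmented family is admissible. If $X\in\mathcal{V}^p_0([a,b],V)$ with $p<2$, then $\hat X=(t,X_t)$ lies in $\mathcal{V}^p([a,b],\mathbb{R}\times V)$. This holds because the time channel has finite $1$-variation, hence finite $p$-variation, and $\|\hat X_t-\hat X_s\|^p\le 2^{p-1}(|t-s|^p+\|X_t-X_s\|^p)$ for a natural norm on $\mathbb{R}\times V$. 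The map $X\mapsto\hat X$ is continuous in $p$-variation, so $\hat{\mathcal{K}}=\{\hat X: X\in\mathcal{K}\}$ is compact. I then apply Corollary~\ref{cor:universality-on-paths} to $\hat{\mathcal{K}}$ and transfer back along the homeomorphism $X\mapsto \hat X$, whose inverse is projection onto the $V$ component.

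For injectivity, suppose $S_{[a,b]}(\hat X)=S_{[a,b]}(\hat Y)$ for $X,Y\in\mathcal{K}$. By \citet{BOEDIHARDJO2016720}, $\hat X$ and $\hat Y$ are tree-like equivalent. Equivalently, the concatenation $\hat X*\overleftarrow{\hat Y}$ is tree-like. The key observation is that a path with a strictly increasing coordinate has no tree-like pieces. Concretely, I would use that each tree-like equivalence class contains a unique tree-reduced representative, up to reparametrisation. A path $Z$ is tree-reduced if no subpath is tree-like, and any path whose time coordinate is strictly increasing is tree-reduced. The reason is that a tree-like subpath $Z|_{[s,u]}$ must return to its starting point, so $Z_s=Z_u$, which contradicts strict monotonicity of the first coordinate. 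Hence $\hat X$ and $\hat Y$ are both tree-reduced, so $\hat Y=\hat X\circ\alpha$ for some increasing homeomorphism $\alpha$ of $[a,b]$.

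The time coordinate forces $\alpha(t)=t$, so $\hat X=\hat Y$ and therefore $X=Y$. The common initial point $X_a=Y_a=0$ is used here: the signature only determines a path up to translation, and the reduced-representative uniqueness is likewise only up to translation. Fixing the starting point removes this ambiguity. The time channel is automatically anchored because both time channels start at $a$.

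The main obstacle is invoking the uniqueness result in the $p<2$, possibly infinite-dimensional Banach setting. The reduced-representative statement is clean in \citet{hambly2010uniqueness} for bounded variation in $\mathbb{R}^d$. For $\mathcal{V}^p$ with $1\le p<2$, the result of \citet{BOEDIHARDJO2016720} gives "signature trivial iff tree-like" in Banach spaces under the tensor norm conditions assumed in Section~\ref{sec:tens_alg}. Obtaining the reduced-path uniqueness corollary from it requires some care. If that corollary is unavailable in this generality, my fallback is a direct argument on $Z=\hat X*\overleftarrow{\hat Y}$. Since $Z$ is tree-like, there is a height function $h\ge0$ with $h$ vanishing at the endpoints and $\|Z_s-Z_u\|\le h(s)+h(u)-2\inf_{[s,u]}h$. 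I would use the monotone time coordinates of the two halves to show that $h$ pairs each time $t$ on the first half with the same time $t$ on the reversed second half. This would give $X_t=Y_t$ for all $t$.
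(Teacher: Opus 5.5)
Your proposal is correct and follows the same route as the paper. Equal signatures force tree-like equivalence by \citet{BOEDIHARDJO2016720}; a common initial point together with a strictly increasing time channel then rules out tree-like equivalence between distinct augmented paths, and part~2 of Corollary~\ref{cor:universality-on-paths} gives density. The paper cites \citet{levin2016learningpastpredictingstatistics} for the step you prove directly via tree-reduced representatives (or your height-function fallback), and it does not spell out the compactness of $\hat{\mathcal{K}}$, which you check.
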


\begin{proof}
    Since each $X\in\mathcal{K}$ begins at the same point and the time channel of $\hat{X}$ is strictly monotone, no two distinct time-augmented paths can be tree-like equivalent \citep{levin2016learningpastpredictingstatistics}. Therefore $S_{[a,b]}(\hat X)$ determines $X$ uniquely \citep{BOEDIHARDJO2016720,hambly2010uniqueness} and  Corollary~\ref{cor:universality-on-paths} can be applied.
\end{proof}

\begin{remark}
    The definition of a compact set $\mathcal{K}\subset\mathcal{V}^p([a,b], V)$ depends on the chosen path topology \citep{Bonnier2019DeepST}. 
    In most data-driven applications, it suffices to take $\mathcal{K}$ to be a finite, and hence compact, set.
\end{remark}

Lemma~\ref{lem:injectivity-time} is a desirable property when using the signature of a path as a feature set for linear regression \citep{levin2016learningpastpredictingstatistics}. 
However, this linearisation of non-linear functions leads to algebraic redundancy in the signature, as shown in Theorem~\ref{th:shuff}. 
For one-dimensional paths, every higher-order term is a monomial in the first-level term. 
For general paths, not all higher-order terms are determined by lower-order ones, but the shuffle product identity shows that certain linear functionals of the higher-order terms are. 
Thus part of the information appearing at higher tensor levels is merely a linear encoding of polynomial functions of the lower-order terms. 
The transformation which removes these algebraic redundancies is the logarithm.

\subsection{The Log-Signature}
\label{sec:log_sig}

We begin with the formal definitions of the logarithm and exponential on the tensor algebra.

\begin{definition}
    For $\mathbf{x}\in\tilde{T}((V))$, the logarithm is defined by 
    \begin{equation}
        \log(\mathbf{x}) = \log(1+\mathbf{t}) = \sum_{n=1}^\infty \frac{(-1)^{n-1}}{n}\mathbf{t}^{\otimes n},
    \end{equation}
    where $\mathbf{t}=(0,x^1,x^2,\ldots)$.
\end{definition}

Since $\mathbf{x}\in\tilde{T}((V))$, all non-zero elements of $\mathbf{t}^{\otimes n}$ have degree at least $n$. Therefore, the logarithm converges degree-wise.
\begin{definition}
    For $\mathbf{x}\in T((V))$ with $x^0=0$, the exponential is defined by
    \begin{equation}
        \exp(\mathbf{x}) = \sum_{n=0}^\infty \frac{\mathbf{x}^{\otimes n}}{n!}.
    \end{equation}
\end{definition}
Similarly to the logarithm, the exponential converges degree-wise. 
Furthermore, for $\mathbf{x}\in \tilde{T}((V))$,
\begin{equation}
    \exp(\log(\mathbf{x}))=\mathbf{x}
\end{equation}
and for $\mathbf{x}\in T((V))$ with $x^0=0$,
\begin{equation}
    \log(\exp(\mathbf{x}))=\mathbf{x}.
\end{equation}
The effect of the logarithm is already visible in one dimension.
If $V=\mathbb{R}$, then by \eqref{eq:one_d_sig_terms},
\begin{equation}
    S_{[a,b]}(X) = \left(1,X_b-X_a,\frac{(X_b-X_a)^2}{2!},\frac{(X_b-X_a)^3}{3!},\ldots\right) = \exp(X_b-X_a),
\end{equation}
and hence
\begin{equation}
    \log(S_{[a,b]}(X)) = \left(0,X_b-X_a,0,0,\ldots\right).
\end{equation}
In one dimension the signature records all monomials in the increment as separate tensor levels, whereas the log-signature retains only the increment itself. 
This makes the log-signature a more economical representation, but it also removes the linearisation property of the signature, since linear functionals of the signature can recover polynomials in $X_b-X_a$, whereas linear functionals of the log-signature cannot.
\\ \\
For general paths, the same idea first appears at second level. 
Substituting $S_{[a,b]}(X) = (1,X^1_{[a,b]},X^2_{[a,b]},\ldots)$ into the logarithm gives
\begin{equation}
    \log(S_{[a,b]}(X)) = \sum_{n=1}^\infty \frac{(-1)^{n-1}}{n}\left(0,X^1_{[a,b]},X^2_{[a,b]},\ldots\right)^{\otimes n},
\end{equation}
so at the first two levels,
\begin{equation}
    \log(S_{[a,b]}(X)) = \left(0,X^1_{[a,b]},X^2_{[a,b]} - \frac{1}{2}X^1_{[a,b]}\otimes X^1_{[a,b]},\ldots\right).
    \label{eq:logsig_second_level}
\end{equation}
The shuffle product identity shows exactly why this removes redundancy. 
If $f$ and $g$ are bounded linear functionals on $V$, then Theorem~\ref{th:shuff} gives
\begin{equation}
    f(X^1_{[a,b]})g(X^1_{[a,b]}) = (f \dsqcup g)(X^2_{[a,b]}) = (f \otimes g + g \otimes f)(X^2_{[a,b]}).
\end{equation}
On the other hand,
\begin{equation}
    (f \otimes g + g \otimes f)\left(\frac{1}{2}X^1_{[a,b]}\otimes X^1_{[a,b]}\right) = f(X^1_{[a,b]})g(X^1_{[a,b]}).
\end{equation}
Therefore
\begin{equation}
    (f \otimes g + g \otimes f)\left(X^2_{[a,b]} - \frac{1}{2}X^1_{[a,b]}\otimes X^1_{[a,b]}\right) = 0
\end{equation}
for every pair of bounded linear functionals $f$ and $g$ on $V$. 
So every second-level term detected by a shuffle product of first-level functionals vanishes after taking the logarithm. 
In other words, the part of the second level that was already determined by the first level has been removed.
\\ \\
This is particularly transparent when $V=\mathbb{R}^2$. 
Writing
\begin{equation}
    X^2_{[a,b]} = S^{(1,1)}_{[a,b]}e_1\otimes e_1 + S^{(1,2)}_{[a,b]}e_1\otimes e_2 + S^{(2,1)}_{[a,b]}e_2\otimes e_1 + S^{(2,2)}_{[a,b]}e_2\otimes e_2,
\end{equation}
the shuffle product identity gives
\begin{equation}
    S^{(1,1)}_{[a,b]} = \frac{1}{2}\big(S^{(1)}_{[a,b]}\big)^2,
\end{equation}
\begin{equation}
    S^{(2,2)}_{[a,b]} = \frac{1}{2}\big(S^{(2)}_{[a,b]}\big)^2,
\end{equation}
and
\begin{equation}
    S^{(1,2)}_{[a,b]} + S^{(2,1)}_{[a,b]} = S^{(1)}_{[a,b]}S^{(2)}_{[a,b]}.
\end{equation}
Substituting these identities into \eqref{eq:logsig_second_level} shows that the coefficients of $e_1\otimes e_1$ and $e_2\otimes e_2$ become
\begin{equation}
    S^{(1,1)}_{[a,b]} - \frac{1}{2}\big(S^{(1)}_{[a,b]}\big)^2 = 0
\end{equation}
and
\begin{equation}
    S^{(2,2)}_{[a,b]} - \frac{1}{2}\big(S^{(2)}_{[a,b]}\big)^2 = 0.
\end{equation}
Moreover, the sum of the coefficients of $e_1\otimes e_2$ and $e_2\otimes e_1$ becomes
\begin{equation}
    \left(S^{(1,2)}_{[a,b]} - \frac{1}{2}S^{(1)}_{[a,b]}S^{(2)}_{[a,b]}\right) + \left(S^{(2,1)}_{[a,b]} - \frac{1}{2}S^{(2)}_{[a,b]}S^{(1)}_{[a,b]}\right) = 0.
\end{equation}
So all the second-level terms determined by the first level are cancelled by the logarithm. 
The only part that remains is
\begin{equation}
    \frac{1}{2}\left(S^{(1,2)}_{[a,b]} - S^{(2,1)}_{[a,b]}\right)(e_1\otimes e_2 - e_2\otimes e_1),
\end{equation}
which is exactly the signed area term. 
Thus, at second level, the log-signature removes the algebraic redundancy in the signature and retains only the genuinely new information.
\\ \\
\citet{Chen1957IntegrationOP} formalised this for all levels of the signature by showing that signatures are group-like, and hence that their logarithms lie in the free Lie algebra.
As noted in Section \ref{sec:tens_alg}, $\tilde{T}((V))$ is a group. 
We define
\begin{equation}
    \mathcal{G} = \left\{\mathbf{x}\in \tilde{T}((V)) \mid \log(\mathbf{x}) \in \mathfrak{L}((V)) \right\}.
\end{equation}
This is a subgroup of $\tilde{T}((V))$, and its elements are called group-like \citep{reutenauer1993free}. 
Chen's result shows that, for bounded-variation paths,
\begin{equation}
    S_{[a,b]}(X) \in \mathcal{G},
\end{equation}
or equivalently,
\begin{equation}
    \log(S_{[a,b]}(X)) \in \mathfrak{L}((V)).
\end{equation}
Thus the signature may be represented equivalently by its logarithm in $\mathfrak{L}((V))$, a strict subspace of $T((V))$. 
This shows that no information is lost by passing to the log-signature. 
The trade-off is that the linearisation property is no longer present, because the higher-order tensor terms which encode polynomial functions of the lower-order terms have been removed.
\\ \\
If a path is linear on an interval with increment $v\in V$, then its log-signature on that interval is simply $(0,v,0,0,\ldots)$, and so its signature is
\begin{equation}
    \exp(0,v,0,0,\ldots) = \left(1,v,\frac{v^{\otimes 2}}{2!},\frac{v^{\otimes 3}}{3!},\ldots\right).
\end{equation}
Hence, if a path is represented by a piecewise linear interpolation with increments $v_1,\ldots,v_m$, then the signature on each subinterval is obtained from the corresponding exponential, and the full signature is computed as
\begin{equation}
    S_{[a,b]}(X) = \exp(0,v_1,0,0,\ldots)\otimes\cdots\otimes\exp(0,v_m,0,0,\ldots).
\end{equation}
Thus signatures of discrete data can be computed efficiently by working interval by interval and then combining the results with Chen's identity.
\\ \\
As the signature and log-signature are infinite-dimensional, it can be useful to consider the truncated signature
\begin{equation}
S_{[a,b]}^N(X)=(1,X^1_{[a,b]},\ldots,X^N_{[a,b]}) \in \tilde{T}^N(V),
\end{equation}
or truncated log-signature, 
\begin{equation}
\log(S_{[a,b]}^{N}(X)) \in \mathfrak{L}^N(V).
\end{equation}
As discussed in Section \ref{sec:intro_sig}, the truncated signature has natural applications in machine learning, as it is a vector embedding of a multivariate path which captures interactions between the channels of the path.
Reviews of the machine learning applications can be found in \citep{SigPrimer}, \citep{cass2024lecturenotesroughpaths}, and \citep{SigPrimer2}. 
\\ \\
So far, the signature has been treated as a representation of a path built from iterated integrals. 
However, it also admits a dynamical interpretation, since it arises as the solution of a linear CDE driven by the path. 
More generally, CDEs provide a framework for describing how a state evolves in response to a driving signal, with the signature appearing naturally when one studies their flow over an interval. 
The continuous-time machine learning models developed in this thesis adopt this perspective by treating time series data as the driving signal and a CDE's evolving state as the hidden state.

\section{Controlled Differential Equations}

This section introduces CDEs, states the basic well-posedness results, and then specialises to the linear case, where the solution can be written explicitly in terms of the signature. 
It then returns to the log-signature and shows how it leads to the Log-ODE method, an efficient and accurate method for approximating the solution of a CDE.

\subsection{Definition}
\label{sec:cde_def}
Let $V$ and $W$ be Banach spaces, $X: [a,b] \rightarrow V$ and $Y:  [a,b] \rightarrow W$ be continuous paths, and $f:W\rightarrow \mathbf{L}(V, W)$ be a continuous function. 
The vector field $f$ can equivalently be viewed as a linear map from $v\in V$ to a vector field on $W$ denoted $f(\cdot)v$. 
This second formulation will prove useful when defining the Log-ODE method.
Assume that $X$, $Y$, and $f$ are sufficiently regular for the integral 
\begin{equation}
    \int_{a}^tf(Y_s)\mathrm{d}X_s
\end{equation}
to be defined for all $t\in [a,b]$ in the Young sense \citep{Young1936AnIO}. 
The path $Y$ is said to obey a CDE if 
\begin{equation}
\label{eq:cde}
    Y_t = Y_{a} + \int_{a}^tf(Y_s)\mathrm{d}X_s,
\end{equation}
for $t\in [a,b]$, where $Y_{a}\in W$ is the initial condition and $X$ is the control \citep{lyons2007differential}.
\\ \\
Just as the signature of time is the solution to an infinite set of differential equations \eqref{eq:flat_exp_ode}, the signature of a path $X\in\mathcal{V}^p([a,b], V)$ for $p<2$ is the solution to a tensor CDE,
\begin{equation}
    \mathrm{d}S_{[a,s]}(X) = S_{[a,s]}(X) \otimes \mathrm{d}X_s,
\end{equation}
where $S_{[a,s]}(X):[a,b]\rightarrow T((V))$ and $S_{[a,a]}(X)=(1,0,0,\ldots)\in T((V))$ \citep{salvi2021signature}. 
When $V$ is finite-dimensional with dimension $d$, we may identify $T((V))$ with the space of real-valued functions on $\mathbb{W}_d$, as described in Section~\ref{sec:tens_alg_l2}, and hence represent elements of $T((V))$ as vectors in $\mathbb{R}^{\mathbb{W}_d}$. 
In this representation,
\begin{equation}
    \mathrm{d}S_{[a,s]}(X) = AS_{[a,s]}(X)\mathrm{d}X_s,
\end{equation}
where $A\in\mathbf{L}(\mathbb{R}^{\mathbb{W}_d}, \mathbf{L}(\mathbb{R}^{d}, \mathbb{R}^{\mathbb{W}_d}))$. This can be equivalently represented by a set of $d$ matrices $A^i\in\mathbb{R}^{\mathbb{W}_d \times \mathbb{W}_d}$ satisfying
\begin{equation}
\label{eq:sig_cde}
    AS_{[a,s]}(X)\mathrm{d}X_s = \sum_{i=1}^dA^iS_{[a,s]}(X)\mathrm{d}X^i_s.
\end{equation}
Ordering the elements of $\mathbb{W}_d$ first by size and then alphabetically, the $(j,k)^{\text{th}}$ element of $A^i$ satisfies
\begin{equation}
    A^i_{jk} = \begin{cases}
        1, \quad &j=1+i+d(k-1), \\
        0, &\text{otherwise}.
    \end{cases}
\end{equation}
This representation is the linear algebraic analogue of the recursive structure of the iterated integrals defining the signature. 
As seen in \eqref{eq:signature_element}, the differential of the signature term corresponding to the word $(i_1,\ldots,i_n)$ is determined by the lower-order term indexed by $(i_1,\ldots,i_{n-1})$ together with the increment in the final channel. 
The matrices $A^i$ encode exactly this operation on the word basis by appending the letter $i$ to a word, thereby mapping each basis element to the corresponding basis element at the next tensor level.

\subsection{Existence and Uniqueness}

Whether a CDE is well posed depends on the regularity of both the control path $X$ and the vector field $f$. 
Rougher driving signals require stronger regularity of the vector field in order for the dynamics to remain well defined. 
We measure the regularity of a path by the smallest $p\geq 1$ for which its $p$-variation is finite, and the regularity of a vector field by the largest $\gamma>0$ such that it is $\mathrm{Lip}(\gamma)$. 
We defer the formal definition of $\mathrm{Lip}(\gamma)$ functions to Chapter~\ref{chap:lipgamma}, which is devoted to them.

\begin{theorem}[CDE existence  \citep{Lyons1994DIFFERENTIALED, lyons2007differential}]
\label{th:existence}
    Let $1\leq p <2$ and $p-1 < \gamma$.
    If $W$ is finite-dimensional, $X$ has finite $p-$variation, and $f$ is $\text{Lip}(\gamma)$, then (\ref{eq:cde}) admits a solution for every $Y_{a} \in W$.
\end{theorem}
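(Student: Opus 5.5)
The plan is to build a solution as a limit of Euler-type approximations and extract a convergent subsequence by compactness. This is the standard Peano-type route: for $\gamma<1$ the vector field is only Hölder, not Lipschitz, so uniqueness is not available and a fixed-point argument is out of reach. Concretely, $\mathrm{Lip}(\gamma)$ with $\gamma\le1$ means that $f$ is bounded and $\gamma$-Hölder continuous. For $\gamma>1$, $f$ is in particular bounded and Lipschitz, which is stronger. In either case I will only use that $f$ is bounded, say by $M$, and $\min(\gamma,1)$-Hölder. The hypothesis $p-1<\gamma$ will be what makes the Young integrals $\int f(Y_s)\,\mathrm{d}X_s$ well defined: if $Y$ has finite $p$-variation, then $f(Y)$ has finite $p/\min(\gamma,1)$-variation, and $\min(\gamma,1)/p+1/p>1$ exactly when $\gamma>p-1$ (using $p<2$ in the case $\gamma\ge1$). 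So Theorem \ref{thm:young_integral} applies.

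\textbf{Step 1 (approximations).} For each partition $\mathcal{D}_n$ with mesh tending to zero, define $Y^n$ by the Euler recursion
\begin{equation}
Y^n_{t_{j+1}}=Y^n_{t_j}+f(Y^n_{t_j})(X_{t_{j+1}}-X_{t_j}),
\end{equation}
and extend it as $Y^n_t=Y^n_{t_j}+f(Y^n_{t_j})(X_t-X_{t_j})$ on $[t_j,t_{j+1}]$.

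\textbf{Step 2 (uniform a priori bounds).} I will show that the $Y^n$ are uniformly bounded in $q$-variation for some $q$ with $p\le q$ and still $\min(\gamma,1)/q+1/p>1$, uniformly in $n$ and on small intervals. The tool is the Young--Loève estimate applied to the discrete sums. On an interval $[s,t]$ with small $p$-variation control $\omega(s,t)=\|X\|_{p,[s,t]}^p$, I expect a bound of the form
\begin{equation}
\|Y^n_t-Y^n_s-f(Y^n_s)(X_t-X_s)\|\le C\,\|f(Y^n)\|_{q/\min(\gamma,1),[s,t]}\,\omega(s,t)^{1/p},
\end{equation}
which yields $\|Y^n\|_{p,[s,t]}\le 2M\omega(s,t)^{1/p}$ once $\omega(s,t)$ is below a threshold independent of $n$. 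Because $\omega$ is superadditive and continuous, the interval $[a,b]$ splits into finitely many such pieces. This gives uniform $p$-variation bounds, and equicontinuity, since $X$ is continuous with finite $p$-variation.

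\textbf{Step 3 (compactness and passage to the limit).} Since $W$ is finite-dimensional and the $Y^n$ are uniformly bounded and equicontinuous, Arzelà--Ascoli gives a uniformly convergent subsequence $Y^n\to Y$. By interpolation, uniform convergence together with uniformly bounded $p$-variation gives convergence in $q$-variation for every $q>p$. This is where finite dimensionality is used. Then $f(Y^n)\to f(Y)$ in $q/\min(\gamma,1)$-variation, and continuity of the Young integral in both arguments (via the Young--Loève estimate) lets me pass to the limit. The Euler scheme equals $\int f(Y^n_{\lfloor s\rfloor})\,\mathrm{d}X_s$ with the integrand frozen at partition points, and that frozen integrand also converges to $f(Y)$. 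So the limit satisfies \eqref{eq:cde}.

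\textbf{Main obstacle.} The main difficulty is Step 2: closing the a priori estimate uniformly in the mesh. It requires a discrete Young--Loève (sewing) inequality for the Euler sums, with constants that do not depend on $n$. The bound on $\|f(Y^n)\|$ feeds back into the bound on $\|Y^n\|$, and this circularity has to be resolved by the smallness of $\omega(s,t)$ on each local interval before patching the intervals together.
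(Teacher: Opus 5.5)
The paper gives no argument of its own here (it only cites Lyons (1994) and Lyons--Caruana--L\'evy (2007)). Your route is the standard compactness, Peano-type proof behind that result, so it is correct in outline: Euler approximations with mesh-uniform Young--Lo\`eve a priori bounds closed by smallness of $\omega$, Arzel\`a--Ascoli using $\dim W<\infty$, interpolation, and continuity of the Young integral.

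The one step to justify differently is ``$f(Y^n)\to f(Y)$ in $q/\alpha$-variation'' with $\alpha=\min(\gamma,1)$. As written, you present it as following from $Y^n\to Y$ in $q$-variation. That implication does not hold in general, because composition with a merely H\"older $f$ is not continuous in variation norms. The claim itself is true, however, by interpolation from two facts you already have:
\begin{itemize}
\item $\sup_t\|f(Y^n_t)-f(Y_t)\|\to 0$, which follows from uniform convergence of $Y^n$ and uniform continuity of $f$;
\item the uniform bound $\|f(Y^n)\|_{p/\alpha,[a,b]}\le M\|Y^n\|_{p,[a,b]}^{\alpha}$.
\end{itemize}
Since $q/\alpha>p/\alpha$, these two facts give convergence in $q/\alpha$-variation. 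The same reasoning applies verbatim to the frozen integrands $f(Y^n_{\lfloor s\rfloor})$.
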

\begin{theorem}[CDE uniqueness  \citep{Lyons1994DIFFERENTIALED, lyons2007differential}]
\label{th:uniqueness}
    Let $1\leq p <2$ and $p<\gamma$.
    If $X$ has finite $p-$variation and $f$ is $\text{Lip}(\gamma)$, then (\ref{eq:cde}) admits a unique solution for every $Y_{a} \in W$.
\end{theorem}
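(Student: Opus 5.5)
The plan is a Picard-iteration/contraction argument built on Young integration, Theorem \ref{thm:young_integral}. We work in $\mathcal{V}^p([a,b],W)$ with the norm $\|Y\|_\infty + \|Y\|_{p,[a,b]}$ and define the map
\begin{equation}
    \Phi(Y)_t = Y_a + \int_a^t f(Y_s)\,\mathrm{d}X_s .
\end{equation}
The first step is to show that $\Phi$ maps $\mathcal{V}^p([a,b],W)$ into itself. Since $p<\gamma$, we have $\gamma>1$, so $f$ is $\mathrm{Lip}(\gamma)$ with $\gamma > 1$: it is bounded, differentiable, and its derivative is $(\gamma-1)$-Hölder. In particular $f$ is Lipschitz, so $s\mapsto f(Y_s)$ has finite $p$-variation. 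Because $\tfrac{1}{p}+\tfrac{1}{p}>1$ for $p<2$, Theorem \ref{thm:young_integral} applies. The Young–Loève estimate
\begin{equation}
    \Big\|\int_s^t f(Y_r)\mathrm{d}X_r - f(Y_s)(X_t-X_s)\Big\| \le C\,\|f(Y)\|_{p,[s,t]}\|X\|_{p,[s,t]}
\end{equation}
then gives $\|\Phi(Y)\|_{p,[s,t]} \le C\,\|X\|_{p,[s,t]}\big(\|f\|_\infty + \|f\|_{\mathrm{Lip}}\|Y\|_{p,[s,t]}\big)$.

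The second step is an invariant ball. We choose intervals $[s,t]$ on which $\|X\|_{p,[s,t]}$ is small. This is possible by continuity of the control $\omega(s,t)=\|X\|_{p,[s,t]}^p$, which is superadditive. On such intervals $\Phi$ maps a ball $\{\|Y\|_{p,[s,t]}\le 1\}$ into itself.

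The third step, the contraction estimate, is the main obstacle and is exactly where $\gamma>p$ (rather than $\gamma>p-1$) is used. For two paths $Y,Z$ we must control the $p$-variation of $f(Y)-f(Z)$ in terms of $\|Y-Z\|$. Write
\begin{equation}
    f(Y_t)-f(Z_t) = \int_0^1 Df\big(Z_t+\theta(Y_t-Z_t)\big)\,\mathrm{d}\theta\,(Y_t-Z_t).
\end{equation}
Taking increments produces a term bounded by $\|Df\|_\infty \|Y-Z\|_{p}$ and a term involving $\|Df(\cdot)-Df(\cdot)\|$, which is controlled using $(\gamma-1)$-Hölder continuity by $\big(\|Y\|_p+\|Z\|_p\big)^{\gamma-1}\|Y-Z\|_\infty$. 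The latter is only $q$-variation control with $q=p/(\gamma-1)$. Young's condition $\tfrac1q+\tfrac1p>1$ then becomes $\tfrac{\gamma-1}{p}+\tfrac1p>1$, i.e.\ $\gamma>p$, which is precisely the hypothesis. If this direct estimate becomes too messy, I would instead follow Lyons' original route: treat the difference $(Y,Z)$ as a solution of a coupled CDE in $W\oplus W$ and estimate it directly.

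The final step is patching. With the contraction estimate, $\Phi$ is a contraction on each small interval, so the Banach fixed-point theorem gives a unique local solution. Since $\omega$ is continuous and superadditive, $[a,b]$ is covered by finitely many such intervals, with a number depending only on $\|X\|_{p,[a,b]}$ and $\|f\|_{\mathrm{Lip}(\gamma)}$. Concatenating the local solutions gives global existence. Uniqueness follows because any two solutions agree on the first interval by the contraction property, and inductively on every subsequent interval. No compactness is used, so infinite-dimensional $W$ is allowed, in contrast with Theorem \ref{th:existence}.
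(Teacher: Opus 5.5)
Your argument is correct and is essentially the standard proof in the works the paper cites in place of a proof \citep{Lyons1994DIFFERENTIALED, lyons2007differential}: a Picard contraction in $p$-variation on intervals where $\|X\|_{p,[s,t]}$ is small. There, $\gamma>p$ enters exactly through the $\frac{p}{\gamma-1}$-variation bound on $f(Y)-f(Z)$ together with Young's condition $\frac{\gamma-1}{p}+\frac{1}{p}>1$. Two details should be made explicit. First, for the uniqueness step, every solution $Z$ lies in your invariant ball on such intervals: it has finite $p$-variation as a Young integral, and $Z=\Phi(Z)$ with $C\|X\|_{p,[s,t]}\|f\|_{\mathrm{Lip}}\leq\frac{1}{2}$ gives $\|Z\|_{p,[s,t]}\leq 2C\|X\|_{p,[s,t]}\|f\|_\infty\leq 1$. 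Second, when $\gamma>2$, first replace $\gamma$ by $\min(\gamma,2)>p$ using Lemma~\ref{lem:lipgamma_nest}.
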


Theorems~\ref{th:existence} and \ref{th:uniqueness} extend the classical existence and uniqueness theory for ordinary differential equations to controls with unbounded variation but finite $p$-variation for $p<2$. 
They show that as the driving path becomes less regular, stronger assumptions on $f$ are needed to ensure that the CDE remains well posed.

\subsection{Solutions of Linear CDEs}
\label{sec:lin_cde}

Linear CDEs play a central role in this thesis for two reasons. 
First, when the driving path is piecewise linear, the flow on each linear segment can be found explicitly and independently. 
This will later underpin the scalable, parallel-in-time computational methods developed for Linear NCDEs. 
Second, their solutions can be expressed in terms of the signature of the driving path, allowing their expressivity to be understood through this explicit solution and the expressivity of the signature.
\\ \\
For a driving path $X:[a,b]\to V$ and output path $Y:[a,b]\to W$, a linear CDE takes the form
\begin{equation}
\label{eq:lin_cde}
    Y_t = Y_{a} + \int_{a}^tAY_s\mathrm{d}X_s,
\end{equation}
where $A\in \mathbf{L}(W, \mathbf{L}(V, W))$. 
Alternatively,
\begin{equation}
\label{eq:lin_cde_alt}
    Y_t = Y_{a} + \int_{a}^tA(\mathrm{d}X_s)Y_s,
\end{equation}
with $A\in \mathbf{L}(V, \mathbf{L}(W, W))$. 
As shown in \eqref{eq:sig_cde}, the signature of a path is an example of a linear CDE. 
However, the existence and uniqueness Theorems \ref{th:existence} and \ref{th:uniqueness} do not apply, as the value of the vector fields is unbounded, and hence they are not $\mathrm{Lip}(\gamma)$. 
Corresponding existence and uniqueness results for linear vector fields are given as Theorems 3.7 and 3.8 in \citep{friz_victoir_2010}. 
Here, we present the explicit form of the unique solution when the norm on each $V^{\otimes n}$ is the projective tensor norm. 

\begin{definition}[Projective tensor norm \citep{BOEDIHARDJO2016720}]
\label{def:proj_tens_norm}
    Let $V$ be a Banach space. The projective tensor norm on $V^{\otimes n}$ is defined by
    \begin{equation}
        \|v\| = \inf\left\{\sum_k \|v^1_k\|\cdots\|v^n_k\| : v=\sum_k v^1_k \otimes \cdots \otimes v^n_k \right\}
    \end{equation}
    for $v\in V^{\otimes n}$ and $v^i_k \in V$.
\end{definition}

Equipping each $V^{\otimes n}$ with the projective tensor norm yields a family of tensor norms satisfying the assumptions outlined in Section \ref{sec:tens_alg}. 

\begin{definition}[Operator tensor powers]
Let $V,W$ be Banach spaces. For $n\ge 1$ and $A\in\mathbf{L}(V,\mathbf{L}(W,W))$, let $A^{\otimes n}\in \mathbf{L}\big(V^{\otimes n},\,\mathbf{L}(W,W)\big)$ be the operator defined on simple tensors by
\begin{equation}
A^{\otimes n}(v^1\otimes\cdots\otimes v^n) = A(v^n)\cdots A(v^1),
\end{equation}
and extended by linearity and continuity to $V^{\otimes n}$. Set $A^{\otimes 0}=I_W$.
\end{definition}

Since $V^{\otimes n}$ is equipped with the projective tensor norm, $\|A^{\otimes n}(v)\|\le \|A\|^{n}\|v\|$ for $v\in V^{\otimes n}$.

\begin{theorem}[Linear CDE solution]
\label{thm:linear_cde_solution}
Let $V,W$ be Banach spaces, $A\in\mathbf{L}(V,\mathbf{L}(W,W))$, $X\in\mathcal{V}^p([a,b],V)$ with $p<2$, and $V^{\otimes n}$ be equipped with the projective tensor norm. Define the evolution operator
\begin{equation}
\label{eq:Phi_series}
    \Phi_{t,s} = \sum_{n=0}^{\infty} A^{\otimes n}\big(X^n_{[s,t]}\big) \in \mathbf{L}(W,W).
\end{equation}
Then the series \eqref{eq:Phi_series} converges absolutely in operator norm and the unique solution to \eqref{eq:lin_cde} is
\begin{equation}
\label{eq:banach_solution}
    Y_t = \Phi_{t,a}\,Y_{a}.
\end{equation}
\end{theorem}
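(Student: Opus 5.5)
Three steps, in order: convergence of the series, the claim that $Y_t=\Phi_{t,a}Y_a$ satisfies \eqref{eq:lin_cde} (in its equivalent form \eqref{eq:lin_cde_alt}), and uniqueness.

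\textbf{Convergence.} The remark following the definition of operator tensor powers gives $\|A^{\otimes n}(v)\|\le\|A\|^n\|v\|$ under the projective norm. Combining this with the factorial decay of Theorem~\ref{thm:factorial-decay} yields
\begin{equation}
    \big\|A^{\otimes n}(X^n_{[s,t]})\big\|\le C_p\frac{\big(\|A\|\,\|X\|_{p\text{-var};[s,t]}\big)^n}{\Gamma(1+n/p)}.
\end{equation}
The right-hand side is summable for any $p<2$, since $\Gamma(1+n/p)$ grows superexponentially in $n$. Hence \eqref{eq:Phi_series} converges absolutely in $\mathbf{L}(W,W)$, uniformly in $a\le s\le t\le b$, and the partial sums are uniformly bounded.

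\textbf{$Y$ solves the equation.} I would work with the truncated series $Y^{(N)}_t=\sum_{n=0}^N A^{\otimes n}(X^n_{[a,t]})Y_a$. The signature solves $\mathrm{d}S_{[a,t]}=S_{[a,t]}\otimes \mathrm{d}X_t$, so at level $n$ we have $X^n_{[a,t]}=\int_a^t X^{n-1}_{[a,s]}\otimes \mathrm{d}X_s$ as a Young integral. Applying the bounded linear map $v\mapsto A^{\otimes n}(v)Y_a$, which commutes with Young integrals as limits of Riemann sums, and using $A^{\otimes n}(u\otimes x)=A(x)A^{\otimes(n-1)}(u)$, gives
\begin{equation}
    A^{\otimes n}(X^n_{[a,t]})Y_a=\int_a^t A(\mathrm{d}X_s)\,A^{\otimes (n-1)}(X^{n-1}_{[a,s]})Y_a .
\end{equation}
Summing over $n\le N$ yields $Y^{(N)}_t=Y_a+\int_a^t A(\mathrm{d}X_s)Y^{(N-1)}_s$.

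To pass to the limit $N\to\infty$ I need convergence of the Young integrals, which requires control of $Y^{(N)}$ in $q$-variation with $1/p+1/q>1$. I would take $q=p$, which is valid since $p<2$. Chen's identity (Theorem~\ref{thm:chen}) gives
\begin{equation}
    \Phi_{t,a}-\Phi_{s,a}=(\Phi_{t,s}-I)\Phi_{s,a},
\end{equation}
where the order of composition follows from the convention $A^{\otimes n}(v^1\otimes\cdots\otimes v^n)=A(v^n)\cdots A(v^1)$. The same factorial estimate gives $\|\Phi_{t,s}-I\|\le K\|X\|_{p\text{-var};[s,t]}$ once $\|X\|_{p\text{-var};[s,t]}$ is bounded. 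Together these give uniform $p$-variation bounds on the partial sums and $p$-variation convergence of $Y^{(N)}$ to $Y$. Continuity of the Young integral in $p$-variation then lets me pass to the limit in both terms.

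\textbf{Uniqueness.} Let $Z$ be the difference of two solutions. Then $Z$ solves the same linear equation with $Z_a=0$. Iterating the integral equation $n$ times expresses $Z_t$ as an $n$-fold iterated Young integral whose innermost integrand is $Z$. By the Young–Loève estimate this is bounded by $C^n\|Z\|_\infty\,\omega(a,t)^{n/p}/\Gamma(1+n/p)$ up to constants, which tends to zero as $n\to\infty$. Alternatively, I would cite Theorem 3.8 of \citep{friz_victoir_2010}.

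The step I expect to be the main obstacle is the justification in the second step: bounding each level's iterated integral in $p$-variation uniformly enough to interchange the infinite sum with the Young integral. The remaining steps are bookkeeping once the factorial decay of Theorem~\ref{thm:factorial-decay} and Chen's identity are in hand.
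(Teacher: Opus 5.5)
Your proposal is correct and follows essentially the same route as the paper: you use factorial decay together with $\|A^{\otimes n}(v)\|\le\|A\|^n\|v\|$ for absolute convergence, and you get the truncation/Picard identity by pushing $A^{\otimes n}$ through the Young integral. You then use Chen's identity with factorial decay to upgrade uniform convergence of the partial sums to $p$-variation convergence, and pass to the limit by continuity of the Young integral. Your uniqueness step, iterating the equation satisfied by the difference $Z$, is if anything more careful than the paper's appeal to the Picard expansion with zero initial datum; however, the factorial bound there should involve $\|Z\|_{p\text{-var}}$ and not only $\|Z\|_\infty$, since Young integrals require control of the integrand's variation.
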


\begin{proof}
This proof has four key steps.
\begin{enumerate}
    \item Apply Picard iteration and show that the iterates correspond to truncated versions of the sum in \eqref{eq:Phi_series}.
    \item Use the factorial decay of the signature to show that the Picard iterates converge.
    \item Upgrade the convergence of the Picard iterates from uniform to $p$-variation and pass the limit through the Young integral to obtain $Y_t = \Phi_{t,a}\,Y_{a}$.
    \item Demonstrate uniqueness by reapplying the Picard iteration with initial datum zero.
\end{enumerate}

\paragraph{1) Picard iteration}

Define $Y^{(0)}_t=Y_a$ and, recursively,
\begin{equation}
    Y^{(m+1)}_t = Y_a + \int_a^t A(\mathrm{d}X_s)Y^{(m)}_s,
\end{equation}
for $m\geq0$.
We claim that for every $m\ge 0$ and all $t\in[a,b]$,
\begin{equation}\label{eq:finite_neumann}
Y^{(m)}_t \;=\; \sum_{n=0}^{m} A^{\otimes n}\!\big(X^n_{[a,t]}\big)\,Y_a.
\end{equation}
This is proved by induction on $m$. For $m=0$, the identity is $Y^{(0)}_t= A^{\otimes 0}(X^0_{[a,t]})Y_a=Y_a$.
Assume \eqref{eq:finite_neumann} holds for some $m\ge 0$. Then,
\begin{equation}
\begin{aligned}
Y^{(m+1)}_t
&= Y_a + \int_a^t A(\mathrm{d}X_s)Y^{(m)}_s, \\
&= Y_a + \sum_{n=0}^{m}\int_a^t A(\mathrm{d}X_s)A^{\otimes n}(X^n_{[a,s]})Y_a,
\end{aligned}
\end{equation}
and,
\begin{equation}
\int_a^t A(\mathrm{d}X_s)A^{\otimes n}\big(X^n_{[a,s]}\big)
= A^{\otimes n+1}\left(\int_a^t X^n_{[a,s]} \otimes \mathrm{d}X_s\right) = A^{\otimes (n+1)}\big(X^{n+1}_{[a,t]}\big).
\end{equation}
Hence,
\begin{equation}
Y^{(m+1)}_t \;=\; Y_a + \sum_{n=0}^{m} A^{\otimes (n+1)}\!\big(X^{n+1}_{[a,t]}\big)Y_a
\;=\; \sum_{n=0}^{m+1} A^{\otimes n}\!\big(X^{n}_{[a,t]}\big)Y_a,
\end{equation}
which completes the induction and proves \eqref{eq:finite_neumann}. 

\paragraph{2) Convergence of the Picard iterates}

By Theorem \ref{thm:factorial-decay}, there exists a finite constant $C_p>1$ depending only on $p$ such that for $n\geq 1$
\begin{equation}\label{eq:young_factorial}
\big\|X^n_{[s,t]}\big\| \le C_p\frac{\|X\|_{p\text{-var};[s,t]}^{n}}{\Gamma(1+\frac{n}{p})}.
\end{equation}
Since $\|A^{\otimes n}(v)\|\le \|A\|^{n}\|v\|$ for $v\in V^{\otimes n}$,
\begin{equation}
\big\|A^{\otimes n}\big(X^n_{[a,t]}\big)\big\|
\le \|A\|^{n}C_p\frac{\|X\|_{p\text{-var};[a,t]}^{\,n}}{\Gamma(1+\frac{n}{p})},
\end{equation}
for $n\geq 1$.
Thus the series
\begin{equation}
\Phi_{t,a}=\sum_{n=0}^{\infty} A^{\otimes n}\!\big(X^n_{[a,t]}\big)\ \in\ \mathbf{L}(W,W)
\end{equation}
converges absolutely and uniformly for $t\in[a,b]$. In particular, $t\mapsto\Phi_{t,a}$ is continuous and
\begin{equation}
Y_t=\Phi_{t,a}Y_a=\sum_{n=0}^{\infty} A^{\otimes n}\!\big(X^n_{[a,t]}\big)Y_a
\end{equation}
is well-defined and continuous. 

\paragraph{3) Convergence in $p$-variation}

By \eqref{eq:finite_neumann}, we have $Y^{(m)}\to Y$ uniformly on $[a,b]$. 
Furthermore, by Chen's identity, for $u<v$,
\begin{equation}
    \begin{aligned}
        (Y-Y^{(m)})_v-(Y-Y^{(m)})_u
        =
        \sum_{\substack{k\geq 0,\ l\geq 1 \\ k+l\geq m+1}}
        A^{\otimes l}\!\big(X^l_{[u,v]}\big)
        A^{\otimes k}\!\big(X^k_{[a,u]}\big)Y_a .
    \end{aligned}
\end{equation}
The factorial decay estimate \eqref{eq:young_factorial} implies that this double series is absolutely convergent and that there exists a sequence $\eta_m\rightarrow 0$ such that
\begin{equation}
    \|(Y-Y^{(m)})_v-(Y-Y^{(m)})_u\|
    \leq \|Y_a\|\eta_m\|X\|_{p\text{-var};[u,v]}.
\end{equation}
Therefore, for any partition $\{t_i\}_{i=0}^N$ of $[s,t]$,
\begin{equation}
    \begin{aligned}
        \sum_{i=0}^{N-1}
        \|(Y-Y^{(m)})_{t_{i+1}}-(Y-Y^{(m)})_{t_i}\|^p
        &\leq
        \|Y_a\|^p\eta_m^p
        \sum_{i=0}^{N-1}
        \|X\|_{p\text{-var};[t_i,t_{i+1}]}^p,
        \\
        &\leq
        \|Y_a\|^p\eta_m^p
        \|X\|_{p\text{-var};[s,t]}^p.
    \end{aligned}
\end{equation}
Taking the supremum over partitions gives
\begin{equation}
    \|Y-Y^{(m)}\|_{p\text{-var};[s,t]}
    \leq
    \|Y_a\|\eta_m\|X\|_{p\text{-var};[s,t]}
    \longrightarrow 0.
\end{equation}
Therefore, $Y^{(m)}$ converges to $Y$ in the $p-$variation sense.
Furthermore, by the bilinear continuity of the Young integral \citep[Theorem 1.16]{lyons2007differential},
\begin{equation}
\int_a^t AY^{(m)}_s\mathrm{d}X_s \longrightarrow \int_a^t AY_s\mathrm{d}X_s
\end{equation}
for all $t\in[a,b]$. 
Passing to the limit in $Y^{(m+1)}_t = Y_a + \int_a^t A\,Y^{(m)}_s\,\mathrm{d}X_s$ yields
\begin{equation}\label{eq:mild_solution}
Y_t \;=\; Y_a \;+\; \int_a^t A\,Y_s\,\mathrm{d}X_s,
\end{equation}
so $Y$ solves \eqref{eq:lin_cde}.

\paragraph{4) Uniqueness}

Suppose $\tilde Y$ is another solution of \eqref{eq:lin_cde} with the same initial condition $Y_a$.
Then $Z=Y-\tilde Y$ satisfies $Z_a=0$ and
\begin{equation}
Z_t = \int_a^t AZ_s\mathrm{d}X_s.
\end{equation}
Applying the Picard expansion of Step~1 with initial datum $0$ gives $Z_t\equiv 0$ on $[a,b]$.
Thus the solution is unique.
\end{proof}

Theorem~\ref{thm:linear_cde_solution} shows that the solution of a linear CDE is determined by the signature of the driving path. 
More generally, the same holds for any CDE satisfying the uniqueness conditions of Theorem~\ref{th:uniqueness}  \citep{hambly2010uniqueness, BOEDIHARDJO2016720}. 
This motivates the construction of numerical methods based on the signature, and the Log-ODE method is one such approach.

\section{The Log-ODE Method}
\label{sec:logode}

The Log-ODE method is an efficient and accurate method for approximating the solution of a CDE \citep{CASTELL199513, Boutaib2013}. On a given interval, it replaces the original CDE by an autonomous ODE whose vector field is constructed from the log-signature of the driving path over that interval together with differential information about the vector field.
This construction is motivated by the fact that the log-signature $\log(S_{[a,b]}(X))$ belongs to the Lie series $\mathfrak{L}((V))$.

\subsection{The Free Lie Algebra}

\begin{definition}[Free Lie Algebra \citep{reutenauer1993free}]
\label{def:freeliealgebra}
    Let $X$ be a non-empty set, $L_0$ and $L$ be Lie algebras, and $\phi:X\rightarrow L_0$ be a map. 
    The Lie algebra $L_0$ is said to be the free Lie algebra generated by $X$ if for all maps $f:X\rightarrow L$, there exists a unique Lie algebra homomorphism $g:L_0\rightarrow L$ such that $g \circ \phi = f$.
\end{definition}
\begin{theorem}
    \label{thm:liealgbera}
    The space of elements of $\mathfrak{L}((V))$ with only finitely many nonzero terms, denoted $\mathfrak{L}(V)$, is the free Lie algebra generated by $V$ \citep{reutenauer1993free}.
\end{theorem}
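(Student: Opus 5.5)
The statement identifies $\mathfrak{L}(V)$, the Lie polynomials inside $T(V)$, with the free Lie algebra on $V$. I would prove it by realising the abstract free Lie algebra and comparing it to $\mathfrak{L}(V)$ through the tensor algebra.

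First, I would construct an abstract free Lie algebra $\mathcal{F}(V)$ on the set $V$, namely the free nonassociative algebra on $V$ modulo antisymmetry and the Jacobi identity. To match the definition of $\mathfrak{L}(V)$, I would also impose linearity in the generators, so the setting is the free Lie algebra on the vector space $V$. By construction $\mathcal{F}(V)$ satisfies Definition \ref{def:freeliealgebra}, and it is unique up to isomorphism by the usual universal-property argument. By the universal property, the inclusion $V\hookrightarrow T(V)$, viewed as a map into the Lie algebra $(T(V),[\cdot,\cdot])$, extends to a unique Lie homomorphism $\psi:\mathcal{F}(V)\to T(V)$. The image of $\psi$ is the Lie subalgebra of $T(V)$ generated by $V$.

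Next I would show that this image equals $\mathfrak{L}(V)$. The recursive definition $L_1=V$, $L_{i+1}=[V,L_i]$ gives right-normed brackets. So I need the standard lemma that every bracket of elements of $V$, of any shape, is a linear combination of right-normed brackets of the same degree. I would prove this by induction on degree using the Jacobi identity in the form
\[
[[a,b],c]=[a,[b,c]]-[b,[a,c]].
\]
This yields $\operatorname{im}\psi=\bigoplus_i L_i=\mathfrak{L}(V)$.

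The remaining, and main, step is injectivity of $\psi$. Then $\psi$ is an isomorphism $\mathcal{F}(V)\cong\mathfrak{L}(V)$ carrying the generator map to the inclusion of $V$, so $\mathfrak{L}(V)$ inherits the universal property. For injectivity I would use the universal enveloping algebra. $T(V)$ is the free associative algebra on $V$ and satisfies the universal property of $U(\mathcal{F}(V))$: any Lie homomorphism $\mathcal{F}(V)\to A^{\mathrm{Lie}}$ is determined by its values on $V$, and these extend uniquely to an algebra map from $T(V)$. Hence $U(\mathcal{F}(V))\cong T(V)$, with $\psi$ corresponding to the canonical map $\mathcal{F}(V)\to U(\mathcal{F}(V))$. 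The Poincaré–Birkhoff–Witt theorem says this canonical map is injective, which finishes the proof.

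If I wanted to avoid PBW, I would instead use a Hall or Lyndon basis. The plan would be to exhibit a spanning set of $\mathcal{F}(V)$, such as bracketed Lyndon words, and show its images in $T(V)$ are linearly independent by inspecting their leading words. That argument is the real combinatorial obstacle, so invoking PBW via \citep{reutenauer1993free} is the cleaner route. In the Banach setting only the algebraic tensor product is needed here, since $\mathfrak{L}(V)$ consists of finite sums and completions play no role.
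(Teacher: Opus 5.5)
Your proposal is correct. The paper gives no argument of its own beyond citing Reutenauer, and what you wrote is the standard proof behind that citation: build the free Lie algebra abstractly, recognise the free associative algebra on $V$ as its universal enveloping algebra, and get injectivity of the comparison map from the Poincar\'e--Birkhoff--Witt theorem.

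Your two adaptations are the right ones for this setting. First, the universal property has to be read for linear maps $V\to L$: a Lie homomorphism out of $\mathfrak{L}(V)$ restricts to a linear map on $V$, so it cannot extend a non-linear set map, which a literal reading of Definition~\ref{def:freeliealgebra} would demand. Second, your right-normed spanning lemma is exactly what identifies the paper's $\bigoplus_i L_i$, built from $L_{i+1}=[V,L_i]$, with the Lie subalgebra of $T(V)$ generated by $V$.
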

For each $N\geq 1$, the truncated log-signature $\log(S^N_{[a,b]}(X))\in\mathfrak{L}^N(V)$ may be viewed as an element of $\mathfrak{L}(V)$ by setting all terms of degree greater than $N$ equal to $0$. 
Furthermore, as will be discussed in detail in Section \ref{sec:smooth_vec_fields}, the smooth vector fields on $W$ form a Lie algebra with Lie bracket defined pointwise by
\begin{equation}
    \label{eq:smooth_lie_bracket}
    [f,g](p)=Dg(p)[f(p)]-Df(p)[g(p)]
\end{equation}
for smooth $f,g:W\rightarrow W$ and all $p\in W$, where $Df$ is the Fr\'echet derivative of $f$.
For example, if $W=\mathbb{R}^d$, $f(p)=Bp$, and $g(p)=Cp$ with $B,C\in\mathbb{R}^{d\times d}$, then
\begin{equation}
    [f,g](p) = (CB-BC)p = -[B,C]\,p,
\end{equation}
where the left hand side Lie bracket is for vector fields and the right hand side Lie bracket is for matrices.
Therefore, assuming that $f(\cdot)v$ is a smooth vector field on $W$ for all $v\in V$, Theorem~\ref{thm:liealgbera} implies there exists a Lie algebra homomorphism $\bar{f}$ from $\mathfrak{L}(V)$ to the smooth vector fields on $W$.
Since the map $\bar{f}$ is a Lie algebra homomorphism, it is determined recursively by  
\begin{equation}
\label{eq:logoderecurs1}
\bar{f}(\cdot)v = f(\cdot)v, \;\; v\in V
\end{equation}
and 
\begin{equation}
\label{eq:logoderecurs2}
\bar{f}(\cdot)[v_1,v_2] = [\bar{f}(\cdot)v_1,\bar{f}(\cdot)v_2]
\end{equation}
for $v_1,v_2\in\mathfrak{L}(V)$ \citep{Lyons2014}. 
For each $N\geq 1$,
\begin{equation}
    \label{eq:log_ode_vf}
    F^N(\cdot) = \bar{f}\left(\cdot\right)\log(S^N(X)_{[a,b]})
\end{equation}
is a well-defined vector field on $W$. 
To illustrate the structure, consider a finite-dimensional driving path $X_t=(X^1_t,\ldots,X^d_t)$.
Then $F^2(\cdot)$ can be written explicitly as
\begin{equation}
\label{eq:F2_logsig}
\begin{aligned}
    F^2(\cdot)
    &= \bar{f}(\cdot)\log(S^2(X)_{[a,b]}), \\
    &= \bar{f}(\cdot)\left(X^1_{[a,b]} + X^2_{[a,b]} - \frac{1}{2}X^1_{[a,b]}\otimes X^1_{[a,b]}\right), \\
    &= \bar{f}(\cdot)\left(\sum_{i=1}^d S^{(i)}_{[a,b]}e_i
    + \frac{1}{2}\sum_{1\leq i<j\leq d}
    \left(S^{(i,j)}_{[a,b]}-S^{(j,i)}_{[a,b]}\right)[e_i,e_j]\right), \\
    &= \sum_{i=1}^d S^{(i)}_{[a,b]}f(\cdot)e_i
    + \frac{1}{2}\sum_{1\leq i<j\leq d}
    \left(S^{(i,j)}_{[a,b]}-S^{(j,i)}_{[a,b]}\right)
    [f(\cdot)e_i,f(\cdot)e_j],
\end{aligned}
\end{equation}
where the second line uses the explicit form of the log-signature up to level $2$ derived in \eqref{eq:logsig_second_level}, the third line rewrites the first and second levels in the basis $(e_1,\ldots,e_d)$ of $V$, and the final line applies the Lie algebra homomorphism $\bar f$.

\subsection{The Log-ODE Approximation}

Given appropriate conditions on the convergence of $F^N$ as $N\to\infty$, the Log-ODE method recovers the solution of the CDE
\begin{equation}
    Y_b = Y_{a} + \int_{a}^bf(Y_s)\mathrm{d}X_s
\end{equation}
by solving the autonomous ODE
\begin{equation}
    \bar{Y}_1 = \bar{Y}_0 + \int_0^1 \bar{f}\left(\bar{Y}_s\right)\log(S(X)_{[a,b]})\mathrm{d}s
\end{equation}
with initial condition $\bar Y_0 = Y_a$, and then setting $Y_b = \bar Y_1$.
This is represented schematically in Figure \ref{fig:logsig}.
\begin{figure}
\centering
\includegraphics[width=0.8\textwidth]{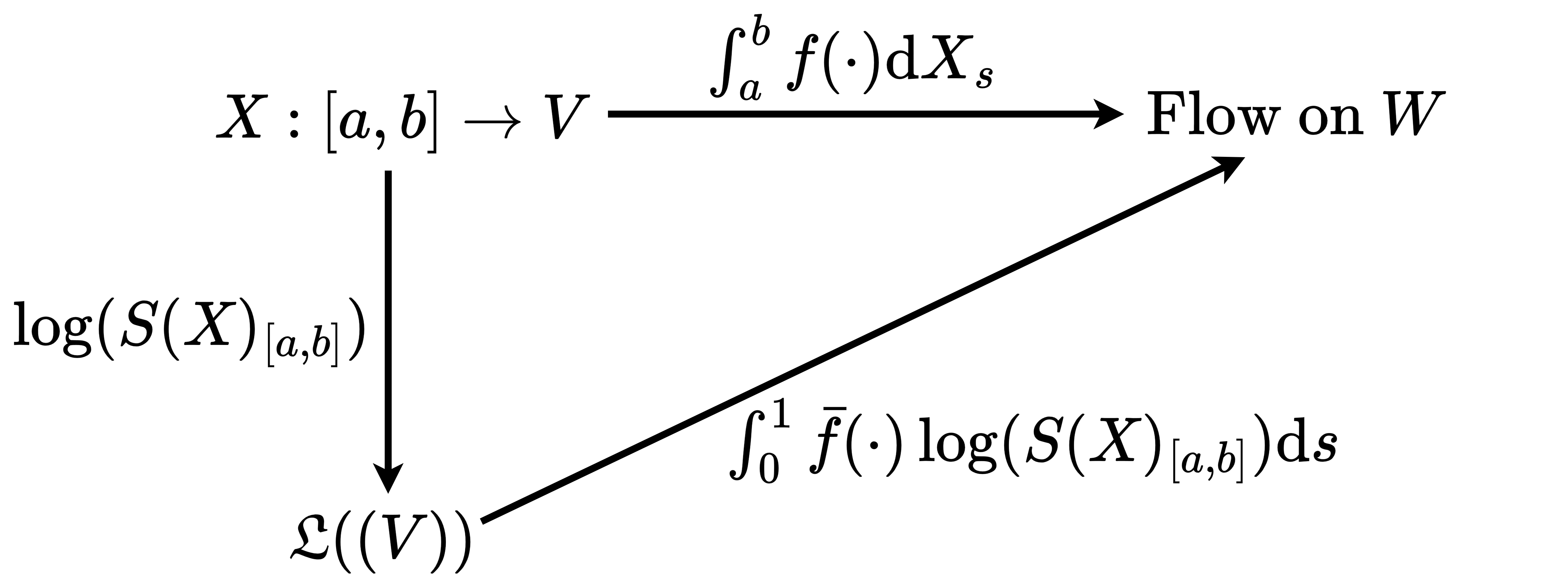}
\caption{A schematic diagram of the Log-ODE method, where $\int_a^b f(\cdot)\mathrm{d}X$ is the CDE to be solved, the log-signature of $X$ lives in $\mathfrak{L}((V))$, and the Log-ODE method constructs an approximating autonomous ODE using the unique Lie algebra homomorphism extending $f$ from $V$ to the free Lie algebra generated by $V$, denoted $\bar f$.}
\label{fig:logsig}
\end{figure}
\\ \\
When $X$ is continuously differentiable and $f(\cdot)v$ is a linear vector field for all $v\in V$, this construction coincides with the classical Magnus expansion \citep{Magnus1954}. 
To see this, let $\dot X_t$ denote the time derivative of $X_t$ and let $A\in\mathbf{L}(V,\mathbf{L}(W,W))$ be such that
\begin{equation}
    f(y)v = A(v)y
\end{equation}
for all $y\in W$ and $v\in V$. 
Then the CDE may be written as
\begin{equation}
    \frac{\mathrm{d}Y_t}{\mathrm{d}t} = A(\dot X_t)Y_t,
\end{equation}
and the classical Magnus expansion expresses the solution in the form
\begin{equation}
    \label{eq:magnus_expansion}
    Y_b = \exp\!\left(\Omega^{(1)}_{b,a} + \Omega^{(2)}_{b,a} + \cdots \right)Y_a,
\end{equation}
Equivalently, if $\bar Y:[0,1]\to W$ solves
\begin{equation}
    \bar Y_1 = \bar Y_0 + \int_0^1 \left(\Omega^{(1)}_{b,a} + \Omega^{(2)}_{b,a} + \cdots \right)\bar Y_s\,\mathrm{d}s,
\end{equation}
with $\bar Y_0 = Y_a$, then $\bar Y_1 = Y_b$.
Since $A$ is linear and $X^1_{[a,b]} = \int_a^b \dot X_{u_1}\,\mathrm{d}u_1$, the first term in the infinite series of \eqref{eq:magnus_expansion} is simply
\begin{equation}
    \Omega^{(1)}_{b,a}
    = \int_a^b A(\dot X_{u_1})\,\mathrm{d}u_1
    = A(X^1_{[a,b]}) 
    = \sum_{i=1}^d S^{(i)}_{[a,b]}A(e_i),
\end{equation}
aligning with the first term of \eqref{eq:F2_logsig}.
The second term is
\begin{equation}
\label{eq:omega2_bracket}
\begin{aligned}
    \Omega^{(2)}_{b,a}
    &= \frac{1}{2}\int_a^b\int_a^{u_1}
    [A(\dot X_{u_1}),A(\dot X_{u_2})]\,\mathrm{d}u_2\,\mathrm{d}u_1 \\
    &= \frac{1}{2}\int_a^b\int_a^{u_1}
    \left(A(\dot X_{u_1})A(\dot X_{u_2}) - A(\dot X_{u_2})A(\dot X_{u_1})\right)\,\mathrm{d}u_2\,\mathrm{d}u_1.
\end{aligned}
\end{equation}
Expanding $A(\dot X_{u_1})$ and $A(\dot X_{u_2})$ in the basis $(e_1,\ldots,e_d)$ and using bilinearity of the Lie bracket gives
\begin{equation}
\begin{aligned}
    \Omega^{(2)}_{b,a}
    &= \frac{1}{2}\int_a^b\int_a^{u_1}
    \left(A(\dot X_{u_1})A(\dot X_{u_2}) - A(\dot X_{u_2})A(\dot X_{u_1})\right)\,\mathrm{d}u_2\,\mathrm{d}u_1 \\
    &= \frac{1}{2}\sum_{i,j=1}^d \int_a^b\int_a^{u_1}
    \dot X^i_{u_1}\dot X^j_{u_2}\,[A(e_i),A(e_j)]\,\mathrm{d}u_2\,\mathrm{d}u_1 \\
    &= \frac{1}{2}\sum_{i,j=1}^d S^{(j,i)}_{[a,b]}\,[A(e_i),A(e_j)].
\end{aligned}
\end{equation}
We now group together the terms indexed by $(i,j)$ and $(j,i)$. Since $[A(e_i),A(e_i)]=0$ and $[A(e_j),A(e_i)]=-[A(e_i),A(e_j)]$, this becomes
\begin{equation}
\begin{aligned}
    \Omega^{(2)}_{b,a}
    &= \frac{1}{2}\sum_{1\leq i<j\leq d}
    \left(S^{(i,j)}_{[a,b]} - S^{(j,i)}_{[a,b]}\right)[A(e_j),A(e_i)],
\end{aligned}
\end{equation}
which is exactly the second sum in \eqref{eq:F2_logsig}, since for the linear vector fields $f(\cdot)e_i(y)=A(e_i)y$,
\begin{equation}
    [f(\cdot)e_i,f(\cdot)e_j](y) = [A(e_j),A(e_i)]y,
\end{equation}
where the first Lie bracket is for vector fields and the second Lie bracket is for matrices. 
Hence, at depth $2$, the Log-ODE vector field reproduces exactly the first two terms of the Magnus expansion.
In fact, this agreement persists term by term at every order, with each term built from iterated integrals of the driving path and iterated Lie brackets of the matrices $A(v)$.
\\ \\
For finite-dimensional $V$ and $W$, a sufficient condition ensuring local convergence of the Magnus expansion is \citep{MoanNiesen2008},
\begin{equation}
    \int_a^b \|A(\dot X_s)\|_2 \mathrm{d}s < \pi.
\end{equation}
When $X$ is continuously differentiable and $f(\cdot)v$ is non-linear, the same construction yields the Chen-Strichartz series \citep{Strichartz1987}. 
For finite-dimensional $V$ and $W$, local convergence of Chen-Strichartz holds under an analytic growth condition on the non-linear vector fields \citep{Strichartz1987}. 
The approach was first extended to the rough-path setting by \citet{CASTELL199513}, where it is known as the Log-ODE method.
\\ \\
Assuming it converges, exponentiating the infinite series in \eqref{eq:magnus_expansion} recovers the exact flow, which agrees with the direct solution formula of Theorem~\ref{thm:linear_cde_solution},
\begin{equation}
    \Phi_{t,a} = \sum_{n=0}^{\infty} A^{\otimes n}\big(X^n_{[a,t]}\big).
\end{equation}
Truncating the signature expansion of the flow
\begin{equation}
    \label{eq:truncated_flow}
    \Phi^N_{t,a} = \sum_{n=0}^{N} A^{\otimes n}\big(X^n_{[a,t]}\big),
\end{equation}
gives an alternative approach to approximating the solution of a CDE.
This approach truncates an expansion of the flow, whereas the Log-ODE method truncates the flow in its logarithmic, or generator, representation.
Equivalently, the Log-ODE method may be interpreted as replacing the original driving path on each interval by a rough path whose truncated log-signature agrees with that of the original path up to degree $N$, and whose higher-order terms are zero. 
The approximation is then obtained by solving the same class of controlled system against this approximating path, rather than by truncating the flow map after it has been computed. 
This is often desirable, since it keeps the approximation within the class of solutions of the same CDE, better preserving any structural or physical properties encoded by the model.
\\ \\ 
Given a set of intervals $a=r_0<\cdots<r_m=b$, a depth$-N$ Log-ODE method approximates the solution of a CDE via
\begin{equation}
    \tilde{Y}_{r_{i+1}} = \tilde{Y}_{r_i} + \int_{r_i}^{r_{i+1}} \bar{f}\left(\tilde{Y}_s\right)\frac{\log(S^N(X)_{[r_i,r_{i+1}]})}{r_{i+1}-r_i}\mathrm{d}s,
\end{equation}
where $\tilde{Y}_a=Y_a$ and the integral's time has been rescaled to match that of the original CDE.
Since this approach truncates to the depth$-N$ log-signature, the vector field no longer needs to be smooth, but only $\text{Lip}(\gamma)$ for $\gamma > N-1$. 
Conversely, the smoothness of the vector field determines the highest truncation depth usable in the Log-ODE method. 
For a given set of intervals, this approximation is not guaranteed to converge as $N\rightarrow \infty$.
However, the error $\|Y_b-\tilde{Y}_b\|$ can be quantified, even for rough driving paths and infinite dimensional Banach spaces \citep{Boutaib2013}. 
A recent development has been the introduction of an algorithm which adaptively updates $N$ and $\{r_i\}_{i=0}^m$ \citep{bayer2023adaptive}. 

\section{Conclusion}

This chapter developed the mathematical framework used throughout this thesis for modelling continuously evolving data. 
It introduced CDEs as a way to describe dynamics driven by a continuous path, showed how the signature provides a graded representation of that path with strong algebraic and approximation properties, and explained how the log-signature removes the algebraic redundancy of the signature while retaining the genuinely new geometric information. 
In the linear setting, the solution of a CDE was shown to be determined explicitly by the signature of the driving path.
The chapter then showed how the log-signature leads naturally to the Log-ODE method, which replaces the original CDE on each interval by an autonomous ODE built from the truncated log-signature of the driving path and iterated Lie brackets of the vector field. 
Together, these ideas establish the main mathematical objects and constructions used in the remainder of the thesis: paths are the fundamental data type, signatures and log-signatures summarise them over intervals, and CDEs describe how states evolve in response to them. 
\\ \\
The existence and uniqueness theorems, Theorems~\ref{th:existence} and~\ref{th:uniqueness}, together with the Log-ODE method, show that $\mathrm{Lip}(\gamma)$ regularity plays an important role in the theory of CDEs. 
The next chapter studies this notion of regularity in detail.
Furthermore, it formalises the Lie bracket of two $\mathrm{Lip}(\gamma)$ functions, a key ingredient in the Log-ODE method.
\chapter{$\mathrm{Lip}(\gamma)$ Functions}

\begin{quoting}
    I am convinced that it is impossible to know the parts without knowing the whole, any more than we can know the whole without a detailed knowledge of the parts.
\end{quoting}
\noindent\large---Blaise Pascal, \emph{Pense\'es} (1670), translation by Martin Turnell (1962)
\normalsize

\label{chap:lipgamma}

\section{Introduction}

The analysis of CDEs, particularly when employing numerical techniques such as the Log-ODE method, depends on the regularity of the driving vector field. 
As established by \citet{Lyons1994DIFFERENTIALED}, $\mathrm{Lip}(\gamma)$ is the correct notion of regularity for guaranteeing the existence and uniqueness of solutions to CDEs.
Furthermore, the degree of regularity $\gamma$ directly determines the maximum truncation depth $N$ that can be used by the Log-ODE method. 
This chapter relates $\mathrm{Lip}(\gamma)$ to more classical notions of differentiability, develops the Lie bracket of $\mathrm{Lip}(\gamma)$ vector fields, and proves Lemma~\ref{lem:normcomplip2}, which gives a new explicit bound for the $\mathrm{Lip}(\gamma)$ norm of the composition of two $\mathrm{Lip}(\gamma)$ functions in the case $1<\gamma\leq 2$.
\\ \\
Unlike Chapter~\ref{chap:cde}, which introduced the main mathematical objects used throughout the thesis, this chapter is concerned primarily with the regularity theory needed to apply those constructions when the vector field of a CDE is parametrised by a neural network.
In particular, Chapter~\ref{chap:ncde} uses Lemma~\ref{lem:normcomplip2} to control the $\mathrm{Lip}(\gamma)$ norm of the neural network vector fields, thereby justifying the use of the Lie brackets of those vector fields when applying the Log-ODE method to NCDEs.
Although this chapter contains material of independent mathematical interest, it is not essential for readers primarily interested in the continuous-time machine learning models developed in Chapters~\ref{chap:ncde} and~\ref{chap:lin_ncde}.
\\ \\
The notion of $\mathrm{Lip}(\gamma)$ functions originates in the work of \citet{Whitney1934analytic}, who studied collections of derivative data on closed subsets of $\mathbb{R}^n$. 
The modern formulation was later given by \citet{stein1970singular}.
A core result is the Stein-Whitney extension theorem, given in Section~\ref{sec:extension} as Theorem~\ref{thm:stein-whitney}, which states that any $\mathrm{Lip}(\gamma)$ function defined on a closed subset $E$ of a finite-dimensional Banach space can be extended to the whole space, with a bound on the norm of the extension that is independent of $E$. 
\\ \\
The definition of $\mathrm{Lip}(\gamma)$ regularity given by \citet{stein1970singular} applies to functions defined on arbitrary subsets of a Banach space. 
Lemma~\ref{lem:CKalpha_Lipgamma} shows that on open convex subsets, the space of $\mathrm{Lip}(\gamma)$ functions coincides with the space of $k$ times Fr\'echet differentiable functions with bounded value and derivatives, whose $k^{\text{th}}$ derivative satisfies an $\alpha$-H\"older bound, denoted $C^{k,\alpha}_b$.
However, the natural inclusion of a $C^{k+1,\alpha}_b$ function in $C^{k,\alpha}_b$ does not hold globally, whereas \citet{boutaib2016lipschitz} proved that $\mathrm{Lip}(\gamma)$ spaces are globally nested with respect to $\gamma$.
Similarly, the Lie bracket of two $C^{k,\alpha}_b$ vector fields need not be globally $C^{k-1,\alpha}_b$.
\\ \\
To the best of the author’s knowledge, Section~\ref{sec:lipgamma} gives the first formal treatment of the Lie bracket for $\mathrm{Lip}(\gamma)$ vector fields on arbitrary subsets of potentially infinite-dimensional Banach spaces.
We prove that $[f,g]\in\mathrm{Lip}(\gamma-1)$ for $f,g\in\mathrm{Lip}(\gamma)$ with $\gamma>1$, and that the Jacobi identity holds in $\mathrm{Lip}(\gamma-2)$ for $\gamma>2$. 
This work builds on \citep[Chapter 3]{boutaib2016lipschitz}, which proved a number of fundamental results for $\mathrm{Lip}(\gamma)$ functions, including that the composition of two $\mathrm{Lip}(\gamma)$ functions is $\mathrm{Lip}(\gamma)$ and that these spaces satisfy the nesting property with respect to $\gamma$.
\\ \\
The final section of this chapter proves Lemma~\ref{lem:normcomplip2}, an explicit bound on the norm of the composition of two $\mathrm{Lip}(\gamma)$ functions for $1<\gamma\leq 2$. 
This builds on the work of \citet{cass2012new} and \citet{boutaib2016lipschitz}, who proved using different techniques that the composition of two $\mathrm{Lip}(\gamma)$ functions is $\mathrm{Lip}(\gamma)$, with a norm bounded up to a finite unknown constant $C_{\gamma}$.
Our result allows us to explicitly bound the norm of a certain class of neural networks in Section~\ref{sec:lipgammaNN}, laying the theoretical groundwork for applying the Log-ODE method to NCDEs.
\\ \\
For a Banach space $V$, this chapter assumes that the $\{V^{\otimes n}\}_{n=1}^\infty$ are equipped with a family of reasonable tensor algebra norms, as outlined in Section~\ref{sec:tens_alg}. 
As previously discussed, these conditions ensure that for all
$v\in V^{\otimes n}$ and $w\in V^{\otimes m}$,
\begin{equation}
    \| v \otimes w \|_{V^{\otimes (n+m)}} = \|v\|_{V^{\otimes n}}\,\|w\|_{V^{\otimes m}}.
\end{equation}
Therefore, the equivalence property of \citet{boutaib2016lipschitz},
\begin{equation}
    \|v\|_{V^{\otimes n}}\|w\|_{V^{\otimes m}} \leq c\|v\otimes w\|_{V^{\otimes (n+m)}},
\end{equation}
holds with equality and $c=1$, allowing us to directly use their results.

\section{Differentiable Functions}

This section reviews three standard notions of function regularity: smooth, $C^{k}$, and $C^{k,\alpha}$, together with the Lie bracket of two functions. 
We develop their basic properties and provide proofs in this familiar setting, both to fix notation and to introduce the ideas and techniques that will later be applied when establishing results for the Lie bracket of $\mathrm{Lip}(\gamma)$ functions.

\subsection{Smooth Vector Fields}

\label{sec:smooth_vec_fields}

Just as the ordered interactions captured by the signature give rise to multilinear objects, the iterated derivatives of a function naturally take values in spaces of multilinear maps.
The first derivative records the best linear approximation to a function, while the second and higher derivatives describe how this approximation changes under simultaneous perturbations in multiple directions. 
For sufficiently regular functions, these higher derivatives are symmetric, since the order in which the input directions are differentiated does not matter. 
\\ \\
Let $U$ and $V$ be Banach spaces. 
A symmetric $k$-linear map is a map
\begin{equation}
A:U\times\cdots\times U\to V
\end{equation}
which is linear in each argument and unchanged by permuting its inputs.
By the universal property of the tensor product, bounded symmetric $k$-linear maps may be identified with bounded linear maps $A:U^{\otimes k}\to V$ such that
\begin{equation}
A(u_{\sigma(1)}\otimes\cdots\otimes u_{\sigma(k)})
=
A(u_1\otimes\cdots\otimes u_k)
\end{equation}
for all $u_1,\ldots,u_k\in U$ and every permutation $\sigma$ of $\{1,\ldots,k\}$.
We write $\mathbf{L}_{\mathrm{sym}}(U^{\otimes k},V)$ for this space, and for $k=0$ set
\begin{equation}
\mathbf{L}_{\mathrm{sym}}(U^{\otimes 0},V)\cong V.
\end{equation}
This is the space in which the successive derivatives of a sufficiently regular map between Banach spaces take values.

\begin{definition}[Fréchet derivative \citep{Luenberger1969}]
  Let $U$ and $V$ be Banach spaces and $E\subset U$ open.
  A map $f:E\to V$ is Fréchet differentiable at a point $p\in E$ if there exists a bounded linear operator $Df(p)\in \mathbf{L}(U,V)$ such that
  \begin{equation}
    \lim_{\substack{h\to 0\\ p+h\in E}} \frac{\|f(p+h)-f(p)-Df(p)[h]\|}{\|h\|}=0.
  \end{equation}
  The map $Df(p)$ is called the Fréchet derivative of $f$ at $p$.
\end{definition}

\begin{definition}[Smooth function]
  Let $U$ and $V$ be Banach spaces and $E\subset U$ be an open set.
  A map $f:E\to V$ is called smooth if the iterated Fréchet derivatives
  \begin{equation}
    D^j f:E\to\mathbf{L}_{\mathrm{sym}}(U^{\otimes j},V)
  \end{equation}
  exist and are continuous for all $j\in\mathbb{N}_0$, where $D^0f=f$.
\end{definition}

Let $C^\infty(E,V)$ denote the space of all smooth functions $f:E\to V$.

\begin{definition}[Lie bracket on a Banach space]
\label{def:smooth_liebracket}
  Let $U$ be a Banach space and $E\subset U$ an open set.
  For $f,g\in C^\infty(E,U)$, their Lie bracket
  $[f,g]\in C^\infty(E,U)$ is defined pointwise by
  \begin{equation}
    [f,g](p)=Dg(p)[f(p)]-Df(p)[g(p)],
  \end{equation}
  for $p\in E$.
\end{definition}

\begin{remark}
  The chosen sign convention
  \begin{equation}
    [f,g](p) = Dg(p)[f(p)] - Df(p)[g(p)]
  \end{equation}
  agrees with the usual definition of the Lie bracket of vector fields in differential geometry, ensuring that
  \begin{equation}
    [f,g] = f\circ g - g\circ f
  \end{equation}
  when $f$ and $g$ are viewed as derivations acting on smooth functions \citep{Lee2013}.
\end{remark}

The space $C^\infty(E,U)$, endowed with the bracket operation defined above, is a Lie algebra \citep[Chapter 5]{Lang1999}:
\begin{enumerate}
\item The bracket is bilinear:
\begin{equation}
    [af + bg, h] = a[f,h] + b[g,h],
\end{equation}
and 
\begin{equation}
    [h, af + bg] = a[h,f] + b[h,g],
\end{equation}
hold for all $a,b\in\mathbb{R}$ and $f,g,h\in C^\infty(E, U)$.
\item The bracket is antisymmetric:
\begin{equation}
    [f,g] = -[g,f],
\end{equation}
holds for all $f,g\in C^\infty(E, U)$.
\item The bracket satisfies the Jacobi identity:
\begin{equation}
    [f,[g,h]] + [g,[h,f]] + [h,[f,g]] = 0,
\end{equation}
holds for all $f,g,h\in C^\infty(E, U)$.
\end{enumerate}

\subsection{$C^k$ Vector Fields}

We now move from smooth vector fields to the weaker setting of finite differentiability. 

\begin{definition}[$C^{k}$ function]
    Let $U$ and $V$ be Banach spaces, $E\subset U$ be open, and $k\in \mathbb{N}_0$. 
    A map $f:E\to V$ is called \emph{$C^{k}$} if the iterated Fréchet derivatives
    \begin{equation}
        D^{j}f:E\to\mathbf{L}_{\mathrm{sym}}\bigl(U^{\otimes j},V\bigr),\qquad j=0,\ldots,k,
    \end{equation}
    exist and are continuous.
\end{definition}

Let $C^k(E,V)$ denote the set of all $C^k$ functions from $E$ to $V$. 
A first basic observation is that a $C^k$ function is $C^l$ for all $l\in\mathbb{N}_0$ satisfying $l<k$. 
A second is that differentiating a $C^k$ function produces a $C^{k-1}$ function.
To compare the derivatives of $Df$ with the higher derivatives of $f$, we use the canonical currying identification
\begin{equation}
    \mathbf{L}_{\mathrm{sym}}\bigl(U^{\otimes j},\mathbf{L}(U,V)\bigr)
    \cong
    \mathbf{L}\bigl(U^{\otimes (j+1)},V\bigr),
\end{equation}
given by 
\begin{equation} 
    A(u_1,\ldots,u_j)(u_{j+1}) \longleftrightarrow \widetilde A(u_1,\ldots,u_{j+1}). 
\end{equation}

\begin{lemma}
    \label{lem:Ck_derivative}
    Let $U$ and $V$ be Banach spaces, $E\subset U$ open, and $k\in \mathbb{N}$.
    If $f\in C^{k}(E,V)$, then its Fréchet derivative
    \begin{equation}
        Df:E\longrightarrow\mathbf L(U,V)
    \end{equation}
    belongs to $C^{k-1}\!\bigl(E,\mathbf L(U,V)\bigr)$.
\end{lemma}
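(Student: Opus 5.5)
The plan is to prove, by a direct inspection of definitions, that for each $j\in\{0,\ldots,k-1\}$ the $j$-th Fréchet derivative of $g:=Df$ exists, is continuous, and coincides with $D^{j+1}f$ under the currying identification
\begin{equation}
    \mathbf{L}_{\mathrm{sym}}\bigl(U^{\otimes j},\mathbf{L}(U,V)\bigr)\cong\mathbf{L}\bigl(U^{\otimes(j+1)},V\bigr).
\end{equation}
I would argue by induction on $j$. The base case $j=0$ is immediate: $D^0 g = Df$ exists by assumption and is continuous because $f\in C^k(E,V)$ with $k\ge 1$.

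For the inductive step, suppose that for some $j\le k-2$ we know $D^j g$ exists and that $D^j g(p)$ corresponds to $D^{j+1}f(p)$ for every $p\in E$. Write $\Psi$ for the currying map. The key observation is that $\Psi$ is a linear isometric isomorphism between $\mathbf{L}(U^{\otimes j},\mathbf{L}(U,V))$ and $\mathbf{L}(U^{\otimes(j+1)},V)$. This holds when bounded multilinear maps carry their usual operator norms; in the tensor formulation it follows from the cross-norm property discussed in Section~\ref{sec:tens_alg}. Consequently $D^j g=\Psi^{-1}\circ D^{j+1}f$. Since $j+1\le k-1<k$, the map $D^{j+1}f$ is Fréchet differentiable. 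Because $\Psi^{-1}$ is bounded linear, the chain rule shows that $D^j g$ is Fréchet differentiable with derivative $D^{j+1}g(p)=\Psi^{-1}\circ D^{j+2}f(p)$, where $\Psi^{-1}$ is applied in the value slot. Re-currying then identifies $D^{j+1}g(p)$ with $D^{j+2}f(p)$. Continuity of $D^{j+1}g$ follows from continuity of $D^{j+2}f$, since $j+2\le k$, composed with a bounded linear map. This closes the induction and gives $D^j g$ for all $j\le k-1$.

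It remains to check that each $D^jg(p)$ is symmetric in its $j$ inputs, so that it lies in $\mathbf{L}_{\mathrm{sym}}(U^{\otimes j},\mathbf{L}(U,V))$. This is inherited from the symmetry of $D^{j+1}f(p)$ in all $j+1$ arguments: permuting only the first $j$ arguments leaves it unchanged.

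The main obstacle is not analytic but bookkeeping. One has to verify carefully that the currying map is isometric for the chosen norms, since this is where the tensor-norm assumptions enter, and that differentiating $\Psi^{-1}\circ D^{j+1}f$ commutes with the identification at the next order. I would handle this by checking both points on simple tensors $u_1\otimes\cdots\otimes u_{j+1}$, which is enough by linearity and density.
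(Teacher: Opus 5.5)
Your proposal is correct and takes essentially the same route as the paper. The paper simply asserts $D^{j}(Df)=D^{j+1}f$ under the currying identification for $0\le j\le k-1$ and reads off existence and continuity from $f\in C^{k}(E,V)$; your induction via the chain rule through $\Psi^{-1}$ makes that identification explicit.

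One minor caveat: the cross-norm property by itself only shows that $\Psi^{-1}$ is a contraction, and an isometric isomorphism needs the projective norm (i.e.\ multilinear maps with their operator norms). However, boundedness of $\Psi^{-1}$ is all your chain-rule and continuity steps actually use, so the argument is unaffected.
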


\begin{proof}
    By definition of $C^{k}$, the iterated derivatives $D^{j}f:E\to\mathbf L_{\mathrm{sym}}\bigl(U^{\otimes j},V\bigr)$ exist and are continuous for all $0\le j\le k$. 
    For every $0\le j\le k-1$,
    \begin{equation}
        D^{j}\bigl(Df\bigr)=D^{j+1}f,
    \end{equation}
    under the currying convention $\mathbf{L}_{\mathrm{sym}}\bigl(U^{\otimes j},\mathbf{L}(U,V)\bigr) \cong \mathbf{L}\bigl(U^{\otimes (j+1)},V\bigr)$.
    The right-hand side exists and is continuous by assumption, therefore $Df$ possesses continuous iterated
    derivatives up to order $k-1$, i.e.\ $Df\in C^{k-1}\bigl(E,\mathbf L(U,V)\bigr)$.
\end{proof}

To define the bounded $C^k$ spaces, we use a uniform norm on each derivative level. 
For $j\in\mathbb{N}_0$, let $\|\cdot\|_j$ denote the norm on $\mathbf{L}_{\mathrm{sym}}(U^{\otimes j},V)$ given by
\begin{equation}
    \|\cdot\|_j=
    \begin{cases}
        \|\cdot\|_V, & j=0,\\
        \|\cdot\|_{\operatorname{op}}, & j\in\mathbb{N}.
    \end{cases}
\end{equation}
Under the identification $\mathbf{L}_{\mathrm{sym}}(U^{\otimes 0},V)\cong V$, this treats the case $j=0$ uniformly with the higher-order cases.

\begin{definition}[$C^{k}_{b}$ space and norm]
    Let $U$ and $V$ be Banach spaces, let $E\subset U$ be open, and fix $k\in\mathbb{N}_0$.  
    For $f\in C^{k}(E,V)$ define
    \begin{equation}
        \label{eq:Ck_b_norm}
        \|f\|_{C^{k}}=\max_{0\le j\le k}\sup_{p\in E}\|D^{j}f(p)\|_{j}.
    \end{equation}
    The bounded $C^{k}$ space is
    \begin{equation}
        \label{eq:Ck_b_space}
        C^{k}_{b}(E,V)=\{f\in C^{k}(E,V):\|f\|_{C^{k}}<\infty\}.
    \end{equation}  
\end{definition}

\begin{definition}[$C^{k}$ Lie bracket]
    Let $f\in C^{k_f}(E,U)$ and $g\in C^{k_g}(E,U)$ with $k_f,k_g\in\mathbb{N}$.  
    Their Lie bracket is defined pointwise by
    \begin{equation}
        \label{eq:Ck_bracket}
        [f,g](p)=Dg(p)[f(p)]-Df(p)[g(p)],\qquad p\in E.
    \end{equation}
\end{definition}

The definition of the Lie bracket is the same as that for smooth functions, Definition \ref{def:smooth_liebracket}. However, for $C^k$ functions the Lie bracket reduces the regularity by one.

\begin{lemma}\label{lem:Ck_liebracket_order}
     Let $f\in C^{k_f}(E,U)$ and $g\in C^{k_g}(E,U)$ with $k_f,k_g\in\mathbb{N}$.  
     Then $[f,g]\in C^{\min(k_f,k_g)-1}(E,U)$.
\end{lemma}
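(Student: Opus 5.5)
Set $k=\min(k_f,k_g)\geq 1$. Since a $C^{k_f}$ function is $C^{l}$ for every $l\le k_f$, both $f$ and $g$ lie in $C^{k}(E,U)$. It therefore suffices to treat the case $k_f=k_g=k$.

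The plan is to write the bracket as a composition of three maps:
\begin{equation}
    [f,g]=\beta\circ(Dg,f)-\beta\circ(Df,g).
\end{equation}
Here $\beta:\mathbf L(U,U)\times U\to U$ is the evaluation map $\beta(A,u)=A[u]$. This map is bounded and bilinear, since $\|A[u]\|\le\|A\|_{\operatorname{op}}\|u\|$. Consequently it is smooth, with derivative
\begin{equation}
    D\beta(A,u)[(B,w)]=B[u]+A[w],
\end{equation}
and all of its derivatives of order three and higher vanish.

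\textbf{Step 1 (regularity of the pieces).} By Lemma~\ref{lem:Ck_derivative}, $Dg$ and $Df$ belong to $C^{k-1}(E,\mathbf L(U,U))$. Since $f,g\in C^{k}\subset C^{k-1}$, the pairings
\begin{equation}
    p\mapsto (Dg(p),f(p)) \quad\text{and}\quad p\mapsto (Df(p),g(p))
\end{equation}
are $C^{k-1}$ maps into the product Banach space $\mathbf L(U,U)\times U$. A map into a product is $C^{m}$ exactly when each of its components is $C^{m}$, because the derivatives are taken componentwise.

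\textbf{Step 2 (composition).} Apply the chain rule for Fréchet derivatives. The composition of the smooth map $\beta$ with a $C^{k-1}$ map is $C^{k-1}$, and so each term of the bracket is $C^{k-1}$. If we prefer not to quote the general chain-rule regularity statement, we can argue by induction on $m\le k-1$ using a Leibniz rule. The first derivative is
\begin{equation}
    D\bigl(\beta(A,u)\bigr)[h]=DA[h]\,u+A\,Du[h].
\end{equation}
This is again built from bounded bilinear pairings of $(A,DA)$ with $(u,Du)$. Each differentiation therefore consumes one order of regularity from exactly one factor. As a result, the $m$-th derivative is a finite sum of bilinear pairings of $D^{i}A$ with $D^{m-i}u$, and each such pairing is continuous.

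\textbf{Step 3 (conclusion).} Linear combinations of $C^{k-1}$ maps are $C^{k-1}$, so $[f,g]\in C^{k-1}(E,U)$.

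The main obstacle is the routine but slightly delicate point in Step 2. We must check that the iterated derivatives of a bilinear pairing exist and are continuous when one factor ($Dg$) has only $k-1$ derivatives, and that no derivative of order $k$ of $f$ or $g$ beyond what is available is ever needed. This is where the "loss of one" in the statement originates. It is handled by the inductive Leibniz formula above, together with the fact that composition of continuous maps with bounded multilinear maps preserves continuity.
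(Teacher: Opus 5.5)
Your proposal is correct and follows essentially the same route as the paper. You reduce to $k=\min(k_f,k_g)$, use Lemma~\ref{lem:Ck_derivative} to get $Df,Dg\in C^{k-1}(E,\mathbf{L}(U,U))$, and compose the $C^{k-1}$ pairings $(Dg,f)$ and $(Df,g)$ with the smooth bilinear evaluation map $B(A,v)=A[v]$. The only difference is that the paper cites the standard facts from Lang (continuous bilinear maps are smooth; composing a smooth map with a $C^{k-1}$ map gives a $C^{k-1}$ map) instead of sketching the Leibniz induction you offer as an alternative.
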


\begin{proof}
    Let $k=\min(k_f,k_g)$. 
    Then $f,g \in C^{k}(E,U)$.
    By Lemma \ref{lem:Ck_derivative} and the definition of $C^k$, we have 
    \begin{equation}
        f,g\in C^{k-1}(E, U) \quad\text{ and }\quad Dg,Df\in C^{k-1}(E, \mathbf{L}(U, U)).
    \end{equation}
    Let $B : \mathbf{L}(U, U) \times U \rightarrow U$ be defined by
    \begin{equation}
        B(A, v) = A[v].
    \end{equation}
    Then $B$ is a continuous bilinear function, and hence smooth \parencite[Chapter 1, Proposition 3.8]{Lang1999}. 
    Since the functions 
    \begin{equation}
        (Dg,f)(p) = (Dg(p),f(p)) \quad\text{ and }\quad (Df,g)(p) = (Df(p),g(p))
    \end{equation}
    are $C^{k-1}(E, \mathbf{L}(U, U) \times U)$, then $B \circ (Dg,f)$ and $B \circ (Df,g)$ are both $C^{k-1}(E,U)$ \parencite[Chapter 1, Proposition 3.2]{Lang1999}. 
    Since all terms on the RHS of \eqref{eq:Ck_bracket} are $C^{k-1}$, then $[f,g]\in C^{k-1}(E,U)$.
\end{proof}

Lemma \ref{lem:Ck_liebracket_order} means the Lie bracket of two $C^k$ functions does not necessarily belong to $C^{k}$, and it is therefore not a Lie algebra. 
However, the Lie bracket does satisfy bilinearity and anti-symmetry, both of which follow directly from the definition. 
Furthermore, the Lie bracket satisfies the Jacobi identity.

\begin{lemma}\label{lem:Ck_liebracket_Jacobi}
     Let $f,g,h\in C^{k}(E,U)$ with $k\in \mathbb{N}$ satisfying $k\geq 2$. 
     Then 
     \begin{equation}
         [f, [g, h]] + [g, [h, f]] + [h, [f, g]] = 0.
     \end{equation}
\end{lemma}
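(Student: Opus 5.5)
The plan is to compute the Jacobi sum pointwise by expanding each nested bracket with the product rule, then cancel the second-order terms using symmetry of $D^2$ and the first-order terms using pairwise cancellation.

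First, since $k\geq 2$, Lemma \ref{lem:Ck_liebracket_order} gives $[g,h],[h,f],[f,g]\in C^{k-1}(E,U)$ with $k-1\geq 1$, so the outer brackets are well defined. Next I would compute $D[g,h](p)[v]$ using the product rule for the continuous bilinear map $B(A,v)=A[v]$, together with the chain rule as in Lemma \ref{lem:Ck_liebracket_order}. Under the currying identification, this gives
\begin{equation}
D[g,h](p)[v] = D^2h(p)[g(p),v] + Dh(p)\bigl[Dg(p)[v]\bigr] - D^2g(p)[h(p),v] - Dg(p)\bigl[Dh(p)[v]\bigr].
\end{equation}
Substituting into $[f,[g,h]](p)=D[g,h](p)[f(p)] - Df(p)\bigl[[g,h](p)\bigr]$ writes $[f,[g,h]](p)$ as a sum of six terms:
\begin{equation}
\begin{aligned}
[f,[g,h]](p) ={}& D^2h(p)[g,f] - D^2g(p)[h,f] + Dh\,Dg\,f - Dg\,Dh\,f \\
&- Df\,Dh\,g + Df\,Dg\,h,
\end{aligned}
\end{equation}
where everything is evaluated at $p$. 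The other two brackets follow by cyclically permuting $(f,g,h)$.

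The remaining work is bookkeeping.
\begin{itemize}
\item \textbf{Second-order terms.} These are of the form $D^2h(p)[g,f]$, $D^2g(p)[h,f]$, and their cyclic images. Each $D^2$ of a fixed function appears exactly twice with the two arguments swapped and opposite signs. For example, $D^2h[g,f]$ comes from $[f,[g,h]]$ and $-D^2h[f,g]$ comes from $[g,[h,f]]$. They therefore cancel because $D^2h(p)\in\mathbf{L}_{\mathrm{sym}}(U^{\otimes 2},U)$.
\item \textbf{First-order terms.} Each of the twelve terms $\pm D\phi\,D\psi\,\chi$, with $\{\phi,\psi,\chi\}=\{f,g,h\}$, should appear once with each sign across the three cyclic expansions. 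Checking this is a finite listing, and I would tabulate the six permutations to confirm it.
\end{itemize}

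The main obstacle is justifying the derivative formula, in particular the symmetry of $D^2$ in a general Banach space and the correct ordering of arguments under the currying convention. Symmetry is built into the codomain $\mathbf{L}_{\mathrm{sym}}$ in the definition of $C^k$ (and is the standard Schwarz theorem for $C^2$ maps). Differentiability of $p\mapsto Dh(p)[g(p)]$ follows from the chain rule applied to $B\circ(Dh,g)$, with $Dh\in C^{k-1}$ by Lemma \ref{lem:Ck_derivative} and $k-1\geq1$. This is the same argument as in Lemma \ref{lem:Ck_liebracket_order}, and the derivative of a bounded bilinear map is $DB(A,v)[(A',v')]=A'[v]+A[v']$.
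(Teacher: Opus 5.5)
Your proposal is correct and takes the same approach as the paper. The paper also fixes $p$ and expands each nested bracket into the same six terms, written with $x=f(p)$, $A=Df(p)$, $\mathbb{A}=D^2f(p)$ and so on. It then cancels the second-derivative terms by symmetry of $D^2$ and notes that the remaining first-order terms cancel pairwise. You go slightly further than the paper by justifying the derivative of $p\mapsto Dh(p)[g(p)]$ through the bilinear map $B$, whereas the paper simply writes the expansion down.
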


\begin{proof}
    Fix $p \in E$ and define:
    \begin{equation}
        \begin{aligned}
            x = f(p), \quad y &= g(p), \quad z = h(p), \\
            A = Df(p), \quad B &= Dg(p), \quad C = Dh(p), \\
            \mathbb{A} = D^2f(p), \quad \mathbb{B} &= D^2g(p), \quad \mathbb{C} = D^2h(p),
        \end{aligned}
    \end{equation}
    where $A, B, C \in \mathbf{L}(U, U)$ and $\mathbb{A}, \mathbb{B}, \mathbb{C} \in \mathbf{L}_{\mathrm{sym}}(U^{\otimes 2}, U)$. 
    Then
    \begin{equation}
        \begin{aligned}
            [f, [g, h]](p) &= -\mathbb{B}[z,x] - B[C[x]] + \mathbb{C}[y,x] + C[B[x]] + A[B[z]] - A[C[y]], \\
            [g, [h, f]](p) &= -\mathbb{C}[x,y] - C[A[y]] + \mathbb{A}[z,y] + A[C[y]] + B[C[x]] - B[A[z]], \\
            [h, [f, g]](p) &= -\mathbb{A}[y,z] - A[B[z]] + \mathbb{B}[x,z] + B[A[z]] + C[A[y]] - C[B[x]].
        \end{aligned}
    \end{equation}
    
    By the symmetry of the second derivatives,
    \begin{equation}
        \mathbb{B}[x,z] - \mathbb{B}[z,x] + \mathbb{C}[y,x] - \mathbb{C}[x,y] + \mathbb{A}[z,y] - \mathbb{A}[y,z] = 0,
    \end{equation}
    and all the other terms cancel. 
    Therefore
    \begin{equation}
        [f, [g, h]] + [g, [h, f]] + [h, [f, g]] = 0
    \end{equation}
    for all $p$, concluding the proof.
\end{proof}

\subsection{$C^{k,\alpha}$ Vector Fields}

\label{sec:Ckalpha}

We now strengthen finite differentiability by requiring the highest derivative to vary in a controlled way. 
This leads to the classical Hölder spaces $C^{k,\alpha}$, which refine the $C^k$ spaces by measuring not only the existence of derivatives up to order $k$, but also the regularity of the $k$-th derivative itself. 
These spaces will be useful for comparison with $\mathrm{Lip}(\gamma)$ regularity in Section~\ref{sec:lipgamma}, since the two notions agree on open convex domains but behave differently at a global level.

\begin{definition}[$C^{k,\alpha}$ function]\label{def:Ckalpha}
    Let $U$ and $V$ be Banach spaces, $E\subset U$ be open, $k\in\mathbb{N}_0$, and $\alpha\in(0,1]$. 
    A map $f:E\to V$ is $C^{k,\alpha}$ if
    \begin{enumerate}
        \item $f\in C^{k}(E,V)$ and
        \item its $k$-th derivative is $\alpha$-Hölder continuous on $E$:
              \begin{equation}
                  [D^{k}f]_{\alpha} =\sup_{\substack{p,q\in E\\ p\neq q}} \frac{\|D^{k}f(p)-D^{k}f(q)\|_{k}}{\|p-q\|_{U}^{\alpha}} <\infty.
              \end{equation}
    \end{enumerate}
\end{definition}

The space of all $C^{k,\alpha}$ functions from $E$ to $V$ is denoted $C^{k,\alpha}(E,V)$.

\begin{lemma}\label{lem:Ckalpha_derivative}
    If $f\in C^{k,\alpha}(E,V)$ with $k\in \mathbb{N}$, then its Fréchet derivative
    \begin{equation}
        Df:E\longrightarrow\mathbf L(U,V)
    \end{equation}
belongs to $C^{k-1,\alpha}\!\bigl(E,\mathbf L(U,V)\bigr)$.
\end{lemma}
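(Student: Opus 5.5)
The plan mirrors the proof of Lemma~\ref{lem:Ck_derivative}: we check the two conditions of Definition~\ref{def:Ckalpha} for $Df$ in turn, with $k-1$ in place of $k$ and the target space $\mathbf{L}(U,V)$ in place of $V$.

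First, since $f\in C^{k,\alpha}(E,V)$, in particular $f\in C^{k}(E,V)$. Lemma~\ref{lem:Ck_derivative} then gives $Df\in C^{k-1}(E,\mathbf{L}(U,V))$ directly, which is condition 1. It also gives the identity $D^{k-1}(Df)=D^{k}f$, understood via the currying identification
\begin{equation}
\mathbf{L}_{\mathrm{sym}}\bigl(U^{\otimes (k-1)},\mathbf{L}(U,V)\bigr)\cong\mathbf{L}\bigl(U^{\otimes k},V\bigr).
\end{equation}

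Second, for the Hölder condition, we need
\begin{equation}
[D^{k-1}(Df)]_{\alpha}=\sup_{p\neq q}\frac{\|D^{k-1}(Df)(p)-D^{k-1}(Df)(q)\|_{k-1}}{\|p-q\|_U^{\alpha}}
\end{equation}
to be finite. The numerator is the norm of the difference $D^{k}f(p)-D^{k}f(q)$, viewed as an element of $\mathbf{L}(U^{\otimes(k-1)},\mathbf{L}(U,V))$. The key step is therefore to check that currying is an isometry for the operator norms. For a multilinear map $A$,
\begin{equation}
\sup_{\|u_1\|,\ldots,\|u_{k-1}\|\le1}\ \sup_{\|u_k\|\le 1}\|A(u_1,\ldots,u_{k-1})(u_k)\|=\sup_{\|u_1\|,\ldots,\|u_k\|\le1}\|\widetilde A(u_1,\ldots,u_k)\|,
\end{equation}
since both sides are the same supremum over the same set of inputs, just organised as nested suprema versus a single one. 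Granting this,
\begin{equation}
[D^{k-1}(Df)]_{\alpha}=[D^{k}f]_{\alpha}<\infty,
\end{equation}
which gives condition 2 and completes the proof.

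The only real subtlety is this norm identification. The operator norm $\|\cdot\|_k$ on $\mathbf{L}_{\mathrm{sym}}(U^{\otimes k},V)$ is defined with respect to the chosen tensor norm, whereas currying naturally works with multilinear sup-norms. These two coincide for the projective norm. For a general reasonable tensor algebra norm, I would use the cross-norm equality $\|u_1\otimes\cdots\otimes u_k\|=\prod_i\|u_i\|$ to compare the two on simple tensors. If exact equality fails, a bounded equivalence constant is enough, since it still keeps $[D^{k-1}(Df)]_\alpha$ finite.
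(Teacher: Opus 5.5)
Your argument is essentially the paper's proof: Lemma~\ref{lem:Ck_derivative} gives $Df\in C^{k-1}(E,\mathbf{L}(U,V))$, and the currying identification $D^{k-1}(Df)=D^{k}f$ is then used to equate $[D^{k-1}(Df)]_{\alpha}$ with $[D^{k}f]_{\alpha}$, which the paper asserts without further comment. On your closing caveat, routing the comparison through multilinear sup-norms is not the right tool, since for non-projective norms on infinite-dimensional $U$ no equivalence constant need exist. It is also unnecessary: sub-multiplicativity $\|w\otimes u\|\le\|w\|\,\|u\|$ for arbitrary $w\in U^{\otimes(k-1)}$ gives $\|D^{k-1}(Df)(p)-D^{k-1}(Df)(q)\|_{k-1}\le\|D^{k}f(p)-D^{k}f(q)\|_{k}$ directly, so $[D^{k-1}(Df)]_{\alpha}\le[D^{k}f]_{\alpha}<\infty$.
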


\begin{proof}
    Because $f\in C^{k}(E,V)$, Lemma~\ref{lem:Ck_derivative} gives
    $Df\in C^{k-1}\bigl(E,\mathbf L(U,V)\bigr)$. 
    Under the currying convention $\mathbf{L}_{\mathrm{sym}}\bigl(U^{\otimes j},\mathbf{L}(U,V)\bigr) \cong \mathbf{L}\bigl(U^{\otimes (j+1)},V\bigr)$, we have
    \begin{equation}
    \sup_{\substack{p,q\in E\\ p\neq q}} \frac{\|D^{k-1}(Df)(p)-D^{k-1}(Df)(q)\|_{k}}{\|p-q\|_{U}^{\alpha}} = \sup_{\substack{p,q\in E\\ p\neq q}} \frac{\|D^{k}f(p)-D^{k}f(q)\|_{k}}{\|p-q\|_{U}^{\alpha}} = [D^{k}f]_{\alpha}.
    \end{equation}
    Hence $[D^{k-1}(Df)]_{\alpha}= [D^{k}f]_{\alpha}$ and the claim follows.
\end{proof}

\begin{definition}[$C^{k,\alpha}_{b}$ space and norm]\label{def:Ckalpha_b}
For $f\in C^{k,\alpha}(E,V)$ set
\begin{equation}
    \|f\|_{C^{k,\alpha}}:=\max\left\{\max_{0\le j\le k}\sup_{p\in E}\|D^{j}f(p)\|_{j}, [D^{k}f]_{\alpha}\right\}.
\end{equation}
The bounded $C^{k,\alpha}$ space is
\begin{equation}
    C^{k,\alpha}_{b}(E,V)
    =\bigl\{f\in C^{k,\alpha}(E,V):\|f\|_{C^{k,\alpha}}<\infty\bigr\}.
\end{equation}
\end{definition}

\begin{definition}[$C^{k, \alpha}$ Lie bracket]
    Let $f,g\in C^{k,\alpha}(E,U)$ with $k\in \mathbb{N}$. 
    Their Lie bracket is defined pointwise by
    \begin{equation}
        \label{eq:Ckalpha_bracket}
        [f,g](p)=Dg(p)[f(p)]-Df(p)[g(p)],\qquad p\in E.
    \end{equation}
\end{definition}

However, unlike $C^k$ functions, the Lie bracket of two $C^{k,\alpha}$ functions is not necessarily $C^{k-1,\alpha}$. 
In fact, $C^{k,\alpha}$ functions don't have an inclusion with respect to $k$. 
For example, consider $E\subset \mathbb{R}$ with $E=(-\infty, 0) \cup (0, \infty)$ and $f:E \rightarrow \mathbb{R}$ defined by
\begin{equation}
    f(x) = \begin{cases}
        x-1, \quad &x \in (-\infty, 0), \\
        x+1, \quad &x \in (0, \infty),
    \end{cases}
\end{equation}
with $Df=1$. 
Then $[Df]_1=0$ and $f\in C^{1,1}(E, \mathbb{R})$. However, $[f]_1$ is infinite, due to the discontinuity at $0$. Therefore, $f\notin C^{0,1}(E, \mathbb{R})$. 
Furthermore, letting $g:E\rightarrow\mathbb{R}$ be defined by $g(x)=2x$, then
\begin{equation}
    [f,g] = \begin{cases}
        -2, \quad &x \in (-\infty, 0), \\
        +2, \quad &x \in (0, \infty), \\
    \end{cases}
\end{equation}
so $[f,g]\notin C^{0,1}$ despite $f,g\in C^{1,1}$. 
The same results hold for $C^{k,\alpha}_b$ functions, which can be seen by replacing $E$ with $(-1,0) \cup (0,1)$. 
\\ \\
In the next section, we introduce $\mathrm{Lip}(\gamma)$ regularity, a global notion of regularity defined on both open and closed subsets, which will allow us to define the Lie bracket globally.

\section{$\mathrm{Lip}(\gamma)$ Functions}

\label{sec:lipgamma}

\subsection{Definition}

\begin{definition}[$\mathrm{Lip}(\gamma, E, V)$ \parencite{stein1970singular}]
\label{def:lipgamma}
    Let $U$ and $V$ be two Banach spaces, $\gamma>0$ be a real number, $k$ be the non-negative integer such that $\gamma \in (k, k+1]$, $E$ be a subset of $U$, and $f^0:E\rightarrow V$ a function. 
    For $j=1,\ldots,k$, let $f^j:E\rightarrow \mathbf{L}_{\mathrm{sym}}(U^{\otimes j}, V)$ be functions. 
    The collection $(f^0, f^1, \ldots, f^k)$ is an element of $\mathrm{Lip}(\gamma,E,V)$ if there exists $M\geq0$ such that the following conditions hold:
    \begin{itemize}
    \item For $j=0,\ldots,k$,
    \begin{equation}
    \label{eq:lipbound1}
        \sup_{p\in E}||f^j(p)||_{j} \leq M,
    \end{equation}
    \item For $j=0,\ldots,k$, all $p,q \in E$, 
    \begin{equation}
    \label{eq:lipbound2}
        \bigg|\bigg|f^j(q)-\sum_{l=0}^{k-j}\frac{f^{j+l}(p)[(q-p)^{\otimes l}]}{l!}\bigg|\bigg|_{j} \leq M||q-p||_U^{\gamma-j}.
    \end{equation}
    \end{itemize}
    where 
    \begin{equation}
        ||\cdot||_j = \begin{cases}
            ||\cdot||_V, \quad &j=0, \\
            ||\cdot||_{\operatorname{op}}, \quad &j=1,\ldots,k.
        \end{cases}
    \end{equation}
\end{definition}
When there is no confusion over $E$ and $V$, the shorthand $\mathrm{Lip}(\gamma)$ will be used. 
If a collection $f=(f^0, f^1, \ldots, f^k)$ is $\mathrm{Lip}(\gamma)$, then the $\mathrm{Lip}(\gamma)-$norm is the smallest $M$ for which \eqref{eq:lipbound1} and \eqref{eq:lipbound2} hold. 
The $\mathrm{Lip}(\gamma)-$norm is denoted by $||f||_{\mathrm{Lip}(\gamma)}$. 
$\mathrm{Lip}(\gamma, E, V)$ functions can be equivalently defined by considering a function defined on $E$ which takes its values in the polynomials from $U$ to $V$.

\begin{definition}[$\mathrm{Lip}(\gamma, E, V)$ as polynomials]
\label{def:lipgamma_poly}
    Let $U$ and $V$ be two Banach spaces, $\gamma>0$ be a real number, $k$ be the non-negative integer such that $\gamma \in (k, k+1]$, $E$ be a subset of $U$, and $\mathbf{P}^k(U,V)$ be the space of $k^{\text{th}}$ order polynomial functions from $U$ to $V$. 
    The function $f:E \rightarrow \mathbf{P}^k(U,V)$ defined by 
    \begin{equation}
        f(x)(y) = P_x(y) = \sum_{l=0}^{k} \frac{f^l(x)[(y-x)^{\otimes l}]}{l!}.
    \end{equation}
    for $f^j:E\rightarrow \mathbf{L}_{\mathrm{sym}}(U^{\otimes j}, V)$ is an element of $\mathrm{Lip}(\gamma, E, V)$ if 
    \begin{equation}
        \label{eq:poly_lipbound1}
        \sup_{x\in E}||P^j_x(x)||_{j} \leq M
    \end{equation}
    for $j=0,\ldots,k$ and
    \begin{equation}
        \label{eq:poly_lipbound2}
        \bigg|\bigg|P^j_y(y)-P^j_x(y)\bigg|\bigg|_{j} \leq M\|x-y\|_U^{\gamma-j}
    \end{equation}
    for $j=0,\ldots,k$ and all $x,y \in E$, where 
    \begin{equation}
    \begin{aligned}
        P^j_x(y)[u_1\otimes\cdots\otimes u_j] &= D^j_yP_x(y)[u_1\otimes\cdots\otimes u_j], \\
        &= \sum_{l=0}^{k-j} \frac{f^{j+l}(x)[u_1\otimes\cdots\otimes u_j \otimes (y - x)^{\otimes l}]}{l!}.
    \end{aligned}
    \end{equation}
\end{definition}

To further illustrate the definition of $\mathrm{Lip}(\gamma)$, we will compare and contrast it with $C^{k,\alpha}_b$.

\subsection{Comparison with $C^{k,\alpha}_b$}

\label{sec:Ckalpha_lipgamma}

On open convex subsets of a Banach space, the definition of $\mathrm{Lip}(\gamma)$ coincides with the definition of $C^{k,\alpha}_b$, as will be shown in Lemma \ref{lem:CKalpha_Lipgamma}. 
This result will make use of the following lemma, which demonstrates that the constant in the bound of any Taylor remainder for a $C^{k,\alpha}_b$ function defined on an open convex set is bounded by the constant of the $\alpha-$H\"older bound on the $k^{th}$ derivative. 

\begin{lemma}\label{lem:Ck_taylor_bound}
    Let $U$ and $V$ be Banach spaces, $E\subseteq U$ be an open convex set, and $f\in C^{k,\alpha}_b(E,V)$. 
    Let $D^0f=f$ and $M = [D^kf]_{\alpha}$. Then
    \begin{equation}
        \sup_{p,q\in E} \frac{\big|\big|D^{k-i}f(q)-\sum_{l=0}^{i}\frac{D^{k-i+l}f(p)[(q-p)^{\otimes l}]}{l!}\big|\big|_{k-i}}{||q-p||_U^{\alpha+i}} \leq M,
    \end{equation}
    for $i=0,\ldots,k$.
\end{lemma}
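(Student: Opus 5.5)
Proceed by induction on $i$, using the integral form of the Taylor remainder along line segments, which lie in $E$ because $E$ is convex. Write $R_i(p,q)=D^{k-i}f(q)-\sum_{l=0}^{i}\frac{D^{k-i+l}f(p)[(q-p)^{\otimes l}]}{l!}$. We aim to show $\|R_i(p,q)\|_{k-i}\le \frac{M}{(\alpha+1)\cdots(\alpha+i)}\|q-p\|_U^{\alpha+i}$. This is at most $M\|q-p\|^{\alpha+i}$, since each factor $\alpha+j\ge 1$.

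The base case $i=0$ is the definition of the H\"older seminorm: $R_0(p,q)=D^kf(q)-D^kf(p)$, so $\|R_0(p,q)\|_k\le [D^kf]_\alpha\|q-p\|^\alpha=M\|q-p\|^\alpha$.

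For the inductive step, fix $p,q\in E$ and set $h=q-p$ and $\phi(t)=p+th$ for $t\in[0,1]$. By convexity $\phi(t)\in E$. Since $D^{k-i}f$ is $C^1$ on the open set $E$ (as $i\ge1$), the fundamental theorem of calculus for Banach-valued maps, applied via the Bochner/Riemann integral of the continuous integrand, gives $D^{k-i}f(q)-D^{k-i}f(p)=\int_0^1 D^{k-i+1}f(p+th)[\,\cdot\,,h]\,\mathrm{d}t$. Here we use the currying identification from Lemma~\ref{lem:Ck_derivative}, so $D^{k-i+1}f(x)[\cdot,h]\in\mathbf{L}_{\mathrm{sym}}(U^{\otimes(k-i)},V)$. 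We also have the elementary identity
\begin{equation}
\sum_{l=1}^{i}\frac{D^{k-i+l}f(p)[\,\cdot\,,h^{\otimes l}]}{l!}=\int_0^1\sum_{m=0}^{i-1}\frac{D^{k-i+1+m}f(p)[\,\cdot\,,(th)^{\otimes m},h]}{m!}\,\mathrm{d}t,
\end{equation}
with $m=l-1$ and $\int_0^1 t^m\,\mathrm{d}t=1/(m+1)$. Subtracting, we obtain
\begin{equation}
R_i(p,q)=\int_0^1 R_{i-1}(p,p+th)[\,\cdot\,,h]\,\mathrm{d}t.
\end{equation}
Evaluating an operator in $\mathbf{L}(U^{\otimes(k-i+1)},V)$ at a fixed last slot $h$ costs at most a factor $\|h\|$ in operator norm. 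This follows from the cross-norm property of the tensor norms assumed in Section~\ref{sec:tens_alg}. The inductive hypothesis then yields
\begin{equation}
\|R_i(p,q)\|_{k-i}\le\int_0^1\frac{M}{(\alpha+1)\cdots(\alpha+i-1)}t^{\alpha+i-1}\|h\|^{\alpha+i}\,\mathrm{d}t=\frac{M\|h\|^{\alpha+i}}{(\alpha+1)\cdots(\alpha+i)}.
\end{equation}
Taking the supremum over $p\neq q$ gives the claim for every $i=0,\ldots,k$. The case $p=q$ is trivial, with the quotient interpreted as $0$.

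The main obstacle is the careful justification of two points in the infinite-dimensional setting. First, we need the Banach-space fundamental theorem of calculus along segments. This holds because $t\mapsto D^{k-i}f(p+th)$ is $C^1$ with derivative $D^{k-i+1}f(p+th)[\cdot,h]$, by the chain rule. Second, we need to keep the symmetry and currying identifications consistent, so that inserting $h$ in the final argument matches how the terms $D^{k-i+l}f(p)[(q-p)^{\otimes l}]$ are contracted in the statement. Boundedness of the integrand, needed for integrability, follows from $f\in C^{k,\alpha}_b$.
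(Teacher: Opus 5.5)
Your proposal is correct and takes essentially the same route as the paper. Both arguments use induction on $i$, the fundamental theorem of calculus along the segment $p+t(q-p)$, absorb the Taylor sum into the integral via $\int_0^1 t^l\,\mathrm{d}t=\frac{1}{l+1}$, apply the inductive hypothesis at the pair $(p,p+t(q-p))$, and use $\|u\otimes(q-p)\|\le\|u\|\,\|q-p\|$. The only difference is that you carry the sharper constant $M/\bigl((\alpha+1)\cdots(\alpha+i)\bigr)$ through the induction, whereas the paper re-weakens to $M$ at each step using only $M/(\alpha+i+1)\le M$.
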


\begin{proof}
We proceed by induction on $i$. 
Let $i\in\{0,\ldots,k-1\}$ and assume that
\begin{equation}
\label{eq:kderivbound_induct_assump1}
    \sup_{p,q\in E} \frac{\big|\big|D^{k-i}f(q)-\sum_{l=0}^{i}\frac{D^{k-i+l}f(p)[(q-p)^{\otimes l}]}{l!}\big|\big|_{k-i}}{||q-p||_U^{\alpha+i}} \leq M.
\end{equation}

We now show that the same estimate holds with $i$ replaced by $i+1$. 
By the definition of $\|\cdot\|_{k-(i+1)}$, we have
\begin{equation}
\label{eq:kderivbound_induct_result1_part1}
    \begin{aligned}
        \sup_{p,q\in E}& \frac{\big|\big|D^{k-(i+1)}f(q)-\sum_{l=0}^{i+1}\frac{D^{k-(i+1)+l}f(p)[(q-p)^{\otimes l}]}{l!}\big|\big|_{k-(i+1)}}{||q-p||_U^{\alpha+(i+1)}} \\
        &= \sup_{p,q\in E} \sup_{u\neq0} \frac{\big|\big|D^{k-(i+1)}f(q)[u]-\sum_{l=0}^{i+1}\frac{D^{k-(i+1)+l}f(p)[u\otimes (q-p)^{\otimes l}]}{l!}\big|\big|_V}{||u||_{U^{\otimes(k-(i+1))}}\;||q-p||_U^{\alpha+(i+1)}}.
    \end{aligned}
\end{equation}

We next use the fundamental theorem of calculus to rewrite the difference
$D^{k-(i+1)}f(q)[u]-D^{k-(i+1)}f(p)[u]$, giving
\begin{equation}
\begin{aligned}
    \sup_{p,q\in E} \sup_{u\neq0} &\frac{\big|\big|\int_0^1D^{k-i}f(p+t(q-p))[u\otimes (q-p)]\mathrm{d}t-\sum_{l=1}^{i+1}\frac{D^{k-(i+1)+l}f(p)[u\otimes (q-p)^{\otimes l}]}{l!}\big|\big|_V}{||u||_{U^{\otimes(k-(i+1))}}\;||q-p||_U^{\alpha+(i+1)}}.
\end{aligned}
\end{equation}

We then re-index the sum by replacing $l$ with $l+1$,
\begin{equation}
\begin{aligned}
    \sup_{p,q\in E}\sup_{u\neq0}  \frac{\big|\big|\int_0^1D^{k-i}f(p+t(q-p))[u \otimes (q-p)]\mathrm{d}t-\sum_{l=0}^{i}\frac{D^{k-i+l}f(p)[u\otimes (q-p)^{\otimes l+1}]}{(l+1)!}\big|\big|_V}{||u||_{U^{\otimes(k-(i+1))}}\;||q-p||_U^{\alpha+(i+1)}}.
\end{aligned}
\end{equation}

Letting $u'=u\otimes (q-p)\in U^{\otimes (k-i)}$, bringing the sum inside the integral, and using $\int_0^1 t^l\,dt=\frac{1}{l+1}$, we obtain
\begin{equation}
\label{eq:kderivbound_induct_result1_part2}
    \begin{aligned}       
        \sup_{p,q\in E}\sup_{u\neq0}  &\frac{\big|\big|\int_0^1D^{k-i}f(p+t(q-p))[u']-\sum_{l=0}^{i}\frac{1}{l!}D^{k-i+l}f(p)[u'\otimes(t(q-p))^{\otimes l}]\mathrm{d}t\big|\big|_V}{||u||_{U^{\otimes(k-(i+1))}}\;||q-p||_U^{\alpha+(i+1)}}\\
        & \stackrel{\eqref{eq:kderivbound_induct_assump1}}{\leq} \sup_{p,q\in E} \frac{M\int_0^1 ||u'||_{U^{\otimes(k-i)}}\,||t(q-p)||_U^{\alpha+i}\mathrm{d}t}{||u||_{U^{\otimes(k-(i+1))}}\;||q-p||_U^{\alpha+(i+1)}}.
    \end{aligned}
\end{equation}

Applying $||u'||_{U^{\otimes(k-i)}}\leq ||u||_{U^{\otimes(k-(i+1))}}\,||q-p||_U$ we deduce that
\begin{equation}
\begin{aligned}
    \sup_{p,q\in E} \frac{\big|\big|D^{k-(i+1)}f(q)-\sum_{l=0}^{i+1}\frac{D^{k-(i+1)+l}f(p)[(q-p)^{\otimes l}]}{l!}\big|\big|_{k-(i+1)}}{||q-p||_U^{\alpha+(i+1)}}
    &\leq \sup_{p,q\in E} \frac{M\int_0^1 ||t(q-p)||_U^{\alpha+i}\mathrm{d}t}{||q-p||_U^{\alpha+i}} \\
    &\leq M\int_0^1 t^{\alpha+i}\mathrm{d}t \\
    &= \frac{M}{\alpha+(i+1)} \leq M.
\end{aligned}
\end{equation}

The case $i=0$ is true by the assumption that $[D^kf]_{\alpha}=M$, completing the induction. 
\end{proof}

With this Taylor remainder estimate, we can now show that on open convex sets the classical $C^{k,\alpha}_b$ notion of regularity agrees exactly with $\mathrm{Lip}(\gamma)$ regularity.

\begin{lemma} \label{lem:CKalpha_Lipgamma}
    Let $U$ and $V$ be Banach spaces, $E\subseteq U$ be open and convex, $k\in\mathbb{N}_0$, $\alpha\in(0,1]$, and $\gamma=k+\alpha$. 
    Then $C^{k,\alpha}_b(E,V)\equiv \mathrm{Lip}(\gamma, E, V)$.
\end{lemma}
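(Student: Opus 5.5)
The plan is to prove both inclusions under the natural identification of a $C^{k,\alpha}_b$ function $f$ with the collection $(f, Df, \ldots, D^k f)$. Conversely, a $\mathrm{Lip}(\gamma)$ collection $(f^0,\ldots,f^k)$ will be identified with $f^0$ once we know $f^j = D^j f^0$. Along the way we will also show that the two norms are comparable; in fact we expect each to control the other with constant one.

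\emph{Step 1: $C^{k,\alpha}_b(E,V)\subseteq \mathrm{Lip}(\gamma,E,V)$.} Set $f^j = D^j f$ for $j=0,\ldots,k$. Condition \eqref{eq:lipbound1} holds with $M = \|f\|_{C^{k,\alpha}}$, since each $\sup_p \|D^j f(p)\|_j$ is bounded by this norm. For condition \eqref{eq:lipbound2} at level $j$, apply Lemma~\ref{lem:Ck_taylor_bound} with $i = k-j$. This is the point where convexity of $E$ is used, because that lemma integrates along segments. It bounds the Taylor remainder of $D^j f$ by $[D^k f]_\alpha \|q-p\|^{\alpha+k-j} = [D^k f]_\alpha\|q-p\|^{\gamma-j}$. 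Hence $f\in\mathrm{Lip}(\gamma)$ with $\|f\|_{\mathrm{Lip}(\gamma)}\le \|f\|_{C^{k,\alpha}}$.

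\emph{Step 2: $\mathrm{Lip}(\gamma,E,V)\subseteq C^{k,\alpha}_b(E,V)$.} Let $(f^0,\ldots,f^k)$ be $\mathrm{Lip}(\gamma)$ with constant $M$. Fix $j<k$ and $p\in E$. Since $E$ is open, $p+h\in E$ for all small $h$. Rearranging \eqref{eq:lipbound2} at level $j$ gives
\begin{equation}
\bigl\|f^j(p+h)-f^j(p)-f^{j+1}(p)[h]\bigr\|_j \le \sum_{l=2}^{k-j}\frac{M\|h\|^l}{l!} + M\|h\|^{\gamma-j}.
\end{equation}
To bound the middle terms we use \eqref{eq:lipbound1} together with the submultiplicativity of the tensor norms. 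Because $\gamma - j>1$, the right-hand side is $o(\|h\|)$. So $f^j$ is Fréchet differentiable with derivative $h\mapsto f^{j+1}(p)[\,\cdot\,\otimes h]$, which is $f^{j+1}(p)$ under the currying identification used in Lemma~\ref{lem:Ck_derivative}. Next, condition \eqref{eq:lipbound2} at level $j+1$ gives $\|f^{j+1}(q)-f^{j+1}(p)\|\lesssim \|q-p\| + \|q-p\|^{\gamma-j-1}$, so each $f^{j+1}$ is continuous. Inducting on $j$ then gives $D^j f^0 = f^j$ for all $j\le k$, each continuous, so $f^0\in C^k$ with the prescribed symmetric derivatives. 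At level $j=k$ the sum in \eqref{eq:lipbound2} contains only $l=0$, so $\|f^k(q)-f^k(p)\|\le M\|q-p\|^{\alpha}$. This gives $[D^kf^0]_\alpha\le M$, and with \eqref{eq:lipbound1} we get $\|f^0\|_{C^{k,\alpha}}\le M$. Convexity is not needed in this direction.

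The main obstacle is bookkeeping rather than analysis. Specifically, we must check that the operator-norm estimates in the curried form $\mathbf{L}_{\mathrm{sym}}(U^{\otimes j},\mathbf{L}(U,V))\cong \mathbf{L}(U^{\otimes (j+1)},V)$ are compatible. That is, we need $\|f^{j+l}(p)[\,\cdot\,\otimes h^{\otimes l}]\|_j \le \|f^{j+l}(p)\|\,\|h\|^l$, which follows from the submultiplicativity condition on the tensor norms. We must also note that the Fréchet limit is taken only over $h$ with $p+h\in E$, which is automatic because $E$ is open. Combining the two steps gives equality of the spaces, and in fact of the norms.
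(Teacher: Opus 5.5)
Your proof is correct. Step 1 coincides with the paper's Part 1: take $\hat f^j=D^jf$ and apply Lemma~\ref{lem:Ck_taylor_bound} with $i=k-j$, which is the only place convexity is used.

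Step 2 takes a different route from the paper's Part 2. The paper uses only the level-$0$ remainder estimate in \eqref{eq:lipbound2}, restricted to rays $q=p+tv$. It then invokes the converse of Taylor's theorem \citep{albrecht1971} to conclude in one step that $\hat f^0\in C^k$ with $\hat f^i=D^i\hat f^0$, and reads off $[D^k\hat f^0]_{\gamma-k}\le M$ from the level-$k$ estimate. You instead bootstrap one level at a time. The level-$j$ estimate, together with \eqref{eq:lipbound1}, submultiplicativity and $\gamma-j>1$, gives Fréchet differentiability of $f^j$ with derivative $f^{j+1}$. The level-$(j+1)$ estimate then gives continuity of $f^{j+1}$.

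The paper's route is shorter and needs only the $j=0$ condition. However, it delegates to the cited theorem the identification of the $\hat f^i$ as Fréchet derivatives, including their continuity. Your route uses the full $\mathrm{Lip}(\gamma)$ hypothesis but is entirely elementary and verifies those points explicitly. It also makes clear that convexity is needed only for the inclusion $C^{k,\alpha}_b\subseteq\mathrm{Lip}(\gamma)$, which fits the paper's example on $(-1,0)\cup(0,1)$. Both arguments give equality of the two norms.
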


\begin{proof}
    \textbf{Part $1$, $C^{k,\alpha}_b$ implies $\mathrm{Lip}(\gamma)$:} Let $f\in C^{k,\alpha}_b(E,V)$, and choose $\hat{f}^0=f$ and $\hat{f}^i=D^if$ for $i=1,\ldots,k$. Then 
    \begin{equation}
        \|\hat{f}\|_{\mathrm{Lip}(\gamma)} = \max\left\{\max_{0\leq j\leq k}\sup_{p\in E}||\hat{f}^j(p)||_{j}, \max_{0\leq j\leq k}\sup_{p,q \in E}\frac{\|\hat{f}^j(q)-\sum_{l=0}^{k-j}\frac{\hat{f}^{j+l}(p)[(q-p)^{\otimes l}]}{l!}\|_{j}}{||q-p||_U^{\gamma-j}}\right\},
    \end{equation}
    where 
    \begin{equation}
        ||\cdot||_j = \begin{cases}
            ||\cdot||_V, \quad &j=0, \\
            ||\cdot||_{\operatorname{op}}, \quad &j=1,\ldots,k.
        \end{cases}
    \end{equation}
    Applying Lemma \ref{lem:Ck_taylor_bound}, this simplifies to
    \begin{equation}
        \begin{aligned}
            \|\hat{f}\|_{\mathrm{Lip}(\gamma)} &= \max\left\{\max_{0\leq j\leq k}\sup_{p \in E}||\hat{f}^j(p)||_{j}, \sup_{p,q\in E}\frac{\|\hat{f}^k(q)-\hat{f}^{k}(p)\|_{k}}{||q-p||_U^{\alpha}}\right\}, \\
            &= \max\left\{\max_{0\le j\le k}\sup_{p\in E}\|D^{j}f(p)\|_{j}, [D^{k}f]_{\alpha}\right\}, \\
            &= \|f\|_{C^{k, \alpha}},
        \end{aligned}
    \end{equation}
    which is finite by assumption.
    Therefore, $\hat{f}\in\mathrm{Lip}(\gamma, E, V)$ and $\|\hat{f}\|_{\mathrm{Lip}(\gamma)}=\|f\|_{C^{k, \alpha}}.$
    \\ \\
    \textbf{Part $2$, $\mathrm{Lip}(\gamma)$ implies $C^{k,\alpha}_b$:} Let $\hat{f}\in\mathrm{Lip}(\gamma, E, V)$, $v\in U$ be fixed and $q=p+tv$ for $t\in[0,T]$, where $T$ is small enough that $q\in E$ for all $t$. 
    From \eqref{eq:lipbound2} with $j=0$,
    \begin{equation}
        \left\|\hat{f}^0(p+tv)-\sum_{l=0}^{k}\frac{\hat{f}^{l}(p)[(tv)^{\otimes l}]}{l!}\right\|_{V} = \|R_k(p,tv)\| \leq M\|tv\|^{\gamma},
    \end{equation}
    for all $p\in E$. 
    Since 
    \begin{equation}
        \lim_{t\rightarrow 0} \frac{\|R_k(p,tv)\|}{\|tv\|^{k}} \leq \lim_{t\rightarrow 0} M\|tv\|^{\gamma-k} = 0
    \end{equation}
    uniformly in $p$, then $\hat{f}^0$ is $C^{k}$ with $\hat{f}^i=D^i\hat{f}^0$ for $i=1,\ldots,k$ by the converse of Taylor's Theorem \citep{albrecht1971}. 
    From \eqref{eq:lipbound2} with $j=k$,
    \begin{equation}
        \sup_{p,q\in E}\frac{\bigg|\bigg|D^k\hat{f}^0(q)-D^k\hat{f}^0(p)\bigg|\bigg|_{k}}{||q-p||_U^{\gamma-k}} = [D^k\hat{f}^0]_{\gamma-k} \leq M.
    \end{equation}
    Therefore, $\hat{f}^0\in C_b^{k,\alpha}(E,V)$ with $\alpha = \gamma-k$. 
    Applying the same argument as in part $1$, $\|\hat{f}^0\|_{C^{k, \alpha}}=\|\hat{f}\|_{\mathrm{Lip}(\gamma)}$.
\end{proof}

Although $\mathrm{Lip}(\gamma, E, V)$ and $C^{k,\alpha}_b(E, V)$ are equivalent on open convex subsets $E$, and therefore the entire space $U$, they can differ on generic open sets $E$. 
Taking our example from before, where $E=(-\infty,0) \cup (0,\infty)$ and $f:E \rightarrow \mathbb{R}$ is defined by
\begin{equation}
    f(x) = \begin{cases}
        x-1, \quad &x \in (-\infty, 0), \\
        x+1, \quad &x \in (0, \infty),
    \end{cases}
\end{equation}
then $f\in C^{1,1}(E, \mathbb{R})$, but $f\not\in\mathrm{Lip}(2, E, \mathbb{R})$, as \eqref{eq:lipbound2} diverges for $j=0$ as $x$ approaches $0$.
When $E$ is an open subset, the core difference between $\mathrm{Lip}(\gamma)$ and $C^{k, \alpha}_b$ is requiring a H\"older type bound on each level of the derivative, as opposed to only the highest derivative. 
The additional regularity given by assuming your function is $\mathrm{Lip}(\gamma)$ is sufficient to ensure nesting.

\begin{lemma}[$\mathrm{Lip}(\gamma)$ function nesting \citep{boutaib2016lipschitz, lyons2025}]
\label{lem:lipgamma_nest}
    Let $U$ and $V$ be Banach spaces, $E\subset U$, and $f\in\mathrm{Lip}(\gamma, E, V)$, and $0<\theta<\gamma$. 
    Then $f\in\mathrm{Lip}(\theta, E, V)$ and
    \begin{equation}
        \|f\|_{\mathrm{Lip}(\theta)} \leq (1+e)\|f\|_{\mathrm{Lip}(\gamma)}.
    \end{equation}
\end{lemma}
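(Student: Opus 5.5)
The plan is to take the truncation of the jet and estimate its Taylor remainders directly, splitting according to whether $\|q-p\|_U$ is at least $1$ or less than $1$. Write $f=(f^0,\ldots,f^k)$ with $\gamma\in(k,k+1]$ and $M=\|f\|_{\mathrm{Lip}(\gamma)}$, and let $m$ be the non-negative integer with $\theta\in(m,m+1]$. Since $\theta<\gamma$ we have $m\le k$. The candidate element of $\mathrm{Lip}(\theta,E,V)$ is the truncation $(f^0,\ldots,f^m)$.

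The bound \eqref{eq:lipbound1} is immediate with constant $M$. For \eqref{eq:lipbound2}, fix $0\le j\le m$ and set
\begin{equation}
    R^\theta_j(p,q)=f^j(q)-\sum_{l=0}^{m-j}\frac{f^{j+l}(p)[(q-p)^{\otimes l}]}{l!}.
\end{equation}
The remainder $R^\gamma_j$ is defined in the same way, with the sum running to $k-j$. The one tool needed is the operator estimate
\begin{equation}
    \|f^{j+l}(p)[\,\cdot\otimes(q-p)^{\otimes l}]\|_j\le M\|q-p\|_U^l,
\end{equation}
which follows from the equality $\|v\otimes w\|=\|v\|\|w\|$ for the reasonable tensor norms fixed in Section~\ref{sec:tens_alg}, together with \eqref{eq:lipbound1}.

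\textbf{Case $\|q-p\|_U\ge 1$.} Apply the triangle inequality directly to $R^\theta_j$ without using \eqref{eq:lipbound2}. This gives
\begin{equation}
    \|R^\theta_j(p,q)\|_j\le M+\sum_{l=0}^{m-j}\frac{M\|q-p\|_U^{l}}{l!}.
\end{equation}
Each exponent appearing satisfies $l\le m-j<\theta-j$. Because the base is at least $1$, every power is bounded by $\|q-p\|_U^{\theta-j}$. The total is therefore at most $(1+e)M\|q-p\|_U^{\theta-j}$.

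\textbf{Case $\|q-p\|_U<1$.} Write
\begin{equation}
    R^\theta_j=R^\gamma_j+\sum_{l=m-j+1}^{k-j}\frac{f^{j+l}(p)[(q-p)^{\otimes l}]}{l!}.
\end{equation}
The first term is bounded by $M\|q-p\|_U^{\gamma-j}$ using \eqref{eq:lipbound2} for $\gamma$. Each summand is bounded by $M\|q-p\|_U^{l}/l!$. Every exponent here is at least $\theta-j$: we have $\gamma>\theta$, and $l\ge m+1-j\ge\theta-j$ because $\theta\le m+1$. Since the base is less than $1$, all these powers are at most $\|q-p\|_U^{\theta-j}$. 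Summing, with $\sum_{l\ge1}1/l!=e-1$, gives the bound $eM\|q-p\|_U^{\theta-j}\le(1+e)M\|q-p\|_U^{\theta-j}$.

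Combining the two cases yields $\|f\|_{\mathrm{Lip}(\theta)}\le(1+e)\|f\|_{\mathrm{Lip}(\gamma)}$. The main point needing care is the exponent bookkeeping at the boundary $\theta=m+1$ (where $l=m-j+1$ equals $\theta-j$ exactly) and when $m=k$ (where the extra sum is empty). Both are covered by the non-strict inequalities above. The tensor-norm equality is what makes the operator bound on each summand work on arbitrary, possibly non-convex, $E$ without any appeal to derivatives.
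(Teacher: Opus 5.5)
Your proof is correct. The paper gives no proof of its own for this lemma, citing \citet{boutaib2016lipschitz} and \citet{lyons2025} instead. Your argument is the standard route: truncate the jet, split at $\|q-p\|_U=1$, use the crude triangle inequality for distant points, and for nearby points write the remainder as the $\gamma$-remainder plus the discarded Taylor terms. It yields exactly the stated constant $1+e$, which arises from the far-field case. One small point: your operator estimate needs only sub-multiplicativity $\|v\otimes w\|\le\|v\|\|w\|$, not the equality you invoke.
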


The nesting property is a key advantage of $\mathrm{Lip}(\gamma)$ regularity over $C^{k,\alpha}_b$ regularity on general domains, since it allows the Lie bracket of two $\mathrm{Lip}(\gamma)$ vector fields to be defined globally as a $\mathrm{Lip}(\gamma-1)$ vector field.

\subsection{Lie Bracket}

To define the Lie bracket, we first introduce the derivative of a $\mathrm{Lip}(\gamma)$ function and the composition $g[f]:E\to W$ of $f\in\mathrm{Lip}(\gamma_f,E,V)$ and $g\in\mathrm{Lip}(\gamma_g,E,\mathbf{L}(V,W))$. 
These are the basic ingredients from which the Lie bracket will be constructed.

\begin{definition}[$\mathrm{Lip}(\gamma)$ Derivative]
    Let $f=(f^0,\ldots,f^k)\in \mathrm{Lip}(\gamma, E, V)$ with $\gamma>1$. 
    The derivative of $f$ is defined by
    \begin{equation}
        Df = ((Df)^0, (Df)^1, \ldots, (Df)^{k-1}) = (f^1,f^2,\ldots,f^k).
    \end{equation}
\end{definition}

\begin{lemma}\label{lem:lipgamma_deriv}
    If $f\in \mathrm{Lip}(\gamma, E, V)$ with $\gamma>1$ and $\|f\|_{\mathrm{Lip}(\gamma)}=M_f$, then $Df\in \mathrm{Lip}(\gamma-1, E, \mathbf{L}(U, V))$ with $\|Df\|_{\mathrm{Lip}(\gamma-1)}\leq M_f$.
\end{lemma}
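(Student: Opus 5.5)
The approach is to check the two conditions of Definition~\ref{def:lipgamma} for the collection $Df=(f^1,\ldots,f^k)$ directly, with exponent $\gamma-1\in(k-1,k]$. This is the correct range, so $Df$ has exactly $k$ components, indexed $(Df)^j=f^{j+1}$ for $j=0,\ldots,k-1$, taking values in $\mathbf{L}_{\mathrm{sym}}(U^{\otimes j},\mathbf{L}(U,V))$. Under the currying identification $\mathbf{L}_{\mathrm{sym}}(U^{\otimes j},\mathbf{L}(U,V))\cong\mathbf{L}(U^{\otimes(j+1)},V)$ already used in Lemma~\ref{lem:Ck_derivative}, the operator norms agree. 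For $j\ge1$ this is the standard isometry $\sup_{u,w}\|A[u][w]\|/(\|u\|\|w\|)=\|\tilde A\|_{\operatorname{op}}$. Here I would use that the reasonable tensor norms satisfy $\|u\otimes w\|=\|u\|\|w\|$, so that supremising over simple tensors, or over general tensors by the usual cross-norm argument, gives the same value. For $j=0$ the norm on $\mathbf{L}(U,V)$ is exactly $\|\cdot\|_1$ for $f$.

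The first condition is immediate from this identification. For $j=0,\ldots,k-1$,
\[
\sup_p\|(Df)^j(p)\|_j=\sup_p\|f^{j+1}(p)\|_{j+1}\le M_f,
\]
by \eqref{eq:lipbound1} for $f$.

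For the second condition, fix $j\in\{0,\ldots,k-1\}$. The Taylor remainder for $Df$ at level $j$ is
\[
(Df)^j(q)-\sum_{l=0}^{(k-1)-j}\frac{(Df)^{j+l}(p)[(q-p)^{\otimes l}]}{l!}
= f^{j+1}(q)-\sum_{l=0}^{k-(j+1)}\frac{f^{j+1+l}(p)[(q-p)^{\otimes l}]}{l!}.
\]
This is precisely the level-$(j+1)$ remainder of $f$, with the same summation range, since $k-(j+1)=(k-1)-j$. Applying \eqref{eq:lipbound2} for $f$ at level $j+1$ bounds its $\|\cdot\|_{j+1}$-norm by $M_f\|q-p\|^{\gamma-(j+1)}=M_f\|q-p\|^{(\gamma-1)-j}$. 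This is the required exponent for $Df$ at level $j$, so both conditions hold with constant $M_f$, and hence $\|Df\|_{\mathrm{Lip}(\gamma-1)}\le M_f$.

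The only step requiring care is the bookkeeping of norms under currying, particularly at $j=0$, where $\|\cdot\|_0$ for $Df$ is the norm of the Banach space $\mathbf{L}(U,V)$ and must match $\|\cdot\|_{\operatorname{op}}$ at level $1$ for $f$. Symmetry is not an issue: for $j\ge1$, $f^{j+1}(p)$ is symmetric in all $j+1$ arguments, so it is in particular symmetric in the first $j$. Everything else is a reindexing.
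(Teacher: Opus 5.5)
Your proof is correct and is essentially the paper's argument: identify $(Df)^j$ with $f^{j+1}$ by currying, read off the pointwise bound, and observe that the level-$j$ remainder of $Df$ is the level-$(j+1)$ remainder of $f$, with exponent $\gamma-(j+1)=(\gamma-1)-j$. One small caveat: you only need the curried norm to be bounded by the uncurried one, which follows from sub-multiplicativity $\|u\otimes w\|\le\|u\|\,\|w\|$. The exact isometry you assert can fail for non-projective admissible norms such as the injective norm (e.g.\ the trace form on $\mathbb{R}^2\otimes\mathbb{R}^2$), though this does not affect your conclusion.
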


\begin{proof}
    Trivially, $Df$ satisfies \eqref{eq:lipbound1} with bound $M_f$. 
    Under the currying convention $\mathbf{L}_{\mathrm{sym}}\bigl(U^{\otimes j},\mathbf{L}(U,V)\bigr) \cong \mathbf{L}\bigl(U^{\otimes (j+1)},V\bigr)$, we have
    \begin{equation}
        (Df)^j=f^{j+1}
    \end{equation}
    for $0\leq j\leq k-1$.
    Hence, for all $x,y\in E$,
    \begin{equation}
        \begin{aligned}
            &\left\|(Df)^j(y)-\sum_{l=0}^{k-1-j}\frac{(Df)^{j+l}(x)[(y-x)^{\otimes l}]}{l!}\right\|_{j} \\
            &\qquad\qquad\qquad\quad= \left\|f^{j+1}(y)-\sum_{l=0}^{k-(j+1)}\frac{f^{j+1+l}(x)[(y-x)^{\otimes l}]}{l!}\right\|_{j+1} \\ 
            &\qquad\qquad\qquad\quad\leq M_f\|x-y\|_U^{\gamma-1-j}.
        \end{aligned}
    \end{equation}
    Therefore, $Df$ satisfies \eqref{eq:lipbound2} with bound $M_f$ and $Df=(f^1,\ldots,f^k)\in \mathrm{Lip}(\gamma-1, E, \mathbf{L}(U, V))$.
\end{proof}

\begin{definition}\label{def:comp_lipgamma}
    Let $U,V,W$ be Banach spaces, $E\subset U$, $f\in\mathrm{Lip}(\gamma_f, E, V)$, $g\in\mathrm{Lip}(\gamma_g, E, \mathbf{L}(V, W))$, $\gamma=\min(\gamma_f, \gamma_g)$, and $k$ be the integer such that $k<\gamma\leq k+1$. 
    Then
    \begin{equation}
        h = (h^0, \ldots, h^k) = g[f]
    \end{equation}
    is defined pointwise by
    \begin{equation}\label{eq:comp_faadibruno}
        h^i(p)[u_1\otimes \cdots \otimes u_i] = \sum_{r=0}^i \frac{1}{r!(i-r)!} \sum_{\sigma \in S_i} g^{r}(p)[u_{\sigma(1)}\otimes \cdots \otimes u_{\sigma(r)}]\bigl[f^{i-r}(p)[u_{\sigma(r+1)}\otimes \cdots \otimes u_{\sigma(i)}] \bigr]
    \end{equation}
    for $p\in E$ and $i=0,\ldots, k$.
\end{definition}

This definition is an application of the definition from \parencite[Proposition 3.29]{boutaib2016lipschitz} and is analogous to applying Fa\`a-di-Bruno to a bilinear map applied to two $C^{k,\alpha}$ functions \citep{FaaDiBruno1855}.

\begin{lemma}\label{lem:comp_lipgamma}
    If $f\in\mathrm{Lip}(\gamma_f, E, V)$ and $g\in\mathrm{Lip}(\gamma_g, E, \mathbf{L}(V, W))$ then $h=g[f]\in\mathrm{Lip}(\gamma, E, W)$, where $\gamma=\min(\gamma_f, \gamma_g)$.
\end{lemma}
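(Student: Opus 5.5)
Since $\gamma=\min(\gamma_f,\gamma_g)$ and $\mathrm{Lip}(\gamma)$ spaces are nested (Lemma~\ref{lem:lipgamma_nest}), I first replace $f$ and $g$ by their restrictions to $\mathrm{Lip}(\gamma,E,V)$ and $\mathrm{Lip}(\gamma,E,\mathbf{L}(V,W))$. Concretely, I keep only the first $k+1$ components, where $k<\gamma\le k+1$. This costs at most a factor $(1+e)$ in each norm. So it suffices to treat $\gamma_f=\gamma_g=\gamma$.

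The cleanest route is Definition~\ref{def:lipgamma_poly}. For $x\in E$ let $P_x$ and $Q_x$ be the Taylor polynomials of $f$ and $g$ at $x$. Let $B:\mathbf{L}(V,W)\times V\to W$ be the evaluation map. The polynomial $y\mapsto Q_x(y)[P_x(y)]$ has degree at most $2k$. A direct Leibniz computation shows its Taylor coefficients at $y=x$ of orders $0,\dots,k$ are exactly the $h^i(x)$ of \eqref{eq:comp_faadibruno}. The inner sum over $\sigma\in S_i$ with weight $1/(r!(i-r)!)$ is precisely the symmetrised Leibniz rule. So $H_x$, the degree-$k$ truncation of $Q_x[P_x]$, is the polynomial attached to $h$. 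The bound \eqref{eq:poly_lipbound1} is immediate. The triangle inequality and $\|A[v]\|\le\|A\|\|v\|$ give $\|h^i(x)\|_i\le\sum_r\binom{i}{r}M_gM_f\le 2^kM_fM_g$. Symmetrisation does not increase the operator norm, by condition 1 on the tensor norms.

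For \eqref{eq:poly_lipbound2}, fix $x,y\in E$ and $j\le k$ and write $\delta=\|y-x\|$. I split
\begin{equation}
H^j_y(y)-H^j_x(y)=\bigl(D^j(Q_y[P_y])(y)-D^j(Q_x[P_x])(y)\bigr)+D^j\bigl(Q_x[P_x]-H_x\bigr)(y).
\end{equation}
In the first bracket the $D^j$ of $Q_y[P_y]$ at $y$ equals $H^j_y(y)$, since truncation does not affect low derivatives at the base point. I expand it by the Leibniz rule into terms $B(D^rQ_y(y)-D^rQ_x(y),\,D^{j-r}P_y(y))$ and $B(D^rQ_x(y),\,D^{j-r}P_y(y)-D^{j-r}P_x(y))$. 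The differences are bounded by $M_g\delta^{\gamma-r}$ and $M_f\delta^{\gamma-j+r}$. The other factors are bounded by $M_f$, or by $\|D^rQ_x(y)\|\le\|g^r(y)\|+M_g\delta^{\gamma-r}$. The resulting powers $\delta^{\gamma-r}$ are at least $\delta^{\gamma-j}$ when $\delta\le1$. The second bracket consists of the monomials of $Q_x[P_x]$ of degree between $k+1$ and $2k$, differentiated $j$ times. They are bounded by $C\,M_fM_g\,\delta^{m-j}$ with $m\ge k+1\ge\gamma$, which is again at most $C\delta^{\gamma-j}$ when $\delta\le1$.

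The main obstacle is the large-distance regime $\delta>1$, where higher powers of $\delta$ are not dominated by $\delta^{\gamma-j}$. Here I abandon the polynomial expansion. I bound $\|H^j_y(y)-H^j_x(y)\|$ crudely by $\|h^j(y)\|+\sum_l\|h^{j+l}(x)\|\delta^l/l!$, which is not uniform in $\delta$ either. So instead I use the original form \eqref{eq:lipbound2}. I will rewrite $h^j(y)-\sum_l h^{j+l}(x)[(y-x)^{\otimes l}]/l!$ as the Leibniz expansion of remainders $R^g_r=g^r(y)-\sum g^{r+l}(x)[\cdot]/l!$ and $R^f$. Each remainder is bounded by $M\delta^{\gamma-r}$ uniformly in $\delta$. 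The cross terms of total order above $k-j$ are absorbed by grouping them with the remainders. I expect the bookkeeping in this regrouping, which is an algebraic identity between double sums, to be the fiddly step. Once it is checked, every term carries a factor $\delta^{\gamma-j}$ with coefficients depending only on $k$, $M_f$ and $M_g$. This gives $\|h\|_{\mathrm{Lip}(\gamma)}\le C_k M_fM_g$ and hence $h\in\mathrm{Lip}(\gamma,E,W)$, consistent with \citep[Proposition 3.29]{boutaib2016lipschitz}.
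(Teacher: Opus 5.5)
Your nesting step, the identification of the order-$\le k$ Taylor coefficients of $Q_x[P_x]$ at $x$ with \eqref{eq:comp_faadibruno}, the pointwise bound, and the estimate for $\delta=\|y-x\|\le 1$ are correct. (Where you say $\delta^{\gamma-r}$ is at least $\delta^{\gamma-j}$, you mean at most.) The gap is the case $\delta>1$, and the plan you sketch there would not work. The quantity in \eqref{eq:lipbound2} is the same as the one in \eqref{eq:poly_lipbound2}, so switching formulations changes nothing. Substituting $g^r(y)=D^rQ_x(y)+R^g_r$ and $f^{j-r}(y)=D^{j-r}P_x(y)+R^f_{j-r}$ into the Leibniz formula for $h^j(y)$ just reproduces your small-distance decomposition. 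That gives terms $B(R^g_r,f^{j-r}(y))$, terms $B(D^rQ_x(y),R^f_{j-r})$, and $j$-th derivatives of the monomials of degree $k+1,\dots,2k$. For $\delta>1$ these cannot in general be bounded one at a time by $C\delta^{\gamma-j}$. The bound $M_g\delta^{\gamma-r}$ on $R^g_r$ exceeds $\delta^{\gamma-j}$ whenever $r<j$. The factor $D^rQ_x(y)$ grows like $\delta^{k-r}$. The monomials grow like $\delta^{m-j}$ with $m$ up to $2k$. This growth cancels only in aggregate, so no regrouping into remainders gives every term a factor $\delta^{\gamma-j}$.

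The fix is the crude bound you dismissed as not uniform in $\delta$. It is exactly what works for $\delta>1$, because every power of $\delta$ in $H^j_x(y)$ has exponent $l\le k-j<\gamma-j$. Using sub-multiplicativity of the tensor norms,
\[
\bigl\|H^j_y(y)-H^j_x(y)\bigr\|_j\le\|h^j(y)\|_j+\sum_{l=0}^{k-j}\frac{\|h^{j+l}(x)\|_{j+l}\,\delta^{l}}{l!}\le 2^kM_fM_g\bigl(1+e\,\delta^{k-j}\bigr)\le 2^k(1+e)\,M_fM_g\,\delta^{\gamma-j}.
\]
This is the same device the paper uses for $\|q-p\|_U>1$ in the proof of Lemma~\ref{lem:normcomplip2}.

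With this repair your argument is a complete proof, by a different route from the paper's. The paper also starts from Lemma~\ref{lem:lipgamma_nest}. It then writes $g[f]=B(g,f)$ for the evaluation map $B(A,v)=A[v]$ and invokes \citep[Proposition 3.29]{boutaib2016lipschitz}, obtaining $\|g[f]\|_{\mathrm{Lip}(\gamma)}\le C_\gamma\|g\|_{\mathrm{Lip}(\gamma)}\|f\|_{\mathrm{Lip}(\gamma)}$ with an unspecified $C_\gamma$. Your polynomial-viewpoint argument reproves that proposition for this particular bilinear map. It costs more bookkeeping but gives an explicit constant depending only on $k$.
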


\begin{proof}
    By Lemma \ref{lem:lipgamma_nest}, $f\in\mathrm{Lip}(\gamma_f)$ and $g\in\mathrm{Lip}(\gamma_g)$ implies $f,g\in\mathrm{Lip}(\gamma)$ where $\|f\|_{\mathrm{Lip}(\gamma)}\leq(1+e)\|f\|_{\mathrm{Lip}(\gamma_f)}$ and $\|g\|_{\mathrm{Lip}(\gamma)}\leq(1+e)\|g\|_{\mathrm{Lip}(\gamma_g)}$, with at least one bound holding with equality. 
    Let $B:\mathbf{L}(V,W)\times V \rightarrow W$ be defined by
    \begin{equation}
        B(A, v) = A[v],
    \end{equation}
    for $A\in\mathbf{L}(V, W)$ and $v\in V$. 
    By \parencite[Proposition 3.29]{boutaib2016lipschitz}, the function $g[f]=B(g,f)=(B^0, \ldots, B^k)=(h^0,\ldots,h^k)$, where the $h^i$ are defined by \eqref{eq:comp_faadibruno}, satisfies 
    \begin{equation}
        \|B(g, f)\|_{\mathrm{Lip}(\gamma)} \leq C_{\gamma}\|g\|_{\mathrm{Lip}(\gamma)}\|f\|_{\mathrm{Lip}(\gamma)} \leq C_{\gamma}(1+e)\|g\|_{\mathrm{Lip}(\gamma_g)}\|f\|_{\mathrm{Lip}(\gamma_f)}, 
    \end{equation}
    where $C_{\gamma}$ is a constant depending only on $\gamma$. 
    Therefore, $h=g[f]\in\mathrm{Lip}(\gamma, E, W)$.
\end{proof}

\begin{definition}[$\mathrm{Lip}(\gamma)$ Lie Bracket]
    \label{def:lipgamma_liebracket}
    Let $f=(f^0,\ldots,f^{k_f})\in \mathrm{Lip}(\gamma_f, E, U)$ and $g=(g^0,\ldots,g^{k_g})\in \mathrm{Lip}(\gamma_g, E, U)$ with $\gamma_f,\gamma_g > 1$. 
    Their Lie bracket is defined by
    \begin{equation}
        [f, g] = Dg[f] - Df[g].
    \end{equation}
\end{definition}

\begin{lemma}\label{lem:lipgamma_liebracket_deriv}
    If $f=(f^0,\ldots,f^{k_f})\in \mathrm{Lip}(\gamma_f, E, U)$ and $g=(g^0,\ldots,g^{k_g})\in \mathrm{Lip}(\gamma_g, E, U)$ for $\gamma_f,\gamma_g > 1$, then 
    \begin{equation}
        [f, g] \in \mathrm{Lip}(\min(\gamma_f,\gamma_g)-1, E, U).
    \end{equation}
\end{lemma}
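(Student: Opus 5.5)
The plan is to assemble the result from the three structural facts already established for $\mathrm{Lip}(\gamma)$ functions: differentiation lowers regularity by one (Lemma \ref{lem:lipgamma_deriv}), nesting lets us lower regularity freely (Lemma \ref{lem:lipgamma_nest}), and the bilinear pairing $g[f]$ of two $\mathrm{Lip}$ collections has the regularity of the less regular input (Lemma \ref{lem:comp_lipgamma}). Set $\gamma=\min(\gamma_f,\gamma_g)>1$.

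First, apply Lemma \ref{lem:lipgamma_deriv} to $g$. This gives
\[
Dg\in\mathrm{Lip}(\gamma_g-1,E,\mathbf{L}(U,U)), \qquad \|Dg\|_{\mathrm{Lip}(\gamma_g-1)}\le\|g\|_{\mathrm{Lip}(\gamma_g)} .
\]
Next, view $f$ as an element of $\mathrm{Lip}(\gamma_f-1,E,U)$. Since $0<\gamma_f-1<\gamma_f$, Lemma \ref{lem:lipgamma_nest} gives $\|f\|_{\mathrm{Lip}(\gamma_f-1)}\le(1+e)\|f\|_{\mathrm{Lip}(\gamma_f)}$. Concretely, the nested collection is the truncation $(f^0,\ldots,f^{k'})$, where $k'$ is the integer with $\gamma_f-1\in(k',k'+1]$.

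With both inputs in place, Lemma \ref{lem:comp_lipgamma} applies with $V=W=U$, $\gamma_g-1$ in place of $\gamma_g$, and $\gamma_f-1$ in place of $\gamma_f$. It yields
\[
Dg[f]\in\mathrm{Lip}\bigl(\min(\gamma_f-1,\gamma_g-1),E,U\bigr)=\mathrm{Lip}(\gamma-1,E,U).
\]
Exchanging the roles of $f$ and $g$ gives $Df[g]\in\mathrm{Lip}(\gamma-1,E,U)$ in the same way.

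Finally, $\mathrm{Lip}(\gamma-1,E,U)$ is a vector space. If two collections satisfy \eqref{eq:lipbound1}--\eqref{eq:lipbound2} with constants $M_1$ and $M_2$, then their componentwise difference satisfies both conditions with constant $M_1+M_2$, by the triangle inequality and linearity of the Taylor-type expressions. Hence $[f,g]=Dg[f]-Df[g]\in\mathrm{Lip}(\gamma-1,E,U)$. The argument also gives an explicit bound of the form $\|[f,g]\|_{\mathrm{Lip}(\gamma-1)}\le 2C_{\gamma-1}(1+e)^2\|f\|_{\mathrm{Lip}(\gamma_f)}\|g\|_{\mathrm{Lip}(\gamma_g)}$, up to the exact placement of the nesting constants.

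The main obstacle is bookkeeping: checking that the collections being combined have matching lengths and that Definition \ref{def:comp_lipgamma} is applied at the correct integer level. Let $k$ be the integer with $\gamma-1\in(k,k+1]$. The pairing $g[f]$ in Definition \ref{def:comp_lipgamma} only uses components up to order $k$, so $Dg[f]$ needs $g^1,\ldots,g^{k+1}$ and $f^0,\ldots,f^k$. These exist because $\gamma_g-1\ge\gamma-1>k$ forces $k_g\ge k+1$, and likewise $k_f\ge k+1$. The same count shows that when $\gamma_f\ne\gamma_g$, the two terms $Dg[f]$ and $Df[g]$ are both collections of length $k+1$ indexed consistently, so their difference is well defined.

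I would also note two degenerate points:
\begin{itemize}
\item The case $\gamma-1\le 1$, that is $k=0$, is covered, because all cited lemmas allow any positive regularity index.
\item The nesting step is harmless even when $\gamma_f-1=\gamma$ level coincidences occur, since Lemma \ref{lem:lipgamma_nest} only requires a strict inequality $\theta<\gamma_f$.
\end{itemize}
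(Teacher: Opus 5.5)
Your argument is correct and is essentially the paper's proof: it combines Lemma \ref{lem:lipgamma_nest}, Lemma \ref{lem:lipgamma_deriv} and Lemma \ref{lem:comp_lipgamma}, and then subtracts the two terms. The only difference is the order of steps. The paper first nests both $f$ and $g$ into $\mathrm{Lip}(\gamma)$ with $\gamma=\min(\gamma_f,\gamma_g)$ and then differentiates, whereas you differentiate $g$ at its own level and nest $f$ to $\gamma_f-1$; both routes produce the same collection in $\mathrm{Lip}(\gamma-1,E,U)$, and your checks of closure under subtraction and of component counts supply details the paper leaves implicit.
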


\begin{proof}
    Let $\gamma=\min(\gamma_f,\gamma_g)$. 
    By Lemma \ref{lem:lipgamma_nest}, $f,g\in \mathrm{Lip}(\gamma,E,U)$. 
    Then Lemma \ref{lem:lipgamma_deriv} gives
    \begin{equation}
        Df,Dg\in \mathrm{Lip}(\gamma-1,E,\mathbf{L}(U,U)).
    \end{equation}
    Applying Lemma \ref{lem:comp_lipgamma}, we obtain
    \begin{equation}
        Dg[f],Df[g]\in \mathrm{Lip}(\gamma-1,E,U).
    \end{equation}
    Therefore,
    \begin{equation}
        [f,g]=Dg[f]-Df[g]\in \mathrm{Lip}(\gamma-1,E,U).
    \end{equation}
\end{proof}

\begin{remark}
    The proof of Lemma \ref{lem:comp_lipgamma} relies on Lemma \ref{lem:lipgamma_nest}, the nesting property of $\mathrm{Lip}(\gamma)$ functions. 
    This is the crucial difference between $\mathrm{Lip}(\gamma)$ functions and $C^{k,\alpha}_b$ functions that allows for a well defined global Lie bracket of $\mathrm{Lip}(\gamma)$ functions.
\end{remark}

The polynomial view point of $\mathrm{Lip}(\gamma)$ functions, Definition \ref{def:lipgamma_poly}, motivates an alternative definition of the Lie bracket of two $\mathrm{Lip}(\gamma)$ functions.

\begin{definition}[$\mathrm{Lip}(\gamma)$ Polynomial Lie Bracket]
    \label{def:lipgamma_polyliebracket}
    Let $f=(f^0,\ldots,f^k)\in \mathrm{Lip}(\gamma, E, U)$ and $g=(g^0,\ldots,g^k)\in \mathrm{Lip}(\gamma, E, U)$ with $\gamma > 1$. 
    Let 
    \begin{equation}
        P^f_p(q) = \sum_{l=0}^{k} \frac{f^l(p)[(q-p)^{\otimes l}]}{l!}.
    \end{equation} 
    and
    \begin{equation}
        P^g_p(q) = \sum_{l=0}^{k} \frac{g^l(p)[(q-p)^{\otimes l}]}{l!}
    \end{equation}
    for $p\in E$ and $q\in U$. 
    The Polynomial Lie bracket of $f$ and $g$ is defined pointwise by 
    \begin{equation}
        [P^f_p, P^g_p](q) = \sum_{l=0}^{k-1} \frac{g^{l+1}(p)\left[P^f_p(q) \otimes (q-p)^{\otimes l}\right] - f^{l+1}(p)\left[P^g_p(q) \otimes (q-p)^{\otimes l}\right]}{l!}.
    \end{equation}
    for $p\in E$ and $q\in U$.
\end{definition}

\begin{remark}\label{rem:poly_lie_algebra}
    Definition \ref{def:lipgamma_polyliebracket} relies on the pointwise Lie bracket of two polynomial vector fields.
    Since polynomial vector fields are smooth and closed under this bracket, they form a Lie subalgebra of the space of smooth vector fields.
\end{remark}

Initially, the two definitions seem to produce different Lie brackets. 
In particular, the polynomial at each point in $E$ when using Definition \ref{def:lipgamma_liebracket} is of order $k-1$, whereas the polynomial at each point when using Definition \ref{def:lipgamma_polyliebracket} is of order up to $2k-1$. 
However, the two Lie brackets agree with each other in the $\mathrm{Lip}(\gamma-1)$ sense. 

\begin{lemma}\label{lem:Lipgamma_liebracket_agreement}
    Let $f=(f^0,\ldots,f^k)\in \mathrm{Lip}(\gamma, E, U)$ and $g=(g^0,\ldots,g^k)\in \mathrm{Lip}(\gamma, E, U)$ with $\gamma > 1$ and $k$ the non-negative integer such that $\gamma\in(k,k+1]$. 
    For each $p\in E$, the polynomial defined at each point by their Lie Bracket $P^{[f, g]}_p \in \mathbf{P}^{k-1}(U, U)$ and the polynomial Lie bracket $[P^f_p, P^g_p]$ agree up to the $(k-1)^{\text{th}}$ term. 
\end{lemma}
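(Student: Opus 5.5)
We will compare the two polynomials in the variable $h=q-p$ degree by degree, and show that their homogeneous parts of degree $i=0,\ldots,k-1$ coincide. The polynomial attached to the Lie bracket of Definition~\ref{def:lipgamma_liebracket} at $p$ is
\begin{equation*}
P^{[f,g]}_p(q)=\sum_{i=0}^{k-1}\frac{[f,g]^i(p)[h^{\otimes i}]}{i!},\qquad h=q-p .
\end{equation*}
Here $[f,g]^i=(Dg[f])^i-(Df[g])^i$. By Lemma~\ref{lem:lipgamma_deriv} we have $Dg=(g^1,\ldots,g^k)$ with $(Dg)^r=g^{r+1}$ under the currying convention, and the components of $Dg[f]$ are given by the Fa\`a-di-Bruno formula \eqref{eq:comp_faadibruno}. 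Evaluating \eqref{eq:comp_faadibruno} on the diagonal $u_1=\cdots=u_i=h$, every permutation $\sigma\in S_i$ gives the same term. The sum over $\sigma$ therefore contributes a factor $i!$, and we obtain
\begin{equation*}
\frac{(Dg[f])^i(p)[h^{\otimes i}]}{i!}=\sum_{r=0}^{i}\frac{1}{r!\,(i-r)!}\,g^{r+1}(p)\bigl[f^{i-r}(p)[h^{\otimes(i-r)}]\otimes h^{\otimes r}\bigr].
\end{equation*}
In writing this we use the symmetry of $g^{r+1}(p)$ to place the argument coming from $f$ first, as in Definition~\ref{def:lipgamma_polyliebracket}. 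The analogous identity holds for $Df[g]$ with $f$ and $g$ exchanged.

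Next we expand the polynomial Lie bracket. Substituting $P^f_p(q)=\sum_{m=0}^{k}f^m(p)[h^{\otimes m}]/m!$ into the first term gives
\begin{equation*}
\sum_{l=0}^{k-1}\sum_{m=0}^{k}\frac{1}{l!\,m!}\,g^{l+1}(p)\bigl[f^m(p)[h^{\otimes m}]\otimes h^{\otimes l}\bigr].
\end{equation*}
This is a polynomial in $h$ whose homogeneous pieces are indexed by the total degree $l+m$, ranging from $0$ to $2k-1$. We collect the terms with $l+m=i$ for a fixed $i\le k-1$. Setting $r=l$ and $m=i-r$, the constraints $l\le k-1$ and $m\le k$ are automatically satisfied because $i\le k-1$. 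The degree-$i$ piece is therefore exactly the sum over $r=0,\ldots,i$ displayed above for $(Dg[f])^i/i!$. The same computation for the $f^{l+1}(p)[P^g_p(q)\otimes\cdots]$ term matches it with $(Df[g])^i/i!$. Subtracting the two shows that the degree-$i$ homogeneous components agree for all $i\le k-1$.

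The only real bookkeeping obstacle is the symmetrisation step. One must check that restricting \eqref{eq:comp_faadibruno} to the diagonal collapses the sum over $S_i$ to exactly $i!$ copies of a single term, so that the factor $\frac{1}{r!(i-r)!}$ is neither doubled nor lost. This relies on the symmetry of each $g^{r+1}(p)$ and $f^{i-r}(p)$ as elements of $\mathbf{L}_{\mathrm{sym}}$, together with currying, which moves the slot receiving the $f$-vector to the front.

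Homogeneous polynomials, and hence their coefficients as symmetric multilinear maps, are determined by their values on the diagonal via polarisation. It follows that the $j$-th coefficients of $P^{[f,g]}_p$ and $[P^f_p,P^g_p]$ agree for $j=0,\ldots,k-1$. The terms of degree $k,\ldots,2k-1$ in $[P^f_p,P^g_p]$ are exactly those discarded by the truncation to $\mathbf{P}^{k-1}$, which completes the comparison.
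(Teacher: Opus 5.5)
Your argument is correct and is essentially the paper's proof: both identify the order-$i$ Taylor coefficient of $[P^f_p,P^g_p]$ at $q=p$ with the Fa\`a-di-Bruno expression for $[f,g]^i(p)$ through the same matching $l=r$, $m=i-r$. The paper's observation that only the $i=m-l$ term of the Leibniz expansion survives is your collection by total degree. The remaining difference is mechanical. The paper differentiates $m$ times at $q=p$ on arbitrary arguments $u_1,\ldots,u_m$, which gives the full multilinear identity directly. You instead compare homogeneous parts on the diagonal $h^{\otimes i}$ and recover the symmetric coefficients by polarisation, which avoids the Leibniz rule and the derivative formula for $(q-p)^{\otimes l}$.
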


\begin{proof}
    Let $s\leq k$. 
    Then
    \begin{equation}
        D^s_qP^f_p(q) \bigg|_{q=p} = f^s(p),
    \end{equation}
    and
    \begin{equation}
        D^s_qP^g_p(q) \bigg|_{q=p} = g^s(p).
    \end{equation}
    Additionally,
    \begin{equation}
        D^s_q (q-p)^{\otimes l}\bigg|_{q=p}[u_1 \otimes \cdots \otimes u_s] = \begin{cases} \sum_{\sigma \in S_l} u_{\sigma(1)} \otimes \cdots \otimes u_{\sigma(l)}, \quad &s=l, \\ 0, &\text{otherwise.} \end{cases}
    \end{equation}
    Letting
    \begin{equation}
        F^l_g(q) = \frac{1}{l!} g^{l+1}(p)\left[P^f_p(q) \otimes (q-p)^{\otimes l}\right],
    \end{equation}
    and
    \begin{equation}
        \bigotimes_{\alpha=1}^mu_{\alpha} = u_1 \otimes \cdots \otimes u_{m},
    \end{equation}
    then for $l\leq m \leq k-1$, we first apply the product rule to obtain
    \begin{equation}
        \begin{aligned}
        &D^m_qF^l_g(q) \bigg|_{q=p}\left[\bigotimes_{\alpha=1}^mu_{\alpha}\right] \\
        &= \frac{1}{l!}\sum_{i=0}^m \frac{1}{i!(m-i)!} \sum_{\sigma \in S_m} g^{l+1}(p)\left(D_q^iP^f_p(q)\left[\bigotimes_{\alpha=1}^i u_{\sigma(\alpha)}\right] \otimes D_q^{m-i}(q-p)^{\otimes l}\left[\bigotimes_{\alpha=i+1}^m u_{\sigma(\alpha)}\right]\right).
        \end{aligned}
    \end{equation}
    At $q=p$, the term $D_q^{m-i}(q-p)^{\otimes l}$ vanishes unless $m-i=l$. 
    Therefore only the term $i=m-l$ remains, and so
    \begin{equation}
        \begin{aligned}
        &D^m_qF^l_g(q) \bigg|_{q=p}\left[\bigotimes_{\alpha=1}^mu_{\alpha}\right] \\
        &= \frac{1}{(m-l)!(l!)^2} \sum_{\sigma \in S_m} g^{l+1}(p)\left(D_q^{m-l}P^f_p(q)\left[\bigotimes_{\alpha=1}^{m-l} u_{\sigma(\alpha)}\right] \otimes D_q^{l}(q-p)^{\otimes l}\left[\bigotimes_{\alpha=m-l+1}^{m} u_{\sigma(\alpha)}\right] \right).
        \end{aligned}
    \end{equation}
    We now evaluate both derivatives at $q=p$. 
    Using $D_q^{m-l}P^f_p(q)|_{q=p}=f^{m-l}(p)$ and the formula above for $D_q^l(q-p)^{\otimes l}|_{q=p}$, we obtain
    \begin{equation}
        \begin{aligned}
            &D^m_qF^l_g(q) \bigg|_{q=p}\left[\bigotimes_{\alpha=1}^mu_{\alpha}\right] \\
            &= \frac{1}{(m-l)!(l!)^2} \sum_{\sigma \in S_m}\sum_{\tau\in S_l}
            g^{l+1}(p)\left(
            f^{m-l}(p)\left[\bigotimes_{\alpha=1}^{m-l} u_{\sigma(\alpha)}\right]
            \otimes
            \bigotimes_{\beta=1}^{l} u_{\sigma(m-l+\tau(\beta))}
            \right).
        \end{aligned}
    \end{equation}
    Finally, the sum over $\tau\in S_l$ contributes a factor of $l!$, and hence
    \begin{equation}
        \begin{aligned}
        &D^m_qF^l_g(q) \bigg|_{q=p}\left[\bigotimes_{\alpha=1}^mu_{\alpha}\right] \\
        &= \frac{1}{(m-l)!l!} \sum_{\sigma \in S_m} g^{l+1}(p)\left(f^{m-l}(p)\left[\bigotimes_{\alpha=1}^{m-l} u_{\sigma(\alpha)}\right]\otimes \bigotimes_{\alpha=m-l+1}^{m} u_{\sigma(\alpha)}\right).
        \end{aligned}
    \end{equation}
    Similarly, for
    \begin{equation}
        F^l_f(q) = \frac{1}{l!} f^{l+1}(p)\left[P^g_p(q) \otimes (q-p)^{\otimes l}\right],
    \end{equation}
    the same argument gives
    \begin{equation}
        \begin{aligned}
        D^m_q&F^l_f(q) \bigg|_{q=p}\left[\bigotimes_{\alpha=1}^mu_{\alpha}\right]\\
        &= \frac{1}{(m-l)!l!} \sum_{\sigma \in S_m} f^{l+1}(p)\left(g^{m-l}(p)\left[\bigotimes_{\alpha=1}^{m-l} u_{\sigma(\alpha)}\right] \otimes \bigotimes_{\alpha=m-l+1}^{m} u_{\sigma(\alpha)}\right).
        \end{aligned}
    \end{equation}
    Since $D^m_qF^l_g(q)\bigg|_{q=p}=0$ and $D^m_qF^l_f(q)\bigg|_{q=p}=0$ for $l>m$,
    \begin{equation}
        D^m_q[P^f_p,P^g_p](q)\bigg|_{q=p} = \sum_{l=0}^{m}\left(D^m_qF^l_g(q)\bigg|_{q=p} - D^m_qF^l_f(q)\bigg|_{q=p}\right),
    \end{equation}
    for $0\leq m \leq k-1$. 
    Therefore,
    \begin{equation}
        D^m_q[P^f_p,P^g_p](q) \bigg|_{q=p} = [f, g]^m(p),
    \end{equation}
    for $0\leq m \leq k-1$, where $[f, g]^m(p)$ is obtained by combining Definition \ref{def:comp_lipgamma} and Definition \ref{def:lipgamma_liebracket}. 
    By the definition of $P_p^{[f,g]}$, we also have
    \begin{equation}
        D_q^mP_p^{[f,g]}(q)\bigg|_{q=p} = [f,g]^m(p),
    \end{equation}
    for $0\leq m\leq k-1$. 
    Hence $P_p^{[f,g]}$ and $[P_p^f,P_p^g]$ agree up to the $(k-1)^{\text{th}}$ term.
\end{proof}

Trivially, both the $\mathrm{Lip}(\gamma)$ Lie bracket and the $\mathrm{Lip}(\gamma)$ polynomial Lie bracket satisfy bilinearity and anti-symmetry.

\begin{theorem}\label{thm:Lipgamma_liebracket_Jacobi}
     Let $\gamma > 2$. 
     Both the $\mathrm{Lip}(\gamma)$ Lie bracket and the $\mathrm{Lip}(\gamma)$ polynomial Lie bracket satisfy the Jacobi identity in $\mathrm{Lip}(\gamma-2)$.
\end{theorem}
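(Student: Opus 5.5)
The plan is to reduce the statement to a pointwise identity between Taylor coefficients and then import the smooth Jacobi identity through the polynomial viewpoint.

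\textbf{Step 1: the double brackets are well defined in $\mathrm{Lip}(\gamma-2)$.} Let $f,g,h\in\mathrm{Lip}(\gamma,E,U)$ with $\gamma>2$, and let $k$ be such that $\gamma\in(k,k+1]$, so $k\ge 2$. By Lemma~\ref{lem:lipgamma_liebracket_deriv}, $[g,h]\in\mathrm{Lip}(\gamma-1,E,U)$. Since $\gamma-1>1$, the same lemma applies to $f\in\mathrm{Lip}(\gamma)$ and $[g,h]\in\mathrm{Lip}(\gamma-1)$, giving $[f,[g,h]]\in\mathrm{Lip}(\gamma-2,E,U)$; Lemma~\ref{lem:lipgamma_nest} handles the mismatched regularities. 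The same holds for the other two cyclic terms, so the Jacobi sum $J$ is a well-defined element $(J^0,\ldots,J^{k-2})$ of $\mathrm{Lip}(\gamma-2)$. Its $\mathrm{Lip}(\gamma-2)$-norm is the smallest $M$ satisfying \eqref{eq:lipbound1} and \eqref{eq:lipbound2}, so it suffices to show $J^m(p)=0$ for every $p\in E$ and $0\le m\le k-2$; both conditions then hold with $M=0$.

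\textbf{Step 2 (the main obstacle): locality of the bracket in jets.} I would prove that for $f\in\mathrm{Lip}(\gamma_f)$ and $g\in\mathrm{Lip}(\gamma_g)$, the component $[f,g]^m(p)$ is a fixed universal expression in $f^0(p),\ldots,f^{m+1}(p)$ and $g^0(p),\ldots,g^{m+1}(p)$. This is read off from \eqref{eq:comp_faadibruno} together with $(Dg)^r=g^{r+1}$: the $m$-th component of $Dg[f]$ uses $g^{r+1}$ and $f^{m-r}$ with $0\le r\le m$. The crucial point is that this expression is exactly the one that computes $D^m_q\big|_{q=p}$ of the smooth bracket of the polynomials $P^f_p$ and $P^g_p$. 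This is the content of the computation in the proof of Lemma~\ref{lem:Lipgamma_liebracket_agreement}, and it is unaffected by the degree of the polynomials, provided they carry the correct coefficients up to order $m+1$.

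Some care is needed with truncation. Passing to a lower $\mathrm{Lip}$ class via Lemma~\ref{lem:lipgamma_nest} only drops the top components, so the coefficients used are unchanged.

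\textbf{Step 3: reduce to the smooth Jacobi identity.} Fix $p$ and $m\le k-2$. By Step~2, $[f,[g,h]]^m(p)$ is the $m$-th derivative at $p$ of the smooth bracket of $P^f_p$ with a polynomial whose coefficients up to order $m+1\le k-1$ are those of $[g,h]$ at $p$. By Lemma~\ref{lem:Lipgamma_liebracket_agreement}, $[P^g_p,P^h_p]$ is such a polynomial. Applying Step~2 once more, $[f,[g,h]]^m(p)$ equals
\[
D^m_q\,[P^f_p,[P^g_p,P^h_p]](q)\big|_{q=p}.
\]
The other two cyclic terms are treated in the same way. Polynomial vector fields are smooth (Remark~\ref{rem:poly_lie_algebra}), so Lemma~\ref{lem:Ck_liebracket_Jacobi} shows that the cyclic sum of these polynomial brackets vanishes identically in $q$. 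Hence all its derivatives at $p$ vanish, and $J^m(p)=0$.

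\textbf{Step 4: the polynomial bracket.} The $\mathrm{Lip}(\gamma-1)$ element determined by the polynomial Lie bracket is given by its Taylor coefficients at $p$ up to order $k-1$. By Lemma~\ref{lem:Lipgamma_liebracket_agreement} these coincide with those of $[f,g]$, so the polynomial bracket defines the same element of $\mathrm{Lip}(\gamma-1)$. Its Jacobi identity in $\mathrm{Lip}(\gamma-2)$ therefore follows from Steps~1--3, or directly from the smooth Jacobi identity for polynomials followed by truncation to order $k-2$.

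The delicate point throughout is the index bookkeeping in Step~2. One must check that an order-$m$ output never requires input coefficients above order $m+1$, which is exactly what makes the truncations at $k-1$ and then $k-2$ consistent.
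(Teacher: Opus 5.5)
Your proposal is correct and follows essentially the same route as the paper: at each $p$, apply the Jacobi identity for polynomial vector fields, then use Lemma~\ref{lem:Lipgamma_liebracket_agreement} together with the fact that the order-$\le k-2$ Taylor coefficients of the outer bracket depend only on the order-$\le k-1$ coefficients of the inner bracket. This shows that every component of the Jacobi sum vanishes in $\mathrm{Lip}(\gamma-2)$. Your Step~2 (jet locality, including the remark that nesting only truncates) spells out a fact that the paper asserts in a single sentence.
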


\begin{proof}
    Let $U$ be a Banach space, $E\subset U$, $\gamma>2$, $f,g,h\in\mathrm{Lip}(\gamma, E, U)$, $k$ be the non-negative integer such that $\gamma\in(k,k+1]$, and for $p\in E$, let $P^f_p, P^g_p, P^h_p \in \mathbf{P}^k(U, U)$ be the polynomial corresponding to $f$, $g$, and $h$ at point $p$, respectively. 
    As noted in Remark \ref{rem:poly_lie_algebra}, Polynomial vector fields form a Lie subalgebra. 
    Therefore for all $p\in E$,
    \begin{equation} \label{eq:Lipgamma_polyliebracket}
        \left[P^f_p, [P^g_p, P^h_p]\right](q) + \left[P^g_p, [P^h_p, P^f_p]\right](q) + \left[P^h_p, [P^f_p, P^g_p]\right](q) = 0
    \end{equation}
    and the polynomial Lie bracket satisfies the Jacobi identity. 
    Considering $[g,h]$, Lemma \ref{lem:Lipgamma_liebracket_agreement} means that $P^{[g,h]}_p$ and $[P^g_p, P^h_p]$ are identical in the first $k-1$ terms for all $p$. 
    Since the first $k-2$ terms of $\left[P^f_p, [P^g_p, P^h_p]\right]$ only depend on the first $k-1$ terms of $[P^g_p, P^h_p]$, Lemma \ref{lem:Lipgamma_liebracket_agreement} also implies that $P^{[f, [g,h]]}_p$ and $\left[P^f_p, [P^g_p, P^h_p]\right]$ are identical in the first $k-2$ terms for all $p$. 
    Therefore, \eqref{eq:Lipgamma_polyliebracket} implies that the first $k-2$ terms of 
    \begin{equation}
        P^{[f, [g,h]]}_p(q) + P^{[g, [h,f]]}_p(q) + P^{[h, [f,g]]}_p(q)
    \end{equation}
    are equal to $0$ for all $p$. 
    Furthermore, Lemma \ref{lem:lipgamma_liebracket_deriv} implies that $[f, [g, h]]\in\mathrm{Lip}(\gamma-2, E, U)$ and similarly for the other permutations. 
    Therefore, 
    \begin{equation}
        [f, [g, h]] + [g, [h, f]] + [h, [f, g]] = 0
    \end{equation}
    in $\mathrm{Lip}(\gamma-2, E, U)$ and the $\mathrm{Lip}(\gamma)$ Lie bracket satisfies the Jacobi identity. 
\end{proof}

\subsection{The Extension Theorem}

\label{sec:extension}

Whitney proved necessary and sufficient conditions for derivative data $(f^0,\ldots,f^k)$ on a closed set $E\subset\mathbb{R}^n$ to admit a $C^k$ extension to $\mathbb{R}^n$ \citep{Whitney1934analytic}. 
Stein later recast the problem using $\mathrm{Lip}(\gamma)$ functions and constructed a bounded linear extension operator with norm controlled independently of $E$ \citep{stein1970singular}. 
We refer to the general result as the Stein–Whitney extension theorem.

\begin{theorem} [Stein-Whitney Extension Theorem \parencite{Whitney1934analytic, stein1970singular}]
\label{thm:stein-whitney}
    Let $U$ and $V$ be Banach spaces, $U$ be finite dimensional, $E\subset U$ be closed, and $f\in\mathrm{Lip}(\gamma, E, V)$. 
    Then there exists a function $\hat{f}\in\mathrm{Lip}(\gamma, U, V)$ such that
    \begin{equation}
        \hat{f}(p) = f(p), \quad p\in E,
    \end{equation}
    and 
    \begin{equation}
        ||\hat{f}||_{\mathrm{Lip}(\gamma, U, V)} \leq C ||f||_{\mathrm{Lip}(\gamma, E, V)},
    \end{equation}
    for some constant $C$ independent of $f$ and $E$.
\end{theorem}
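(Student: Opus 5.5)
The plan is to follow Stein's construction. We build $\hat f$ from a Whitney decomposition of the complement $U\setminus E$ together with a partition of unity, and then verify the $\mathrm{Lip}(\gamma)$ estimates region by region. Since $U$ is finite dimensional, we fix a basis and identify $U\cong\mathbb{R}^n$. All norms on $U$ are then equivalent, so the constant $C$ may depend on $n$, $\gamma$ and this identification, but not on $f$ or $E$.

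The first step is the decomposition. We write the open set $U\setminus E$ as a union of closed dyadic cubes $Q_i$ with disjoint interiors, chosen so that $\operatorname{diam}Q_i\le \operatorname{dist}(Q_i,E)\le 4\operatorname{diam}Q_i$. Expanding each cube by a factor $1+\epsilon$ gives cubes $Q_i^*$, and each point lies in at most $N(n)$ of them. We then build a smooth partition of unity $\{\varphi_i\}$ on $U\setminus E$ subordinate to the $Q_i^*$, with derivative bounds $|D^\alpha\varphi_i|\le A_\alpha(\operatorname{diam}Q_i)^{-|\alpha|}$. For each $i$ we pick a nearest point $p_i\in E$ to $Q_i$. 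Using the Taylor polynomials $P_{p}(q)=\sum_{l\le k}f^l(p)[(q-p)^{\otimes l}]/l!$ of Definition~\ref{def:lipgamma_poly}, we define $\hat f=f^0$ on $E$ and $\hat f(x)=\sum_i\varphi_i(x)P_{p_i}(x)$ on $U\setminus E$. On $E$, the candidate derivatives are the $f^j$ themselves.

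Next come the estimates. First, $\hat f$ is smooth off $E$. Fix $x\notin E$ and let $p\in E$ be a nearest point, so that $\delta=\operatorname{dist}(x,E)$ is comparable to $\|x-p\|$. Because $\sum_i\varphi_i=1$, we can write $\hat f(x)-P_p(x)=\sum_i\varphi_i(x)\,(P_{p_i}(x)-P_p(x))$. The key input is the compatibility estimate
\begin{equation*}
\|D^j(P_{p_i}-P_p)(x)\|\le C M\,\|p_i-p\|^{\gamma-j}\text{-type bounds}\lesssim M\delta^{\gamma-j},
\end{equation*}
which follows from condition \eqref{eq:poly_lipbound2} once we expand $P_{p_i}-P_p$ around $p_i$ and use $\|x-p_i\|\lesssim\delta$. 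Differentiating the sum with the Leibniz rule, each factor $\delta^{-|\alpha|}$ from $D^\alpha\varphi_i$ is absorbed by a matching power $\delta^{|\alpha|}$ from the polynomial difference. This gives $\|D^j\hat f(x)-D^jP_p(x)\|\lesssim M\delta^{\gamma-j}$ for $j\le k$, and $\|D^{k+1}\hat f(x)\|\lesssim M\delta^{\gamma-k-1}$.

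Finally we check the conditions of Definition~\ref{def:lipgamma} for arbitrary $x,y\in U$, splitting into cases.
\begin{itemize}
\item Both points in $E$: the bound holds by hypothesis.
\item One point in $E$: the bound follows from the estimate above combined with the hypothesis at $p$.
\item Both points off $E$ with $\|x-y\|\lesssim\operatorname{dist}(x,E)$: the segment $[x,y]$ stays in $U\setminus E$, and Taylor's theorem together with the $D^{k+1}$ bound gives the claim.
\item Both points off $E$ with $\|x-y\|$ large: we pass through nearest points $p_x,p_y\in E$, use the triangle inequality, and note that $\|p_x-p_y\|\lesssim\|x-y\|$.
\end{itemize}
The sup bounds \eqref{eq:lipbound1} come from the same comparison near $E$. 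Far from $E$, with $\delta\ge1$, the terms are bounded directly: the derivatives of the $\varphi_i$ are bounded and the $P_{p_i}$ are controlled. This far-field regime is where care is needed, and it is where we expect to use the $\mathrm{Lip}(\gamma)$ sup bounds rather than the remainder bounds.

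The main obstacle is the compatibility estimate, which bounds $D^j(P_{p_i}-P_p)$ at $x$ uniformly and with the right power of $\delta$. This requires rewriting $P_{p_i}(x)-P_p(x)$ as $\sum_j (f^j(p_i)-P^j_p(p_i))[(x-p_i)^{\otimes j}]/j!$ and bounding each term by $M\|p_i-p\|^{\gamma-j}\|x-p_i\|^j$. Tracking the binomial sums carefully then yields a constant that depends only on $n$ and $\gamma$, and so is independent of $E$ and $f$.
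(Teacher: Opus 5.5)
Your route is the one the paper relies on. The paper's proof cites Stein's Whitney-cube construction for $\mathbb{R}^n\to\mathbb{R}$, transfers it to finite-dimensional $U$ by norm equivalence, and notes via \citep[Appendix B]{baldi2018time} that it holds verbatim for Banach-valued $V$; your norm-only estimates also cover general $V$.

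However, the extension as you define it fails in the far field, and your claim that there ``the $P_{p_i}$ are controlled'' is false. At distance $\delta$ from $E$, the Taylor polynomial $P_{p_i}(x)$ has size up to $M\sum_{l\le k}\delta^l/l!$. Your own near-field estimate only gives $\|D^j\hat f(x)-D^jP_p(x)\|\lesssim M\delta^{\gamma-j}$, and this grows with $\delta$ for every $j\le k$. Concretely, take $U=V=\mathbb{R}$, $E=\{0\}$, $\gamma=2$, $f^0(0)=0$ and $f^1(0)=1$. Then $\|f\|_{\mathrm{Lip}(2,E,\mathbb{R})}=1$, and every cube has $p_i=0$. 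Your formula therefore gives $\hat f(x)=\sum_i\varphi_i(x)\,x=x$, which violates the sup bound \eqref{eq:lipbound1}. So whenever $\gamma>1$ and $U\setminus E$ contains points arbitrarily far from $E$, your $\hat f$ can fail to lie in $\mathrm{Lip}(\gamma,U,V)$ at all.

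The missing ingredient is Stein's cutoff: off $E$, the sum defining $\hat f$ is taken only over Whitney cubes with $\operatorname{diam}Q_i\le 1$. The Whitney geometry then gives three regimes.
\begin{itemize}
\item Near $E$: the restricted sum $\sum'_i\varphi_i$ still equals $1$ once $\operatorname{dist}(x,E)$ is below a fixed constant. Your comparison with $P_p$ and the Leibniz-rule absorption go through unchanged there.
\item Far from $E$: $\hat f\equiv 0$ once $\operatorname{dist}(x,E)$ exceeds another fixed constant.
\item Intermediate band: every contributing cube has diameter comparable to $1$ and satisfies $\|x-p_i\|\lesssim 1$. Hence $\|D^j\hat f(x)\|\lesssim M$ follows from $\|f^l(p_i)\|\le M$ and $|D^\alpha\varphi_i|\lesssim 1$. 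This is exactly where the sup bounds, rather than the remainder bounds, enter.
\end{itemize}
With these global sup bounds you can close the remainder estimates for pairs with $\|x-y\|\ge 1$, including your case of two far-apart points off $E$, since $\|y-x\|^{k-j}\le\|y-x\|^{\gamma-j}$ there. With this modification your outline is Stein's proof, and the constant depends only on $n$, $\gamma$ and the norm on $U$.
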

\begin{proof}
    Stein proves this theorem by construction for the case $U=\mathbb{R}^n$ and $V=\mathbb{R}$ \citep{stein1970singular}. 
    Given the equivalence of Banach norms on $\mathbb{R}^n$, $U$ can be replaced by any finite dimensional Banach space.
    Furthermore, as noted in \citep[Appendix B]{baldi2018time}, Stein's proof holds exactly for any Banach space $V$ and $\mathrm{Lip}(\gamma)$ functions as given in Definition \ref{def:lipgamma}.
\end{proof}

Despite $\mathrm{Lip}(\gamma)$ functions being well defined for infinite dimensional $U$, Theorem \ref{thm:stein-whitney} is in general not true, as shown by the counter example of \textcite{wells1973differentiable}. 
Fefferman has extended the work of Whitney by proving necessary and sufficient conditions for the existence of a $C^k$ extension to a function with only its value specified on $E$ \citep{fefferman2006whitney}.

\section{Composition of $\mathrm{Lip}(\gamma)$ Functions}

\subsection{Introduction}

In Chapter \ref{chap:ncde}, the Log-ODE method will be applied to a CDE where the vector field is parametrised by a neural network. 
A non-trivial application of the Log-ODE method requires the neural network vector field to be $\mathrm{Lip}(\gamma)$ for $\gamma>1$. 
A common strategy for establishing regularity of neural networks is to verify that each layer of the network satisfies an appropriate regularity condition and then invoke a composition theorem.
This section develops the mathematical foundations required to apply that argument for $\mathrm{Lip}(\gamma)$ functions.

\begin{definition}[Partition of a Set]
    Let $\mathcal{P}(n)$ be the set of all partitions $\pi = (\pi_1, \ldots, \pi_{|\pi|})$ of the set $\{1, \ldots, n\}$, where $|\pi|$ denotes the number of parts in the partition $\pi$, and $|\pi_i|$ the number of elements in the $i^{\text{th}}$ part of the partition.
\end{definition}

\begin{definition} [Composition of $\mathrm{Lip}(\gamma)$ Functions \citep{cass2012new}]
\label{def:comp}
    Let $U$, $V$, and $W$ be Banach spaces, $E\subset U$ and $F\subset V$. 
    The composition of $f\in\mathrm{Lip}(\gamma, E, F)$ and $g\in\mathrm{Lip}(\gamma,F,W)$, denoted $h = (h^0, h^1, \ldots, h^k) \in \mathrm{Lip}(\gamma, E, W)$, is defined by
    \begin{equation}
        \begin{aligned}
            h^0(p) &= g^0\big( f^0(p) \big), \\
            h^n(p)[u_1\otimes \cdots \otimes u_n] &= \sum_{\pi \in \mathcal{P}(n)} g^{|\pi|} \big( f^0(p) \big) \left[ f^{|\pi_1|}(p)[u_{\pi_1}] \otimes \cdots \otimes f^{|\pi_{|\pi|}|}(p)[u_{\pi_{|\pi|}}] \right],
    \end{aligned}
    \end{equation}
    where $u_{\pi_i}$ is the tensor product of the vectors indexed by $\pi_i$,
    \begin{equation}
        u_{\pi_i} = \bigotimes_{j\in\pi_i} u_j.
    \end{equation}
\end{definition}

When $\gamma<1$, the composition $h$ in Definition \ref{def:comp} is not necessarily a $\mathrm{Lip}(\gamma)$ function. For example, $f(x)=x^{\gamma}$ and $g(x)=x^{\gamma}$ are both $\gamma-$H\"older continuous on $[0,1]$, whereas $h(x)=(g\circ f)(x)=x^{\gamma^2}$ is not. When $\gamma=1$, then $h\in\mathrm{Lip}(1,E,W)$ and we have the standard Lipschitz norm bound
\begin{equation}
    ||h||_{\mathrm{Lip}(1,E,W)} \leq ||g||_{\mathrm{Lip}(1,F,W)}\max\left(||f||_{\mathrm{Lip}(1,E,F)},1\right).
\end{equation}

\begin{lemma}[Composed $\mathrm{Lip}(\gamma)-$norm \citep{cass2012new, boutaib2016lipschitz}]
\label{lem:comp}
    For $\gamma\geq1$, the composition $h$ defined in Definition \ref{def:comp} satisfies
    \begin{equation}
    \label{eq:normcomp}
        ||h||_{\mathrm{Lip}(\gamma,E,W)} \leq C_{\gamma}||g||_{\mathrm{Lip}(\gamma,F,W)}\max\left(||f||^{\gamma}_{\mathrm{Lip}(\gamma,E,F)},1\right),
    \end{equation}
    where $C_{\gamma}$ is a constant independent of $f$ and $g$.
\end{lemma}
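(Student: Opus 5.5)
The plan is to verify the two conditions \eqref{eq:lipbound1} and \eqref{eq:lipbound2} of Definition~\ref{def:lipgamma} for $h$ directly from the Fa\`a-di-Bruno formula of Definition~\ref{def:comp}. Write $M_f=\|f\|_{\mathrm{Lip}(\gamma)}$, $M_g=\|g\|_{\mathrm{Lip}(\gamma)}$, $\bar M_f=\max(M_f,1)$, and let $k$ be the integer with $\gamma\in(k,k+1]$. All combinatorial factors will be absorbed into $C_\gamma$: Bell numbers $B_n$ for $n\le k$, binomial coefficients, and factorials.

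\textbf{Step 1 (sup bound).} For each partition $\pi\in\mathcal{P}(n)$, sub-multiplicativity of the tensor norms gives
\[
\bigl\|g^{|\pi|}(f^0(p))\bigl[f^{|\pi_1|}(p)[u_{\pi_1}]\otimes\cdots\bigr]\bigr\|\le M_g M_f^{|\pi|}\|u_1\|\cdots\|u_n\|.
\]
Since $|\pi|\le n\le k<\gamma$, we have $M_f^{|\pi|}\le \bar M_f^{\,n}\le \max(M_f^\gamma,1)$. Therefore $\|h^n(p)\|_n\le B_n M_g\max(M_f^\gamma,1)$.

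\textbf{Step 2 (remainder, far regime).} Let $R^n_h(p,q)=h^n(q)-\sum_{l=0}^{k-n}h^{n+l}(p)[(q-p)^{\otimes l}]/l!$, and set $x=\bar M_f\|q-p\|$. Suppose first that $x\ge1$. The triangle inequality and the refined form of Step~1 give $\|h^{m}\|\le C M_g\bar M_f^{\,m}$. Hence
\[
\|R^n_h\|\le C M_g\bar M_f^{\,n}\sum_{l=0}^{k-n}x^l\le C M_g\bar M_f^{\,n}x^{\gamma-n},
\]
using $l\le k-n<\gamma-n$ and $x\ge1$. The right-hand side equals $CM_g\bar M_f^{\,\gamma}\|q-p\|^{\gamma-n}$, which has the required form.

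\textbf{Step 3 (remainder, near regime $x\le1$).} This is the main obstacle. Here I would use the polynomial viewpoint of Definition~\ref{def:lipgamma_poly}. The jet of $h$ at $p$ is the degree-$k$ truncation of $Q_p:=P^g_{f^0(p)}\circ P^f_p$. Thus $h^n(q)$ is the $n$-th derivative at $q$ of $Q_q$, and $\sum_l h^{n+l}(p)[(q-p)^{\otimes l}]/l!$ is the $n$-th derivative at $q$ of the truncated $Q_p$. I would split $R^n_h$ into three pieces.
\begin{enumerate}[label=(\roman*)]
\item The difference $D^n\bigl(P^g_{f^0(q)}\circ P^f_q-P^g_{f^0(p)}\circ P^f_q\bigr)(q)$. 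By the chain rule this is a sum of terms $(P^{g,r}_{f^0(q)}-P^{g,r}_{f^0(p)})(f^0(q))$ applied to products of $f^{j}(q)$. The $g$-remainder bound, together with $\|f^0(q)-f^0(p)\|\le M_f\|q-p\|$, gives terms of size $M_g M_f^{\gamma-r}\|q-p\|^{\gamma-r}\cdot M_f^{r}$. Their sum is $M_g M_f^{\gamma}\|q-p\|^{\gamma-r}$, and since $r\le n$ and $\|q-p\|\le 1$ (or rescaled by $x$) this is at most $\bar M_f^{\,\gamma}\|q-p\|^{\gamma-n}$.
\item The difference $D^n\bigl(P^g_{f^0(p)}\circ P^f_q-P^g_{f^0(p)}\circ P^f_p\bigr)(q)$. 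I would expand this multilinearly. Each term contains at least one factor $P^{f,j}_q(q)-P^{f,j}_p(q)$, which is bounded by $M_f\|q-p\|^{\gamma-j}$. The remaining factors are $P^{f,j'}_p(q)$, and these are bounded by $CM_f$ using $x\le1$.
\item The degree $>k$ part of $D^nQ_p(q)$ dropped by truncation. This consists of monomials $\|q-p\|^{m-n}$ with $m>k$, carrying coefficients $M_gM_f^{j}$ where $j$ is at most the degree. Writing them as $M_g\bar M_f^{\,n}x^{m-n}$ and using $x\le1$ with $m-n\ge\gamma-n$ bounds them by $M_g\bar M_f^{\,\gamma}\|q-p\|^{\gamma-n}\cdot x^{m-\gamma}$.
\end{enumerate}

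The delicate point throughout Step~3 is tracking the powers of $M_f$ so that every term is of the form $\bar M_f^{\,a}\|q-p\|^{b}$ with $a-b=n$. The threshold $x=1$ then converts everything to the exponent $\gamma$. I expect piece~(iii), which involves high-degree monomials, to need exactly this rescaling rather than simply $\|q-p\|\le1$. If the bookkeeping fails for some term, I would fall back on \citep{boutaib2016lipschitz}, which gives boundedness up to a constant, and use a scaling argument to recover the stated power. Concretely, this means replacing $f$ by $f(\cdot/\lambda)$ with $\lambda=\bar M_f$.
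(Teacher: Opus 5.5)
Your argument is correct, and it takes a genuinely different route from the paper. The paper's proof of this lemma only points to the explicit computation in \citet{cass2012new}, with \citet{boutaib2016lipschitz} as an alternative. You instead give a self-contained proof valid for every $k$.

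The nearest thing the paper actually carries out is the proof of Lemma~\ref{lem:normcomplip2}, and your jet decomposition is exactly its generalisation. For $k=1$, piece (i) is the term $R^g_0(f^0(p),f^0(q))$ (respectively $R^g_1(f^0(p),f^0(q))[f^1(q)[u]]$). Piece (ii) is $g^1(f^0(p))[R^f_0(p,q)]$ (respectively $g^1(f^0(p))[R^f_1(p,q)[u]]$). Piece (iii) vanishes because $P^g_{f^0(p)}\circ P^f_p$ is affine.

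What your organisation buys is transparency about the exponent. The factor $\max(M_f^\gamma,1)$ enters only through the $g$-remainder in (i), and every other term carries at most $k<\gamma$ powers of $M_f$. The far regime is a one-line homogeneity estimate, and it is visible where $\gamma\ge 1$ is used: Lipschitz continuity of $f^0$ in (i), and $\bar M_f\le\bar M_f^{\gamma}$ in (ii). The threshold $\|q-p\|=1/\max(M_f,1)$ is convenient but not essential. With the threshold $1$ used in Lemma~\ref{lem:normcomplip2}, the same three pieces close for the same reason. The citation buys brevity but leaves $C_\gamma$ implicit, whereas your route yields an explicit, if crude, constant depending only on $k$.

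A few points to tighten:
\begin{itemize}
\item $f^0$ is Lipschitz with constant $eM_f$ on $\|q-p\|\le 1$, or $(1+e)M_f$ globally by Lemma~\ref{lem:lipgamma_nest}, not with constant $M_f$. This only changes constants.
\item In (ii) you must also account for the shift of the evaluation point of $D^rP^g_{f^0(p)}$ from $P^f_p(q)$ to $f^0(q)$, i.e.\ the $j=0$ defect $R^0_f(p,q)$. Write $D^rP^g_{f^0(p)}(z)=\sum_{m\le k-r}g^{r+m}(f^0(p))[(z-f^0(p))^{\otimes m}\otimes\cdot\,]/m!$, where $\|z-f^0(p)\|\le eM_f\|q-p\|$ for $z\in\{f^0(q),P^f_p(q)\}$, and telescope one slot at a time.
\item Under that telescoping, a defect in a $z$-slot contributes at most $CM_gM_f^{m+r}\|q-p\|^{\gamma+m-1}$. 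A defect in the $i$-th block contributes at most $CM_gM_f^{m+r}\|q-p\|^{\gamma+m-j_i}$. Both are bounded by $CM_g\bar M_f^{\gamma}\|q-p\|^{\gamma-n}$ using $\bar M_f\|q-p\|\le 1$, $r\le n-j_i+1$ and $\gamma\ge 1$.
\item This is where $x\le 1$ is genuinely needed. The bound on $P^{f,j'}_p(q)$ that you cite only uses $\|q-p\|\le 1$.
\item The correct bookkeeping invariant is $a-b\le n$ together with $b\ge\gamma-n$, not $a-b=n$. In (i), for instance, you get $a-b=r$.
\item The scaling fallback is unnecessary, and as stated it would not help. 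Replacing $f$ by $f(\cdot/\lambda)$ leaves $\sup\|f^0\|$ unchanged, so it does not bring $\|f\|_{\mathrm{Lip}(\gamma)}$ down to $1$.
\end{itemize}
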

\begin{proof}
    Explicit calculation can be used to verify that if $f$ and $g$ are $\mathrm{Lip}(\gamma)$, Definition \ref{def:lipgamma} implies $h$ is $\mathrm{Lip}(\gamma)$ with $||h||_{\mathrm{Lip}(\gamma)}$ obeying \eqref{eq:normcomp} \citep{cass2012new}.
\end{proof}
The original statement of Lemma \ref{lem:comp} in \citep{cass2012new} gives \eqref{eq:normcomp} as
\begin{equation}
    \label{eq:compboundincorrect}
        ||g\circ f||_{\mathrm{Lip}(\gamma)} \leq C_{\gamma}||g||_{\mathrm{Lip}(\gamma)}\max\left\{||f||^{\textcolor{red}{k}}_{\mathrm{Lip}(\gamma)},1\right\}.
\end{equation}
We believe this is a small erratum, with the intended power being $\gamma$, as for $g:[0,1]\rightarrow[0,1]$ defined by $g(x)=x$, \eqref{eq:compboundincorrect} implies there exists $C_1>0$ such that
\begin{equation}
||g\circ f||_{\mathrm{Lip}(1)} = ||f||_{\mathrm{Lip}(1)} \leq C_1||g||_{\mathrm{Lip}(1)}=C_1
\end{equation}
for all bounded and Lipschitz $f:[0,1]\rightarrow[0,1]$. As a counterexample, for any $C_1>0$, take $f(x)=x^{n}$ with $n>\max\{C_1, 1\}$. 
\citet{boutaib2016lipschitz} gives an alternative proof of Lemma \ref{lem:comp} with the bound $||f||^{\gamma}_{\mathrm{Lip}(\gamma,E,F)}$.
\\ \\
Accurately bounding the $\mathrm{Lip}(\gamma)-$norm of a neural network requires bounds on $C_{\gamma}$ in (\ref{eq:normcomp}). 
This can be obtained via the explicit calculations mentioned in the proof of Lemma \ref{lem:comp}, and these have been completed for the case $\gamma \in (1,2]$.

\subsection{The Case $1<\gamma\leq 2$}
\label{sec:1_2_case}

For $1<\gamma\leq 2$, a $\mathrm{Lip}(\gamma)$ function has only two components, which makes it feasible to track the relevant quantities explicitly and obtain a concrete bound for $C_\gamma$.

\begin{lemma} 
    \label{lem:normcomplip2}
    Let $U$, $V$, and $W$ be Banach spaces and $E\subset U$ and $F \subset V$. For $\gamma \in (1,2]$, let $f = ( f^{0} , f^{1} ) \in \mathrm{Lip}(\gamma,E,F)$ and $g = (g^{0} , g^{1} ) \in \mathrm{Lip}(\gamma,F,W)$. 
    Consider $h^{0} : E \to W$ and $h^{1} : E \to \mathbf{L}(U,W)$ defined for $p \in E$ and $u \in U$ by 
    \begin{equation}
    \label{eq:hdef}
    		h^{0}(p) := g^{0} \left( f^{0}(p) \right)
    		\qquad \text{and} \qquad 
    		h^{1}(p)[u] := g^{1}\left( f^{0}(p) \right) \left[ f^{1}(p)[u] \right].
    \end{equation}
    Then $h := \left( h^{0} , h^{1} \right) \in \mathrm{Lip}(\gamma,E,W)$ and 
    \begin{equation}
    	\label{eq:normcomplip2}
    		|| h ||_{\mathrm{Lip}(\gamma,E,W)} 
    		\leq 
    		\left( 1 + 2^{\gamma} \right) || g ||_{\mathrm{Lip}(\gamma,F,W)} \max \left\{ 1 , || f ||_{\mathrm{Lip}(\gamma,E,F)}^{\gamma} \right\}. 
    \end{equation}
\end{lemma}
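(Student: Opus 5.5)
The plan is to verify directly the two families of conditions in Definition \ref{def:lipgamma} for $k=1$, with the explicit constant $M := (1+2^{\gamma}) M_g \max\{1, M_f^{\gamma}\}$. Here $M_f = \|f\|_{\mathrm{Lip}(\gamma)}$ and $M_g = \|g\|_{\mathrm{Lip}(\gamma)}$. The conditions are:
\begin{enumerate}
\item the sup bounds \eqref{eq:lipbound1} on $h^0$ and $h^1$;
\item the $j=1$ Hölder bound $\|h^1(y)-h^1(x)\|_{\operatorname{op}} \le M\|y-x\|^{\gamma-1}$;
\item the $j=0$ Taylor-remainder bound $\|h^0(y)-h^0(x)-h^1(x)[y-x]\| \le M\|y-x\|^{\gamma}$.
\end{enumerate}
Throughout I will use that $M_f \le \max\{1,M_f^\gamma\}$ because $\gamma>1$.

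The sup bounds are immediate: $\|h^0\| \le M_g$ and $\|h^1(p)\|_{\operatorname{op}} \le M_g M_f$. The key auxiliary estimate controls the increment $\Delta := f^0(y)-f^0(x)$. By the $j=0$ condition for $f$,
\begin{equation*}
\|\Delta\| \le M_f\|y-x\| + M_f\|y-x\|^{\gamma},
\end{equation*}
and trivially $\|\Delta\| \le 2M_f$. I will therefore split every estimate into two cases. When $\|y-x\|\le 1$ I use $\|\Delta\| \le 2M_f\|y-x\|$. When $\|y-x\|>1$ I use crude sup bounds and absorb them using $\|y-x\|^{\gamma-1}\ge 1$ and $\|y-x\|^{\gamma}\ge \|y-x\|$.

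For $j=1$, I write
\begin{equation*}
h^1(y)-h^1(x) = g^1(f^0(y))\bigl[f^1(y)-f^1(x)\bigr] + \bigl(g^1(f^0(y))-g^1(f^0(x))\bigr)\bigl[f^1(x)\bigr].
\end{equation*}
The first term is at most $M_gM_f\|y-x\|^{\gamma-1}$. The second is at most $M_g\|\Delta\|^{\gamma-1}M_f$. In the near case this gives $2^{\gamma-1}M_gM_f^{\gamma}\|y-x\|^{\gamma-1}$. In the far case the whole difference is at most $2M_gM_f$. Both cases fit under $M$.

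For $j=0$, I insert the intermediate term $g^1(f^0(x))[\Delta]$:
\begin{equation*}
\bigl(g^0(f^0(y))-g^0(f^0(x))-g^1(f^0(x))[\Delta]\bigr) + g^1(f^0(x))\bigl[\Delta - f^1(x)[y-x]\bigr].
\end{equation*}
The first bracket is at most $M_g\|\Delta\|^{\gamma}$ by the $j=0$ condition for $g$. The second is at most $M_gM_f\|y-x\|^{\gamma}$ by the $j=0$ condition for $f$. In the near case this yields $(2^{\gamma}M_f^{\gamma}+M_f)M_g\|y-x\|^{\gamma}$, which is exactly where the constant $1+2^{\gamma}$ comes from. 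In the far case I bound the expression by $2M_g + M_gM_f\|y-x\|$, giving $\le 3M_g\max\{1,M_f^\gamma\}\|y-x\|^{\gamma}$, and $3\le 1+2^{\gamma}$.

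I expect the main obstacle to be the bookkeeping for $\|\Delta\|$. The naive bound $M_f\|y-x\|+M_f\|y-x\|^\gamma$ only becomes linear in $\|y-x\|$ on the near regime. So the near/far split must be arranged so that every far-case constant is dominated by $1+2^{\gamma}$, while the near case produces the power $M_f^{\gamma}$ rather than a worse power.
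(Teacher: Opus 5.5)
Your proposal is correct and follows the paper's proof essentially step for step. It uses the same split at $\|q-p\|_U=1$, the same near-regime estimate $\|f^0(q)-f^0(p)\|\le 2M_f\|q-p\|$, and the same decomposition of the $j=0$ remainder into $g^1(f^0(p))[R^f_0(p,q)]+R^g_0(f^0(p),f^0(q))$; your $j=1$ splitting is just the mirror image of the paper's $g^1(f^0(p))[R^f_1(p,q)]+R^g_1(f^0(p),f^0(q))[f^1(q)]$, and the constant $1+2^{\gamma}$ arises in exactly the same place.
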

\begin{proof}
To prove that $h=(h^0,h^1)\in \mathrm{Lip}(\gamma,E,W)$, we must bound the pointwise terms $h^0$ and $h^1$, together with the corresponding remainder terms from Definition \ref{def:lipgamma}. We proceed in four steps. First, we record the bounds for $f$ and $g$ that follow directly from Definition \ref{def:lipgamma}. Next, we prove the pointwise bounds for $h^0$ and $h^1$. We then estimate the remainder terms $R_0^h$ and $R_1^h$ separately in the cases $\|q-p\|_U>1$ and $\|q-p\|_U\leq 1$. Finally, we combine the estimates to obtain the norm bound claimed in \eqref{eq:normcomplip2}.
\\ \\
We begin by establishing the bounds on $f^{0} : E \to F$, $f^{1} : E \to \mathbf{L}(U,V)$, $g^{0} : F \to W$ and $g^{1} : F \to \mathbf{L}(V,W)$ that arise directly from Definition \ref{def:lipgamma}.  
Letting $M_f=||f||_{\mathrm{Lip}(\gamma,E,F)}$, for all $p\in E$
\begin{equation}
	\label{f_pointwise_bounds}
		(\bI) \quad \left|\left| f^{0}(p) \right|\right|_V \leq M_f
		\qquad \text{and} \quad 
		(\bII) \quad \left|\left| f^{1}(p) \right|\right|_{\mathbf{L}(U,V)} \leq M_f.
\end{equation}
Similarly, letting $M_g=||g||_{\mathrm{Lip}(\gamma,F,W)}$, for all $x \in F$ we have that
\begin{equation}
	\label{g_pointwise_bounds}
		(\bI) \quad \left|\left| g^{0}(x) \right|\right|_W \leq M_g
		\qquad \text{and} \quad 
		(\bII) \quad \left|\left| g^{1}(x) \right|\right|_{\mathbf{L}(V,W)} \leq M_g.
\end{equation}
Define $R^f_0 : E \times E \to V$ and $R^f_1 : E \times E \to \mathbf{L}(U,V)$ by
\begin{equation}
	\label{f_remainder_term_defs}
    \begin{aligned}
		R^f_0 (p,q) &:= f^{0}(q) - f^{0}(p) - f^{1}(p)[q-p], \\
		R^f_1(p,q)[u] &:= f^{1}(q)[u] - f^{1}(p)[u],
  \end{aligned}
\end{equation}
for any $p,q \in E$ and $u \in U$. 
Then
\begin{equation}
	\label{f_remainder_term_bounds}
		\begin{aligned}
			&	(\bI) \quad \left|\left| R^f_0(p,q) \right|\right|_V \leq M_f ||q-p||_U^{\gamma}, \\
			&	(\bII) \quad \left|\left| R^f_1(p,q) \right|\right|_{\mathbf{L}(U,V)} \leq M_f ||q-p||_U^{\gamma - 1}.
		\end{aligned}
\end{equation}
Similarly, define
$R^g_0 : F \times F \to W$ and $R^g_1 : F \times F \to \mathbf{L}(V,W)$ by
\begin{equation}
	\label{g_remainder_term_defs}
    \begin{aligned}
		R^g_0 (x,y) &:= g^{0}(y) - g^{0}(x) - g^{1}(x)[y-x], \\
		R^g_1(x,y)[v] &:= g^{1}(y)[v] - g^{1}(x)[v],
  \end{aligned}
\end{equation}
for $x,y \in F$ and $v \in V$. 
Then,
\begin{equation}
	\label{g_remainder_term_bounds}
		\begin{aligned}
			&	(\bI) \quad \left|\left| R^g_0(x,y) \right|\right|_W \leq M_g ||y-x||_V^{\gamma}, \\
			&	(\bII) \quad \left|\left| R^g_1(x,y) \right|\right|_{\mathbf{L}(V,W)} \leq M_g ||y-x||_V^{\gamma - 1}.
		\end{aligned}
\end{equation}
\\ \\
Now define $h^{0} : E \to W$ and $h^{1} :E \to \mathbf{L}(U,W)$ as in \eqref{eq:hdef},
\begin{equation}
	\label{lip_gamma_chain_rule_h_def_proof}
		h^{0}(p) := g^{0} \left( f^{0}(p) \right)
		\qquad \text{and} \qquad 
		h^{1}(p)[u] := g^{1}\left( f^{0}(p) \right) \left[ f^{1}(p)[u] \right],
\end{equation}
for $p \in E$ and $u \in U$. 
Then define the corresponding remainder terms $R^h_0 : E \times E \to W$ and $R^h_1 : E \times E \to \mathbf{L}(U,W)$ by 
\begin{equation}
	\label{h_remain_terms_def}
 \begin{aligned}
		R^h_0(p,q) &:= h^{0}(q) - h^{0}(p) - h^{1}(p)[q-p], \\
		R^h_1(p,q)[u] &:= h^{1}(q)[u] - h^{1}(p)[u],
  \end{aligned}
\end{equation}
for $p,q \in E$ and $u \in U$. 
We now establish that $h = ( h^{0} , h^{1} ) \in \mathrm{Lip}(\gamma,E,W)$ and that the norm estimate claimed in \eqref{eq:normcomplip2} is satisfied.
\\ \\
First we consider the bounds on $h^{0}$ and $h^{1}$. 
For any $p \in E$,   
(\bI) in \eqref{g_pointwise_bounds} implies that
\begin{equation}
	\label{h0_bd}
		\left|\left| h^{0}(p) \right|\right|_W 
		= 
		\left|\left| g^{0} \left( f^{0}(p) \right)  \right|\right|_W
		\leq
		M_g
\end{equation}
since $f^{0}(p) \in F$. 
Further, for any $p \in E$ and any $u \in U$, \eqref{g_pointwise_bounds} and (\bII) in \eqref{f_pointwise_bounds} imply that
\begin{align*}
	\left|\left| h^{1}(p)[u] \right|\right|_W &= 
		\left|\left| g^{1} \left( f^{0}(p) \right) \left[ f^{1}(p)[u] \right] \right|\right|_W \\
		&\leq \left|\left| g^{1} \left( f^{0}(p) \right) \right|\right|_{\mathbf{L}(V,W)} \left|\left| f^{1}(p) \right|\right|_{\mathbf{L}(U,V)} ||u||_U \\
		&\leq
		M_gM_f ||u||_U
\end{align*}
since $f^{0}(p) \in F$. 
Taking the supremum over $u \in U$ with unit $U$-norm, it follows that
\begin{equation}
	\label{h1_bd}
		\left| \left| h^{1}(p) \right|\right|_{\mathbf{L}(U,W)} \leq || g ||_{\mathrm{Lip}(\gamma,F,W)} || f ||_{\mathrm{Lip}(\gamma,E,F)}.
\end{equation}
\\ \\
Now we consider the bounds on $R^h_0$ and $R^h_1$. 
For this purpose we fix $p,q \in E$ and $u\in U$.
We first assume that $||q - p||_U > 1$. 
In this case we may use \eqref{h0_bd} and \eqref{h1_bd} to compute that
\begin{align*}
	\left|\left| R^h_0(p,q) \right|\right|_W &= \left| \left| h^{0}(q) - h^{0}(p) - h^{1}(p)[q-p] \right|\right|_W \\
		&\leq 
		2 || g ||_{\mathrm{Lip}(\gamma,F,W)} + || g ||_{\mathrm{Lip}(\gamma,F,W)} || f ||_{\mathrm{Lip}(\gamma,E,F)} || q - p ||_U.
\end{align*}
Since $\gamma > 1$ means that $1 < || q - p ||_U < || q  - p ||_U^{\gamma}$, we deduce that
\begin{equation}
	\label{Rh0_bd_a}
		\left| \left| R^h_0(p,q) \right|\right|_W \leq 
		|| g ||_{\mathrm{Lip}(\gamma,F,W)} \left( 2 + || f ||_{\mathrm{Lip}(\gamma,E,F)} \right) || q - p ||_U^{\gamma}.
\end{equation}
Similarly, we may use \eqref{h1_bd} and $ ||q-p||_U^{\gamma - 1} > 1$ to compute that
\begin{equation}
	\label{Rh1_bd_a_with_v}
	\left|\left| R^h_1(p,q)[u] \right|\right|_W = \left|\left| h^{1}(q)[u] - h^{1}(p)[u] \right|\right|_W 
		\leq
		2 || g ||_{\mathrm{Lip}(\gamma,F,W)} || f ||_{\mathrm{Lip}(\gamma,E,F)} || q - p ||_U^{\gamma - 1} ||u||_U.
\end{equation}
Taking the supremum over $u \in U$ with unit $U$-norm in \eqref{Rh1_bd_a_with_v} yields the estimate that
\begin{equation}
	\label{Rh1_bd_a}
		\left|\left| R^h_1(p,q) \right|\right|_{\mathbf{L}(U,W)} 
		\leq
		2 || g ||_{\mathrm{Lip}(\gamma,F,W)} || f ||_{\mathrm{Lip}(\gamma,E,F)} || q - p ||_U^{\gamma - 1}.
\end{equation}
Together, \eqref{Rh0_bd_a} and \eqref{Rh1_bd_a} establish the remainder term estimates required to conclude that $h = (h^{0} , h^{1} ) \in \mathrm{Lip}(\gamma,E,W)$
in the case that $|| q - p ||_U > 1$. 
\\ \\
We next establish similar remainder term estimates when $|| q - p ||_U < 1$. 
Thus we fix $p,q \in E$ and assume that $||q-p||_U < 1$.
Note that $\gamma > 1$ means that $||q - p||_U^{\gamma} < ||q-p||_U < 1$. 
Additionally,
\begin{equation}
\label{f0_diff_bd}
    \begin{aligned}
        \left|\left| f^{0}(q) - f^{0}(p) \right|\right|_V &\stackrel{(\ref{f_remainder_term_defs})}{=} \left|\left|f^{1}(p)[q-p] + R^f_0(p,q) \right|\right|_V, \\
        &\;\;\leq || f ||_{\mathrm{Lip}(\gamma,E,F)} \left( ||q-p||_U + ||q-p||_U^{\gamma} \right), \\
		&\;\;\leq 2 || f ||_{\mathrm{Lip}(\gamma,E,F)} ||q-p||_U,
    \end{aligned}
\end{equation}
where (\bII) in \eqref{f_pointwise_bounds} and (\bI) in \eqref{f_remainder_term_bounds} have been used. 
We now consider the term $R^h_0(p,q)$.
We start by observing that 
\begin{align*}
	R^h_0(p,q) &\stackrel{(\ref{h_remain_terms_def})}{=} h^{0}(q) - h^{0}(p) - h^{1}(p)[q-p] \\
		&\stackrel{(\ref{lip_gamma_chain_rule_h_def_proof})}{=} 
		g^{0}\left( f^{0}(q) \right) - g^{0} \left( f^{0}(p) \right) - g^{1}\left(f^{0}(p) \right) \left[ f^{1}(p)[q-p] \right]  \\
		&\stackrel{(\ref{g_remainder_term_defs})}{=}
		g^{1}\left( f^{0}(p) \right) \left[ f^{0}(q) - f^{0}(p) -  f^{1}(p)[q-p] \right] + R^g_0\left( f^{0}(p) , f^{0}(q) \right) \\
		&\stackrel{(\ref{f_remainder_term_defs})}{=}
		g^{1}\left( f^{0}(p) \right) \left[ R^f_0(p,q) \right] + R^g_0\left( f^{0}(p) , f^{0}(q) \right).
\end{align*}
Consequently, by using (\bII) in \eqref{g_pointwise_bounds} to estimate the term $g^{1}\left( f^{0}(p) \right)$, 
(\bI) in \eqref{f_remainder_term_bounds} to estimate the term $R^f_0(p,q)$, and 
(\bI) in \eqref{g_remainder_term_bounds} to estimate the term $R^g_0\left( f^{0}(p) , f^{0}(q) \right)$, 
we may deduce that
\begin{equation}
	\label{Rh0_bd_b_almost}
		\left|\left| R^h_0(p,q) \right|\right|_W \leq 
		|| g ||_{\mathrm{Lip}(\gamma,F,W)} \left( || f ||_{\mathrm{Lip}(\gamma,E,F)} || q - p ||_U^{\gamma} + 
		\left|\left| f^{0}(q) - f^{0}(p) \right|\right|_V^{\gamma} \right).
\end{equation}
The combination of \eqref{f0_diff_bd} and \eqref{Rh0_bd_b_almost} yields the estimate 
\begin{equation}
	\label{Rh0_bd_b}
		\left|\left| R^h_0(p,q) \right|\right|_W \leq 
		|| g ||_{\mathrm{Lip}(\gamma,F,W)} \left( || f ||_{\mathrm{Lip}(\gamma,E,F)}  + 
		2^{\gamma} ||f||_{\mathrm{Lip}(\gamma,E,F)}^{\gamma} \right) || q - p ||_U^{\gamma}.
\end{equation}
Turning our attention to $R^h_1$, we fix $u \in U$ and compute that
\begin{align*}
	R^h_1(p,q)[u] &\stackrel{(\ref{h_remain_terms_def})}{=} h^{1}(q)[u] - h^{1}(p)[u]  \\
		&\stackrel{(\ref{lip_gamma_chain_rule_h_def_proof})}{=}  
		g^{1}\left( f^{0}(q) \right) \left[ f^{1}(q)[u] \right] - g^{1}\left( f^{0}(p) \right) \left[ f^{1}(p)[u] \right] \\
		&\stackrel{(\ref{g_remainder_term_defs})}{=} 
		g^{1} \left( f^{0}(p) \right) \left[ f^{1}(q)[u] - f^{1}(p)[u] \right] + R^g_1 \left( f^{0}(p) , f^{0}(q) \right) \left[ f^{1}(q)[u] \right]  \\
		&\stackrel{(\ref{f_remainder_term_defs})}{=} 
		g^{1} \left( f^{0}(p) \right) \left[ R^f_1(p,q)[u] \right] + R^g_1 \left( f^{0}(p) , f^{0}(q) \right) \left[ f^{1}(q)[u] \right].
\end{align*}
Consequently, by using (\bII) in \eqref{g_pointwise_bounds} to estimate the term $g^{1} \left( f^{0}(p) \right)$,
(\bII) in \eqref{f_pointwise_bounds} to estimate the term $f^{1}(q)$,
(\bII) in \eqref{f_remainder_term_bounds} to estimate the term $R^f_1(p,q)$,  and 
(\bII) in \eqref{g_remainder_term_bounds} to estimate the term $R^g_1 \left( f^{0}(p) , f^{0}(q) \right)$, 
we may deduce that
\begin{equation}
	\label{Rh1_bd_b_almost}
		\left|\left| R^h_1(p,q)[u] \right|\right|_W \leq 
		|| g ||_{\mathrm{Lip}(\gamma,F,W)} || f ||_{\mathrm{Lip}(\gamma,E,F)} \left( || q - p ||_U^{\gamma - 1}
		+ 
		\left|\left| f^{0}(q)  - f^{0}(p) \right|\right|_{V}^{\gamma - 1}  \right) ||u||_U.
\end{equation}
The combination of \eqref{f0_diff_bd} and \eqref{Rh1_bd_b_almost} yields the estimate that
\begin{equation}
	\label{Rh1_bd_b_almost2}
		\left|\left| R^h_1(p,q)[u] \right|\right|_W \leq 
		|| g ||_{\mathrm{Lip}(\gamma,F,W)} \left( || f ||_{\mathrm{Lip}(\gamma,E,F)} 
		+ 2^{\gamma - 1}||f||_{\mathrm{Lip}(\gamma,E,F)}^{\gamma}   \right) ||q - p||_U^{\gamma - 1} ||u||_U.
\end{equation}
Taking the supremum over $u \in U$ with unit $U$-norm in \eqref{Rh1_bd_b_almost2} yields the estimate that
\begin{equation}
	\label{Rh1_bd_b}
		\left|\left| R^h_1(p,q) \right|\right|_{\mathbf{L}(U,W)} \leq 
		|| g ||_{\mathrm{Lip}(\gamma,F,W)} \left( || f ||_{\mathrm{Lip}(\gamma,E,F)}  + 
		2^{\gamma - 1} ||f||_{\mathrm{Lip}(\gamma,E,F)}^{\gamma} \right) || q - p ||_U^{\gamma - 1}.
\end{equation}
Finally, we complete the proof by combining the various estimates we have established for $h$ to obtain the $\mathrm{Lip}(\gamma,E,W)$-norm bound 
claimed in \eqref{eq:normcomplip2}.
\\ \\
We start this task by combining \eqref{Rh0_bd_a} and \eqref{Rh0_bd_b} to deduce that for every $p,q \in E$ we have 
\begin{equation}
	\label{Rh0_bd_c}
    \begin{aligned}
		\left|\left| R^h_0(p,q) \right|\right|_W \leq 
		\twopartdef{|| g ||_{\mathrm{Lip}(\gamma,F,W)} \left( 2 + || f ||_{\mathrm{Lip}(\gamma,E,F)} \right) || q - p ||_U^{\gamma}}{||q-p||_U > 1}
		{|| g ||_{\mathrm{Lip}(\gamma,F,W)} \left( || f ||_{\mathrm{Lip}(\gamma,E,F)}  + 
		2^{\gamma} ||f||_{\mathrm{Lip}(\gamma,E,F)}^{\gamma} \right) || q - p ||_U^{\gamma}}{||q - p ||_U \leq 1.}
  \end{aligned}
\end{equation}
Moreover, the combination of \eqref{Rh1_bd_a} and \eqref{Rh1_bd_b} yields the estimate that
\begin{equation}
\small
	\label{Rh1_bd_c}
		\left|\left| R^h_1(p,q) \right|\right|_{\mathbf{L}(U,W)} \leq 
		\twopartdef{2 || g ||_{\mathrm{Lip}(\gamma,F,W)} || f ||_{\mathrm{Lip}(\gamma,E,F)} || q - p ||_U^{\gamma - 1}}{||q-p||_U > 1}
		{|| g ||_{\mathrm{Lip}(\gamma,F,W)} \left( 
        || f ||_{\mathrm{Lip}(\gamma,E,F)}  + 
		2^{\gamma - 1} ||f||_{\mathrm{Lip}(\gamma,E,F)}^{\gamma} \right) || q - p ||_U^{\gamma - 1}}{||q - p ||_U \leq 1.}
\end{equation}
A consequence of \eqref{Rh0_bd_c} is that 
\begin{equation}
	\label{Rh0_bd_d}
		\left|\left| R^h_0(p,q) \right|\right|_W \leq \left( 1 + 2^{\gamma} \right)|| g ||_{\mathrm{Lip}(\gamma,F,W)} 
		\max \left\{||f||_{\mathrm{Lip}(\gamma,E,F)}^{\gamma} , 1 \right\} ||q - p||_U^{\gamma},
\end{equation}
whilst a consequence of \eqref{Rh1_bd_c} is that 
\begin{equation}
	\label{Rh1_bd_d}
		\left|\left| R^h_1(p,q) \right|\right|_{\mathbf{L}(U,W)}\leq \left( 1 + 2^{\gamma - 1} \right) || g ||_{\mathrm{Lip}(\gamma,F,W)} 
		\max \left\{||f||_{\mathrm{Lip}(\gamma,E,F)}^{\gamma}, 1 \right\} ||q - p||_U^{\gamma - 1}.
\end{equation}
Therefore, by combining \eqref{h0_bd}, \eqref{h1_bd}, \eqref{Rh0_bd_d}, and \eqref{Rh1_bd_d}, we conclude both that $h = (h^{0} , h^{1} ) \in \mathrm{Lip}(\gamma,E,W)$ 
and that
\begin{equation}
	\label{h_lip_gamma_norm}
		|| h ||_{\mathrm{Lip}(\gamma,E,W)} \leq 
        \left( 1 + 2^{\gamma} \right) 
        || g ||_{\mathrm{Lip}(\gamma,F,W)} 
		\max \left\{ 
        ||f||^{\gamma}_{\mathrm{Lip}(\gamma,E,F)}, 1 \right\}.
\end{equation}
\end{proof}

\subsection{Optimality}

\label{sec:opt}

Handling the terms individually in the proof of Lemma \ref{lem:normcomplip2} means it is unlikely that $C_{\gamma}=1+2^{\gamma}$ is optimal. However, $|| h ||_{\mathrm{Lip}(\gamma,E,G)}$ being of order $||f||^{\gamma}_{\mathrm{Lip}(\gamma,E,F)}$ is optimal, as shown by the following example.
\\ \\
Take $E=\{0,a\}\subset\mathbb{R}$ with $a<1$, $F=\{0,1\}\subset\mathbb{R}$, and $f\in\mathrm{Lip}(\gamma, E, F)$ with $\gamma\in(1,2]$ defined by $f^0(0)=0$, $f^0(a)=1$, and $f^1(0)=f^1(a)=c$. Then,
\begin{equation}
    \|f\|_{\mathrm{Lip}(\gamma, E, F)} = \max\left\{1, |c|, \frac{|1-ca|}{a^{\gamma}}\right\},
\end{equation}
and $\|f\|_{\mathrm{Lip}(\gamma, E, F)}$ is minimised when
\begin{equation}
    c = c^* = \begin{cases}
        1, \quad &a+a^{\gamma} > 1, \\
        \frac{1}{a+a^{\gamma}}, \quad &a+a^{\gamma} \leq 1,
    \end{cases}
\end{equation}
with $\|f\|_{\mathrm{Lip}(\gamma, E, F)}=c^*$. Take  $G=\{-1,1\}\subset\mathbb{R}$ and $g\in\mathrm{Lip}(\gamma, F, G)$ defined by $g^0(0)=-1$, $g^0(1)=1$, and $g^1(0)=g^1(1)=1$. Then $\|g\|_{\mathrm{Lip}(\gamma, F, G)}=1$. Further, $h=g\circ f$ is defined by $h^0(0)=-1$, $h^0(a)=1$, and $h^1(0)=h^1(a)=c$, with
\begin{equation}
    \|h\|_{\mathrm{Lip}(\gamma, E, G)} = \max\left\{1, |c|, \frac{|2-ca|}{a^{\gamma}}\right\}.
\end{equation}
When $a+a^{\gamma} \leq 1$,
\begin{equation}
    \|h\|_{\mathrm{Lip}(\gamma, E, G)} = \frac{2-ca}{a^{\gamma}}=\frac{1}{a+a^{\gamma}}+\frac{1}{a^{\gamma}}.
\end{equation}
Then
\begin{equation}
    \begin{aligned}
    \lim_{a \rightarrow 0}\frac{\|h\|_{\mathrm{Lip}(\gamma, E, G)}}{\|f\|_{\mathrm{Lip}(\gamma, E, F)}^\beta} &= \lim_{a \rightarrow 0} (a+a^{\gamma})^{\beta-1}+\frac{(a+a^{\gamma})^{\beta}}{a^{\gamma}}, \\
    &= \lim_{a\rightarrow 0}a^{\beta-1}(1+a^{\gamma-1})^{\beta-1} + a^{\beta-\gamma}(1+a^{\gamma-1})^{\beta}, \\
    &= \lim_{a\rightarrow 0}a^{\beta-\gamma}(1+a^{\gamma-1})^{\beta-1}(2a^{\gamma-1} + 1), \\
    &= \begin{cases} \infty \quad &\beta < \gamma, \\
    1 \quad &\beta=\gamma, \\ 0 \quad &\beta>\gamma. \end{cases}
    \end{aligned}
\end{equation}
Therefore, $|| h ||_{\mathrm{Lip}(\gamma,E,G)}$ being of order $||f||^{\gamma}_{\mathrm{Lip}(\gamma,E,F)}$ is optimal.
\\ \\
The same example can be used to obtain a lower bound on $C_{\gamma}$. Letting $a^{\gamma}+a=1$, then $\|f\|_{\mathrm{Lip}(\gamma, E, F)}=1$ and $\|g\|_{\mathrm{Lip}(\gamma, F, G)}=1$, but
\begin{equation}
    \|h\|_{\mathrm{Lip}(\gamma, E, G)} = \frac{2-a}{a^{\gamma}}=1+a^{-\gamma}.
\end{equation}
Figure \ref{fig:lipgamma_bound} shows the range of $C_{\gamma}$ when $\gamma\in(1,2]$ given by this example and Lemma \ref{lem:normcomplip2}. This example implies there is a jump in $C_{\gamma}$ at $\gamma=1$, as standard Lipschitz composition gives $C_{1}=1$, whereas $C_{\gamma}>3$ for all $\gamma \in(1,2]$.

\begin{figure}
    \centering
    \includegraphics[width=\linewidth]{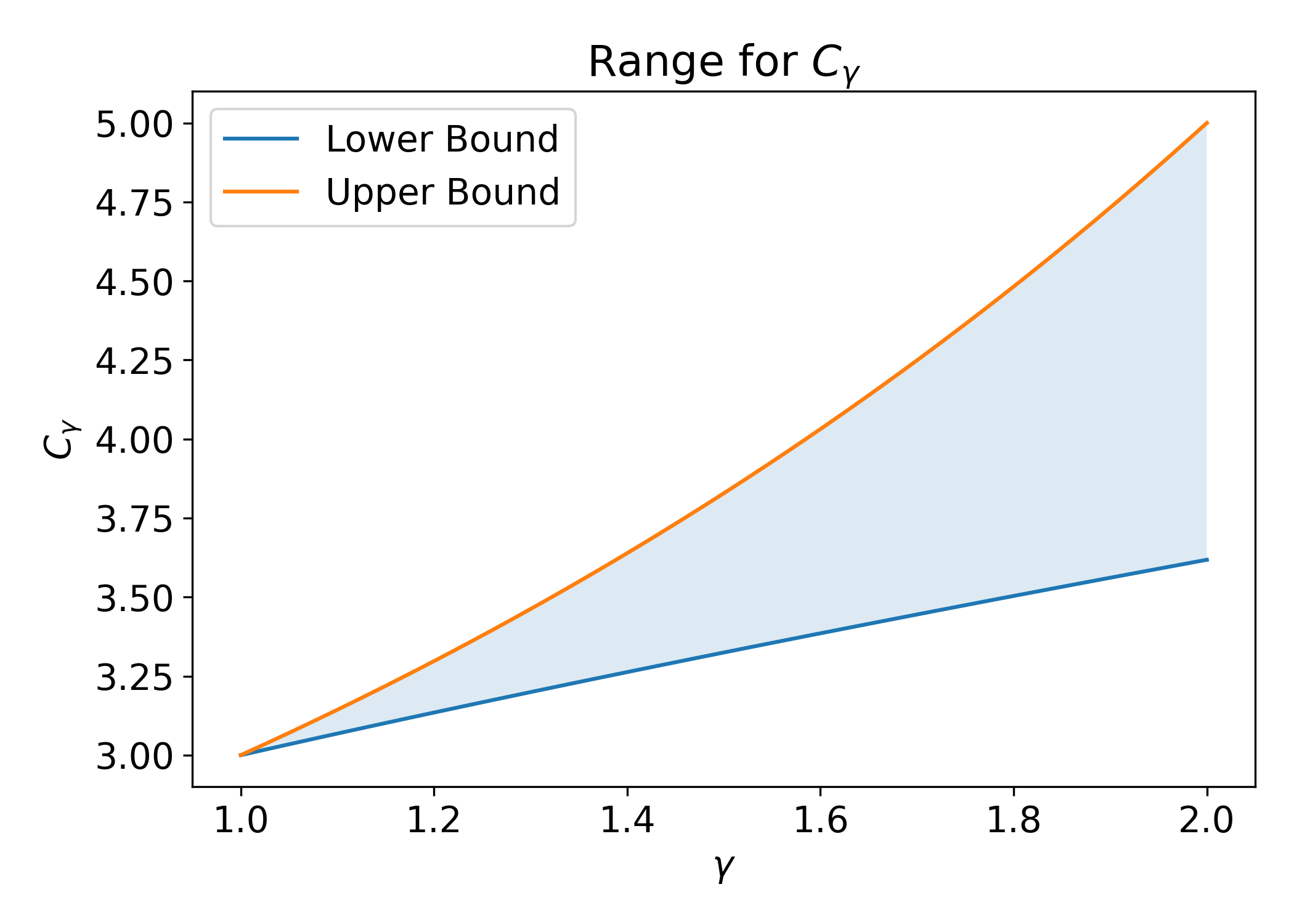}
    \caption{Let $U$, $V$, and $W$ be Banach spaces, with $E \subset U$ and $F \subset V$. If $f \in \mathrm{Lip}(\gamma,E,F)$ and $g \in \mathrm{Lip}(\gamma,F,W)$, then their composition $h=g\circ f$ satisfies $\| h \|_{\mathrm{Lip}(\gamma,E,W)} \leq C_{\gamma} \| g \|_{\mathrm{Lip}(\gamma,F,W)} \max \left\{ 1 , \| f \|_{\mathrm{Lip}(\gamma,E,F)}^{\gamma} \right\}$. This figure plots the bounds on $C_{\gamma}$ obtained by combining Lemma \ref{lem:normcomplip2} with the example of Section \ref{sec:opt}, namely $1+a(\gamma)^{-\gamma}\leq C_{\gamma}\leq 1+2^{\gamma}$, where $a(\gamma)$ is the unique solution to $a^{\gamma}+a=1$.}
    \label{fig:lipgamma_bound}
\end{figure}

\subsection{Future Work}

At present, extending Lemma \ref{lem:normcomplip2} by finding a bound on $C_{\gamma}$ for $\gamma>2$ remains open work. 
Proving such a bound would extend the explicit dimension-free control of composition to higher regularity classes. 
This would make it possible to give fully quantitative bounds for the $\mathrm{Lip}(\gamma)$-norms of neural architectures built by repeated composition, with constants depending explicitly on quantities such as depth and layer norms. 
Furthermore, such a result would support explicit error estimates for CDEs with neural network vector fields in higher-regularity regimes.
\\ \\
However, the proof of an explicit bound in the first non-trivial regime $\gamma\in(1,2]$ is already involved.
This motivates exploring alternative approaches for the case $\gamma>2$. 
In finite-dimensional $U$ and $V$ one could try to extend $f$ and $g$ to the whole spaces via the Stein--Whitney theorem (Theorem \ref{thm:stein-whitney}) and then invoke the equivalence of $\mathrm{Lip}(\gamma)$ and $C^{k,\alpha}_b$ on open convex sets (Lemma \ref{lem:CKalpha_Lipgamma}) to apply classical $C^{k,\alpha}_b$ compositional bounds. 
However, this approach produces a constant depending on the dimension of $U$ and $V$, as opposed to the dimension-free constant of Lemmas \ref{lem:comp} and \ref{lem:normcomplip2}. 
Another possible approach is to use the polynomial viewpoint of $\mathrm{Lip}(\gamma)$ functions from Definition \ref{def:lipgamma_poly} together with standard results on the composition of polynomial functions.

\section{Conclusion}

This chapter developed the regularity theory needed for the CDE framework used throughout the thesis. 
We related $\mathrm{Lip}(\gamma)$ regularity to $C^{k,\alpha}$ regularity, clarifying both where these notions agree and where $\mathrm{Lip}(\gamma)$ has stronger global properties. 
We then introduced the derivative and Lie bracket of $\mathrm{Lip}(\gamma)$ functions on arbitrary subsets of Banach spaces. 
Finally, we studied the composition of $\mathrm{Lip}(\gamma)$ functions and proved an explicit bound on the composition norm in the first non-trivial regime $1 < \gamma \leq 2$.
Together, these results provide the framework necessary for Chapter~\ref{chap:ncde} to apply the Log-ODE method to NCDEs, where the vector fields are parametrised by neural networks.
\\ \\
Chapters~\ref{chap:cde} and~\ref{chap:lipgamma} have provided the mathematical tools needed to develop the scalable and efficient continuous-time machine learning models that are the focus of the remainder of this thesis. 
This begins with Chapter~\ref{chap:ncde} examining how to solve NCDEs efficiently via the Log-ODE method. 
Chapter~\ref{chap:lin_ncde} then introduces Linear NCDEs, which enable parallel-in-time computation without sacrificing expressivity.
\chapter{Neural Controlled Differential Equations}

\label{chap:ncde}

\begin{quoting}
    Forty feet, down two and a half. Kicking up some dust. Thirty feet, two and a half down. Faint shadow.
\end{quoting}
\noindent\large{---Buzz Aldrin, \emph{Apollo 11 Air-to-Ground Voice Transcription} (1969)}
\normalsize

\section{Introduction}

Time series modelling is the development of mathematical, statistical, and computational methods for sequentially ordered data. 
It has played a central role across science and technology for nearly a century, from Yule’s 1927 autoregressive models for sunspot numbers, through Kalman filtering in the Apollo guidance system for estimating the Lunar Module’s altitude, to the autoregressive models that power today’s large-scale language models \citep{yule1927method, kalman1960, touvron2023llamaopenefficientfoundation}. 
\\ \\
Figure~\ref{fig:situation} illustrates the problem of interest for this chapter: given a sequence of irregularly spaced observations $\{X_{t_i}\}_{i=0}^{n} = \{(t_i, x_{t_i})\}_{i=0}^{n}$ from a multi-dimensional process, predict a corresponding output path $y_t$. 
This framework encompasses classification, where $y_t$ is a single label, regression, where $y_t$ varies over time, and autoregressive generation, where each new observation $X_{t_i}$ is produced from the previous output $y_{t_{i-1}}$.
Our focus is developing models that can generalise to unseen examples by leveraging large datasets of observation–output pairs. 
In particular, we explore Neural Controlled Differential Equations (NCDEs), which provide a continuous-time framework naturally suited to irregularly sampled data. 
The development of NCDEs builds on a rich history of time series modelling, from which we note a few of the important milestones.

\begin{figure}
\centering
\hspace{-1cm}\includegraphics[width=0.75\textwidth]{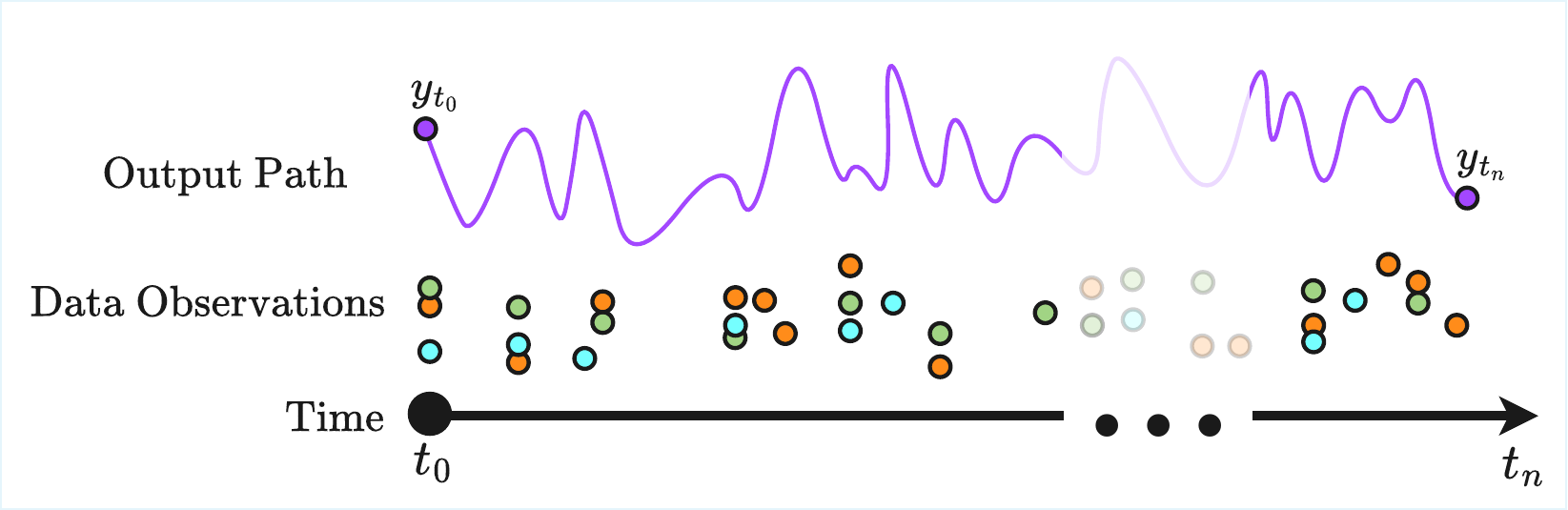}
\caption{A schematic diagram of observations from a three-dimensional, irregularly sampled time series and a corresponding output path one wishes to predict. The colour of each ball represents a different channel in the time series.}
\label{fig:situation}
\end{figure}

\section{Historical Milestones}

\subsection{Classical Approaches}

\label{sec:classical_approaches}

In 1927, Yule introduced autoregressive modelling, which approximates each term in a time series as a linear function of its previous values \citep{yule1927method}. 
The model is defined as
\begin{equation}
    \label{eq:ar_model}
    y_{t_i} = a_1 y_{t_{i-1}} + a_2 y_{t_{i-2}} + \dots + a_p y_{t_{i-p}} + \epsilon_{t_i},
\end{equation}
where $a_1, a_2, \dots, a_p$ are the autoregressive coefficients, $p$ is the model order, and $\epsilon_{t_i}$ are independent and identically distributed random variables with mean zero and constant variance. 
Building on this work, the Yule–Walker equations were derived as a method for estimating the parameters of the model, laying the groundwork for statistical time series analysis \citep{walker1931}. 
\\ \\
In parallel, Slutsky introduced moving average processes, which approximate each term in a time series as a linear function of previous noise values $\epsilon_{t_i}$ \citep{slutsky1927, slutsky1937}. 
The model is defined as
\begin{equation}
    y_{t_i} = \epsilon_{t_i} + b_1 \epsilon_{t_{i-1}} + b_2 \epsilon_{t_{i-2}} + \dots + b_q \epsilon_{t_{i-q}},
\end{equation}
where $b_1, b_2, \dots, b_q$ are the moving average coefficients. 
These two lines of development were unified in the autoregressive moving average model by Whittle in 1951, and later popularised as part of the Box--Jenkins framework \citep{whittle1951hypothesis,box1970time}. 
\\ \\
In 1965, building on Galtieri's 1964 work on estimation in discrete-time processes, Åström and Bohlin introduced a numerical method for identifying linear dynamical systems from observed input-output data \citep{galtieri1964,astrom1965numerical}. 
Their model extended the autoregressive moving average framework to include an observed input series $x_{t_i}$,
\begin{equation}
    \label{eq:armax}
    y_{t_i}
    =
    a_1 y_{t_{i-1}} + \dots + a_p y_{t_{i-p}}
    + \epsilon_{t_i} + b_1 \epsilon_{t_{i-1}} + \dots + b_q \epsilon_{t_{i-q}}
    + c_0 x_{t_i} + c_1 x_{t_{i-1}} + \dots + c_r x_{t_{i-r}},
\end{equation}
where $c_0, c_1, \dots, c_r$ are the input coefficients. 
The parameters are then estimated by maximum likelihood using a numerical optimisation procedure. 
This line of work was later consolidated under the name system identification \citep{astrom1971system}.
\\ \\
A separate line of development focused on estimating signals or hidden states from noisy observations once a model had been specified.
Early work in this direction includes Kolmogorov's 1941 treatment of interpolation and extrapolation for stationary random sequences \citep{kolmogorov1941, kolmogorov1962english}.
In 1949, Wiener developed the Wiener filter, a method for minimising the mean-squared error when estimating a signal from noisy observations \citep{wiener1949}. 
For a stationary process, the Wiener filter seeks to find an optimal linear filter $l(t)$ such that, when convolved with the observed signal $x(t)$, it produces an estimate $\hat{y}(t)$ of the desired signal $y(t)$,
\begin{equation}
\hat{y}(t) = \int_{-\infty}^{\infty} l(t - \tau) x(\tau) \, \mathrm{d}\tau.
\end{equation}
Wiener provided an explicit solution for the filter in the frequency domain. 
\\ \\
In 1960, Kalman formalised the state-space model framework for non-stationary processes and proposed the Kalman filter, an optimal recursive algorithm for estimating the hidden states of such processes \citep{kalman1960}. 
The state-space model is defined by
\begin{equation}
\begin{aligned}
\label{eq:sm}
h_{t_i} &= A h_{t_{i-1}} + B x_{t_i} + w_{t_i}, \\
y_{t_i} &= C h_{t_i} + v_{t_i},
\end{aligned}
\end{equation}
where $h_{t_i}$ is the hidden state, $A$ is the state transition matrix, $x_{t_i}$ are the inputs, $B$ represents the influence of the inputs, $y_{t_i}$ is the observed output, $C$ is the observation matrix, and $w_{t_i}$ and $v_{t_i}$ are zero-mean Gaussian noise processes with known covariance matrices. 
For linear Gaussian systems, the Kalman filter provides an optimal recursive solution for estimating the hidden states $h_{t_i}$.
Subsequent advancements extended the Kalman filter to accommodate non-linear dynamics and non-Gaussian noise, leading to algorithms such as the Extended Kalman Filter, Unscented Kalman Filter, and Particle Filters \citep{gelb1974applied, julier1997new, gordon1993novel}.
A further development was the extension of system identification methods from the autoregressive moving average models in \eqref{eq:armax} to the state-space setting of \eqref{eq:sm}.
In particular, Van Overschee and De Moor introduced N4SID in 1994, a subspace method that identifies a discrete-time state-space realisation from input-output data \citep{vanoverschee1994n4sid}.
\\ \\
In 1978, O'Hagan introduced a Bayesian non-parametric framework for curve fitting and prediction that laid the groundwork for what is now known as Gaussian process regression \citep{ohagan1978curve}. 
Given observations at times $T=(t_1,\dots,t_n)$, one assumes that
\begin{equation}
    y_{t_i} = \eta(t_i) + \epsilon_{t_i},
\end{equation}
where $\eta$ is an unknown regression function and $\epsilon_{t_i} \sim \mathcal{N}\!\bigl(0,\sigma^2(t_i)\bigr)$ independently. 
Rather than restricting $\eta$ to a finite-dimensional parametric family, the method assumes that for any finite collection of times $S=(s_1,\dots,s_m)$, the vector
\begin{equation}
    \bigl(\eta(s_1),\dots,\eta(s_m)\bigr)^\top
\end{equation}
is jointly Gaussian with mean
\begin{equation}
    m(S) = \bigl(m(s_1),\dots,m(s_m)\bigr)^\top
\end{equation}
and covariance
\begin{equation}
    K(S,S) =
    \begin{bmatrix}
        k(s_1,s_1) & \cdots & k(s_1,s_m) \\
        \vdots & \ddots & \vdots \\
        k(s_m,s_1) & \cdots & k(s_m,s_m)
    \end{bmatrix},
\end{equation}
where $m$ and $k$ specify the prior distribution of $\eta$. 
Let $T_*$ denote a collection of test times, and let $\Sigma(T)=\mathrm{diag}\!\bigl(\sigma^2(t_1),\dots,\sigma^2(t_n)\bigr)$.
Conditioning on the observed data yields the posterior distribution
\begin{equation}
\begin{aligned}
    \eta(T_*) \mid y \sim \mathcal{N}\!\Bigl(
    &m(T_*) + K(T_*,T)\bigl(K(T,T)+\Sigma(T)\bigr)^{-1}\bigl(y-m(T)\bigr), \\
    &K(T_*,T_*) - K(T_*,T)\bigl(K(T,T)+\Sigma(T)\bigr)^{-1}K(T,T_*)
    \Bigr).
\end{aligned}
\end{equation}
This yields both a prediction for $\eta$ at the test times and a corresponding posterior uncertainty estimate.
O'Hagan's original paper developed this framework in the context of Bayesian smoothing, curve fitting, and prediction, and it was later popularised in machine learning as a flexible kernel-based approach to non-linear regression \citep{williams1996gaussian,rasmussen2006gaussian}.

\subsection{Discrete Machine Learning Approaches}

In 1990, Elman introduced a recurrent neural network (RNN) architecture from which many subsequent RNNs would evolve \citep{elman1990finding}. 
The Elman network is defined as
\begin{align}
\label{eq:elman}
    h_{t_i} &= \sigma(W_h h_{t_{i-1}} + W_x x_{t_i} + b_h), \\
    y_{t_i} &= W_y h_{t_i} + b_y,
\end{align}
where $h_{t_i}$ is the hidden state at time $t_i$, $\sigma$ is an activation function, $W_h$, $W_x$, $W_y$ are learnable weight matrices, and $b_h$, $b_y$ are learnable bias vectors. 
The matrices $W_h$, $W_x$, and $W_y$ play analogous roles to those of $A$, $B$, and $C$ in \eqref{eq:sm}, respectively.
The generic architecture of an RNN can be expressed as
\begin{equation}
\label{eq:rnn}
\begin{aligned}
    h_{t_i} &= g_{\theta}(h_{t_{i-1}}, x_{t_i}), \\
    y_{t_i} &= l_{\psi}(h_{t_i}),
\end{aligned}
\end{equation}
where $g_{\theta}$ is a learnable non-linear function parametrised by $\theta$, and $l_{\psi}$ is a learnable affine transformation parametrised by $\psi$. 
Challenges such as vanishing and exploding gradients during training led to the development of more advanced architectures, including Long Short-Term Memory networks (LSTMs) in 1997 \citep{hochreiter1997long}, Gated Recurrent Units (GRUs) in 2014 \citep{GRU}, and Linear Recurrent Units (LRUs) in 2023 \citep{orvieto2023resurrecting}.
\\ \\
In 2015, Bahdanau et al.\ introduced the attention mechanism to RNN-based encoder-decoder models to enhance their ability to capture long-range dependencies \citep{bahdanau2015neural}. 
A widely used variant is the scaled dot-product attention of \citet{vaswani2017attention}. 
Given matrices $X \in \mathbb{R}^{n \times d_X}$ and $Y \in \mathbb{R}^{m \times d_Y}$ representing time series of dimension $d_X$ and $d_Y$ respectively, scaled dot-product attention computes
\begin{equation}
\text{Attention}(Y, X) = \text{softmax}\left(\frac{(Y W_Q)(X W_K)^\top}{\sqrt{d_k}}\right) (X W_V),
\end{equation}
where $W_Q \in \mathbb{R}^{d_Y \times d_k}$, $W_K \in \mathbb{R}^{d_X \times d_k}$, and $W_V \in \mathbb{R}^{d_X \times d_v}$ are learnable weight matrices, and the $\text{softmax}$ function is applied row-wise to normalise the attention scores. 
Here, $Y W_Q$, $X W_K$, and $X W_V$ are referred to as the queries, keys, and values respectively. 
This mechanism allows the model to focus on different parts of the input sequence when generating each part of the output. 
\\ \\
In 2017, Vaswani et al.\ built the Transformer architecture around scaled dot-product attention, entirely removing recurrent layers in favour of self-attention mechanisms \citep{vaswani2017attention}. 
In self-attention, the input sequence generates the queries, keys, and values ($X = Y$) within each layer. 
This design allows each position in the sequence to attend to all other positions, effectively capturing long-range dependencies. 
Combining the Transformer architecture with autoregressive modelling has led to significant breakthroughs in natural language processing and forms the foundation of modern large language models \citep{brown2020language, touvron2023llamaopenefficientfoundation}. 
However, for sequence length $n$, the computational complexity of the attention mechanism scales as $\mathcal{O}(n^2)$, compared to $\mathcal{O}(n)$ for RNNs.

\subsection{Neural Differential Equations}

In 1987, Pineda theoretically explored training a continuous-time recurrent neural network
\begin{equation}
    \label{eq:pineda_rnn}
    \frac{\mathrm{d}h_t}{\mathrm{d}t} = - h_t + \sigma(W_hh_t) + I_t,
\end{equation}
where
\begin{equation}
    h_t = \begin{bmatrix} h^i_t \\ h^h_t \\ h^o_t \end{bmatrix},
\end{equation}
with $h^i$, $h^h$, and $h^o$ being designated input, hidden, and output nodes, and
\begin{equation}
    I_t = \begin{bmatrix} X_t \\ 0 \\ 0 \end{bmatrix},
\end{equation}
with $X_t$ being a continuous input stream \citep{pineda1987generalization}. 
In 1989, building on earlier work which applied the adjoint sensitivity method to differential equations for optimal control problems \citep{Bryson1962steepest}, \citet{pearlmutter1989learning} introduced an approach for computing the gradients of \eqref{eq:pineda_rnn} by solving a backwards-in-time ODE.
Theoretically, this method allows gradient calculation without storing the intermediate hidden states of the network, although in practice \citet{pearlmutter1989learning} did store the intermediate states. 
To the best of our knowledge, \citet{pearlmutter1989learning} also trained the first neural differential equations to output desired trajectories, with Figure \ref{fig:first_node} showing their results on a figure eight dataset.
\\
\begin{figure}
\centering
\includegraphics[width=\textwidth]{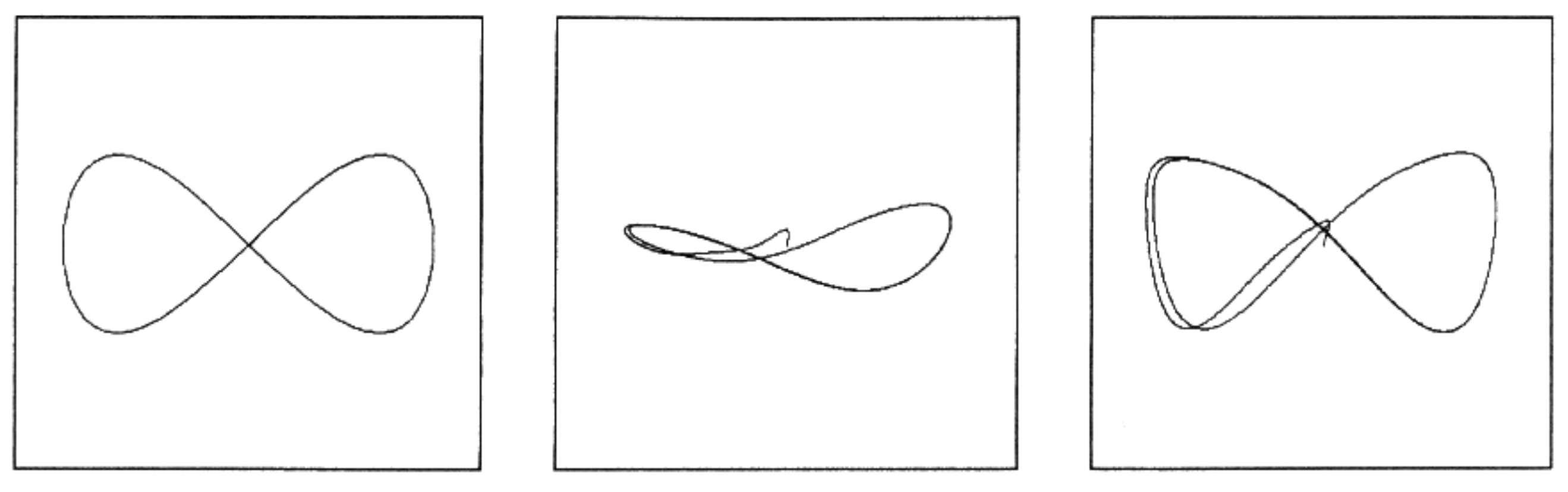}
\caption{``Desired states $d_1$ and $d_2$ plotted against each other (left); actual states $h_1$ and $h_2$ plotted against each other at epoch $3{,}182$ (centre) and $20{,}000$ (right)''. Reproduced with permission from \citet{pearlmutter1989learning}.}
\label{fig:first_node}
\end{figure}

In 1992, Rico-Mart\'{\i}nez et al.\ parametrised the vector field of a differential equation using a neural network,
\begin{equation} 
    \label{eq:node}
    \frac{\mathrm{d}h_t}{\mathrm{d}t} = f_{\theta}(t, h_t),
\end{equation}
and modelled the electrodissolution of copper in phosphoric acid solution \citep{martinez1992discrete}. 
In contrast to the adjoint sensitivity method developed by \citet{pearlmutter1989learning}, \citet{martinez1992discrete} backpropagated directly through the steps of their differential equation solve, which requires storing the intermediate hidden states.
\\ \\
In 2015, He et al.\ introduced Residual Neural Networks (ResNets), which allowed the training of very deep networks \citep{He2016DeepRL}. 
This was achieved by introducing skip connections, which allow the input to bypass one or more layers, mitigating issues such as vanishing gradients. 
The update rule for a ResNet layer is given by
\begin{equation} 
    \label{eq:resnet}
    h_{i+1} = h_i + f^i_{\theta_i}(h_i), 
\end{equation} 
where $h_i$ is the hidden state at layer $i$, and the $f^i_{\theta_i}$ are non-linear functions parametrised by $\theta_i$ respectively. 
In 2018, Chen et al. observed that \eqref{eq:resnet} resembles an Euler discretisation of \eqref{eq:node}, and proposed Neural ODEs as a continuous-depth analogue to ResNets \citep{chen2018neural}. 
Following a similar approach to \citet{pearlmutter1989learning}, \citet{chen2018neural} derived a method for calculating the gradients of solutions to \eqref{eq:node} using the adjoint sensitivity method. 
Furthermore, they utilised the ability to not store the intermediate hidden states to significantly reduce memory requirements during training.
\\ \\
These advancements in modelling continuous-time dynamics using neural networks set the stage for the development of NCDEs, the continuous-time analogue to recurrent neural networks.

\section{Neural Controlled Differential Equations}

\subsection{Definition}

\label{sec:ncde_intro}

\begin{definition}[Neural Controlled Differential Equation \citep{kidger2020neuralcde}]
\label{def:NCDE}
    Let $X:[t_0,t_{n}]\rightarrow \mathbb{R}^{d_X}$ be a continuous interpolation of $\{(t_i,x_{t_i})\}_{i=0}^{n}$, such that $X_{t_i}=(t_i, x_{t_i})$. 
    Let $\xi_{\phi}:\mathbb{R}^{d_X}\rightarrow\mathbb{R}^{d_h}$ and $f_{\theta}:\mathbb{R}^{d_h}\rightarrow\mathbb{R}^{d_h \times d_X}$ be neural networks and $l_{\psi}:\mathbb{R}^{d_h}\rightarrow\mathbb{R}^{d_y}$ be a linear map with learnable parameters $\phi$, $\theta$, and $\psi$, respectively. 
    An NCDE is defined by
    \begin{equation}
    \begin{aligned}
    \label{eq:ncde}
        h_{t_0} &= \xi_{\phi}(X_{t_0}), \\  
        h_t &= h_{t_0} + \int_{t_0}^t f_{\theta}(h_s)\mathrm{d} X_s, \\
        y_t &= l_\psi(h_t).
    \end{aligned}
    \end{equation}
\end{definition}
An NCDE consists of three learnable maps. First, $\xi_\phi$ maps the initial observation to the initial hidden state $h_{t_0}$. Second, $f_\theta$ maps each hidden state $h_s$ to a linear map that determines how increments of the control path $X$ update the hidden state. Finally, $l_\psi$ maps the hidden state $h_t$ to the output $y_t$.
\\ \\
The central innovation of NCDEs is that they interface with the data through a continuous control path $X$. 
Although Definition~\ref{def:NCDE} is stated for irregularly sampled, fully observed data, the same framework applies more broadly whenever the observations can be encoded as a continuous path $X$. 
For example, if only a subset of the components of $x_{t_i}$ is observed at time $t_i$, then one may either construct the control path channel-wise or first impute the missing values and then build $X$ from the resulting completed observations. 
As another example, when the observations $x_{t_i}$ take discrete values, one may first encode them in a Euclidean space and then choose $X$ to be the rectilinear interpolation \citep{morrill2022interpolation}, which between successive observations follows the path
\begin{equation}
    (t_i,x_{t_i}) \to (t_{i+1},x_{t_i}) \to (t_{i+1},x_{t_{i+1}}).
\end{equation}
The choice of $X$ requires some care, since different constructions expose different information to the model, affecting both the learned dynamics and the resulting performance.
In particular, some choices make $X_t$ depend on observations from times later than $t$, rendering them unsuitable for online settings.
\citet{morrill2022interpolation} theoretically and empirically studied a range of choices for $X$, and introduced rectilinear controls for online prediction tasks.
From this point onward, we assume that a suitable online control path $X$ has already been constructed from the observations.
\\ \\
To make use of the techniques developed for training Neural ODEs, NCDEs are typically rewritten as an ODE,
\begin{equation}
\label{eq:ncde_ode}
    \tilde{h}_t = \tilde{h}_{t_0} + \int_{t_0}^t g_{\theta, X}(\tilde{h}_s, s)\mathrm{d}s,
\end{equation}
where $\tilde{h}$ denotes the hidden state in this ODE representation, with $\tilde{h}_{t_0}=h_{t_0}$.
Originally, \citet{kidger2020neuralcde} proposed taking $X$ to be a differentiable interpolation and
\begin{equation}
\label{eq:cde_diff}
    g_{\theta, X}(\cdot) = f_{\theta}(\cdot)\frac{\mathrm{d} X}{\mathrm{d} s},
\end{equation}
which gives $\tilde{h}_t=h_t$ for all $t$. 
The Log-ODE method will allow us to retain the ODE form \eqref{eq:ncde_ode} for a wider class of driving paths, at the expense of replacing the exact dynamics by an approximation, such that $\tilde{h}_t\approx h_t$.

\subsection{Comparison with Alternative Approaches}

NCDEs are closely related to Neural ODEs \eqref{eq:node}. 
The key difference is that the trajectory of a Neural ODE's hidden state is determined entirely by its initial condition and the learned vector field, making them unsuitable for time series data.
Methods such as GRU-ODE and ODE-RNN address this by combining continuous-time evolution with discrete recurrent updates at observation times \citep{GRU-ODE, ODERNN}. 
In contrast, NCDEs incorporate the incoming signal directly into the dynamics, and so may be viewed as a continuous-time analogue of RNNs.
\\ \\
This connection can be made concrete by considering the residual RNN \citep{boxuan2018residual}
\begin{equation}
    \label{eq:resrnn}
    h_{t_{i+1}} = h_{t_i} + g_{\theta}(h_{t_i}, X_{t_{i+1}} - X_{t_i}).
\end{equation}
Just as a ResNet can be interpreted as a discretisation of a Neural ODE, \eqref{eq:resrnn} can be interpreted as the discretisation of a continuous process
\begin{equation}
    \label{eq:contresrnn}
    \mathrm{d}h_s = g_{\theta}(h_s, \mathrm{d}X_s).
\end{equation}
The key structural difference is that \eqref{eq:contresrnn} depends non-linearly on the increment $\mathrm{d}X_s$, whereas an NCDE depends linearly on the increment. 
However, this linear dependence does not reduce theoretical expressivity, as discussed further in Section~\ref{sec:ncde_expressivity}.
\\ \\
Alternative irregular-time methods include Gaussian processes and Neural Processes, which also naturally support predictive uncertainty quantification \citep{ohagan1978curve,williams1996gaussian,rasmussen2006gaussian,garnelo2018conditional}. 
In particular, causal Gaussian processes respect the same online temporal structure as the sequential prediction setting relevant for NCDEs \citep{cunningham2012gaussian}. 
Such approaches are appealing when quantifying predictive uncertainty is itself a central objective.
\\ \\
However, the focus of this thesis is on understanding parametric, causal, continuous-time models for sequential prediction. 
Therefore, to isolate the core questions of representation, architecture, and numerical approximation, we work in a simplified deterministic setting. 
Once the observations have been converted into a control path, the path $X$ is treated as fixed, the output path $y$ is treated as deterministic, and the maps $\xi_{\phi}$, $f_{\theta}$, and $l_\psi$ are taken to be deterministic parametrised functions. 
This excludes explicit modelling of aleatoric and epistemic uncertainty, but allows this thesis to focus on the fundamental mechanisms by which NCDEs process continuous paths and propagate information through time. 
There are natural extensions of the NCDE framework that do incorporate uncertainty, including Neural Stochastic Differential Equations and Bayesian Neural Controlled Differential Equations \citep{kidger2021neural,hess2024bncde}. 
Furthermore, the numerical and architectural advances developed in this thesis are compatible with these uncertainty-aware settings, although we do not pursue those extensions here.

\subsection{Expressivity}
\label{sec:ncde_expressivity}

\begin{definition}[Maximal expressivity \citep{walker2025structuredlinearcdesmaximally}]
\label{def:universal_approximation_general}
Let $\mathcal{X}$ be a topological space, and let 
$\mathcal{F} = \{ f_\theta : \mathcal{X} \to \mathbb{R} \mid \theta \in \Theta \}$
be a class of real-valued functions on $\mathcal{X}$, parametrised by some set $\Theta$. 
We say that $\mathcal{F}$ is maximally expressive (or universal) on $\mathcal{X}$ if, for every compact set $\mathcal{K} \subset \mathcal{X}$ and every real-valued continuous function $f : \mathcal{K} \to \mathbb{R}$, the following property holds:
\begin{equation}
\forall \epsilon > 0, \; \exists \theta \in \Theta \quad \text{s.t.} \quad 
\sup_{x \in \mathcal{K}} \big| f(x) - f_\theta(x) \big| < \epsilon.
\end{equation}
\end{definition}

Although maximal expressivity is not sufficient to ensure good performance, it is desirable, as it shows that at least theoretically the model class is rich enough to approximate any continuous target map on compact sets.
A classical result is the Universal Approximation Theorem for neural networks.

\begin{theorem}[Universal Approximation Theorem \citep{cybenko1989approximation, hornik1991approximation}]
\label{thm:uat}
Let $\sigma:\mathbb{R}\to\mathbb{R}$ be continuous, bounded, and nonconstant. Define
\begin{equation}
\mathcal{F} =
\left\{
x\mapsto \sum_{j=1}^{m} a_j \sigma(w_j^\top x+b_j)
\;\middle|\;
m\in\mathbb{N},\ a_j\in\mathbb{R},\ w_j\in\mathbb{R}^d,\ b_j\in\mathbb{R}
\right\}.
\end{equation}
Then $\mathcal{F}$ is maximally expressive on $\mathbb{R}^d$.
\end{theorem}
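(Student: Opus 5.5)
The plan is to prove density in $C(\mathcal{K},\mathbb{R})$ for every compact $\mathcal{K}\subset\mathbb{R}^d$ in two stages. First reduce the $d$-dimensional problem to a one-dimensional one using ridge functions. Then show that, in one dimension, the closure of the span of $\{t\mapsto\sigma(at+b)\}$ contains every monomial.

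\textbf{Step 1 (reduction to one dimension).} Fix $\mathcal{K}\subset\mathbb{R}^d$ compact, contained in a ball of radius $R$. By Theorem~\ref{th:SW}, or directly by the multivariate Weierstrass theorem, polynomials are dense in $C(\mathcal{K},\mathbb{R})$. Every homogeneous polynomial of degree $n$ in $x$ is a finite linear combination of powers $(w^\top x)^n$ with $w\in\mathbb{R}^d$. This holds because the span of $\{(w^\top x)^n\}$ is invariant under linear changes of variables, and differentiating $(w^\top x)^n$ in $w$ recovers every monomial $x^\alpha$ with $|\alpha|=n$; alternatively one can use polarisation. So it suffices to approximate $x\mapsto (w^\top x)^n$ uniformly on $\mathcal{K}$. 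Since $t=w^\top x$ ranges over a compact interval $[-R\|w\|,R\|w\|]$, it is enough to show that for each $n$ and each compact interval $I$, the monomial $t^n$ lies in the uniform closure $\overline{\mathcal{F}_1}$ on $I$ of $\mathcal{F}_1=\mathrm{span}\{\sigma(at+b): a,b\in\mathbb{R}\}$. Substituting $t=w^\top x$ then gives elements of $\mathcal{F}$, with $w_j$ rescaled by $a$.

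\textbf{Step 2 (smoothing).} Let $\varphi$ be a smooth compactly supported mollifier and set $\psi=\sigma*\varphi$. Since $\sigma$ is bounded and continuous, $\psi$ is smooth and bounded. Moreover, $\psi(at+b)=\int\sigma(at+b-s)\varphi(s)\,\mathrm{d}s$ is a uniform limit on $I$ of Riemann sums $\sum_i \varphi(s_i)\Delta s\,\sigma(at+b-s_i)$, because $\sigma$ is uniformly continuous on compacts. Hence $\psi(at+b)\in\overline{\mathcal{F}_1}$ for all $a,b$. Choosing $\varphi$ from an approximate identity, $\sigma*\varphi_\epsilon\to\sigma$ locally uniformly, so some $\psi$ is nonconstant. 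Being bounded and nonconstant, $\psi$ is not a polynomial. Therefore, for every $k$ there is a $b_k$ with $\psi^{(k)}(b_k)\neq 0$; otherwise $\psi^{(k)}\equiv0$ and $\psi$ would be a polynomial.

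\textbf{Step 3 (monomials via derivatives in $a$; the main obstacle).} Observe that
\begin{equation}
\frac{\partial^k}{\partial a^k}\psi(at+b_k)\Big|_{a=0}=t^k\psi^{(k)}(b_k).
\end{equation}
The left side is a limit of $k$-th order finite difference quotients in $a$, each of which is a finite linear combination of functions $\psi(a_it+b_k)\in\overline{\mathcal{F}_1}$. The delicate point is to check that this limit is uniform in $t\in I$. This follows from Taylor's theorem with remainder: the $(k+1)$-th derivative $t^{k+1}\psi^{(k+1)}(at+b_k)$ is bounded for $t\in I$ and $|a|\le1$, since $\psi^{(k+1)}=\sigma*\varphi^{(k+1)}$ is bounded. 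As $\overline{\mathcal{F}_1}$ is a closed linear subspace, dividing by $\psi^{(k)}(b_k)\neq0$ gives $t^k\in\overline{\mathcal{F}_1}$. Combining this with Step 1 shows that $\mathcal{F}$ is dense in $C(\mathcal{K},\mathbb{R})$ for every compact $\mathcal{K}$, which is maximal expressivity on $\mathbb{R}^d$ in the sense of Definition~\ref{def:universal_approximation_general}.
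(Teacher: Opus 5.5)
Your proof is correct, and it takes a genuinely different route from the one the paper relies on. The paper does not prove Theorem~\ref{thm:uat}; it defers to \citet{cybenko1989approximation} and \citet{hornik1991approximation}. Their arguments work by duality. If $\mathcal{F}$ were not dense in $C(\mathcal{K},\mathbb{R})$, Hahn--Banach and the Riesz representation theorem would give a nonzero signed measure on $\mathcal{K}$ that annihilates every unit $\sigma(w^\top x+b)$. A discriminatory or Fourier-analytic argument then shows that no such measure exists.

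What you have written is instead the direct argument of Leshno, Lin, Pinkus and Schocken (1993): density of ridge polynomials, mollification, and differentiation in the dilation parameter.

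\textbf{What your route buys.} It is elementary: every step is an explicit uniform limit of finite combinations of units, with no measure theory or Fourier analysis. It also isolates the operative hypothesis, since boundedness is used essentially only to conclude that $\psi=\sigma*\varphi$ is not a polynomial. Your other uses of boundedness, such as bounded derivatives of $\psi$, only need local bounds, which continuity already supplies. If you replace that one step by a Baire-category argument showing that some $\sigma*\varphi$ is non-polynomial whenever $\sigma$ is, the same proof covers every continuous non-polynomial activation. This includes the unbounded ReLU and SiLU activations used elsewhere in the thesis, which the theorem as stated does not cover.

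\textbf{What the duality route buys.} It is shorter once the functional analysis is available. It also decouples density from the regularity of $\sigma$: the annihilating-measure argument yields $L^p(\mu)$ density for merely bounded measurable activations, and in Hornik's paper it also gives simultaneous approximation of derivatives. Your Riemann-sum step in Step~2, by contrast, genuinely uses continuity of $\sigma$.

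\textbf{A small point of exposition.} In Step~1, invariance of the span under linear changes of variables does not by itself produce all homogeneous polynomials. The differentiation in $w$ (or polarisation) does the work. That differentiation is legitimate because the span of $\{(w^\top x)^n\}$ lies in the finite-dimensional space of degree-$n$ forms, so it is closed and contains the limits of its finite-difference quotients. It is worth saying this explicitly.
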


Theorem \ref{thm:uat} establishes that single hidden-layer neural networks are maximally expressive for continuous real-valued functions on $\mathbb{R}^d$. 
Theorem \ref{thm:universal_ncde} shows that NCDEs satisfy the same notion of maximal expressivity for continuous functions of entire paths.

\begin{theorem}[Maximally Expressive NCDEs \citep{kidger2020neuralcde}]
    \label{thm:universal_ncde}
    Let $\mathcal{X}$ be the space of bounded variation paths on the interval $[t_0,t_n]$ that start at a common point and include time as a channel, endowed with the $1$-variation topology. 
    Let $\mathcal{F}$ be the class of real-valued maps $X \mapsto y_{t_n}$ induced by NCDEs from Definition~\ref{def:NCDE} with $d_h \in \mathbb{N}$ and $d_y=1$. 
    Then $\mathcal{F}$ is maximally expressive on $\mathcal{X}$.
\end{theorem}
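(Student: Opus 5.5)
Since $l_\psi$ is linear, the plan is to show that NCDEs can realise any finite linear functional of a truncated signature of $X$, and then appeal to the universality of the signature. Fix a compact $\mathcal{K}\subset\mathcal{X}$, a continuous $F:\mathcal{K}\to\mathbb{R}$ and $\epsilon>0$. The paths in $\mathcal{X}$ share a starting point and carry time as a channel, so Lemma~\ref{lem:injectivity-time} applies: the algebra generated by the functionals $\phi^{(\alpha,f)}_{[t_0,t_n]}$ is dense in $C(\mathcal{K},\mathbb{R})$. By the shuffle product identity (Theorem~\ref{th:shuff}) this algebra is just the span of the $\phi^{(\alpha,f)}$. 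Hence there exist $N\in\mathbb{N}$ and a linear map $L$ on the truncated tensor algebra $T^N(\mathbb{R}^{d_X})$ with $\sup_{X\in\mathcal{K}}|F(X)-L(S^N_{[t_0,t_n]}(X))|<\epsilon/2$.

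Next I would embed the truncated signature in an NCDE hidden state. Take $d_h=\dim T^N(\mathbb{R}^{d_X})$ and let $\xi_\phi$ be the constant map to $(1,0,\ldots,0)$; a network with zero weights and bias $(1,0,\ldots,0)$ does this. The truncated signature solves the linear CDE $\mathrm{d}S=\sum_i A^i S\,\mathrm{d}X^i$ from \eqref{eq:sig_cde}, with the $A^i$ truncated at level $N$. These $A^i$ are nilpotent, so the solution stays bounded. To confirm this, I would use the factorial decay bound (Theorem~\ref{thm:factorial-decay}) and the fact that a compact set in $1$-variation has bounded $1$-variation. Together these give a compact set $B\subset\mathbb{R}^{d_h}$ containing every trajectory $S^N_{[t_0,t]}(X)$ for $X\in\mathcal{K}$ and $t\in[t_0,t_n]$. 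The ideal vector field is the linear map $f^*(h)=(A^1h,\ldots,A^{d_X}h)$. I would replace it by a neural network $f_\theta$ that approximates $f^*$ uniformly on a neighbourhood $B_\delta$ of $B$ (Theorem~\ref{thm:uat}, applied entrywise). Finally, take $l_\psi=L$.

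The main obstacle is stability: showing that the NCDE driven by $f_\theta$ stays close to the one driven by $f^*$, uniformly over $\mathcal{K}$. I would first try a Gronwall argument for Riemann--Stieltjes CDEs. Suppose $f_\theta$ is Lipschitz with constant $K$ on $B_\delta$, for example by using a smooth activation, or by mollifying the network and approximating $f^*$ together with its derivative. Write $\eta=\sup_{B_\delta}\|f_\theta-f^*\|$ and $R=\sup_{X\in\mathcal{K}}\|X\|_{1\text{-var}}$. Then, as long as the trajectories stay in $B_\delta$, $\|h_t-h^*_t\|\le \eta R\,e^{(K+\|f^*\|_{\mathrm{Lip}})R}$. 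For small $\eta$ this quantity is below $\delta$, and a continuity/exit-time argument then keeps the perturbed trajectory inside $B_\delta$. The delicate point is that $K$ must stay bounded as $\eta\to0$. To handle this, I would fix a Lipschitz bound in advance by approximating $f^*$ in $C^1$ on $B_\delta$, which is possible with smooth activations since single-layer networks are dense in $C^1$ on compacts. Alternatively, I would use the one-sided form of the estimate, which only needs the Lipschitz constant of $f^*$ together with the uniform error $\eta$. Choosing $\eta$ so that $\|L\|\cdot\eta Re^{CR}<\epsilon/2$ then gives $|F(X)-y_{t_n}|<\epsilon$ on $\mathcal{K}$, which is the required maximal expressivity.
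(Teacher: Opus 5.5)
Your proposal is correct and follows the same route as the paper. The paper proves the theorem by combining three ingredients: the universality of the signature on time-augmented paths with a common starting point (Corollary~\ref{cor:universality-on-paths}), the fact that the truncated signature solves the linear CDE \eqref{eq:sig_cde}, and the Universal Approximation Theorem, used to replace that linear vector field by a neural network on a compact set. The paper defers the stability step to \citet[Theorem C.25]{kidger2022neuraldifferentialequations}; your Gronwall and exit-time argument is a valid way to supply it, in the one-sided form that needs only the Lipschitz constant of $f^*$ and the uniform error on $B_\delta$.
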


\begin{proof}
    The result follows from the signature being universal, Corollary \ref{cor:universality-on-paths}, the truncated signature solving a linear CDE, \eqref{eq:sig_cde}, and the Universal Approximation Theorem for neural networks \citep{cybenko1989approximation, hornik1991approximation}. 
    For more details, see \citep[Theorem C.25]{kidger2022neuraldifferentialequations}. 
\end{proof}

Theorem \ref{thm:universal_ncde} can be extended from path-to-point functions to path-to-path functions by replacing the linear readout $l_{\psi}$ with a neural network, as shown in \citep[Proposition D.2]{cirone2024deepSSM}.
In addition to their important theoretical properties, NCDEs achieve superior performance on a range of datasets when compared to similar methods, such as GRU-ODE and ODE-RNN, which handle irregularly sampled data by combining Neural ODEs with RNNs \citep{GRU-ODE, ODERNN}. 
\\ \\
Just as NCDEs can be viewed as the continuous-time analogue of RNNs, once an NCDE is discretised using a differential equation solver, it may be viewed as an RNN unrolled over the solver steps. 
If many solver steps are required, then training must backpropagate through a long sequence of hidden-state updates. 
Consequently, NCDEs can suffer from the same exploding and vanishing gradient issues as RNNs on long time series, leading to degraded performance \citep{morrill2021neuralrough}.

\subsection{Neural Rough Differential Equations}

A recurring issue in deep learning approaches to time series modelling is that large number of repeated forward passes through the neural network cause the gradient during training to either explode or vanish, as was first shown in \citep{Hochreiter1991UntersuchungenZD} and explored further in \citep{vanishing}. 
This issue was one of the motivations behind the development of LSTMs \citep{hochreiter1997long}. 
The Log-ODE method introduced in Section \ref{sec:logode} is an efficient and accurate method for approximating the solution to a CDE. 
Inspired by the Log-ODE method, \citeauthor{morrill2021neuralrough} introduced neural rough differential equations (NRDEs), which replaced \eqref{eq:cde_diff} with the piecewise in time
\begin{equation}
\label{eq:logode2}
    g_{\theta,X}(\cdot) = \bar{f}_{\theta}\left(\cdot\right)\frac{\log(S^{N}(X)_{[r_i,r_{i+1}]})}{r_{i+1}-r_i}, \quad s\in[r_i, r_{i+1}),
\end{equation}
where $\bar{f}_{\theta}:\mathbb{R}^{d_h}\rightarrow\mathbb{R}^{d_h\times \beta(d_X,N)}$ is a neural network and $\beta(d_X,N)=\mathcal{O}(d_X^N)$ is the dimension of $\mathfrak{L}^N(\mathbb{R}^{d_X})$, the space where the depth$-N$ truncated log-signature of a $d_X$ dimensional path lives \citep{morrill2021neuralrough}. 
\\ \\
Compared to NCDEs, NRDEs can reduce the number of forward passes through the network while evaluating the model, as the vector field is autonomous on each interval $[r_i,r_{i+1})$. 
This has been shown to lead to improved classification accuracy, alongside reduced time and memory-usage, on time series with up to 17,000 observations \citep{morrill2021neuralrough}. 
Furthermore, as it is no longer necessary to apply a differentiable interpolation to the time series data, NRDEs are applicable to a wider range of input signals. 
By neglecting the Lie bracket structure of $\bar{f}_{\theta}$, NRDEs reduce the computational burden of evaluating the vector field, at the cost of increasing the output dimension of the neural network. 
In contrast, Log-NCDEs retain the Lie bracket structure of $\bar{f}_{\theta}$.

\section{Log Neural Controlled Differential Equations}

\label{sec:log_ncde}

\subsection{Definition}

Log-NCDEs use the same underlying model as NRDEs,
\begin{equation}
\label{eq:Log-NCDE}
\begin{aligned}
    h_t &= h_{t_0} + \int_{t_0}^t g_{\theta, X}(h_s)\text{d}s, \\ g_{\theta, X}(\cdot) &= \bar{f}_{\theta}\left(\cdot\right)\frac{\log(S^{N}(X)_{[r_i,r_{i+1}]})}{r_{i+1}-r_i}, \quad s\in[r_i, r_{i+1}),
\end{aligned}
\end{equation}
but with two major changes. 
First, instead of parametrising $\bar{f}_{\theta}$ using a neural network, it is constructed using the iterated Lie brackets of an NCDE's neural network, $f_{\theta}$. 
Second, $f_{\theta}$ is ensured to be a $\mathrm{Lip}(\gamma)$ function for $\gamma\in(N-1,N]$. 
These changes have a major benefit. 
For $N>1$, Log-NCDEs are exploring a smaller output space during training than NRDEs, while maintaining the same expressivity, as NCDEs are maximally expressive. 
This is because the output dimension of $f_{\theta}$ is $d_h \times d_X$, whereas the output dimension of $\bar{f}_{\theta}$ is $d_h\times \beta(d_X,N)$, where $\beta(d_X,N)=\mathcal{O}(d_X^N)$.
Figure \ref{fig:logsig_dim} compares these values for paths of dimension $d_X$ from $1$ to $15$ and truncation depths of $N=1$ and $N=2$. 
The reduced output dimension comes at the cost of needing to calculate the iterated Lie brackets when evaluating Log-NCDEs, which is quantified in Section \ref{sec:cost} and explored empirically in Section \ref{sec:uea}. 
Figure \ref{fig:methods} is a schematic diagram comparing the approaches of an NCDE and a Log-NCDE.
\\
\begin{figure}
    \centering
    \includegraphics[width=\linewidth]{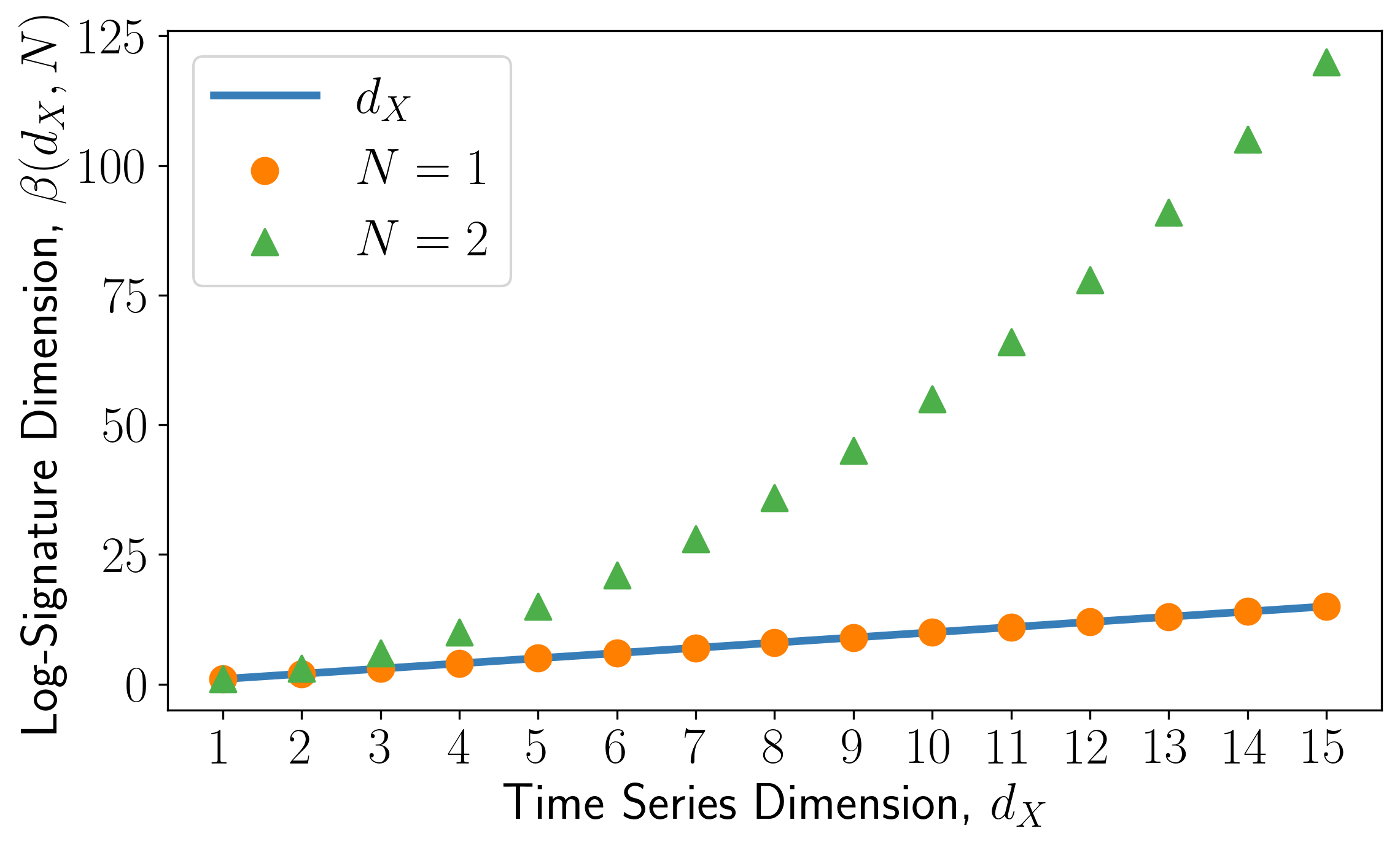}
    \caption{A plot of $\beta(d_X, N)$ against $d_X$ for $N=1,2$. The output dimension of an NRDE's neural network is $\mathbb{R}^{d_h\times \beta(d_X,N)}$, whereas for a Log-NCDE it is $\mathbb{R}^{d_h \times d_X}$.}
    \label{fig:logsig_dim}
\end{figure}

When $N=1$, \eqref{eq:Log-NCDE} simplifies to 
\begin{equation}
\label{eq:Log-NCDE_N1}
    g_{\theta, X}(\cdot) = f_{\theta}\left(\cdot\right)\frac{X_{r_{i+1}}-X_{r_i}}{r_{i+1}-r_i}, \quad s\in[r_i, r_{i+1}).
\end{equation}
Hence, in this case the only difference between Log-NCDEs and NRDEs is the regularisation of $f_{\theta}$. 
Furthermore, \eqref{eq:Log-NCDE_N1} and \eqref{eq:cde_diff} are equivalent when $X$ is a linear interpolation. 
Therefore, the approach of NCDEs, NRDEs, and Log-NCDEs coincide when using a depth$-1$ Log-ODE approximation \citep{morrill2021neuralrough}.
\begin{figure}
\centering

\subfloat[A schematic NCDE. A continuous path $X$ is constructed from the observations. Its time derivative $\frac{\mathrm{d}X}{\mathrm{d}s}$ is combined with the neural vector field $f_{\theta}$ to define the hidden-state dynamics, which are evolved with a differential equation solver. A linear map is then applied to $h_{t_n}$ to produce a prediction, and the loss $L$ against the target $y_{t_n}$ is used to update the learnable parameters.]{%
  \hspace{-0.5cm}\includegraphics[clip,width=0.9\textwidth]{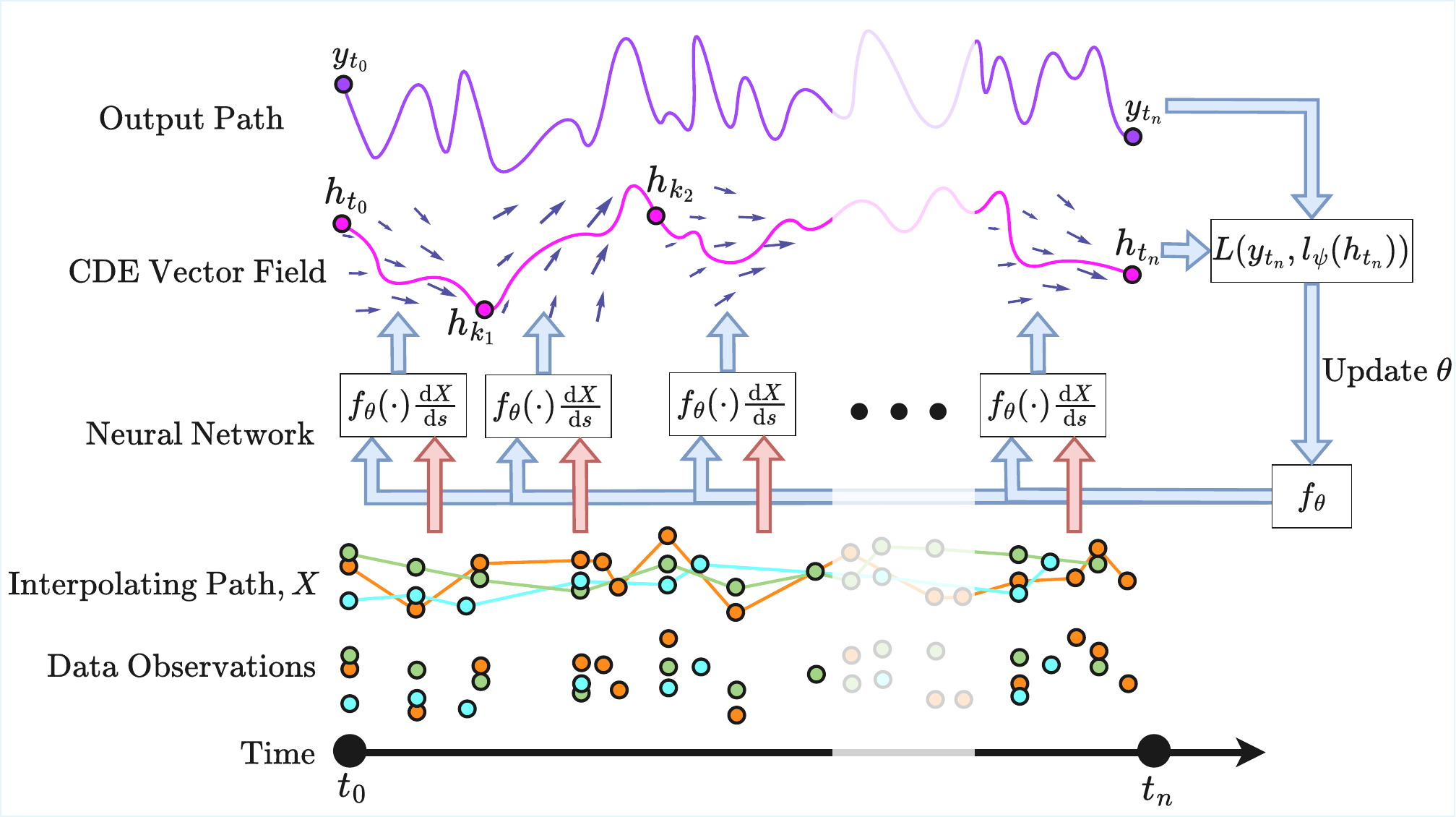}%
}
\vspace{5mm}
\subfloat[A schematic Log-NCDE. In contrast to the NCDE, the log-signature of the driving path $X$ is computed over intervals $\{r_i\}_{i=0}^m$ and combined with iterated Lie brackets of the neural vector field $f_{\theta}$ to define the hidden-state dynamics.]{%
  \hspace{-0.5cm}\includegraphics[clip,width=0.9\textwidth]{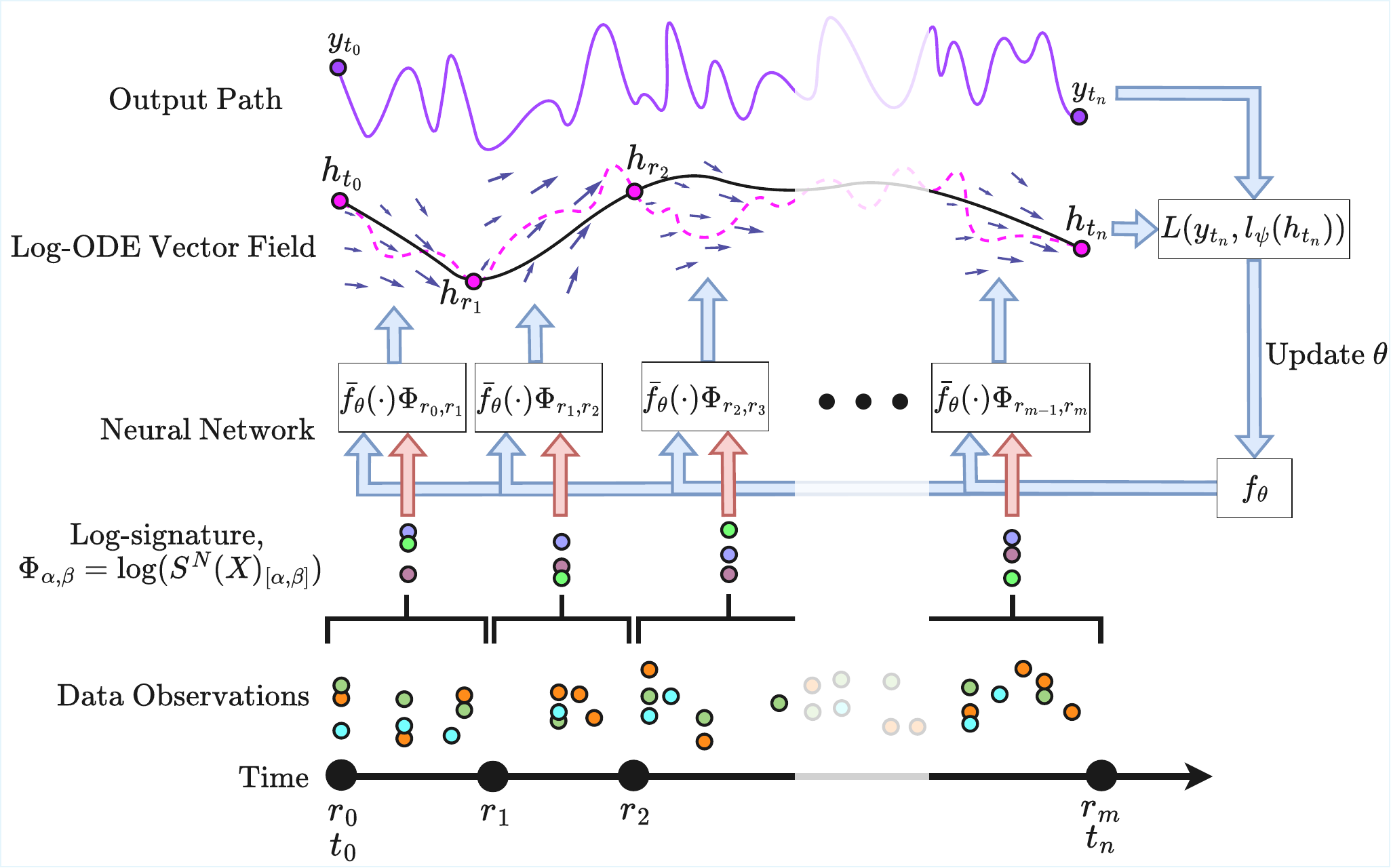}%
}
  \caption{A schematic diagram of an NCDE and a Log-NCDE.}
  \label{fig:methods}
\end{figure}

\subsection{$\mathrm{Lip}(\gamma)$ Neural Networks}
\label{sec:lipgammaNN}
As discussed in Section \ref{sec:logode}, applying a depth $N$ Log-ODE method requires the vector field $f_{\theta}$ to be $\mathrm{Lip}(\gamma)$ for $\gamma\in(N-1,N]$. 
There exist theoretical results linking the robustness of a learning algorithm to the algorithm's Lipschitz constant \citep{robustness}. 
Furthermore, there are results bounding the Lipschitz constant of a fully connected neural network (FCNN) \citep{NNLip}. 
Here, we extend these results to $\mathrm{Lip}(\gamma)$ for $1<\gamma\leq2$.

\begin{definition}[Fully Connected Neural Network]
\label{def:FCNN}
    Let $m,n_{in},n_{out},n_h\in\mathbb{N}$ and $f_{\theta}$ be a fully connected neural network (FCNN) with $m$ layers, input dimension $n_{in}$, output dimension $n_{out}$, hidden dimension $n_h$, and activation function $\sigma$. Given an input $x\in\mathbb{R}^{n_{in}}$,
    \begin{equation}
        f_{\theta}(x) = L^m(\cdots (L^1(x))\cdots),
    \end{equation}
    where $L^1:\mathbb{R}^{n_{in}}\rightarrow\mathbb{R}^{n_h}$, $L^i:\mathbb{R}^{n_h}\rightarrow\mathbb{R}^{n_h}$ for $i=2,\ldots,m-1$, and $L^m:\mathbb{R}^{n_h}\rightarrow\mathbb{R}^{n_{out}}$. Each layer is defined by
    \begin{equation}
        L^i(y) = \begin{bmatrix} L^i_1(y) \\ \vdots \\ L^i_\alpha(y) \end{bmatrix} = \sigma\left(\begin{bmatrix} l^i_1(y) \\ \vdots \\ l^i_\alpha(y)\end{bmatrix}\right) = \begin{bmatrix} \sigma(l^i_1(y)) \\ \vdots \\ \sigma(l^i_\alpha(y) )\end{bmatrix} = \begin{bmatrix} \sigma(W^i_1 \cdot y + b^i_1) \\ \vdots \\ \sigma(W^i_\alpha \cdot y + b^i_\alpha) \end{bmatrix},
    \end{equation}
    where $y\in\mathbb{R}^\beta$, $W^i=[W^i_1, \ldots, W^i_\alpha]^T\in\mathbb{R}^{\alpha\times \beta}$ and $b^i =[b^i_1, \ldots, b^i_\alpha]^T\in\mathbb{R}^\alpha$ are the learnable parameters and
    \begin{equation}
        (\alpha,\beta) = \begin{cases} (n_h,n_{\mathrm{in}}), \qquad i=1, \\
        (n_h, n_h), \qquad \; i=2,\ldots,m-1, \\ 
        (n_{\mathrm{out}}, n_h), \qquad i=m.
        \end{cases}
    \end{equation}
\end{definition}

\begin{assumption}
    \label{ass:activation}
    Let $1<\gamma\leq 2$. We will assume that the activation function $\sigma$ satisfies the following four conditions:
    \begin{enumerate}
        \item $\sigma$ is continuously differentiable with derivative $\sigma'$,
        \item $|\sigma(x)|\leq|x|$,
        \item $\sup_{x\in\mathbb{R}}|\sigma'(x)| \leq M_1$, and
        \item $[\sigma']_{\gamma - 1} = \sup_{\substack{y,x\in \mathbb{R}\\ y\neq x}} \frac{|\sigma'(y)-\sigma'(x)|}{|y-x|^{\gamma-1}} \leq M_2$,
    \end{enumerate}
    for constants $M_1,M_2>0$. 
\end{assumption}

Conditions $1$ and $4$ imply that $\sigma\in C^{1,\gamma-1}$. 
Additionally, by Lemma \ref{lem:Ck_taylor_bound}, $\sigma$ satisfies all the conditions to be $\mathrm{Lip}(\gamma)$ except $\sup_{x\in\mathbb{R}}|\sigma(x)| < \infty$, which we do not assume as it would exclude some standard activation functions which do satisfy Assumption \ref{ass:activation}, such as SiLU. 
It is worth noting that not all standard activation functions satisfy Assumption \ref{ass:activation}. 
For example, ReLU is not continuously differentiable. 

\begin{lemma}
\label{lem:lipgamma-layer}
    Let $1<\gamma \leq 2$ and $f_{\theta}$ be a FCNN with activation function satisfying Assumption \ref{ass:activation}.
    Take the Euclidean norm and assume the input $x\in A\subset \mathbb{R}^{n_{\mathrm{in}}}$, where $\sup_{x\in A}\|x\|_2=C$.
    Then each layer $L^i$ satisfies
    \begin{equation}
        \|L^i\|_{\mathrm{Lip}(\gamma)} 
        \leq \max\left\{\Gamma^i, M_1\|W^i\|_{\operatorname{op}}, M_2\|W^i\|^{\gamma}_{\operatorname{op}}\right\},
    \end{equation}
    where $\Gamma^i = \|W^i\|_{\operatorname{op}}\Gamma^{i-1} + \|b^i\|_2$ for $i\geq1$ and $\Gamma^0=C$.
\end{lemma}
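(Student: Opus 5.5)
Fix a layer $L^i$ and regard it as a $\mathrm{Lip}(\gamma)$ function on the set $E_i$ of inputs it can receive: $E_1 = A$, and $E_i = L^{i-1}(E_{i-1})$ for $i \geq 2$. Write $L^i = \sigma \circ l^i$ with $l^i(y) = W^i y + b^i$, where $\sigma$ acts componentwise. Take the candidate collection $(L^{i,0}, L^{i,1})$ with
\begin{equation}
    L^{i,0}(y)=\sigma(W^iy+b^i),
    \qquad
    L^{i,1}(y)[u]=\mathrm{diag}\bigl(\sigma'(W^iy+b^i)\bigr)W^iu.
\end{equation}
Since $\sigma$ is not assumed bounded, we cannot simply invoke Lemma \ref{lem:normcomplip2}. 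Instead, we bound the four quantities in Definition \ref{def:lipgamma} directly. Each of them should be controlled by one of the three terms in the maximum.

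\textbf{Step 1: the sup bound on $L^{i,0}$.} First show by induction that $\sup_{y\in E_i}\|y\|_2\leq \Gamma^{i-1}$. The base case is $\sup_{x \in A}\|x\|_2 = C = \Gamma^0$. For the inductive step, condition 2 of Assumption \ref{ass:activation} gives $|\sigma(z)|\leq|z|$ componentwise, so
\begin{equation}
    \|L^i(y)\|_2\leq\|W^iy+b^i\|_2\leq \|W^i\|_{\operatorname{op}}\Gamma^{i-1}+\|b^i\|_2=\Gamma^i.
\end{equation}
This bounds $\|L^{i,0}\|$ by $\Gamma^i$.

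\textbf{Step 2: the sup bound on $L^{i,1}$.} The diagonal matrix has operator norm at most $M_1$ by condition 3. Hence $\|L^{i,1}(y)\|_{\operatorname{op}}\leq M_1\|W^i\|_{\operatorname{op}}$.

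\textbf{Step 3: the remainder terms.} For $R_1$, we have
\begin{equation}
    \|L^{i,1}(q)-L^{i,1}(p)\|_{\operatorname{op}}
    \leq \bigl\|\mathrm{diag}\bigl(\sigma'(l^i(q))-\sigma'(l^i(p))\bigr)\bigr\|_{\operatorname{op}}\,\|W^i\|_{\operatorname{op}}.
\end{equation}
The diagonal operator norm is the maximum over components, and each component is at most
\begin{equation}
    M_2|W^i_j(q-p)|^{\gamma-1}\leq M_2\|W^i\|_{\operatorname{op}}^{\gamma-1}\|q-p\|^{\gamma-1}
\end{equation}
by condition 4. This gives $R_1\leq M_2\|W^i\|_{\operatorname{op}}^{\gamma}\|q-p\|^{\gamma-1}$.

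For $R_0$, apply the scalar Taylor estimate of Lemma \ref{lem:Ck_taylor_bound} (with $k=1$, $\alpha=\gamma-1$) to $\sigma$ on $\mathbb{R}$. This uses only the Hölder bound on $\sigma'$, not boundedness of $\sigma$, and gives
\begin{equation}
    |\sigma(b)-\sigma(a)-\sigma'(a)(b-a)|\leq M_2|b-a|^{\gamma}.
\end{equation}
Taking $a=l^i_j(p)$ and $b=l^i_j(q)$, the remainder vector has components bounded by $M_2|W^i_j(q-p)|^{\gamma}$.

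\textbf{Main obstacle.} The delicate point is passing from componentwise bounds to the Euclidean norm of the vector for $R_0$. The crude route costs a factor of $\sqrt{\alpha}$. Instead, use $\sum_j |z_j|^{2\gamma}\leq(\sum_j z_j^2)^{\gamma}$, valid since $\gamma\geq1$. With $z=W^i(q-p)$ this gives
\begin{equation}
    \|R_0\|_2\leq M_2\|W^i(q-p)\|_2^{\gamma}\leq M_2\|W^i\|_{\operatorname{op}}^{\gamma}\|q-p\|^{\gamma}.
\end{equation}
The $R_1$ bound was already dimension-free because the diagonal operator norm is a maximum. Collecting Steps 1 to 3, the $\mathrm{Lip}(\gamma)$ norm of $L^i$ is the maximum of $\Gamma^i$, $M_1\|W^i\|_{\operatorname{op}}$ and $M_2\|W^i\|_{\operatorname{op}}^{\gamma}$, as claimed.
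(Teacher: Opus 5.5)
Your proof is correct and follows the paper's argument: the same four quantities in Definition~\ref{def:lipgamma}, each bounded by one of the same three terms, with the same induction on the layer input domains giving $\Gamma^i$. The one difference is the $j=0$ remainder, where the paper applies Lemma~\ref{lem:Ck_taylor_bound} to the vector-valued layer $L^i$ on all of $\mathbb{R}^{\beta}$, so that bound is inherited directly from the H\"older bound on $\nabla L^i$ and the componentwise aggregation you identify as the main obstacle never arises; your inequality $\sum_j|z_j|^{2\gamma}\le(\sum_j z_j^2)^{\gamma}$ is nonetheless a valid route and gives the same constant.
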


\begin{proof}
    Let $Y_i$ be the input domain to the $i^{\text{th}}$ layer. The $\mathrm{Lip}(\gamma)$ norm of $L^i$ is a maximum over four terms:
    \begin{enumerate}
        \item First, 
        \begin{equation}
            \sup_{y\in Y_i}\|L^i(y)\|_2 \leq \sup_{y\in Y_i} \|W^i\|_{\operatorname{op}}\|y\|_2 + \|b^i\|_2,
        \end{equation}
        by condition $2$ in Assumption \ref{ass:activation}.
        \item Second,
        \begin{equation}
            \sup_{y\in Y_i}\|\nabla L^i(y)\|_{\operatorname{op}} = \sup_{y\in Y_i}\|\operatorname{diag}(\sigma'(W^iy + b^i))W^i\|_{\operatorname{op}} \leq M_1\|W^i\|_{\operatorname{op}},
        \end{equation}
        by condition $3$ in Assumption \ref{ass:activation}.
        \item Third,
        \begin{equation}
            \sup_{y\neq x} \frac{\|\nabla L^i(y)-\nabla L^i(x)\|_{\operatorname{op}}}{\|y-x\|_2^{\gamma-1}} \leq \sup_{y\neq x} \frac{\|\operatorname{diag}(\Delta)\|_{\operatorname{op}}}{\|y-x\|_2^{\gamma-1}}\|W^i\|_{\operatorname{op}}, 
        \end{equation}
        where
        \begin{equation}
            \Delta_j = \sigma'(W^i_j \cdot y + b^i_j) - \sigma'(W^i_j \cdot x + b^i_j).
        \end{equation}
        Using condition $4$ in Assumption \ref{ass:activation},
        \begin{equation}
            \|\operatorname{diag}(\Delta)\|_{\operatorname{op}} \leq M_2\|y-x\|_2^{\gamma-1}\max_j \|W^i_j\|^{\gamma-1}_2 \leq M_2\|y-x\|_2^{\gamma-1}\|W^i\|_{\operatorname{op}}^{\gamma-1}.
        \end{equation}
        Therefore, 
        \begin{equation}
            \sup_{y\neq x} \frac{\|\nabla L^i(y)-\nabla L^i(x)\|_{\operatorname{op}}}{\|y-x\|_2^{\gamma-1}} \leq M_2\|W^i\|^{\gamma}_{\operatorname{op}}.
        \end{equation}
        \item Fourth, since each $L^i$ belongs to $C^{1,\gamma-1}(\mathbb{R}^{\beta}, \mathbb{R}^\alpha)$, Lemma \ref{lem:Ck_taylor_bound} can be used to bound the final term,
        \begin{equation}
            \sup_{y\neq x} \frac{\|L^i(y)-L^i(x) - \nabla L^i(x)[y-x]\|_2}{\|y-x\|_2^{\gamma}} \leq \sup_{y\neq x} \frac{\|\nabla L^i(y)-\nabla L^i(x)\|_{\operatorname{op}}}{\|y-x\|_2^{\gamma-1}}.
        \end{equation}
    \end{enumerate}
    Therefore,
    \begin{equation}
        \|L^i\|_{\mathrm{Lip}(\gamma)} 
        \leq \max\left\{\sup_{y \in Y_i} \|W^i\|_{\operatorname{op}}\|y\|_2 + \|b^i\|_2, M_1\|W^i\|_{\operatorname{op}}, M_2\|W^i\|^{\gamma}_{\operatorname{op}}\right\}.
    \end{equation}
    Since $y\in Y_1$ satisfies $\|y\|_2\leq C$, 
    \begin{equation}
        \|L^i\|_{\mathrm{Lip}(\gamma)} 
        \leq \max\left\{\Gamma^i, M_1\|W^i\|_{\operatorname{op}}, M_2\|W^i\|^{\gamma}_{\operatorname{op}}\right\},
    \end{equation}
    where $\Gamma^i = \|W^i\|_{\operatorname{op}}\Gamma^{i-1} + \|b^i\|_2$ for $i\geq1$ and $\Gamma^0=C$.
\end{proof}

\begin{theorem}
\label{thm:lipnn}
    Let $1<\gamma \leq 2$ and $f_{\theta}$ be a FCNN with $m$ layers and activation function $\sigma$ satisfying Assumption \ref{ass:activation}. 
    Assume the input $x\in A\subset \mathbb{R}^{n_{\mathrm{in}}}$, where $\sup_{x\in A}\|x\|_2=C$. 
    Then $f_{\theta}\in\mathrm{Lip}(\gamma)$ and 
    \begin{equation}
    \label{eq:lipnn}
        ||f_{\theta}||_{\mathrm{Lip}(\gamma)} \leq (1+2^{\gamma})^{\frac{\gamma^{m-1}-1}{\gamma-1}} \prod_{i=1}^m\max\left(1, \|L^i\|_{\mathrm{Lip}(\gamma)}^{\gamma^{m-i}}\right)
    \end{equation}
    with 
    \begin{equation}
        \|L^i\|_{\mathrm{Lip}(\gamma)} 
        \leq \max\left\{\Gamma^i, M_1\|W^i\|_{\operatorname{op}}, M_2\|W^i\|^{\gamma}_{\operatorname{op}}\right\},
    \end{equation}
    where $\Gamma^i = \|W^i\|_{\operatorname{op}}\Gamma^{i-1} + \|b^i\|_2$ for $i\geq1$ and $\Gamma^0=C$.
\end{theorem}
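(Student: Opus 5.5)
The plan is to prove the bound by induction on the number of layers $m$, applying Lemma~\ref{lem:normcomplip2} once per composition and using Lemma~\ref{lem:lipgamma-layer} to control each individual layer. The statement is tracked in the form: for $F_j = L^j\circ\cdots\circ L^1$ (restricted to $A$, with $L^{i}$ restricted to the image of $F_{i-1}$),
\begin{equation}
\max\left(1,\|F_j\|_{\mathrm{Lip}(\gamma)}\right) \leq (1+2^{\gamma})^{\frac{\gamma^{j-1}-1}{\gamma-1}} \prod_{i=1}^j\max\left(1, \|L^i\|_{\mathrm{Lip}(\gamma)}^{\gamma^{j-i}}\right).
\end{equation}
Including the $\max(1,\cdot)$ on the left is needed so the induction feeds into the $\max\{1,\|f\|^\gamma\}$ factor of Lemma~\ref{lem:normcomplip2}. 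The right-hand side is at least $1$, since $1+2^\gamma>1$ and every factor is at least $1$. Hence it suffices to bound $\|F_j\|_{\mathrm{Lip}(\gamma)}$ itself.

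First I would justify that each layer is a genuine $\mathrm{Lip}(\gamma)$ function on the relevant domain. Lemma~\ref{lem:lipgamma-layer} shows the layer-$i$ input domain $Y_i$ is bounded, by induction using $|\sigma(x)|\le|x|$, namely $\|y\|_2\le\Gamma^{i-1}$ on $Y_i$. Together with Assumption~\ref{ass:activation}, this gives the bound $\|L^i\|_{\mathrm{Lip}(\gamma)}\le\max\{\Gamma^i,M_1\|W^i\|_{\operatorname{op}},M_2\|W^i\|^\gamma_{\operatorname{op}}\}$. Here $L^i$ is viewed with components $(L^i,\nabla L^i)$ on $Y_i$, and $F_{i-1}(A)\subset Y_i$. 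This also gives the base case $j=1$, where the exponent $(\gamma^0-1)/(\gamma-1)=0$.

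For the inductive step, write $F_{j+1}=L^{j+1}\circ F_j$. I would check that the $\mathrm{Lip}(\gamma)$ components of this composition are exactly $h^0=L^{j+1}(F_j)$ and $h^1=\nabla L^{j+1}(F_j)\nabla F_j$, as in \eqref{eq:hdef}; this is the classical chain rule, so $f_\theta$'s canonical derivative data is what is being bounded. Lemma~\ref{lem:normcomplip2} then gives
\begin{equation}
\|F_{j+1}\|\le(1+2^\gamma)\|L^{j+1}\|\max\{1,\|F_j\|^\gamma\}\le(1+2^\gamma)\max(1,\|L^{j+1}\|)\,\max(1,\|F_j\|)^\gamma .
\end{equation}
Raising the inductive hypothesis to the power $\gamma$ turns the constant exponent into $\gamma\frac{\gamma^{j-1}-1}{\gamma-1}$. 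Adding $1$ gives $\frac{\gamma^j-1}{\gamma-1}$. The layer exponents become $\gamma^{j+1-i}$, and $L^{j+1}$ enters with exponent $\gamma^0=1$. This is exactly the claimed form at $j+1$, and taking $j=m$ yields \eqref{eq:lipnn}.

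The main obstacle is not the arithmetic but the domain bookkeeping. Lemma~\ref{lem:normcomplip2} requires $g\in\mathrm{Lip}(\gamma,F,W)$ with $f^0(E)\subset F$. Because $\sigma$ is unbounded (SiLU, say), $L^{i}$ is not $\mathrm{Lip}(\gamma)$ on all of $\mathbb{R}^{n_h}$. I would therefore take $F$ to be the ball of radius $\Gamma^{i-1}$, which contains $F_{i-1}(A)$ by condition 2 and the triangle inequality. I would also check that the fourth item of Lemma~\ref{lem:lipgamma-layer}, which relies on Lemma~\ref{lem:Ck_taylor_bound} and needs an open convex domain, remains valid on this ball. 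To handle that, I would apply Lemma~\ref{lem:Ck_taylor_bound} on the open ball and pass to the closure by continuity, or simply note that the Taylor remainder estimate holds on all of $\mathbb{R}^\beta$ and restricts to any subset.
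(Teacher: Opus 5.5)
Your proposal is correct and follows essentially the same route as the paper: induction on the number of layers, one application of Lemma~\ref{lem:normcomplip2} per composition with the exponent update $\gamma\frac{\gamma^{j-1}-1}{\gamma-1}+1=\frac{\gamma^{j}-1}{\gamma-1}$, and Lemma~\ref{lem:lipgamma-layer} for the per-layer bounds. Tracking $\max(1,\|F_j\|_{\mathrm{Lip}(\gamma)})$ in the hypothesis is equivalent to the paper's use of $\max(1,ab)\leq\max(1,a)\max(1,b)$. Taking the ball of radius $\Gamma^{i-1}$ as the intermediate domain $F$ makes explicit a domain point that the paper leaves implicit.
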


\begin{proof}
    Lemma \ref{lem:normcomplip2} states that for $f,g\in\mathrm{Lip}(\gamma)$ with $1<\gamma\leq 2$,
    \begin{equation}
        \label{eq:normcomplip2_nn}
        \|g \circ f\|_{\mathrm{Lip}(\gamma)} \leq (1+2^{\gamma})\|g\|_{\mathrm{Lip}(\gamma)}\max\{1, \|f\|^{\gamma}_{\mathrm{Lip}(\gamma)}\}.
    \end{equation}
    Assume that
    \begin{equation}
    \label{eq:lipgamma_nn_assumption}
        \|L^n \circ \cdots \circ L^1 \|_{\mathrm{Lip}(\gamma)} \leq (1+2^{\gamma})^{\frac{\gamma^{n-1}-1}{\gamma-1}} \prod_{i=1}^n\max\left(1, \|L^i\|_{\mathrm{Lip}(\gamma)}^{\gamma^{n-i}}\right),
    \end{equation}
    which is true for $n=1$. Then by \eqref{eq:normcomplip2_nn},
    \begin{equation}
        \|L^{n+1} \circ \cdots \circ L^1 \|_{\mathrm{Lip}(\gamma)} \leq (1+2^{\gamma})\|L^{n+1}\|_{\mathrm{Lip}(\gamma)}\max\left(1, \left((1+2^{\gamma})^{\frac{\gamma^{n-1}-1}{\gamma-1}} \prod_{i=1}^n\max\left(1, \|L^i\|_{\mathrm{Lip}(\gamma)}^{\gamma^{n-i}}\right)\right)^{\gamma}\right).
    \end{equation}
    Note that for any $a,b>0$,
    \begin{equation}
        \max(1,ab) \leq \max(1,a)\max(1,b).
    \end{equation}
    Repeatedly applying this bound gives,
    \begin{equation}
        \begin{aligned}
        \|L^{n+1} \circ \cdots \circ L^1 \|_{\mathrm{Lip}(\gamma)} &\leq (1+2^{\gamma})^{\gamma\frac{\gamma^{n-1}-1}{\gamma-1} + 1}\|L^{n+1}\|_{\mathrm{Lip}(\gamma)}\prod_{i=1}^n\max\left(1, \|L^i\|_{\mathrm{Lip}(\gamma)}^{\gamma^{n+1-i}}\right), \\
        &\leq (1+2^{\gamma})^{\frac{\gamma^{n}-1}{\gamma-1}}\prod_{i=1}^{n+1}\max\left(1, \|L^i\|_{\mathrm{Lip}(\gamma)}^{\gamma^{n+1-i}}\right).
        \end{aligned}
    \end{equation}
    Therefore, \eqref{eq:lipgamma_nn_assumption} holds for all $1\leq n\leq m$, and
    \begin{equation}
        ||f_{\theta}||_{\mathrm{Lip}(\gamma)} = \|L^{m} \circ \cdots \circ L^1 \|_{\mathrm{Lip}(\gamma)} \leq (1+2^{\gamma})^{\frac{\gamma^{m-1}-1}{\gamma-1}} \prod_{i=1}^m\max\left(1, \|L^i\|_{\mathrm{Lip}(\gamma)}^{\gamma^{m-i}}\right).
    \end{equation}
    Lemma \ref{lem:lipgamma-layer} completes the proof by giving the stated bound on each layer's $\mathrm{Lip}(\gamma)$ norm.
\end{proof}

Although the bound in Theorem~\ref{thm:lipnn} is a worst case estimate, it reflects a genuine feature of composition in $\mathrm{Lip}(\gamma)$ spaces.
In particular, the explicit example in Section~\ref{sec:opt} shows that for $\gamma\in(1,2]$, the composition of two functions with $\mathrm{Lip}(\gamma)$ norm equal to $1$ can itself have $\mathrm{Lip}(\gamma)$ norm greater than $1$.
Thus, even when the individual layers are uniformly controlled, composition can rapidly grow the $\mathrm{Lip}(\gamma)$ norm.
Consistent with this, if each layer satisfies $\|L^i\|_{\mathrm{Lip}(\gamma)} \leq 1$, then \eqref{eq:lipnn} still gives
\begin{equation}
    \|f_{\theta}\|_{\mathrm{Lip}(\gamma)} \leq (1+2^{\gamma})^{\frac{\gamma^{m-1}-1}{\gamma-1}},
\end{equation}
which grows rapidly with the depth $m$.
\\ \\
To demonstrate this behaviour in practice, we train three neural networks to approximate $\operatorname{sign}(x)$ for $x\in[-1,1]$.
Each neural network has a hidden dimension of $8$,  SiLU activation functions, and a depth of $2$, $3$, or $4$, respectively.
As can be seen in Figure \ref{fig:lipnn}, the supremum over the second derivative grows rapidly as the depth increases.
This leads to the Lipschitz bound on the gradient dominating $\|f_{\theta}\|_{\mathrm{Lip}(2)}$ for depths $3$ and $4$, which have $\|f_{\theta}\|_{\mathrm{Lip}(2)}\approx 297$ and $\|f_{\theta}\|_{\mathrm{Lip}(2)}\approx 16910$, respectively.
\begin{figure}
    \centering
    \includegraphics[width=\linewidth]{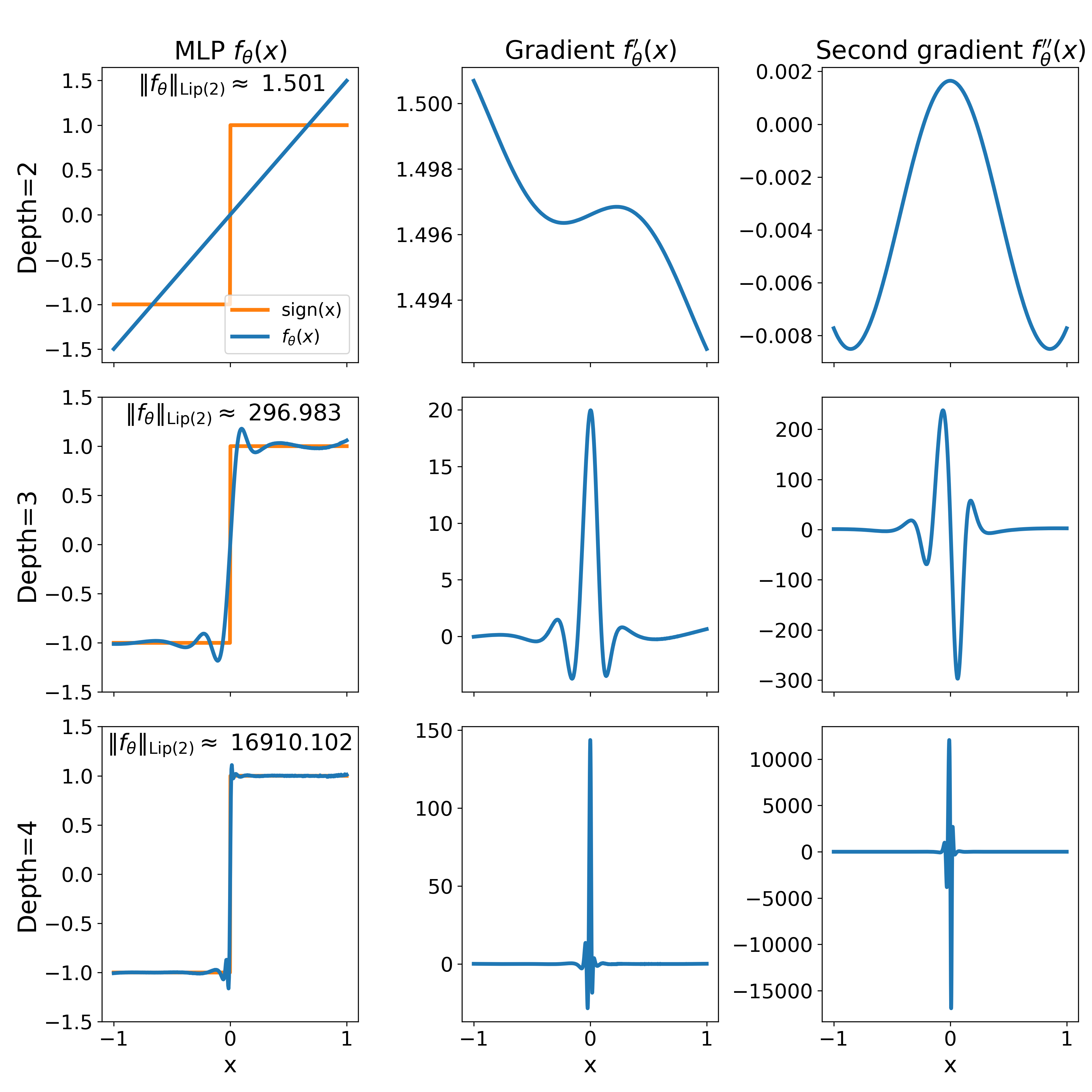}
    \caption{Comparing the $\mathrm{Lip}(2)$ norm of three fully connected neural networks trained to approximate $\operatorname{sign}(x)$. Each neural network has a hidden dimension of $8$,  SiLU activation functions, and a depth of $2$, $3$, or $4$, respectively.}
    \label{fig:lipnn}
\end{figure}
\\ \\
Although these empirical values are many orders of magnitude smaller than the worst case bound, they still illustrate the same qualitative phenomenon: $\|f_{\theta}\|_{\mathrm{Lip}(2)}$ can grow rapidly with depth in practice.
A simple way to encourage smaller parameter magnitudes during training is to introduce weight regularisation by modifying the loss function $L$ to
\begin{equation}
\label{eq:weightpenalty}
    L \mapsto L + \lambda\left(\sum_{i=1}^m\|W^i\|_2 + \|b^i\|_2\right),
\end{equation}
where $\lambda$ is a hyperparameter controlling the weight of the penalty \citep{weightreg, weightdecay}.
This introduces only a single additional hyperparameter and directly penalises the weights and biases appearing in the layer-wise bound of Theorem~\ref{thm:lipnn}.
Thus, it provides a simple proxy for encouraging smaller $\mathrm{Lip}(2)$ norms, even though it is not a sharp estimate of $\|f_{\theta}\|_{\mathrm{Lip}(2)}$.
\\ \\
In the experiments contained in Section~\ref{sec:log_ncde_experiments}, Log-NCDEs use a FCNN with $\mathrm{SiLU}$ activation functions as their vector field $f_{\theta}$.
The coefficient $\lambda$ is treated as one component of the hyperparameter grid search, with $\lambda=0$ included as a candidate value.
Empirically, this regularisation is not uniformly beneficial: some of the best runs select $\lambda=0$, while others select $\lambda>0$.
However, even with $\lambda=0$, we did not observe any training divergence or solver instability.

\subsection{Constructing the Log-ODE Vector Field}
\label{sec:liebracketNN}

As in Section \ref{sec:logode}, the linear map $\bar{f}_{\theta}$ in \eqref{eq:Log-NCDE} is defined recursively by
\begin{equation}
    \label{eq:logoderecurs1_2}
    \bar{f}_{\theta}(\cdot)a = f_{\theta}(\cdot)a,
\end{equation}
for $a\in \mathbb{R}^{d_X}$ and
\begin{equation}
    \label{eq:logoderecurs2_2}
    \bar{f}_{\theta}(\cdot)[a,b] = [\bar{f}_{\theta}(\cdot)a,\bar{f}_{\theta}(\cdot)b],
\end{equation}
where $f_{\theta}$ is the NCDE's vector field.
Assuming $f_{\theta}(\cdot)a$ is infinitely differentiable, then $f_{\theta}(\cdot)a$ is an element of the Lie algebra $C^{\infty}(\mathbb{R}^{d_h}, \mathbb{R}^{d_h})$ and from Definition \ref{def:smooth_liebracket},
\begin{equation}
    [f_{\theta}(\cdot)a,f_{\theta}(\cdot)b] = J_{f_{\theta}(\cdot)b}f_{\theta}(\cdot)a - J_{f_{\theta}(\cdot)a}f_{\theta}(\cdot)b.
\end{equation}
Calculating  \eqref{eq:logoderecurs2_2} requires a basis for $\mathfrak{L}^N(\mathbb{R}^{d_X})$, the space where the depth-$N$ truncated log-signature of the input path lives.
Let $\{e_j\}_{j=1}^{d_X}$ be the usual basis of $\mathbb{R}^{d_X}$. 
A choice of basis for $\mathfrak{L}^N(\mathbb{R}^{d_X})$ is a Hall basis, denoted $\{\hat{e}_k\}_{k=1}^{\beta(v, N)}$, which is a specific subset of up to the $(N-1)^{\text{th}}$ iterated Lie brackets of $\{e_j\}_{j=1}^{d_X}$ \citep{Hall1950ABF}. 
Rewriting \eqref{eq:Log-NCDE} using a Hall basis,
\begin{equation}
\label{eq:hall}
    \bar{f}_{\theta}\left(h_s\right)\frac{\log(S^{N}(X)_{[r_i,r_{i+1}]})}{r_{i+1}-r_i} =  \sum_{k=1}^{\beta(v, N)}\lambda_k \bar{f}_{\theta}(h_s)\hat{e}_k,
\end{equation}
where $\lambda_k$ is the term in the scaled log-signature corresponding to the basis element $\hat{e}_k$. Since each $\hat{e}_k$ can be written as iterated Lie brackets of $\{e_j\}_{j=1}^{d_X}$, it is possible to replace $\bar{f}_{\theta}(\cdot)\hat{e}_k$ with the iterated Lie brackets of $f_{\theta}(\cdot)e_i$ using \eqref{eq:logoderecurs1_2} and \eqref{eq:logoderecurs2_2}. Each $f_{\theta}(\cdot)e_i:\mathbb{R}^{d_h} \rightarrow \mathbb{R}^{d_h}$ is a vector field defined by the $i^{\text{th}}$ column of the neural network's output. Hence, $g_{\theta, X}$ can be evaluated at a point using iterated Jacobian-vector products (JVPs) of $f_{\theta}$.

\subsection{Computational Cost}
\label{sec:cost}
When the signature truncation depth $N$ is greater than $1$, NRDEs and Log-NCDEs incur an additional computational cost for each evaluation of the vector field, which we now quantify. 
Assume that an NCDE, NRDE, and Log-NCDE are all using an identical FCNN as their vector field, except for the dimension of the final layer in the NRDE. Let $m$ and $n_h$ be the depth and dimension of the FCNN's hidden layers, respectively, and $d_h$ and $d_X$ be the dimensions of $h_t$ and $X_t$ from \eqref{eq:ncde}. 
Let $\beta(d_X,N)=\dim\mathfrak{L}^N(\mathbb{R}^{d_X})=\mathcal{O}(d_X^N)$. 
Letting $F_{\text{x}}$ be the number of FLOPs required to evaluate model x's vector field,
\begin{equation}
    \begin{aligned}
        F_{\text{NCDE}}&= 2d_hn_h + 2(m-2)n_h^2 + 2d_hd_Xn_h,\\
        F_{\text{NRDE}}&= 2d_hn_h + 2(m-2)n_h^2 + 2d_h\beta(d_X,N)n_h.
    \end{aligned}
\end{equation}
The NRDE expression follows because the final layer outputs an element of $\mathbb{R}^{d_h\times \beta(d_X,N)}$ rather than $\mathbb{R}^{d_h\times d_X}$. 
\\ \\
For Log-NCDEs, an exact closed-form FLOP count depends on the implementation of the iterated Lie brackets. 
For fixed $N$, the computational cost of evaluating the Log-NCDE vector field is $\mathcal{O}(\beta(d_X,N))$ with respect to $d_X$, since the vector field is constructed from iterated Lie brackets corresponding to a Hall basis of $\mathfrak{L}^N(\mathbb{R}^{d_X})$. 
The constant hidden in this asymptotic notation grows rapidly with $N$, since higher-order Lie brackets require iterated JVPs of vector fields that are themselves defined recursively through lower-order JVPs. 
For our implementation of the $N=2$ case, the exact expression
\begin{equation}
    F_{\text{Log-NCDE}} = 3d_XF_{\text{NCDE}}
\end{equation}
is obtained, as the number of FLOPs required to calculate a JVP is $3$ times that of evaluating $f_{\theta}$ and $d_X$ JVPs of $f_{\theta}$ are needed to evaluate \eqref{eq:hall} \citep[Chapter~4]{Griewank2000}. 
\\ \\
Therefore, Log-NCDEs and NRDEs have the same asymptotic computational complexity with respect to $d_X$ for fixed $N$. 
Furthermore, each JVP is evaluated at the same point $h_s$. 
This allows the Log-NCDE vector field on high-dimensional time series to be evaluated in parallel. 
This computational advantage is demonstrated empirically in Section \ref{sec:uea}.

\subsection{Limitations}
\label{sec:lim}

In this thesis, we restrict attention to Log-NCDEs based on depth-$1$ and depth-$2$ Log-ODE approximations. 
This reflects two main limitations. 
First, there are currently no theoretical results explicitly bounding the $\mathrm{Lip}(\gamma)$ norm of a neural network for $\gamma>2$. 
Second, as discussed in Section~\ref{sec:cost}, the cost of evaluating $g_{\theta,X}$ grows rapidly with the truncation depth $N$. 
In particular, at $N=2$ this cost scales quadratically in the input dimension $d_X$, which in practice restricts depth-$2$ Log-NCDEs to moderate-dimensional inputs. 
The widest time series considered in our experiments has dimension $d_X=64$. 
Input dimensions in the low hundreds are likely still feasible, whereas dimensions on the order of $10^3$ are likely to be computationally prohibitive.
\\ \\
A more general limitation of NCDEs is that their hidden dynamics must be solved sequentially in time, which prevents parallelisation across time steps. 
This is in contrast to structured state-space models, whose linear dynamics admit explicit flows that can be composed in parallel across time \citep{gu2021efficiently}. 
This limitation serves as the primary motivation for the work of Chapter~\ref{chap:lin_ncde}.

\subsection{Experiments}

\label{sec:log_ncde_experiments}

\subsubsection{Baseline Methods}

Log-NCDEs are compared against six models, which represent the state-of-the-art for a range of deep learning approaches to time series modelling. 
Four of these models are stacked recurrent models, whose general architecture is based on the official implementation of S5 located at \url{https://github.com/lindermanlab/S5} \citep{S5}. 
A recurrent block consists of a batch or layer normalisation  \citep{ioffe2015batch, ba2016layer}, a recurrent layer, a gated linear unit (GLU) \citep{dauphin2017language}, dropout with rate $0.1$ \citep{srivastava2014dropout}, and a skip connection. 
A full model consists of a linear encoder, a number of stacked recurrent blocks, and a final linear layer. 
The four different recurrent layers considered are the LRU~\citep{orvieto2023resurrecting}, S5~\citep{S5}, S6, and Mamba, where S6 refers to the selective state-space recurrence introduced by \citet{gu2024mamba} and Mamba refers to the combination of a gated MLP, convolution, and S6 recurrence \citep{gu2024mamba}. 
S5 and LRU use batch normalisation \citep{ioffe2015batch}, whereas S6 and Mamba use layer normalisation \citep{ba2016layer}.
\\ \\
The other two baseline models are continuous models; an NCDE using a Hermite cubic spline with backward differences as the interpolation and an NRDE \citep{kidger2020neuralcde, morrill2021neuralrough}. 
NCDEs, NRDEs, and Log-NCDEs use a single linear layer as $\xi_{\phi}$. 
NCDEs and NRDEs use FCNNs as their vector fields configured in the same way as their original papers \citep{kidger2020neuralcde, morrill2021neuralrough}. 
NCDEs use $\text{ReLU}$ activation functions for the hidden layers and a final activation function of $\tanh$. 
NRDEs use the same, but move the $\tanh$ activation function to be before the final linear layer in the FCNN. 
Log-NCDEs use a FCNN with SiLU activation functions for the hidden layers and a final activation function of $\tanh$. NRDEs and Log-NCDEs take their intervals $r_{i+1} - r_i$ to be a fixed number of observations, referred to as the Log-ODE step. 

\subsubsection{Toy Dataset}

We construct a toy dataset of $100{,}000$ time series with $6$ channels and $100$ regularly spaced samples each. 
For every time step, the change in each channel is sampled independently from the discrete probability distribution with density
\begin{equation}
    p(n) = \int_{n-0.5}^{n+0.5}\frac{1}{\sqrt{2\pi}}e^{-\frac{1}{2}x^2}\text{d}x,
\end{equation}
where $n\in\mathbb{Z}$. In other words, the change in a channel at each time step is a sample from a standard normal distribution rounded to the nearest integer. 
Figure \ref{fig:toy_example} is a plot of a sample path from the toy dataset. 
\\
\begin{figure}
\centering
\includegraphics[width=1.0\linewidth]{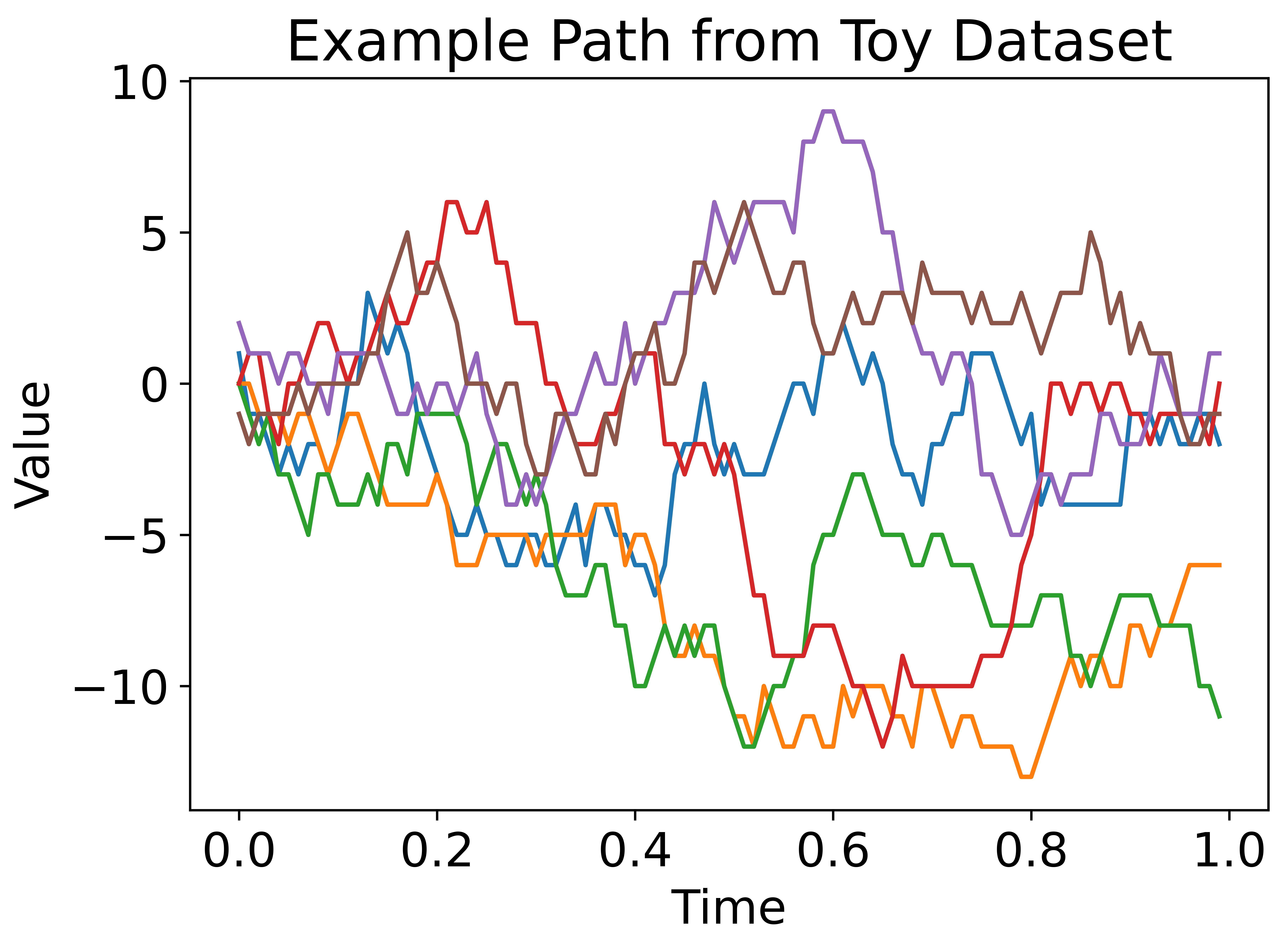}
\caption{An example path from the toy dataset, where each colour represents a channel in the path.}
\label{fig:toy_example}
\end{figure}

We consider four different binary classifications on the toy dataset. 
Each classification is a specific term in the signature of the path which depends on a different number of channels.
\begin{enumerate}
    \item Was the change in the third channel, $\int_0^1\text{d}X^3_s$, greater than zero? 
    \item Was the area integral of the third and sixth channels, $\int_0^1\int_0^u\text{d}X^3_s\text{d}X^6_u$, greater than zero?
    \item Was the volume integral of the third, sixth, and first channels, $\int_0^1\int_0^v\int_0^u\text{d}X^3_s\text{d}X^6_u\text{d}X^1_v$, greater than zero?
    \item Was the $4$D volume integral of the third, sixth, first, and fourth channels, $\int_0^1\int_0^w\int_0^v\int_0^u\text{d}X^3_s\text{d}X^6_u\text{d}X^1_v\text{d}X^4_w$, greater than zero?
\end{enumerate} 
Each task is asking the model to check the sign of a specific term in the signature of the input path.
\\ \\
On the toy dataset, all models use a hidden dimension of $64$ and Adam with a learning rate of $0.0003$ \citep{kingma2017adam}. 
LRU, S5, S6, and Mamba use $6$ blocks and S5, S6, and Mamba use a state dimension of $64$. 
S5 uses $2$ initialisation blocks and Mamba uses a convolution dimension of $4$ and an expansion factor of $2$. 
NCDEs, NRDEs, and Log-NCDEs use a FCNN with width $128$ and depth $3$ as their vector field. 
Furthermore, all NCDEs use Heun as their differential equation solver with a fixed stepsize of $0.01$ \citep{Heun1900, AtkinsonHanStewart2009}. 
NRDEs and Log-NCDEs use a Log-ODE step of $4$ and a signature truncation depth of $2$. 
Log-NCDEs do not use any $\mathrm{Lip}(\gamma)$ regularisation, i.e. $\lambda=0$ in \eqref{eq:weightpenalty}.

\subsubsection{UEA Multivariate Time Series Classification Archive}

The models are evaluated on six datasets from the UEA multivariate time series classification archive (UEA-MTSCA)\footnote{As of November 10$^{\text{th}}$ 2025, the EigenWorms dataset available for download at \url{https://timeseriesclassification.com} has $23$ duplicated time series, which were removed for the experiments in this thesis.} \citep{bagnall2018ueamultivariatetimeseries}. 
These six datasets were chosen via the following two criteria. 
First, only datasets with more than $200$ total time series were considered. 
Second, the six datasets with the most observations were chosen, as datasets with many observations have previously proved challenging for deep learning approaches to time series modelling. 
Table \ref{tab:UEA_summary} provides details on the dimension, number of observations, and number of classes for the datasets chosen from the UEA-MTSCA for the experiments conducted in this thesis. 
Following \citep{morrill2021neuralrough}, the original train and test cases are combined and resplit into new random train, validation, and test cases using a $70:15:15$ split. 
\\
\begin{table}
\centering
\caption{A summary of the subset of the UEA-MTSCA datasets used in this thesis.}
\vspace{0.2cm}
\begin{tabular}{l|c c c c}
Dataset & Dimension & Number of Observations & Classes \\ \hline
EigenWorms & $6$ & $17984$ & $5$ \\
EthanolConcentration & $3$ & $1751$ & $4$ \\
Heartbeat & $61$ & $405$ & $2$ \\
MotorImagery & $64$ & $3000$ & $2$ \\
SelfRegulationSCP1 & $6$ & $896$ & $2$ \\
SelfRegulationSCP2 & $7$ & $1152$ & $2$ 
\end{tabular}
\label{tab:UEA_summary}
\end{table}  

Hyperparameters for all models are found using a grid search over the validation accuracy on a fixed random split of the data. 
Having fixed their hyperparameters, models are compared on their average test set accuracy over five different random splits of the data. 
In order to compare models on their average GPU memory and runtime, $1000$ steps of training are run on an NVIDIA RTX 4090. 
Each training step consists of a forward pass, loss computation, backward pass, and parameter update.
The average runtime is estimated by combining the time for $1000$ training steps with the average total number of training steps from the five runs over the random data splits. 

\subsubsection{PPG-DaLiA}

PPG-DaLiA is a multivariate time series regression dataset, where the aim is to predict a person's heart rate using data collected from a wrist-worn device \citep{Reiss2019DeepPPG}. 
The dataset consists of fifteen individuals with around $150$ minutes of recording each at a maximum sampling rate of $128\,\mathrm{Hz}$. 
There are six channels: blood volume pulse, electrodermal activity, body temperature, and three-axis acceleration. 
For each individual, the data is split into training, validation, and test sets using a $70{:}15{:}15$ split. 
After splitting, a sliding window is applied to each subset to convert the long continuous recordings into shorter overlapping time series. 
Specifically, each window contains $49920$ consecutive samples, and successive windows are offset by $4992$ samples. 
\\ \\
Hyperparameters are found using the same method as for the UEA-MTSCA, but with validation mean squared error and slightly different hyperparameter choices given the high number of observations. 
Having fixed their hyperparameters, models are compared on their average mean squared error over five different runs on the same fixed data split.

\subsubsection{Hyperparameter Optimisation}
\label{sec:hypopt}

The seven models considered in the experiments fall into two groups. 
The first group consists of the stacked recurrent models LRU, S5, S6, and Mamba. 
The second group consists of the continuous-time models NCDE, NRDE, and Log-NCDE. 
Tables \ref{tab:UEA_hypopt_recurrent} and \ref{tab:UEA_hypopt_ncde} list the hyperparameters optimised over for these two groups on the UEA-MTSCA and PPG-DaLiA experiments.
\\ \\
For each dataset and model, hyperparameters were selected by grid search on a fixed training-validation split. 
For the UEA-MTSCA datasets, each configuration was trained using cross-entropy loss and compared using validation accuracy. 
For PPG-DaLiA, each configuration was trained using mean squared error loss and compared using validation mean squared error. 
Training used early stopping based on the corresponding validation metric, and the checkpoint with the best validation performance was used for the final test evaluation. 
After hyperparameter selection, the chosen configuration was used in the evaluation protocols described above, namely five random data splits for the UEA-MTSCA datasets and five random seeds on a fixed split for PPG-DaLiA.
\\ \\
All models and experiments used Adam as their optimiser \citep{kingma2017adam}. 
A batch size of $32$ was used throughout, except for the stacked recurrent models on PPG-DaLiA, where a batch size of $4$ was required due to memory constraints. 
NCDEs, NRDEs, and Log-NCDEs used Heun as their differential equation solver with fixed stepsize
\begin{equation}
    \frac{1}{\max\{500, 1 + (\text{time series length} / \text{Log-ODE step})\}},
\end{equation}
with Log-ODE step equal to $1$ for NCDEs. 
Additionally, Log-NCDEs scaled down their initial FCNN parameters by a factor of $1000$ to reduce the initial $\mathrm{Lip}(2)$ norm of the vector field.

\begin{table}
\caption{Hyperparameters selected by the optimisation for LRU, S5, S6, and Mamba on the UEA-MTSCA datasets and PPG-DaLiA dataset. 
The following abbreviations are used: EigenWorms (EW), EthanolConcentration (EC), Heartbeat (HB), MotorImagery (MI), SelfRegulationSCP1 (SCP1), SelfRegulationSCP2 (SCP2), and PPG-DaLiA (PPG). 
A \ding{55} denotes that the hyperparameter is not applicable to that model.}
\label{tab:UEA_hypopt_recurrent}
\vspace{0.2cm}
\centering
\resizebox{\textwidth}{!}{
\begin{tabular}{l|l|c|c|c|c}
\multirow{2}{*}{Hyperparameters} & \multirow{2}{*}{Options} & \multicolumn{4}{c}{Method} \\ \cline{3-6}
 &  & LRU & S5 & S6 & Mamba \\ \Xhline{2\arrayrulewidth}
Learning Rate & $10^{-3}$ & \makecell{EW, MI, SCP1, \\ SCP2, PPG} & \makecell{HB, MI, SCP1, \\ PPG} & \makecell{EW, HB, MI, \\ SCP2} & EW, EC, PPG \\ \cline{2-6}
 & $10^{-4}$ & HB & EW, SCP2 & SCP1, PPG & HB, SCP2 \\ \cline{2-6}
 & $10^{-5}$ & EC & EC & EC & MI, SCP1 \\ \cline{2-6} \Xhline{2\arrayrulewidth}
Include Time & True & EC, HB, SCP2 & \makecell{EW, EC, SCP2, \\ PPG} & \makecell{EC, HB, MI, \\ PPG} & EW, EC, SCP2 \\ \cline{2-6}
 & False & \makecell{EW, MI, SCP1, \\ PPG} & HB, MI, SCP1 & EW, SCP1, SCP2 & \makecell{HB, MI, SCP1, \\ PPG} \\ \cline{2-6} \Xhline{2\arrayrulewidth}
Hidden Dimension & 16 & MI & MI, SCP2, PPG & \makecell{EW, EC, HB, \\ MI, SCP2} & EW \\ \cline{2-6}
 & 64 & \makecell{EW, EC, SCP1, \\ SCP2} & EW & SCP1, PPG & EC, HB, SCP2 \\ \cline{2-6}
 & 128 & HB, PPG & EC, HB, SCP1 & & MI, SCP1, PPG \\ \cline{2-6} \Xhline{2\arrayrulewidth}
Number of Layers & 2 & HB, SCP1, SCP2 & EW, EC, SCP2 & SCP1, SCP2, PPG & MI, SCP1 \\ \cline{2-6}
 & 4 & EW & HB & EW, EC, HB, MI & EC, HB, PPG \\ \cline{2-6}
 & 6 & EC, MI, PPG & MI, SCP1, PPG & & EW, SCP2 \\ \cline{2-6} \Xhline{2\arrayrulewidth}
State Dimension & 16 & EC, SCP1, SCP2 & \makecell{EW, EC, HB, \\ SCP1} & EC, HB, SCP1 & SCP1 \\ \cline{2-6}
 & 64 & EW & MI, SCP2, PPG & EW, PPG & \makecell{EW, MI, SCP2, \\ PPG} \\ \cline{2-6}
 & 256 & HB, MI, PPG & & MI, SCP2 & EC, HB \\ \cline{2-6} \Xhline{2\arrayrulewidth}
S5 Initialisation Blocks & 2 & \ding{55} & SCP2, PPG & \ding{55} & \ding{55} \\ \cline{2-6}
 & 4 & \ding{55} & HB, MI & \ding{55} & \ding{55} \\ \cline{2-6}
 & 8 & \ding{55} & EW, EC, SCP1 & \ding{55} & \ding{55} \\ \cline{2-6} \Xhline{2\arrayrulewidth}
Convolution Dimension & 2 & \ding{55} & \ding{55} & \ding{55} & EW, HB, SCP2 \\ \cline{2-6}
 & 3 & \ding{55} & \ding{55} & \ding{55} & MI, PPG \\ \cline{2-6}
 & 4 & \ding{55} & \ding{55} & \ding{55} & EC, SCP1 \\ \cline{2-6} \Xhline{2\arrayrulewidth}
Expansion Factor & 1 & \ding{55} & \ding{55} & \ding{55} & EW, MI, SCP1 \\ \cline{2-6}
 & 2 & \ding{55} & \ding{55} & \ding{55} & SCP2, PPG \\ \cline{2-6}
 & 4 & \ding{55} & \ding{55} & \ding{55} & EC, HB \\ \cline{2-6} \Xhline{2\arrayrulewidth}
\end{tabular}
}
\end{table}

\begin{table}
\caption{Hyperparameters selected by the optimisation for NCDE, NRDE, and Log-NCDE on the UEA-MTSCA datasets and PPG-DaLiA dataset. 
Given the length of each time series in the PPG-DaLiA dataset, different choices were considered for the Log-ODE depth and step, which are shown here in red. 
The following abbreviations are used: EigenWorms (EW), EthanolConcentration (EC), Heartbeat (HB), MotorImagery (MI), SelfRegulationSCP1 (SCP1), SelfRegulationSCP2 (SCP2), and PPG-DaLiA (PPG). A \ding{55} denotes that the hyperparameter is not applicable to that model.}
\label{tab:UEA_hypopt_ncde}
\vspace{0.2cm}
\centering
\resizebox{\textwidth}{!}{
\begin{tabular}{l|l|c|c|c}
\multirow{2}{*}{Hyperparameters} & \multirow{2}{*}{Options} & \multicolumn{3}{c}{Method} \\ \cline{3-5}
 &  & NCDE & NRDE & Log-NCDE \\ \Xhline{2\arrayrulewidth}
Learning Rate & $10^{-3}$ & \makecell{EW, EC, HB, \\ MI, SCP2, PPG} & \makecell{EW, EC, HB, \\ SCP1, PPG} & \makecell{EW, HB, MI, \\ PPG} \\ \cline{2-5}
 & $10^{-4}$ & SCP1 & MI, SCP2 & EC, SCP1, SCP2 \\ \cline{2-5}
 & $10^{-5}$ & & & \\ \cline{2-5} \Xhline{2\arrayrulewidth}
Include Time & True & \makecell{EW, EC, HB, \\ MI, PPG} & \makecell{EC, SCP1, SCP2, \\ PPG} & \makecell{EW, EC, HB, \\ PPG} \\ \cline{2-5}
 & False & SCP1, SCP2 & EW, HB, MI & MI, SCP1, SCP2 \\ \cline{2-5} \Xhline{2\arrayrulewidth}
Hidden Dimension & 16 & MI, PPG & & HB, MI \\ \cline{2-5}
 & 64 & & \makecell{EW, EC, HB, \\ SCP1} & EC, SCP1 \\ \cline{2-5}
 & 128 & \makecell{EW, EC, HB, \\ SCP1, SCP2} & MI, SCP2, PPG & EW, SCP2, PPG \\ \cline{2-5} \Xhline{2\arrayrulewidth}
Vector Field (Depth, Width) & (2, 32) & & & EW, SCP2 \\ \cline{2-5}
 & (3, 64) & & EW & EC, MI, PPG \\ \cline{2-5}
 & (3, 128) & EW & HB & HB, SCP1 \\ \cline{2-5}
 & (4, 128) & \makecell{EC, HB, MI, \\ SCP1, SCP2, PPG} & \makecell{EC, MI, SCP1, \\ SCP2, PPG} & \\ \cline{2-5} \Xhline{2\arrayrulewidth}
Log-ODE (Depth, Step) & (1, 1) & \ding{55} & EC, MI, SCP1, SCP2 & EC \\ \cline{2-5}
 & (2, 2) & \ding{55} & HB & HB \\ \cline{2-5}
 & (2, 4) & \ding{55} & EW & SCP2 \\ \cline{2-5}
 & (2, 8) & \ding{55} & & \\ \cline{2-5}
 & (2, 12) & \ding{55} & & EW \\ \cline{2-5}
 & (2, 16) & \ding{55} & & MI, SCP1 \\ \cline{2-5}
 & \textcolor{red}{(1, 10)} & \ding{55} & PPG & PPG \\ \cline{2-5}
 & \textcolor{red}{(2, 10)} & \ding{55} & & \\ \cline{2-5}
 & \textcolor{red}{(2, 100)} & \ding{55} & & \\ \cline{2-5}
 & \textcolor{red}{(2, 1000)} & \ding{55} & & \\ \cline{2-5} \Xhline{2\arrayrulewidth}
Regularisation $\lambda$ & $10^{-3}$ & \ding{55} & \ding{55} & EW, MI, SCP2 \\ \cline{2-5}
 & $10^{-6}$ & \ding{55} & \ding{55} & EC, HB \\ \cline{2-5}
 & $0.0$ & \ding{55} & \ding{55} & SCP1, PPG \\ \cline{2-5} \Xhline{2\arrayrulewidth}
\end{tabular}
}
\end{table}

\newpage 
\subsection{Results}
\label{sec:results}

\subsubsection{Toy Dataset}

Figure \ref{fig:toy_results} compares the performance of the models on the four different toy dataset classifications. 
As expected, given that the classifications considered are solutions to CDEs, NCDEs are the best performing model. 
Since NRDEs and Log-NCDEs are fixed to $r_{i+1}-r_i$ being $4$ observations and $N=2$, they are both approximations of a CDE. 
Notably, Log-NCDEs consistently outperform NRDEs, providing empirical evidence that NRDEs do not always accurately learn the Lie bracket structure of $\bar{f}_{\theta}$. 
All of the stacked recurrent models perform well when the label depends on one or two channels. 
However, their performance begins to decrease for three channels, and only Mamba performs well when the label depends on four channels.

\begin{figure}
\centering
    \includegraphics[width=\linewidth]{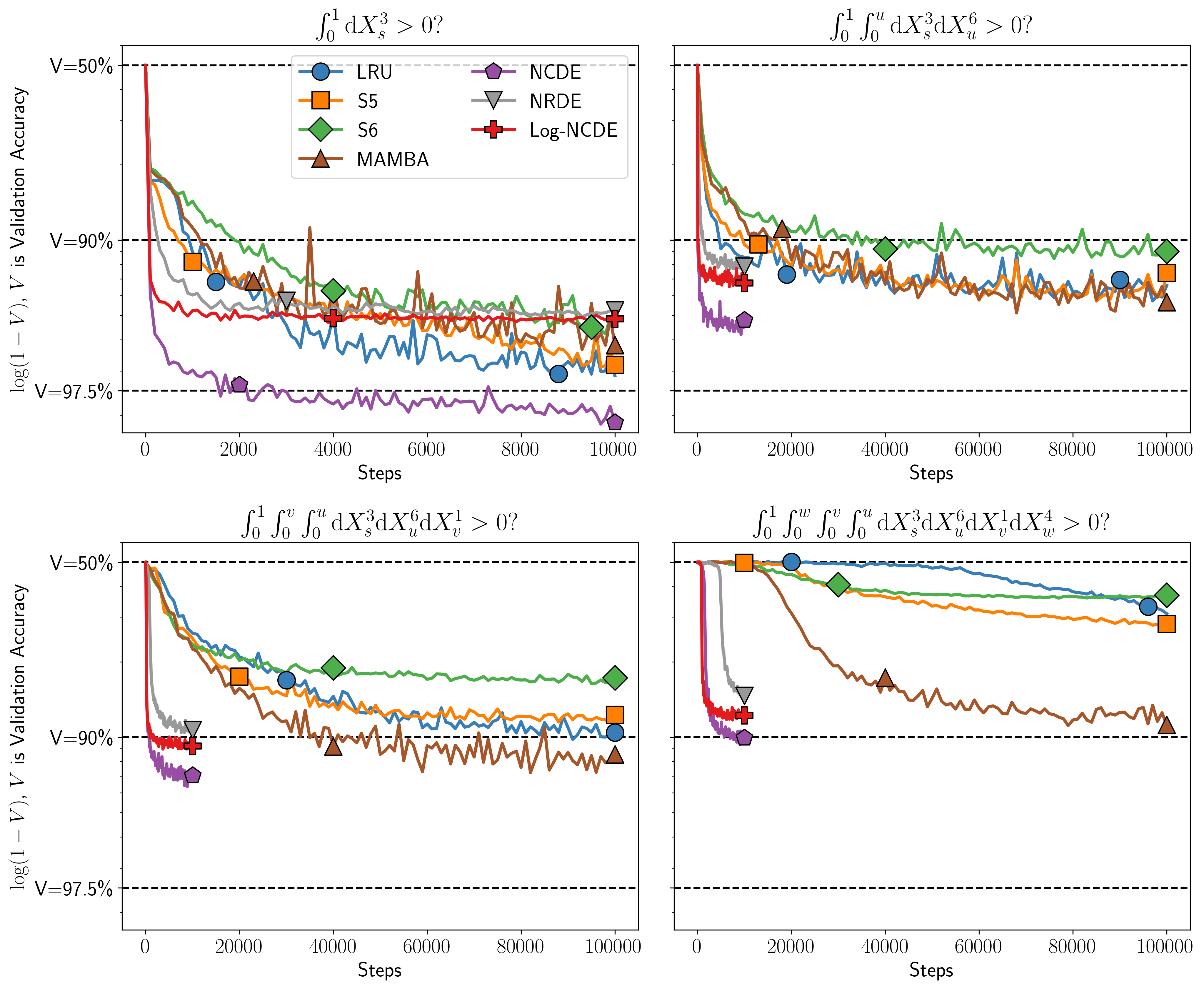}
    \caption{Validation accuracy against number of steps for LRU, S5, S6, Mamba, NCDE, NRDE, and Log-NCDE on the four different classifications considered for the toy dataset.}
    \label{fig:toy_results}
\end{figure}

\subsubsection{UEA-MTSCA}
\label{sec:uea}

Table~\ref{tab:UEA_results_hypopt} reports the mean and standard deviation of each model's test set accuracy over five data splits. 
Among the stacked recurrent models, LRU, S5, and S6 achieve similar average accuracies overall.
However, Mamba attains the lowest average accuracy of all seven methods. 
Since S6 achieves performance in line with S5, Mamba's weaker results do not appear to be caused by the selective recurrence itself. 
Instead, they may reflect the effect of Mamba's additional architectural components, particularly its short convolution, which places greater weight on local information and may therefore be less well suited to classification tasks that reward preserving information across the full time series.
\\ \\
NCDEs and NRDEs have similar average accuracies overall, although NRDEs perform notably better on EigenWorms, the dataset with the most observations. 
However, NRDEs are still outperformed in average accuracy by the stacked recurrent models LRU, S5, and S6. 
In contrast, Log-NCDEs achieve the best average accuracy and the best average rank across the six datasets. 
Compared to NRDEs, they attain an equal or higher average accuracy on all six datasets and a lower standard deviation on four datasets.
\\ \\
A Friedman test across the six datasets and seven methods did not detect a statistically significant difference in performance among the models at the $5\%$ significance level, with $\chi^2 = 9.69$, $df = 6$, and $p = 0.138$ \citep{demsar2006statistical}. 
Since Log-NCDEs are a direct modification of NCDEs and NRDEs, we additionally performed one-sided Wilcoxon signed-rank tests comparing Log-NCDEs against these two baselines, with Holm correction across the two comparisons \citep{wilcoxon1945individual}. 
At the $5\%$ significance level, these tests indicated that Log-NCDEs significantly outperformed both NCDEs and NRDEs, with adjusted $p = 0.0313$ in each case. 
These results suggest that incorporating Lie bracket information can improve predictive performance.

\begin{table}
\small
\caption{Mean and standard deviation of test set accuracy over five data splits on a subset of the UEA-MTSCA.
The best performing model is highlighted in bold and the second best is underlined. The average accuracy and average rank are also reported.}
\label{tab:UEA_results_hypopt}
\centering
\resizebox{\textwidth}{!}{
\begin{tabular}{l|c|c|c|c|c|c|c}
\hline
\multirow{2}{*}{Dataset} & \multicolumn{7}{c}{Method} \\ \cline{2-8}
& LRU & S5 & S6 & Mamba & NCDE & NRDE & Log-NCDE \\ \hline
EigenWorms & $\mathbf{87.8 \pm 2.8}$ & $81.1 \pm 3.7$ & $85.0 \pm 16.1$ & $70.9 \pm 15.8$ & $75.0 \pm 3.9$ & $83.9 \pm 7.3$ & $\underline{85.6 \pm 5.1}$ \\
EthanolConcentration & $21.5 \pm 2.1$ & $24.1 \pm 4.3$ & $26.4 \pm 6.4$ & $27.9 \pm 4.5$ & $\underline{29.9 \pm 6.5}$ & $25.3 \pm 1.8$ & $\mathbf{34.4 \pm 6.4}$ \\
Heartbeat & $\mathbf{78.4 \pm 6.7}$ & $\underline{77.7 \pm 5.5}$ & $76.5 \pm 8.3$ & $76.2 \pm 3.8$ & $73.9 \pm 2.6$ & $72.9 \pm 4.8$ & $75.2 \pm 4.6$ \\
MotorImagery & $48.4 \pm 5.0$ & $47.7 \pm 5.5$ & $\underline{51.3 \pm 4.7}$ & $47.7 \pm 4.5$ & $49.5 \pm 2.8$ & $47.0 \pm 5.7$ & $\mathbf{53.7 \pm 5.3}$ \\
SelfRegulationSCP1 & $82.6 \pm 3.4$ & $\mathbf{89.9 \pm 4.6}$ & $82.8 \pm 2.7$ & $80.7 \pm 1.4$ & $79.8 \pm 5.6$ & $80.9 \pm 2.5$ & $\underline{83.1 \pm 2.8}$ \\
SelfRegulationSCP2 & $51.2 \pm 3.6$ & $50.5 \pm 2.6$ & $49.9 \pm 9.5$ & $48.2 \pm 3.9$ & $53.0 \pm 2.8$ & $\underline{\mathbf{53.7 \pm 6.9}}$ & $\underline{\mathbf{53.7 \pm 4.1}}$ \\ \hline
Av. & $61.7$ & $61.8$ & $\underline{62.0}$ & $58.6$ & $60.2$ & $60.6$ & $\mathbf{64.3}$ \\
Av. Rank & $\underline{3.5}$ & $4.0$ & $\underline{3.5}$ & $5.5$ & $4.5$ & $4.9$ & $\mathbf{2.1}$ \\
\end{tabular}
}
\end{table}

\subsubsection{PPG-DaLiA}

\begin{table}
\small
\caption{Mean and standard deviation of test set mean squared error over five runs with different random seeds on the PPG-DaLiA dataset.}
\label{tab:PPG_results_hypopt}
\centering
\begin{tabular}{c|c}
Model & MSE $(\times 10^{-2})$ \\ \hline
LRU & $12.17 \pm 0.49$ \\
S5 & $12.63 \pm 1.25$ \\
S6 & $12.88 \pm 2.05$ \\
Mamba & $10.65 \pm 2.20$ \\
NCDE & $13.54 \pm 0.69$ \\
NRDE & $\underline{9.90\pm 0.97}$ \\
Log-NCDE & $\mathbf{9.56 \pm 0.59}$ 
\end{tabular}
\end{table}

Table \ref{tab:PPG_results_hypopt} contains the average and standard deviation of each model's test set mean squared error on the PPG-DaLiA dataset. 
In contrast to the UEA-MTSCA experiments, Mamba is the best performing stacked recurrent model on PPG-DaLiA and clearly outperforms S6. 
This suggests that Mamba's additional architectural components, such as its short convolution, are beneficial for this regression task. 
This aligns with the intuition that heart-rate prediction depends heavily on recent observations.
Among the neural differential equation models, both NRDEs and Log-NCDEs substantially outperform the NCDE, indicating that the Log-ODE based models are better suited to handling very long sequences. 
Log-NCDEs still achieve the best overall performance, obtaining the lowest average test set mean squared error and the second lowest standard deviation. 

\subsubsection{Memory and Time}

\begin{figure}
\centering
    \begin{subfigure}{0.48\textwidth}
        \includegraphics[width=\linewidth]{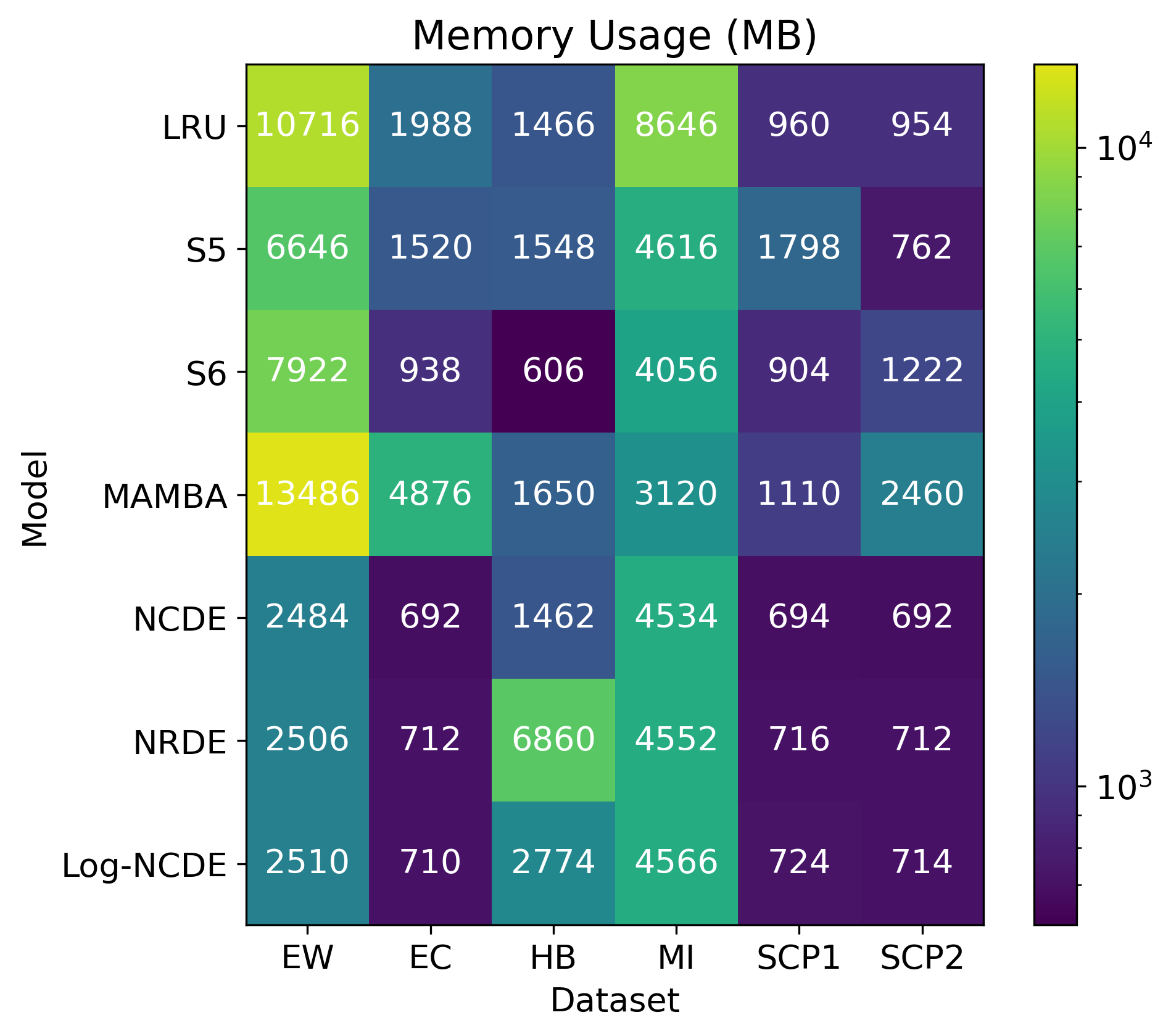}
        \caption{Memory}
        \label{fig:mem}
    \end{subfigure}
    \begin{subfigure}{0.48\textwidth}
        \includegraphics[width=\linewidth]{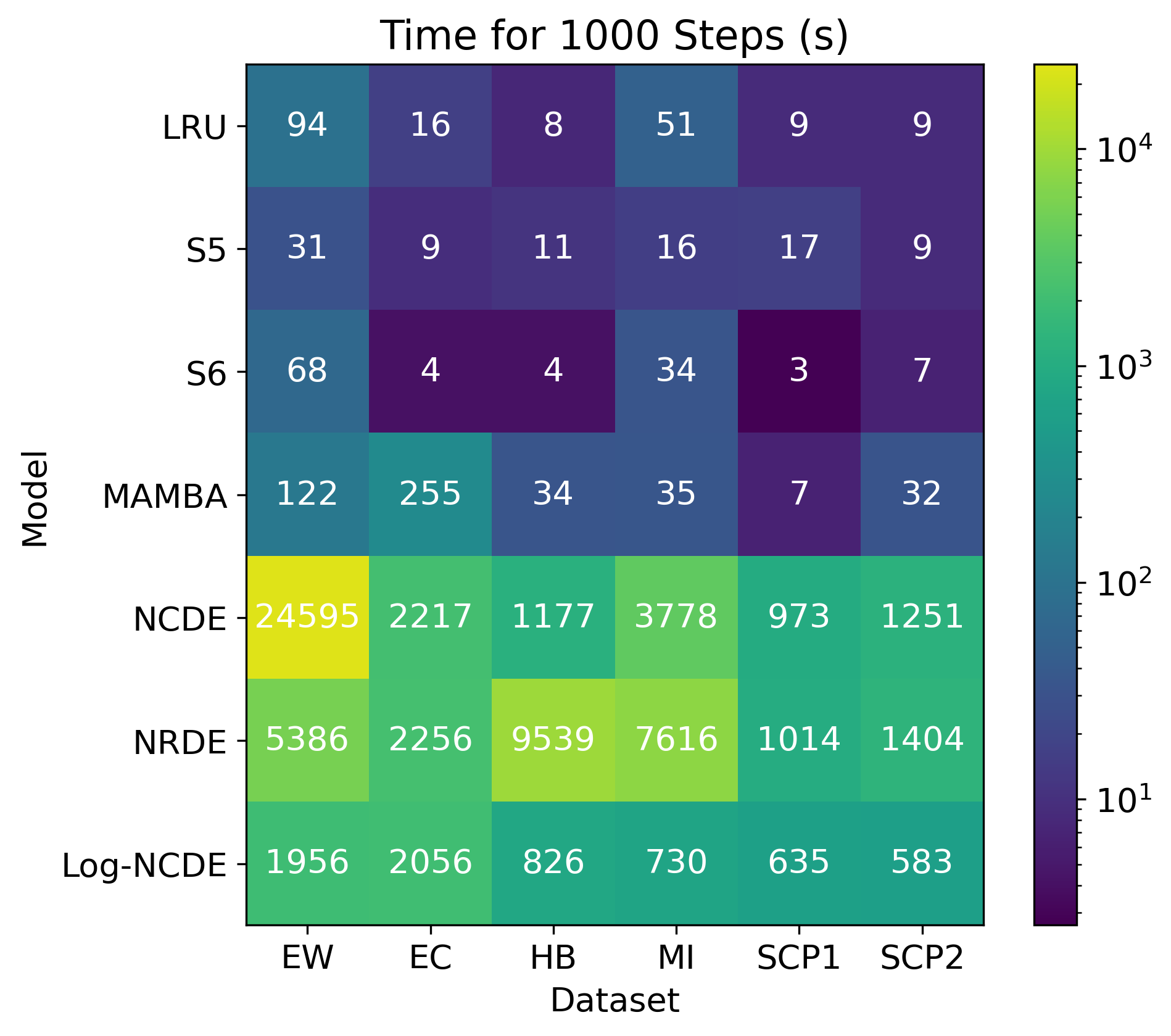}
        \caption{Time}
        \label{fig:time}
    \end{subfigure}

    \medskip
    \begin{subfigure}{0.48\textwidth}
        \includegraphics[width=\linewidth]{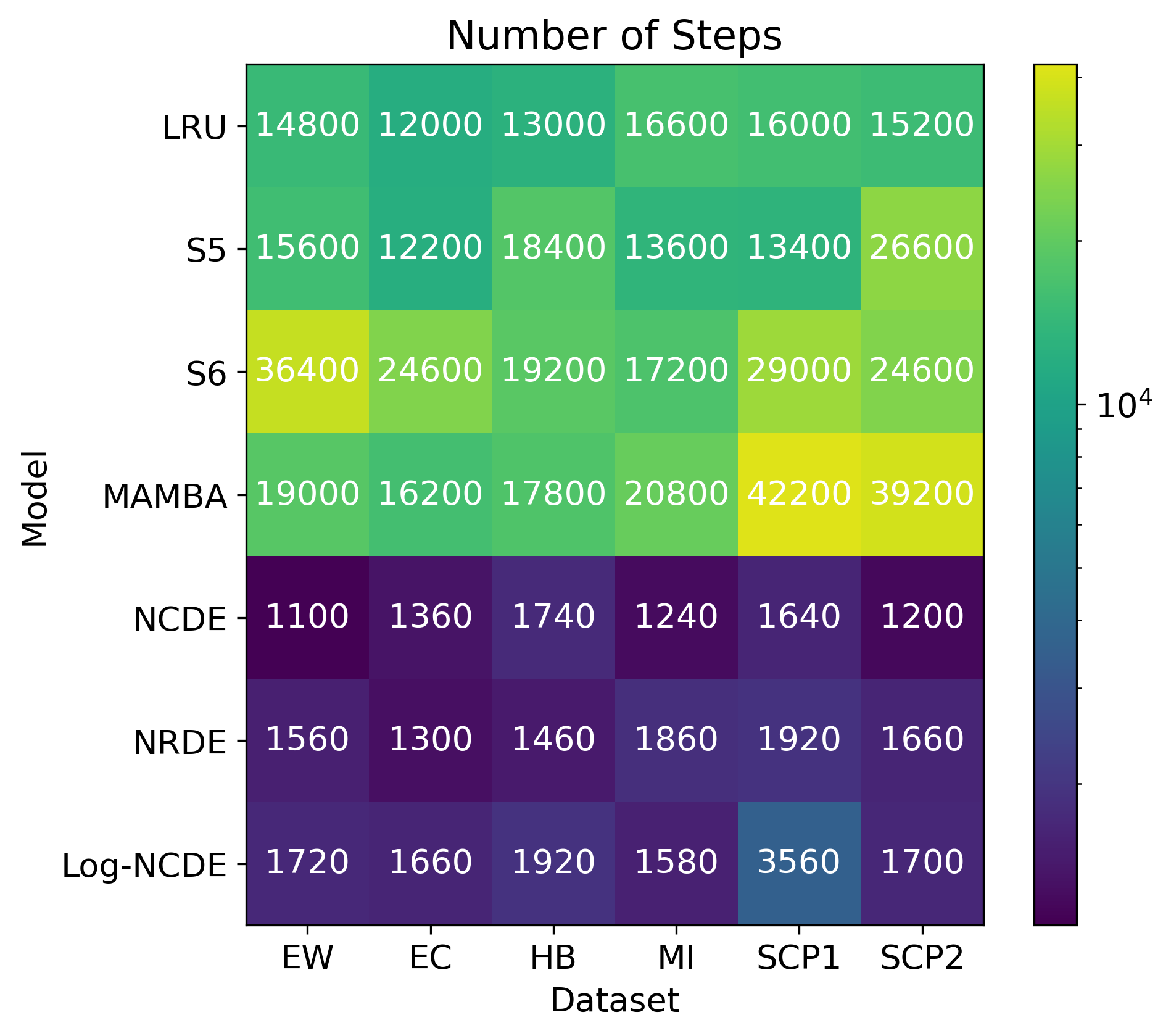}
        \caption{Number of steps}
        \label{fig:num_steps}
    \end{subfigure}
    \begin{subfigure}{0.48\textwidth}
        \includegraphics[width=\linewidth]{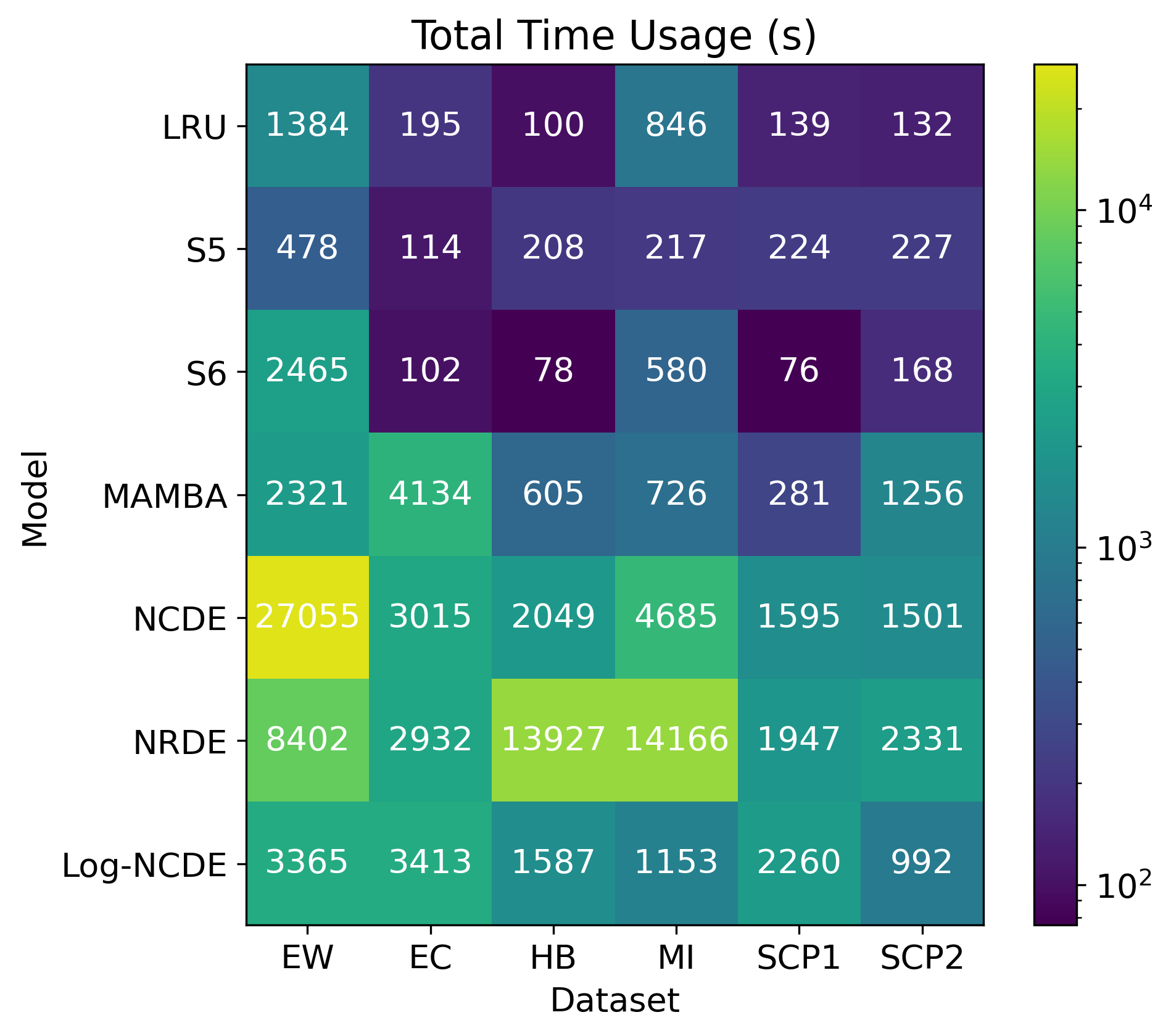}
        \caption{Total time}
        \label{fig:total_time}
    \end{subfigure}
    \caption{Memory, time per $1000$ training steps, number of steps, and approximate total time for each model and dataset from the UEA-MTSCA on an NVIDIA RTX 4090. 
    The following abbreviations are used: EigenWorms (EW), EthanolConcentration (EC), Heartbeat (HB), MotorImagery (MI), SelfRegulationSCP1 (SCP1), and SelfRegulationSCP2 (SCP2).}
    \label{fig:mem_time}
\end{figure}

Models are compared on their average GPU memory usage and runtime for the UEA-MTSCA datasets. 
In order to compare the models, $1000$ steps of training were run on an NVIDIA RTX 4090 with each model using the hyperparameters obtained from the hyperparameter optimisation, as detailed in Tables \ref{tab:UEA_hypopt_recurrent} and \ref{tab:UEA_hypopt_ncde}. 
In addition to the time for $1000$ steps and GPU memory usage, shown in Figures \ref{fig:mem} and \ref{fig:time}, the average number of total training steps taken to produce the results in Table \ref{tab:UEA_results_hypopt} is recorded in Figure \ref{fig:num_steps}. 
Combining the results for time per $1000$ training steps and the total number of training steps gives an approximation of the total runtime on the same hardware, and these results are shown in Figure \ref{fig:total_time}. 
The average GPU memory and runtime across the six datasets is given in Table \ref{tab:UEA_results_mem}.
\\ \\
Although the time per training step is lower for stacked recurrent models than NCDEs, NRDEs, or Log-NCDEs, they also require more training steps to converge. 
Additionally, NCDEs, NRDEs, and Log-NCDEs require less GPU memory. 
The largest contributors to the average runtime of NCDEs are the datasets with the most observations, EigenWorms and MotorImagery. 
The positive impact of the Log-ODE method on computational burden is demonstrated empirically by the decrease in runtime achieved by NRDEs and Log-NCDEs on EigenWorms when using a depth$-2$ Log-ODE method. 
When a depth$-1$ Log-ODE method is used, such as NRDEs on MotorImagery, the same decrease is not observed. 
\\ \\
Section \ref{sec:cost} demonstrated that Log-NCDEs and NRDEs have the same asymptotic computational complexity. 
However, when using a depth$-2$ Log-ODE approximation and the same stepsize, NRDEs and Log-NCDEs exhibit different runtimes on Heartbeat, a high-dimensional dataset. 
This difference is partly explained by the model's having different optimal hyperparameter choices, but even when using identical hyperparameters to the NRDE, Log-NCDE's time per $1000$ training steps is $1673$ seconds, whereas NRDE's is $9539$ seconds. 
The remaining difference is due to being able to calculate the JVPs of $f_{\theta}$ in parallel, as discussed in Section \ref{sec:cost}. 
If instead the JVPs are calculated recurrently, then Log-NCDEs time per $1000$ training steps increases to $17045$ seconds.
\\ \\
From a practical standpoint, the Log-ODE method improves the computational viability of NCDE-style models, but it does not eliminate the underlying scalability challenge. Log-NCDEs are markedly faster than NCDEs and NRDEs on the longest datasets, use relatively little GPU memory, and still achieve strong empirical performance. They also naturally accommodate irregularly sampled and over-sampled data, unlike the stacked recurrent baselines considered here. However, they remain much slower than those recurrent models, which limits their suitability for rapid experimentation, large-scale hyperparameter tuning, and settings requiring very long training runs.

\begin{table}
\small
\caption{Average GPU memory and runtime for each model over the six datasets from the UEA-MTSCA experiments.}
\label{tab:UEA_results_mem}
\centering
\begin{tabular}{l|c|c}
Model & Av. GPU Mem. (MB) & Av. runtime (s) \\ \hline
LRU &  4121.67 & 466.09 \\
S5 &  2815.00 & 244.78 \\
S6 &  2608.00 & 578.15 \\
Mamba &  4450.33 & 1553.83 \\
NCDE &  1759.67 & 6649.91 \\
NRDE &  2676.33 & 7284.20 \\
Log-NCDE &  1999.67 & 2128.32 \\
\end{tabular}
\end{table}

\section{Conclusion}

This chapter showed that the Log-ODE method provides a principled way to approximate NCDE dynamics during training. 
It built on the groundwork laid by NRDEs and used the $\mathrm{Lip}(\gamma)$ theory developed in Chapter \ref{chap:lipgamma} to ensure that the resulting models are well defined. 
Empirically, Log-NCDEs matched or exceeded the performance of NCDEs and NRDEs on all six real-world multivariate time series classification datasets considered, while reducing the average runtime by more than a factor of $3$.
\\ \\
Log-NCDEs also achieved the best average test accuracy and the best average rank among all seven models in Table \ref{tab:UEA_results_hypopt}. 
However, their computational cost remains substantial. 
The average runtime of Log-NCDEs is still nearly an order of magnitude larger than that of the fastest baseline, S5. 
The gap is even more striking in the time per $1000$ training steps, which is $1131$ seconds for Log-NCDEs and only $16$ seconds for S5. 
Thus, although Log-NCDEs are more practical than NCDEs and NRDEs, they remain too slow for genuinely large-scale applications.
\\ \\
The reason for this remaining gap is the nature of the hidden-state dynamics. 
Log-NCDEs still require the numerical solution of a non-linear controlled differential equation during each forward pass. 
By contrast, S5 is based on linear dynamics whose flow can be written in closed form on each interval. 
This removes the need for an expensive non-linear solve and allows the hidden state over a sequence of length $n$ to be computed in only $\mathcal{O}(\log n)$ parallel steps via an associative scan. 
\\ \\ 
Motivated by this observation, Chapter \ref{chap:lin_ncde} introduces Linear NCDEs, where the vector fields are constrained to be linear in the hidden state. 
This restriction yields closed-form solutions for the dynamics on each interval, making parallel-in-time computation possible while retaining the theoretical expressivity and continuous-time structure of NCDEs. 
Furthermore, Log-NCDEs naturally translate to this linear setting, where the Log-ODE method continues to provide substantial empirical runtime benefits.

\chapter{Linear Neural Controlled Differential Equations}

\label{chap:lin_ncde}

\begin{quoting}
    Aren't linear CDEs expressive enough?
\end{quoting}
\noindent\large{---Massimiliano Gubinelli (2023)}
\normalsize

\section{Introduction}

Chapter~\ref{chap:cde} developed the continuous-time mathematical framework underlying this thesis. 
Paths are the fundamental objects, signatures and log-signatures provide principled summaries of path segments, CDEs describe how paths influence the state of a system, and the Log-ODE method provides an efficient and accurate approximation to the solution of a CDE. 
Building on the regularity theory developed in Chapter~\ref{chap:lipgamma}, Chapter~\ref{chap:ncde} translated this framework into machine learning through Log-NCDEs, showing that continuous-time models can achieve performance comparable to strong discrete baselines on regularly sampled datasets, whilst naturally being able to handle irregularly sampled and over-sampled time series.
However, Chapter~\ref{chap:ncde} also showed that the non-linear hidden-state dynamics remain a major computational bottleneck. 
Even with the Log-ODE method, Log-NCDEs are still substantially slower than the strongest stacked recurrent baselines. 
The aim of this chapter is to retain the empirical performance and continuous-time advantages of NCDEs while replacing their non-linear dynamics with a more scalable model class.
\\ \\
From the perspective of CDE theory, the linear case is the natural place to look. 
Linear CDEs admit explicit solutions, as discussed in Section~\ref{sec:lin_cde}, and the truncated signature is itself the solution of a linear CDE, as shown in Section~\ref{sec:cde_def}. 
Moreover, when equipped with a linear readout, linear CDEs are maximally expressive on the space of time-augmented driving paths with the $1$-variation topology, by Corollary~\ref{cor:universality-on-paths} and the proof of Theorem~\ref{thm:universal_ncde}. 
These facts suggest that vector fields which are linear in the hidden state may already be sufficient for time series modelling. 
Given this, it may seem surprising that Linear NCDEs were developed after their non-linear counterparts. 
The epigraph, a question posed by Prof.\ Massimiliano Gubinelli after a presentation on Log-NCDEs, captures this sentiment.
Historically, non-linear NCDEs were explored first in order to maximise modelling capacity and to align more closely with non-linear recurrent neural networks. 
Chapter~\ref{chap:ncde} shows that this choice comes at a significant computational cost.
\\ \\
In this chapter, we show that constraining an NCDE's vector field to be linear in the hidden state leads to the Log-ODE method producing closed-form flows on each interval. 
This removes the need for a differential equation solver, enables parallel-in-time computation, and directly addresses the scalability limitations identified in Chapter~\ref{chap:ncde}. 
We also show that Linear NCDEs retain the theoretical expressivity of NCDEs. 
Finally, we demonstrate empirically that, when combined with the Log-ODE method, Linear NCDEs match the performance of Log-NCDEs while reducing the time per training step by up to two orders of magnitude. 
This makes continuous-time machine learning practical at scales that were previously infeasible.

\section{Linear NCDEs}

\label{sec:lncde}

\subsection{Introduction}

\begin{definition}[Linear NCDE \citep{cirone2024deepSSM, walker2025structuredlinearcdesmaximally}]
\label{def:lin_NCDE}
    Let $\mathcal{X}(d)$ denote the space of bounded-variation $d$-dimensional paths on the interval $[t_0,t_n]$ which all begin at the same point and contain time as a channel. 
    Let $\omega:\mathcal{X}(d_X) \to \mathcal{X}(d_{\omega})$ be a continuous function on paths, with the shorthand $\omega^X=\omega(X)$.
    Let $L^1_{\theta}\in\mathbb{R}^{d_h \times d_X}$, $A_{\theta} \in \mathbb{R}^{d_h\times d_{\omega} \times d_h}$, and $L^2_{\theta}\in\mathbb{R}^{d_y \times d_h}$ be trainable parameters. 
    Then a Linear NCDE is defined by
    \begin{equation}\label{eq:lin_ncde}
    \begin{aligned}
        h_{t_0} &= L^1_{\theta}X_{t_0}, \\  
        h_t &= h_{t_0} + \int_{t_0}^t A_{\theta} h_s \,\mathrm{d}\omega^{X}_s, \\
        y_t &= L^2_{\theta}h_t. 
    \end{aligned}
    \end{equation}
\end{definition}

There are two core differences between NCDEs and Linear NCDEs. 
First, the driving path is a function of the interpolated data, $\omega^X=\omega(X)$, rather than the interpolation $X$ itself. 
Second, the vector field is linear in the hidden state,
\begin{equation}
   \label{eq:lncde} 
    h_t = h_{t_0} + \int_{t_0}^t A_{\theta}h_s\,\mathrm{d} \omega^{X}_s 
    = h_{t_0} + \int_{t_0}^t \sum_{i=1}^{d_{\omega}} A^i_{\theta}h_s\,\mathrm{d} \omega^{X,i}_s,
\end{equation}
where $A^i_{\theta}\in\mathbb{R}^{d_h \times d_h}$ are the parameters corresponding to the $i^{\text{th}}$ channel of the driving path $\omega^{X,i}_s$. 
As discussed in Section~\ref{sec:lin_cde}, \eqref{eq:lncde} admits an explicit solution in terms of the signature of the driving path $\omega^X$. 
Although exact, this representation is not directly practical for computation. 
However, when $\omega^X$ is piecewise linear, the flow of \eqref{eq:lncde} admits an explicit solution. 
More generally, the approximate flows generated by the Log-ODE method also admit explicit solutions. 
This is the key structural property that makes Linear NCDEs more scalable than NCDEs, since an explicit expression for the flow determines the evolution of any initial hidden state, rather than requiring each trajectory to be obtained by numerically solving a non-linear differential equation.

\subsection{Computing the Flow}
\label{sec:flow}

Let $\omega^X$ be piecewise-linear on the grid $t_0<\cdots<t_n$.
On each subinterval $[t_j,t_{j+1}]$, the dynamics are constant, so the update can be computed exactly,
\begin{equation}
\label{eq:lin_cde_logode_1}
\tilde{h}_{t_{j+1}}=\exp\left(\sum_{i=1}^{d_\omega}(\omega^{X,i}_{t_{j+1}}-\omega^{X,i}_{t_j})A^i\right)\tilde{h}_{t_j}.
\end{equation}
Equation \eqref{eq:lin_cde_logode_1} does more than simply remove the general-purpose ODE solver from NCDE inference, it represents a fundamental change in approach. 
Rather than tracking a trajectory $t\mapsto h_t$, we compute the state-transition map from $t_j$ to $t_{j+1}$,
\begin{equation}
    \label{eq:lin_ncde_flow}
    \exp\left(\sum_{i=1}^{d_\omega}(\omega^{X,i}_{t_{j+1}}-\omega^{X,i}_{t_j})A^i\right),
\end{equation}
which is also known as the flow. 
This shift in perspective allows for the flow over any interval to be solved for using only $\mathcal{O}(\log(n))$ parallel steps by using a parallel associative scan.
\\ \\
Given  a sequence $(x_1,\ldots,x_{n})$ and a binary operation $\otimes$, a prefix scan returns the running prefix products
\begin{equation}
    y_k=x_1\otimes \cdots \otimes x_k.
\end{equation}
Given the natural recurrence $y_{k} = y_{k-1} \otimes x_k$, a scan can be computed in $\mathcal{O}(n)$ work and $\mathcal{O}(n)$ recurrent steps. If the operation is associative,
\begin{equation}
    (a \otimes b) \otimes c = a \otimes (b \otimes c),
\end{equation}
then it is possible to compute the prefix scan in only $\mathcal{O}(\log(n))$ parallel steps using a tree-based algorithm \citep{blelloch1993prefix}. We demonstrate the approach for $(x_1,\ldots, x_8)$. First you compute a parallel up-sweep,
\begin{equation}
\begin{aligned}
&\textbf{Level 1}: s_1=x_1 \otimes x_2, \quad s_2=x_3 \otimes x_4, \quad
s_3=x_5 \otimes x_6, \quad s_4=x_7 \otimes x_8, \\[2pt]
&\textbf{Level 2}: t_1 = s_1 \otimes s_2, \quad t_2 = s_3 \otimes s_4, \\[2pt]
&\textbf{Level 3}: u=t_1 \otimes t_2.
\end{aligned}
\end{equation}
This calculation created a tree, with $u$ as the root, $t_1$ and $t_2$ as $u$'s left and right children, and so on. 
Now, you perform a parallel down-sweep of this tree, with the rule that given incoming carry $P$, the left child receives $P$ and the right child receives $P \otimes L$, where $L$ is the subtree with the left child as the root.
Initialising the root's carry as the identity, $P(u)=e$, and applying the carry rule to our example,
\begin{equation}
\begin{aligned}
&\textbf{Level 3}: &&P(t_1)=e,\quad &&P(t_2)=e\otimes t_1=t_1,\\[2pt]
&\textbf{Level 2}: &&P(s_1)=e,\quad &&P(s_2)=e\otimes s_1=s_1.\\
&&&P(s_3)=t_1,\quad &&P(s_4)=t_1\otimes s_3,\\[2pt]
&\textbf{Level 1}: &&P(x_1)=e,\quad &&P(x_2)=e\otimes x_1 =x_1,\\
&&&P(x_3)=s_1,\quad &&P(x_4)=s_1\otimes x_3,\\
&&&P(x_5)=t_1,\quad &&P(x_6)=t_1\otimes x_5,\\
&&&P(x_7)=t_1\otimes s_3,\quad &&P(x_8)=t_1\otimes s_3\otimes x_7.
\end{aligned}
\end{equation}
These carries are exactly the exclusive prefixes and the inclusive outputs are $y_k=P(x_k)\otimes x_k$.
\\ \\
For \eqref{eq:lin_cde_logode_1}, the scan elements are the flows \eqref{eq:lin_ncde_flow} indexed by $j$, and the associative binary operation is matrix multiplication. 
Once the flows over $[t_0,t_j]$ have been obtained for $j=1,\ldots,n$, a batched matrix-vector multiplication with $h_{t_0}$ yields the hidden states $h_{t_j}$ for $j=1,\ldots,n$, so inference can be completed in $\mathcal{O}(\log(n))$ parallel steps. 
However, a scan over all $n$ observation intervals requires materialising $n$ matrices in $\mathbb{R}^{d_h\times d_h}$.
Therefore, when $d_h$ is large, the practical cost can be dominated by GPU memory traffic \citep{yang2024parallelizing}. 
The Log-ODE method mitigates this by replacing the fine observation grid with a coarser partition $t_0=r_0<\cdots<r_m=t_n$, so that the scan is performed over only $m<n$ interval flows.
\\ \\
An equivalent viewpoint of the approximation \eqref{eq:lin_cde_logode_1} is applying a depth$-1$ Log-ODE method to \eqref{eq:lin_ncde} on the grid $\{t_j\}_{j=0}^n$. Extending this approach to a depth$-N$ Log-ODE method on generic intervals $t_0=r_0<\cdots<r_m=t_n$ produces flows
\begin{equation}
    \exp\left(\sum_{k=1}^{\beta(d_{\omega}, N)} \bar{A}^k_{\theta}\lambda^l_k\right),
\end{equation}
where 
\begin{equation}
    \log(S^N(\omega^X)_{[r_l,r_{l+1}]}) = \sum_{k=1}^{\beta(d_{\omega}, N)} \lambda^l_k\hat{e}_k,
\end{equation}
$\hat{e}_k$ is a Hall basis for the space where the depth-$N$ truncated log-signature lives,
\begin{equation}
    \bar{A}^k_{\theta} = A^k_{\theta}
\end{equation}
for $1\leq k\leq d_{\omega}$, and 
\begin{equation}
    \bar{A}^k_{\theta} = [\bar{A}^i_{\theta}, \bar{A}^j_{\theta}] = \bar{A}^i_{\theta}\bar{A}^j_{\theta} - \bar{A}^j_{\theta}\bar{A}^i_{\theta}
\end{equation}
when the basis element $\hat{e}_k$ corresponds to the Lie bracket of $\hat{e}_i$ and $\hat{e}_j$ \citep{reutenauer1993free}. 
Further details are given in Sections \ref{sec:logode} and \ref{sec:log_ncde}.
The key point is that the Log-ODE method again produces a linear flow on each interval, so the parallel associative scan remains applicable. 
Moreover, the Lie brackets $\bar{A}^k_{\theta}$ are computed from products and commutators of the matrices $A^i_{\theta}$, rather than from the forward-mode auto-differentiated Jacobian-vector products required in Log-NCDEs, which substantially reduces the computational cost. 
It also reduces the memory and I/O cost of the scan, since only $m<n$ state-transition matrices must be materialised in GPU memory. 
We refer to the combination of Linear NCDEs and the Log-ODE method as Log-Linear NCDEs.

\subsection{Expressivity}

The previous subsection showed that Linear NCDEs offer substantial computational advantages. 
This raises the question of whether these gains come at the cost of expressivity. 
For a fixed hidden dimension, Linear NCDEs are less expressive than NCDEs, since the former are a strict subclass of the latter. 
However, once the hidden dimension is allowed to vary, both model classes have the same theoretical expressivity.

\begin{theorem}[Maximally Expressive Linear NCDEs \citep{cirone2024deepSSM}]
    \label{thm:universal_lin_ncde}
    Let $\mathcal{X}$ be the space of bounded variation paths on the interval $[t_0,t_n]$ that start at a common point and include time as a channel, endowed with the $1-$variation topology. 
    Let $\mathcal{F}$ be the class of real-valued maps $X \mapsto y_{t_n}$ induced by Linear NCDEs from Definition \ref{def:lin_NCDE} with $\omega^X_s=X_s$ for $X\in\mathcal{X}$, $d_h\in\mathbb{N}$, and $d_y=1$. 
    Then $\mathcal{F}$ is maximally expressive on $\mathcal{X}$.
\end{theorem}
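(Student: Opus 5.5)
The plan is to realise the truncated signature itself as the hidden state of a Linear NCDE, and then to use Lemma~\ref{lem:injectivity-time} (equivalently Corollary~\ref{cor:universality-on-paths}) to approximate the target functional by a linear readout of that hidden state.

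Fix a compact set $\mathcal{K}\subset\mathcal{X}$, a continuous $F:\mathcal{K}\to\mathbb{R}$ and $\epsilon>0$. The argument has three steps.

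\emph{Step 1: reduce to a linear functional of the truncated signature.} Every $X\in\mathcal{K}$ starts at a common point and contains time as a channel. Lemma~\ref{lem:injectivity-time} therefore shows that the algebra generated by the functionals $\phi^{(\alpha,f)}_{[t_0,t_n]}$ is dense in $C(\mathcal{K},\mathbb{R})$. By the shuffle product identity, Theorem~\ref{th:shuff}, as used in Lemma~\ref{prop:algebra-on-signature}, any product of such functionals is again a single functional of this form. Consequently there exist $N\in\mathbb{N}$ and a linear functional $\ell$ on the truncated tensor algebra $T^N(\mathbb{R}^{d_X})$ such that
\begin{equation}
\sup_{X\in\mathcal{K}}\bigl|F(X)-\ell\bigl(S^N_{[t_0,t_n]}(X)\bigr)\bigr|<\epsilon.
\end{equation}
Since $V=\mathbb{R}^{d_X}$ is finite-dimensional, bounded linear functionals are simply linear maps.

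\emph{Step 2: realise the truncated signature as a Linear NCDE.} Take $d_h=\dim T^N(\mathbb{R}^{d_X})=\sum_{n=0}^N d_X^n$ and index coordinates by words of length at most $N$, as in Section~\ref{sec:tens_alg_l2}. Let $A^i_\theta$ be the truncations of the matrices in \eqref{eq:sig_cde}: $A^i_\theta$ sends the basis element of a word $w$ with $|w|<N$ to the basis element of $wi$, and sends words of length $N$ to $0$. Because appending a letter only ever increases word length, truncation is consistent with the CDE $\mathrm{d}S=S\otimes\mathrm{d}X$. Hence the solution of $h_t=h_{t_0}+\int_{t_0}^t\sum_i A^i_\theta h_s\,\mathrm{d}X^i_s$ with $h_{t_0}=e_\emptyset$ is $h_t=S^N_{[t_0,t]}(X)$. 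This can be verified from Theorem~\ref{thm:linear_cde_solution}: the series terminates because the $A^i_\theta$ are nilpotent, and $A^{\otimes n}(X^n)e_\emptyset$ reproduces the $n$-th level. Finally, set $L^2_\theta=\ell$ as a row vector, so that $y_{t_n}=\ell(S^N_{[t_0,t_n]}(X))$.

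\emph{Step 3: the initial condition.} This is the only genuine obstacle. Definition~\ref{def:lin_NCDE} requires $h_{t_0}=L^1_\theta X_{t_0}$, a linear function of the initial point, whereas the construction needs the constant $e_\emptyset$. There are two ways around it:
\begin{itemize}
\item Since all paths in $\mathcal{X}$ start at the same point $x_0$, if $x_0\neq0$ we can choose $L^1_\theta$ with $L^1_\theta x_0=e_\emptyset$. In particular, the time channel equals $t_0$, so this works when $t_0\neq0$.
\item Otherwise, we enlarge the state by one coordinate $c$ with $\mathrm{d}c=\mathrm{d}t$ (driven by the time channel), so that $c$ becomes nonzero, and use it to seed the signature dynamics. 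If that is too awkward, a cleaner fix is to shift the time channel so its starting value is nonzero. Shifting does not change the signature, by translation invariance, and it preserves the topology.
\end{itemize}
I expect this bookkeeping, together with checking that the truncation argument in Step~2 holds rigorously in the Young or Riemann--Stieltjes sense for bounded-variation paths, to be the only delicate points. The approximation itself follows directly from the signature universality already established.
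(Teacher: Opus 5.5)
Your Steps 1 and 2 follow the paper's argument and are correct. The paper combines universality of the signature (Lemma~\ref{lem:injectivity-time} together with the shuffle identity) with the fact that the truncated signature solves the nilpotent truncation of the linear CDE \eqref{eq:sig_cde}; you spell this out in more detail than the paper's one-line proof. The first bullet of your Step 3 is also correct: it handles the case where the common starting point $X_{t_0}$ is nonzero.

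The second bullet of Step 3 does not work. In Definition~\ref{def:lin_NCDE} the vector field $A_\theta h_s$ is homogeneous in $h$, and $h_{t_0}=L^1_\theta X_{t_0}$ has no bias. So if the common starting point is $X_{t_0}=0$ (for example $t_0=0$ and every other channel starts at $0$), then $h_{t_0}=0$. By the uniqueness part of Theorem~\ref{thm:linear_cde_solution} (equivalently Lemma~\ref{lem:lin_cde_sol} with $B=0$), it follows that $h_t=\Phi_{t,t_0}h_{t_0}\equiv 0$.

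\begin{itemize}
\item An extra coordinate with $\mathrm{d}c=\mathrm{d}t$ would need an affine forcing term, which a homogeneous Linear NCDE does not have, so that coordinate stays at $0$.
\item Shifting the time channel is not available either. The driving path is fixed as $\omega^X=X$ and the input space $\mathcal{X}$ is given, so a shift changes the hypothesis rather than proving the statement.
\end{itemize}

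In this degenerate case $\mathcal{F}=\{0\}$, so the theorem as literally stated is false and no bookkeeping can repair it. The statement needs the common starting point to be nonzero. This is automatic when $t_0\neq 0$, since the time channel then starts at $t_0$, and the paper's proof assumes it tacitly when it seeds the signature CDE with the nonzero initial state $(1,0,0,\ldots)$. Alternatively the model must be enlarged, for instance with a constant term $h_{t_0}=L^1_\theta X_{t_0}+b_\theta$, taking $b_\theta$ to be the basis vector of the empty word, after which your Step 2 goes through unchanged.

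With the nonzero-starting-point hypothesis made explicit, your Steps 1, 2 and the first bullet of Step 3 form a complete proof.
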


\begin{proof}
    The result follows from the universality of the signature, Corollary~\ref{cor:universality-on-paths}, together with the fact that the truncated signature solves a linear CDE, as shown in \eqref{eq:sig_cde}. For more details, see \citep[Theorem B.13]{cirone2024deepSSM}.
\end{proof}

Although a Linear CDE is linear in the hidden state, its evolution is driven by the multiplicative interaction between the hidden state and the increments of the driving path. 
This is precisely the mechanism that generates iterated integrals, since each tensor level of the truncated signature is obtained by integrating the previous level against the path. 
Therefore, just as repeated application of the map $(x,y)\mapsto xy$ generates monomials, repeated multiplication of the hidden state by path increments generates the tensor levels of the truncated signature.
Hence, this single multiplicative interaction is sufficient for maximal expressivity.
\\ \\
Similarly to NCDEs, Theorem \ref{thm:universal_lin_ncde} can be extended from path-to-point functions to path-to-path functions by replacing the linear readout $L^2_{\theta}$ with a FCNN acting on $h_t$, as shown in \citep[Proposition D.2]{cirone2024deepSSM}.
Furthermore, the linear setting allows us to give a statement about expressivity even when the entries of $A_{\theta}$ are sampled randomly from a prescribed distribution.

\begin{definition}[Maximal Probabilistic Expressivity]
    Let $\mathcal{X}$ be a topological space, $M\in\mathbb{N}$, and $\mathcal{F}^M = \{ f^M_\theta : \mathcal{X} \to \mathbb{R} \mid \theta \in \Theta^M\}$ be a class of real-valued functions on $\mathcal{X}$ defined by
    \begin{equation}
        f^M_{\theta}(\omega) = l_{\theta_2}(\tilde{f}^M_{\theta_1}(\omega)),
    \end{equation}
    for $\omega\in\mathcal{X}$, where $\tilde{f}^M_{\theta_1}:\mathcal{X}\rightarrow\mathbb{R}^M$, $l_{\theta_2}\in\mathbb{R}^M$ is a linear readout, $\theta_1\in\Theta_1^M$, $\theta_2\in\Theta_2^M$, and $\Theta^M = \Theta_1^M\cup \Theta_2^M$. Given a sequence of probability measures $\mathbb{P}_{M}$ on $\Theta_1^M$ with $\theta_1 \sim \mathbb{P}_{M}$, $\mathcal{F}$ has maximal probabilistic expressivity if, for every compact set $\mathcal{K} \subset \mathcal{X}$ and every real-valued continuous function $f : \mathcal{K} \to \mathbb{R}$, the following property holds:
    \begin{equation}
        \forall \epsilon > 0, \lim\limits_{M \to \infty}\mathbb{P}_{M}\Bigg\{
    \exists l_{\theta_2} ~
    \text{s.t.} 
    \sup_{\omega \in \mathcal{K}}\Big|f(\omega) - f_{\theta}^M(\omega)\Big| < \epsilon
    \Bigg\} = 1.
    \end{equation}
\end{definition}

\begin{theorem}[Maximal Probabilistic Expressivity for Linear NCDEs \citep{cirone2024deepSSM}]
    \label{thm:max_prob_lin_ncde}
    Let $\mathcal{X}$ be the space of bounded variation paths on the interval $[t_0,t_n]$ that start at a common point and include time as a channel, endowed with the $1-$variation topology.
    Let $\mathcal{F}$ be the class of real-valued maps $X \mapsto y_{t_n}$ induced by Linear NCDEs from Definition \ref{def:lin_NCDE} with $\omega^X_s=X_s$ for $X\in\mathcal{X}$, $d_h\in\mathbb{N}$, $d_y=1$, and $A^i_{\theta}$ with i.i.d entries from $\mathcal{N}\left(0,\frac{1}{d_h}\right)$. 
    Then $\mathcal{F}$ has maximal probabilistic expressivity on $\mathcal{X}$.
\end{theorem}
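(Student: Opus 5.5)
By Theorem~\ref{thm:linear_cde_solution}, the terminal state of a Linear NCDE has the explicit expansion
\begin{equation}
    h_{t_n} = \sum_{k\geq 0}\ \sum_{|I|=k} A^{i_k}_{\theta}\cdots A^{i_1}_{\theta}\, h_{t_0}\, S^{I}_{[t_0,t_n]}(X),
\end{equation}
and the readout is $y_{t_n}=L^2_\theta h_{t_n}$. So $y_{t_n}$ is a linear functional of the signature whose coefficient on the word $I$ is $L^2_\theta A^{I}_\theta h_{t_0}$. The plan is to reduce maximal probabilistic expressivity to a statement about these random features. On any compact $\mathcal{K}$, Corollary~\ref{cor:universality-on-paths} together with Lemma~\ref{lem:injectivity-time} approximates $f$ to within $\epsilon/2$ by a linear functional $\ell$ of the truncated signature $S^N$, for some fixed $N$. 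Theorem~\ref{thm:factorial-decay} bounds the signature uniformly on $\mathcal{K}$, so the tail $k>N$ of the series can later be controlled. It therefore suffices to show the following: with probability tending to one as $d_h\to\infty$, there is a readout $v\in\mathbb{R}^{d_h}$ with $v^\top A^I_\theta h_{t_0}\approx \ell_I$ for all $|I|\le N$, and with $\sum_{|I|>N}|v^\top A^I_\theta h_{t_0}|\,|S^I|$ small uniformly on $\mathcal{K}$.

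The key step is near-orthonormality of the random words. Fix $h_{t_0}$ to be a unit vector; since $L^1_\theta$ is trainable, we may choose $L^1_\theta X_{t_0}=h_{t_0}$, because all paths share a common start point. Set $z_I = A^I_\theta h_{t_0}$ with $A^i$ having i.i.d.\ $\mathcal{N}(0,1/d_h)$ entries. The aim is to show that for all words $I,J$ with $|I|,|J|\le N$,
\begin{equation}
    \langle z_I, z_J\rangle \longrightarrow \delta_{IJ}
\end{equation}
in probability as $d_h\to\infty$. I would prove this by computing first and second moments using Gaussian Wick calculus, or by citing free-probability results: independent Gaussian matrices are asymptotically free circular elements, so that $\tau(w(A)^*w'(A))=\delta_{ww'}$ for reduced words. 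For words with repeated letters, such as $A^1A^1$, freeness still gives orthonormality because circular elements are non-normal, but the second-moment variance computation must be checked. A union bound over the finitely many words then gives the statement simultaneously for all $|I|\le N$. On this event the Gram matrix $G$ of $\{z_I\}_{|I|\le N}$ is invertible and close to the identity. I would take $v=\sum_{|I|\le N} c_I z_I$ with $c=G^{-1}\ell$, which gives $v^\top z_I=\ell_I$ exactly and $\|v\|\le 2\|\ell\|$.

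The main obstacle is controlling the tail $\sum_{|I|>N} (v^\top z_I) S^I$, because $v$ depends on the randomness. The plan is to bound $|v^\top z_I|\le\|v\|\,\|A^{i_k}\cdots A^{i_1}\|_{op}$. Since $\|A^i\|_{op}\le 2+\delta$ with high probability (by standard Gaussian matrix concentration), this gives $|v^\top z_I|\le 2\|\ell\|(2+\delta)^k$. Combining with the factorial decay of Theorem~\ref{thm:factorial-decay}, the level-$k$ tail is at most $2\|\ell\|\, C\,\big(d(2+\delta)R\big)^k/k!$, where $R$ bounds the $1$-variation on $\mathcal{K}$. This tail is summable, but it is not small for a fixed $N$ unless $N$ is large relative to $\ell$. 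So I would choose $N$ first and then enlarge it, noting that $\|\ell\|$ may grow with $N$. If this circularity bites, the fallback is to rescale time, replacing $X$ by $\lambda X$, which is absorbed into $\ell$, to make $R$ small. Alternatively, one can apply Stone--Weierstrass to the algebra of functions $X\mapsto u^\top\Phi_{t_n,t_0}(X)h_{t_0}$ directly and avoid truncation altogether, which is the route I expect Cirone et al.\ take via a limiting Gaussian-process/kernel argument.
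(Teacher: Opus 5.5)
Your argument is the same as the paper's. The random features $A^I_\theta h_{t_0}$, for words of bounded length, become nearly orthonormal as $d_h\to\infty$; this is the Johnson--Lindenstrauss-type relaxation of recreating the truncated tensor algebra inside $\mathbb{R}^{d_h}$. A linear readout on these features then reproduces any linear functional of the truncated signature.

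The one loose end is the tail, and the circularity you anticipate does not arise. Your $\ell$ is fixed as soon as $\epsilon$ and $\mathcal{K}$ are. For any $N'\geq N$ you may regard it as a functional on $S^{N'}$ with zero coefficients on words of length $N<|I|\leq N'$, so $\|\ell\|$, and hence your bound on $\|v\|$, does not grow. The procedure is then:
\begin{enumerate}
\item Choose $N'$ so that your level-$k$ tail bound, summed over $k>N'$, is below $\epsilon/2$.
\item Only then send $d_h\to\infty$, taking the Gram matrix over all words of length at most $N'$.
\end{enumerate}
This is still a fixed, finite family of words, so your union bound is unaffected.

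You should discard the rescaling fallback. The statement fixes both $\omega^X=X$ and the law $\mathcal{N}(0,1/d_h)$ of the $A^i_\theta$. Replacing $X$ by $\lambda X$ is equivalent to changing that variance, which the statement does not allow.

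Your tail control differs slightly from the paper's. The paper relies on the quantitative bound $\|\frac{1}{d_h}\langle A^Ih_0,A^Jh_0\rangle-\delta_{I,J}\|_{L^2(\mathbb{P}_{d_h})}\leq(\kappa(|I|+|J|))!!\,\mathcal{O}(d_h^{-1/2})$, which holds for all word lengths. Your route needs convergence only for finitely many words, and it controls the remainder on the elementary high-probability event $\|A^i_\theta\|_{\operatorname{op}}\leq 2+\delta$ for $i=1,\ldots,d_X$. This is a valid and somewhat more economical way to close the argument.
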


\begin{proof}
For a detailed proof see \citep[Theorem B.13]{cirone2024deepSSM}. 
Here we present the core idea.
In the proof of Theorem \ref{thm:universal_lin_ncde} the depth$-N$ truncated tensor algebra of $\mathbb{R}^{d_X}$ is recreated as a Euclidean space, requiring $\mathcal{O}(d_X^N)$ orthogonal vectors. 
At the cost of losing exactness, we can leverage results of the Johnson-Lindenstrauss type to find $O(e^{\epsilon^2 N})$ vectors in $\mathbb{R}^N$ orthogonal up to an $\epsilon$ error, using random projections \citep{Dasgupta2003AnEP}. 
This relaxation allows us to use random matrices $A^i_{\theta}$ and still achieve maximal expressivity.
\end{proof}

In the context of machine learning, maximal probabilistic expressivity may be viewed as a more promising property than maximal expressivity. 
For sufficiently large hidden dimension $d_h$, it implies there exists a significant abundance of parameters $\theta_1$ that are capable of achieving uniformly bounded and arbitrarily low error rates with a linear readout layer. 
This in turn suggests that, when $d_h$ is large, random initialisations of the matrices $A^i_{\theta}$ can already encode informative features of the input path, and may therefore provide a useful starting point for optimisation.
\\ \\
Linear NCDEs with random $A^i_{\theta}$ have also been studied under the name randomised signatures.
There is a growing body of work establishing their expressivity and empirical effectiveness \citep{cuchiero2021expressive, Cuchiero2022Discrete, Compagnoni2023On}.
In particular, the concept behind the proof of Theorem \ref{thm:max_prob_lin_ncde} goes back to \cite{cuchiero2021expressive}, who used a Johnson-Lindenstrauss type result to show that randomised signatures can match the expressivity of classical signatures with substantially fewer features.

\subsection{Related Approaches}

Linear NCDEs are the continuous-time analogue of a tensor RNN,
\begin{equation}
\label{eq:2rnn}
h_{t_{i+1}}=\sigma\left(\left(\sum_{j=1}^{d_x} A^j_{\theta}\, x^j_{t_i}\right) h_{t_i}\right),
\end{equation}
with $\sigma(x)=x$. Tensor RNNs were introduced by \citet{Giles1990HigherOrderRNN}, who conjectured they could learn any regular language. 
Architectures of the form \eqref{eq:2rnn} are also known as $2$-RNNs \citep{pmlr-v235-lizaire24a}. 
Building on tensor RNNs, \cite{sutskever2011generating} introduced multiplicative RNNs, which take the generic form
\begin{equation}
\label{eq:mrnn}
h_{t_{i+1}}=\sigma\left(A_{\theta}(x_{t_i})h_{t_i}+B_{\theta}(x_{t_i})x_{t_i}\right).
\end{equation}
When $\sigma(x)=x$, \eqref{eq:mrnn} is an RNN whose update is linear in the hidden state,
\begin{equation}
\label{eq:lin_rnn}
h_{t_{i+1}}=A_{\theta}(x_{t_i})h_{t_i}+B_{\theta}(x_{t_i})x_{t_i}.
\end{equation}
Structured state-space models (SSMs), so named for the similarity between \eqref{eq:lin_rnn} and classical state-space models \eqref{eq:sm}, are a subclass of \eqref{eq:lin_rnn} that utilise a structured state-transition matrix $A_{\theta}$ \citep{gu2021efficiently}.
\\ \\
Section \ref{sec:ssms_are_lcdes} will show that two prominent SSMs, S4 and Mamba, can be recast as affine Linear NCDEs,
\begin{equation}
    \label{eq:affine_lin_cde}
    h_t = h_{t_0} + \int_{t_0}^t A_{\theta}h_s \mathrm{d}\omega^{X}_s + \int_{t_0}^t B_{\theta}\mathrm{d}\xi^{X}_s,
\end{equation}
where $B_{\theta}\in\mathbb{R}^{d_h \times d_\xi}$ is a trainable matrix and $\xi:\mathcal{X}(d_X) \to \mathcal{X}(d_{\xi})$ is another function of the interpolated data \citep{gu2021efficiently, gu2024mamba}.
Concurrently with Linear NCDEs, \citet{merrill2024illusion} introduced IDS4, an SSM corresponding to a discretised \eqref{eq:affine_lin_cde} with 
\begin{equation}
    \frac{\omega^{X}_{t_{j+1}}-\omega^{X}_{t_{j}}}{t_{j+1}-t_j} = X_{t_j},
    \qquad \xi^X_s=\omega^X_s.
\end{equation}
Motivated by the inability of earlier SSM architectures to learn regular languages, \citet{merrill2024illusion} proved that IDS4 can learn any regular language, thereby answering the conjecture of \citet{Giles1990HigherOrderRNN} in the case of an affine tensor RNN. 
In Section~\ref{sec:expressiveness}, we use the Linear NCDE framework to fully characterise the expressivity of IDS4, S4, and Mamba.

\subsection{Experiments}
\label{sec:lncde_exp}

We compare an NCDE, Log-NCDE, Linear NCDE, and Log-Linear NCDE on EigenWorms, the time series classification dataset with the most observations from those considered in Section \ref{sec:log_ncde_experiments}. 
This makes it a natural stress test for practical scalability, since both the cost of repeatedly solving a non-linear differential equation and the memory cost of materialising linear flows are amplified on long sequences. 
Taking 
\begin{equation}
    \frac{\omega^{X}_{t_{j+1}}-\omega^{X}_{t_{j}}}{t_{j+1}-t_j} = (1, X_{t_j}),
\end{equation} 
and setting $\xi^X_t=0$ gives an input dimension of $d_{\omega}=7$. 
All four models use a hidden dimension of $d_h=128$, and each time series contains $17984$ observations. 
We keep the hyperparameters for the NCDE and its linear variant, and for the Log-NCDE and its linear variant, identical to those selected by the hyperparameter optimisation in Section \ref{sec:hypopt}, except for replacing the non-linear vector field $f_{\theta}$ with a linear vector field $A_{\theta}$. 
See Table \ref{tab:UEA_hypopt_ncde} for full details. 
The models are compared on average test set accuracy, as well as time per training step and GPU memory on an NVIDIA RTX 4090 with a batch size of $1$. 
The parallel associative scan is applied to chunks of 128 steps, with each chunk processed recurrently.
\begin{table*}
\caption{Mean and standard deviation of test accuracy over five different splits, together with the time per $1000$ training steps and GPU memory usage, for NCDE, Log-NCDE, Linear NCDE (LNCDE), and Log-Linear NCDE (Log-LNCDE) on EigenWorms.}
\label{tab:lncde_EW}
\centering
\begin{tabular}{c|cccc}
\hline
& NCDE & Log-NCDE & LNCDE & Log-LNCDE \\ \hline
Test Accuracy & $75.0\pm 3.9$ & $85.6\pm 5.1$ & $87.2 \pm 5.2$ & $87.8 \pm 5.7$\\
\makecell{Recurrent time per $1000$ \\ training steps (s)} & $26020$ & $2263$ & $299.4$  & $29.9$ \\
Recurrent GPU Memory (MB) & $3484$ & $3494$ & $9624$ & $3486$ \\
\makecell{Parallel time per $1000$ \\ training steps (s)} & $-$ & $-$ & $100.4$ & $17.5$ \\
Parallel GPU Memory (MB) & $-$ & $-$ & $13730$ & $3492$ \\
\end{tabular}
\end{table*}
\\ \\
As shown in Table~\ref{tab:lncde_EW}, replacing the non-linear vector field of an NCDE with a linear vector field increases the average test accuracy from $75.0\%$ to $87.2\%$, bringing performance in line with Log-NCDEs, and therefore with the other state-of-the-art time series models considered in Section~\ref{sec:results}. 
A possible cause is that removing the differential equation solver improves training stability, since the Linear NCDE is solved exactly over each interval. 
This has the added benefit of reducing the recurrent time per training step by a factor of over $80$. 
Equivalently, $10000$ training steps would take around $3$ days for the NCDE, but only around $50$ minutes for the Linear NCDE. 
When a parallel associative scan is applied, the reduction in time per training step increases to a factor of over $250$, so that $10000$ training steps now take under $17$ minutes. 
However, this change also significantly increases GPU memory usage.
\\ \\
Applying the Log-ODE method brings GPU memory usage back to a level comparable to that of the models with non-linear vector fields, while further reducing the time per training step and maintaining a high average test set accuracy.
Overall, a Log-Linear NCDE combined with a parallel associative scan reduces the time per training step by a factor of almost $1500$ relative to the NCDE, so that $10000$ training steps take under $3$ minutes rather than around $3$ days. 
This is achieved while increasing the average test accuracy by $12.8$ percentage points and using only $8$MB more GPU memory than the NCDE.
\\ \\
Before broadening our empirical study of Linear NCDEs to additional tasks and baselines in Sections~\ref{sec:empirical} and \ref{sec:slice_exp}, we establish their theoretical relationship to SSMs. 
This connection provides the foundation for Section~\ref{sec:slice}, which introduces Structured Linear NCDEs, where $A^i_{\theta}$ is replaced by a structured variant to further improve model efficiency.

\section{Structured State-Space Models}

\label{sec:ssms}

\subsection{Definition}

In 2021, drawing inspiration from traditional state-space models such as \eqref{eq:sm}, \citet{gu2021efficiently} introduced S4, the first SSM. 
The model is based on a continuous differential equation,
\begin{equation}
\label{eq:s4_ct}
\begin{aligned}
    \mathrm{d}h^j_s &= C^j_{\theta} h^j_s + D_{\theta} X_{s}^j \mathrm{d}s, \\
    y^j_s &= E_{\theta} \cdot h^j_s
\end{aligned}
\end{equation}
where each channel of the input path $X^{j}_s$ produces a complex-valued hidden state $h^j_s \in \mathbb{C}^{d_h}$ and the trainable parameters are $D_{\theta}\in\mathbb{C}^{d_h}$, $E_{\theta}\in\mathbb{C}^{d_h}$, and the channel specific $C_{\theta}^j\in\mathbb{C}^{d_h \times d_h}$. 
S4 is a stable discretisation of \eqref{eq:s4_ct},
\begin{equation}
    \label{eq:s4}
    \begin{aligned}
        h^j_{t_{i+1}} = \bar{C}_{\theta}^jh^j_{t_i} + \bar{D}_{\theta}^jx^j_{t_i}
    \end{aligned}
\end{equation}
where $\bar C_{\theta}^j$ and $\bar D_{\theta}^j$ are determined by the method of discretisation and the channel-dependent step size $\Delta^j$. 
A common choice is the zero-order hold discretisation, 
\begin{equation}
\begin{aligned}
    \bar C_{\theta}^j &= \exp(\Delta^j C_{\theta}^j), \\
    \bar D_{\theta}^j &= (\Delta^j C_{\theta}^j)^{-1} (\exp(\Delta^j C_{\theta}^j)-I)\Delta^j D_{\theta} \approx \Delta^j D_{\theta}.
\end{aligned}
\end{equation}
During inference, S4 is equivalent to a linear RNN. 
However, the training via gradient descent is performed on the continuous-time variables, which helps manage vanishing and exploding gradients \citep{orvieto2023resurrecting, zucchet2024recurrent}. 
The ``structured'' aspect refers to specific parametrisations and initialisations of the state-to-state transition matrices $C^j_{\theta}$ to ensure stability and efficiency, particularly for processing long sequences. 
For example, S4D uses diagonal $C^j_{\theta}$, which has become the dominant choice \citep{gu2022s4d}.
\\ \\
SSMs have achieved state-of-the-art results on long-range reasoning benchmarks~\citep{tay2020long} and demonstrated strong performance in various domains including vision~\citep{nguyen2022s4nd}, audio~\citep{goel2022sashimi}, biological signals~\citep{gu2021efficiently}, and reinforcement learning~\citep{lu2023structured}. 
SSMs have garnered significant interest, as their computational complexity scales linearly in sequence length, while attention scales quadratically. 
Moreover, unlike non-linear RNNs such as LSTMs \citep{hochreiter1997long} and GRUs \citep{GRU}, they can be efficiently parallelised on GPUs during training using the approach outlined in Section \ref{sec:flow} \citep{S5}.
While standard SSMs perform well on signal processing tasks, their computational power is limited: the core sequential mechanism of S4 is equivalent to a convolution \citep{li2022makes}. 
This represents a drawback in challenging domains such as text and genetics, where the ability to select data efficiently in an input-dependent manner is crucial \citep{wang2023pretraining, fu2022hungry, arora2023zoology}. 
\\ \\
In 2023, Gu et al.\ proposed Mamba, which uses a recurrent layer based on a real-valued discretised model,
\begin{equation}
    \label{eq:s6}
    h^j_{t_{i+1}} = \bar{C}_{\theta}^j(x_{t_i})h^j_{t_i} + \bar{D}_{\theta}^j(x_{t_i})x^j_{t_i},
\end{equation}
where 
\begin{equation}
\begin{aligned}
    \bar C_{\theta}^j(x_{t_i}) &= \exp(\Delta^j(x_{t_i}) C_{\theta}), \\
    \bar D_{\theta}^j(x_{t_i}) &= (\Delta^j(x_{t_i}) C_{\theta})^{-1} (\exp(\Delta^j(x_{t_i}) C_{\theta})-I)\Delta^j(x_{t_i}) D_{\theta}x_{t_i}, \\
    &\approx \Delta^j(x_{t_i}) D_{\theta}x_{t_i},
\end{aligned}
\end{equation}
and the trainable parameters are $D_{\theta}\in\mathbb{R}^{d_h \times d_x}$, a shared diagonal state-transition matrix $C_{\theta}\in\mathbb{R}^{d_h \times d_h}$, and
\begin{equation}
    \Delta^{j}(x_{t_i}) = \text{softplus}(\alpha_{\theta}^j \cdot x_{t_i} + \beta_{\theta}^j),
\end{equation}
with trainable parameters $\alpha_{\theta}^j\in\mathbb{R}^{d_x}$ and $\beta_{\theta}^j\in\mathbb{R}$ \citep{gu2024mamba}.
This recurrent layer is known as S6.
Compared to S4, the evolution of each channel's hidden state is now dependent on all channels of the current input $x_{t_i}$ through $\Delta^{j}(x_{t_i})$. 
This dependence is intended to gate the flow of information by controlling the balance between the previous hidden state and the new update. 
When $\Delta^{j}(x_{t_i}) \ll 1$, the state-transition matrix is close to the identity and $\bar{D}_{\theta}^j(x_{t_i})x^j_{t_i}$ is small, so the hidden state is largely preserved. 
For larger values of $\Delta^{j}(x_{t_i})$, the new update has greater influence, leading to stronger state changes and greater forgetting. 
This modification allows Mamba to achieve state-of-the-art performance on a range of language modelling tasks.
Similar ideas appear in recent attention-inspired architectures such as RWKV, Gated Linear Attention, and HGRN2 \citep{peng2023rwkv, yang2024gated, qin2024hgrn2}.
\\ \\
The expressiveness of non-linear RNNs, such as \eqref{eq:rnn}, has been extensively studied since the seminal work of \citet{siegelmann1992computational}. 
In particular, \citet{hanson2020universal} proved that wide enough non-linear RNNs can approximate non-linear time-homogeneous systems of differential equations driven by input paths to arbitrary precision. 
However, SSMs have state-to-state transitions which are linear in the hidden state. 
Although this allows for parallel-in-time computation, it also reduces the recurrence's expressivity.
In 2022, \citeauthor{li2022approximation} showed that linear RNNs, a generic term for S4 like recurrences, can approximate arbitrary convolution filters in the width limit \citep{li2022approximation}. 
It has also been shown that single layer linear recurrences are universal approximators, when equipped with a fixed point-wise FCNN acting across the recurrence output \citep{orvieto2023universality, wang2023state}.
\\ \\
Mamba's recurrence falls neither in the linear RNN nor the non-linear RNN setting: it is linear in the hidden state, but unlike S4 it is not linear time-invariant, since the input controls the recurrence's eigenvalues. 
This input dependence increases Mamba's expressivity relative to S4 and improves language modelling performance.
In the remainder of this section, we investigate the approximation capabilities of Mamba by recasting the model as an affine Linear NCDE. 
Existing work on Mamba's expressiveness has focused on specific toy tasks \citep{jelassi2024repeat} or the framework of formal language theory \citep{merrill2024illusion}. 
Here, we seek a generic result.

\subsection{SSMs are Linear NCDEs}

\label{sec:ssms_are_lcdes}

First, we show that the real-valued continuous version of S4, \eqref{eq:s4_ct}, can be rewritten as an affine Linear NCDE, \eqref{eq:affine_lin_cde}. 
Let $h_t\in\mathbb{R}^{d_hd_X}$,
\begin{equation}
    \begin{aligned}
    \omega^{X,k}_t &= t, \\ 
    \xi^{X}_{t} &= \int_{t_0}^t X_s ds,
    \end{aligned}
\end{equation}
and
\begin{equation}
    \begin{aligned}
    A^k_{\theta} &=\text{diag}(0,\ldots,0,C_{\theta}^k,0,\ldots,0)\in\mathbb{R}^{d_hd_X \times d_hd_X}, \\
    B_{\theta}&=\text{diag}(D_{\theta},\ldots,D_{\theta})\in\mathbb{R}^{d_hd_X\times d_X},
    \end{aligned}
\end{equation}
where the non-zero diagonal element of $A^k$ is in the $k^{\text{th}}$ position. 
Then the affine Linear NCDE \eqref{eq:affine_lin_cde} corresponds to a stacked version of \eqref{eq:s4_ct} with real-valued parameters.
\\ \\
The discrete version of Mamba's recurrence, \eqref{eq:s6}, can be considered a zero-order hold discretisation of
\begin{equation}
    \mathrm{d}h^j_s = C_{\theta}\Delta^j(X_{s})h^j_s + D_{\theta}X_s\Delta^j(X_{s})X^j_s \mathrm{d}s
\end{equation}
with a step size of $1$, where $C_{\theta}\in\mathbb{R}^{d_h \times d_h}$ and $D_{\theta}\in\mathbb{R}^{d_h \times d_X}$. 
These equations can be stacked and rewritten as an affine Linear NCDE by taking
\begin{equation}
\begin{aligned}
    \omega^{X,k}_t &= \int_{t_0}^t \text{softplus}(\alpha^k \cdot X_{s} + \beta^k)  \mathrm{d}s, \\
    \xi^{X}_t &= \int_{t_0}^t \begin{bmatrix}X_t\text{softplus}(\alpha^1 \cdot X_{s} + \beta^1)X^1_s \\ \vdots \\  X_t\text{softplus}(\alpha^{d_X} \cdot X_{s} + \beta^{d_X})X^{d_X}_s \end{bmatrix} ds,
\end{aligned}
\end{equation}
and
\begin{equation}
    \begin{aligned}
    A^k_{\theta} &=\text{diag}(0,\ldots,0,C_{\theta},0,\ldots,0)\in\mathbb{R}^{d_hd_X \times d_hd_X}, \\
    B_{\theta}&=\text{diag}(D_{\theta},\ldots,D_{\theta})\in\mathbb{R}^{d_hd_X\times d_X^2},
    \end{aligned}
\end{equation}
where the non-zero diagonal element of $A^k_{\theta}$ is in the $k^{\text{th}}$ position.
\\ \\
In this framework, the major difference between S4 and Mamba's recurrence is the choice of $\omega$ and $\xi$. 
\citet{gu2024mamba} argue that this difference allows Mamba to gate the hidden state based on the input stream, and therefore perform in-context learning. 
For this reason, we refer to $\omega$ and $\xi$ as the gating functions. 
A notable difference between SSMs and the general Linear NCDE is that SSMs process the hidden state for each channel individually, whereas a Linear NCDE mixes the hidden state and the channels of the transformed input path $\omega^X$ in the recurrent step. 
As shown in the next section, this has a significant impact on the expressivity of SSMs.

\subsection{Expressivity of SSMs}

\label{sec:expressiveness}

For certain choices of $\omega^{X}_t$ and $\xi^{X}_t$, the affine Linear NCDE is maximally expressive. 
For example, $\omega^{X,k}_t = X^k_t$ and $\xi^{X}_t = 0$ puts you in the setting of Theorem \ref{thm:universal_lin_ncde}. 
However, both S4 and Mamba use alternative choices for $\omega^{X}_t$ and $\xi^{X}_t$, so we now characterise the expressiveness of generic affine Linear NCDEs.

\begin{theorem}\label{thm:linear_CDEs_closure}
        Let $\mathcal{X}(d)$ denote the space of bounded-variation $d$-dimensional paths on the interval $[t_0,t_n]$ which all begin at the same point and contain time as a channel, endowed with the $1-$variation topology.
        For continuous gates $\omega^X:\mathcal{X}(d_X)\rightarrow \mathcal{X}(d_\omega)$ and $\xi^X:\mathcal{X}(d_X)\rightarrow \mathcal{X}(d_\xi)$, let 
        \begin{equation}
        \label{eq:lin_cde_F}
            \mathcal{F} = \left\{
                (X,t) \mapsto \Psi(\omega^{X}_{[t_0,t]}) \cdot X_0 + \int_{t_0}^t \Phi(\omega^{X}_{[s,t]}) \cdot \mathrm{d}\xi^{X}_{s} 
            \right\},
        \end{equation}
        where $\Psi: \mathcal{X}(d_\omega) \rightarrow \mathbb{R}^{d_X}$ and $\Phi: \mathcal{X}(d_\omega) \rightarrow \mathbb{R}^{d_{\xi}}$ are continuous functions and $\omega^X_{[s,t]}$ is $\omega^X$ restricted to the interval $[s,t]$. Then 
        for any compact set $\mathcal{K} \subseteq \mathcal{X}(d_X)$, any continuous paths $\omega^X$ and $\xi^X$ with $\omega^{X,1}_t \equiv t$ and $\omega^{X,2}_t \equiv t^2$, any $\epsilon>0$, and any $F\in\mathcal{F}$, there exists a choice of hidden dimension $d_h \geq 1$ and parameters for the affine Linear NCDE such that
        \begin{equation}\label{eqn:dense_RKHS_main}
            \sup\limits_{(X,t) \in \mathcal{K} \times [t_0,t_n]} |F(X,t) - y_t| \leq \epsilon.
        \end{equation}
\end{theorem}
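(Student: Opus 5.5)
The plan is to approximate each of the two terms of $F$ separately, using the signature of the gate path $\omega^X$ as an intermediate representation. Both terms are then realised inside a single affine Linear NCDE by stacking hidden-state blocks.

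\textbf{Step 1: write the affine solution in terms of the signature.} By Theorem~\ref{thm:linear_cde_solution} and variation of constants, the affine Linear NCDE \eqref{eq:affine_lin_cde} has the solution
\begin{equation}
    h_t = \Phi^A_{t,t_0} h_{t_0} + \int_{t_0}^t \Phi^A_{t,s} B_\theta\,\mathrm{d}\xi^X_s,
    \qquad
    \Phi^A_{t,s} = \sum_{n\geq 0} A^{\otimes n}\big((\omega^X)^n_{[s,t]}\big).
\end{equation}
Consequently, any linear readout $y_t = L^2_\theta h_t$ has the same form as an element of $\mathcal{F}$, with $\Psi$ and $\Phi$ replaced by linear functionals of the signature of $\omega^X$.

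\textbf{Step 2: approximate $\Psi$ and $\Phi$ by linear functionals of truncated signatures.} The gates are continuous, so the image of $\mathcal{K}$ together with all subintervals $[s,t]$ is compact, and the signature map is continuous on it.

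The time channels $\omega^{X,1}_t=t$ and $\omega^{X,2}_t=t^2$ serve a specific purpose. Each restricted path $\omega^X_{[s,t]}$ is re-based to start at $0$ by translation invariance, and these two channels make the signature injective. In particular, they recover both endpoints $s$ and $t$: the increments $t-s$ and $t^2-s^2$ determine $s$ and $t$, and hence also the initial value that translation would otherwise discard. Lemma~\ref{lem:injectivity-time} and Corollary~\ref{cor:universality-on-paths} then give truncation depth $N$ and linear functionals $\ell_\Psi,\ell_\Phi$ on $T^N(\mathbb{R}^{d_\omega})$, valued in $\mathbb{R}^{d_X}$ and $\mathbb{R}^{d_\xi}$ respectively, such that
\begin{equation}
    \big|\Psi(\omega^X_{[t_0,t]}) - \ell_\Psi(S^N(\omega^X)_{[t_0,t]})\big| < \epsilon',
    \qquad
    \big|\Phi(\omega^X_{[s,t]}) - \ell_\Phi(S^N(\omega^X)_{[s,t]})\big| < \epsilon',
\end{equation}
uniformly over $X\in\mathcal{K}$ and $t_0\le s\le t\le t_n$.

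\textbf{Step 3: realise the truncated signature exactly.} Take the hidden state to be copies of the truncated tensor algebra $T^N(\mathbb{R}^{d_\omega})$, one copy per input coordinate. Choose each $A^i$ to be the nilpotent "append letter $i$" matrix from \eqref{eq:sig_cde}, truncated at depth $N$. Because these matrices are nilpotent, the series in Step 1 terminates, and the flow acts as $\Phi^A_{t,s} v = v\otimes S^N(\omega^X)_{[s,t]}$.

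\begin{itemize}
    \item For the first term, use the initial map $L^1_\theta$ to place $X_0$ at level $0$ of the copies. Level $0$ of the flow then carries $X_0\otimes S^N_{[t_0,t]}$.
    \item For the second term, add a further block of $d_\xi$ copies and let $B_\theta$ inject $\mathrm{d}\xi^X$ at level $0$ of that block. The solution there is $\int_{t_0}^t S^N(\omega^X)_{[s,t]}\otimes \mathrm{d}\xi^X_s$.
\end{itemize}

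The readout $L^2_\theta$ applies $\ell_\Psi$ and $\ell_\Phi$ blockwise and sums the results. Note that $\Psi$ is paired with $X_0$ by a dot product, which is compatible with this construction.

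\textbf{Step 4: control the error.} For the first term, the error is at most $\epsilon'\sup_{\mathcal{K}}|X_0|$. For the second term, the error is at most $\epsilon'\sup_{\mathcal{K}}\|\xi^X\|_{1\text{-var}}$, which is finite because $\xi$ is continuous and $\mathcal{K}$ is compact. Choosing $\epsilon'$ small enough gives \eqref{eqn:dense_RKHS_main}.

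\textbf{Main obstacle.} The hard part is the uniformity in Step 2. The approximation of $\Phi$ has to hold simultaneously over all subintervals $[s,t]$ with one fixed set of coefficients. This requires two things:
\begin{enumerate}
    \item showing that the set $\{\omega^X_{[s,t]}\}$, re-parametrised and translated to start at $0$, is compact in $1$-variation and that the map $(X,s,t)\mapsto$ this restricted path is continuous;
    \item checking that the $t$ and $t^2$ channels give injectivity of the signature across different intervals, not just within a single one.
\end{enumerate}
My first attempt would be to map every interval onto $[0,1]$ by an affine reparametrisation. Under this map the $t$ and $t^2$ channels become $s+(t-s)u$ and $(s+(t-s)u)^2$, so their increments identify $(s,t)$. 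Stone--Weierstrass (Theorem~\ref{th:density-on-signature}) would then be applied on the compact set of resulting signatures.
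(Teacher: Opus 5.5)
Your route is the paper's: the explicit solution formula (Lemma~\ref{lem:lin_cde_sol}), a Stone--Weierstrass step on signatures of $\omega^X$ over subintervals (which the paper phrases as characterising the uniform closure of the RKHS built from the feature map $T_{[s,t]}(X)$), and an exact realisation of the truncated signature by nilpotent append-letter matrices, as in the NCDE universality proof. Steps~1, 3 and~4, and the $\Psi$ half of Step~2, are fine.

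The $\Phi$ half of Step~2 is not. Uniform approximation of $\Phi(\omega^X_{[s,t]})$ by $\ell_\Phi(S^N(\omega^X)_{[s,t]})$ over the whole simplex $t_0\le s\le t\le t_n$ is false in general, so your obstacle~(2) cannot be resolved as you state it. The increments $t-s$ and $t^2-s^2$ identify $(s,t)$ only when $t>s$. On the diagonal every signature equals $\mathbf{1}$, so $\ell_\Phi\circ S^N$ is the constant $\ell_\Phi(\mathbf{1})$ there, and nearly constant nearby. You use the time channels precisely so that $\Phi$ may depend on where the interval sits, so take $\Phi(\omega^X_{[s,t]})=s$. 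It is continuous in $(X,s,t)$, yet every linear functional of $S^N$ has uniform error at least $(t_n-t_0)/2$ on the diagonal. The paper's sketch (``therefore $s'=s$ and $t'=t$'') has the same blind spot. Relatedly, the two time channels recover $s$ and $s^2$ but not $\omega^{X,k}_s$ for $k\ge 3$, because the signature sees the other channels only up to translation. So $\Phi$ has to be read as a function of the re-based restriction, as you and the paper implicitly do.

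The repair is easy because the bad region is one the integral barely sees. Fix $\delta>0$ and a continuous cutoff $\chi_\delta$ with $\chi_\delta=0$ on $[0,\delta/2]$ and $\chi_\delta=1$ on $[\delta,\infty)$. Set $D=\mathcal{K}\times\{t_0\le s\le t\le t_n\}$ and $\sigma(X,s,t)=S(\omega^X)_{[s,t]}$. On $D$, the function $\chi_\delta(t-s)\,\Phi(\omega^X_{[s,t]})$ is continuous and constant on the fibres of $\sigma$:
\begin{itemize}
\item it vanishes when $t-s\le\delta/2$;
\item when $t-s>\delta/2$, the signature recovers $(s,t)$ and the re-based path.
\end{itemize}
It therefore descends to a continuous function on the compact set $\sigma(D)$, and Theorem~\ref{th:density-on-signature} gives $\ell_\Phi$ with uniform error $\epsilon'$ on all of $D$. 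The error in the second term is then at most $\epsilon'\sup_{\mathcal{K}}\|\xi^X\|_{1\text{-var}}$ plus $\sup_D|\Phi|$ times the largest $1$-variation of any $\xi^X$, $X\in\mathcal{K}$, over an interval of length $\delta$. This last quantity tends to $0$ as $\delta\to 0$: $\xi(\mathcal{K})$ is a compact set of continuous paths in $1$-variation, so its variation functions are equicontinuous. Choose $\delta$ first, then $\epsilon'$.

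Finally, drop the affine-reparametrisation detour. The $1$-variation continuity of $(X,s,t)\mapsto$ (reparametrised restriction) that it would require can fail, for instance when $\mathrm{d}\omega^X$ has a singular continuous part. Compactness of $\sigma(D)$ follows directly from $S_{[s,t]}=S_{[t_0,s]}^{-1}\otimes S_{[t_0,t]}$ and joint continuity of $(X,t)\mapsto S(\omega^X)_{[t_0,t]}$. Continuity of $(X,s,t)\mapsto\Phi(\omega^X_{[s,t]})$ is simply how the hypothesis on $\Phi$ must be read.
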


A complete proof of Theorem \ref{thm:linear_CDEs_closure} can be found in \citep[Appendix B]{cirone2024deepSSM}. 
Here, we give an overview of the argument. 
We begin by deriving an explicit formula for the solution of an affine Linear NCDE, following the same Picard iteration argument used to prove Theorem~\ref{thm:linear_cde_solution}. 
This formula shows that the solution is represented in terms of the signature of the transformed path $\omega^X$, with $\xi^X$ entering through weighted integrals against those features. 
In particular, it makes clear that the choice of $\omega^X$ is the main factor governing the model's expressivity.

\begin{lemma} \label{lem:lin_cde_sol}
    Let $\omega_t$ and $\xi_t$ be bounded variation paths of dimension $d_{\omega}$ and $d_{\xi}$, respectively. 
    For any choice of $A^i \in \mathbb{R}^{d_h \times d_h}$, and $B \in \mathbb{R}^{d_h \times d_{\xi}}$, the unique solution to 
    \begin{equation} \label{eqn:Z_CDE} 
            dh_t = \sum_{i=1}^{d_{\omega}} A^i h_t \mathrm{d}\omega^{i}_t  + B \mathrm{d}\xi_t,
    \end{equation}
    is 
    \begin{equation} \label{eqn:linear_on_sig}
        h_t = \sum_{I \in \mathcal{I}}S^I_{[t_0,t]}(\omega) A^I h_{t_0}  +  \sum_{I \in \mathcal{I}}A^I B  \int_{t_0}^t  S^I_{[s, t]}(\omega)\mathrm{d}\xi_s,
    \end{equation}
    where $\mathcal{I}$ is the set of multi-indices 
    \begin{equation}
        \mathcal{I} = \{\emptyset\} \cup \{I | I=(i_1, \ldots, i_k), k\in\mathbb{N},1\leq i_j \leq d_{\omega} \}
    \end{equation}
    with $|I|=|(i_1,\ldots,i_k)|=k$, $A^I=A^{i_k}\cdots A^{i_1}$ with $A^{\emptyset}$ being the identity, and $S^I_{[s, t]}(\omega)$ is the term in the signature of $\omega$ over $[s,t]$ corresponding to the multi-index $I$,
    \begin{equation}
        S^I_{[s,t]}(\omega) = \underbrace{\int\cdots\int}_{\substack{s\leq u_1<\cdots<u_k\leq t}} \mathrm{d}\omega^{i_1}_{u_1}\cdots \mathrm{d}\omega^{i_k}_{u_k},
    \end{equation}
    with $S^{\emptyset}=1$.
\end{lemma}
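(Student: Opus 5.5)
The plan is to reduce Lemma~\ref{lem:lin_cde_sol} to Theorem~\ref{thm:linear_cde_solution} together with a variation-of-constants (Duhamel) argument. Write the homogeneous flow as
\begin{equation}
\Phi_{t,s} = \sum_{I\in\mathcal{I}} S^I_{[s,t]}(\omega)A^I ,
\end{equation}
which is exactly the evolution operator of Theorem~\ref{thm:linear_cde_solution} with $V=\mathbb{R}^{d_\omega}$, $W=\mathbb{R}^{d_h}$ and $A(v)=\sum_i v^i A^i$. Indeed, $A^{\otimes k}(e_{i_1}\otimes\cdots\otimes e_{i_k}) = A^{i_k}\cdots A^{i_1}=A^I$, so expanding $\omega^k_{[s,t]}$ in the word basis gives the displayed sum. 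That theorem gives absolute convergence of the series (by factorial decay, Theorem~\ref{thm:factorial-decay}, with $p=1$), and it shows $t\mapsto\Phi_{t,s}h$ is the unique solution of the homogeneous equation started at $h$ at time $s$. Two further properties of $\Phi$ are needed. The first is the cocycle relation $\Phi_{t,s}=\Phi_{t,u}\Phi_{u,s}$, which follows from Chen's identity (Theorem~\ref{thm:chen}) applied termwise. The second is $\mathrm{d}_t\Phi_{t,s} = \sum_i A^i\Phi_{t,s}\,\mathrm{d}\omega^i_t$, which is the CDE that $\Phi$ solves.

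For existence, define the candidate
\begin{equation}
h_t = \Phi_{t,t_0}h_{t_0} + \int_{t_0}^t \Phi_{t,s}B\,\mathrm{d}\xi_s,
\end{equation}
which is \eqref{eqn:linear_on_sig} once the series defining $\Phi_{t,s}$ is interchanged with the Riemann--Stieltjes integral in $s$. This interchange is justified by uniform absolute convergence: $\|S^I_{[s,t]}(\omega)\|\le \|\omega\|_{1\text{-var}}^{|I|}/|I|!$ and $\xi$ has bounded variation. The first term solves the homogeneous equation. For the second term, use the cocycle relation to write $\Phi_{t,s}=\Phi_{t,t_0}\Phi_{t_0,s}$ for $s\le t$. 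Here $\Phi_{t_0,s}$ is understood as $\Phi_{s,t_0}^{-1}$; it exists because $\Phi_{s,t_0}$ is invertible, with inverse given by the flow of the time-reversed path, or equivalently by Chen's identity with the inverse signature. This gives $\int_{t_0}^t\Phi_{t,s}B\,\mathrm{d}\xi_s = \Phi_{t,t_0}G_t$ with $G_t=\int_{t_0}^t \Phi_{s,t_0}^{-1}B\,\mathrm{d}\xi_s$. Both factors have bounded variation, so the integration-by-parts (product) rule for Riemann--Stieltjes integrals gives
\begin{equation}
\mathrm{d}(\Phi_{t,t_0}G_t) = \sum_i A^i\Phi_{t,t_0}G_t\,\mathrm{d}\omega^i_t + \Phi_{t,t_0}\Phi_{t,t_0}^{-1}B\,\mathrm{d}\xi_t .
\end{equation}
The last term equals $B\,\mathrm{d}\xi_t$, so $h$ solves \eqref{eqn:Z_CDE}.

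Uniqueness is short. The difference of two solutions solves the homogeneous linear CDE with zero initial datum, so it vanishes by the uniqueness part of Theorem~\ref{thm:linear_cde_solution}.

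The main obstacle is the product rule: mutual jumps would spoil it, so I would either note that $\Phi$ is continuous, since $\omega$ is continuous, or restrict to continuous $\xi$ as in the setting of Theorem~\ref{thm:linear_CDEs_closure}. If the inverse-flow route proves awkward, the alternative I would try first is Picard iteration on the affine equation, mirroring Step~1 of the proof of Theorem~\ref{thm:linear_cde_solution}. Induction shows the $m$-th iterate equals the truncation of \eqref{eqn:linear_on_sig} to $|I|\le m$. The key identity is
\begin{equation}
\int_{t_0}^t A(\mathrm{d}\omega_u)\int_{t_0}^u S^I_{[s,u]}\,\mathrm{d}\xi_s = \int_{t_0}^t \sum_i S^{(I,i)}_{[s,t]}\,\mathrm{d}\xi_s\,A^i ,
\end{equation}
which is a Fubini swap over the simplex $\{s<u\}$; the iterates then converge by factorial decay.
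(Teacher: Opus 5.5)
Your proposal is correct, but your main route differs from the paper's. Your Picard fallback is exactly what the paper does.

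The paper never introduces the flow $\Phi_{t,s}$. It runs Picard iteration directly on the affine equation and shows by induction that the $n$-th iterate is \eqref{eqn:linear_on_sig} truncated to $|I|\le n$ in the first sum and $|I|\le n-1$ in the second. The induction uses Fubini on the simplex together with $\int_s^t S^I_{[s,u]}(\omega)\,\mathrm{d}\omega^i_u=S^{(I,i)}_{[s,t]}(\omega)$. It then passes to the limit by factorial decay, and proves uniqueness with the same zero-initial-datum argument you use.

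Your variation-of-constants route instead treats Theorem~\ref{thm:linear_cde_solution} as a black box. It makes the structure $h_t=\Phi_{t,t_0}h_{t_0}+\int_{t_0}^t\Phi_{t,s}B\,\mathrm{d}\xi_s$ explicit, with Chen's identity giving the cocycle property; this is the same algebra that lets affine flows be composed by an associative scan. The price is machinery the paper does not need: invertibility of $\Phi$, a Stieltjes product rule that requires continuity of $\omega$, and a series--integral interchange.

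You can in fact drop both the inverse and the product rule. Substitute your candidate into the equation and use $\Phi_{u,s}-I=\int_s^u\sum_iA^i\Phi_{r,s}\,\mathrm{d}\omega^i_r$. One Fubini swap over $\{t_0\le s<u\le t\}$ then gives $\int_{t_0}^t\sum_iA^i\,\mathrm{d}\omega^i_u\int_{t_0}^u\Phi_{u,s}B\,\mathrm{d}\xi_s=\int_{t_0}^t(\Phi_{t,s}-I)B\,\mathrm{d}\xi_s$. That closes the existence proof at once, and it is just the paper's Fubini step applied to the whole series rather than level by level. In either version, that Fubini step tacitly uses the same absence of common jumps of $\omega$ and $\xi$ that you flagged.
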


\begin{proof}
The proof uses Picard iteration, following the same approach as the proof of Theorem \ref{thm:linear_cde_solution}.
Let $h^{(0)}_t = h_{t_0}$ and for $n \geq 0$ define
\begin{equation}
h^{(n+1)}_t = h_{t_0} + \sum_{i=1}^{d_\omega} \int_{t_0}^t A^i h^{(n)}_s \, \mathrm{d}\omega^{i}_s + B \int_{t_0}^t \mathrm{d}\xi_s.
\end{equation}
Assume for some $n \geq 0$ that,
\begin{equation}
\label{eq:picard_induction_assumption}
h^{(n)}_t = \sum_{|I| \leq n} S^I_{[t_0, t]}(\omega) A^Ih_{t_0} + \sum_{|I| \leq n-1} A^I B \int_{t_0}^t S^I_{[s, t]}(\omega) \, \mathrm{d}\xi_s.
\end{equation}
Then, 
\begin{equation}
\label{eq:picard_induction2}
    \begin{aligned}
        h^{(n+1)}_t = h_{t_0} &+ \sum_{i=1}^{d_\omega}  A^i \int_{t_0}^t \left(\sum_{|I| \leq n} S^I_{[t_0, s]}(\omega) A^Ih_{t_0}\right)\mathrm{d}\omega^{i}_s \\
        &+ \sum_{i=1}^{d_\omega}A^i \int_{t_0}^t\left(\sum_{|I| \leq n-1} A^I B \int_{t_0}^s S^I_{[u, s]}(\omega) \, \mathrm{d}\xi_u\right) \, \mathrm{d}\omega^{i}_s + B \int_{t_0}^t \mathrm{d}\xi_s.
    \end{aligned}
\end{equation}
For the first term,
\begin{equation}
\sum_{i=1}^{d_\omega}  A^i \int_{t_0}^t \left(\sum_{|I| \leq n} S^I_{[t_0, s]}(\omega) A^I h_{t_0}\right)\mathrm{d}\omega^{i}_s
=
\sum_{i=1}^{d_\omega}\sum_{|I| \leq n} A^i A^I h_{t_0} \int_{t_0}^t S^I_{[t_0, s]}(\omega)\mathrm{d}\omega^{i}_s.
\end{equation}
By the recursive definitions of $A^I$ and $S^I$,
\begin{equation}
\sum_{i=1}^{d_\omega}\sum_{|I| \leq n} A^i A^I h_{t_0} \int_{t_0}^t S^I_{[t_0, s]}(\omega)\mathrm{d}\omega^{i}_s
=
\sum_{1 \leq |I| \leq n+1} S^I_{[t_0, t]}(\omega) A^I h_{t_0}.
\end{equation}

For the second term,
\begin{equation}
\sum_{i=1}^{d_\omega} A^i \int_{t_0}^t \left(\sum_{|I| \leq n-1} A^I B \int_{t_0}^s S^I_{[u, s]}(\omega) \, \mathrm{d}\xi_u\right)\mathrm{d}\omega^{i}_s
=
\sum_{i=1}^{d_\omega}\sum_{|I| \leq n-1} A^i A^I B \int_{t_0}^t \int_{t_0}^s S^I_{[u, s]}(\omega)\,\mathrm{d}\xi_u\,\mathrm{d}\omega^i_s.
\end{equation}
By Fubini's theorem,
\begin{equation}
\sum_{i=1}^{d_\omega}\sum_{|I| \leq n-1} A^i A^I B \int_{t_0}^t \int_{t_0}^s S^I_{[u, s]}(\omega)\,\mathrm{d}\xi_u\,\mathrm{d}\omega^i_s
=
\sum_{i=1}^{d_\omega}\sum_{|I| \leq n-1} A^i A^I B \int_{t_0}^t \int_s^t S^I_{[s, u]}(\omega)\,\mathrm{d}\omega^i_u\,\mathrm{d}\xi_s.
\end{equation}
Again using the recursive definitions of $A^I$ and $S^I$,
\begin{equation}
\sum_{i=1}^{d_\omega}\sum_{|I| \leq n-1} A^i A^I B \int_{t_0}^t \int_s^t S^I_{[s, u]}(\omega)\,\mathrm{d}\omega^i_u\,\mathrm{d}\xi_s
=
\sum_{1 \leq |I| \leq n} A^I B \int_{t_0}^t S^I_{[s, t]}(\omega)\,\mathrm{d}\xi_s.
\end{equation}

Therefore,
\begin{equation}
h^{(n+1)}_t = \sum_{|I| \leq n+1} S^I_{[t_0, t]}(\omega) A^Ih_{t_0} + \sum_{|I| \leq n} A^I B \int_{t_0}^t S^I_{[s, t]}(\omega) \, \mathrm{d}\xi_s.
\end{equation}
Since \eqref{eq:picard_induction_assumption} is true for $n=0$, it holds for all $n\geq0$. 
Let $M=\max_i\{\|A^i\|_{\operatorname{op}}\}$.
By \citep[Theorem 2.2.1]{Lyons1994DIFFERENTIALED}, there exists finite $C>1$ such that \begin{equation}\label{eq:young_factorial2} 
\big\|S^I_{[t_0, t]}(\omega)\big\| \le C\frac{\|\omega\|^{|I|}_{1\text{-var};[t_0,t]}}{|I|!}.
\end{equation}
Hence,
\begin{equation}
    \|S^I_{[t_0, t]}(\omega) A^Ih_{t_0}\| \leq CM^{|I|}\frac{\|\omega\|^{|I|}_{1\text{-var};[t_0,t]}}{|I|!}\|h_{t_0}\|
\end{equation}
and
\begin{equation}
    \left\|A^I B \int_{t_0}^t S^I_{[s, t]}(\omega) \, \mathrm{d}\xi_s\right\| \leq CM^{|I|}\|B\|_{\operatorname{op}}\frac{\|\omega\|^{|I|}_{1\text{-var};[t_0,t]}}{|I|!}\|\xi\|_{1\text{-var};[t_0,t]},
\end{equation}
The factorial decay means $\lim_{n\to\infty}h^{(n)}_t$ converges uniformly to $h_t$ where
\begin{equation}
    h_t = \sum_{I \in \mathcal{I}}S^I_{[t_0,t]}(\omega) A^I h_{t_0}  +  \sum_{I \in \mathcal{I}}A^I B  \int_{t_0}^t  S^I_{[s, t]}(\omega)\mathrm{d}\xi_s.
\end{equation}
Continuity of the Riemann-Stieltjes integral for bounded variation paths allows the limit to be passed through the integral, so $h_t$ is a solution to \eqref{eqn:Z_CDE}. For uniqueness, suppose there are two solutions $h_t$ and $\tilde{h}_t$ to the CDE. Then by linearity, $g_t = h_t - \tilde{h}_t$ satisfies
\begin{equation}
g_t = \int_{t_0}^t \sum_{i=1}^{d_{\omega}} A^i g_s \mathrm{d}\omega^{i}_s
\end{equation}
with $g_{t_0}=0$. Therefore, $g_t=0$ and the solution is unique.
\end{proof}

Lemma~\ref{lem:lin_cde_sol} shows that an affine Linear NCDE is not a simple linear recurrence. 
Rather, its solution is a linear map on features built from iterated integrals of the transformed path $\omega^X$, together with additional features built from integrating those terms against $d\xi^X$. 
This helps explain why Linear NCDEs are maximally expressive. 
Although the dynamics are linear in the hidden state, the hidden state itself is built from highly non-linear path features generated recursively through repeated interactions with the driving path. 
Compared to classical signature methods, a Linear NCDE learns to construct the weighted-signature features most relevant to the task, rather than relying on a fixed truncated collection of signature terms.
We now package these weighted-signature features into a single feature map.

\begin{definition}
    Let $\mathbb{W}_{d_X, d_{\omega}, d_{\xi}}$ be the set of words in the alphabet 
    \begin{equation}
        \mathcal{A}_{d_X,d_{\omega},d_{\xi}} = \{  \boldsymbol{e}_i \}_{i=1}^{d_X} \cup 
        \{  \boldsymbol{\epsilon}^{\xi}_j \}_{j=1}^{d_{\xi}} \cup
        \{  \boldsymbol{\epsilon}^{\omega}_k \}_{k=1}^{d_{\omega}}
    \end{equation}
    For fixed $\omega$ and $\xi$, define $T(X): [t_0,t_n]^2 \rightarrow l^2(\mathbb{W}_{d_X, d_{\omega}, d_{\xi}}) \subseteq T((\mathcal{A}_{d_X,d_{\omega},d_{\xi}}))$
    as the unique solution to:
    \begin{equation} \label{app:eqn:T_CDE} \begin{aligned}
        T_{[s,t]}(X) = & ~ \sum_{i=1}^{d_X} X_{s}^i \boldsymbol{e}_i
        + \sum_{j=1}^{d_{\xi}} \xi^{X, j}_t \boldsymbol{\epsilon}^{\xi}_j
        + \sum_{k=1}^{d_{\omega}}  \int_{s}^t T_{[s,u]}(X) ~ \mathrm{d}\omega^{X, k}_u
        \otimes\boldsymbol{\epsilon}^{\omega}_k
    \end{aligned} \end{equation}
\end{definition}

This is similar to the tensor-valued CDE representation of the signature seen in Section \ref{sec:cde_def},
\begin{equation}
    \mathrm{d}S_{[t_0,s]}(\omega^X) = S_{[t_0,s]}(\omega^X) \otimes \mathrm{d}\omega^X_s,
\end{equation}
with the addition of two terms to track $X_s$ and $\xi^{X}$ \citep{salvi2021signature}. 
We could also understand $T(X)_{[s,t]}$ as a sub-tensor of 
\begin{equation}
    X_{s}\otimes S_{[s,t]}([\omega^X,\xi^X]),
\end{equation}
but in doing this we would have to explicitly ignore most of the terms in this tensor.
The CDE \eqref{app:eqn:T_CDE} does exactly this, but implicitly. 
In any case, the subtensor view shows that $T: \mathcal{X}(d_X) \times [t_0,t_n]^2 \to l^2(\mathbb{W}_{v, v_{\omega}, v_{\xi}})$ is well defined and continuous.
\\ \\
Having defined a feature map $T(\cdot)_{[s,t]}$ with values in the Hilbert space $l^2(\mathbb{W}_{d_X, d_{\omega}, d_{\xi}})$, it is possible to associate to it a Reproducing Kernel Hilbert Space (RKHS) \citep{BerlinetThomasAgnan2004}, where the kernel is induced by the $l^2$ product. We denote the RKHS by $\mathcal{H}^{\omega,\eta}_t$. Proposition B.10 in \cite{cirone2024deepSSM} demonstrates that linear maps of $h_t$ are in the uniform closure of $\mathcal{H}^{\omega,\eta}_t$, and Proposition B.11 allows us to characterise the closure as
\begin{equation}
    F \in \left\{
            (X,t) \mapsto \Psi(\omega^{X}_{[t_0,t]}) \cdot X_0 + \int_{t_0}^t \Phi(\omega^{X}_{[s,t]}) \cdot \mathrm{d}\xi^{X}_{s} 
        \right\}.
\end{equation}
The proof of Proposition B.11 relies on the signature being able to separate points in the image of the map $(X,s,t) \mapsto \omega^{X}_{[s,t]}$. 
By Lemma \ref{lem:separate-signatures}, the signature separates the points $\omega^X_{[s,t]}$ from $\tilde\omega^X_{[s,t]}$ as $\omega^X$ is augmented to include time. In order to separate points of the form $\omega^X_{[s,t]}$ from $\tilde\omega^X_{[s',t']}$, we include $t^2$ as a channel in $\omega^X$, as then $S_{[s,t]}(\omega^X)=S_{[s,t]}(\tilde\omega^X)$ implies that
\begin{equation}
    \begin{aligned}
        \int_s^t d(r^2) &= t^2 - s^2 = (t')^2 - (s')^2 = \int_{s'}^{t'} d(r^2), \\
        \int_s^t d(r) &= t - s = t' - s' =  \int_{s'}^{t'} d(r).
    \end{aligned}
\end{equation}
Therefore, $s'=s$ and $t'=t$.
\\ \\
The proof of Theorem \ref{thm:linear_CDEs_closure} concludes by showing that linear maps on $h_t$ are dense in the uniform closure of $\mathcal{H}^{\omega,\eta}_{[t_0,t_n]}$, using the same strategy as the proof of Theorem \ref{thm:universal_ncde}, that NCDEs are maximally expressive \citep{kidger2022neuraldifferentialequations}. 
\\ \\
Theorem \ref{thm:linear_CDEs_closure} can be seen as a generalisation to generic functions $\omega$ and $\xi$ of \citep[Theorem 7]{li2022approximation}. 
That result considered the case $\omega^X_t = t$ and $\xi^X_t = \int_{t_0}^t X_s \mathrm{d}s$, which is the setting of S4, S5, and the LRU \citep{gu2021efficiently, S5, orvieto2023resurrecting}. 
In this setting, the only information contained in $\omega_{[s,t]}$ is the increment $t-s$. 
Therefore, \eqref{eq:lin_cde_F} reduces to 
\begin{equation}
    \left\{
                (X,t) \mapsto \psi(t-t_0) + \int_{t_0}^t \phi(t - s) \cdot X_s \mathrm{d}s
    \right\},
\end{equation}
which is the set of linear filters on the input. 
\\ \\
Now consider $\omega^X_t = X_t$. The first term in the function class 
\begin{equation}
    \left\{\Psi(X_{[t_0,t]}) + \int_{t_0}^t \Phi(X_{[s,t]}) \cdot \mathrm{d}\xi^{X}_{s}\right\}
\end{equation}
is already enough to establish that the output $L^2_{\theta}h_t$ is a non-linear function of all previously seen inputs $X_{[t_0,t]}$. However, the term $\Psi(X_{[t_0,t]})$ is only non-trivial when $h_{t_0}\ne 0$. A case similar to Mamba is $h_{t_0}=0$ and $\xi^X_t = \int_{t_0}^t X_s \mathrm{d}s$. Here, we can approximate arbitrarily well outputs of the form
\begin{equation}
 \left\{
            (X,t) \mapsto  \int_{t_0}^t \Phi(X_{[s,t]}) \cdot X_s \mathrm{d}s
\right\}
\label{eq:lcde_integral}
\end{equation}
where $\Phi$ is any continuous function of the input path, restricted to the portion $[s,t]$. This clearly shows that dense Linear NCDEs are capable of context-dependent filtering: the output is again a linear combination of previously seen inputs, but weights are not predetermined as in linear RNNs like S4. 
However, in S4D and Mamba, the $A^i$ are constrained to be diagonal, and this severely restricts the expressivity.
\begin{theorem}
\label{thm:diagonal_expr}
       If the $A^i$ are diagonal, then the requirements $\omega^{X,1}_t \equiv t$, $\omega^{X,2}_t \equiv t^2$ can be dropped and the existence result only holds with
        \begin{equation}\label{app:eqn:main_poly_family}
        F \in \left\{
                (X,t) \mapsto \psi(\omega^{X}_t) \cdot X_{t_0} +  \int_{t_0}^t \phi(\omega^{X}_t - \omega^{X}_s) \cdot \mathrm{d}\xi^{X}_{s}
        \right\}
        \end{equation}
        for continuous functions $\psi:\mathbb{R}^{d_{\omega}} \rightarrow \mathbb{R}^{d}$ and $\phi: \mathbb{R}^{d_{\omega}} \rightarrow \mathbb{R}^{d_{\xi}}$.
\end{theorem}
\begin{proof}
    See \citep[Appendix B]{cirone2024deepSSM}.
\end{proof}
Taking $\omega^X_t=X_t$ and $\xi^X_t = \int_{t_0}^t X_s ~ \mathrm{d}s$, dense matrices filter based on the entire trajectory $X_{[s,t]}$,
\begin{equation}
    \int_{t_0}^t \Phi(X_{[s,t]}) \cdot X_s ~\mathrm{d}s
\end{equation}
where diagonal matrices restrict you to comparing two elements of the input sequence,
\begin{equation}
    \int_{t_0}^t \phi(X_t - X_s) \cdot X_{s} ~ \mathrm{d}s
\end{equation}
The key difference is that dense matrices allow hidden coordinates to interact, which lets the recurrence recursively build higher-order features of the path. 
When the $A^i$ are diagonal, each hidden coordinate evolves in isolation, so these recursive interactions are lost. 
As a result, diagonal models cannot generate the same class of higher-order path features as dense Linear NCDEs. 
A smart choice of gating functions $\omega$ and $\xi$ can still improve the resulting non-linear filtering strategy, but it cannot remove this fundamental processing discrepancy relative to the dense setting. 
\\ \\
To give a simplistic example of the difference in expressivity between diagonal and dense state-transition matrices, consider a stream of bits
\begin{equation}
x_1,x_2,\dots ,\; x_n\in\{0,1\},
\end{equation}
where we want to predict the parity label defined by 
\begin{equation}
p_n=S_n \bmod 2 \in\{0,1\}, \quad S_n = \sum_{k=1}^nx_k.
\end{equation}
Whenever a new bit is $1$ the label flips; if the bit is $0$ the label stays the same. 
Taking a diagonal Linear NCDE with a hidden dimension of $2$ and $\omega^x_{k+1}-\omega^x_k=x_{k+1}$, then 
\begin{equation}
h_{n+1}=\exp\left(\begin{bmatrix}a_1&0\\0&a_2\end{bmatrix} x_{n+1}\right)\,h_n,
\end{equation}
and
\begin{equation}
h_n^{i}=h_0^{i}\exp(a_i S_n),\qquad i=1,2.
\end{equation}
With a linear read‑out $r=(r_1,r_2)^\top$ followed by a monotone activation $\phi$ (such as tanh, ReLU, sigmoid):
\begin{equation}
\hat p_n=\phi\Bigl(r^\top h_n\Bigr)
        =\phi\Bigl(r_1h^1_0 e^{a_1 S_n}+r_2h^2_0 e^{a_2 S_n}\Bigr)
        =\phi\Bigl(f(S_n)\Bigr).
\label{eq:sum-of-exp}
\end{equation}
Since $f(S)$ has at most one turning point, and $\phi$ is monotone, $\hat{p}_n$ can cross any chosen threshold at most twice. 
However, the true label $p_n$ flips every time $S_n\mapsto S_n+1$. 
Hence, no diagonal $2\times2$ Linear NCDE can realise parity on arbitrarily long input.
Similarly, for a hidden dimension of $n$, $f(S)$ can have at most $n-1$ turning points, so no diagonal Linear NCDE with a fixed hidden dimension can realise parity on arbitrarily long input.
If you replace $A$ with
\begin{equation}
A=\begin{pmatrix}0&\pi\\-\pi&0\end{pmatrix},
\end{equation}
then
\begin{equation}
\exp(A x_{n+1})=
\begin{pmatrix}
1&0\\0&1
\end{pmatrix},
\end{equation} 
when $x_{n+1}=0$ and
\begin{equation}
\exp(A x_{n+1})=
\begin{pmatrix}
-1&0\\0&-1
\end{pmatrix},
\end{equation}
when $x_{n+1} = 1$. Thus
\begin{equation}
h_{n+1}=(-1)^{x_{n+1}}h_n
\end{equation}
and
\begin{equation}
h_n=(-1)^{S_n}h_0.
\end{equation}
Taking $r=(1,0)^\top$, $h_0^{(1)}=1$, and $\phi(s)=\bigl(1-\operatorname{sign}(s)\bigr)/2$, 
\begin{equation}
\hat p_n=\frac{1-\operatorname{sign}\bigl((-1)^{S_n}\bigr)}{2}=S_n\bmod 2=p_n.
\end{equation}
Therefore, a dense Linear NCDE can solve parity exactly with a hidden dimension of 2.
\\ \\
In practice, it is possible to regain expressivity without sacrificing the computational advantages of diagonal matrices through stacking, where a new Linear NCDE is driven by the solution of a previous Linear NCDE. 
A formal statement and proof of the recovery of expressivity is given in Appendix C of \cite{cirone2024deepSSM}. 
An important corollary of the results on stacking is that linear RNNs, such as S4, require non-linearities in-between layers to recover expressivity, whereas selective SSMs like Mamba only need linear mixing layers. 
Intuitively, stacking recovers the mixing between the hidden dimensions which is crucial for the expressiveness of dense Linear NCDEs. 

\subsection{Experiments}
\label{sec:empirical}

The first task considered is based on the toy dataset from Section \ref{sec:log_ncde_experiments}, where the aim is to predict terms in the input path’s signature. 
The dataset's objective aligns with the proofs of Theorems \ref{thm:max_prob_lin_ncde} and \ref{thm:linear_CDEs_closure}, which characterise the expressivity using the path's signature.
We use two datasets with dimensions $2$ and $3$, respectively. 
The increment in each channel at each step is an integer-rounded sample from a standard Normal distribution,
\begin{equation}
    p(n) = \int_{n-0.5}^{n+0.5}\frac{1}{\sqrt{2\pi}}e^{-\frac{1}{2}x^2}\mathrm{d}x,
\end{equation}
where $n\in\mathbb{Z}$.
The 2D dataset's target is the area integral 
\begin{equation}
    \label{eq:sig_pred_2}
    \int_0^1\int_0^vd X^1_u dX^2_v,
\end{equation}
and the 3D dataset's target is a volume integral 
\begin{equation}
    \label{eq:sig_pred_3}
    \int_0^1\int_0^w\int_0^v dX^1_u dX^2_v dX^3_w.
\end{equation}
We consider seven models on this dataset:
\begin{itemize}
    \item $(1, 2)$: A single S4D or S6 recurrence with a linear readout,
    \item $(3, 4)$: Two stacked S4D or S6 recurrences with a linear mixing layer in-between and a linear readout,
    \item $(5, 6)$: Two stacked S4D or S6 recurrences with a linear mixing layer and ReLU in-between and a linear readout,
    \item $7$: A Linear NCDE with gates $\omega^{X}_t = \xi^{X}_t = (t,X_t)$ and a linear readout.
\end{itemize}
All of the SSMs have trainable matrices in their recurrences, whereas the Linear NCDE is using fixed random matrices. 
All models use a hidden dimension of $256$, with the SSMs using a state dimension of $256$. 
The SSMs are trained using gradient descent with a batch size of $32$ and Adam with a learning rate of $10^{-4}$ \citep{kingma2017adam}. 
The output from the Linear NCDE's recurrence is obtained using the Tsit5 adaptive ODE solver, with an absolute and relative tolerance of $10^{-2}$ \citep{tsitouras2011runge}. 
The Linear NCDE's linear readout is obtained via ordinary least squares.
\\ \\
The second task we consider is the $A_5$ benchmark from \citet{merrill2024illusion}. 
It tests models on state-tracking, a crucial ability for tasks involving permutation composition, such as chess. 
The dataset comprises sequences from the group of even permutations on five elements, $A_5$, where the target is the cumulative composition of all preceding permutations. 
Datasets vary by sequence length, ranging from $3$ to $20$.
The models are evaluated on the number of stacked layers required to achieve greater than $90\%$ validation accuracy.
\\ \\
We consider six models on the $A_5$ benchmark: a Linear NCDE, a diagonal Linear NCDE, an LSTM, a Transformer, S4D, and Mamba. 
The LSTM, Transformer, S4D, and Mamba use a hidden dimension of $1024$. 
LSTM uses direct stacking whereas the other baseline models use stacked blocks consisting of a sequence model, a GLU layer \citep{dauphin2017language}, and layer normalisation \citep{ba2016layer}.
The Linear NCDEs take 
\begin{equation}
    \frac{\omega^{X}_{t_{j+1}}-\omega^{X}_{t_{j}}}{t_{j+1}-t_j} = (1, X_{t_j}) 
\end{equation}
and $\xi^X_t=0$.
Furthermore, they use $1024$ trainable parameters per recurrence, corresponding to a hidden dimension of $1024$ for the diagonal Linear NCDE and $32$ for the Linear NCDE.
Models are trained using a token-tagging loss for $1{,}000{,}000$ steps with a batch size of $256$. 
For all sequence lengths, a small batch of sequences of length $2$ are included at each training step to aid convergence. 
All models have trainable matrices in their recurrences and use Adam with weight decay as the optimiser \citep{kingma2017adam}, alongside a linear warm-up followed by cosine annealing with a minimum learning rate of $10^{-5}$ and a maximum learning rate of $10^{-3}$. 
Additionally, all models use dropout with a rate of $0.1$ and a trainable embedding layer \citep{srivastava2014dropout}.

\subsection{Results}
\label{sec:ssm_results}

\begin{figure}
    \includegraphics[width=\linewidth]{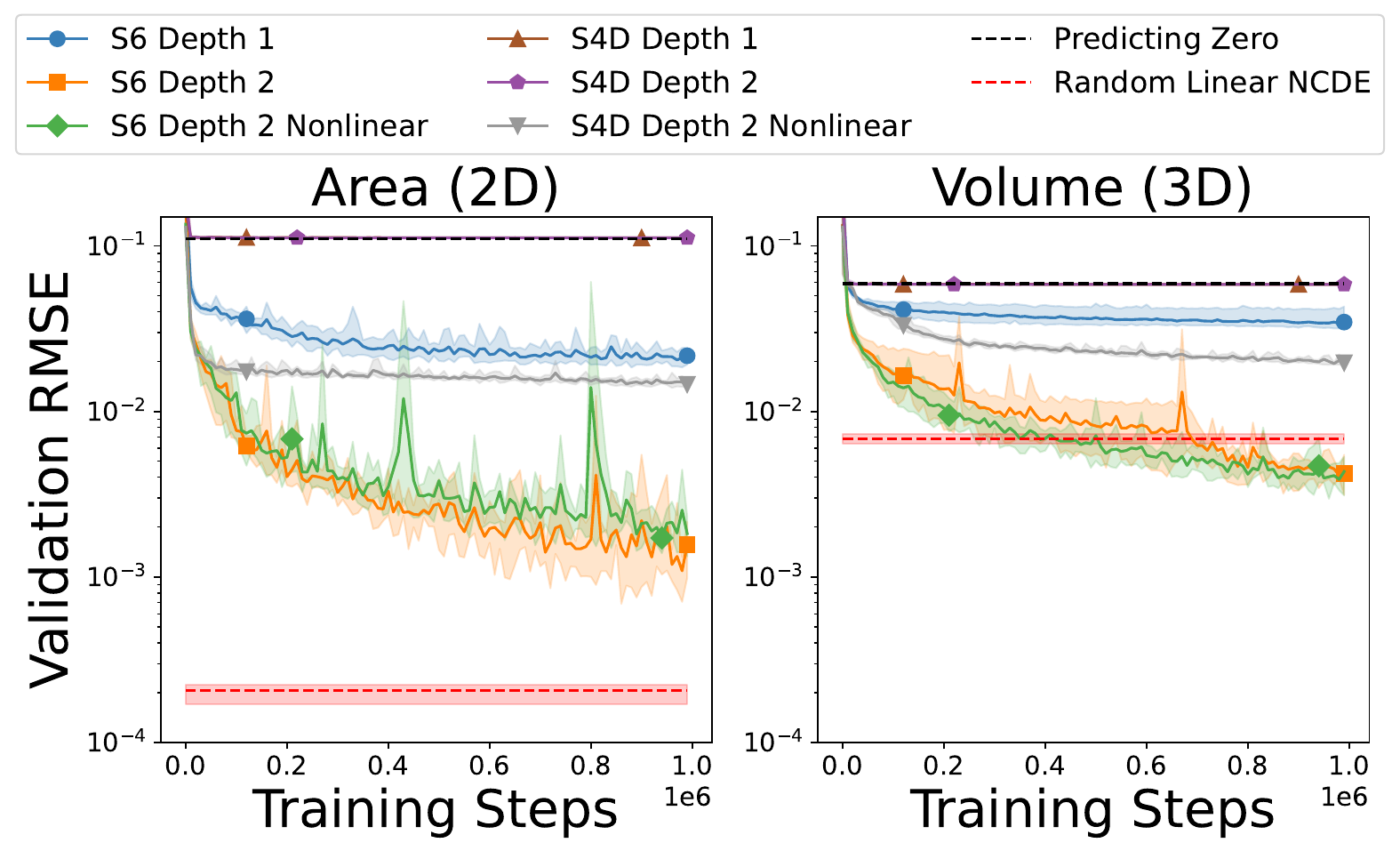}
    \caption{Comparison of the Linear NCDE, S4D, and S6 on the two signature prediction tasks, \eqref{eq:sig_pred_2} and \eqref{eq:sig_pred_3}. For each model, we plotted the mean and range of the validation root mean square error over 5 independent runs.
    }
    \label{fig:exp_toy}
\end{figure}



Figure \ref{fig:exp_toy} presents the results on the signature prediction task. 
These results empirically demonstrate a number of the theoretical results presented in the previous sections. 
Firstly, as discussed in Section \ref{sec:expressiveness}, recurrences which are linear in the input, such as S4D, require a non-linearity in-between the recurrent layers to perform well. 
Furthermore, as stated in Theorem \ref{thm:diagonal_expr}, even if the recurrence is non-linear in the input, such as Mamba's recurrence S6, the expressivity of models with diagonal matrices is improved by stacking. 
Additionally, the inclusion of the non-linearity in-between the S6 layers does not improve performance, as the recurrences themselves are expressive enough. 
Finally, as stated in Theorem \ref{thm:max_prob_lin_ncde}, dense matrices can achieve strong expressivity with random initialisation, no stacking, and only a trainable linear readout.
\\
\begin{figure}
    \includegraphics[width=\linewidth]{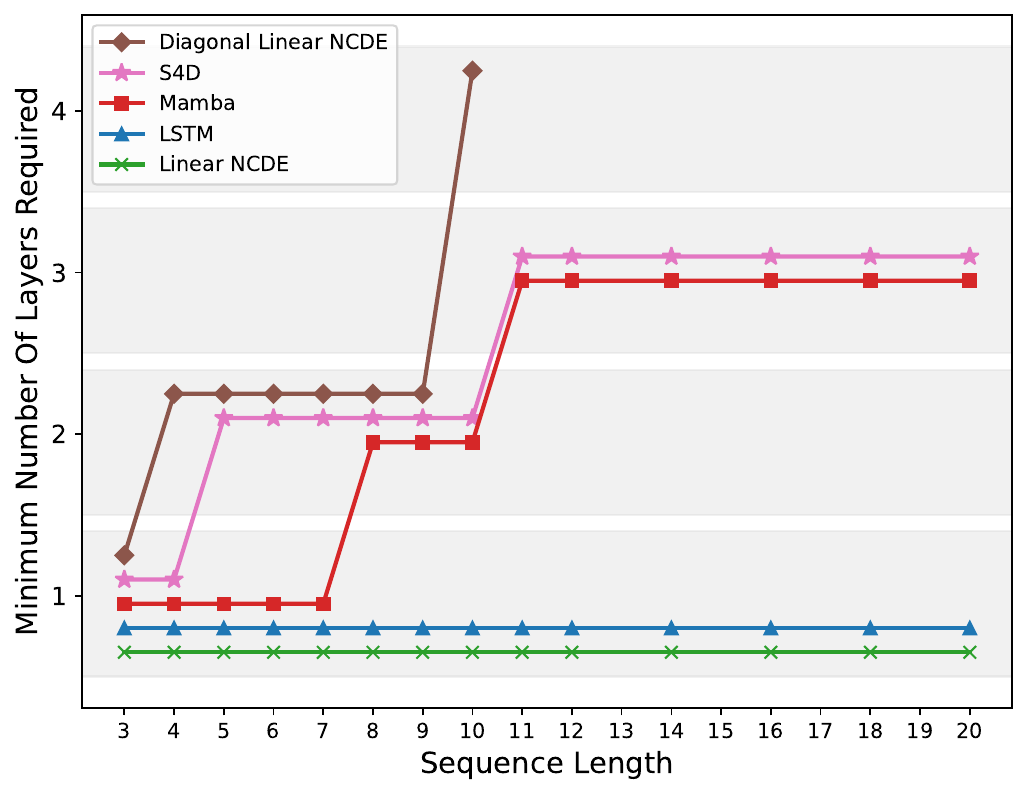}
    \caption{Comparison of a diagonal Linear NCDE, S4D, Mamba, LSTM, and Linear NCDE on the $A_5$ benchmark. For each sequence length, the plot shows the minimum number of blocks required to achieve at least $90\%$ validation accuracy, with each shaded band corresponding to a number of blocks. Missing points mean the model did not achieve at least $90\%$ validation accuracy with $4$ blocks or less.
    }
    \label{fig:exp_a5}
\end{figure}


Figure \ref{fig:exp_a5} is a plot of the results on the $A_5$ benchmark. The figure shows that the number of blocks S4D, Mamba, and the Linear NCDE with diagonal state-transition matrices require to achieve greater than $90\%$ validation accuracy grows with the sequence length. 
On the other hand, the LSTM and Linear NCDE are able to achieve greater than $90\%$ validation accuracy for all lengths considered using only one block. 
These empirical results further validate Theorem \ref{thm:diagonal_expr}: there exists a gap in expressivity between diagonal and dense state-transition matrices, with stacking required to recover expressivity. 
Furthermore, they provide empirical evidence that even for simple state-tracking problems, the number of stacked blocks required to recover expressivity can grow quickly with sequence length.

\subsection{Limitations}

As shown in Section \ref{sec:lncde_exp}, replacing the non-linear vector field of an NCDE with a linear vector field leads to substantial improvements in time per training step. Furthermore, as shown theoretically in Section \ref{sec:expressiveness} and empirically in Section \ref{sec:ssm_results}, using dense state-transition matrices $A^i_{\theta}$ provides more expressivity than the diagonal state-transition matrices of S4D and Mamba. However, there remains a core limitation of Linear NCDEs. The number of parameters and computational cost scales as $\mathcal{O}(d_h^3)$, where $d_h$ is the hidden dimension. This is in contrast to diagonal state-transition matrices, where they scale as $\mathcal{O}(d_h^2)$. This makes Linear NCDEs infeasible in large models, where hidden dimensions can reach $8192$ \citep{touvron2023llamaopenefficientfoundation}. 
\\ \\
This limitation motivates the following question: Do there exist structured matrices $A^i_{\theta}$ which are computationally more efficient than dense matrices, whilst having the same theoretical expressivity and comparable empirical performance? It is this question we aim to answer in Section \ref{sec:slice}.

\section{Structured Linear NCDEs}
\label{sec:slice}

\subsection{Introduction}
\label{sec:slice_intro}

Structured Linear Neural Controlled Differential Equations (SLiCEs) are Linear NCDEs
\begin{equation}
\label{eq:slice}
h_t = h_{t_0} + \int_{t_0}^t \sum_{i=1}^{d_\omega} A^i_{\theta} h_s\mathrm{d}\omega^X_s,
\end{equation}
where each $A^i_{\theta}$ is constrained to have a particular structure. 
While using diagonal matrices is computationally efficient, Section \ref{sec:ssms} showed that this choice theoretically limits expressivity and empirically hurts performance on state-tracking tasks.  
This section explores four more powerful alternatives, two inspired by prior work and two that are novel.

\begin{itemize}
    \item Block-diagonal: The matrix is composed of smaller, dense blocks along the diagonal,
    \begin{equation}
    A^i_{\theta} = \mathrm{BlockDiag}\big(B^i_{\theta,1}, B^i_{\theta,2}, \dots, B^i_{\theta,k}\big),
    \end{equation}
    where each $B^i_{\theta,j} \in \mathbb{R}^{b_j \times b_j}$ is a trainable dense block, $k$ is the number of blocks, and $d_h=\sum_{j=1}^k b_j$. This structure is inspired by the input-dependent block-diagonal linear RNN \citep{fan-etal-2024-advancing}
    
    \item Diagonal-plus-low-rank (DPLR): The matrix is the sum of a diagonal matrix and a low-rank matrix,
    \begin{equation}
        A^i_{\theta} = D^i_{\theta} + u^{i}_{\theta}\big(v^{i}_{\theta}\big)^\top,
    \end{equation}
    where $D^i_{\theta}$ is diagonal and $u^i_{\theta}, v^i_{\theta} \in \mathbb{R}^{d_h \times r}$ with rank $r < d_h$. DPLR structures are used by DeltaNet, DeltaProduct, and Gated DeltaNet \citep{schlag2021linear, yang2024parallelizing, siems2025deltaproductimprovingstatetrackinglinear, yang2025gateddeltanetworksimproving}.
    
    \item Sparse: Each $A^i_{\theta}$ is a sparse matrix with $\mathcal{O}(d_h^{1+\epsilon})$ non-zero entries for some $0<\epsilon<1$, sampled at random via a Bernoulli mask.
    
    \item Walsh--Hadamard: The matrix is the product of a fixed matrix $H$ and a trainable diagonal matrix $D^i_{\theta}$,
    \begin{equation}
        A^i_{\theta} = H D^i_{\theta}.
    \end{equation}
    where $H$ is a Hadamard matrix of order $d_h$ (entries $\pm 1$ with mutually orthogonal columns and rows).
\end{itemize}
Figure \ref{fig:slice} is a visual comparison of the structures.
\begin{figure}
    \centering
    \includegraphics[width=\linewidth]{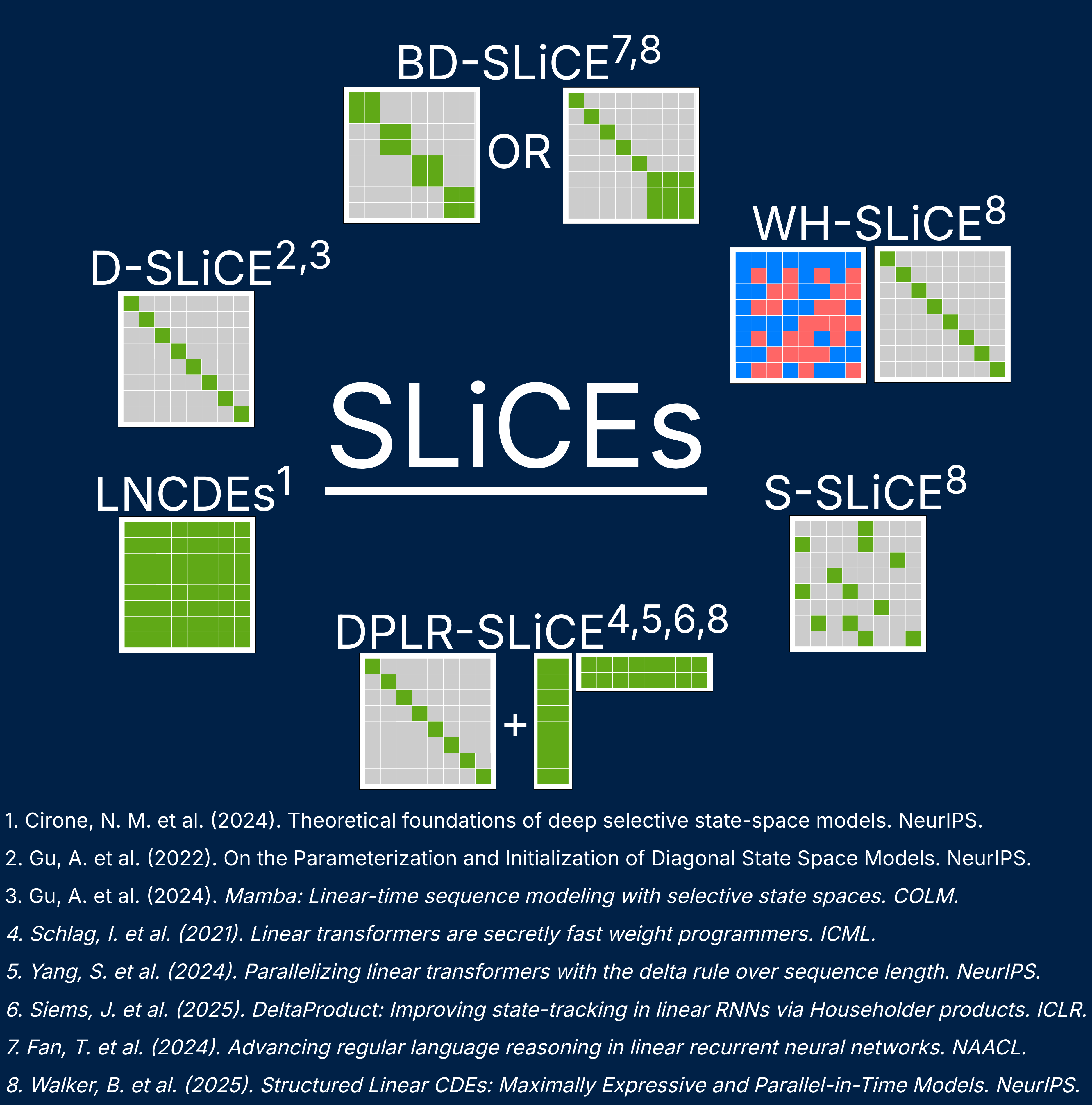}
    \caption{A visualisation of the structure for one $A^i_{\theta}$ when $d_h=8$ in a diagonal (D), block-diagonal (BD), Walsh--Hadamard (WH), sparse (S), and diagonal-plus-low-rank (DPLR) SLiCE. Green entries represent trainable parameters, grey represent parameters fixed at zero, blue represent parameters fixed at $1$, and red represent parameters fixed at $-1$.}
    \label{fig:slice}
\end{figure}
Compared to dense matrices, all four proposed structures significantly reduce both the parameter count and the computational cost of a recurrent update. 
Additionally, the Linear NCDE framework can be used to prove that unlike diagonal matrices, all four structures possess maximal probabilistic expressivity, as will be discussed further in Section \ref{sec:slice_expressiveness}. 
Empirical results in Section \ref{sec:slice_exp} demonstrate that all four structures successfully perform state-tracking on the $A_5$ benchmark used in Section \ref{sec:empirical}. 
Furthermore, the block-diagonal and DPLR SLiCE establish a new state-of-the-art among parallel-in-time models for length generalisation on regular language tasks \citep{deletang2023neural}.

\subsection{Related Work}

\label{sec:slice_related}

The block-diagonal and DPLR SLiCEs take inspiration from block-diagonal input-dependent linear RNN and DeltaNet, respectively \citep{fan-etal-2024-advancing, schlag2021linear, yang2024parallelizing}. 
Both models are linear RNNs, in the sense that their recurrent update is linear in the previous hidden state. 
For example, block-diagonal input dependent linear RNN is defined by
\begin{equation}
    \label{eq:bd_lin_rnn}
    h_{t_{i+1}}=A_{\theta}(x_{t_i})h_{t_i}+B_{\theta}x_{t_i},
\end{equation}
where $A_{\theta}$ is a block-diagonal matrix that depends non-linearly on $x_{t_i}$. 
In contrast, the state-transition matrix of block-diagonal SLiCE depends linearly on $x_{t_i}$.
As shown in Section \ref{sec:slice_expressiveness} and \ref{sec:slice_exp}, a linear dependence is theoretically sufficient for expressivity and empirically effective for state tracking.
\\ \\
DeltaNet belongs to a subset of linear RNNs whose hidden states are matrix-valued. This perspective originated as an alternative viewpoint on linear Transformers, where softmax is replaced with a kernel admitting a finite-dimensional feature map,
$\kappa(q,k)=\phi(q)^\top\phi(k)$ for $\phi:\mathbb{R}^{d_q}\to\mathbb{R}^{d_\phi}$. Letting $W^q_{\theta}\in\mathbb{R}^{d_q\times d_x}$, $W^k_{\theta}\in\mathbb{R}^{d_q\times d_x}$, and
$W^v_{\theta}\in\mathbb{R}^{d_v\times d_x}$ be learnable weights, then causal linear attention is defined by
\begin{equation}
o_{t_i} = \frac{\sum_{j=1}^i \phi\big(W^q_{\theta}x_{t_i}\big)^{\top}\phi\big(W^k_{\theta}x_{t_j}\big)\, W^v_{\theta}x_{t_j}}
             {\sum_{j=1}^i \phi\big(W^q_{\theta}x_{t_i}\big)^{\top}\phi\big(W^k_{\theta}x_{t_j}\big)}.
\end{equation}
As shown by \citet{pmlr-v119-katharopoulos20a}, this admits an RNN formulation,
\begin{equation}
\begin{aligned}
S_{t_i} &= S_{t_{i-1}} + \phi\big(W^k_{\theta} x_{t_i}\big)\big(W^v_{\theta} x_{t_i}\big)^{\top}, \\
y_{t_i} &= y_{t_{i-1}} + \phi\big(W^k_{\theta} x_{t_i}\big), \\
o_{t_i} &= \frac{\phi\big(W^q_{\theta} x_{t_i}\big)^{\top} S_{t_i}}{\phi\big(W^q_{\theta} x_{t_i}\big)^{\top} y_{t_i}},
\end{aligned}
\end{equation}
where $S_i \in \mathbb{R}^{d_\phi \times d_v}$ with $S_0=0$, $y_i\in\mathbb{R}^{d_\phi}$ with $y_0=0$, and $o_i\in\mathbb{R}^{d_v}$. Ignoring the normalisation, the core recurrence is a matrix-valued linear RNN.
\\ \\
Matrix-valued linear RNNs are a useful framework for understanding several recent sequence models, as highlighted by \citet{yang2024parallelizing}. Many of these models share the general form:
\begin{equation}
    \label{eq:mv_lrnn}
    S_{t_{i+1}} = S_{t_i} \bullet M_{t_i} + v(x_{t_i})k(x_{t_i})^\top,
\end{equation}
where $\bullet$ denotes an associative operator and $v$ and $k$ are arbitrary functions of the current input with compatible output dimensions. 
In practice, they are often linear projections, with possible additions such as feature maps, element-wise non-linearities, or normalisation.
In Section \ref{sec:ssms_are_lcdes}, the hidden state for each channel of Mamba's input was stacked vertically to view the entire model as a Linear NCDE.
If instead you view the hidden state for each channel of the input as columns of a matrix, then Mamba's recurrence \eqref{eq:s6} can be rewritten as
\begin{equation}
    \label{eq:mv_hrnn}
    S_{t_{i+1}} = S_{t_i} \odot M_{t_i} + v(x_{t_i})k(x_{t_i})^\top,
\end{equation}
where $\odot$ refers to the Hadamard (element-wise) product and $S_{t_i} \in \mathbb{R}^{d_h \times d_x}$. 
The Hadamard product is a direct consequence of Mamba's diagonal state-transition matrix, which inherently prevents interaction between individual elements of the hidden state.
This framework gives a clear interpretation for one of the key modifications introduced by Mamba-2,
\begin{equation}
    S_{t_{i+1}} = \gamma(x_{t_i})S_{t_i}+ v(x_{t_i})k(x_{t_i})^\top,
\end{equation}
where $\gamma$ is a real-valued function \citep{Dao2024Transformers}. 
Additionally, it highlights the structural similarity between Mamba and linear Transformers.
\\ \\
Replacing the Hadamard product by a matrix product allows richer interactions between the hidden state,
\begin{equation}
    \label{eq:mv_mlrnn}
    S_{t_{i+1}} = S_{t_i} M_{t_i} + v(x_{t_i})k(x_{t_i})^\top.
\end{equation}
However, similarly to using dense matrices in a Linear NCDE, the cost makes layers of this type intractable in larger models. 
DeltaNet uses a diagonal-plus-rank-one structure,
\begin{equation}
    S_{t_{i+1}} = S_{t_i} (I - \beta(x_{t_i}) k(x_{t_i})k(x_{t_i})^\top)  + \beta(x_{t_i}) v(x_{t_i})k(x_{t_i})^\top,
\end{equation}
where $\beta$ is a real-valued function \citep{schlag2021linear, yang2024parallelizing}. 
DeltaProduct later generalised DeltaNet to DPLR matrices \citep{siems2025deltaproductimprovingstatetrackinglinear}. 
Beyond reducing parameter count and recurrent computational cost, this structure also facilitates an efficient chunk-wise algorithm that can outperform parallel associative scans for large hidden dimensions, as outlined in \citep[Section 3.2]{yang2024parallelizing}. 
As will be seen in Section~\ref{sec:empirical}, it also significantly improves state-tracking performance.
\\ \\
Many other recent sequence models can also be viewed as matrix-valued linear RNNs. These include, but are not limited to, Gated DeltaNet \citep{yang2025gateddeltanetworksimproving}, RWKV-7 \citep{peng2025rwkv7gooseexpressivedynamic}, HGRN-2 \citep{qin2024hgrn2}, mLSTM \citep{beck2024xlstm}, Gated Linear Attention \citep{yang2024gated}, Gated Random Feature Attention \citep{peng2021random}, Gated Slot Attention \citep{zhang2024gated}, TTT-Linear \citep{sun2025learninglearntesttime}, and Titans \citep{behrouz2025titanslearningmemorizetest}. A detailed comparison of the specific form of \eqref{eq:mv_lrnn} for many of these models is provided in Table 2 of \citep{yang2024parallelizing}. \citet{beck2024xlstm} introduced mLSTM alongside a non-linear recurrent model sLSTM, which together form their sequence model xLSTM. These components are designed to play different roles, with mLSTM acting as the memory and sLSTM performing the reasoning. Section \ref{sec:slice_exp} will use mLSTM, sLSTM, and xLSTM as baseline methods to highlight the different roles the components play.
\\ \\
Matrix-valued linear RNNs can be converted into vector-valued linear RNNs by returning to the stacking approach of Section \ref{sec:ssms_are_lcdes}.
Let $\operatorname{vec}:\mathbb{R}^{m \times n} \to \mathbb{R}^{mn}$ be the column-major vectorisation operator, which transforms a matrix into a vector by stacking its columns. 
Letting $\otimes$ denote the Kronecker product, the Hadamard product linear RNN \eqref{eq:mv_hrnn} can be rewritten as
\begin{equation}
    \operatorname{vec}(S_{t_{i+1}}) = \operatorname{diag}(\operatorname{vec}(M_{t_i}))\operatorname{vec}(S_{t_i}) +  k(x_{t_i}) \otimes v(x_{t_i}),
\end{equation}
where the state-transition matrix is diagonal, consistent with Mamba's design. 
Noting that
\begin{equation}
    \operatorname{vec}(AXB) = (B^\top \otimes A)\operatorname{vec}(X),
\end{equation}
then the linear RNN with a matrix product \eqref{eq:mv_mlrnn} can be written as
\begin{equation}
    \operatorname{vec}(S_{t_{i+1}}) = (M_{t_i}^\top \otimes I_{d_h}) \operatorname{vec}(S_{t_i}) +  k(x_{t_i}) \otimes v(x_{t_i}).
\end{equation}
In both cases, a matrix-valued recurrence is equivalent to a vector-valued recurrence on a flattened hidden state, where the state-transition matrix is constrained to have a specific structure. 
For the Hadamard product, this gives a diagonal matrix, while the matrix product leads to a Kronecker product structure.
\\ \\
For the remainder of this work, we will focus our analysis on the vector-valued case with the structured matrices introduced in Section \ref{sec:slice_intro} (block-diagonal, DPLR, sparse, and Walsh-Hadamard). 
The insights gained from this analysis naturally extend to the matrix-valued setting. 
For instance, the limitations in expressivity of diagonal matrices demonstrated in Section \ref{sec:ssms} directly translate to the Hadamard product formulation. 
Similarly, the Kronecker product structure arising from vectorising a matrix-valued recurrence clearly illustrates how the properties of the constituent matrices determine the properties of the overall state transition. 
\\ \\
SLiCEs can be generalised to matrix-valued hidden states by selecting a suitable structure and reversing the above vectorisation. Alternatively, $m$ copies of the same SLiCE can be run via
\begin{equation}
    H_t = H_{t_0} + \int_{t_0}^t \left(\sum_{i=1}^{d_{\omega}} A^i_{\theta}\,\mathrm{d}\omega^{X,i}_s\right)H_s,
\end{equation}
where $H_s\in\mathbb{R}^{d_h\times m}$. Clearly, $m$ copies of a vector-valued SLiCE match the expressivity of a single copy, so all of our theoretical results naturally carry over to this setting. However, the columns only differ due to their initial conditions. To introduce meaningful column-specific dynamics, you can include a column-specific bias term in the vector field. A natural way to incorporate such biases into a matrix-valued SLiCE is to consider an affine SLiCE,
\begin{equation}
    \label{eq:matrix_slice}
    H_t = H_{t_0} + \int_{t_0}^t \left(\sum_{i=1}^{d_{\omega}} A^i_{\theta}\,\mathrm{d}\omega^{X,i}_s\right)H_s + B_{\theta}\,\mathrm{diag}(\mathrm{d}\xi^{X}_s),
\end{equation}
where $B_{\theta}\in\mathbb{R}^{d_h \times m}$. Letting $h^k_t$ for $1\leq k\leq m$ denote the columns of $H_t$, then
\begin{equation}
    h^k_t = h^k_{t_0} + \int_{t_0}^t \left(\sum_{i=1}^{d_{\omega}} A^i_{\theta}\,\mathrm{d}\omega^{X, i}_s\right)h^k_s + B^k_{\theta}\mathrm{d}\xi^{X,k}_s.
\end{equation}
Approximating $\omega_s$ and $\xi_s$ with linear interpolation on the grid $t_0<\cdots<t_n$ yields
\begin{equation}
    \tilde{h}^k_{t_{j+1}} \approx \exp\left(\sum_{i=1}^{d_{\omega}} A^i_{\theta}\,\left(\omega^{X, i}_{t_{j+1}} - \omega^{X, i}_{t_{j}}\right)\right)\tilde{h}^k_{t_{j}} + B^k_{\theta}(\xi^{X,k}_{t_{j+1}}-\xi^{X,k}_{t_{j}}),
\end{equation}
where we have used the approximation
\begin{equation}
    \int_{0}^{t_{j+1}-t_j} \exp\left(\tau\sum_{i=1}^{d_{\omega}} A^i_{\theta}\frac{\omega^{X, i}_{t_{j+1}} -\omega^{X, i}_{t_{j}}}{t_{j+1}-t_j}\right)\mathrm{d}\tau \approx (t_{j+1}-t_j)I,
\end{equation}
for small increments. The outputs $\tilde{h}^k_{t_j}$ remain computable in $\mathcal{O}(\log(n))$ parallel steps using a parallel associative scan \citep{blelloch1993prefix}. 
\\ \\
This construction is closely related to the path development layer, whose hidden state evolves on a matrix Lie group via the recurrence
\begin{equation}
    H_{t_{j+1}}=\exp\!\left(M_\theta(x_{t_{j+1}}-x_{t_j})\right)H_{t_j},
\end{equation}
where $M_\theta$ is a linear map into a matrix Lie algebra $\mathfrak{g}$ \citep{lou2024pathdevelopment, jiang2024gcndevlstm}.
Any such linear map can be written coordinate-wise as
\begin{equation}
    M_\theta(v)=\sum_{i=1}^{d_\omega} A_\theta^i v^i,
\end{equation}
where the choice of $\mathfrak{g}$ determines the structure of the matrices $A_\theta^i\in\mathfrak{g}$.
Hence, this is a special case of \eqref{eq:matrix_slice} with $\xi^X_s=0$, $H_{t_0}=I$, and $\omega^X_s=X_s$, where $X_s$ is the linear interpolation of $\{x_{t_i}\}_{i=0}^n$.
Therefore, the scan-based methods developed here can also be applied to parallelise the computation of the path development layer.

\subsection{Expressiveness}

\label{sec:slice_expressiveness}

As demonstrated in Section \ref{sec:expressiveness}, a core difference between diagonal and dense state-transition matrices $A^i_{\theta}$ is their theoretical expressivity. 
Here, we show that all four of the proposed SLiCE architectures retain maximal probabilistic expressivity.
\\ \\
Applying Lemma \ref{lem:lin_cde_sol} with $B=0$, the solution to \eqref{eq:slice} is
\begin{equation}
    h_t = \sum_{I \in \mathcal{I}}S^I_{[t_0,t]}(\omega) A^I h_0,
\end{equation}
where $\mathcal{I}$ is the set of multi-indices
\begin{equation}
    \mathcal{I} = \{\emptyset\} \cup \{I | I=(i_1, \ldots, i_k), k\in\mathbb{N},1\leq i_j \leq d_{\omega} \}
\end{equation}
$A^I=A^{i_k}\cdots A^{i_1}$ with $A^{\emptyset}$ being the identity, and $S^I_{[s, t]}(\omega)$ is the term in the signature of $\omega$ over $[s,t]$ corresponding to the multi-index $I$,
\begin{equation}
    S^I_{[s,t]}(\omega) = \underbrace{\int\cdots\int}_{\substack{s\leq u_1<\cdots<u_k\leq t}} \mathrm{d}\omega^{i_1}_{u_1}\cdots \mathrm{d}\omega^{i_k}_{u_k},
\end{equation}
with $S^{\emptyset}=1$.
The proof of maximal probabilistic expressivity relies on showing that the feature vectors $A^Ih_0$ and $A^Jh_0$ become approximately orthogonal as the hidden dimension increases.
This property is guaranteed if the following bound holds:
\begin{equation}
    \label{eq:key_bound}
    \left\|\frac{1}{d_h} \langle A^I h_0, A^Jh_0\rangle_{\mathbb{R}^{d_h}} - \delta_{I,J}\right\|_{L^2(\mathbb{P}_{d_h})} \leq ( \kappa (|I| + |J|) )!! ~ \mathcal{O}\left(\frac{1}{\sqrt{d_h}}\right),
\end{equation}
where $\kappa$ is a constant independent of $d_h$.
\citet{Cirone2023neural} showed that \eqref{eq:key_bound} holds when $A^i_{\theta}$ have i.i.d entries from $\mathcal{N}\left(0, \frac{1}{d_h}\right)$ and this fact was used by \citet{cirone2024deepSSM} to prove Theorem \ref{thm:max_prob_lin_ncde}.
The proof that each of the four SLiCE structures considered possesses maximal probabilistic expressivity proceeds by demonstrating that \eqref{eq:key_bound} holds for a distribution appropriate to that structure.
\\ \\
The detailed proofs of these results were primarily developed by a collaborator, Nicola Mu\c{c}a Cirone.
The proofs build on their work developing a novel graphical framework for neural networks that linearises the effect of activation functions, allowing for the application of Wick's principle and the genus expansion technique when proving convergence results \citep{cirone2025genusexpansionnonlinearrandom}.
This framework facilitates simple proofs that \eqref{eq:key_bound} holds for all four SLiCE structures, but introducing the required machinery falls outside the scope of this thesis.
Here, we present the expressivity results for the four structures with references to the locations of detailed proofs.
\begin{theorem}
\label{thm:slice_max_prob}
    Let $\mathcal{X}$ be the space of bounded variation paths on the interval $[t_0,t_n]$ that start at a common point and include time as a channel, endowed with the $1-$variation topology. Let $\mathcal{F}$ be the space of SLiCEs with $d_h\in\mathbb{N}$. Then $\mathcal{F}$ has maximal probabilistic expressivity when $h_0\sim\mathcal{N}(0, 1)$ and $A^i_{\theta}$ satisfies one of the following conditions:
    \begin{enumerate}
        \item $A^i_{\theta} = \mathrm{BlockDiag}\big(B^i_{\theta,1}, B^i_{\theta,2}, \dots, B^i_{\theta,k}\big)$, where $B^i_{\theta,j}\in\mathbb{R}^{b \times b}$, $b =  \lceil \log(d_h)\rceil$, and $B^i_{\theta, j}$ has i.i.d entries from $\mathcal{N}\left(0, \frac{1}{b}\right)$.
        \item $A^i_{\theta} = u^{i}_{\theta}\big(v^{i}_{\theta}\big)^\top$, where $u^i_{\theta}, v^i_{\theta} \in \mathbb{R}^{d_h \times r}$ with $r=\lceil\log(d_h)\rceil$ and $u^i_{\theta}$ and $v^i_{\theta}$ have i.i.d entries from $\mathcal{N}(0,1)$.
        \item $A^i_{\theta}$ has entries obtained by pointwise multiplying $W$ with i.i.d entries from $\mathcal{N}(0,\frac{1}{d_hp(d_h)})$ and $B$ with i.i.d entries from a Bernoulli distribution with probability $p(d_h)$ of being $1$, where $p(d_h)$ satisfies $d_hp(d_h) \to \infty$ as $d_h\to\infty$.
        \item $A^i_{\theta} = H D^i_{\theta}$, where $H \in \mathbb{R}^{d_h \times d_h}$ is a fixed matrix with entries bounded uniformly in $d_h$ by a constant $C$ satisfying $HH^\top = d_h I_{d_h}$ and $D^i_{\theta}$ is a diagonal matrix with i.i.d entries from $\mathcal{N}\left(0, 1\right)$.
    \end{enumerate}
\end{theorem}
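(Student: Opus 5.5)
The plan is to reduce Theorem~\ref{thm:slice_max_prob} to the moment bound \eqref{eq:key_bound}, exactly as in the dense case of Theorem~\ref{thm:max_prob_lin_ncde}. For each of the four structures we must show two things. First, the random features $A^I h_0$ satisfy \eqref{eq:key_bound} under the stated distribution. Second, this approximate orthonormality suffices for maximal probabilistic expressivity.

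The second part is structure-independent and we take it from \citet{cirone2024deepSSM}. By Lemma~\ref{lem:lin_cde_sol} with $B=0$, a linear readout of $h_t$ is
\begin{equation}
    \langle \ell, h_t\rangle=\sum_{I}S^I_{[t_0,t]}(\omega)\langle \ell, A^Ih_0\rangle .
\end{equation}
By Corollary~\ref{cor:universality-on-paths} and Lemma~\ref{lem:injectivity-time}, any continuous $f$ on a compact $\mathcal{K}$ is uniformly approximated by a truncated linear functional $\sum_{|I|\le N}\alpha_I S^I$. We choose $\ell=\frac{1}{d_h}\sum_{|J|\le N}\alpha_J A^Jh_0$. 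The readout then equals
\begin{equation}
    \sum_{I,J}\alpha_J S^I\cdot\frac{1}{d_h}\langle A^Jh_0,A^Ih_0\rangle .
\end{equation}
The error relative to $\sum\alpha_I S^I$ splits into two pieces. The first is the finitely many terms with $|I|\le N$, each of size $\mathcal{O}(d_h^{-1/2})$ in $L^2$ by \eqref{eq:key_bound}. The second is the tail $|I|>N$. We control the tail by combining the factorial decay of Theorem~\ref{thm:factorial-decay} with the double-factorial growth in \eqref{eq:key_bound}. This works because $(\kappa n)!!/n!$ times $\|\omega\|^n$ is summable on a small enough window; if needed we subdivide time or rescale $\omega$. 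Chebyshev's inequality then shows the event of $\epsilon$-approximation has probability tending to one.

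The first part is the structure-specific core. We expand
\begin{equation}
    \mathbb{E}\Big[\big(\tfrac{1}{d_h}\langle A^Ih_0,A^Jh_0\rangle-\delta_{IJ}\big)^2\Big]
\end{equation}
by Wick's theorem over Gaussian pairings of the entries of $h_0$ and of the generating Gaussians. The Gaussians are the block entries, the factors $u,v$, the masked $W$, or the diagonal $D$. The leading (planar, genus-zero) pairings reproduce $\delta_{IJ}$. Every other pairing must be shown to carry a factor at least $d_h^{-1/2}$, and the number of pairings gives the $(\kappa(|I|+|J|))!!$ factor. The four cases then go as follows.

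\textbf{Block-diagonal.} $A^Ih_0$ decouples into $k=d_h/b$ independent blocks, so the inner product averages $k$ i.i.d.\ dense-case terms of size $b$. The variance gains $1/k$, and the per-block Gaussian estimate of \citet{Cirone2023neural} bounds each block with constants depending on $b$. This requires checking that $b=\lceil\log d_h\rceil$ keeps those constants $o(\sqrt{k})$.

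\textbf{Sparse.} The Bernoulli mask only rescales the variances, so the pairing count is unchanged, with an extra error of order $(d_hp)^{-1}$. That error vanishes because $d_hp\to\infty$.

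\textbf{Walsh--Hadamard.} The randomness is the diagonal $D$. Since $HH^\top=d_hI$ and $|H_{ij}|\le C$, the products of $H$ play the role of the deterministic contractions. Wick on $D$ together with orthogonality of $H$ gives the planar terms, and the uniform entry bound controls the cross terms.

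\textbf{Low-rank.} This is the case we expect to be the main obstacle. Each $A^i=u^i(v^i)^\top$ is low-rank, so $A^Ih_0$ lies in the span of $u^{i_k}$ and involves scalar chains $(v^{i_{j+1}})^\top u^{i_j}$ of size $\sqrt{d_h r}$. Normalisation and near-orthogonality are therefore not automatic. Our first attempt is to normalise by the factor $\sqrt{d_h r}$ per step and check that distinct words produce chains with independent Gaussian ends. These chains behave like Gaussian $r\times r$ matrices, so the $(r=\log d_h)$-dimensional concentration would then give \eqref{eq:key_bound}. If the stated normalisation fails, we would use the genus expansion of \citet{cirone2025genusexpansionnonlinearrandom} directly, since that is where the detailed argument resides.
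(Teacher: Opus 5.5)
Your architecture matches the paper's: its proof only imports, structure by structure, an estimate of the form \eqref{eq:key_bound} and feeds it into the dense-case reduction. However, the block-diagonal step as you set it up fails. Averaging over the $k$ i.i.d.\ blocks shrinks only the fluctuation of $\tfrac1{d_h}\langle A^Ih_0,A^Jh_0\rangle$. It leaves untouched the bias $\mathbb{E}\bigl[\tfrac1b\langle B^I_jh_{0,j},B^J_jh_{0,j}\rangle\bigr]-\delta_{IJ}$ of a single block's term, where $B^I_j$ is the product of the $j$th blocks along $I$ and $h_{0,j}$ is the $j$th block of $h_0$. That bias is of order $1/b$, and the order is attained. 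For $I=(i,i)$ and $J=\emptyset$, using $\mathbb{E}\,\mathrm{tr}\bigl((B^i_{\theta,j})^2\bigr)=\sum_a\mathbb{E}\bigl[(B^i_{\theta,j})_{aa}^2\bigr]=1$, we get
\begin{equation*}
\mathbb{E}\Bigl[\tfrac{1}{d_h}\bigl\langle A^i_{\theta}A^i_{\theta}h_0,\,h_0\bigr\rangle\Bigr]
=\tfrac{1}{d_h}\,\mathbb{E}\,\mathrm{tr}\bigl(A^i_{\theta}A^i_{\theta}\bigr)
=\tfrac{1}{d_h}\sum_{j=1}^{k}\mathbb{E}\,\mathrm{tr}\bigl((B^i_{\theta,j})^2\bigr)
=\tfrac{k}{d_h}=\tfrac{1}{b}.
\end{equation*}
So the $L^2$ error is at least $1/\lceil\log d_h\rceil$, and \eqref{eq:key_bound} with rate $d_h^{-1/2}$ is false for this structure. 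The error vanishes because $b\to\infty$, not because $k\to\infty$: for fixed $b$, the law of large numbers sends the empirical Gram matrix to a limit different from the identity, however many blocks you take. What you can prove is the dense width-$b$ estimate applied blockwise. By Minkowski's inequality this gives $(\kappa(|I|+|J|))!!\,\mathcal{O}(b^{-1/2})$. Your reduction must therefore be run with a structure-dependent rate $\varepsilon(d_h)\to0$ in place of $d_h^{-1/2}$, which the Chebyshev step tolerates.

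The other structure-specific steps are not yet arguments either. Your low-rank case is explicitly only a plan. Even after renormalising by $(d_hr)^{-1/2}$, the features reduce to products of $r\times r$ Gaussian matrices, so the attainable rate is $r^{-1/2}$, again not $d_h^{-1/2}$. Already $\tfrac1{d_h}\|A^i_\theta h_0\|^2\approx\chi^2_r/r$, with standard deviation $\sqrt{2/r}$.

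The normalisation problem you noticed for low-rank also affects Walsh--Hadamard, where you missed it. With $HH^\top=d_hI$ and standard Gaussian $D^i_\theta$, one has $\tfrac1{d_h}\|A^i_\theta h_0\|^2=\|D^i_\theta h_0\|^2\approx d_h$. The planar pairings therefore produce $d_h^{|I|}\delta_{IJ}$ rather than $\delta_{IJ}$. Replacing $A^i_\theta$ by $cA^i_\theta$ is the same as replacing $\omega$ by $c\,\omega$, so such $d_h$-dependent rescalings cannot be absorbed silently. You must state that you work with $d_h^{-1/2}H$ and $(d_hr)^{-1/2}u^i_\theta(v^i_\theta)^\top$.

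The same remark rules out your fallback for the tail. Rescaling $\omega$ alters the prescribed law of $A^i_\theta$, and subdividing time does not split the single solve over $[t_0,t]$. Hence $\sum_n(\kappa n)!!\,x^n/n!$ must converge for every $x>0$, and that holds only if $\kappa<2$.
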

\begin{proof}
    The proof of (1), (3), and (4) can be found as Proposition B.6, B.2, B.4 in \citep{walker2025structuredlinearcdesmaximally}, respectively. The proof of (2) can be found as Proposition F.2 in \citep{movahedi2025fixedpointrnnsdiagonaldense}. 
\end{proof}
The expressivity result for the low-rank structure (2) naturally extends to a DPLR structure, by simply taking $D^i_{\theta}=0$. 
Additionally, condition (3) shows that maximal probabilistic expressivity in the sparse case is retained only when the matrices do not become too sparse, in the sense that the expected number of non-zero entries per row  must diverge as $d_h\to\infty$. 
For instance, sparse matrices with $\mathcal{O}(d_h^{1+\epsilon})$ non-zero entries for some $\epsilon>0$ satisfy this condition. 
Furthermore, a Hadamard matrix matches the conditions of (4). 
Therefore, Theorem \ref{thm:slice_max_prob} confirms that, unlike diagonal matrices, SLiCEs with block-diagonal, DPLR, sparse, and Walsh--Hadamard matrices have maximal probabilistic expressivity.

\subsection{Comparison}

\label{sec:slice_comp}

Table \ref{tab:comparison} summarises the differences in parameter count, computational cost, existence of an efficient implementation, and expressivity of the proposed SLiCE structures, where for simplicity we have taken $d_{\omega}=d_h$. 
As can be seen, the four proposed structures lead to a reduction in parameter count and recurrent cost of inference, whilst still maintaining maximal expressivity. 
However, current deep-learning frameworks, such as JAX \citep{jax2018github} and PyTorch \citep{paszke2019pytorchimperativestylehighperformance}, are not optimised for unstructured sparsity, so the sparse structure does not lead to practical speed-ups in our implementations.
Furthermore, not all the structures reduce the theoretical computational cost of applying a parallel associative scan.
This is because the parallel associative scan is repeatedly composing
\begin{equation}
    \label{eq:composee}
    \exp\left(\sum_{i=1}^{d_\omega}(\omega^i_{t_{j+1}}-\omega^i_{t_j})A^i_{\theta}\right).
\end{equation}
When the $A^i_{\theta}$ are diagonal or block-diagonal, the composition of \eqref{eq:composee} preserves the structure, as these classes of matrices are closed under multiplication. 
Therefore, using a parallel associative scan reduces the scan depth from $n$ to $\log(n)$, whilst having a computational cost per composition of $\mathcal{O}(d_h^2)$ or $\mathcal{O}(d_h\sum_{j}b_j^2)$, respectively. 
However, for DPLR, sparse, and Walsh--Hadamard SLiCEs, the structured matrices are not closed under multiplication, which means that the limiting computational cost per composition is the same as a dense Linear NCDE, $\mathcal{O}(d_h^3)$.
\begin{table}
    \caption{Comparison of SLiCEs with dense, diagonal, diagonal-plus-low-rank (DPLR), sparse, Walsh--Hadamard (WH), and block-diagonal (BD) structure on parameter count, computational cost, the existence of an efficient implementation, and expressivity. Here, $d_{h}$ is the hidden dimension, $n$ is the sequence length, $b_j$ are BD-SLiCE's block-sizes, $r$ is DPLR-SLiCE's rank, $\epsilon$ is S-SLiCE's sparsity, and for simplicity we have taken $d_{\omega}=d_h$, as is common in practice when stacking layers. Parallel cost is measured as $\mathcal{O}($scan depth$,$ cost per composition$)$ when applying a parallel associative scan.}
    \footnotesize
    \centering
    \begin{tabular}{lccccc}
    \toprule
    \textbf{Structure} & \textbf{Parameters} & \textbf{Recurrent Cost} & \textbf{Parallel Cost} & \makecell{\textbf{Efficient} \\ \textbf{Impl.}} & \makecell{\textbf{Maximally} \\ \textbf{Expressive}} \\
    \midrule
    Dense
      & $\mathcal{O}(d_{h}^{3})$
      & $\mathcal{O}(nd_{h}^{3})$
      & $\mathcal{O}(\log(n), d_{h}^3)$
      & Yes
      & Yes \\[6pt]
    Diagonal
      & $\mathcal{O}(d_{h}^2)$
      & $\mathcal{O}(nd_{h}^2)$
      & $\mathcal{O}(\log(n), d_{h}^2)$
      & Yes
      & No \\[6pt]
    DPLR
      & $\mathcal{O}(rd_{h}^2)$
      & $\mathcal{O}(nrd_{h}^2)$
      & $\mathcal{O}(\log(n), d_{h}^3)$
      & Yes
      & Yes \\[6pt]
    Sparse
      & $\mathcal{O}(d_{h}^{2+\epsilon})$
      & $\mathcal{O}(nd_{h}^{2+\epsilon})$
      & $\mathcal{O}(\log(n), d_{h}^3)$
      & No
      & Yes \\[6pt]
    WH
      & $\mathcal{O}(d^2_{h})$
      & $\mathcal{O}(nd^2_{h})$
      & $\mathcal{O}(\log(n), d_{h}^3)$
      & Yes
      & Yes \\[6pt]
    BD
      & $\mathcal{O}\left(d_h\sum_jb_j^2\right)$
      & $\mathcal{O}\left(nd_h\sum_jb_j^2\right)$
      & $\mathcal{O}\left(\log(n), d_h\sum_jb_j^2\right)$
      & Yes
      & Yes \\
    \bottomrule
    \end{tabular}
    \label{tab:comparison}
\end{table}
\\ \\
As discussed in Section \ref{sec:flow}, parallel associative scans result in high I/O costs for large models, as each state-transition matrix must be materialised in GPU memory \citep{yang2024parallelizing}. 
\citet{yang2024parallelizing} introduced an alternative approach for DeltaNet, where a chunk-wise algorithm specifically tailored for diagonal-plus-rank-one state-transition matrices is used to bypass the need to materialise every intermediate matrix, significantly cutting down I/O costs \citep{yang2024parallelizing}. 
These approaches can also be applied to diagonal state-transition matrices. 
Therefore, a block-diagonal SLiCE with a large diagonal portion ($b_i=1$ for $i=1,\dots,k-1$) followed by a small dense block emerges as an attractive solution. 
The large diagonal section can efficiently utilise the chunk-wise algorithm and the smaller dense section can be processed using parallel associative scans without incurring significant I/O costs. 
We refer to this structure as diagonal-dense SLiCE (D-DE-SLiCE).

\subsection{Implementation Details}

Algorithm~\ref{alg:slice} provides a pseudo-code implementation for the forward pass of a SLiCE. The approach is demonstrated for dense state-transition matrices $A^i_{\theta}$, with comments highlighting where a SLiCE's structure can be used to speed up computation or reduce memory footprint. In this thesis, each SLiCE recurrence is embedded in a simple block structure, combining a linear layer to mix the channels, tanh activation function, layer normalisation \citep{ba2016layer}, and a skip connection. Inspired by the $\mathrm{Lip}(\gamma)$ regularisation introduced in Section \ref{sec:lipgammaNN}, all SLiCEs use weight regularisation on their state-transition matrices $A^i_{\theta}$. 
Additionally, due to instability during training, Walsh--Hadamard's diagonal matrix $D^i_{\theta}$ is parametrised to take values between $-1$ and $1$.
Finally, to improve training speed the following approximation is made,
\begin{equation}
    \label{eq:composee2}
    \exp\left(\sum_{i=1}^{d_\omega}(\omega^i_{t_{j+1}}-\omega^i_{t_j})A^i_{\theta}\right) \approx I+\sum_{i=1}^{d_\omega}(\omega^i_{t_{j+1}}-\omega^i_{t_j})A^i_{\theta}.
\end{equation}
This is equivalent to applying an Euler discretisation with step-size $1$, and hence aligns with the approach of recurrent models.
There is ongoing work to develop an efficient GPU kernel to reduce the computational burden of repeated matrix exponentials, as will be discussed further in Section \ref{sec:lncde_conclusion}.

\begin{algorithm}
\caption{\textbf{Structured Linear NCDE}: The algorithm is presented for a dense state-transition matrix $A_{\theta}$. Comments indicate where the structure of $A_{\theta}$ can be used to reduce memory footprint and speed-up computation.}%
\label{alg:slice}
\textbf{Input:}  $\boldsymbol{\omega} : (B,\,L,\,d_{\omega})$\\
\textbf{Output:} $\mathbf{h} : (B,\,L,\,d_h)$
\begin{algorithmic}[1]
    \State $\mathbf{h}_0 : (B,\,d_h) \gets \xi_{\phi}(\boldsymbol{\omega}_0)$
    \State $\boldsymbol{\omega}^{\text{inc}} : (B,\,L-1,\,d_{\omega}) \gets \text{diff}(\boldsymbol{\omega})$
    \State $A_{\theta} : (d_\omega,\,d_h,\,d_h) \gets \textit{Parameter}$ \Comment{Exploit structure of $A_{\theta}$}
    \State $\mathbf{I} : (d_h,\,d_h) \gets d_h\times d_h \text{ identity matrix.}$
    \If{$\textit{mode}=\text{parallel}$}
        \State $\mathbf{F} : (B,\,L-1,\,d_h,\,d_h) \gets \mathbf{I} + \text{einsum}(bli,ijk\rightarrow bljk, \boldsymbol{\omega}^{\text{inc}}, A_{\theta}) $ \Comment{Broadcast $\mathbf{I}$, exploit structure of $A_{\theta}$}
        \State $\mathbf{F}^{\text{comp}} \gets \text{pscan}(\mathbf{F})$ \Comment{Parallel associative scan, exploit structure of $\mathbf{F}$}
        \State $\mathbf{h}_{1:L-1} \gets \text{einsum}(bljk,bk\!\rightarrow\!blj,\,\mathbf{F}^{\text{comp}},\,\mathbf{h}_0)$
        \State $\mathbf{h} \gets \bigl[\mathbf{h}_0,\mathbf{h}_{1:L-1}\bigr]$
    \Else\Comment{Recurrent pass}
        \For{$t \gets 0$ \textbf{to} $L-2$}
            \State $\mathbf{F}_t : (B, \,d_h, \,d_h) \gets \mathbf{I} + \text{einsum}(bi,ijk\rightarrow bjk, \boldsymbol{\omega}^{\text{inc}}_t, A_{\theta})$ \Comment{Exploit structure of $A_{\theta}$}
            \State $\mathbf{h}_{t+1} \gets \text{einsum}(bjk,bk\!\rightarrow\!bj,\, \mathbf{F}_t,\, \mathbf{h}_{t})$ \Comment{Exploit structure of $\mathbf{F}$}
        \EndFor
        \State $\mathbf{h} \gets \bigl[\mathbf{h}_{0},\ldots,\mathbf{h}_{L-1}\bigr]$  \Comment{Stack along length axis}
    \EndIf
    \State \Return $\mathbf{h}$
\end{algorithmic}
\end{algorithm}

\subsection{Experiments}

\label{sec:slice_exp}

First, all the SLiCE variants and a wide variety of sequence model baselines are evaluated on the $A_5$ benchmark used in Section \ref{sec:empirical}. 
Our experiments follow the approach of \citet{merrill2024illusion}. 
The models are trained on sequences ranging from length $3$ to $20$ and compared on the number of stacked layers required to achieve greater than $90\%$ validation accuracy.
Models are trained using a token-tagging loss for $1{,}000{,}000$ steps with a batch size of $256$. 
For all sequence lengths, a small batch of sequences of length $2$ are included at each training step to aid convergence. 
All models use Adam with weight decay as the optimiser, and linear warm-up followed by cosine annealing with a minimum learning rate of $10^{-5}$ and a maximum learning rate of $10^{-3}$ \citep{kingma2017adam}. 
Additionally, all models use dropout with a rate of $0.1$ and a trainable embedding layer \citep{srivastava2014dropout}.
\\ \\
The five baseline models, Mamba, LSTM, mLSTM, sLSTM, and DeltaProduct, all use a hidden dimension of $1024$. 
LSTM uses direct stacking whereas the other baseline models use stacked blocks consisting of a sequence model, a GLU layer \citep{dauphin2017language}, and layer normalisation \citep{ba2016layer}. 
DeltaProduct uses both gating and negative eigenvalues, maximising the potential expressivity \citep{yang2025gateddeltanetworksimproving, grazzi2024unlockingstatetrackinglinearrnns}. 
All of the SLiCEs use 
\begin{equation}
    \frac{\omega^{X}_{t_{j+1}}-\omega^{X}_{t_{j}}}{t_{j+1}-t_j} = (1, X_{t_j}) 
\end{equation}
and $1024$ non-zero parameters for each $A^i_{\theta}$. For the diagonal and Walsh--Hadamard SLiCE this corresponds to a hidden dimension of $1024$ and for the dense SLiCE this corresponds to a hidden dimension of $32$. 
The DPLR SLiCE uses a rank of $2$, giving a hidden dimension of $205$, the block-diagonal SLiCE uses $b_i=4$, giving a hidden dimension of $256$, and the diagonal-dense SLiCE uses a dense block size of $23$, giving a hidden dimension of $518$. 
The sparse SLiCE uses a hidden dimension of $128$ and a sparsity of $\epsilon=\frac{3}{7}$.
All SLiCEs use stacked blocks consisting of a SLiCE, a linear layer followed by a tanh activation function, layer normalisation \citep{ba2016layer}, and weight regularisation.
\\ \\
The second experiment evaluates length generalisation on sequences from the $A_5$ benchmark. 
Models which achieve at least $90\%$ validation accuracy on sequences of length $20$ in the previous experiment are retrained on sequences from length $3$ to $40$ with early stopping on a validation set of sequences from length $40$ to $128$. 
The models are then evaluated on test sequences from length $20$ to $5120$. The mLSTM is excluded as it operates on fixed-length inputs.
\\ \\
The third experiment consists of the four regular tasks from the formal language benchmark, a collection of language style tasks split into categories using the Chomsky Hierarchy \citep{deletang2023neural}. They are: 
\begin{enumerate}
    \item Cycle navigation: Infer the final position of a walk on a cycle starting at the origin. Actions are randomly sampled from ``go forward one step'', ``stay in the same place'', and ``go backward one step''. We use a cycle of length $5$. Therefore, a random guesser will achieve an accuracy of $20\%$.
    \item Even pairs: There are two states in the system and the goal is to determine if there is an equal number of transitions between the two states. A random guesser will achieve an accuracy of $50\%$.
    \item Modular arithmetic no brackets: Performs modular arithmetic consisting of only addition and multiplication. We use mod $5$ and hence a random guesser will achieve $20\%$.
    \item Parity: There are two elements in the system. To determine the parity, the number of the second element is counted to determine if it is even or odd. This can be viewed as modular summation with mod $2$. A random guesser will achieve an accuracy of $50\%$.
\end{enumerate}
All four of these tasks can be solved by processing inputs sequentially with a fixed set of internal states and no external memory, i.e. state-tracking. 
The models are challenged to generalise to longer sequences, by training on sequences from length $3$ to $40$ and evaluating on sequences from length $40$ to $256$. 
Following \citet{beck2024xlstm}, all models use two stacked layers and a trainable embedding layer. 
In addition to the hidden dimension of $512$ used by \citet{beck2024xlstm}, we also train the baseline models with a hidden dimension of $128$, selecting the value that yields the highest average validation accuracy for each model on each task. 
For Mamba, which does not support a hidden dimension of $128$, we instead choose between $256$ and $512$ based on validation performance. 
All models are trained using a token-tagging loss for $100{,}000$ steps with a batch size of $256$.
Additionally, all models use dropout with a rate of $0.01$ \citep{srivastava2014dropout}, Adam with weight decay as the optimiser  \citep{kingma2017adam}, and a linear warm-up followed by cosine annealing with a minimum learning rate of $10^{-5}$ and a maximum learning rate of $2\times10^{-3}$.
\\ \\
Similarly to the $A_5$ benchmark, LSTM uses direct stacking whereas the other baseline models use stacked blocks consisting of a sequence model, a GLU layer \citep{dauphin2017language}, and layer normalisation \citep{ba2016layer}. The baseline models considered are vanilla DeltaNet \citep{schlag2021linear, yang2024gated}, DeltaNet with negative eigenvalues (DeltaNet[-1,1]) \citep{grazzi2024unlockingstatetrackinglinearrnns}, Gated DeltaNet \citep{yang2025gateddeltanetworksimproving}, Gated DeltaProduct with negative eigenvalues and a rank of $2$ \citep{siems2025deltaproductimprovingstatetrackinglinear}, sLSTM \citep{beck2024xlstm}, mLSTM \citep{beck2024xlstm}, xLSTM \citep{beck2024xlstm}, RWKV-7 \citep{peng2025rwkv7gooseexpressivedynamic}, a Transformer \citep{vaswani2017attention}, S4D \citep{gu2022s4d}, and Mamba \citep{gu2024mamba}.
\\ \\
We consider all SLiCEs on this benchmark except sparse due to the lack of an efficient implementation. 
All SLiCEs use 
\begin{equation}
    \frac{\omega^{X}_{t_{j+1}}-\omega^{X}_{t_{j}}}{t_{j+1}-t_j} = (1, X_{t_j}), 
\end{equation}
and two stacked blocks consisting of the sequence layer, a linear layer followed by a tanh activation function, and layer normalisation \citep{ba2016layer}. 
For the diagonal and Walsh--Hadamard SLiCE variants, we consider hidden dimensions of $128$ and $512$, corresponding to $128$ and $512$ non-zero parameters per state-transition matrix, respectively. 
For all other SLiCE variants, the number of non-zero parameters in the state-transition matrix is fixed at $512$. 
For DPLR we consider ranks of $r=1,2,4,8$, and for block-diagonal we consider two variants, $b_i=b$ for all $i$ with $b=2,4,8,16$, and $b_i=1$ for $i=1,\ldots,k-1$, and then a final dense block $b_k=b$ for $b=2,4,8,16$, referred to as diagonal-dense SLiCE. 
\\ \\
The final experiment expands the comparison of Section \ref{sec:lncde_exp} to consider all SLiCE structures and all six datasets from the UEA-MTSCA used in Section \ref{sec:log_ncde_experiments}. 
In particular, the vector field of a Log-NCDE is replaced by a structured $A^i_{\theta}$, with all other hyperparameters kept identical to those in Table \ref{tab:UEA_hypopt_ncde}.
The Log-SLiCEs are then retrained on each of the six UEA-MTSCA datasets from \ref{sec:log_ncde_experiments} and compared on average test set accuracy, time per training step, and GPU memory.
All timing and GPU memory results were performed on an NVIDIA H100 GPU.

\subsection{Results}

Figure \ref{fig:slice_a5} presents the results on the $A_5$ benchmark. 
As expected from Theorem \ref{thm:slice_max_prob}, the DPLR, sparse (S), Walsh--Hadamard (WH), and block-diagonal (BD) structures allow a SLiCE to achieve greater than $90\%$ validation accuracy on all sequence lengths considered with only one layer, matching the performance of the dense Linear NCDE (DE-LNCDE).
The only other models to achieve this are the non-linear recurrent models, sLSTM and LSTM. 
All other parallelisable models require a growing number of layers. This includes the diagonal-dense (D-DE) SLiCE and Gated DeltaProduct with negative eigenvalues, despite their structured state-transition matrices. 
However, DPLR and BD SLiCE both need one layer for all sequence lengths, suggesting this is not an inherent limitation of these structures.
\begin{figure}
    \includegraphics[width=\linewidth]{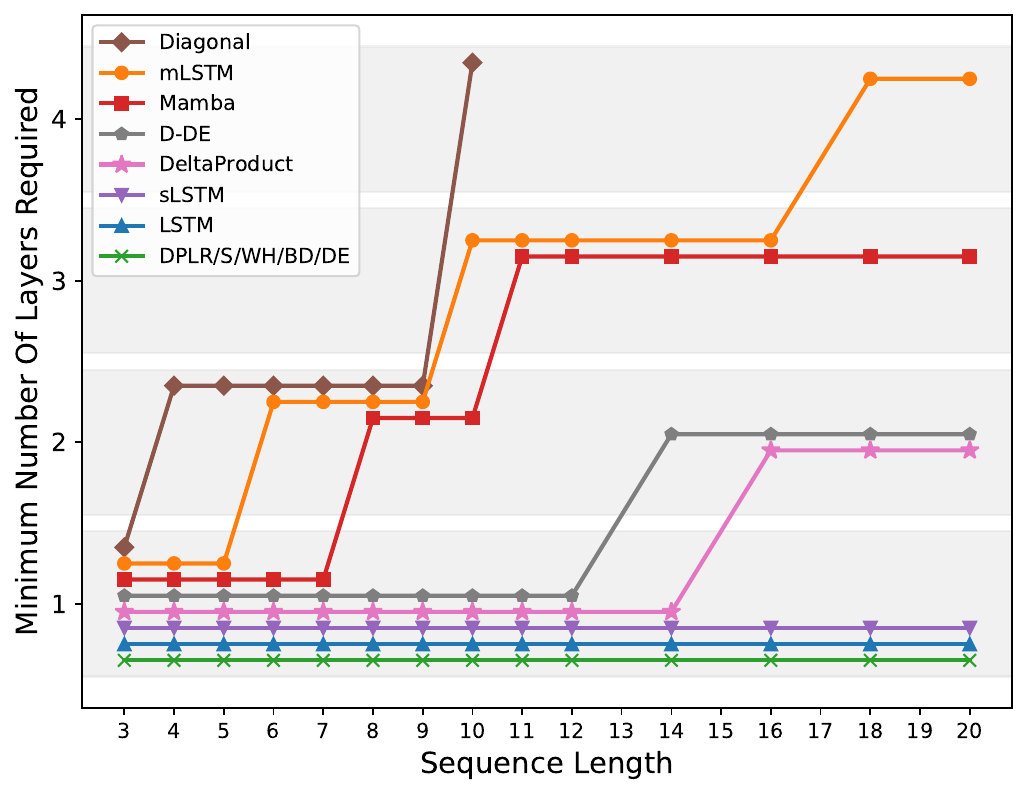}
    \caption{Comparison of an mLSTM, Mamba, DeltaProduct, sLSTM, LSTM, nd SLiCEs with diagonal, dense (DE), sparse (S), Walsh--Hadamard (WH), block-diagonal (BD), and diagonal-dense (D-DE) matrices on the $A_5$ benchmark. For each sequence length, the plot shows the minimum number of blocks required to achieve at least $90\%$ validation accuracy, with each shaded band corresponding to a number of blocks. Missing points mean the model did not achieve at least $90\%$ validation accuracy with $4$ blocks or less.
    }
    \label{fig:slice_a5}
\end{figure}
\\ \\
Figure \ref{fig:a5_length_gen} presents the results on the $A_5$ length generalisation experiment. The non-linear recurrent LSTM and sLSTM generalise well, maintaining high test accuracy beyond both the training and validation ranges. Among the parallel-in-time models, three patterns emerge: (i) WH-SLiCE and Mamba do not attain high accuracy even at training lengths; (ii) DeltaProduct and D-DE-SLiCE generalise to approximately twice the training length but not beyond the validation range; and (iii) DE-LNCDE, DPLR-SLiCE, S-SLiCE, and BD-SLiCE sustain high accuracy on sequences at least $8$ times the training length, exceeding the maximum validation length.
\begin{figure}
    \includegraphics[width=\linewidth]{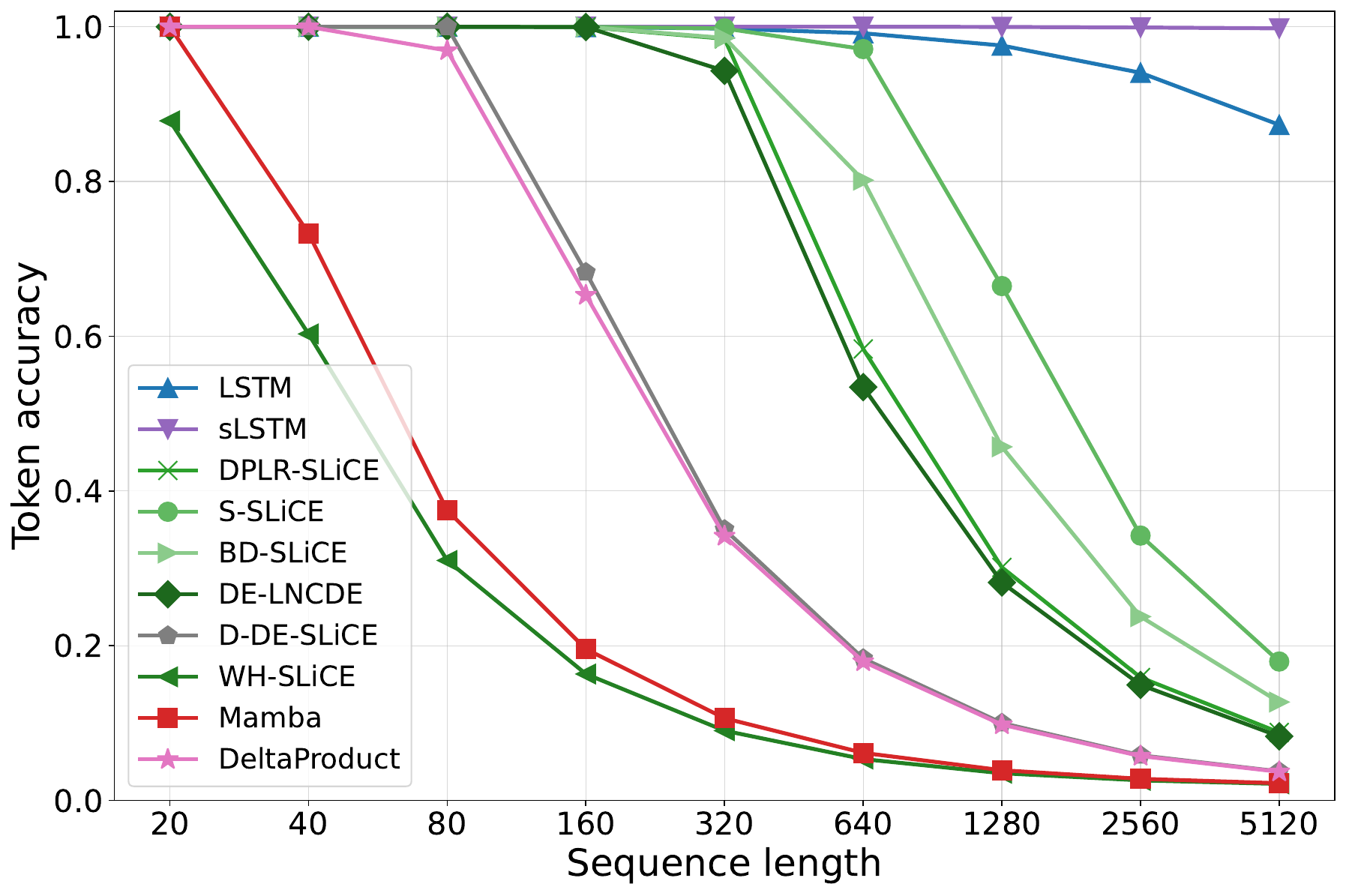}
    \caption{Comparison of an Mamba, DeltaProduct, sLSTM, LSTM, and SLiCEs with diagonal, dense (DE), sparse (S), Walsh--Hadamard (WH), block-diagonal (BD), and diagonal-dense (D-DE) matrices on $A_5$ length generalisation. Models are trained on sequences from length $3$ to $40$ with early stopping on a validation set of sequences from length $40$ to $128$. The plot shows test set token accuracy against sequence length. 
    }
    \label{fig:a5_length_gen}
\end{figure}
\\ \\
The results presented in Table \ref{tab:slice_fl_regular_only} show that expressive structures like BD, D-DE, and DPLR significantly outperform the diagonal baseline on the regular language tasks. The top-performing D-DE and DPLR model, both with an average accuracy of $85.1\%$, demonstrate a substantial improvement over the best diagonal model's $70.2\%$. 
Conversely, the WH structure performs worse than the diagonal model, despite its greater theoretical expressivity. 
This provides an empirical demonstration that, although maximal probabilistic expressivity is a desirable property, it is an asymptotic notion and does not guarantee favourable optimisation, sample efficiency, or inductive bias at finite width. 
We hypothesise that the Walsh--Hadamard structure's fixed global mixing is poorly matched to the algorithmic state-tracking required by regular language tasks, as also suggested by its poor performance on the $A_5$ length generalisation task.
\\ \\
An analysis of the successful hyperparameter choices reveals a fundamental trade-off between model expressivity and hidden state dimension. For a fixed parameter budget, increasing one of these factors necessitates a decrease in the other. For instance, BD-SLiCE achieves its best performance not at an extreme, but with a block size of $b=4$ and hidden dimension $d_h=128$, yielding an average accuracy of $83.8\%$. Similarly, DPLR-SLiCE achieves its best performance with a rank of $4$, yielding the joint highest average accuracy of $85.1\%$. This suggests that the strongest results emerge from maintaining a sufficient degree of structure without overly compressing the hidden state. The D-DE structure operates in a notably different regime, with even its largest dense block configuration retaining a larger hidden dimension than any BD or DPLR variant. As a result, D-DE-SLiCE achieves its best performance at the largest block size, matching the top average accuracy of DPLR-SLiCE.
\begin{table}
\centering
\small
\caption{Average and standard deviation of validation accuracy over five runs on the regular language tasks for SLiCEs with diagonal, Walsh--Hadamard (WH), block-diagonal (BD), diagonal-dense (D-DE), and DPLR structures.}
\begin{tabular}{lccccc}
\toprule
\multicolumn{1}{c|}{\textbf{Model}} & \textbf{Cycle Nav.} & \textbf{Even Pairs} & \makecell{\textbf{Mod Arith.} \\ \textbf{No Brack.}} & \multicolumn{1}{c}{\textbf{Parity}} & \multicolumn{1}{|c}{\textbf{Average}} \\
\midrule
Diagonal$_{d_h=128}$ & $69.5 \pm 6.3$ & $100.0 \pm 0.0$ & $20.8 \pm 0.2$ & $89.12 \pm 18.6$ & $69.9$ \\
Diagonal$_{d_h=512}$ & $59.7 \pm 5.1$ & $100.0 \pm 0.0$ & $20.9 \pm 0.1$ & $100.0 \pm0.0$ & $70.2$ \\
WH$_{d_h=128}$ & $69.7 \pm 8.8$ & $93.1 \pm 13.9$ & $20.5 \pm 0.3$ & $50.7 \pm 0.3$ & $58.5$ \\
WH$_{d_h=512}$ & $35.5 \pm 1.8$ & $58.5 \pm 2.5$ & $23.8 \pm 1.1$ & $71.4 \pm 12.9$ & $47.3$ \\
BD$_{d_h=256,\;b=2}$ & $92.5 \pm 14.0$ & $72.7 \pm 3.0$ & $37.7 \pm 2.6$ & $99.6 \pm 0.8$ & $75.6$ \\
BD$_{d_h=128,\;b=4}$ & $99.8 \pm 0.2$ & $85.9 \pm 11.3$ & $54.0 \pm 12.5$ & $95.3 \pm 3.9$ & $83.8$ \\
BD$_{d_h=64,\;b=8}$ & $99.9 \pm 0.1$ & $91.3 \pm 6.3$ & $70.6 \pm 21.4$ & $54.1 \pm 4.9$ & $79.0$ \\
BD$_{d_h=32,\;b=16}$ & $97.6 \pm 3.5$ & $94.7 \pm 6.5$ & $76.6 \pm 22.1$ & $50.8 \pm 0.3$ & $79.9$ \\
D\textendash DE$_{d_h=510,\;b=2}$ & $61.9 \pm 20.4$ & $91.3 \pm 11.5$ & $20.8 \pm 0.2$ & $97.8 \pm 2.9$ & $67.9$ \\
D\textendash DE$_{d_h=500,\;b=4}$ & $81.6 \pm 15.0$ & $85.3 \pm 18.0$ & $29.4 \pm 15.4$ & $83.7 \pm 9.4$ & $70.0$ \\
D\textendash DE$_{d_h=456,\;b=8}$ & $90.6 \pm 9.4$ & $90.7 \pm 3.0$ & $31.0 \pm 4.2$ & $79.9 \pm 3.5$ & $73.1$ \\
D\textendash DE$_{d_h=272,\;b=16}$ & $73.3 \pm 29.4$ & $84.8 \pm 8.5$ & $98.4 \pm 0.7$ & $83.8 \pm 11.3$ & $85.1$ \\
DPLR$_{d_h=171,r=1}$ & $46.5 \pm 26.3$ & $91.1 \pm 4.4$ & $25.8 \pm 10.2$ & $87.8 \pm 9.5$ & $62.8$ \\
DPLR$_{d_h=102,r=2}$ & $53.1 \pm 14.7$ & $96.8 \pm 5.1$ & $43.9 \pm 9.0$ & $79.7 \pm 14.4$ & $68.4$ \\
DPLR$_{d_h=57,r=4}$ & $81.1 \pm 16.6$ & $100.0 \pm 0.0$ & $68.3 \pm 19.3$ & $91.0 \pm 18.0$ & $85.1$ \\
DPLR$_{d_h=30,r=8}$ & $90.1 \pm 10.2$ & $100.0 \pm 0.0$ & $60.3 \pm 19.7$ & $50.7 \pm 0.3$ & $75.3$ \\
\midrule
\textbf{Random} & $20.0$ & $50.0$ & $20.0$ & $50.0$ & $35.0$ \\
\bottomrule
\end{tabular}
\label{tab:slice_fl_regular_only}
\end{table}
\\ \\
Table~\ref{tab:fl_regular_only} compares SLiCEs to a range of state-of-the-art sequence model baselines. As expected, the recurrent LSTM generalises near-perfectly on all four tasks, establishing a clear performance benchmark. This highlights the inherent strength of non-linear recurrent architectures for problems that require precise, step-by-step state tracking. 
Interestingly, sLSTM and xLSTM do not replicate this success. 
Among the parallel models, the strongest baselines are DeltaNet with negative eigenvalues and Gated DeltaProduct with negative eigenvalues, achieving $77.7\%$ and $80.7\%$, respectively. 
This aligns with the expectation that increased complexity in the state-transition matrix improves state-tracking performance.
\\ \\
This principle is further reinforced by the observation that a diagonal SLiCE outperforms Mamba. Mamba is restricted to a state-transition matrix with eigenvalues in the range $[0,1]$, whereas diagonal SLiCE is unrestricted. 
As shown by \citet{grazzi2024unlockingstatetrackinglinearrnns}, expanding the eigenvalue range of the state-transition matrix to $[-1,1]$ is crucial for solving regular language tasks. 
On average validation accuracy, the diagonal-dense and DPLR SLiCE are the two strongest performing parallelisable models. 
They outperform not only widespread modern architectures like Mamba and the Transformer, but also the strongest DeltaNet baselines specifically designed for increased expressivity.
\\ \\
A Friedman test detected a statistically significant difference in performance among the parallel-in-time models across the four tasks at the $5\%$ significance level, with $\chi^2 = 25.41$, $df = 13$, and $p = 0.0204$.
To assess our architectural hypothesis more directly, we partitioned the parallel models into those with expressive recurrences and those without.
The expressive class consists of DPLR-SLiCE, D-DE-SLiCE, BD-SLiCE, WH-SLiCE, DeltaNet$[-1,1]$, Gated DeltaProduct$[-1,1]$, DeltaNet, and Gated DeltaNet, while the non-expressive class consists of RWKV-7, mLSTM, Transformer, Mamba, S4D, and D-SLiCE. 
We then compared the mean rank of each class on each task, with lower rank indicating better performance. 
The expressive class attained mean ranks of $5.62$, $6.75$, $5.25$, and $6.12$ on Cycle Navigation, Even Pairs, Modular Arithmetic without Brackets, and Parity, respectively, compared to $10.00$, $8.50$, $10.50$, and $9.33$ for the non-expressive class. 
Overall, this corresponds to average ranks of $5.94$ for the expressive class and $9.58$ for the non-expressive class. 
We do not extend this descriptive summary to a statistical one, since four paired datasets are not sufficient for a Wilcoxon signed-rank test to assign significance \citep{wilcoxon1945individual}.
\begin{table}
\centering
\footnotesize
\caption{Average and standard deviation of validation accuracy over five runs for a range of recurrent and parallel models on the formal language tasks.}
\begin{tabular}{lccccc}
\toprule
\multicolumn{1}{c|}{\textbf{Model}} & \textbf{Cycle Nav.} & \textbf{Even Pairs} & \makecell{\textbf{Mod Arith.} \\ \textbf{No Brack.}} & 
\multicolumn{1}{c}{\textbf{Parity}} & \multicolumn{1}{|c}{\textbf{Average}} \\
\midrule
\multicolumn{5}{l}{\textbf{Recurrent}} \\
\midrule
LSTM & $100.0 \pm 0.0$  & $100.0 \pm 0.0$ & $99.9 \pm 0.1$ & $100.0 \pm 0.0$ & $100$ \\
sLSTM & $32.5 \pm 0.4$  & $100.0 \pm 0.0$ & $27.7 \pm 0.6$ & $100.0 \pm 0.0$ & $65.1$\\
xLSTM[1:1] & $53.5 \pm 5.6$  & $99.0 \pm 1.9$ & $29.3 \pm 1.6$ & $100.0 \pm 0.0$ & $70.5$ \\
\midrule
\multicolumn{5}{l}{\textbf{Parallel}} \\
\midrule
DeltaNet & $49.8 \pm 4.7$  & $100.0 \pm 0.0$ & $42.2 \pm 4.8$ & $57.8 \pm 0.8$ & $62.5$ \\
DeltaNet$[-1,1]$ & $46.7 \pm 6.1$  & $100.0 \pm 0.0$ & $66.4 \pm 8.8$ & $97.7 \pm 2.0$  & $77.7$ \\
Gated DeltaNet & $53.8 \pm 8.8$  & $100.0 \pm 0.0$ & $42.8 \pm 8.2$ & $56.5 \pm 1.9$ & $63.3$ \\
Gated DeltaProduct[-1,1] & $46.3 \pm 6.6$ & $100.0 \pm 0.0$ & $78.4 \pm 10.9$ & $98.0 \pm 1.4$ & $80.7$ \\
RWKV-7 & $37.8 \pm 5.0$  & $88.1 \pm 14.2$ & $39.5 \pm 6.1$ & $51.1 \pm 0.3$ & $54.1$\\
mLSTM  &$52.4 \pm 10.5$  & $99.9\pm 0.1$ & $28.8 \pm 3.1$ & $53.0 \pm 2.1$ & $58.5$ \\
Transformer & $24.4\pm 0.5$  & $90.4 \pm 10.4$ & $23.6 \pm 0.7$ & $52.2 \pm 0.4$ & $47.7$ \\
Mamba & $48.4 \pm 2.2$  & $100.0\pm 0.0$ & $33.1\pm 6.6$ & $54.2 \pm 2.1$ & $58.9$ \\
S4D & $23.7 \pm 1.1$  & $68.7 \pm 4.7$ & $21.7 \pm 0.4$ & $51.2 \pm 1.0$ & $41.3$ \\
Diagonal$_{d_h=512}$ & $59.7 \pm 5.1$ & $100.0 \pm 0.0$ & $20.9 \pm 0.1$ & $100.0 \pm0.0$ & $70.2$ \\
WH$_{d_h=128}$ & $69.7 \pm 8.8$ & $93.1 \pm 13.9$ & $20.5 \pm 0.3$ & $50.7 \pm 0.3$ & $58.5$ \\
BD$_{d_h=128,\;b=4}$ & $99.8 \pm 0.2$ & $85.9 \pm 11.3$ & $54.0 \pm 12.5$ & $95.3 \pm 3.9$ & $83.8$ \\
D\textendash DE$_{d_h=272\;b=16}$ & $73.3 \pm 29.4$ & $84.8 \pm 8.5$ & $98.4 \pm 0.7$ & $83.8 \pm 11.3$ & $85.1$ \\
DPLR$_{d_h=57,r=4}$ & $81.1 \pm 16.6$ & $100.0 \pm 0.0$ & $68.3 \pm 19.3$ & $91.0 \pm 18.0$ & $85.1$ \\
\midrule
\bottomrule
\textbf{Random} & $20.0$  & $50.0$  & $20.0$ & $50.0$ & $35.0$ \\
\bottomrule
\end{tabular}
\vspace{-1em}
\label{tab:fl_regular_only}
\end{table}
\\ \\
Table~\ref{tab:uea_full_results_rounded} and \ref{tab:uea_av_results} present the impact of replacing the Log-NCDE's non-linear vector field with a dense, BD, WH, diagonal, D-DE, sparse, or DPLR structured linear vector field. 
The original baselines from Table~\ref{tab:UEA_results_hypopt} are also included. 
A tie-corrected Friedman test across the 14 models and 6 datasets did not detect a statistically significant difference in performance at the $5\%$ significance level, with $\chi^2 = 19.89$, $df = 13$, and $p = 0.098$.
Nevertheless, the results show that the block-diagonal structure achieves very similar performance to the non-linear model, whilst reducing the average time per training step by a factor of nearly $20$. 
Notably, BD-SLiCE also outperforms a dense linear vector field of the same hidden dimension, which suggests that the block-diagonal structure may provide benefits beyond simply reducing computational cost. 
A possible explanation for this is the structure's conceptual link to the multi-head attention mechanism found in Transformers. 
Each block in BD-SLiCE can be viewed as an independent head, processing the input path in its own distinct subspace. 
This architectural choice encourages the model to learn multiple specialised representations of the time-series dynamics in parallel, with each block potentially focusing on different features or patterns. 
This inherent modularity could prevent the overfitting that may affect a single dense model, ultimately contributing to the strong performance of BD-SLiCE.
\begin{table}
\caption{Mean and standard deviation of test set accuracy over five data splits on a subset of the UEA-MTSCA. The best-performing model in each column is highlighted in bold, and the second-best is underlined. Models are sorted by average rank, with the top performing model first.}
\label{tab:uea_full_results_rounded}
\centering
\footnotesize
\begin{tabular}{lcccccc}
\toprule
\textbf{Model} & \textbf{EW} & \textbf{EC} & \textbf{HB} & \textbf{MI} & \textbf{SCP1} & \textbf{SCP2} \\
\midrule
BD-SLiCE & $86.1 \pm 3.6$ & $28.6 \pm 6.4$ & $77.4 \pm 5.6$ & $53.0 \pm 2.1$ & $\underline{84.9 \pm 1.9}$ & $\mathbf{54.0 \pm 7.4}$ \\
Log-NCDE & $85.6 \pm 5.1$ & $\mathbf{34.4 \pm 6.4}$ & $75.2 \pm 4.7$ & $53.7 \pm 5.3$ & $83.1 \pm 2.9$ & $53.7 \pm 4.1$ \\
D-DE-SLiCE & $85.6 \pm 6.0$ & $27.3 \pm 6.9$ & $73.9 \pm 3.8$ & $\mathbf{54.7 \pm 3.5}$ & $84.7 \pm 3.7$ & $51.6 \pm 5.0$ \\
WH-SLiCE & $85.0 \pm 6.0$ & $30.1 \pm 4.8$ & $76.1 \pm 5.9$ & $49.5 \pm 6.8$ & $82.4 \pm 2.2$ & $51.9 \pm 5.4$ \\
DPLR-SLiCE & $84.4 \pm 5.2$ & $27.6 \pm 4.7$ & $74.2 \pm 5.4$ & $51.6 \pm 5.8$ & $83.5 \pm 2.2$ & $50.9 \pm 5.8$ \\
D-SLiCE & $79.4 \pm 5.8$ & $27.1 \pm 4.6$ & $72.9 \pm 5.1$ & $\underline{54.4 \pm 6.3}$ & $83.5 \pm 2.2$ & $53.0 \pm 5.9$ \\
LRU & $\underline{87.8 \pm 2.9}$ & $21.5 \pm 2.2$ & $\mathbf{78.4 \pm 6.7}$ & $48.4 \pm 5.1$ & $82.6 \pm 3.5$ & $51.2 \pm 3.6$ \\
S6 & $85.0 \pm 16.1$ & $26.4 \pm 6.5$ & $76.5 \pm 8.3$ & $51.3 \pm 4.8$ & $82.8 \pm 2.8$ & $49.9 \pm 9.5$ \\
S5 & $81.1 \pm 3.7$ & $24.1 \pm 4.4$ & $\underline{77.7 \pm 5.6}$ & $47.7 \pm 5.5$ & $\mathbf{89.9 \pm 4.7}$ & $50.5 \pm 2.6$ \\
S-SLiCE & $\underline{87.8 \pm 4.2}$ & $\underline{30.4 \pm 6.7}$ & $72.6 \pm 5.5$ & $47.7 \pm 2.4$ & $82.8 \pm 1.6$ & $49.5 \pm 4.0$ \\
DE-LNCDE & $\mathbf{88.3 \pm 3.7}$ & $25.8 \pm 4.1$ & $74.2 \pm 4.5$ & $49.8 \pm 5.3$ & $81.9 \pm 4.0$ & $49.8 \pm 2.4$ \\
NCDE & $75.0 \pm 4.0$ & $29.9 \pm 6.6$ & $73.9 \pm 2.6$ & $49.5 \pm 2.9$ & $79.8 \pm 5.7$ & $53.0 \pm 2.9$ \\
NRDE & $83.9 \pm 7.3$ & $25.3 \pm 1.8$ & $72.9 \pm 4.9$ & $47.0 \pm 5.8$ & $80.9 \pm 2.6$ & $\underline{53.7 \pm 6.9}$ \\
Mamba & $70.9 \pm 15.9$ & $27.9 \pm 4.6$ & $76.2 \pm 3.9$ & $47.7 \pm 4.5$ & $80.7 \pm 1.4$ & $48.2 \pm 4.0$ \\
\bottomrule
\end{tabular}
\end{table}
\begin{table}[H]
\caption{Average test accuracy, rank, training time per 1,000 steps, and GPU memory usage across six datasets from the UEA-MTSCA. All SLiCE variants use a parallel associative scan during training. Therefore, the Walsh–Hadamard, DPLR, and sparse SLiCEs are treated as dense LNCDEs (see Section \ref{sec:slice_comp}), and their timing and GPU memory results are omitted. All timing and GPU memory results were performed on an NVIDIA H100 GPU.}
\vspace{0.5mm}
\label{tab:uea_av_results}
\centering
\footnotesize
\begin{tabular}{lcccc}
\toprule
\textbf{Model} & \textbf{Av. Acc} & \textbf{Av. Rank} & \textbf{Av. Time / 1000 Steps (s)} & \textbf{Av. GPU mem (MB)} \\
\midrule
BD-SLiCE & $64.0$ & $3.2$ & $68.1$ & $2344$\\
Log-NCDE & $64.3$ & $4.0$ & $1321.7$ & $2177$ \\
D-DE-SLiCE & $63.0$ & $5.7$ & $66.7$ & $2302$ \\
WH-SLiCE & $62.5$ & $6.7$ & $-$ & $-$ \\
DPLR-SLiCE & $62.0$ & $7.0$ & $-$ & $-$ \\
D-SLiCE & $61.7$ & $7.2$ & $11.0$ & $1875$ \\
LRU & $61.7$ & $7.3$ & $26.9$ & $4308$ \\
S6 & $62.0$ & $7.7$ & $20.1$ & $2938$ \\
S5 & $61.8$ & $8.0$ & $21.9$ & $3327$ \\
S-SLiCE & $61.8$ & $8.2$ & $-$ & $-$ \\
DE-LNCDE & $61.6$ & $8.3$ & $77.2$ & $12756$ \\
NCDE & $60.2$ & $8.8$ & $6923$ & $1962$ \\
NRDE & $60.6$ & $10.3$ & $3431$ & $2858$ \\
Mamba & $58.6$ & $10.8$ & $60.0$ & $4535$ \\
\bottomrule
\end{tabular}
\end{table}
Figure~\ref{fig:time-vs-acc} provides a visual summary of the accuracy--speed trade-off on the UEA-MTSCA benchmark, with the Pareto frontier highlighting the non-dominated models. 
The frontier consists of diagonal SLiCE, S6, block-diagonal SLiCE, and Log-NCDE, each representing a different balance between training time and predictive accuracy. 
Log-NCDE attains the strongest test accuracy, but at a substantially higher average time per training step. 
Moving along the frontier, block-diagonal SLiCE reduces the average time per training step by a factor of $20$ relative to Log-NCDE while maintaining very similar test accuracy. 
Diagonal SLiCE pushes training time lower still, below that of the SSM baselines, though with some further loss in accuracy consistent with its reduced expressivity. 
S6 occupies an intermediate point on the frontier, offering a modest improvement in accuracy over diagonal SLiCE, although the gain is smaller than that obtained by moving from S6 to block-diagonal SLiCE. 
By contrast, dense LNCDE slightly increases run-time and substantially raises GPU memory usage relative to the block-diagonal variant, while also reducing test accuracy, which may indicate over-fitting due to the large number of parameters per state transition. 
Finally, Figure~\ref{fig:time-vs-acc} shows that Mamba, NCDE, and NRDE do not offer competitive trade-offs on either axis on this benchmark.
\begin{figure}
    \centering
    \includegraphics[width=\linewidth]{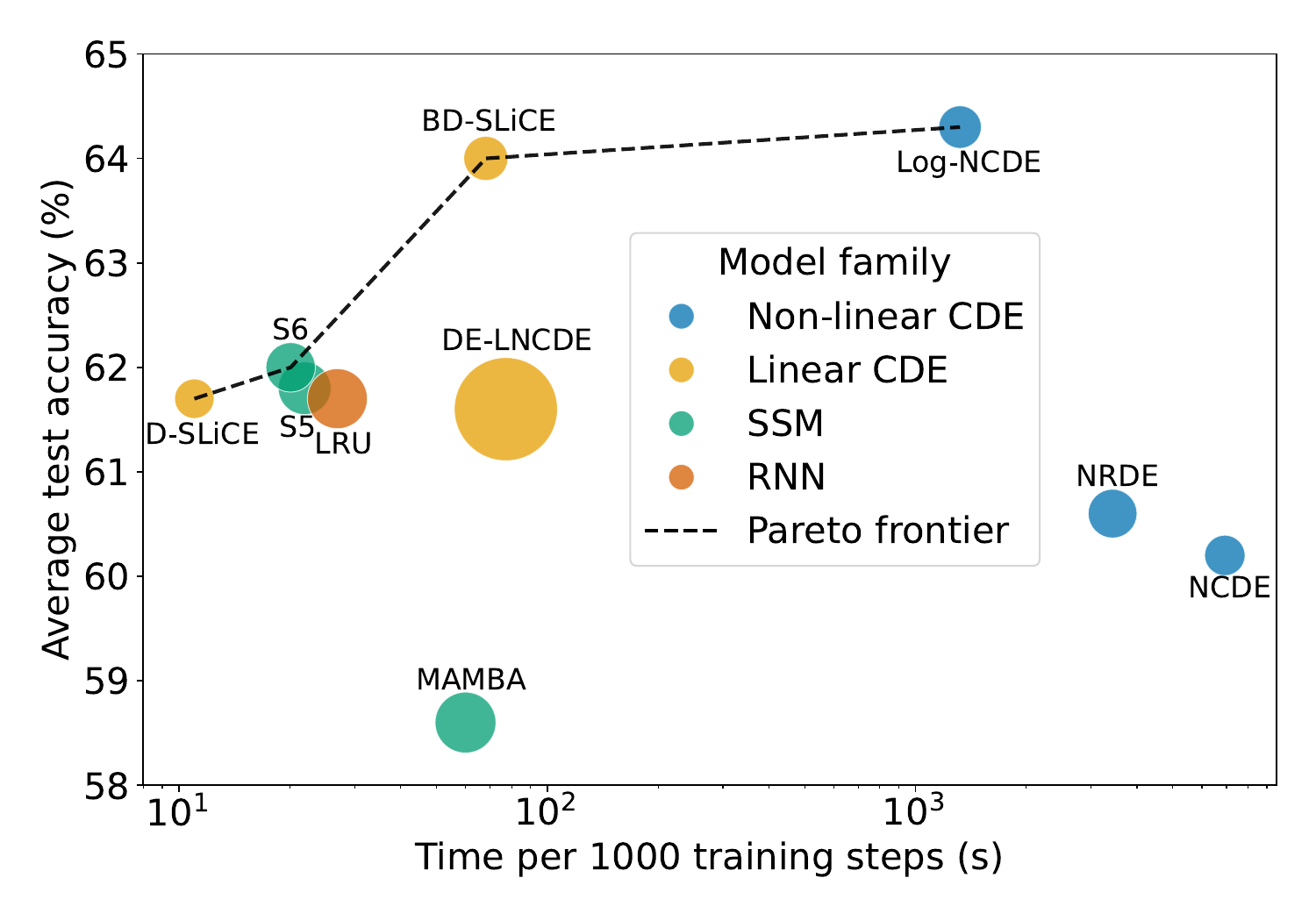}
    \caption{Average per-step training time versus average validation accuracy across six multivariate time series classification datasets from the UEA-MTSCA. Each point represents a model, with circle area proportional to average GPU memory usage. We compare four families of models: a recurrent neural network (LRU), SSMs (S5, S6, and Mamba), non-linear NCDEs (NCDE, NRDE, and Log-NCDE), and Linear NCDEs (Diagonal SLiCE, Block-Diagonal SLiCE, and Dense LNCDE). All timing and GPU memory results were performed on an NVIDIA H100 GPU.}
    \label{fig:time-vs-acc}
\end{figure}
\\ \\
To understand their relative impact, we evaluate how the Log-ODE method and parallel associative scan influence time per training step and GPU memory for the diagonal SLiCE, block-diagonal SLiCE, and dense LNCDE. The EigenWorms dataset is chosen for this comparison, as it contains approximately $18{,}000$ observations per time series. Table~\ref{tab:time_par_assoc_scan} summarises the effect of applying a parallel associative scan with varying chunk sizes on time per $1{,}000$ training steps without the Log-ODE method. For all three models, increasing the number of parallel steps yields strong reductions in time per training step. The impact of the high I/O costs associated with an associative scan is evident from the diminishing benefit of a small number of parallel steps as you move from diagonal, through block-diagonal, to dense matrices.
\begin{table}
\caption{Time per $1{,}000$ training steps (s) for diagonal SLiCE, block-diagonal SLiCE, and dense LNCDE on EigenWorms when a parallel associative scan is applied with various chunk sizes. Experiments were performed on an NVIDIA H100, and the batch size is $1$.}
\label{tab:time_par_assoc_scan}
\centering
\begin{tabular}{c|ccc}
\toprule
Parallel Steps & D-SLiCE & BD-SLiCE & DE-LNCDE \\
\midrule
None & $311.0$ & $374.89$ & $444.68$ \\
$4$ & $134.2$ & $326.75$ & $439.51$ \\
$16$ & $57.50$ & $161.74$ & $257.56$ \\
$64$ & $31.01$ & $68.59$ & $126.01$ \\
$256$ & $21.10$ & $30.53$ & $71.54$ \\
\bottomrule
\end{tabular}
\end{table}
\\ \\
Table~\ref{tab:time_assoc_scan_log_ode} compares the time per $1{,}000$ training steps and GPU memory for diagonal SLiCE, block-diagonal SLiCE, and dense LNCDE on EigenWorms when using a parallel associative scan and the Log-ODE method. As expected, both GPU memory and run-time increase monotonically for every combination of associative scan and Log-ODE method as the model structure transitions from diagonal, through block-diagonal, to dense matrices. The consistent $2.69$ GB floor across several configurations likely reflects peak memory usage from fixed operations outside the recurrence. Without the Log-ODE method, the dense LNCDE exhibits high GPU memory consumption, which constrained experiments to a batch size of $4$. When both the Log-ODE method and a parallel associative scan are applied, the diagonal and block-diagonal SLiCE achieve comparable times per training step, indicating that the recurrence contributes less to overall computation under these parameter settings. Overall, combining the Log-ODE method with a parallel associative scan reduces the time per training step by a factor of approximately $30$ for the diagonal and block-diagonal SLiCE without affecting GPU memory, and by a factor of $16$ for the dense LNCDE while lowering GPU memory usage by over a factor of $5$.
\begin{table}
\caption{Comparison of training time and GPU memory for EigenWorms using a parallel associative scan and applying the Log-ODE method. Experiments were performed on an NVIDIA H100 with a batch size of $4$.}
\label{tab:time_assoc_scan_log_ode}
\centering
\begin{tabular}{cccrr}
\toprule
\multirow{2}{*}{Model} &
\multirow{2}{*}{Metric} &
\multirow{2}{*}{\makecell{Log-ODE \\ Interval}} &
\multicolumn{2}{c}{Parallel Steps} \\ \cmidrule(l){4-5}
 & & & None & $128$ \\
\midrule
\multirow{4}{*}{D-SLiCE}
& \multirow{2}{*}{Time / $1$k steps [s]} & $1$ & $317.49$ & $23.51$ \\
& & $12$ & $33.03$ & $10.96$ \\[4pt]
& \multirow{2}{*}{GPU Memory [GB]} & $1$ & $2.69$ & $2.69$ \\
& & $12$ & $2.69$ & $2.69$ \\
\midrule
\multirow{4}{*}{BD-SLiCE}
& \multirow{2}{*}{Time / $1$k steps [s]} & $1$ & $378.20$ & $48.74$ \\
& & $12$ & $46.98$ & $13.24$ \\[4pt]
& \multirow{2}{*}{GPU Memory [GB]} & $1$ & $2.69$ & $4.73$ \\
& & $12$ & $2.69$ & $2.69$ \\
\midrule
\multirow{4}{*}{DE-LNCDE}
& \multirow{2}{*}{Time / $1$k steps [s]} & $1$ & $465.94$ & $167.86$ \\
& & $12$ & $47.95$ & $29.37$ \\[4pt]
& \multirow{2}{*}{GPU Memory [GB]} & $1$ & $35.45$ & $51.84$ \\
& & $12$ & $2.69$ & $6.79$ \\
\bottomrule
\end{tabular}
\end{table}

\subsection{Limitations}

First, we do not impose stability constraints on the matrices $A_\theta^i$ in the models considered here. 
Consequently, although we did not observe instability on the long time series considered in this chapter, one should not expect the current parameterisations to remain uniformly stable as the sequence length grows arbitrarily large. 
Therefore, investigating stable matrix structures for the $A_\theta^i$ is a natural direction for extending SLiCEs to more extreme long-range settings.
\\ \\
Second, SLiCEs are sequence-to-sequence models. 
They produce a state update for each new sample in the input, and are therefore susceptible to over-sampled data in the same manner as other discrete sequence models. 
Combining SLiCEs with the Log-ODE method allows them to consume paths as input, by abstracting the flow on each interval away from the individual samples. 
However, the output of a Log-SLiCE is itself a sequence of samples, corresponding to the value of the output path at the end of each interval to which the Log-ODE method is applied. 
This means that one cannot stack two Log-SLiCEs directly, as the first level produces a sequence whereas the second consumes a path. 
A natural next step is therefore to develop a path-to-path model, as discussed in more detail in Section~\ref{sec:conc_future}.

\section{Conclusion}

\label{sec:lncde_conclusion}

This chapter introduced Linear NCDEs, where the non-linear vector field $f_{\theta}(h_s)$ is replaced by a linear one, $A_{\theta}h_s$. Linear NCDEs retain the theoretical expressivity of their non-linear counterparts while enabling parallel-in-time computation. The core innovation is removing the recurrent differential equation solve and instead solving for the flow independently on each interval. These flows are then composed using a parallel associative scan. This yields substantial performance gains, particularly for long sequences. On the EigenWorms dataset, which has nearly $18{,}000$ observations per time series, this method reduces the time per training step by a factor of over $250$. When combined with the Log-ODE method from Chapter~\ref{chap:ncde}, this performance gain increases to a factor of nearly $1500$, reducing the time for $10{,}000$ training steps from $3$ days to $3$ minutes.
\\ \\
Section \ref{sec:ssms} used the Linear NCDE framework to theoretically analyse the expressivity of SSMs. 
This analysis demonstrated that prominent models like S4D and Mamba correspond to affine Linear NCDEs whose state-transition matrices are constrained to be diagonal. 
The diagonal structure was shown to severely limit theoretical expressivity. 
This limitation was confirmed empirically on state-tracking benchmarks, where diagonal models required stacking multiple layers to solve tasks that dense Linear NCDEs could solve with one.
\\ \\
While theoretically powerful, the number of parameters and computational cost of dense state-transition matrices scales with the cube of the hidden dimension, making them impractical for large models. 
To address this, Section \ref{sec:slice} introduced SLiCEs, which bridge the efficiency-expressivity gap. 
By employing structured matrices, such as block-diagonal, diagonal-plus-low-rank, and Walsh-Hadamard, SLiCEs reduce computational complexity while retaining the maximal probabilistic expressivity of their dense counterparts.
Extensive experiments validated the SLiCE approach. 
On formal language benchmarks that test state-tracking, SLiCEs outperformed specialised SSM baselines. 
In particular, a block-diagonal and DPLR structure established the new state-of-the-art for parallel models on these tasks. 
This success extends to the real-world UEA datasets, where replacing the non-linear vector field of a Log-NCDE with a block-diagonal linear vector field does not decrease performance, while bringing the training time in line with other modern sequence models.
\\ \\
SLiCEs provide a strong foundation for future work in several key directions. 
An immediate technical goal is the development of efficient GPU kernels for the matrix exponential to allow for exact flow computations. 
Further research can address open questions around path-to-path training, input symmetries, and the inclusion of bidirectional context in the vector fields, all of which are discussed in more detail in Section~\ref{sec:conc_future}. 
Addressing these challenges will be crucial for the ultimate goal of scaling these expressive, parallelisable architectures to new and more complex domains.
\chapter{Conclusion}

\begin{quoting}
    We can only see a short distance ahead, but we can see plenty there that needs to be done.
\end{quoting}
\noindent\large---Alan Turing, \emph{Computing Machinery and Intelligence} (1950)\normalsize

\section{Principal Contributions}

Chapter~\ref{chap:cde} established a mathematical framework for understanding data that evolves continuously in time. 
Paths are the central objects, signatures provide principled summaries of path segments, CDEs describe how paths drive the evolution of hidden states, and the Log-ODE method provides an efficient way to approximate the resulting dynamics. 
NCDEs translate this framework into machine learning, but their practical scalability is limited by the need to solve a non-linear differential equation sequentially during training. 
The principal contributions of this thesis addressed this limitation in three complementary stages. 
First, Chapter~\ref{chap:ncde} applied the Log-ODE method, an efficient and accurate approach to approximating the solution of a CDE, to NCDEs. 
This improved both their computational efficiency and empirical performance. 
Second, Section~\ref{sec:lncde} introduced Linear NCDEs, which replaced the non-linear vector field of an NCDE with a linear one. 
This yielded explicit solutions for the flow, removing the need for a non-linear differential equation solver and enabling parallel-in-time computation. 
Third, Section~\ref{sec:slice} developed SLiCEs, which improved the balance between expressivity and computational efficiency through the use of structured matrices. 
Together, these contributions reduced the time per training step by up to three orders of magnitude while improving performance on real-world benchmarks, making continuous-time models feasible at scales that were previously impractical.
\\ \\
Applying the Log-ODE method to NCDEs required showing that the neural network vector field satisfied the necessary regularity assumptions.
Chapter~\ref{chap:lipgamma} strengthened the regularity theory needed for this step by proving an explicit bound for the composition of $\mathrm{Lip}(\gamma)$ functions when $1 < \gamma \leq 2$. 
Chapter~\ref{chap:ncde} then applied this theory to a class of fully connected neural networks parametrising NCDE vector fields, proving explicit $\mathrm{Lip}(\gamma)$ bounds for these architectures. 
This justified the use of the Log-ODE method during training. 
Empirically, Log-NCDEs improved upon standard NCDEs across real-world multivariate time series benchmarks, increasing predictive performance while reducing training cost. 
However, because the hidden state was still evolved by a non-linear differential equation, the forward pass remained inherently sequential.
\\ \\
Linear NCDEs addressed this bottleneck by replacing the non-linear vector field with one that is linear in the hidden state. 
This change yielded closed-form flows on each interval, removing the need for a recurrent differential equation solver and enabling parallel-in-time computation via an associative scan. 
Section~\ref{sec:lncde} further showed that this linearisation preserves the theoretical expressivity of NCDEs while maintaining strong empirical performance on the benchmarks considered. 
Section~\ref{sec:ssms_are_lcdes} provided a unified interpretation of modern structured state-space models, such as S4D and Mamba, as affine Linear NCDEs with diagonal structure. 
Both the theory and the state-tracking experiments showed that, although efficient, diagonal structure discards hidden-state interactions that are important for expressivity.
\\ \\
SLiCEs refined this picture by improving the trade-off between expressivity and efficiency within the Linear NCDE framework. 
Rather than using dense or diagonal matrices, Section~\ref{sec:slice} introduced structured matrices, such as block-diagonal and diagonal-plus-low-rank, which reduce the computational cost while retaining maximal probabilistic expressivity. 
This showed that poor state-tracking is not an unavoidable consequence of efficient parallel training, but instead a consequence of overly restrictive recurrence structure. 
Empirically, SLiCEs outperformed specialised parallel baselines on formal language and state-tracking benchmarks. 
On real-world multivariate time series tasks, they also delivered a substantially stronger accuracy-speed trade-off than either non-linear continuous-time models or diagonal structured state-space models.
\\ \\
Overall, this thesis showed that the main obstacle to scalable continuous-time learning was not the continuous-time viewpoint itself, but the combination of non-linear vector fields and inefficient numerical methods. 
By combining the Log-ODE method, closed-form linear flows, and expressive structured recurrences, it showed that continuous-time models can be made scalable without giving up the advantages of the continuous-time framework.

\section{Future Work}

\label{sec:conc_future}

There are a number of interesting directions for future work, from which we highlight five we view as particularly promising or important.
\\ \\
First, parallel associative scans provide significant benefit for training models on the million parameter scale. However, the high I/O costs prevent scaling to billions of parameters, as discussed in Section \ref{sec:slice_comp}. 
We are currently exploring the possibility of a GPU kernel that will allow fast matrix exponentials and parallel associative scans when we have many small independent systems, i.e. a block-diagonal SLiCE.
Another approach to overcoming this barrier is building on the work of \cite{yang2024parallelizing}, and developing fast chunk-wise methods for a wider range of structured matrices, as explored by \citet{cirone2025parallelflowparallelizinglineartransformers}. 
\\ \\
Second, there is an inherent over-parametrisation in Linear NCDEs. To see why, consider a trained Linear NCDE
\begin{equation}
    \label{eq:change_of_basis1}
    \mathrm{d} h_s = \sum_{i=1}^{d_\omega}A^i_{\theta}h_s\mathrm{d}\omega^{X,i}_s,
\end{equation}
with a linear readout $L_{\theta}h_s$. 
For any invertible matrix $P \in \mathbb{R}^{d_h\times d_h}$, we can apply a change of basis $h'_s=Ph_s$. The dynamics of the transformed state are then given by
\begin{equation}
    \label{eq:change_of_basis2}
    \mathrm{d} h'_s = \sum_{i=1}^{d_\omega}PA^i_{\theta}P^{-1}h'_s\mathrm{d}\omega^{X,i}_s = \sum_{i=1}^{d_\omega}B^i_{\theta}h'_s\mathrm{d}\omega^{X,i}_s.
\end{equation}
When equipped with the transformed linear readout $L_{\theta}P^{-1}h'_s$, the system described by \eqref{eq:change_of_basis2} yields an identical output to \eqref{eq:change_of_basis1}, despite having different learnt parameters. 
This establishes an entire equivalence class of models for any single solution found during training, with each member of the class corresponding to a different choice of basis for the hidden state space. 
Similarly to the identifiability issues arising in the classical system identification setting discussed in Section~\ref{sec:classical_approaches}, the parameters of a Linear NCDE are therefore only identifiable up to similarity transformations.
\\ \\
This equivalence has significant potential implications for optimisation and interpretation. 
The existence of a continuous family of equivalent solutions creates manifolds in the loss landscape along which the gradient is zero, corresponding to moves in parameter space that amount only to a change of basis \citep{Li2019Symmetry}. 
In principle, this can lead to ill-conditioned optimisation and may hinder the convergence of gradient-based methods \citep{saarinen1993ill}. 
However, in the SLiCE experiments, we did not observe clear optimisation failures that could be specifically attributed to this symmetry. 
Even so, it implies that different training runs may converge to parametrisations that realise the same input-output map in different hidden bases, which makes the learned matrices difficult to compare or interpret directly. 
Therefore, a natural direction for future work is to optimise over similarity classes rather than individual parametrisations.
Similar issues have motivated SGD variants for ReLU neural networks that operate on a quotient space of the weights \citep{meng2019gsgd}. 
Alternatively, one could reduce the redundancy by imposing additional structure on the hidden-state representation, for example by constraining one of the $A^i_\theta$ to be upper triangular.
\\ \\
Third, \citet{cirone2024deepSSM} established that replacing the Linear NCDE's linear readout with a non-linear function is sufficient to extend their path-to-point universality to path-to-path universality.
However, the optimal approach to training a path-to-path Linear NCDE remains an open question.
Standard objectives, such as pointwise mean squared error, treat paths as collections of independent samples. 
This neglects global temporal structure and leads to over-fitting local fluctuations, while failing to capture essential features such as ordering, variability, or long-range dependence \citep{ramsay2005functional, cuturi2017softdtw, chen2025patchwise}.
Alternative choices include integral norms \citep{ferraty2007nonparametric}, distances in signature space \citep{Lyons1998, chevyrev2016characteristic, kiraly2019kernels, salvi2021signature, cass2024weighted}, or Kernel/MMD-based metrics tailored to streams \citep{Cuturi2007AKF, mikalsen2018time, wynne2022kernel}.
These options can handle irregular sampling, are reparametrisation-invariant when desired, and are suitable for models whose outputs are paths.
\\ \\
Fourth, Log-NCDEs are path-to-sequence models, as they produce the value of the output path only at the endpoints of the intervals where the Log-ODE method has been applied.
Linear NCDEs provide a tractable approach to path-to-path models, by extending their flow from describing the hidden state trajectory to the signature of those trajectories. In this way, it is possible to query the output path and obtain the signature over an interval in the same manner we would query the input path.
This elevates continuous-time learning from path-to-sequence architectures to genuine path-to-path models.
Furthermore, this would enable path-to-path stacking, where the output of one Log-Linear NCDE becomes the input to another.
\\ \\
Finally, as Linear CDEs allow us to contextualise the state-transition matrix based on the input, we could seek to contextualise the CDE based on the past, and potentially future, of the input path.
A possible implementation would be allowing the Linear NCDE's vector field to depend linearly on the signature of the input path.
However, this significantly increases the parameter count and computational complexity of the model.
An alternative approach is conditioning the vector field to depend on the output of another Linear NCDE.
This is an alternative form of model stacking, where instead of using the output of the Linear NCDE from one layer as the input to the next, we instead condition the vector field of the next layer based on the output of the previous.
\\ \\
Each of these directions offers an opportunity to advance both the theoretical underpinnings and the practical scalability of continuous-time models.
Pursuing them will mature the path-based methodologies advocated for throughout this thesis, enhancing their power and broadening their applicability.
\newpage
\printbibliography

@book{roman2007advanced14,
  title={Advanced Linear Algebra},
  author={Roman, S.},
  chapter={14},
  series={Graduate Texts in Mathematics},
  year={2007},
  publisher={Springer New York}
}

@book{Luenberger1969,
  author    = {David G. Luenberger},
  title     = {Optimization by Vector Space Methods},
  publisher = {John Wiley \& Sons},
  year      = {1969},
  address   = {New York},
}

@article{hambly2010uniqueness,
  title={Uniqueness for the signature of a path of bounded variation and the reduced path group},
  author={Hambly, B. and Lyons, T.},
  journal={Annals of Mathematics},
  volume={171},
  pages={109-167},
  year={2010}
}

@article{Morrill2019,
  author={Morrill, James and Kormilitzin, Andrey and Nevado-Holgado, Alejo and Swaminathan, Sumanth and Howison, Sam and Lyons, Terry},
  journal={Computing in Cardiology (CinC)}, 
  title={The Signature-Based Model for Early Detection of Sepsis From Electronic Health Records in the Intensive Care Unit}, 
  year={2019},
}

@article{Moore2019-tp,
  title    = "Using path signatures to predict a diagnosis of Alzheimer's
              disease",
  author   = "Moore, P J and Lyons, T J and Gallacher, J and {Alzheimer's
              Disease Neuroimaging Initiative}",
  journal  = "PLoS One",
  volume   =  14,
  number   =  9,
  year     =  2019,
}

@inproceedings{Compagnoni2023On,
author = {Compagnoni, Enea and Scampicchio, Anna and Biggio, Luca and Orvieto, Antonio and Hofmann, Thomas and Teichmann, Josef},
booktitle = {International Joint Conference on Neural Networks (IJCNN)},
year = {2023},
title = {On the Effectiveness of Randomized Signatures as Reservoir for Learning Rough Dynamics},
}

@article{Cuchiero2022Discrete,
  author  = {Christa Cuchiero and Lukas Gonon and Lyudmila Grigoryeva and Juan-Pablo Ortega and Josef Teichmann},
  title   = {Discrete-time Signatures and Randomness in Reservoir Computing},
  journal = {IEEE Transactions on Neural Networks and Learning Systems},
  year    = {2022},
  volume  = {33},
  number  = {11},
  pages   = {6321--6330},
}

@inproceedings{kidger2020neuralcde,
    title={{N}eural {C}ontrolled {D}ifferential {E}quations for {I}rregular {T}ime {S}eries},
    author={Kidger, Patrick and Morrill, James and Foster, James and Lyons, Terry},
    booktitle={Proceedings of the 34th Conference on Neural Information Processing System (NeurIPS)},
    year={2020}
}

@inproceedings{morrill2021neuralrough,
  title={Neural Rough Differential Equations for Long Time Series},
  author={Morrill, James and Salvi, Cristopher and Kidger, Patrick and Foster, James and Lyons, Terry},
  booktitle={Proceedings of the 38th International Conference on Machine Learning (ICML)},
  year={2021}
}

@inproceedings{Walker2024LogNCDE,
  title={Log Neural Controlled Differential Equations: The Lie Brackets Make a Difference},
  author={Walker, Benjamin and McLeod, Andrew D. and Qin, Tiexin and Cheng, Yichuan and Li, Haoliang and Lyons, Terry},
  booktitle={Proceedings of the 41st International Conference on Machine Learning (ICML)},
  year={2024}
}

@article{Young1936AnIO,
  title={An inequality of the H{\"o}lder type, connected with Stieltjes integration},
  author={L. C. Young},
  journal={Acta Mathematica},
  year={1936},
  volume={67},
  pages={251-282}
}

@article{Chen1957IntegrationOP,
  title={Integration of Paths, Geometric Invariants and a Generalized Baker-Hausdorff Formula},
  author={Kuo Tsai Chen},
  journal={Annals of Mathematics},
  year={1957},
  volume={65},
  number={1},
}

@article{Chen1954Iterated,
author = {Kuo Tsai Chen},
title = {Iterated Integrals and Exponential Homomorphisms},
journal = {Proceedings of the London Mathematical Society},
volume = {s3-4},
number = {1},
pages = {502-512},
year = {1954}
}

@article{Weierstrass1885Uber,
author = {Weierstrass, K.}, 
title = {Uber die analytische Darstellbarkeit sogenannter willkurlicher Functionen einer reellen Veranderlichen},
journal = {Sitzungsberichte der Akademie zu Berlin},
pages = {633-639,789-805},
year = {1885}
}

@book{lyons2002,
    author = {Lyons, Terry and Qian, Zhongmin},
    title = {System Control and Rough Paths},
    publisher = {Oxford University Press},
    series = {Oxford Mathematical Monographs},
    year = {2002},
}

@article{BOEDIHARDJO2016720,
title = {The signature of a rough path: Uniqueness},
journal = {Advances in Mathematics},
volume = {293},
pages = {720-737},
year = {2016},
author = {Horatio Boedihardjo and Xi Geng and Terry Lyons and Danyu Yang},
}

@book{lyons2007differential,
  title={Differential Equations Driven by Rough Paths: {\'E}cole D'{\'e}t{\'e} de Probabilit{\'e}s de Saint-Flour XXXIV-2004},
  author={Lyons, T. and Caruana, M. and L{\'e}vy, T.},
  number={no. 1908},
  year={2007},
  publisher={Springer}
}

@book{Rudin1991,
  author    = {Walter Rudin},
  title     = {Functional Analysis},
  edition   = {2nd},
  publisher = {McGraw-Hill},
  year      = {1991},
  address   = {New York},
}

@article{Whitney1934analytic,
 author = {Hassler Whitney},
 journal = {Transactions of the American Mathematical Society},
 number = {1},
 pages = {63--89},
 publisher = {American Mathematical Society},
 title = {Analytic Extensions of Differentiable Functions Defined in Closed Sets},
 volume = {36},
 year = {1934}
}

@book{reutenauer1993free,
  title={Free Lie Algebras},
  author={Reutenauer, C.},
  series={London Mathematical Society Monographs},
  year={1993},
  publisher={Clarendon Press}
}

@book{cass2012new,
  title={New Trends in Stochastic Analysis and Related Topics: A Volume in Honour of Professor K. D. Elworthy},
  chapter={2},
  author={Cass, Thomas and Litterer, Christian and Lyons, Terry},
  series={Interdisciplinary mathematical sciences},
  year={2012},
  publisher={World Scientific}
}

@book{stein1970singular,
 author = {Elias M. Stein},
 publisher = {Princeton University Press},
 title = {Singular Integrals and Differentiability Properties of Functions},
 year = {1970}
}

@book{Lee2013,
  author    = {John M. Lee},
  title     = {Introduction to Smooth Manifolds},
  edition   = {2nd},
  series    = {Graduate Texts in Mathematics},
  volume    = {218},
  publisher = {Springer},
  address   = {New York},
  year      = {2013},
}

@article{baldi2018time,
author={Baldi, Pietro
and Berti, Massimiliano
and Haus, Emanuele
and Montalto, Riccardo},
title={Time quasi-periodic gravity water waves in finite depth},
journal={Inventiones mathematicae},
year={2018},
day={01},
volume={214},
number={2},
pages={739-911},
}

@article{hochreiter1997long,
author = {Hochreiter, Sepp and Schmidhuber, Jürgen},
year = {1997},
pages = {1735-80},
title = {Long Short-term Memory},
volume = {9},
number = {8},
journal = {Neural computation},
}

@inproceedings{GRU,
  title={On the Properties of Neural Machine Translation: Encoder--Decoder Approaches},
  author={Cho, Kyunghyun and van Merri{\"e}nboer, Bart and Bahdanau, Dzmitry and Bengio, Yoshua},
  booktitle={Proceedings of SSST-8, Eighth Workshop on Syntax, Semantics and Structure in Statistical Translation},
  pages={103--111},
  year={2014},
}

@inproceedings{orvieto2023resurrecting,
  title        = {Resurrecting Recurrent Neural Networks for Long Sequences},
  author       = {Orvieto, Antonio and Smith, Samuel L. and Gu, Albert and Fernando, Anushan and G{\"u}l{\c c}ehre, {\c C}aglar and Pascanu, R{\'a}zvan and De, Soham},
  booktitle    = {Proceedings of the 40th International Conference on Machine Learning (ICML)},
  year         = {2023},
}

@article{walker1931,
  author    = {G. T. Walker},
  title     = {On Periodicity in Series of Related Terms},
  journal   = {Proceedings of The Royal Society A: Mathematical, Physical and Engineering Sciences},
  volume    = {131},
  number    = {818},
  pages     = {518--532},
  year      = {1931},
  publisher = {The Royal Society},
}

@book{wiener1949,
  author    = {Norbert Wiener},
  title     = {Extrapolation, Interpolation, and Smoothing of Stationary Time Series},
  publisher = {The MIT Press},
  year      = {1949},
  address   = {Cambridge, MA}
}

@article{kalman1960,
  author    = {Rudolf E. Kalman},
  title     = {A New Approach to Linear Filtering and Prediction Problems},
  journal   = {ASME Journal of Basic Engineering},
  volume    = {82},
  pages     = {35--45},
  year      = {1960},
}

@book{gelb1974applied,
  author    = {Arthur Gelb},
  title     = {Applied Optimal Estimation},
  chapter   = {6},
  publisher = {MIT Press},
  year      = {1974},
  address   = {Cambridge, MA}
}

@inproceedings{julier1997new,
author = {Simon J. Julier and Jeffrey K. Uhlmann},
title = {New extension of the Kalman filter to nonlinear systems},
volume = {3068},
booktitle = {Signal Processing, Sensor Fusion, and Target Recognition VI},
organization = {International Society for Optics and Photonics},
publisher = {SPIE},
pages = {182 -- 193},
year = {1997},
}

@article{gordon1993novel,
author = {N.J. Gordon  and D.J. Salmond  and A.F.M. Smith },
title = {Novel approach to nonlinear/non-Gaussian Bayesian state estimation},
journal = {IEE Proceedings F (Radar and Signal Processing)},
volume = {140},
issue = {2},
pages = {107-113},
year = {1993},
}

@article{elman1990finding,
  author    = {Jeffrey L. Elman},
  title     = {Finding Structure in Time},
  journal   = {Cognitive Science},
  volume    = {14},
  number    = {2},
  pages     = {179--211},
  year      = {1990},
}

@inproceedings{bahdanau2015neural,
  author    = {Dzmitry Bahdanau and Kyunghyun Cho and Yoshua Bengio},
  title     = {Neural Machine Translation by Jointly Learning to Align and Translate},
  booktitle = {Proceedings of the 3rd International Conference on Learning Representations (ICLR)},
  year      = {2015},
}

@inproceedings{vaswani2017attention,
  author    = {Ashish Vaswani and Noam Shazeer and Niki Parmar and Jakob Uszkoreit and Llion Jones and Aidan N. Gomez and Łukasz Kaiser and Illia Polosukhin},
  title     = {Attention Is All You Need},
  booktitle = {Proceedings of the 31st Conference on Neural Information Processing Systems (NeurIPS)},
  year      = {2017},
}

@inproceedings{gu2024mamba,
  title     = {Mamba: Linear-Time Sequence Modeling with Selective State Spaces},
  author    = {Albert Gu and Tri Dao},
  booktitle = {Proceedings of the First Conference on Language Modeling},
  year      = {2024},
}

@inproceedings{gu2021efficiently,
  title={Efficiently Modeling Long Sequences with Structured State Spaces},
  author={Gu, Albert and Goel, Karan and R\'e, Christopher},
  booktitle={Proceedings of The 10th International Conference on Learning Representations ({ICLR})},
  year={2022}
}

@inproceedings{brown2020language,
  title={Language models are few-shot learners},
  author={Brown, Tom B and Mann, Benjamin and Ryder, Nick and Subbiah, Melanie and Kaplan, Jared and Dhariwal, Prafulla and Neelakantan, Arvind and Shyam, Pranav and Sastry, Girish and Askell, Amanda and others},
  booktitle={Proceedings of the 34th Conference on Neural Information Processing Systems (NeurIPS)},
  year={2020}
}

@inproceedings{He2016DeepRL,
  title     = {Deep Residual Learning for Image Recognition},
  author    = {He, Kaiming and Zhang, Xiangyu and Ren, Shaoqing and Sun, Jian},
  booktitle = {Proceedings of the 2016 IEEE Conference on Computer Vision and Pattern Recognition (CVPR)},
  pages     = {770--778},
  year      = {2016}
}

@article{martinez1992discrete,
	author = {R. Rico-Mart{\'i}nez and K. Krischer and I. G. Kevrekidis and M. C. Kube and  J. L. Hudson},
	title = {Discrete-vs. continuous-time nonlinear signal processing of Cu electrodissolution data},
	journal = {Chemical Engineering Communications},
	volume = {118},
	number = {1},
	pages = {25-48},
	year  = {1992},
	publisher = {Taylor \& Francis},
}

@inproceedings{chen2018neural,
  title={Neural Ordinary Differential Equations},
  author={Chen, Ricky TQ and Rubanova, Yulia and Bettencourt, Jesse and Duvenaud, David K},
  booktitle={Proceedings of the 32nd Conference on Neural Information Processing Systems (NeurIPS)},
  year={2018}
}

@article{bagnall2018ueamultivariatetimeseries,
      title={The UEA multivariate time series classification archive, 2018}, 
      author={Anthony Bagnall and Hoang Anh Dau and Jason Lines and Michael Flynn and James Large and Aaron Bostrom and Paul Southam and Eamonn Keogh},
      year={2018},
      journal={arXiv preprint, arXiv:1811.00075},
}

@article{pineda1987generalization,
 author = {Pineda, Fernando J.},
 journal = {Physical Review Letters},
 volume = {59},
 number = {19},
 title = {Generalization of Back-Propagation to Recurrent Neural Networks},
 year = {1987},
 pages = {2229-2232}
}

@article{Bryson1962steepest,
    author = {Bryson, A. E. and Denham, W. F.},
    title = "{A Steepest-Ascent Method for Solving Optimum Programming Problems}",
    journal = {Journal of Applied Mechanics},
    volume = {29},
    number = {2},
    pages = {247-257},
    year = {1962},
}

@article{pearlmutter1989learning,
  author={Pearlmutter, Barak A.},
  journal={Neural Computation}, 
  title={Learning State Space Trajectories in Recurrent Neural Networks}, 
  year={1989},
  volume={1},
  number={2},
  pages={263-269},
}

@article{boxuan2018residual,
author = {Yue, Boxuan and Fu, Junwei and Liang, Jun},
year = {2018},
title = {Residual Recurrent Neural Networks for Learning Sequential Representations},
volume = {9},
number = {3},
journal = {Information},
}

@phdthesis{kidger2022neuraldifferentialequations,
  title       = {On Neural Differential Equations},
  author      = {Kidger, Patrick},
  school      = {University of Oxford},
  year        = {2022},
}

@inproceedings{GRU-ODE,
author = {Brouwer, Edward De and Simm, Jaak and Arany, Adam and Moreau, Yves},
title = {GRU-ODE-Bayes: Continuous Modeling of Sporadically-Observed Time Series},
year = {2019},
booktitle = {Proceedings of the 33rd International Conference on Neural Information Processing Systems (NeurIPS)},
}

@inproceedings{ODERNN,
  title={Latent Ordinary Differential Equations for Irregularly-Sampled Time Series},
  author={Yulia Rubanova and Tian Qi Chen and David Kristjanson Duvenaud},
  booktitle={Proceedings of the 33rd International Conference on Neural Information Processing Systems (NeurIPS)},
  year={2019}
}

@phdthesis{Hochreiter1991UntersuchungenZD,
  title={Untersuchungen zu dynamischen neuronalen Netzen},
  author={Sepp Hochreiter},
  year={1991},
  school={Technische Universität München}
}

@incollection{vanishing,
  author       = {Sepp Hochreiter and Yoshua Bengio and Paolo Frasconi and J{\"u}rgen Schmidhuber},
  title        = {Gradient Flow in Recurrent Nets: The Difficulty of Learning Long–Term Dependencies},
  booktitle    = {A Field Guide to Dynamical Recurrent Neural Networks},
  editor       = {S.~C. Kremer and J.~F. Kolen},
  publisher    = {Wiley-IEEE Press},
  year         = {2001},
  pages        = {237--244},
}

@article{robustness,
author = {Xu, Huan and Mannor, Shie},
year = {2012},
title = {Robustness and Generalization},
volume = {86},
journal = {Machine Learning},
pages = {391--423}
}

@inproceedings{NNLip,
  author    = {Szegedy, Christian and Zaremba, Wojciech and Sutskever, Ilya and Bruna, Joan and Erhan, Dumitru and Goodfellow, Ian J. and Fergus, Rob},
  title     = {Intriguing properties of neural networks},
  booktitle = {Proceedings of the 2nd International Conference on Learning Representations (ICLR)},
  year      = {2014},
}

@article{ELFWING20183,
title = {Sigmoid-weighted linear units for neural network function approximation in reinforcement learning},
journal = {Neural Networks},
volume = {107},
pages = {3-11},
year = {2018},
note = {Special issue on deep reinforcement learning},
author = {Stefan Elfwing and Eiji Uchibe and Kenji Doya},
}

@inproceedings{weightreg,
author = {Hinton, Geoffrey E.},
title = {Learning Translation Invariant Recognition in Massively Parallel Networks},
year = {1987},
publisher = {Springer-Verlag},
address = {Berlin, Heidelberg},
booktitle = {Proceedings of the Parallel Architectures and Languages Europe, Volume I: Parallel Architectures},
pages = {1–13},
}

@inproceedings{weightdecay,
author = {Krogh, Anders and Hertz, John A.},
title = {A Simple Weight Decay Can Improve Generalization},
year = {1991},
booktitle = {Proceedings of the 5th International Conference on Neural Information Processing Systems (NeurIPS)},
}

@book{Griewank2000,
title = {Evaluating Derivatives: Principles and Techniques of Algorithmic Differentiation},
author = {Andreas Griewank and Andrea Walther},
year = {2008},
edition = {2nd},
publisher = {Society for Industrial and Applied Mathematics},
}

@inproceedings{Hall1950ABF,
  title={A basis for free Lie rings and higher commutators in free groups},
  author={Marshall Hall},
  year={1950},
  booktitle={Proceedings of the American Mathematical Society},
  volume={1},
  pages={575-581}
}

@software{jax2018github,
  author = {James Bradbury and Roy Frostig and Peter Hawkins and Matthew James Johnson and Chris Leary and Dougal Maclaurin and George Necula and Adam Paszke and Jake Vander{P}las and Skye Wanderman-{M}ilne and Qiao Zhang},
  title = {{JAX}: composable transformations of {P}ython+{N}um{P}y programs},
  url = {http://github.com/google/jax},
  year = {2018},
}

@article{Ree1958LieEA,
  title={Lie Elements and an Algebra Associated With Shuffles},
  author={Rimhak Ree},
  journal={Annals of Mathematics},
  year={1958},
  volume={68},
  number={2},
}

@article{StoneWeierstrass,
 author = {M. H. Stone},
 journal = {Mathematics Magazine},
 number = {4},
 pages = {167--184},
 publisher = {Mathematical Association of America},
 title = {The Generalized Weierstrass Approximation Theorem},
 volume = {21},
 year = {1948}
}

@article{Arribas2018DerivativesPU,
  author = {Perez Arribas, Imanol},
  title = {Derivatives pricing using signature payoffs},
  year = {2018},
  journal= {arXiv preprint arXiv:1809.09466}
}

@inproceedings{Lyons2014,
  author       = {Terry Lyons},
  title        = {Rough Paths, Signatures and the Modelling of Functions on Streams},
  booktitle    = {Proceedings of the International Congress of Mathematicians (ICM)},
  volume = {4},
  year         = {2014},
}

@book{ryan2002introduction,
  author    = {Raymond A. Ryan},
  title     = {Introduction to Tensor Products of Banach Spaces},
  volume    = {73},
  year      = {2002},
  publisher = {Springer},
  series    = {Springer Monographs in Mathematics}
}

@article{ChangLyonsNi2018,
  author       = {Jiawei Chang and Terry Lyons and Hao Ni},
  title        = {Super-multiplicativity and a lower bound for the decay of the signature of a path of finite length},
  journal      = {Comptes Rendus Mathématique},
  volume       = {356},
  number       = {7},
  pages        = {720--724},
  year         = {2018},
}

@inproceedings{Bonnier2019DeepST,
  title={Deep Signature Transforms},
  author={Patric Bonnier and Patrick Kidger and Imanol Perez Arribas and Cristopher Salvi and Terry Lyons},
  booktitle={Neural Information Processing Systems (NeurIPS)},
  year={2019}
}

@article{SigPrimer,
  author = {Chevyrev, Ilya and Kormilitzin, Andrey},
  title = {A Primer on the Signature Method in Machine Learning},
  publisher = {arXiv},
  year = {2016},
  journal= {arXiv preprint arXiv:1603.03788}
}

@article{SigPrimer2,
  author       = {Andrew McLeod and Terry Lyons},
  title        = {Signature methods in machine learning},
  journal      = {EMS Surveys in Mathematical Sciences},
  year         = {2025},
}

@article{Lyons1994DIFFERENTIALED,
  title={Differential Equations Driven by Rough Signals (I): An Extension of an Inequality of L. C. Young},
  author={Terry Lyons},
  journal={Mathematical Research Letters},
  year={1994},
  volume={1},
  pages={451-464}
}

@article{Boutaib2013,
author = {Boutaib, Youness and Gyurkó, Lajos and Lyons, Terry and Yang, Danyu},
year = {2013},
pages = {25-53},
title = {Dimension-free Euler estimates of rough differential equations},
volume = {59},
number = {1},
journal = {Revue Roumaine des Mathematiques Pures et Appliquees}
}

@article{CASTELL199513,
title = {An efficient approximation method for stochastic differential equations by means of the exponential Lie series},
journal = {Mathematics and Computers in Simulation},
volume = {38},
number = {1},
pages = {13-19},
year = {1995},
author = {Fabienne Castell and Jessica Gaines},
}

@inproceedings{S5,
      title={Simplified State Space Layers for Sequence Modeling}, 
      author={Jimmy T. H. Smith and Andrew Warrington and Scott W. Linderman},
      year={2023},
      booktitle={Proceedings of The 11th International Conference on Learning Representations (ICLR)}
}

@inproceedings{dauphin2017language,
  title        = {Language Modeling with Gated Convolutional Networks},
  author       = {Dauphin, Yann N. and Fan, Angela and Auli, Michael and Grangier, David},
  booktitle    = {Proceedings of the 34th International Conference on Machine Learning (ICML)},
  year         = {2017},
}

@inproceedings{kingma2017adam,
      title={Adam: A Method for Stochastic Optimization}, 
      author={Diederik P. Kingma and Jimmy Ba},
      year={2015},
      booktitle={Proceedings of the 3rd International Conference on Learning Representations (ICLR)}
}

@inproceedings{tay2020long,
  title={Long Range Arena: A Benchmark for Efficient Transformers},
  author={Yi Tay and Mostafa Dehghani and Samira Abnar and Yikang Shen and Dara Bahri and Philip Pham and Jinfeng Rao and Liu Yang and Sebastian Ruder and Donald Metzler},
  booktitle={Proceedings of the 9th International Conference on Learning Representations (ICLR)},
  year={2021}
}

@inproceedings{ioffe2015batch,
  title={Batch Normalization: Accelerating Deep Network Training by Reducing Internal Covariate Shift},
  author={Ioffe, Sergey and Szegedy, Christian},
  booktitle={Proceedings of the 32nd International Conference on Machine Learning (ICML)},
  year={2015},
}

@article{nguyen2022s4nd,
      title={S4ND: Modeling Images and Videos as Multidimensional Signals Using State Spaces}, 
      author={Eric Nguyen and Karan Goel and Albert Gu and Gordon W. Downs and Preey Shah and Tri Dao and Stephen A. Baccus and Christopher Ré},
      year={2022},
      journal={Proceedings of the 36th Conference on Neural Information Processing Systems (NeurIPS)}
}

@inproceedings{goel2022sashimi,
  title={It's Raw! Audio Generation with State-Space Models},
  author={Goel, Karan and Gu, Albert and Donahue, Chris and R{\'e}, Christopher},
  booktitle={Proceedings of the 39th International Conference on Machine Learning (ICML)},
  year={2022}
}

@inproceedings{lu2023structured,
      title={Structured State Space Models for In-Context Reinforcement Learning}, 
      author={Chris Lu and Yannick Schroecker and Albert Gu and Emilio Parisotto and Jakob Foerster and Satinder Singh and Feryal Behbahani},
      year={2023},
      booktitle={Proceedings of the 37th Conference on Neural Information Processing Systems (NeurIPS)}
}

@inproceedings{peng2023rwkv,
  title={RWKV: Reinventing RNNs for the Transformer Era},
  author={Peng, Bo  and
      Alcaide, Eric  and
      Anthony, Quentin  and
      Albalak, Alon  and
      Arcadinho, Samuel  and
      Biderman, Stella  and
      Cao, Huanqi  and
      Cheng, Xin  and
      Chung, Michael  and
      Derczynski, Leon  and
      Du, Xingjian  and
      Grella, Matteo  and
      Gv, Kranthi  and
      He, Xuzheng  and
      Hou, Haowen  and
      Kazienko, Przemyslaw  and
      Kocon, Jan  and
      Kong, Jiaming  and
      Koptyra, Bart{\l}omiej  and
      Lau, Hayden  and
      Lin, Jiaju  and
      Mantri, Krishna Sri Ipsit  and
      Mom, Ferdinand  and
      Saito, Atsushi  and
      Song, Guangyu  and
      Tang, Xiangru  and
      Wind, Johan  and
      Wo{\'z}niak, Stanis{\l}aw  and
      Zhang, Zhenyuan  and
      Zhou, Qinghua  and
      Zhu, Jian  and
      Zhu, Rui-Jie},
  booktitle={Findings of the Association for Computational Linguistics: EMNLP 2023},
  year      = {2023},
}

@inproceedings{yang2024gated,
  title={Gated Linear Attention Transformers with Hardware-Efficient Training},
  author={Yang, Songlin and Wang, Bailin and Shen, Yikang and Panda, Rameswar and Kim, Yoon},
  booktitle={Proceedings of the 41st International Conference on Machine Learning (ICML)},
  year={2024}
}

@inproceedings{li2022makes,
  title={What Makes Convolutional Models Great on Long Sequence Modeling?},
  author={Li, Yuhong and Cai, Tianle and Zhang, Yi and Chen, Deming and Dey, Debadeepta},
  booktitle={Proceedings of the 11th International Conference on Learning Representations (ICLR)},
  year={2023}
}

@inproceedings{qin2024hgrn2,
  title={HGRN2: Gated linear RNNs with state expansion},
  author={Qin, Zhen and Yang, Songlin and Sun, Weixuan and Shen, Xuyang and Li, Dong and Sun, Weigao and Zhong, Yiran},
  booktitle={Proceedings of the 1st Conference on Language Modeling (COLM)},
  year={2024}
}

@inproceedings{fu2022hungry,
  title={Hungry Hungry Hippos: Towards Language Modeling with State Space Models},
  author={Fu, Daniel Y and Dao, Tri and Saab, Khaled Kamal and Thomas, Armin W and Rudra, Atri and Re, Christopher},
  booktitle={Proceedings of the 11th International Conference on Learning Representations (ICLR)},
  year={2023}
}

@inproceedings{wang2023pretraining,
  title     = {Pretraining Without Attention},
  author    = {Wang, Junxiong and Yan, Jing Nathan and Gu, Albert and Rush, Alexander M.},
  booktitle = {Findings of the Association for Computational Linguistics: EMNLP 2023},
  year      = {2023},
}

@inproceedings{arora2023zoology,
  title={Zoology: Measuring and improving recall in efficient language models},
  author={Arora, Simran and Eyuboglu, Sabri and Timalsina, Aman and Johnson, Isys and Poli, Michael and Zou, James and Rudra, Atri and R{\'e}, Christopher},
  booktitle={Proceedings of the 12th International Conference on Learning Representations (ICLR)},
  year={2024}
}

@article{Reiss2019DeepPPG,
AUTHOR = {Reiss, Attila and Indlekofer, Ina and Schmidt, Philip and Van Laerhoven, Kristof},
TITLE = {Deep PPG: Large-Scale Heart Rate Estimation with Convolutional Neural Networks},
JOURNAL = {Sensors},
VOLUME = {19},
YEAR = {2019},
NUMBER = {14},
ARTICLE-NUMBER = {3079},
}

@inproceedings{zucchet2024recurrent,
  title={Recurrent neural networks: vanishing and exploding gradients are not the end of the story},
  author={Zucchet, Nicolas and Orvieto, Antonio},
  booktitle={Proceedings of the 38th Conference on Neural Information Processing Systems (NeurIPS)},
  year={2024}
}

@inproceedings{siegelmann1992computational,
  title={On the computational power of neural nets},
  author={Siegelmann, Hava T and Sontag, Eduardo D},
  booktitle={Proceedings of the 5th Annual Workshop on Computational Learning Theory (COLT)},
  pages={440--449},
  year={1992}
}

@InProceedings{hanson2020universal,
  title = 	 {Universal Simulation of Stable Dynamical Systems by Recurrent Neural Nets},
  author =       {Hanson, Joshua and Raginsky, Maxim},
  booktitle = 	 {Proceedings of the 2nd Conference on Learning for Dynamics and Control},
  pages = 	 {384--392},
  year = 	 {2020},
}

@article{li2022approximation,
  author  = {Zhong Li and Jiequn Han and Weinan E and Qianxiao Li},
  title   = {Approximation and Optimization Theory for Linear Continuous-Time Recurrent Neural Networks},
  journal = {Journal of Machine Learning Research},
  year    = {2022},
  volume  = {23},
  number  = {42},
}

@inproceedings{orvieto2023universality,
  title={Universality of Linear Recurrences Followed by Non-linear Projections: Finite-Width Guarantees and Benefits of Complex Eigenvalues},
  author={Orvieto, Antonio and De, Soham and Gulcehre, Caglar and Pascanu, Razvan and Smith, Samuel L},
  booktitle={Proceedings of the 41st International Conference on Machine Learning (ICML)},
  year={2024}
}

@inproceedings{wang2023state,
  title={State-space Models with Layer-wise Nonlinearity are Universal Approximators with Exponential Decaying Memory},
  author={Wang, Shida and Xue, Beichen},
  booktitle={Proceedings of the 37th Conference on Neural Information Processing Systems (NeurIPS)},
  year={2023}
}

@inproceedings{jelassi2024repeat,
  title={Repeat after me: Transformers are better than state space models at copying},
  author={Jelassi, Samy and Brandfonbrener, David and Kakade, Sham M and Malach, Eran},
  booktitle={Proceedings of the 41st International Conference on Machine Learning (ICML)},
  year={2024}
}

@inproceedings{merrill2024illusion,
  title={The illusion of state in state-space models},
  author={Merrill, William and Petty, Jackson and Sabharwal, Ashish},
  booktitle={Proceedings of the 41st International Conference on Machine Learning (ICML)},
  year={2024}
}

@book{friz_victoir_2010, 
series={Cambridge Studies in Advanced Mathematics},
title={Multidimensional Stochastic Processes as Rough Paths: Theory and Applications},
publisher={Cambridge University Press},
author={Friz, Peter K. and Victoir, Nicolas B.},
year={2010},
}

@book{BerlinetThomasAgnan2004,
  title     = {Reproducing Kernel Hilbert Spaces in Probability and Statistics},
  author    = {Berlinet, Alain and Thomas-Agnan, Christine},
  year      = {2004},
  publisher = {Springer},
}

@book{KreyszigFunctionalAnalysis,
  author    = {Kreyszig, Erwin},
  title     = {Introductory Functional Analysis with Applications},
  publisher = {John Wiley \& Sons},
  year      = {1978},
}

@book{Apostol1974MathematicalAnalysis,
  author    = {Apostol, Tom M.},
  title     = {Mathematical Analysis},
  edition   = {2nd},
  publisher = {Addison-Wesley},
  year      = {1974},
}

@inproceedings{cirone2024deepSSM,
  title     = {Theoretical Foundations of Deep Selective State-Space Models},
  author    = {Nicola Muca Cirone and Antonio Orvieto and Benjamin Walker and Cristopher Salvi and Terry Lyons},
  booktitle = {Proceedings of the 38th Conference on Neural Information Processing Systems (NeurIPS)},
  year      = {2024},
}

@article{Dasgupta2003AnEP,
  title={An elementary proof of a theorem of Johnson and Lindenstrauss},
  author={Sanjoy Dasgupta and Anupam Gupta},
  journal={Random Structures \& Algorithms},
  year={2003},
  volume={22},
  number={1},
  pages={60-65}
}

@inproceedings{cuchiero2021expressive,
  title     = {Expressive Power of Randomized Signature},
  author    = {Cuchiero, Christa and Gonon, Lukas and Grigoryeva, Lyudmila and Ortega, Juan-Pablo and Teichmann, Josef},
  booktitle = {NeurIPS 2021 Workshop on the Symbiosis of Deep Learning and Differential Equations (DLDE)},
  year      = {2021}
}

@article{bayer2023adaptive,
      title={An Adaptive Algorithm for Rough Differential Equations}, 
      author={Christian Bayer and Simon Breneis and Terry Lyons},
      year={2023},
      journal={arXiv preprint arXiv:2307.12590},
}

@inproceedings{walker2025structuredlinearcdesmaximally,
    title={Structured Linear CDEs: Maximally Expressive and Parallel-in-Time Sequence Models}, 
    author={Benjamin Walker and Lingyi Yang and Nicola Muca Cirone and Cristopher Salvi and Terry Lyons},
    year={2025},
    booktitle = {Proceedings of the 39th Conference on Neural Information Processing Systems (NeurIPS)}
}

@book{Lang1999,
  author    = {Serge Lang},
  title     = {Fundamentals of Differential Geometry},
  publisher = {Springer},
  year      = {1999},
  series    = {Graduate Texts in Mathematics},
  volume    = {191},
}

@article{FaaDiBruno1855,
  author  = {Faà di Bruno, Francesco},
  title   = {Sullo sviluppo delle funzioni},
  journal = {Annali di Scienze Matematiche e Fisiche},
  volume  = {6},
  pages   = {479--480},
  year    = {1855},
}

@article{albrecht1971,
 author = {Felix Albrecht and Harold G. Diamond and Maurice Heins},
 journal = {Indiana University Mathematics Journal},
 number = {4},
 pages = {347--350},
 publisher = {Indiana University Mathematics Department},
 title = {A Converse of Taylor's Theorem},
 volume = {21},
 year = {1971}
}

@article{wells1973differentiable,
  author    = {John C. Wells},
  title     = {Differentiable functions on Banach spaces with Lipschitz derivatives},
  journal   = {Journal of Differential Geometry},
  volume    = {8},
  number    = {1},
  pages     = {135--152},
  year      = {1973},
  publisher = {International Press},
}

@phdthesis{boutaib2016lipschitz,
  author       = {Youness Boutaib},
  title        = {Lipschitz Geometry and Rough Paths},
  school       = {University of Oxford},
  year         = {2016},
  type         = {PhD thesis},
}

@article{lyons2025,
author = {Lyons, Terry and McLeod, Andrew},
year = {2025},
title = {Higher order Lipschitz Sandwich theorems},
volume = {111},
number = {3},
journal = {Journal of the London Mathematical Society},
}

@article{fefferman2006whitney,
  title={Whitney's extension problem for $C^m$},
  author={Fefferman, Charles},
  journal={Annals of Mathematics},
  volume={164},
  number={1},
  pages={313--359},
  year={2006},
  publisher={JSTOR}
}

@article{yule1927method,
  author    = {Yule, G. Udny},
  title     = {On a Method of Investigating Periodicities in Disturbed Series, with Special Reference to Wolfer’s Sunspot Numbers},
  journal   = {Philosophical Transactions of the Royal Society of London. Series A, Containing Papers of a Mathematical or Physical Character},
  volume    = {226},
  pages     = {267--298},
  year      = {1927}
}

@article{touvron2023llamaopenefficientfoundation,
      title={LLaMA: Open and Efficient Foundation Language Models}, 
      author={Hugo Touvron and Thibaut Lavril and Gautier Izacard and Xavier Martinet and Marie-Anne Lachaux and Timothée Lacroix and Baptiste Rozière and Naman Goyal and Eric Hambro and Faisal Azhar and Aurelien Rodriguez and Armand Joulin and Edouard Grave and Guillaume Lample},
      year={2023},
      journal={arXiv pre-print arXiv:2302.13971},
}

@article{slutsky1927,
  author    = {Eugen E. Slutsky},
  title     = {The Summation of Random Causes as the Source of Cyclical Processes},
  journal   = {Voprosy Koniunktury},
  year      = {1927},
  volume    = {3},
  number    = {1},
  pages     = {34--64},
}

@article{slutsky1937,
  author    = {Eugen E. Slutsky},
  title     = {The Summation of Random Causes as the Source of Cyclical Processes},
  journal   = {Econometrica},
  year      = {1937},
  volume    = {5},
  number    = {2},
  pages     = {105--146},
  note      = {English translation of Slutsky (1927)}
}

@book{box1970time,
  author    = {George E. P. Box and Gwilym M. Jenkins},
  title     = {Time Series Analysis: Forecasting and Control},
  publisher = {Holden-Day},
  year      = {1970},
  address   = {San Francisco}
}

@article{cybenko1989approximation,
  title={Approximation by superpositions of a sigmoidal function},
  author={Cybenko, George},
  journal={Mathematics of Control, Signals and Systems},
  volume={2},
  pages={303--314},
  year={1989},
}

@article{hornik1991approximation,
  title={Approximation capabilities of multilayer feedforward networks},
  author={Hornik, Kurt},
  journal={Neural Networks},
  volume={4},
  number={2},
  pages={251--257},
  year={1991},
  publisher={Elsevier}
}

@inproceedings{gu2022s4d,
  title     = {On the Parameterization and Initialization of Diagonal State Space Models},
  author    = {Gu, Albert and Gupta, Ankit and Goel, Karan and R{\'e}, Christopher},
  booktitle = {Proceedings of the 36th Conference on Neural Information Processing Systems (NeurIPS)},
  year      = {2022},
}

@article{srivastava2014dropout,
  title={Dropout: A simple way to prevent neural networks from overfitting},
  author={Srivastava, Nitish and Hinton, Geoffrey and Krizhevsky, Alex and Sutskever, Ilya and Salakhutdinov, Ruslan},
  journal={Journal of Machine Learning Research},
  volume={15},
  number={56},
  pages={1929--1958},
  year={2014},
  publisher={JMLR}
}

@article{ba2016layer,
  title={Layer normalization},
  author={Ba, Jimmy Lei and Kiros, Jamie Ryan and Hinton, Geoffrey E},
  journal={arXiv preprint arXiv:1607.06450},
  year={2016}
}

@incollection{blelloch1993prefix,
  author    = {Blelloch, Guy E.},
  title     = {Prefix Sums and Their Applications},
  booktitle = {Synthesis of Parallel Algorithms},
  pages     = {35--60},
  year      = {1993}
}

@inproceedings{
yang2024parallelizing,
title={Parallelizing Linear Transformers with the Delta Rule over Sequence Length},
author={Songlin Yang and Bailin Wang and Yu Zhang and Yikang Shen and Yoon Kim},
booktitle={Proceedings of the 38th Conference on Neural Information Processing Systems (NeurIPS)},
year={2024},
}

@article{reyna2023heart,
  author    = {Reyna, M. A. and Kiarashi, Y. and Elola, A. and Oliveira, J. and Renna, F. and Gu, A. and {Perez Alday}, E. A. and Sadr, N. and Sharma, A. and Kpodonu, J. and Mattos, S. and Coimbra, M. T. and Sameni, R. and Rad, A. B. and Clifford, G. D.},
  title     = {Heart murmur detection from phonocardiogram recordings: The George B. Moody PhysioNet Challenge 2022},
  journal   = {PLOS Digital Health},
  year      = {2023},
  volume    = {2},
  number    = {9},
}

@article{goldberger2000physiobank,
  title={PhysioBank, PhysioToolkit, and PhysioNet: Components of a new research resource for complex physiologic signals},
  author={Goldberger, Ary L and Amaral, Luis AN and Glass, Leon and Hausdorff, Jeffrey M and Ivanov, Plamen Ch and Mark, Roger G and Mietus, Joseph E and Moody, George B and Peng, Chung-Kang and Stanley, H Eugene},
  journal={Circulation},
  volume={101},
  number={23},
  pages={e215--e220},
  year={2000},
  publisher={Am Heart Assoc}
}

@inproceedings{deng2009imagenet,
  title     = {ImageNet: A Large-Scale Hierarchical Image Database},
  author    = {Deng, Jia and Dong, Wei and Socher, Richard and Li, Li-Jia and Li, Kai and Fei-Fei, Li},
  booktitle = {2009 IEEE Conference on Computer Vision and Pattern Recognition (CVPR)},
  pages     = {248--255},
  year      = {2009},
  organization = {IEEE}
}

@inproceedings{walker2022dual,
  title     = {Dual Bayesian ResNet: A Deep Learning Approach to Heart Murmur Detection},
  author    = {Walker, Benjamin and Krones, Felix and Kiskin, Ivan and Parsons, Guy and Lyons, Terry and Mahdi, Adam},
  booktitle = {Computing in Cardiology (CinC)},
  volume    = {49},
  year      = {2022},
}

@inproceedings{McDonald2022ParallelHSMM,
  author    = {Andrew McDonald and Mark J. F. Gales and Anurag Agarwal},
  title     = {Detection of Heart Murmurs in Phonocardiograms with Parallel Hidden Semi-Markov Models},
  booktitle = {Computing in Cardiology (CinC)},
  year      = {2022},
  volume    = {49},
}

@inproceedings{Xu2022HMSNet,
  author    = {Yujia Xu and Xinqi Bao and Hak{-}Keung Lam and Ernest N. Kamavuako},
  title     = {Hierarchical Multi-Scale Convolutional Network for Murmurs Detection on PCG Signals},
  booktitle = {Computing in Cardiology (CinC)},
  year      = {2022},
  volume    = {49},
}

@inproceedings{Lee2022FreqTime,
  author    = {Jungguk Lee and Taein Kang and Narin Kim and Soyul Han and Hyejin Won and Wuming Gong and Il{-}Youp Kwak},
  title     = {Deep Learning Based Heart Murmur Detection Using Frequency-time Domain Features of Heartbeat Sounds},
  booktitle = {Computing in Cardiology (CinC)},
  year      = {2022},
  volume    = {49},
}

@inproceedings{Lu2022LightweightRobust,
  author    = {Hui Lu and Julia Beatriz Yip and Tobias Steigleder and Stefan Grie{\ss}hammer and Naga Venkata Sai Jitin Jami and Bjoern Eskofier and Christoph Ostgathe and Alexander Koelpin},
  title     = {A Lightweight Robust Approach for Automatic Heart Murmurs and Clinical Outcomes Classification from Phonocardiogram Recordings},
  booktitle = {Computing in Cardiology (CinC)},
  year      = {2022},
  volume    = {49},
}

@inproceedings{sutskever2011generating,
  title        = {Generating Text with Recurrent Neural Networks},
  author       = {Sutskever, Ilya and Martens, James and Hinton, Geoffrey E.},
  booktitle    = {Proceedings of the 28th International Conference on Machine Learning (ICML)},
  year         = {2011},
}

@InProceedings{pmlr-v235-lizaire24a,
  title     = {A Tensor Decomposition Perspective on Second-order {RNN}s},
  author    = {Lizaire, Maude and Rizvi-Martel, Michael and Gamal, Marawan and Rabusseau, Guillaume},
  booktitle = {Proceedings of the 41st International Conference on Machine Learning (ICML)},
  year = {2024},
}

@inproceedings{fan-etal-2024-advancing,
    title = "Advancing Regular Language Reasoning in Linear Recurrent Neural Networks",
    author = "Fan, Ting-Han  and
      Chi, Ta-Chung  and
      Rudnicky, Alexander",
    booktitle = "Proceedings of the 2024 Conference of the North American Chapter of the Association for Computational Linguistics: Human Language Technologies (Volume 2: Short Papers)",
    year = "2024",
    pages = "45--53",
}

@inproceedings{siems2025deltaproductimprovingstatetrackinglinear,
      title={DeltaProduct: Improving State-Tracking in Linear RNNs via Householder Products}, 
      author={Julien Siems and Timur Carstensen and Arber Zela and Frank Hutter and Massimiliano Pontil and Riccardo Grazzi},
      booktitle={Proceedings of the 39th Conference on Neural Information Processing Systems (NeurIPS)},
      year={2025}
}

@inproceedings{yang2025gateddeltanetworksimproving,
title={Gated Delta Networks: Improving Mamba2 with Delta Rule},
author={Songlin Yang and Jan Kautz and Ali Hatamizadeh},
booktitle={Proceedings of the 13th International Conference on Learning Representations (ICLR)},
year={2025},
}

@inproceedings{Dao2024Transformers,
author = {Dao, Tri and Gu, Albert},
title = {Transformers are SSMs: generalized models and efficient algorithms through structured state space duality},
year = {2024},
booktitle = {Proceedings of the 41st International Conference on Machine Learning (ICML)},
}

@article{peng2025rwkv7gooseexpressivedynamic,
      title={RWKV-7 `Goose with Expressive Dynamic State Evolution}, 
      author={Bo Peng and Ruichong Zhang and Daniel Goldstein and Eric Alcaide and Xingjian Du and Haowen Hou and Jiaju Lin and Jiaxing Liu and Janna Lu and William Merrill and Guangyu Song and Kaifeng Tan and Saiteja Utpala and Nathan Wilce and Johan S. Wind and Tianyi Wu and Daniel Wuttke and Christian Zhou-Zheng},
      year={2025},
      journal={arXiv preprint arXiv:2503.14456},
}

@inproceedings{peng2021random,
  title={Random Feature Attention},
  author={Peng, Hao and Pappas, Nikolaos and Yogatama, Dani and Schwartz, Roy and Smith, Noah A and Kong, Lingpeng},
  booktitle={Proceedings of the 9th International Conference on Learning Representations (ICLR)},
  year={2021}
}

@inproceedings{zhang2024gated,
  title={Gated Slot Attention for Efficient Linear-Time Sequence Modeling},
  author={Zhang, Yuxuan and Yang, Shiliang and Zhu, Rong and Zhang, Yichong and Cui, Lei and Wang, Yongjing and Wang, Bin and Shi, Feng and Wang, Bing and Bi, Wei and Zhou, Ping and Fu, Guoxin},
  booktitle={Proceedings of the 38th Conference on Neural Information Processing Systems (NeurIPS)},
  year={2024}
}

@inproceedings{sun2025learninglearntesttime,
      title={Learning to (Learn at Test Time): RNNs with Expressive Hidden States}, 
      author={Yu Sun and Xinhao Li and Karan Dalal and Jiarui Xu and Arjun Vikram and Genghan Zhang and Yann Dubois and Xinlei Chen and Xiaolong Wang and Sanmi Koyejo and Tatsunori Hashimoto and Carlos Guestrin},
      year={2025},
      booktitle={Proceedings of the 42nd International Conference on Machine Learning (ICML)}
}

@inproceedings{behrouz2025titanslearningmemorizetest,
      title={Titans: Learning to Memorize at Test Time}, 
      author={Ali Behrouz and Peilin Zhong and Vahab Mirrokni},
      year={2025},
      booktitle={Proceedings of the 39th Conference on Neural Information Processing Systems (NeurIPS)}
}

@inproceedings{beck2024xlstm,
  title={xLSTM: Extended Long Short-Term Memory},
  author={Beck, Maximilian and P{\"o}ppel, Korbinian and Spanring, Markus and Auer, Andreas and Prudnikova, Oleksandra and Kopp, Michael and Klambauer, G{\"u}nter and Brandstetter, Johannes and Hochreiter, Sepp},
  booktitle={Proceedings of the 38th Conference on Neural Information Processing Systems (NeurIPS)},
  year={2024}
}

@InProceedings{pmlr-v119-katharopoulos20a,
  title     = {Transformers are {RNN}s: Fast Autoregressive Transformers with Linear Attention},
  author    = {Katharopoulos, Angelos and Vyas, Apoorv and Pappas, Nikolaos and Fleuret, Fran{\c{c}}ois},
  booktitle = {Proceedings of the 37th International Conference on Machine Learning (ICML)},
  year      = {2020},
}

@inproceedings{Giles1990HigherOrderRNN,
  author    = {C. L. Giles and G. Z. Sun and H. H. Chen and Y. C. Lee and D. Chen},
  title     = {Higher Order Recurrent Networks and Grammatical Inference},
  booktitle = {Proceedings of the 3rd International Conference on Neural Information Processing Systems (NeurIPS)},
  year      = {1989}
}

@inproceedings{schlag2021linear,
  author    = {Imanol Schlag and Kazuki Irie and J{\"u}rgen Schmidhuber},
  title     = {Linear Transformers Are Secretly Fast Weight Programmers},
  booktitle = {Proceedings of the 38th International Conference on Machine Learning (ICML)},
  year      = {2021},
}

@inproceedings{deletang2023neural,
  author       = {Gr{\'{e}}goire Del{\'{e}}tang and
                  Anian Ruoss and
                  Jordi Grau{-}Moya and
                  Tim Genewein and
                  Li Kevin Wenliang and
                  Elliot Catt and
                  Chris Cundy and
                  Marcus Hutter and
                  Shane Legg and
                  Joel Veness and
                  Pedro A. Ortega},
  title        = {Neural Networks and the Chomsky Hierarchy},
  booktitle    = {Proceedings of the 11th International Conference on Learning Representations (ICLR)},
  year         = {2023},
}

@inproceedings{movahedi2025fixedpointrnnsdiagonaldense,
      title={Fixed-Point RNNs: From Diagonal to Dense in a Few Iterations}, 
      author={Sajad Movahedi and Felix Sarnthein and Nicola Muca Cirone and Antonio Orvieto},
      year={2025},
      booktitle={Proceedings of the 39th Conference on Neural Information Processing Systems (NeurIPS)},
}

@inproceedings{Cirone2023neural,
  title        = {Neural signature kernels as infinite‐width‐depth‐limits of controlled ResNets},
  author       = {Cirone, N.~M. and Lemercier, M. and Salvi, C.},
  booktitle    = {Proceedings of the 40th International Conference on Machine Learning (ICML)},
  year         = {2023}
}

@inproceedings{grazzi2024unlockingstatetrackinglinearrnns,
      title={Unlocking State-Tracking in Linear RNNs Through Negative Eigenvalues}, 
      author={Riccardo Grazzi and Julien Siems and Jörg K. H. Franke and Arber Zela and Frank Hutter and Massimiliano Pontil},
  booktitle    = {Proceedings of the 13th International Conference on Learning Representations (ICLR)},
  year         = {2025},
}

@article{paszke2019pytorchimperativestylehighperformance,
      title={PyTorch: An Imperative Style, High-Performance Deep Learning Library}, 
      author={Adam Paszke and Sam Gross and Francisco Massa and Adam Lerer and James Bradbury and Gregory Chanan and Trevor Killeen and Zeming Lin and Natalia Gimelshein and Luca Antiga and Alban Desmaison and Andreas Köpf and Edward Yang and Zach DeVito and Martin Raison and Alykhan Tejani and Sasank Chilamkurthy and Benoit Steiner and Lu Fang and Junjie Bai and Soumith Chintala},
      year={2019},
      journal={arXiv preprint arXiv:1912.01703},
}

@article{cirone2025parallelflowparallelizinglineartransformers,
      title={ParallelFlow: Parallelizing Linear Transformers via Flow Discretization}, 
      author={Nicola Muca Cirone and Cristopher Salvi},
      year={2025},
      journal={arXiv preprint arXiv:2504.00492},
}

@ARTICLE{Li2019Symmetry,
  author={Li, Xingguo and Lu, Junwei and Arora, Raman and Haupt, Jarvis and Liu, Han and Wang, Zhaoran and Zhao, Tuo},
  journal={IEEE Transactions on Information Theory}, 
  title={Symmetry, Saddle Points, and Global Optimization Landscape of Nonconvex Matrix Factorization}, 
  year={2019},
  volume={65},
  number={6},
  pages={3489-3514},
}

@article{saarinen1993ill,
author = {Saarinen, S. and Bramley, R. and Cybenko, G.},
title = {Ill-Conditioning in Neural Network Training Problems},
journal = {SIAM Journal on Scientific Computing},
volume = {14},
number = {3},
pages = {693-714},
year = {1993},
}

@inproceedings{
  meng2019gsgd,
  title={{G-SGD}: Optimizing {ReLU} Neural Networks in Its Positively Scale-Invariant Space},
  author={Qi Meng and Shuxin Zheng and Huishuai Zhang and Wei Chen and Qiwei Ye and Zhi-Ming Ma and Nenghai Yu and Tie-Yan Liu},
  booktitle={Proceedings of the 7th International Conference on Learning Representations (ICLR)},
  year={2019},
}

@article{Perez_Arribas2018-xp,
  title    = "A signature-based machine learning model for distinguishing
              bipolar disorder and borderline personality disorder",
  author   = "Perez Arribas, Imanol and Goodwin, Guy M and Geddes, John R and
              Lyons, Terry and Saunders, Kate E A",
  journal  = "Transl Psychiatry",
  volume   =  8,
  number   =  1,
  year     =  2018,
}

@INPROCEEDINGS{Weixin2016,
  author={Yang, Weixin and Jin, Lianwen and Hao Ni and Lyons, Terry},
  booktitle={Proceedings of the 23rd International Conference on Pattern Recognition (ICPR)}, 
  title={Rotation-free online handwritten character recognition using dyadic path signature features, hanging normalization, and deep neural network}, 
  year={2016},
}

@inproceedings{wang19e_interspeech,
  author={Bo Wang and Maria Liakata and Hao Ni and Terry Lyons and Alejo J. Nevado-Holgado and Kate Saunders},
  title={{A Path Signature Approach for Speech Emotion Recognition}},
  year={2019},
  booktitle={Proc. Interspeech 2019},
  pages={1661--1665},
}

@article{graham2013sparse,
      title={Sparse arrays of signatures for online character recognition}, 
      author={Benjamin Graham},
      year={2013},
      journal={arXiv preprint arXiv:1308.0371},
}

@article{gyurkó2014extractinginformationsignaturefinancial,
      title={Extracting information from the signature of a financial data stream}, 
      author={Lajos Gergely Gyurkó and Terry Lyons and Mark Kontkowski and Jonathan Field},
      year={2014},
      journal={arXiv preprint arXiv:1307.7244},
}

@article{levin2016learningpastpredictingstatistics,
      title={Learning from the past, predicting the statistics for the future, learning an evolving system}, 
      author={Daniel Levin and Terry Lyons and Hao Ni},
      year={2016},
      journal={arXiv preprint arXiv:1309.0260},
}

@article{kermack1927contribution,
  title={A contribution to the mathematical theory of epidemics},
  author={Kermack, W.O. and McKendrick, A.G.},
  journal={Proceedings of the Royal Society of London. Series A},
  volume={115},
  number={772},
  pages={700--721},
  year={1927}
}

@article{solow1956contribution,
  title={A Contribution to the Theory of Economic Growth},
  author={Solow, Robert M.},
  journal={The Quarterly Journal of Economics},
  volume={70},
  number={1},
  pages={65--94},
  year={1956}
}

@book{lotka1925elements,
  title={Elements of Physical Biology},
  author={Lotka, Alfred J.},
  year={1925},
  publisher={Williams \& Wilkins Company}
}

@article{volterra1926fluctuations,
  title={Fluctuations in the abundance of a species considered mathematically},
  author={Volterra, Vito},
  journal={Nature},
  volume={118},
  pages={558--560},
  year={1926}
}

@article{Vauvelle2022NeuralSignatureEHR,
  author       = {Andre Vauvelle and Paidi Creed and Spiros Denaxas},
  title        = {Neural‑signature methods for structured EHR prediction},
  journal      = {BMC Medical Informatics and Decision Making},
  volume       = {22},
  number       = {1},
  year         = {2022},
}

@article{cohen2023subtle,
author = {Cohen, Samuel N and Foster, James and Foster, Peter and Lou, Hang and Lyons, Terry and Morley, Sam and Morrill, James and Ni, Hao and Palmer, Edward and Wang, Bo and Wu, Yue and Yang, Lingyi and Yang, Weixin},
title = {Subtle variations in sepsis-{III} definitions markedly affect predictive performance},
year = {2024},
journal = {Nature Scientific Reports},
volume={14},
number={1920}
}

@inproceedings{arribas2020sigsdes,
  title={Sig-SDEs model for quantitative finance},
  author={Perez Arribas, Imanol and Cristopher Salvi and Lukasz Szpruch},
  booktitle={ACM International Conference on AI in Finance},
  year={2020}
}

@inproceedings{lemercier2021siggpde,
  title={SigGPDE: Scaling Sparse Gaussian Processes on Sequential Data},
  author={Lemercier, Maud and Salvi, Cristopher and Cass, Thomas and Bonilla, Edwin V and Damoulas, Theodoros and Lyons, Terry},
  booktitle={International Conference on Machine Learning (ICML)},
  year={2021},
  organization={PMLR}
}

@article{horvath2023optimal,
  title={Optimal Stopping via Distribution Regression: A Higher Rank Signature Approach},
  author={Horvath, Blanka and Lemercier, Maud and Liu, Chong and Lyons, Terry and Salvi, Cristopher},
  journal={arXiv preprint arXiv:2304.01479},
  year={2023}
}

@inproceedings{manten2025signature,
  title={Signature Kernel Conditional Independence Tests in Causal Discovery for Stochastic Processes},
  author={Manten, Georg and Casolo, Cecilia and Ferrucci, Emilio and Mogensen, S{\o}ren Wengel and Salvi, Cristopher and Kilbertus, Niki},
  booktitle={Proceedings of the 13th International Conference on Learning Representations (ICLR)},
  year={2025},
}

@article{cohen2023nowcasting,
  title={Nowcasting with signature methods},
  author={Cohen, Samuel N. and Lui, Silvia and Malpass, Will and Mantoan, 
  Giulia and Nesheim, Lars and de Paula, \'{A}ureo and Reeves, Andrew and
  Scott, Craig and Small, Emma and Yang, Lingyi},
  journal={arXiv preprint arXiv:2305.10256},
  year={2023}
}

@article{cirone2025rough,
  title={Rough kernel hedging},
  author={Cirone, Nicola Muca and Salvi, Cristopher},
  journal={arXiv preprint arXiv:2501.09683},
  year={2025}
}

@article{salvi2023structure,
  title={A structure theorem for streamed information},
  author={Salvi, Cristopher and Diehl, Joscha and Lyons, Terry and Preiss, Rosa and Reizenstein, Jeremy},
  journal={Journal of Algebra},
  volume={634},
  pages={911--938},
  year={2023},
  publisher={Elsevier}
}

@article{shmelev2024sparse,
  title={Sparse Signature Coefficient Recovery via Kernels},
  author={Shmelev, Daniil and Salvi, Cristopher},
  journal={arXiv preprint arXiv:2412.08579},
  year={2024}
}

@inproceedings{cochrane2021sk,
  title={SK-Tree: a systematic malware detection algorithm on streaming trees via the signature kernel},
  author={Cochrane, Thomas and Foster, Peter and Chhabra, Varun and Lemercier, Maud and Lyons, Terry and Salvi, Cristopher},
  booktitle={2021 IEEE International Conference on Cyber Security and Resilience (CSR)},
  pages={35--40},
  year={2021},
  organization={IEEE}
}

@inproceedings{holberg2024exact,
  title={Exact Gradients for Stochastic Spiking Neural Networks Driven by Rough Signals},
  author={Holberg, Christian and Salvi, Cristopher},
  booktitle={Proceedings of the 38th Conference on Neural Information Processing Systems (NeurIPS)},
  year={2024}
}

@inproceedings{lemercier2021distribution,
  title={Distribution regression for sequential data},
  author={Lemercier, Maud and Salvi, Cristopher and Damoulas, Theodoros and Bonilla, Edwin and Lyons, Terry},
  booktitle={International Conference on Artificial Intelligence and Statistics (AISTATS)},
  year={2021},
  organization={PMLR}
}

@phdthesis{salvi2021rough,
  title={Rough paths, kernels, differential equations and an algebra of functions on streams},
  author={Salvi, Cristopher},
  year={2021},
  school={University of Oxford}
}

@article{salvi2021signature,
  title={The Signature Kernel is the solution of a {Goursat PDE}},
  author={Salvi, Cristopher and Cass, Thomas and Foster, James and Lyons, Terry and Yang, Weixin},
  journal={SIAM Journal on Mathematics of Data Science},
  volume={3},
  number={3},
  pages={873--899},
  year={2021},
  publisher={SIAM}
}

@inproceedings{choi2022STGNCDE,
  title        = {Graph Neural Controlled Differential Equations for Traffic Forecasting},
  author       = {Choi, Jeongwhan and Choi, Hwangyong and Hwang, Jeehyun and Park, Noseong},
  booktitle    = {Proceedings of the 36th AAAI Conference on Artificial Intelligence (AAAI-22)},
  year         = {2022},
}

@article{choi2023graphneuralroughdifferential,
  title        = {Graph Neural Rough Differential Equations for Traffic Forecasting},
  author       = {Choi, Jeongwhan and Park, Noseong},
  journal      = {ACM Transactions on Intelligent Systems and Technology},
  volume       = {14},
  number       = {4},
  year         = {2023},
}

@article{qin2023learningdynamicgraphembeddings,
  title={Learning dynamic graph embeddings with neural controlled differential equations},
  author={Qin, Tiexin and Walker, Benjamin and Lyons, Terry and Yan, Hong and Li, Haoliang},
  journal={IEEE Transactions on Pattern Analysis and Machine Intelligence},
  year={2025},
  publisher={IEEE}
}

@inproceedings{berndt2025permutation,
    author = {Torben Berndt and Benjamin Walker and Tiexin Qin and Jan Stühmer and Andrey Kormilitzin},
    title = {Permutation Equivariant Neural Controlled Differential Equations for Dynamic Graph Representation Learning},
    booktitle = {Proceedings of the 39th Conference on Neural Information Processing Systems (NeurIPS)},
    year = {2025},
}

@InProceedings{seedat2022continuous,
  title     = {Continuous‑Time Modeling of Counterfactual Outcomes Using Neural Controlled Differential Equations},
  author    = {Seedat, Nabeel and Imrie, Fergus and Bellot, Alexis and Qian, Zhaozhi and van der Schaar, Mihaela},
  booktitle = {Proceedings of the 39th International Conference on Machine Learning (ICML)},
  year      = {2022},
}

@article{lee2024gdflowanomalydetectionncdebased,
      title={GDFlow: Anomaly Detection with NCDE-based Normalizing Flow for Advanced Driver Assistance System}, 
      author={Kangjun Lee and Minha Kim and Youngho Jun and Simon S. Woo},
      year={2024},
      journal={arXiv preprint arXiv:2409.05346},
}

@inproceedings{seonkyu2024bridging,
author = {Lim, Seonkyu and Choi, Jeongwhan and Park, Noseong and Yoon, Sang-Ha and Kang, ShinHyuck and Kim, Young-Min and Kang, Hyunjoong},
year = {2024},
title = {Bridging Dynamic Factor Models and Neural Controlled Differential Equations for Nowcasting GDP},
booktitle = {Proceedings of the 33rd ACM International Conference on Information and Knowledge Management},
}

@article{morrill2022interpolation,
  title     = {On the Choice of Interpolation Scheme for Neural CDEs},
  author    = {James Morrill and Patrick Kidger and Lingyi Yang and Terry Lyons},
  journal   = {Transactions on Machine Learning Research},
  volume    = {2022},
  number    = {9},
  article   = {192},
  year      = {2022},
}

@inproceedings{zeng2025trajsurv,
  title     = {TrajSurv: Learning Continuous Latent Trajectories from Electronic Health Records for Trustworthy Survival Prediction},
  author    = {Sihang Zeng and Lucas Jing Liu and Jun Wen and Meliha Yetisgen and Ruth Etzioni and Gang Luo},
  booktitle = {Proceedings of the Machine Learning for Healthcare (MLHC) Conference},
  year      = {2025},
}

@inproceedings{Liao2021LogsigRNN,
  author       = {Shujian Liao and Terry Lyons and Weixin Yang and Kevin Schlegel and Hao Ni},
  title        = {Logsig‑RNN: a novel network for robust and efficient skeleton‑based action recognition},
  booktitle    = {Proceedings of the 32nd British Machine Vision Conference (BMVC 2021)},
  year         = {2021},
}

@inproceedings{MorenoPino2024RoughTransformers,
  author       = {Fernando Moreno‑Pino and Álvaro Arroyo and Harrison Waldon and Xiaowen Dong and Álvaro Cartea},
  title        = {Rough Transformers: Lightweight and Continuous Time‑Series Modelling through Signature Patching},
  booktitle    = {Proceedings of the 38th Conference on Neural Information Processing Systems (NeurIPS)},
  year         = {2024},
}

@book{ramsay2005functional,
  title     = {Functional Data Analysis},
  author    = {Ramsay, James O. and Silverman, Bernard W.},
  year      = {2005},
  edition   = {2},
  publisher = {Springer},
  address   = {New York},
}

@inproceedings{cuturi2017softdtw,
  title     = {Soft-{DTW}: a Differentiable Loss Function for Time-Series},
  author    = {Cuturi, Marco and Blondel, Mathieu},
  booktitle = {Proceedings of the 34th International Conference on Machine Learning (ICML)},
  year      = {2017},
}

@inproceedings{chen2025patchwise,
  title     = {Patch-wise Structural Loss for Time Series Forecasting},
  author    = {Chen, Yuxuan and Zhang, Jiahui and Li, Xin and others},
  booktitle = {Proceedings of the 42nd International Conference on Machine Learning (ICML)},
  year      = {2025},
}

@article{ferraty2007nonparametric,
author = {Ferraty, Frederic and Mas, André and Vieu, Philippe},
year = {2007},
pages = {267 - 286},
title = {Nonparametric regression on functional data: Inference and practical aspects},
volume = {49},
number = {3},
journal = {Australian \& New Zealand Journal of Statistics},
}

@article{Lyons1998,
author = {Lyons, Terry},
journal = {Revista Matemática Iberoamericana},
number = {2},
pages = {215-310},
title = {Differential equations driven by rough signals.},
volume = {14},
year = {1998},
}

@article{chevyrev2016characteristic,
  title     = {Characteristic functions of measures on geometric rough paths},
  author    = {Chevyrev, Ilya and Lyons, Terry J.},
  journal   = {Annals of Probability},
  volume    = {44},
  number    = {6},
  pages     = {4049--4082},
  year      = {2016},
}

@article{kiraly2019kernels,
  title     = {Kernels for sequentially ordered data},
  author    = {Király, Franz J. and Oberhauser, Harald},
  journal   = {Journal of Machine Learning Research},
  volume    = {20},
  number    = {31},
  pages     = {1--45},
  year      = {2019},
}

@article{cass2024weighted,
  title        = {Weighted Signature Kernels},
  author       = {Cass, Thomas and Lyons, Terry and Xu, Xingcheng},
  journal      = {Annals of Applied Probability},
  volume       = {34},
  number       = {1A},
  pages        = {585--626},
  year         = {2024},
}

@inproceedings{Cuturi2007AKF,
  title={A Kernel for Time Series Based on Global Alignments},
  author={Marco Cuturi and Jean-Philippe Vert and {\O}ystein Birkenes and Tomoko Matsui},
  booktitle={Proceedings of the IEEE International Conference on Acoustics, Speech and Signal Processing},
  year={2007},
}

@article{mikalsen2018time,
title = {Time series cluster kernel for learning similarities between multivariate time series with missing data},
journal = {Pattern Recognition},
volume = {76},
pages = {569-581},
year = {2018},
author = {Karl Øyvind Mikalsen and Filippo Maria Bianchi and Cristina Soguero-Ruiz and Robert Jenssen},
}

@article{wynne2022kernel,
author = {Wynne, George and Duncan, Andrew B.},
title = {A kernel two-sample test for functional data},
year = {2022},
issue_date = {January 2022},
volume = {23},
journal = {Journal of Machine Learning Research},
}

@article{nyquist1928certain,
  title={Certain topics in telegraph transmission theory},
  author={Nyquist, Harry},
  journal={Transactions of the American Institute of Electrical Engineers},
  volume={47},
  number={2},
  pages={617--644},
  year={1928},
  publisher={IEEE},
}

@article{shannon1949communication,
  title={Communication in the presence of noise},
  author={Shannon, Claude Elwood},
  journal={Proceedings of the IRE},
  volume={37},
  number={1},
  pages={10--21},
  year={1949},
  publisher={IEEE},
}

@article{Stieltjes1894,
  author = {Stieltjes, T. J.},
  title = {Recherches sur les fractions continues},
  journal = {Annales de la Faculté des Sciences de Toulouse pour les Sciences Mathématiques et les Sciences Physiques},
  volume = {8},
  number = {4},
  pages = {1--122},
  year = {1894}
}

@article{cass2024lecturenotesroughpaths,
      title={Lecture notes on rough paths and applications to machine learning}, 
      author={Thomas Cass and Cristopher Salvi},
      year={2024},
      journal={arXiv preprint arXiv:2404.06583},
}

@article{Strichartz1987,
  author       = {Strichartz, Robert S.},
  title        = {The Campbell--Baker--Hausdorff--Dynkin Formula and Solutions of Differential Equations},
  journal      = {Journal of Functional Analysis},
  year         = {1987},
  volume       = {72},
  number       = {2},
  pages        = {320--345},
}

@article{MoanNiesen2008,
  author  = {Moan, Per Christian and Niesen, Jitse},
  title   = {Convergence of the Magnus Series},
  journal = {Foundations of Computational Mathematics},
  year    = {2008},
  volume  = {8},
  pages   = {291--301},
}

@article{Magnus1954,
  author    = {Wilhelm Magnus},
  title     = {On the exponential solution of differential equations for a linear operator},
  journal   = {Communications on Pure and Applied Mathematics},
  volume    = {7},
  number    = {4},
  pages     = {649--673},
  year      = {1954},
}

@article{Heun1900,
  author       = {Heun, Karl},
  title        = {Neue Methode zur approximativen Integration der Differentialgleichungen einer unabhängigen Veränderlichen},
  journal      = {Zeitschrift für Mathematik und Physik},
  volume       = {45},
  pages        = {23--38},
  year         = {1900},
}

@book{AtkinsonHanStewart2009,
  author       = {Atkinson, Kendall E. and Han, Weimin and Stewart, David E.},
  title        = {Numerical Solution of Ordinary Differential Equations},
  publisher    = {John Wiley \& Sons},
  year         = {2009},
}

@article{tsitouras2011runge,
  title     = {Runge--Kutta pairs of order 5(4) satisfying only the first column simplifying assumption},
  author    = {Tsitouras, Charalampos},
  journal   = {Computers \& Mathematics with Applications},
  volume    = {62},
  number    = {2},
  pages     = {770--775},
  year      = {2011},
}

@article{cirone2025genusexpansionnonlinearrandom,
      title={Genus expansion for non-linear random matrix ensembles with applications to neural networks}, 
      author={Nicola Muca Cirone and Jad Hamdan and Cristopher Salvi},
      year={2025},
      journal={arXiv preprint arXiv:2407.08459},
}

@book{whittle1951hypothesis,
  author    = {Whittle, Peter},
  title     = {Hypothesis Testing in Time Series Analysis},
  year      = {1951},
  publisher = {Almqvist \& Wiksells},
}

@techreport{galtieri1964,
  author      = {Galtieri, C. A.},
  title       = {Problems of Estimation in Discrete-Time Processes},
  institution = {IBM Research Laboratory},
  address     = {San Jose, California},
  number      = {RJ-315},
  year        = {1964},
}

@inproceedings{astrom1965numerical,
  author    = {{\AA}str{\"o}m, Karl Johan and Bohlin, Torsten},
  title     = {Numerical Identification of Linear Dynamic Systems from Normal Operating Records},
  booktitle = {Proc. IFAC Conference on Self-Adaptive Control Systems},
  volume = {2},
  issue = {2},
  year      = {1965}
}

@article{astrom1971system,
  author  = {{\AA}str{\"o}m, K. J. and Eykhoff, P.},
  title   = {System Identification---A Survey},
  journal = {Automatica},
  volume  = {7},
  number  = {2},
  pages   = {123--162},
  year    = {1971},
}

@article{vanoverschee1994n4sid,
  author  = {Van Overschee, Peter and De Moor, Bart},
  title   = {N4SID: Subspace Algorithms for the Identification of Combined Deterministic-Stochastic Systems},
  journal = {Automatica},
  volume  = {30},
  number  = {1},
  pages   = {75--93},
  year    = {1994},
}

@article{kolmogorov1941,
  author  = {Kolmogorov, A. N.},
  title   = {Интерполирование и экстраполирование стационарных случайных последовательностей},
  journal = {Известия АН СССР. Серия математическая},
  volume  = {5},
  number  = {1},
  pages   = {3--14},
  year    = {1941}
}

@techreport{kolmogorov1962english,
  author      = {Kolmogorov, A. N.},
  title       = {Interpolation and Extrapolation of Stationary Random Sequences},
  institution = {RAND Corporation},
  number      = {RM-3090-PR},
  address     = {Santa Monica, CA},
  year        = {1962},
  note        = {English translation by Worthie L. Doyle and Ivan Selin}
}

@article{ohagan1978curve,
  author  = {O'Hagan, Adrian},
  title   = {Curve Fitting and Optimal Design for Prediction},
  journal = {Journal of the Royal Statistical Society. Series B (Methodological)},
  volume  = {40},
  number  = {1},
  pages   = {1--42},
  year    = {1978}
}

@inproceedings{williams1996gaussian,
  author    = {Williams, Christopher K. I. and Rasmussen, Carl Edward},
  title     = {Gaussian Processes for Regression},
  booktitle = {Advances in Neural Information Processing Systems 8},
  pages     = {514--520},
  publisher = {MIT Press},
  year      = {1996}
}

@book{rasmussen2006gaussian,
  author    = {Rasmussen, Carl Edward and Williams, Christopher K. I.},
  title     = {Gaussian Processes for Machine Learning},
  publisher = {MIT Press},
  address   = {Cambridge, MA},
  year      = {2006}
}

@inproceedings{kidger2021neural,
  author    = {Kidger, Patrick and Foster, James and Li, Xuechen and Lyons, Terry J},
  title     = {Neural SDEs as Infinite-Dimensional GANs},
  booktitle = {Proceedings of the 38th International Conference on Machine Learning (ICML)},
  year      = {2021},
}

@inproceedings{hess2024bncde,
  author    = {Hess, Konstantin and Melnychuk, Valentyn and Frauen, Dennis and Feuerriegel, Stefan},
  title     = {Bayesian Neural Controlled Differential Equations for Treatment Effect Estimation},
  booktitle = {Proceedings of the 12th International Conference on Learning Representations (ICLR)},
  year      = {2024},
}

@inproceedings{garnelo2018conditional,
  author    = {Garnelo, Marta and Rosenbaum, Dan and Maddison, Christopher J. and Ramalho, Tiago and Saxton, David and Shanahan, Murray and Teh, Yee Whye and Rezende, Danilo J. and Eslami, S. M. Ali},
  title     = {Conditional Neural Processes},
  booktitle = {Proceedings of the 35th International Conference on Machine Learning (ICML)},
  year      = {2018},
}

@InProceedings{cunningham2012gaussian,
  title = {Gaussian Processes for time-marked time-series data},
  author = {Cunningham, John and Ghahramani, Zoubin and Rasmussen, Carl},
  booktitle = {Proceedings of the Fifteenth International Conference on Artificial Intelligence and Statistics (AISTATS)},
  year = {2012},
}

@article{demsar2006statistical,
  author  = {Janez Dem\v{s}ar},
  title   = {Statistical Comparisons of Classifiers over Multiple Data Sets},
  journal = {Journal of Machine Learning Research},
  volume  = {7},
  year    = {2006}
}

@article{wilcoxon1945individual,
  author  = {Frank Wilcoxon},
  title   = {Individual Comparisons by Ranking Methods},
  journal = {Biometrics Bulletin},
  volume  = {1},
  number  = {6},
  year    = {1945},
}

@article{lou2024pathdevelopment,
  title   = {Path Development Network with Finite-dimensional Lie Group Representation},
  author  = {Lou, Hang and Li, Siran and Ni, Hao},
  journal = {Transactions on Machine Learning Research},
  year    = {2024},
}

@article{jiang2024gcndevlstm,
      title={GCN-DevLSTM: Path Development for Skeleton-Based Action Recognition}, 
      author={Jiang, Lei and Yang, Weixin and Zhang, Xin and Ni, Hao},
      year={2024},
      journal={arXiv preprint, arXiv.2403.15212},
}

\end{document}